\theoremstyle{plain}
\newtheorem{theorem}{Theorem}[section]
\newtheorem{proposition}[theorem]{Proposition}
\newtheorem{goal}[theorem]{Objective}
\theoremstyle{definition}
\theoremstyle{remark}
\newcommand{\arxiv}[1]{}
\newcommand{\awedit}[1]{{#1}}
\newcommand{\loose}{\looseness=-1}
\newcommand{\algname}{$\mathsf{BE}$\xspace}
\newcommand{\bc}{$\mathsf{BC}$\xspace}
\icmltitlerunning{Behavioral Exploration: Learning to Explore via In-Context Adaptation}
\begin{document}

\twocolumn[
\icmltitle{Behavioral Exploration: Learning to Explore via In-Context Adaptation}




\begin{icmlauthorlist}
\icmlauthor{Andrew Wagenmaker}{yyy}
\icmlauthor{Zhiyuan Zhou}{yyy}
\icmlauthor{Sergey Levine}{yyy}
\end{icmlauthorlist}

\icmlaffiliation{yyy}{Department of Electrical Engineering \& Computer Science, University of California, Berkeley}

\icmlcorrespondingauthor{Andrew Wagenmaker}{ajwagen@berkeley.edu}

\icmlkeywords{exploration, reinforcement learning, in-context learning, behavioral cloning}

\vskip 0.3in
]



\printAffiliationsAndNotice{\icmlEqualContribution} 

\begin{abstract}
Developing autonomous agents that quickly explore an environment and adapt their behavior online is a canonical challenge in robotics and machine learning. While humans are able to achieve such fast online exploration and adaptation, often acquiring new information and skills in only a handful of interactions, existing algorithmic approaches tend to rely on random exploration and slow, gradient-based behavior updates. How can we endow autonomous agents with such capabilities on par with humans? Taking inspiration from recent progress on both in-context learning and large-scale behavioral cloning, in this work we propose \emph{behavioral exploration}: training agents to internalize what it means to explore and adapt in-context over the space of ``expert'' behaviors. To achieve this, given access to a dataset of expert demonstrations, we train a long-context generative model to predict expert actions conditioned on a context of past observations and a measure of how ``exploratory'' the expert's behaviors are relative to this context. This enables the model to not only mimic the behavior of an expert, but also, by feeding its past history of interactions into its context, to select \emph{different} expert behaviors than what have been previously selected, thereby allowing for fast online adaptation and targeted, ``expert-like'' exploration. We demonstrate the effectiveness of our method in both simulated locomotion and manipulation settings, as well as on real-world robotic manipulation tasks, illustrating its ability to learn adaptive, exploratory behavior.
\end{abstract}

\newcommand{\pibe}{\pi_{\scriptscriptstyle \mathsf{BE}}}

\section{Introduction}

When tasked with achieving some objective in a novel scenario, humans are often able to make a small number of carefully directed attempts, use the information collected from these attempts to infer the correct behavior, and then successfully solve the objective. In other words, humans exhibit fast online \emph{exploration} and \emph{adaptation}---attempting different behaviors to acquire useful information and adjusting their behavior based on this information to solve the objective. A fundamental goal in the fields of robotics and machine learning is to develop autonomous agents which exhibit such fast online exploration and adaptation. While much work has been devoted to achieving this, existing methods often fall short, typically relying on expensive trial-and-error approaches to exploration that require an infeasibly large number of interactions, and slow, gradient-based behavior updates which are unable to quickly adapt behaviors online.\loose

What would successful exploration and adaptation behavior look like for an autonomous agent? As an example, consider telling a robot ``hand me a cup''. In this scenario, the robot should attempt to pick up whichever cups are in the scene until it successfully picks up the right one. It should not, however, move its arm randomly, try to pick up a plate, or repeatedly pick up the same cup when there are other cups in the scene it has not yet attempted to pick up. Instead, the correct behavior requires first leveraging knowledge of the scene and what it means to ``hand me a cup'' to direct actions to the most relevant features, and then quickly adapting based on the attempts that have already been made in order to continue attempting novel, potentially correct, actions.
Such a semantic, informed approach to exploration, coupled with fast online adaptation, is much more akin to how humans behave and dramatically more efficient than the ``uninformed'' novelty-seeking exploration strategies that are more often considered in the existing literature.

 \begin{figure*}[t]
    \centering
    \includegraphics[width=\linewidth]{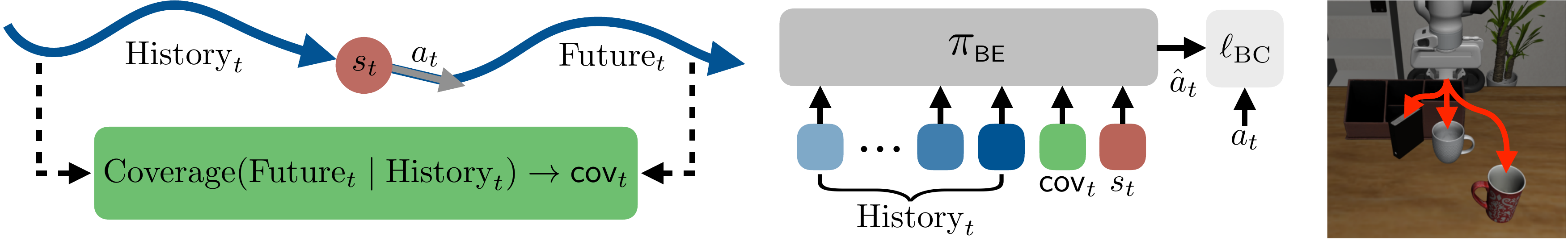}
    \vspace{-1.5em}
    \caption{Overview of our proposed approach, behavioral exploration. \textbf{Left}: Given offline demonstration data $\mathfrak{D}$, we sample a history ($\text{History}_t$), state-action pair $(s_t,a_t)$, and future trajectory proceeding from $s_t$ ($\text{Future}_t$), and compute the total \emph{coverage} of the states visited in the future combined with those visited in the past. \textbf{Center}: We then train a long-context policy $\pibe$ to minimize a standard behavioral cloning loss, but we include in $\pibe$'s context the history and total coverage, in addition to the state, enabling $\pibe$ to learn to infer ``in-context'' which actions are likely to increase the future coverage given the past. \textbf{Right}: At deployment, $\pibe$ can then adapt its behavior by conditioning on the past states visited, and we can regulate how ``exploratory'' it is by setting the desired coverage conditioning value, allowing for 
    effective exploration over the space of behaviors present in the demonstration data.  \loose
    }
    \label{fig:libero_no_task}
\end{figure*}

\arxiv{possibly switch this first sentence back to original version}
In this work we take steps towards developing autonomous agents which exhibit such behavior, focusing in particular on settings where we have access to expert demonstration datasets that provide a prior on what behaviors may be ``reasonable'' for our task of interest.
We are inspired by the recent success of in-context learning to enable fast online adaptation in language domains \cite{brown2020language}, and seek to leverage similar in-context learning capabilities in our setting.
We propose training a long-context policy to internalize both which actions an expert would take in a given state, and which expert actions are most \emph{exploratory}. To instill the policy with in-context adaptation capabilities, in addition to the current state, we condition on a context which includes a history of previous observations, and a measure of how exploratory the expert action in this state is given these observations.
This enables the model to not only infer which actions an expert is likely to take, but also which expert actions are exploratory and, by feeding its past observations into its context, allows it to quickly adapt its behavior online to attempt \emph{different} behaviors than what it has already attempted. Critically, our approach is trained to be exploratory \emph{over the space of expert behaviors} and thus naturally restricts its exploration to coherent, reasonable behaviors---behaviors an expert is likely to play, and therefore behaviors likely to solve a given task.

To the best of our knowledge, our approach, which we refer to as \emph{behavioral exploration} (\algname), is the first to enable learning of data-driven exploratory behavioral policies that can quickly adapt online as they collect new experience, and achieves this with fully offline training and in-context online adaptation.
Furthermore, our approach relies on easy-to-train behavioral cloning methodology (as compared to reinforcement learning approaches, that are often much more challenging to train), and can be efficiently scaled to large-scale datasets.
Focusing particularly on robotic and continuous control domains, we demonstrate the effectiveness of our approach on a variety of simulated reinforcement learning (RL) and imitation learning benchmarks, as well as a real-world robotic setting, showing that it yields a significant gain over standard behavioral cloning (\bc) and RL approaches.\loose

\section{Related Work}

\textbf{Exploration in decision-making.}
Exploring an unknown environment is a canonical problem in RL and control, and a significant amount of attention has been devoted to it \cite{stadie2015incentivizing,osband2016deep,bellemare2016unifying,burda2018exploration,choi2018contingency,ecoffet2019go,shyam2019model,lee2021sunrise,henaff2022exploration}. The majority of these works, however, aim to explore fully online, and do not make use of prior information or offline data, which is the focus of this work. To our knowledge, only several works seek to leverage offline data to learn exploratory behaviors. \citet{li2023accelerating} and \citet{wilcoxson2024leveraging}
study the setting where the offline data does not contain reward labels, and utilize the offline data to warm-start an RL algorithm to explore online. 
\citet{hu2023unsupervised} trains policies to maximize random reward functions offline, then deploys these policies to explore online, coupling them with online RL.
Though not explicitly tackling the exploration challenge, a variety of works have considered utilizing offline data to speed up online RL in the setting where both offline and online data contain reward labels
\cite{lee2022offline,zhang2023policy,uchendu2023jump,zheng2023adaptive,ball2023efficient,nakamoto2024cal}.
Compared to these methods, which all rely on gradient-based online RL training, our approach learns online in-context, allowing for much faster adaptation and exploration.\loose

\textbf{Learning for fast online adaptation.}
Developing learning algorithms that quickly adapt to new observations broadly falls under the domain of meta-learning. In particular, meta-RL aims to learn a policy on some set of train environments that can quickly adapt to a new test environment; see \citet{beck2023survey} for a complete overview.
A key challenge here is learning to explore effectively in the test environment, and a variety of methods have been proposed to achieve this \cite{duan2016rl,wang2016learning,mishra2017simple,zintgraf2019varibad,rakelly2019efficient,kamienny2020learning,zhang2020learn,liu2021decoupling,norman2023first,jiang2024importance}.
These works require online access to train environments, however, while in contrast we only assume access to offline data. 
Several works have considered meta-RL from only offline data \cite{li2020focal,mitchell2021offline,dorfman2021offline,rafailov2021reflective,ghosh2022offline,pong2022offline}, including a recent line of work that aims to train transformer-style models which learn online adaptation strategies in-context \cite{laskin2022context,liu2023emergent,lee2024supervised}, or utilizes the in-context learning abilities of LLMs to solve simple decision-making problems (e.g., multi-armed bandits) \cite{krishnamurthy2024can,nie2024evolve,monea2024llms}. 
While our work is related to meta-RL in that we also aim to learn fast adaptation behaviors, our setting is fundamentally different: all of the above works assume access to a reward function, and seek to learn a policy maximizing this reward function, while we instead assume access to a set of demonstrations, and aim to learn exploratory behaviors.
We highlight as well the meta-imitation learning setting, where the goal is to learn a policy that can imitate a very small number of expert demonstrations \cite{duan2017one,finn2017one,james2018task,dasari2021transformers,raparthy2023generalization,fu2024context}. The objective of these works is fundamentally different than ours, however: they aim to \emph{imitate} the ``history'', while our goal is to behave \emph{differently} than the history.\loose

\textbf{Learning control policies from expert demonstrations.}
Learning control policies by mimicking expert demonstrations, behavioral cloning, has a long history
\cite{argall2009survey,ross2011reduction,bojarski2016end,zhang2018deep,rahmatizadeh2018vision,mandlekar2021matters}. Over the last several years, significant attention has been given to applying \bc to real-world control and robotic domains, and much progress has been made in deploying modern learning paradigms \cite{shafiullah2022behavior,cui2022play,zhao2023learning,chi2023diffusion,dasari2024ingredients} and training controllable policies by conditioning on human language and other modalities \cite{stepputtis2020language,brohan2022rt,nair2022learning,team2024octo,kim2024openvla,black2024pi_0,gu2023rt,cui2022can,cui2022play,black2023zero,sundaresan2024rt}.
Another line of work aims to learn ``skills'' from expert data, and compose these with a high-level skill policy \cite{ajay2020opal,singh2020parrot,pertsch2021accelerating}. 
Our work is somewhat related to this line of work, but instead of training policies to directly mimic an expert in solving tasks as these works do, we focus on training policies that can effectively explore and adapt.
We also mention the work of \citet{zhou2024autonomous}, which trains a goal-conditioned \bc policy
and deploys this online to collect data and perform policy improvement. While this work can be seen as tackling the exploration problem, the proposed approach is quite different, relying on a multi-stage system composing several models trained on internet-scale data; in contrast, we aim to learn a single model from only demonstration data that internalizes the ability to explore.

\textbf{Decision-making with return-conditioned policies.}
Our proposed methodology relies on training a generative model that conditions on a measure of the coverage an action should lead to. This measure of coverage can be thought of as a ``return'', and our approach therefore bears resemblance to work on return-conditioned policies for decision-making. Typically, these works condition on the desired reward-to-go \cite{schmidhuber2019reinforcement,srivastava2019training,kumar2019reward,emmons2021rvs}, but have also considered goal conditioning \cite{ghosh2019learning,ding2019goal,lynch2020learning}. More modern variants have proposed training higher-capacity transformer or diffusion models \cite{chen2021decision,furuta2021generalized,ajay2022conditional,lee2022multi,huang2024context,wu2024elastic,schmied2024retrieval,yan2024reinforcement}. While our work bears resemblance to these works methodologically, our focus is quite different---rather than training a model to maximize reward, we train the model to explore, requiring a very different return structure and much more intricate history dependence.


\newcommand{\thetast}{\bm{\theta}^\star}
\newcommand{\cN}{\mathcal{N}}
\newcommand{\bbR}{\mathbb{R}}
\newcommand{\thetahat}{\widehat{\bm{\theta}}}
\newcommand{\btheta}{\bm{\theta}}
\newcommand{\tr}{\mathrm{tr}}
\newcommand{\Exp}{\mathbb{E}}
\newcommand{\bphi}{\bm{\phi}}
\newcommand{\cov}{\mathsf{cov}}
\newcommand{\traj}{\bm{\uptau}}
\newcommand{\expsf}{\mathsf{exp}}
\newcommand{\cO}{\mathcal{O}}
\newcommand{\frakD}{\mathfrak{D}}
\newcommand{\cS}{\mathcal{S}}
\newcommand{\cA}{\mathcal{A}}
\newcommand{\simplex}{\triangle}
\newcommand{\cX}{\mathcal{X}}
\newcommand{\pib}{\pi_{\beta}}
\newcommand{\task}{y}
\newcommand{\cM}{\mathcal{M}}
\newcommand{\hist}{\bm{h}}
\newcommand{\be}{\bm{e}}
\newcommand{\cG}{\mathcal{G}}

\newcommand{\bLambda}{\bm{\Lambda}}
\newcommand{\Ubeta}{U_{\beta}}
\newcommand{\bLampi}{\bLambda_{\pi}}
\newcommand{\wpi}{w^\pi}
\newcommand{\wpib}{w^{\pib}}
\newcommand{\bbI}{\mathbb{I}}
\newcommand{\cSg}{\cS_{g}}
\newcommand{\nbeta}{n_{\beta}}
\newcommand{\pitil}{\widetilde{\pi}}
\newcommand{\Pbeta}{P^{\pib}}
\newcommand{\cAbar}{\bar{\cA}}
\newcommand{\support}{\mathrm{support}}
\newcommand{\covb}{\cov_{\beta}}
\newcommand{\piexp}{\pi_{\mathrm{exp}}}
\newcommand{\tsum}{{\textstyle \sum}}
\newcommand{\cH}{\mathcal{H}}
\newcommand{\Ub}{\bm{U}_{\beta,\epsilon}}

\section{Preliminaries}

We consider decision-making in the context of reward-free Markov decision processes (MDPs), which we denote by a tuple $\cM := (\cS,\cA, P, p_0)$, where $\cS$ is the set of states, $\cA$ the set of actions, $P : \cS \times \cA \rightarrow \simplex_{\cS}$ the transition kernel, and $p_0 \in \simplex_{\cS}$ the initial state distribution. A trajectory denotes an interaction sequence where we sample an initial state $s_0 \sim p_0$, take some action $a_0$, transition to state $s_1 \sim P(s_0,a_0)$, and this process continues; for simplicity, we assume each trajectory is of length $K$. Throughout we use $\traj$ to denote such a trajectory, $\traj = ( s_0, a_0, s_1, a_1, \ldots,  s_K, a_K )$, and we let $\hist \subseteq S^H$ denote some set of states---typically this will correspond to the set of previous trajectories an agent has observed, its history. 
We denote a policy by $\pi : \cS \times \cX \rightarrow \simplex_\cA$, where $\cX$ could be some other conditioning space (in particular, we will condition $\pi$ on its history of past interactions, $\hist$, as we describe below). 
We say that a trajectory $\traj$ is generated by policy $\pi$ if $a_k \sim \pi(s_k, x_k)$ for each step $k$ in $\traj$.\loose

\textbf{Demonstration data.}
We assume access to a demonstration dataset $\frakD = \{ \traj^t \}_{t=1}^T$, where for each $t$, $\traj^t := (s_0^t, a_0^t, \ldots, s_{K}^t, a_{K}^t)$ denotes a trajectory collected by some behavior policy $\pib$ in $\cM$ (note that $\pib$ may be a mixture policy over multiple demonstrators). We do not make explicit assumptions on the optimality of the behavior of $\pib$, but we do assume that the behaviors exhibited by $\pib$ are a reasonable prior over potentially ``useful'' behaviors, as we formalize in the following.

\section{Learning to Explore from Demonstrations}
Given demonstration data $\frakD$, our goal is to learn a policy that, when deployed in a test environment, explores its setting and adapts its behavior online
in order to continue to explore. In particular, rather than exploring uniformly over the space of all possible behaviors, we would like the exploration to be focused over the space of ``useful'' behaviors represented in $\frakD$. 
In this section we formalize this objective, and outline our proposed approach.

\subsection{Exploration Over Demonstration Coverage}

Canonically, the goal of exploration is to interact with an environment in order to collect data that achieves high \emph{coverage}---visiting as many useful and informative states as possible.
To formalize this, assume that we have access to some featurization of our environment, $\bphi(s) \in \bbR^d$. \awedit{For example, in a finite-state setting we might have $\bphi(s) = \be_s$, in a continuous-state setting $\bphi(s)$ might correspond to a vectorized representation of the state (e.g., the final layer of a neural network state encoder), or in an image-based setting $\bphi(s)$ could correspond to a feature vector of our image given by some image feature extractor. In short, $\bphi(s)$ denotes some mapping which allows us to relate states based on their vector representations, and we can therefore quantify coverage in terms of the degree to which this representation space is spanned.}
For the purposes of this work, we take inspiration from classical estimation theory, and propose quantifying coverage via the trace of the inverse feature matrix\footnote{For example, the trace of the inverse information matrix, which we can think of $\sum_{i=1}^t \bphi(s_i) \bphi(s_i)^\top$ as approximating, corresponds to the estimation error of the maximum likelihood estimator \cite{van2000asymptotic}, \awedit{and, by the Cramer-Rao lower bound, lower bounds the estimation error of any unbiased estimator \cite{pronzato2013design}. Furthermore, this expression is standard in the experiment design literature; maximizing \eqref{eq:uniform_coverage} is commonly known as \emph{$A$-optimal experiment design} \cite{pukelsheim2006optimal}}.}. Precisely, given some set of states $\hist := \{ s_i \}_{i=1}^t$, letting $\bLambda(\hist) := \sum_{i=1}^t \bphi(s_i) \bphi(s_i)^\top$, we define the coverage of $\hist$ as:\loose
\begin{align}\label{eq:uniform_coverage}
 \cov(\hist) := \frac{1}{\tr \big ( ( \bLambda(\hist) + \lambda \cdot I)^{-1} \big )}.
\end{align}
In the finite-state setting, for example, this reduces to $\cov(\hist) =( \sum_{s} \frac{1}{N(s; \hist) + \lambda})^{-1}$ for $N(s; \hist)$ the number of times state $s$ appears in $\hist$, recovering a standard notion of total state coverage. In more general, potentially infinite-state, settings, \eqref{eq:uniform_coverage} corresponds to how well we span the entire feature space.

\awedit{Standard exploration approaches would typically aim to explore uniformly over the space, collecting data with the goal of (approximately) maximizing \eqref{eq:uniform_coverage}.
While achieving high uniform coverage would provide a rich dataset for future downstream learning, 
for many problems of interest the entire reachable space is extremely large, and achieving such coverage may be infeasible or, at the very least, extremely wasteful.} Instead, we would like to focus our exploration on only the most relevant parts of the space.
If we have access to demonstrations from some policy $\pib$ which exhibits behaviors spanning the space of ``useful'' behaviors, this gives us a prior over where to focus exploration, and, rather than exploring uniformly, we might focus our exploration on the space of behaviors spanned by $\pib$. 
\awedit{For instance, returning to the example given in the introduction, instead of exploring all possible robot state configurations---the behavior that would result when maximizing \eqref{eq:uniform_coverage}---we should explore only within the space of motions represented in a dataset of expert robot demonstrations, attempting different manipulation behaviors likely to solve our task.} More generally, by focusing our exploration on the behaviors present in an effective demonstration policy $\pib$, we can efficiently collect highly targeted data that could be utilized for downstream learning objectives, or, if $\pib$ exhibits diverse task-solving behaviors, explore over these behaviors to arrive at a solution to a task of interest.

To formalize coverage over behaviors in $\pib$, denote $\bLambda_{\beta} := \Exp^{\pib}[ \sum_{k=1}^K \bphi(s_k) \bphi(s_k)^\top]$, $\bm{U}_{\beta} \bm{\Sigma}_{\beta} \bm{U}_{\beta}^\top$ the singular-value decomposition of $\bLambda_{\beta}$, and $\Ub \in \bbR^{d \times r}$ the matrix of singular vectors of $\bLambda_{\beta}$ corresponding to singular values of at least $\epsilon$.
$\Ub$ then corresponds to the span of features covered by $\pib$ (with weight at least $\epsilon$) and, augmenting our uniform coverage metric, \eqref{eq:uniform_coverage}, to:
\begin{align}\label{eq:pib_coverage}
 \covb(\hist ) := \frac{1}{\tr \big ( (\Ub^\top \bLambda(\hist) \Ub + \lambda \cdot I)^{-1} \big )},
\end{align}
we can quantify how well $\hist$ covers the space of features spanned by $\pib$ with $\covb(\hist)$. For example, in the finite-state setting, $\covb(\cdot)$ corresponds to the coverage only over states visited by $\pib$ with probability at least $\epsilon$, rather than coverage over all states, \awedit{or in the robotic manipulation setting, this may correspond to coverage over all coherent manipulation motions}. Critically, achieving coverage over $\Ub$ could be much easier than achieving coverage over the entire feature space, and allows us to focus our exploration only on the most salient aspects of the environment. 
Given this, we propose the following objective, which formalizes our goal of exploring over the behaviors exhibited by $\pib$.
\begin{goal}[Behavior Policy Coverage]\label{goal:cov_objective}
For $\covb(\cdot)$ the coverage over the features spanned by the behavior policy $\pib$, \eqref{eq:pib_coverage}, and $\bLambda_\pi := \Exp^{\pi}[\tsum_{k=1}^K \bphi(s_k) \bphi(s_k)^\top]$, use $\frakD$ to find a policy $\pi$ which maximizes $\covb(\bLambda_\pi)$. 
\end{goal}

\subsection{Learning Exploratory Behaviors via Conditional Distribution Modeling}\label{sec:be_description}

\begin{wrapfigure}[7]{r}{0.25\textwidth}
\centering
	\vspace{-1em}
        \includegraphics[width=\linewidth]{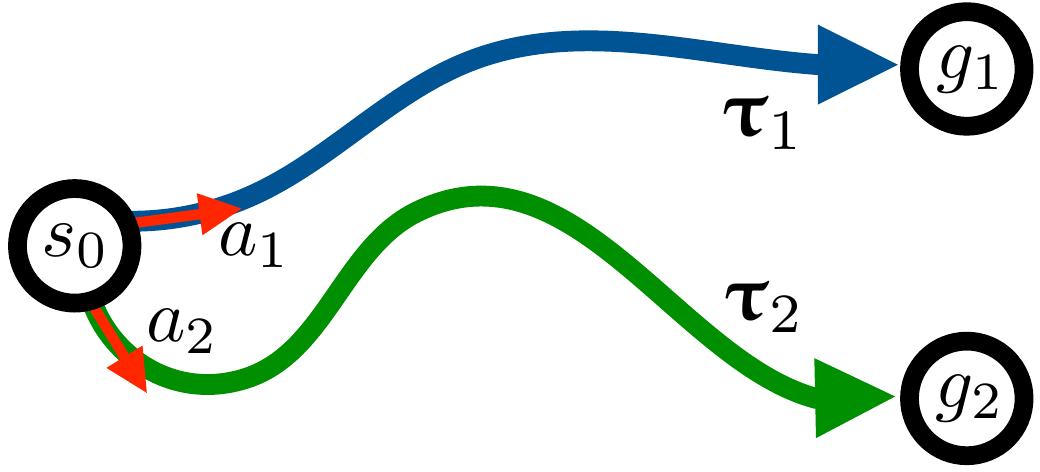}
        \vspace{-1.75em}
        \caption{Illustrative example.}
        \label{fig:expert_traj_ex}
\end{wrapfigure}

A natural choice for a policy maximizing coverage over the space spanned by $\pib$ would be $\pib$ itself---by definition, $\pib$ covers the space spanned by $\pib$---which suggests simply training a \bc policy on $\frakD$ to solve Objective \ref{goal:cov_objective}.
To illustrate the shortcomings of \bc, however, consider \Cref{fig:expert_traj_ex}. In this example, assume that we start at state $s_0$ and that there are two goals of interest, $g_1$ and $g_2$, both within the coverage of $\pib$. At $s_0$, we can either take action $a_1$, which will lead us along trajectory $\traj_1$ to $g_1$, or $a_2$ which will lead us along $\traj_2$ to $g_2$. Assume that $\pib(a_1 \mid s_0) = 1-p$ and $\pib(a_2 \mid s_0) = p$, for $p \ll 1$. If we clone $\pib$, since $\pib$ only takes $a_2$ with small probability, the vast majority of the time we will take $a_1$ and follow $\traj_1$ to $g_1$, foregoing the opportunity to reach $g_2$. In contrast, if we were to take $a_1$ and $a_2$ with equal probability, we would much more quickly reach both goals. Thus, while $\pib$ will \emph{eventually} achieve coverage over $g_1$ and $g_2$, we can achieve this much more efficiently with a different weighting of actions.\loose

While na\"{\i}ve \bc is therefore suboptimal, we see that it \emph{does} place mass on both actions of interest and, if we could re-weight this distribution in some way in order to increase mass on $a_2$, it may offer a feasible method to collect high-coverage data.
Let us assume that we are in $s_0$ and, on the previous trajectory, already took $a_1$ and observed $\traj_1$---denote by $\hist$ our history of past observations, $\hist = \traj_1$. In this case we would expect $\cov(\hist\cup\traj_2) \gg \cov(\hist\cup\traj_1)$---since $\traj_2$ covers states not previously encountered in $\hist$, it would, in general, increase coverage relative to $\hist$ more than observing $\traj_1$ again.
If we let $\Pbeta[\cdot]$ denote the distribution over actions induced by $\pib$\footnote{More precisely, if we consider the distribution over trajectories induced by rolling out $\pib$ on $\cM$, then $\Pbeta$ denotes the marginal of this distribution over actions.} and $\traj$ the trajectory that results from taking actions under $\pib$ from state $s_0$, then we see that
\begin{align*}
& \Pbeta[a_2 \mid s_0, \hist, \cov(\hist \cup \traj) = \cov(\hist\cup\traj_2)] \\
& \quad \gg \Pbeta[a_1 \mid s_0, \hist, \cov(\hist \cup \traj) = \cov(\hist\cup\traj_2)].
\end{align*}
In other words, by conditioning on the subsequent trajectory achieving high coverage relative to $\hist$, the likelihood of $\pib$ taking $a_2$ and therefore reaching $g_2$ increases significantly, as $a_2$ is much more likely to increase coverage than $a_1$. The following result makes this formal, and shows that (in certain cases), this history-dependent conditional distribution over actions is in fact an optimal solution to Objective \ref{goal:cov_objective}.
\begin{proposition}[Informal]\label{prop:conditional_opt}
Assume that we are in a deterministic environment, that for any terminal state $s_K$, $\bphi(s_K) \in \{\be_1,\ldots,\be_{d-1}\}$, and that $\bphi(s) = \be_d$ for $s$ not a terminal state. Let $\hist_t$ denote the history of states visited up to trajectory $t$. Then under several additional technical assumptions, the policy which takes actions at trajectory $t$ with probability
\begin{align}\label{eq:opt_conditional}
\textstyle \Pbeta[ \ \cdot \mid s, \hist_t, \cov(\hist_t \cup \traj ) = \max_{\traj'} \cov(\hist_t \cup \traj')]
\end{align}
is an optimal solution to Objective \ref{goal:cov_objective}.
\end{proposition}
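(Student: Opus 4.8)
The plan is to exploit the very specific feature structure assumed here to collapse the coverage objective into a transparent ``balls-into-bins'' counting problem, and then to recognize the conditional policy \eqref{eq:opt_conditional} as a greedy algorithm for that problem. Because every terminal state maps to some $\be_j$ with $j \le d-1$ and every transient state to $\be_d$, the matrix $\bLambda(\hist)$ is \emph{diagonal}, with $j$-th entry equal to the number $N_j(\hist)$ of states in $\hist$ carrying feature $\be_j$. Hence $\cov(\hist) = \big(\tsum_{j=1}^d 1/(N_j(\hist)+\lambda)\big)^{-1}$, and $\covb(\hist)$ is the same harmonic sum restricted to the coordinates retained by $\Ub$. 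In a deterministic length-$K$ environment each trajectory reaches exactly one terminal state and a fixed number of transient states, so coordinate $d$ grows deterministically in the trajectory index $t$ and contributes only a constant offset; the only action-dependent quantities are the goal counts $N_1,\ldots,N_{d-1}$. I would then identify the relevant goals with $\cG := \{ j : \be_j \in \mathrm{span}(\Ub)\}$, invoking the technical assumption that $\Ub$ is exactly the span of the goal directions $\pib$ reaches with weight at least $\epsilon$ — so that maximizing $\covb$ is equivalent to balancing $\{N_j\}_{j\in\cG}$, and so that the coverage-maximizing goal is always in the support of $\pib$ and the conditioning event in \eqref{eq:opt_conditional} has positive probability.

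Next I would pin down the optimization target. After $T$ adaptively chosen trajectories, the accumulated feature matrix restricted to $\cG$ is $\mathrm{diag}(N_j)_{j\in\cG}$ with $\tsum_{j\in\cG} N_j \le T$, and $\covb = \big(\tsum_{j\in\cG} 1/(N_j+\lambda) + c\big)^{-1}$ for a constant $c$. Since $N \mapsto \tsum_j 1/(N_j+\lambda)$ is symmetric, strictly convex, and decreasing, it is Schur-convex, and over the feasible integer vectors of total mass $T$ it is minimized by the maximally balanced allocation (all $|N_i-N_j|\le 1$); I would establish this via a standard majorization argument, since the balanced vector is majorized by every other feasible vector. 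Thus the coverage-optimal history is precisely one whose goal counts are as equal as possible across $\cG$.

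I would then show \eqref{eq:opt_conditional} realizes exactly this allocation. Adding one visit to goal $j$ decreases $\tr\big((\bLambda+\lambda I)^{-1}\big)$ by $1/\big((N_j+\lambda)(N_j+1+\lambda)\big)$, which is strictly decreasing in $N_j$; hence, among the goals $\pib$ can reach, conditioning the rolled-out trajectory $\traj$ on $\cov(\hist_t\cup\traj)=\max_{\traj'}\cov(\hist_t\cup\traj')$ is exactly choosing the currently least-visited reachable goal (and reweighting $\pib$ among the actions reaching it, which all share the same feature and hence the same coverage). So \eqref{eq:opt_conditional} implements the greedy ``fill the emptiest bin'' rule, and a short induction shows it produces the maximally balanced count vector after every $t$, i.e.\ the Schur-convex minimizer above. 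Any other policy yields a feasible integer count vector on each rollout, whose coverage is at most that of the balanced vector; since the greedy policy attains the balanced vector deterministically, it maximizes coverage and solves Objective \ref{goal:cov_objective}.

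The hard part, and the reason the statement is flagged as informal, is the gap between two natural readings of the objective: the coverage of the expected matrix $\covb(\bLampi)$ literally written in Objective \ref{goal:cov_objective}, versus the expected realized coverage $\Exp^{\pi}[\covb(\hist)]$. Because $M \mapsto \tr\big((\Ub^\top M \Ub + \lambda I)^{-1}\big)$ is convex, a randomized policy can make the \emph{expected} matrix strictly more balanced than any single realization, so under the former reading the deterministic greedy rule is optimal only up to integrality (exactly so when $T/|\cG|\in\mathbb{Z}$, or asymptotically as $T\to\infty$). Under the latter reading greedy is exactly optimal, since by Jensen randomization can only hurt realized coverage. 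I expect the bulk of the real work to be in selecting which reading the ``additional technical assumptions'' intend and in verifying the support/reachability condition that makes the conditioning event well-defined; once the diagonal reduction is in place, the counting and majorization steps are routine.
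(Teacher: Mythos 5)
Your proposal is substantively correct and follows the same skeleton as the paper's proof---reduce the one-hot feature structure to per-direction visit counts, observe that conditioning on maximal coverage selects a previously-unvisited terminal direction reachable by $\pib$, and induct over episodes---but the two arguments distribute their effort very differently. The paper's unstated ``additional technical assumptions'' are: finite actions, a common start state $s_0$, $\epsilon < \min_{s} \wpib(s)$ (so that $\Ub$ retains exactly the terminal directions $\pib$ reaches), and, crucially, a horizon of exactly $\nbeta$ episodes, where $\nbeta$ is the number of one-hot terminal directions in the support of $\pib$. With $T = \nbeta$ the optimal allocation is trivially one visit per direction ($\Pr^{\pi}[\bphi(s_K) = i] = 1/\nbeta$ for each reachable $i$), so your Schur-convexity/majorization lemma is never needed; it is what your argument buys at general $T$, and your remark that exact optimality holds when $T/|\cG| \in \mathbb{Z}$ is precisely what the $\nbeta$-round assumption engineers. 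That same assumption also dissolves your concern about the two readings of Objective \ref{goal:cov_objective}: since the balanced integer allocation is achieved deterministically, the coverage of the expected matrix and the expected realized coverage coincide at the optimum. Your identification of the conditioning event's attainability is also consistent with the paper: because every unvisited one-hot direction yields the same maximal coverage increment, the max is attained within $\support(\pib)$ for all $t \le \nbeta$, which is why the event has positive probability.

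The one place your sketch falls short of a proof is the step you compress into ``a short induction shows it produces the maximally balanced count vector'': this is where the paper spends essentially all of its effort. The policy in \eqref{eq:opt_conditional} is executed step-by-step, sampling one action at a time from the per-step conditional $\Pbeta[\,\cdot \mid s, \hist_t, \cdot\,]$, so one cannot simply assert that trajectory-level conditioning ``chooses the least-visited reachable goal''; one must rule out that at some intermediate state the conditional places mass on an action from which no unvisited terminal direction remains reachable via $\support(\pib)$. The paper handles this with a nested induction: within each episode it shows, via a conditional-probability ratio, that for the set $\cAbar(s)$ of such foreclosing actions, the event $\{a \in \cAbar(s)\}$ intersected with the max-coverage event forces a terminal state with feature outside the directions reachable by $\pib$ yet not in $\hist_t$---an event of $\Pbeta$-probability zero---while the denominator is positive by the inductive hypothesis, giving $\Pbeta[a \in \cAbar(s) \mid s, \hist_t, \cdot] = 0$. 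Any fleshed-out version of your induction will need exactly this consistency argument between per-step and trajectory-level conditioning; as written, it is asserted rather than proven.
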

Thus, while na\"{\i}ve behavioral cloning is suboptimal, Proposition \ref{prop:conditional_opt} shows that by setting our policy to a carefully conditioned distribution over the actions of the behavior policy, we can arrive at an optimal solution to Objective \ref{goal:cov_objective}.
Critical to achieving this is that we condition on the history and \emph{adapt} our behavior based on the history in order to increase coverage---as the previous example illustrates, only by adapting to our history can we ensure the exploratory action is taken. We note that this result only requires conditioning on $\cov$ and not on $\covb$, thereby avoiding explicit dependence on $\Ub$---the policy of \eqref{eq:opt_conditional} explores over $\Ub$ by construction since its action distribution is just a reweighting of $\pib$.
Please see \Cref{sec:proof} for the proof of Proposition \ref{prop:conditional_opt}.\loose

To learn a policy from $\frakD$ which explores over the space of features spanned by $\pib$, the above argument motivates the following simple supervised learning problem, which we refer to as \emph{behavioral exploration}.

\textbf{Behavioral Exploration (\algname).} \textit{
Given expert demonstration data $\frakD$, find a policy $\pibe$ maximizing the following:
\begin{align}\label{eq:main_ebc_objective}
 \sum_{t=1}^T \sum_{k=1}^{K} \Exp_{\hist \sim \cH(\frakD)} [\log \pibe(a_k^t \mid s_k^t, \hist , \cov(\hist \cup \traj_k^t))],
\end{align}
where $\traj_k^t := (s_k^t, \ldots, s_{K_t}^t)$ denotes the subtrajectory of $\traj^t$ starting at step $k$, and $\cH(\frakD)$ some distribution over demonstration trajectories. 
}

We can think of this objective as corresponding to the maximum likelihood estimate of the conditional distribution given in \eqref{eq:opt_conditional} (see e.g. \citet{foster2024behavior}).
We emphasize that $\pibe$ naturally focuses its exploration on the space spanned by $\pib$ since, ultimately, it is still fitting the action distribution induced by $\pib$.

Given a policy $\pibe$ solving \eqref{eq:main_ebc_objective}, at deployment, if we are at step $k$ and state $s_k$, and have already visited states $\hist_k$, we sample our next action as $a \sim \pibe(\cdot \mid s_k, \hist_k, \expsf)$, where $\expsf \in \bbR$ is a measure of how exploratory the sampled action should be relative to $\hist_k$. If we set $\expsf$ to be large, then this will increase the likelihood of sampling actions that lead to higher coverage data, while if $\expsf$ is small, the sampled action is incentivized to induce behaviors close to the already collected observations.

\subsection{Practical Implementation}
The behavioral exploration objective, \eqref{eq:main_ebc_objective}, forms the basis for our proposed approach. Here we make several comments on the practical implementation of \algname.

\textbf{Architectural considerations.}
Our goal in \eqref{eq:main_ebc_objective} is to learn a policy modeling a potentially complex and multi-modal conditional distribution, in particular in control settings with continuous actions. Furthermore, we must condition on a significant amount of information---the current state as well as a history of previous states.
Effectively modeling this distribution therefore requires an architecture able to handle such complex continuous distributions, as well as a large conditioning space. We propose utilizing a diffusion model with a transformer backbone to address these challenges. Diffusion models are known to effectively handle complex, continuous, multi-modal distributions in policy learning settings \cite{chi2023diffusion}, and the long-context capabilities of transformers allow us to
efficiently capture the large conditioning space. In practice, for all experiments we use the state-of-the-art transformer-based diffusion policy proposed in \citet{dasari2024ingredients}. We allocate a single token to each conditioning variable---the current state, coverage-to-go, and each state in the history---enabling the model to infer the relationship between these modalities, and at deployment adapt its behavior in-context based on the observed history.\loose

\textbf{Selecting the history distribution.}
At deployment, our history $\hist$ may contain a wide distribution of different states, corresponding to those encountered by the policy online, and we would like the policy to infer, for any such set of states, where it should direct its exploration. As such, we should ideally choose the training history distribution, $\cH(\frakD)$, to correspond to the induced distribution of online states. Estimating this induced distribution is challenging, however, and requires solving a fixed-point problem. We note, though, that this distribution is simply a reweighting of the distribution induced by $\pib$. Given this, in practice we simply choose $\cH(\frakD)$ to be a uniform distribution over trajectories in $\frakD$, thereby enabling $\pibe$, in the ideal case, to learn exploratory behaviors over all histories it is likely to encounter.\loose

\textbf{Incorporating task conditioning.}
Assume that for each $\traj^t$ we have a corresponding task label, $\task^t$---for example, a language command.
We can easily modify \eqref{eq:main_ebc_objective} to incorporate such task labels by simply conditioning on $\task^t$ as well:
\begin{align*}
 \sum_{t=1}^T \sum_{k=1}^{K} \Exp_{\hist \sim \cH(\frakD)} [\log \pibe(a_k^t \mid s_k^t, y^t, \hist , \cov(\hist \cup \traj_k^t))].
\end{align*}
Maximizing this objective enables us to restrict our behavior to within the space of a given task, thereby further focusing our exploration to the most relevant parts of the space.


\newcommand{\supe}{$\mathsf{SUPE}$\xspace}
\newcommand{\rnd}{$\mathsf{RND}$\xspace}
\newcommand{\explore}{$\mathsf{ExPLORe}$\xspace}
\newcommand{\hilp}{$\mathsf{HILP}$\xspace}

\begin{figure*}[t]
    \centering
    \begin{minipage}[b]{0.31\textwidth}
        \centering
        \includegraphics[width=\linewidth]{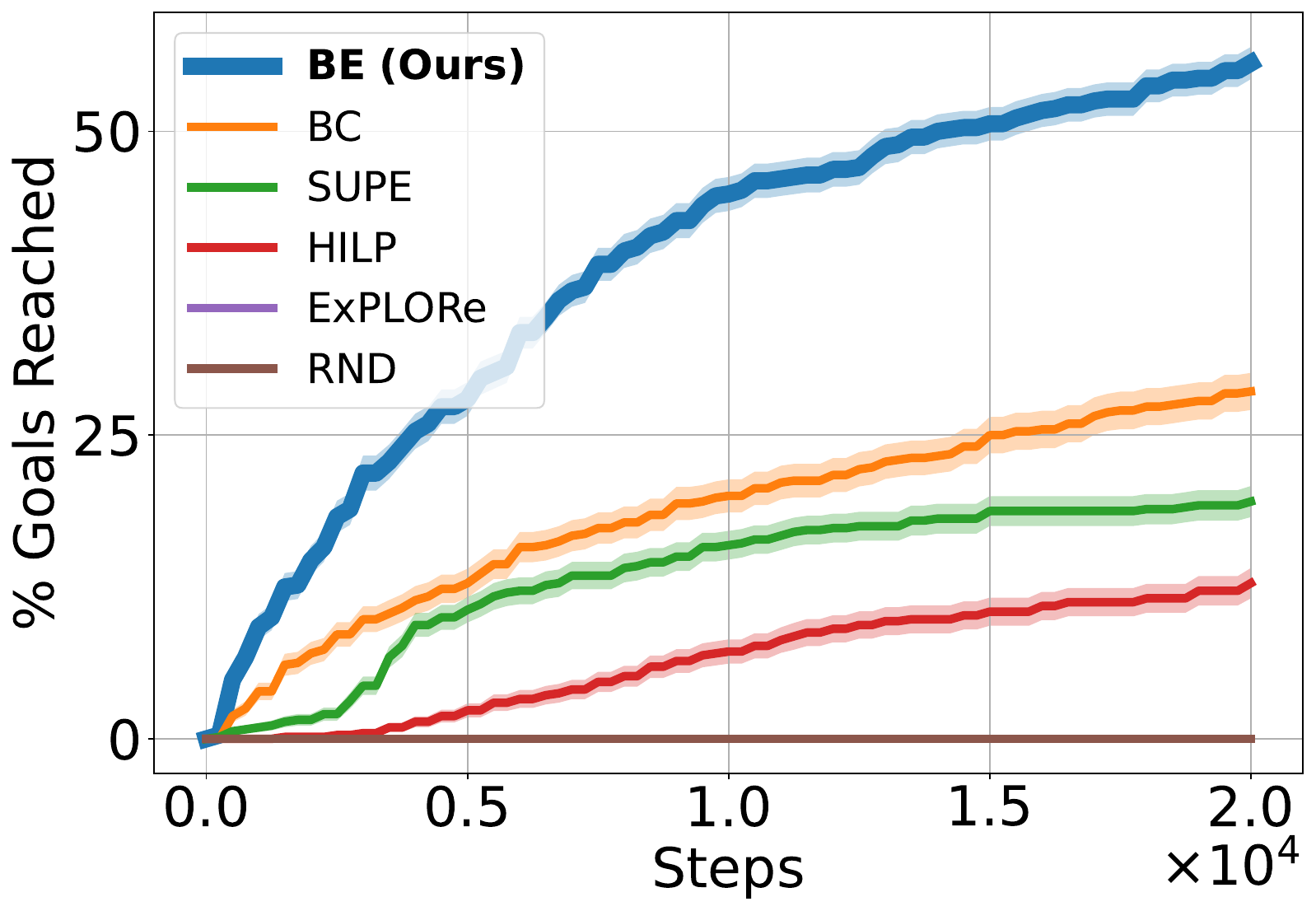}
        \vspace{-2em}
        \caption{Aggregated performance across D4RL Antmaze \texttt{medium} and \texttt{large}, measuring percentage of goals reached.}
        \label{fig:antmaze_goal_all}
    \end{minipage}
    \hspace{0.2em}
    \begin{minipage}[b]{0.31\textwidth}
        \centering
        \includegraphics[width=\linewidth]{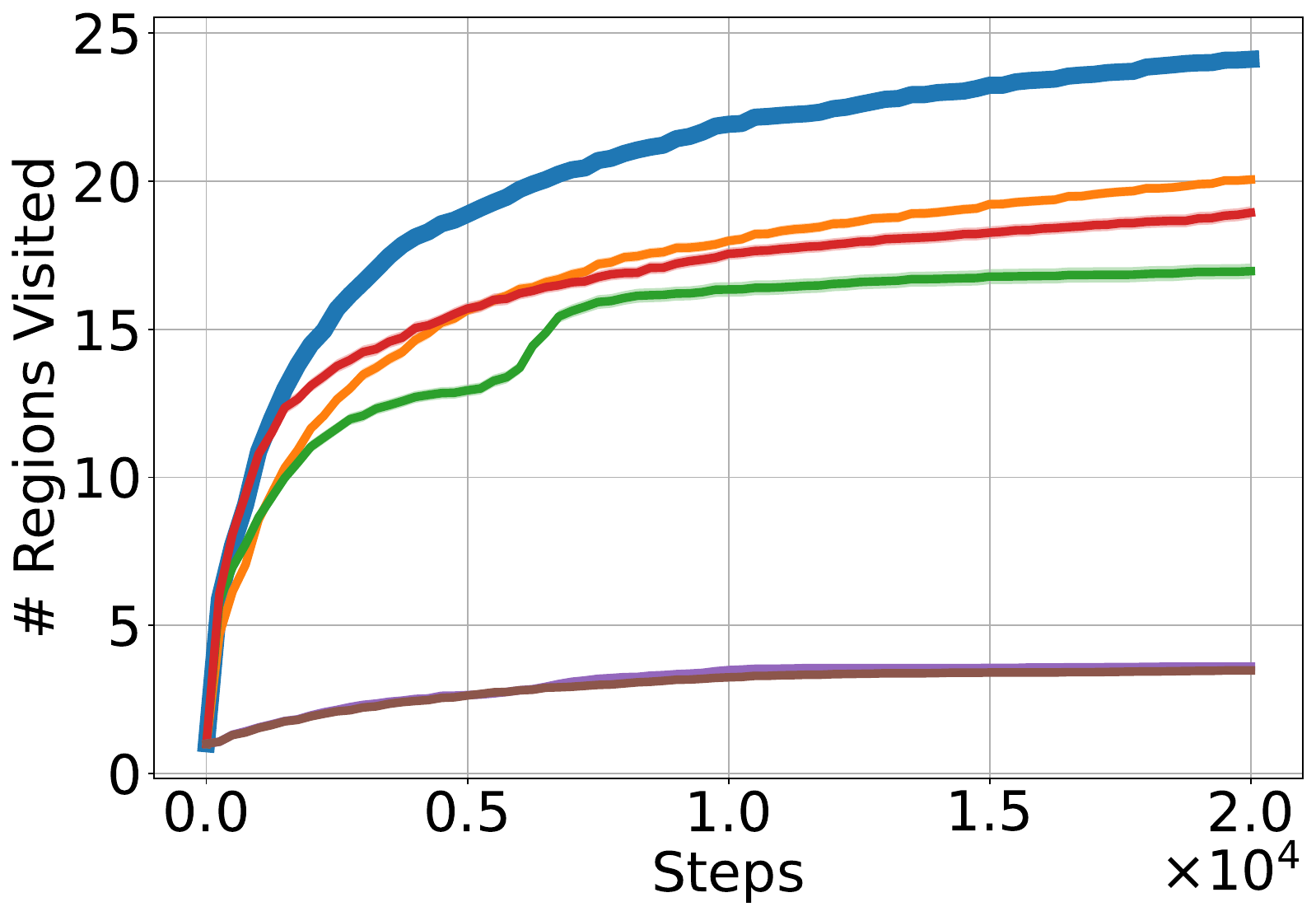}
        \vspace{-2em}
        \caption{Aggregated performance across D4RL Antmaze \texttt{medium} and \texttt{large}, measuring number of regions reached.}
        \label{fig:antmaze_region_all}
    \end{minipage}
    \hspace{0.2em}
    \begin{minipage}[b]{0.31\textwidth}
        \centering
        \includegraphics[width=\linewidth]{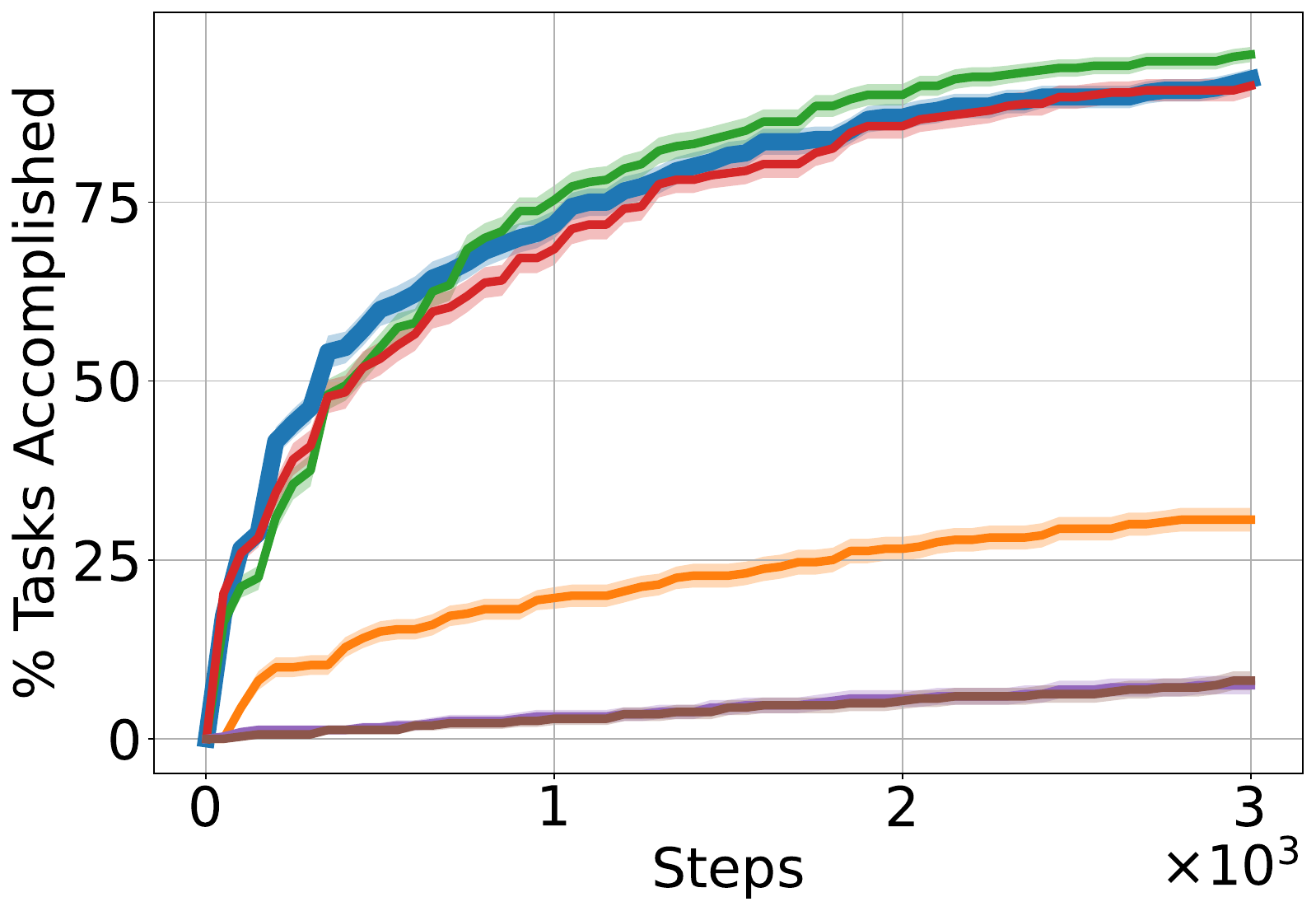}
        \vspace{-2em}
        \caption{Performance on D4RL Kitchen, measuring percentage of tasks accomplished.}
        \label{fig:kitchen}
        \vspace{1.2em}
    \end{minipage}
    \vspace{-1em}
\end{figure*}

\section{Experiments}

In our experimental evaluation, our focus is on understanding (a) whether \algname is able to learn effective exploration strategies from offline demonstration data and adapt quickly online, (b) if \algname is able to effectively focus its exploration over the space of behaviors present in the demonstration data, and (c) if \algname scales to large-scale, real-world imitation learning (IL) settings. Since our method combines elements of both RL and IL, we aim to show that it can perform well in both settings. We first focus on RL benchmarks, where we compare against RL-based approaches to exploration, and then on IL, where we consider both simulated and real-world robotic tasks.
Additional details on all experimental settings are given in \Cref{sec:exp_details}.

\subsection{Fast Exploration in RL Benchmarks}
For our RL experiments, we evaluate \algname on a subset of the environments in the D4RL benchmark \cite{fu2020d4rl}, focusing in particular on settings that require exploration.
For each D4RL setting, a standard dataset exists to enable offline learning; we utilize these datasets to train our approach, and evaluate online. Notably, the data in these datasets is from a scripted policy and is not optimal. While D4RL is primarily an offline RL benchmark, a variety of recent works have used it to benchmark online learning and exploration approaches initialized from offline data \cite{zhang2023policy,uchendu2023jump,li2023accelerating,wilcoxson2024leveraging}. Our focus in this section will be to compare with such RL approaches. In particular, we consider the setting proposed by \citet{li2023accelerating} and \citet{wilcoxson2024leveraging}, where reward labels are removed from the offline data and the agent must explore online in order to find reward; we note that standard offline RL and offline-to-online RL algorithms do not directly apply here as they require reward information offline.\loose

\textbf{Environment details.}
We focus our experiments on the state-based Antmaze and Kitchen environments from D4RL, where the tasks, respectively, are to navigate an ant agent in a maze in order to find a goal location, and use a robotic arm to accomplish various tasks in a kitchen.
For Antmaze, we evaluate on the \texttt{medium} and \texttt{large} variants of the maze using the \texttt{diverse} offline dataset, and for each test with four distinct goal locations---which are initially unknown to the agent and must be found by exploring the maze---utilizing the same goal locations as are used by \citet{wilcoxson2024leveraging}. For Kitchen, we utilize the \texttt{partial} variant of the offline data.
As our objective is to evaluate the ability of our approach to explore and adapt online, rather than the standard RL objective of learning an optimal policy, 
we modify the standard setup somewhat. 
First, for both environments, as a metric of success we consider the time the agent takes to achieve each task at least once, and for Antmaze we also evaluate maze traversal ability, measuring this in terms of the number of regions reached (where we define a region to be a block in the maze grid, see \Cref{fig:antmaze_env}). Second, to increase the challenge of exploration and test fast-adaptation ability, we shorten the episode length in the Antmaze environment and evaluate on a relatively short number of total environment steps compared to standard RL evaluations (20k and 3k for Antmaze and Kitchen, respectively).

\textbf{Benchmark algorithms.}
We compare against four representative algorithms from those evaluated in \citet{wilcoxson2024leveraging}, as well as a \bc baseline.

$\mathsf{Online \ RND}$:
The \rnd algorithm \cite{burda2018exploration} is a canonical approach for exploration that incentivizes exploratory behavior by labeling observations with a pseudo-reward intended to reflect uncertainty. To achieve this, a neural network is randomly initialized and, online, the agent trains another network to match the predictions of this network over the observed states. The pseudo-reward is the mismatch between the predictions of the networks; intuitively, unvisited states will have larger prediction error, and therefore pseudo-reward, than visited states.
We run \rnd using standard online RL, ignoring the offline data completely.\loose

\explore:
The \explore algorithm \cite{li2023accelerating} is a variant of online \rnd, but makes use of the offline data as well. This approach initializes an online RL algorithm by feeding the offline data into its replay buffer, and labels this offline data with the \rnd rewards.

\hilp:
\hilp is an offline unsupervised skill discovery method recently proposed by \citet{park2024foundation}. It first trains a set of ``skill'' policies on the offline data, learning skills that efficiently traverse a learned representation space. Skills are indexed by a latent vector $z$, which can be utilized as a transformation of the action space. That is, rather than taking actions in the original action space, we can think of $z$ as the action, and execute it by playing the corresponding skill policy. We take this approach for the \hilp baseline, running an online RL algorithm over the skill space.
In particular, we use the ``\hilp with offline data'' variant given in \citet{wilcoxson2024leveraging}, which feeds the offline data into the replay buffer of the online RL algorithm.

\supe:
\supe, proposed by \citet{wilcoxson2024leveraging}, is another skill-based approach. Similar to \hilp it trains a skill policy on the offline data, and then online learns a policy over skill space. In addition, \supe utilizes \rnd rewards to incentivize exploration online, and feeds the offline data, labeled with the \rnd rewards, into its replay buffer.\loose

\bc:
As a final baseline, we compare against standard \bc. We utilize the same diffusion policy base as is utilized by our method, conditioning only on state. At deployment, we simply run this learned policy online.

For all methods which require offline training, we utilize the checkpoints provided by \citet{wilcoxson2024leveraging} (which are trained on the provided offline datasets with reward labels removed), and run each method exactly as described in \citet{wilcoxson2024leveraging}. We train and run \algname as described in \Cref{sec:be_description}, training with 8 different random seeds on the provided offline datasets, and in online deployment feeding the data collected in previous episodes into the learned policy's context. We remark that \algname relies only on in-context online adaptation, while all other baselines considered here (with the exception of \bc) rely on more expensive online gradient-based updates.
All methods are evaluated online with 80 trials (10 per offline seed). Error bars denote 1 standard error.

\textbf{Results.}
Our results on D4RL are illustrated in \Cref{fig:antmaze_goal_all,fig:antmaze_region_all,fig:kitchen}. On Antmaze, \algname reaches the goals much more quickly and achieves significantly higher coverage than the considered baselines. 
In particular, we note that the entire possible search space of Antmaze is significantly larger than simply covering the $(x,y)$-locations in the grid---we can also attempt to cover the entire space of possible ant configuration states. However, the demonstration data contains a much lower-dimensional space of ant configurations, corresponding to standard walking motions, and we observe that \algname focuses its exploration on such behaviors, ultimately enabling effective exploration over the $(x,y)$-space. 
\begin{wrapfigure}[10]{l}{0.25\textwidth}
\centering
	\vspace{-1em}
        \includegraphics[width=\linewidth]{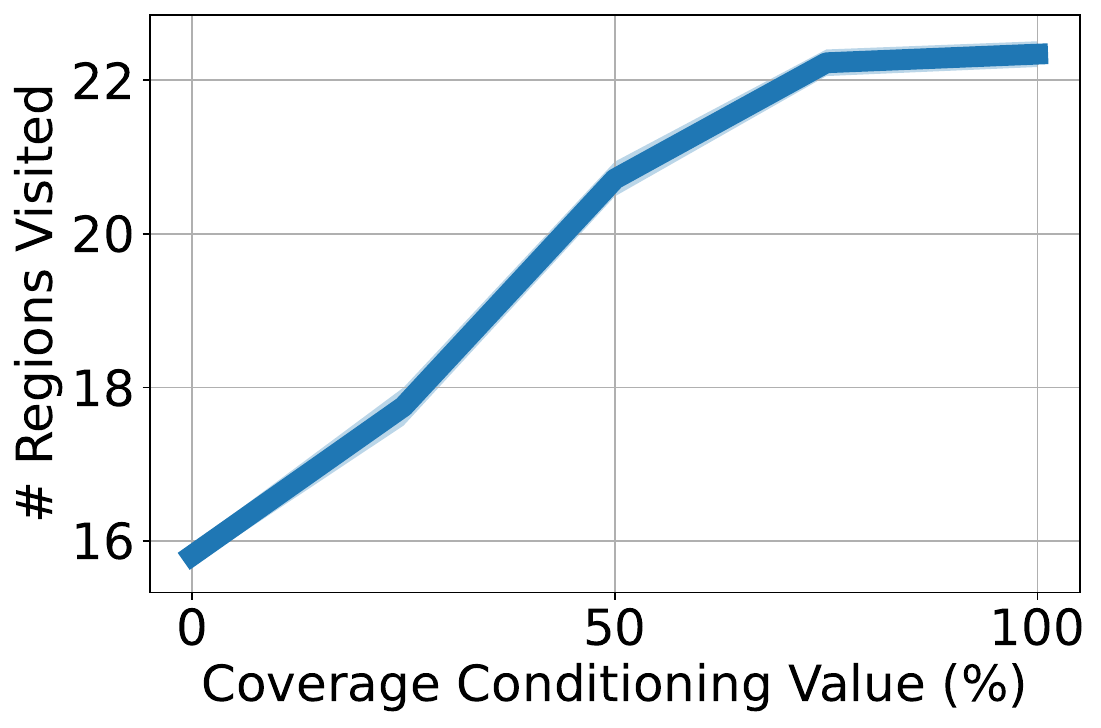}
        \vspace{-2em}
        \caption{Calibration of \algname policy on Antmaze \texttt{medium}.}
        \label{fig:antmaze_calibration}
        \vspace{-1em}
\end{wrapfigure}
In \Cref{fig:antmaze_calibration} we further illustrate the \emph{calibration} of the learned \algname policy, plotting the total number of regions visited against the coverage value we condition on at deployment---as this illustrates, the policy has learned calibrated behavior and is able to adjust the degree to which it explores by tuning the coverage value.
On Kitchen, our approach performs comparably to the state-of-the-art skill-based RL approaches. We emphasize that we are able to achieve this via a simple supervised-learning based-approach, coupled with a policy that adapts online in-context, in contrast to the more complex RL training of \supe and \hilp. Furthermore, our approach yields significant improvements over \bc, the standard non-adaptive supervised learning approach.

\begin{figure*}[t]
    \centering
    \begin{minipage}[b]{0.31\textwidth}
        \centering
        \includegraphics[width=\linewidth]{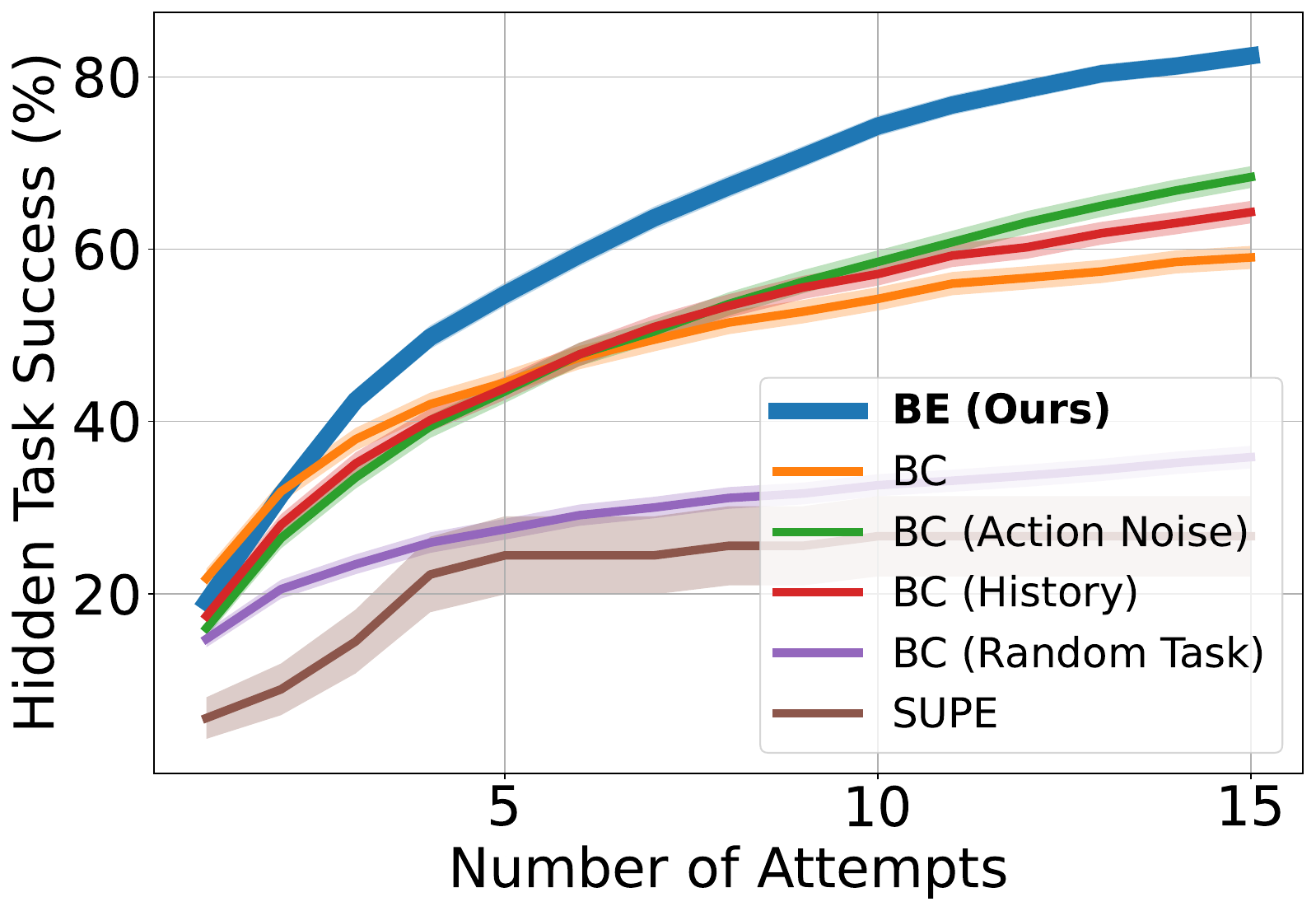}
        \vspace{-2em}
        \caption{Performance on Libero in evaluation with task hidden.}
        \label{fig:libero_no_task}
    \end{minipage}
    \hspace{0.2em}
    \begin{minipage}[b]{0.31\textwidth}
        \centering
        \includegraphics[width=\linewidth]{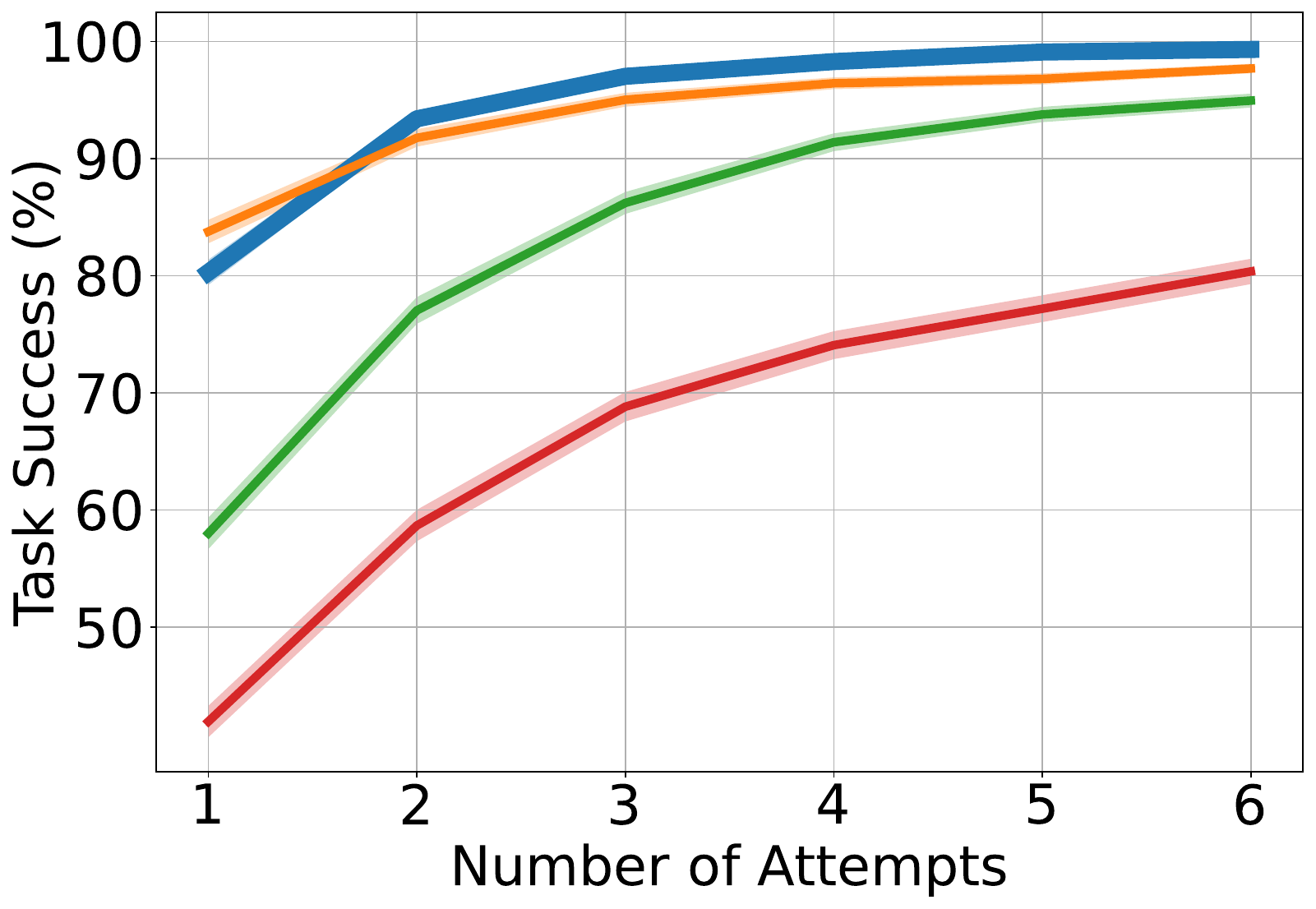}
        \vspace{-2em}
        \caption{Performance on Libero in evaluation with task provided.}
        \label{fig:libero_task}
    \end{minipage}
    \hspace{0.2em}
    \begin{minipage}[b]{0.31\textwidth}
        \centering
        \includegraphics[width=\linewidth]{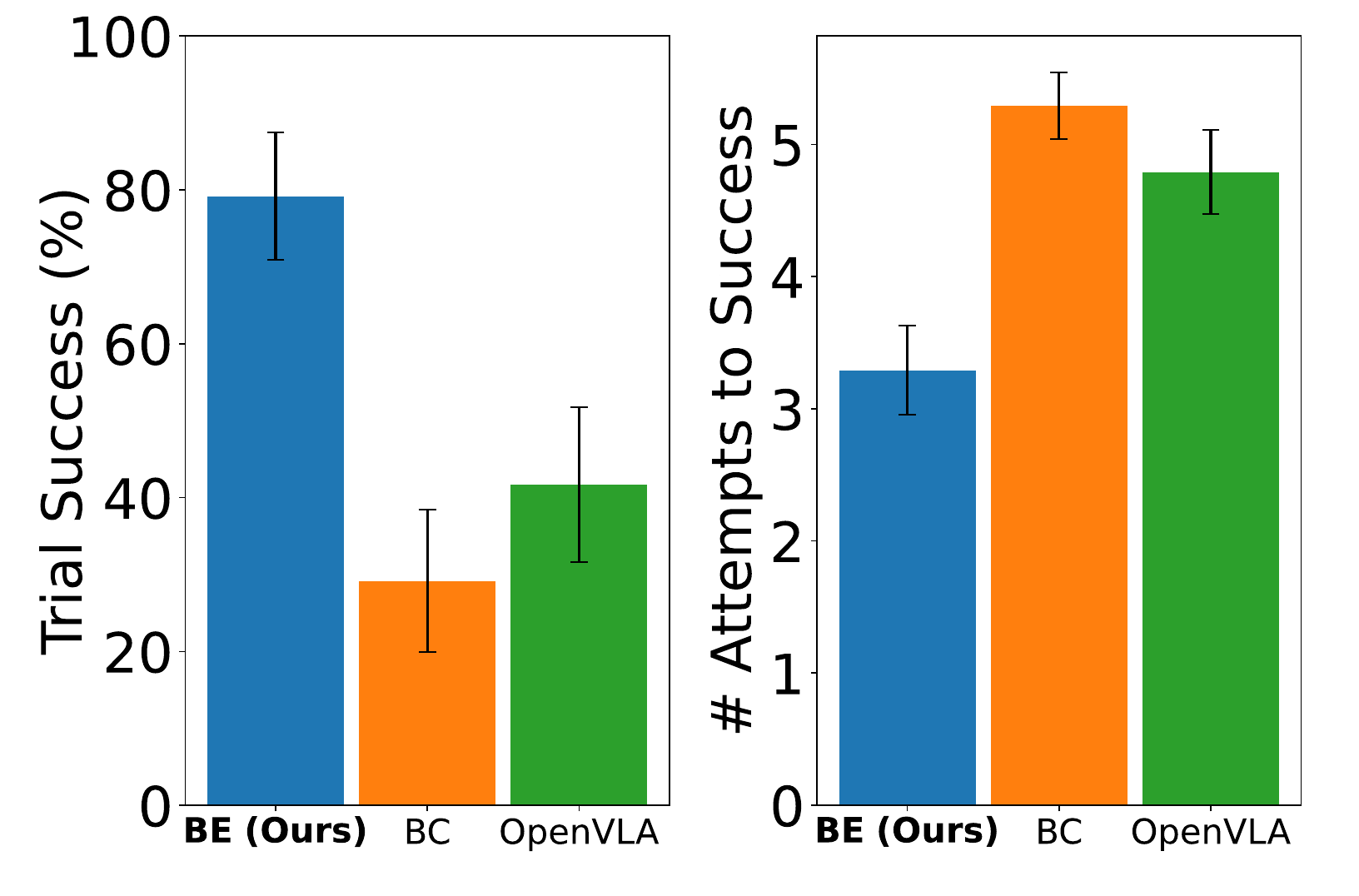}
        \vspace{-1.4em}
        \caption{Performance on WidowX robot in reaching evaluation.}
        \label{fig:widowx_results}
    \end{minipage}
    \vspace{-1em}
\end{figure*}

\subsection{Exploration in Imitation Learning}
We next consider simulated and real-world vision-based imitation learning settings.
In simulation, we utilize the Libero benchmark \cite{liu2024libero}, which simulates a variety of robotic manipulation and pick-and-place tasks, while in the real world, we train a policy for object manipulation on the Bridge dataset \cite{walke2023bridgedata}, and evaluate on the WidowX robot. Both settings cover a wide range of environments and tasks, and demonstrate the ability of our approach to learn from such diverse data.
Our focus here is to test the ability of our method to explore over the space of expert behaviors represented in the training data, while still learning useful task-solving behaviors.

\subsubsection{Simulated Results on Libero Benchmark}

\begin{wrapfigure}[11]{r}{0.2\textwidth}
\centering
	\vspace{-1em}
        \includegraphics[width=\linewidth]{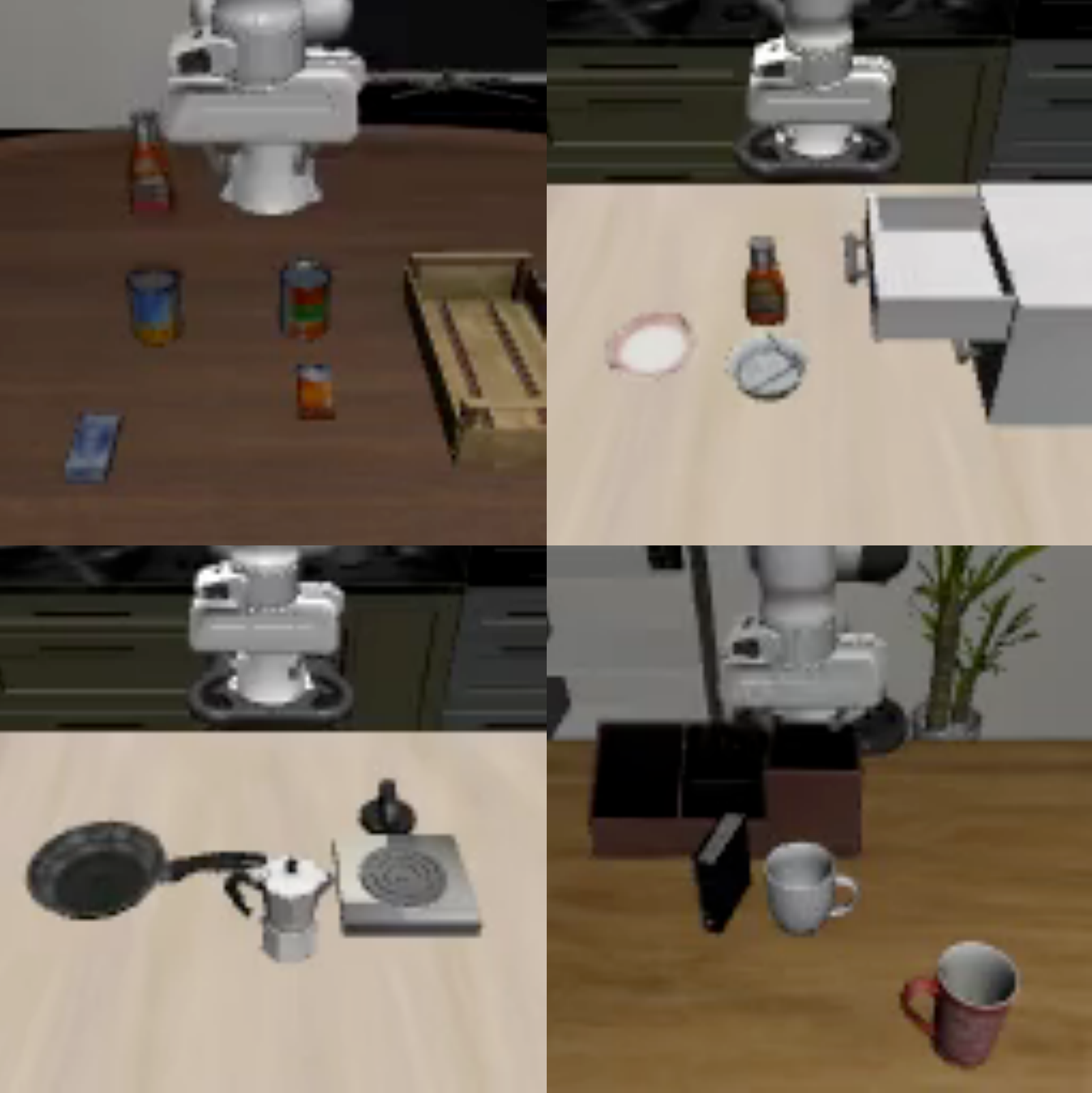}
        \vspace{-2em}
        \caption{Example Libero tasks.}
        \label{fig:libero_example}
        \vspace{-0.5em}
\end{wrapfigure}
\textbf{Environment details.}
The Libero benchmark contains a variety of scenes, simulating different robotic manipulation and pick-and-place tasks. \Cref{fig:libero_example} visualizes 4 such scenes---example tasks in the lower left scene include ``turn on the stove and put the frying pan on it'' or ``put the moka pot on the stove'' (see \Cref{sec:app_libero} for additional task descriptions).
We run all experiments on the Libero 90 dataset, which includes 90 tasks spread across 21 distinct scenes. For each task, the dataset provides 50 human demonstrations of successful completion, which we utilize as our demonstration dataset. We consider two evaluation settings. In the first, we choose a task we would like the agent to solve, but do not instruct the policy which task we have chosen. As each scene contains from 2-7 different tasks, to successfully solve the hidden task, the agent must attempt and successfully complete different tasks until it has completed the hidden one. This simulates settings where the goal is ambiguous and we would like the agent to explore possible solutions until it arrives at the correct one. 
We count a trial as a success if the hidden task has been solved at least once across 15 attempts.
In our second evaluation, we do provide a task designation, and evaluate the ability of each policy to solve the task after 6 attempts, testing the ability of a policy to try diverse behaviors for a given task.\loose

\textbf{Benchmark algorithms.}
As Libero is primarily an imitation learning benchmark, we focus on comparing with other imitation learning approaches. We compare against a \bc baseline, which utilizes the same diffusion policy base as our approach, and several variants of standard \bc: \bc with action noise, where we add a small amount of Gaussian noise to the actions produced by the \bc policy, and \bc with history conditioning, where we train a \bc policy conditioned on a history of past observations.
For the hidden-task evaluation, we also consider deploying a task-conditioned \bc policy with a randomly sampled task at each attempt (simulating the use of task-conditioned policy in settings where the task is ambiguous) and \supe, the best-performance RL baseline considered in the previous section.
We again deploy \algname as described in \Cref{sec:be_description}, feeding observations from the past episodes collected online into its context.
For each method considered here, with the exception of \supe, we train 5 policies with different random seeds. For each policy, we evaluate on each of the 90 tasks in the benchmark 3 times. For \supe we evaluate it once on each of the 90 tasks. 
\loose

\textbf{Results.}
Our results are given in \Cref{fig:libero_no_task,fig:libero_task}. \algname much more quickly solves the correct task in the hidden-task evaluation---achieving a 2-3$\times$ reduction in the number of attempts required compared to standard \bc---and similarly solves the task more quickly (and achieves a higher final success rate) than \bc in the evaluation with task conditioning. This demonstrates the ability of \algname to explore over the space of behaviors represented in the expert demonstrations---instead of exploring randomly, \algname explores by selecting \emph{different} task-solving behaviors from the demonstration dataset. 
This exploration does not hurt its ability to successfully solve tasks, but instead enables \algname to attempt diverse behaviors likely to solve the task, achieving a higher task success rate than \bc. We see that this is in contrast to augmenting \bc with action noise. While \bc with action noise does induce some amount of additional exploration, improving the \bc policy's success rate in the hidden-task evaluation, action noise significantly hurts \bc's performance in the evaluation with task conditioning, likely due to the action noise causing the behavior to deviate too far from the demonstrated task-solving behavior. This highlights the critical advantages of behavioral exploration over random exploration in inducing behaviors that are exploratory, but which still lead to high task success rates.
Similarly, we see that na\"{i}vely conditioning on history can significantly hurt \bc performance, a phenomenon that has been observed in the existing literature \cite{de2019causal}, yet the history conditioning employed by \algname enables $\pibe$ to utilize the history to actually \emph{improve} its performance.\loose

\subsubsection{Real-World Results on WidowX Robot}

For our real-world experiments, we train a policy on the BridgeData V2 dataset \cite{walke2023bridgedata}, which contains a diverse set of over 60,000 robotic teleoperation demonstrations across a wide range of environments and tasks. This setting demonstrates the ability of our approach to scale to real-world deployment and large-scale vision-based datasets, and learn from such diverse demonstration data.

\begin{wrapfigure}[12]{r}{0.17\textwidth}
\centering
	\vspace{-1em}
        \includegraphics[width=1.3\linewidth, angle=270]{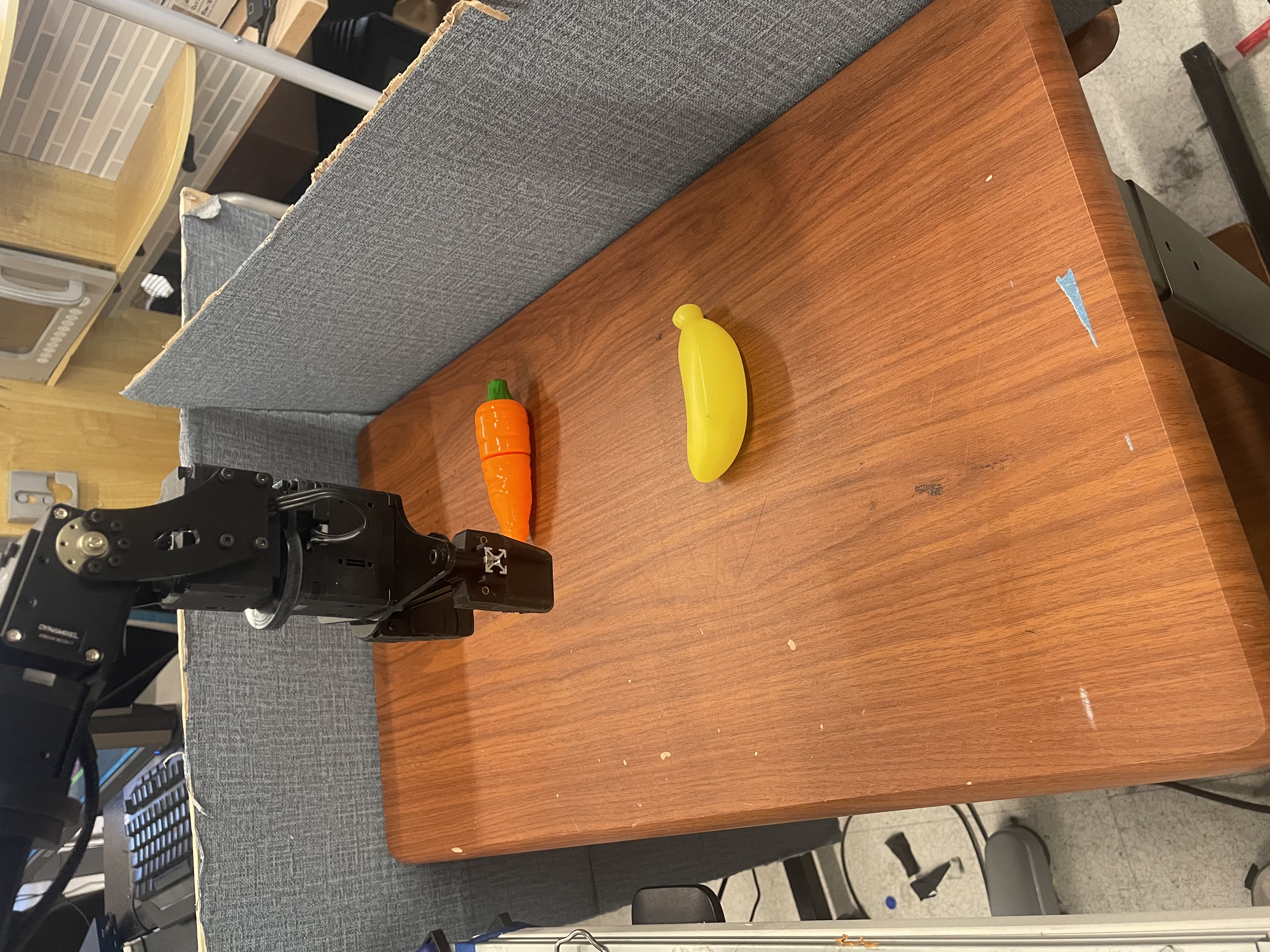}
        \vspace{-1em}
        \caption{Example WidowX setup.}
        \label{fig:widowx_example}
\end{wrapfigure}
\textbf{Environment details.}
We evaluate our policy on the WidowX 250 6-DoF robot arm. We consider 3 different reaching tasks, where for each task, we place 2 objects in front of the robot, and the goal is to interact with each object in some way (see \Cref{fig:widowx_example}). A ``trial'' consists of 5 consecutive episodes in the same scene---we count a trial as a success if the policy interacts with both objects at least once across the 5 attempts (and consider any instance of the robot's end-effector making contact with one of the objects as a successful ``interaction''). Similar to Libero, this simulates settings where the robot is given an ambiguous task specification, and we would like it to attempt different behaviors to arrive at the correct one.

\textbf{Benchmark algorithms.}
We again compare against a \bc baseline, using the same base diffusion model as our method. 
We also compare against $\mathsf{OpenVLA}$, a state-of-the-art language-conditioned generalist robot policy \cite{kim2024openvla}. We note that $\mathsf{OpenVLA}$ is trained on a dataset much larger than Bridge, and has roughly 100$\times$ more parameters than our policy. 
While this gives it an advantage in a certain sense, we nonetheless include it as a representative example of a generalist robot policy, and to evaluate the ability of generalist policies to ``explore''. In particular, we seek to understand whether $\mathsf{OpenVLA}$, deployed with standard prompting, is able to solve the goal task in settings where it is ambiguous given the prompt and exploration over possible solutions is required (please see \Cref{sec:app_widowx} for additional discussion on how we prompt $\mathsf{OpenVLA}$).
For each task and method, we evaluate on 8 5-episode trials.

\textbf{Results.}
The results for our real-world experiment are given in \Cref{fig:widowx_results}, where we plot the average number of successful trials, and average number of attempts to success (in cases where the policy did not interact with both objects in 5 attempts, we count this as 6 attempts). We see that our approach succeeds roughly 40\% more than \bc or $\mathsf{OpenVLA}$, and requires on average 2 fewer attempts to succeed. As in the Libero experiments, this illustrates the ability of \algname to focus its exploration over semantically meaningful features---rather than exploring the entire space of robot configurations, it explores only over the types of reaching and grasping behaviors present in the demonstration dataset---and demonstrates the scalability of our approach to large-scale, real-world settings.

\subsection{Understanding Behavioral Exploration}\label{sec:exp_understand}

\begin{figure*}[t]
    \centering
    \begin{subfigure}{0.24\textwidth}
        \centering
        \includegraphics[width=\textwidth]{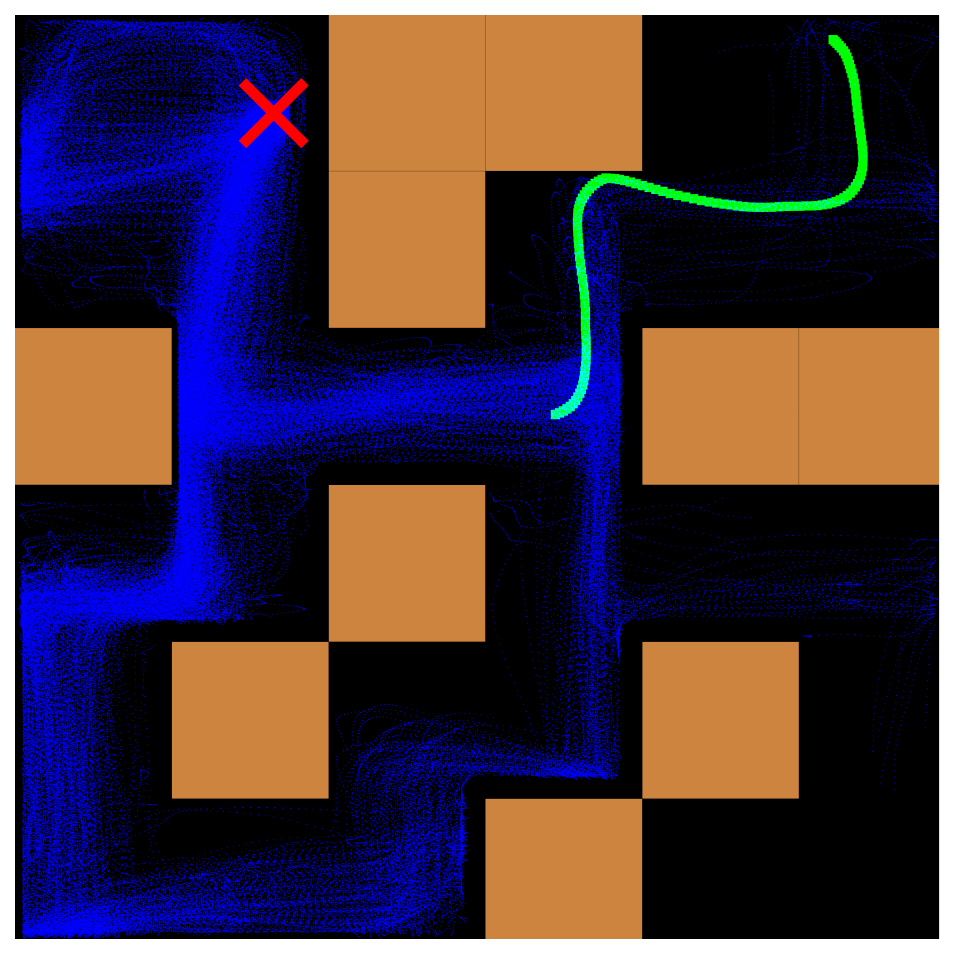}
        \label{fig:ablation2_a}
    \end{subfigure}
    \hfill
     \begin{subfigure}{0.24\textwidth}
        \centering
        \includegraphics[width=\textwidth]{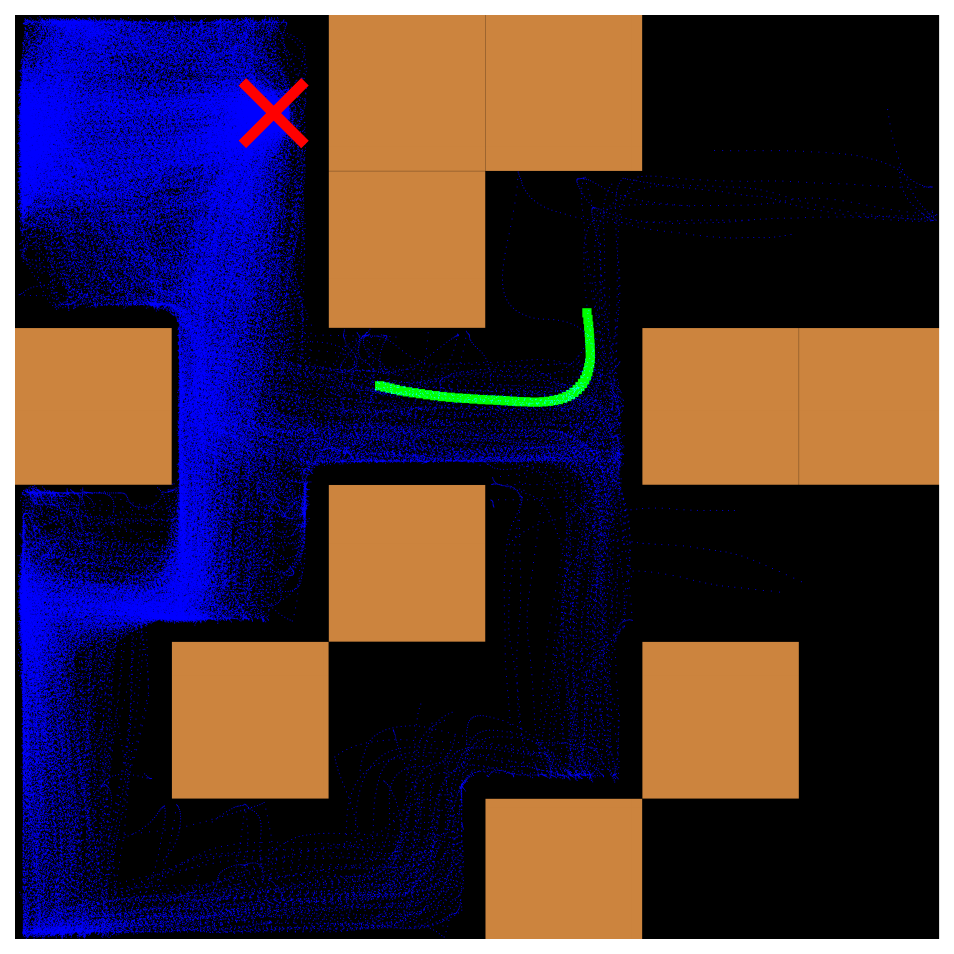}
         \label{fig:ablation2_c}
    \end{subfigure}
    \hfill
    \begin{subfigure}{0.24\textwidth}
        \centering
        \includegraphics[width=\textwidth]{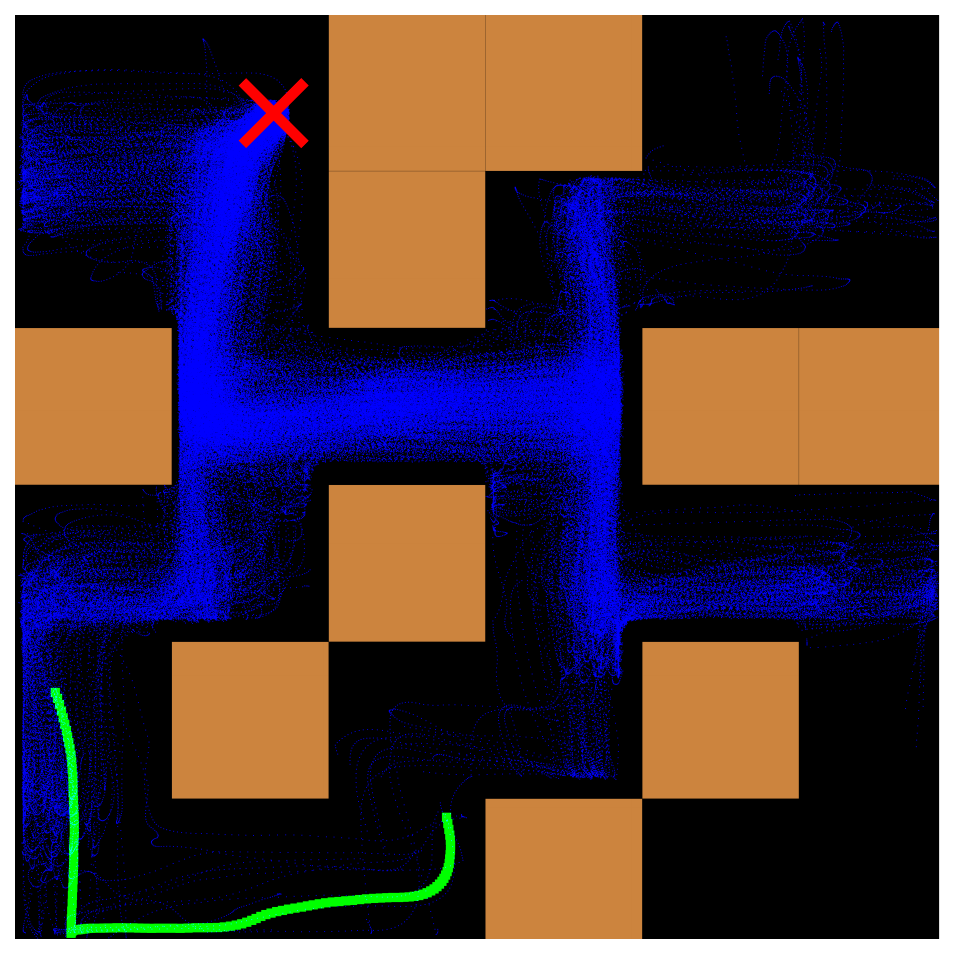}
         \label{fig:ablation2_b}
    \end{subfigure}
    \hfill
    \begin{subfigure}{0.24\textwidth}
        \centering
        \includegraphics[width=\textwidth]{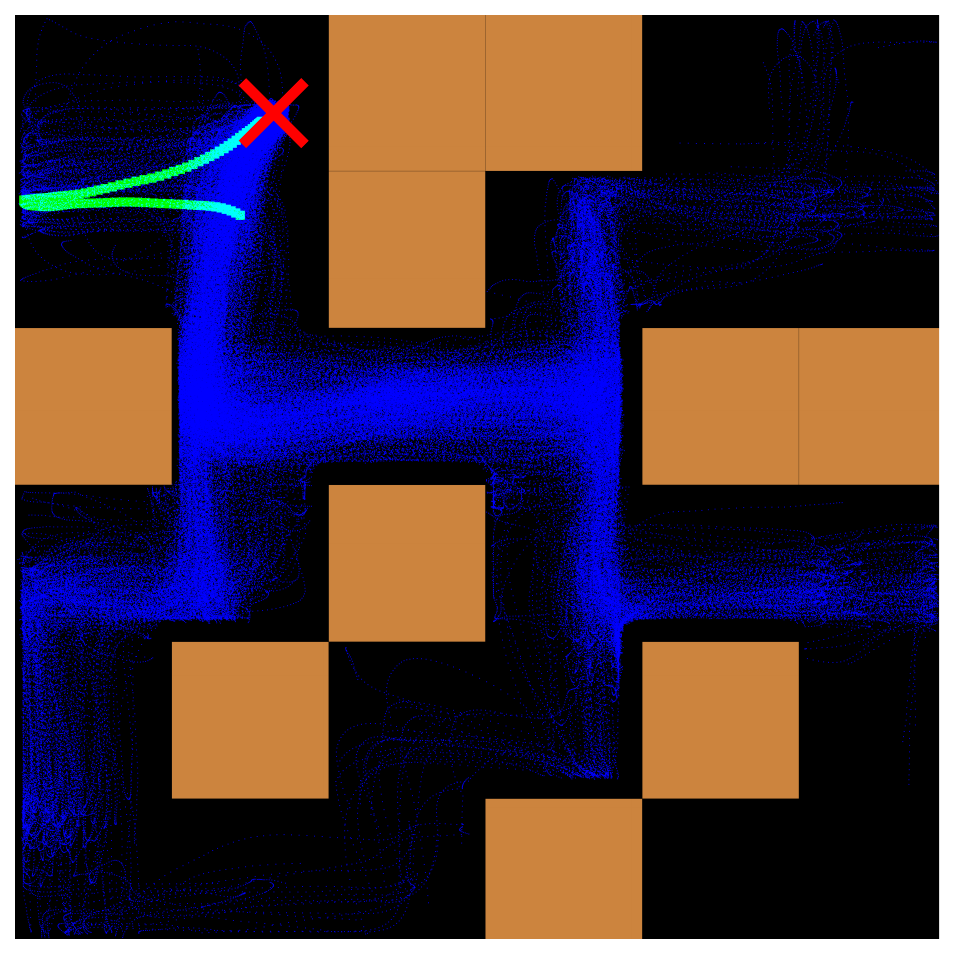}
         \label{fig:ablation2_d}
    \end{subfigure}
    \vspace{-1.5em}
    \caption{\algname trajectory rollouts (blue) when the \algname policy is conditioned on a particular history segment (green). Red ``x'' denotes the starting point for the \algname rollouts. In all cases \algname adapts its behavior to explore the regions not covered by the conditioning trajectory.}
    \label{fig:ablation2}
\end{figure*}

\begin{figure*}[htbp]
    \centering
    \begin{subfigure}{0.24\textwidth}
        \centering
        \includegraphics[width=\textwidth]{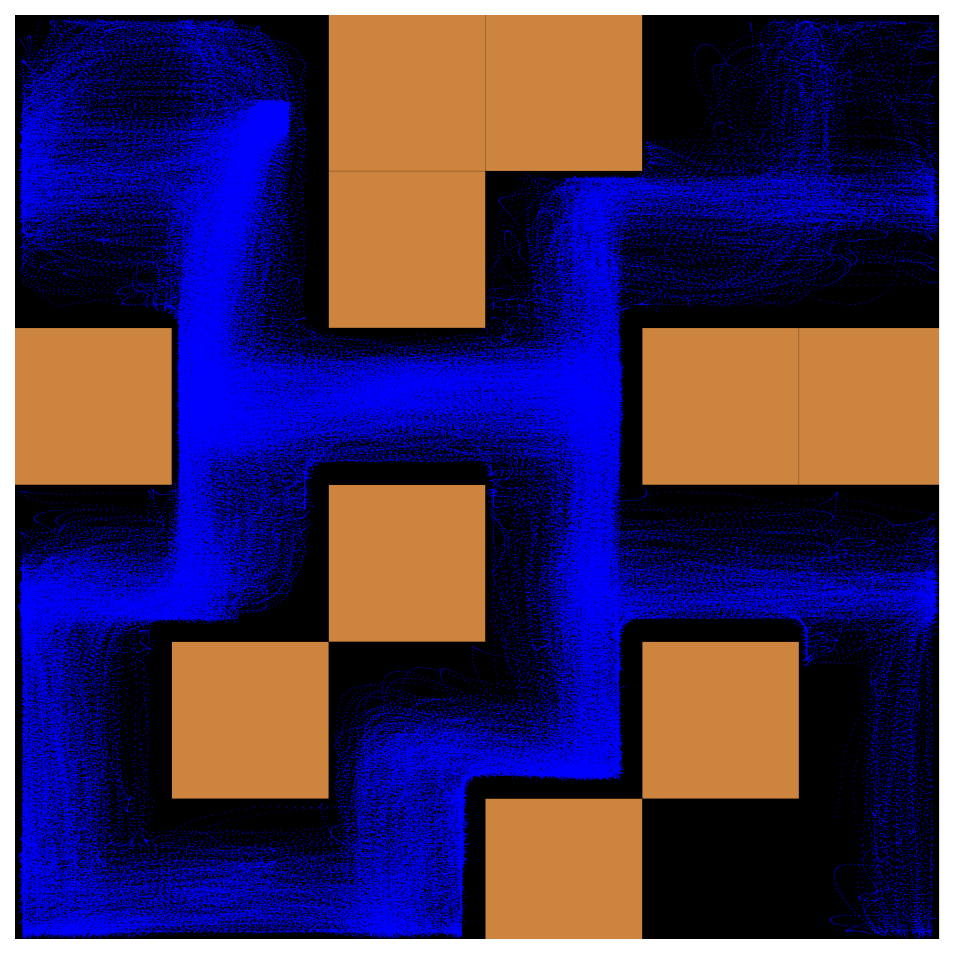}
        \caption{\algname (online history)}
        \label{fig:ablation1_a}
    \end{subfigure}
    \hfill
    \begin{subfigure}{0.24\textwidth}
        \centering
        \includegraphics[width=\textwidth]{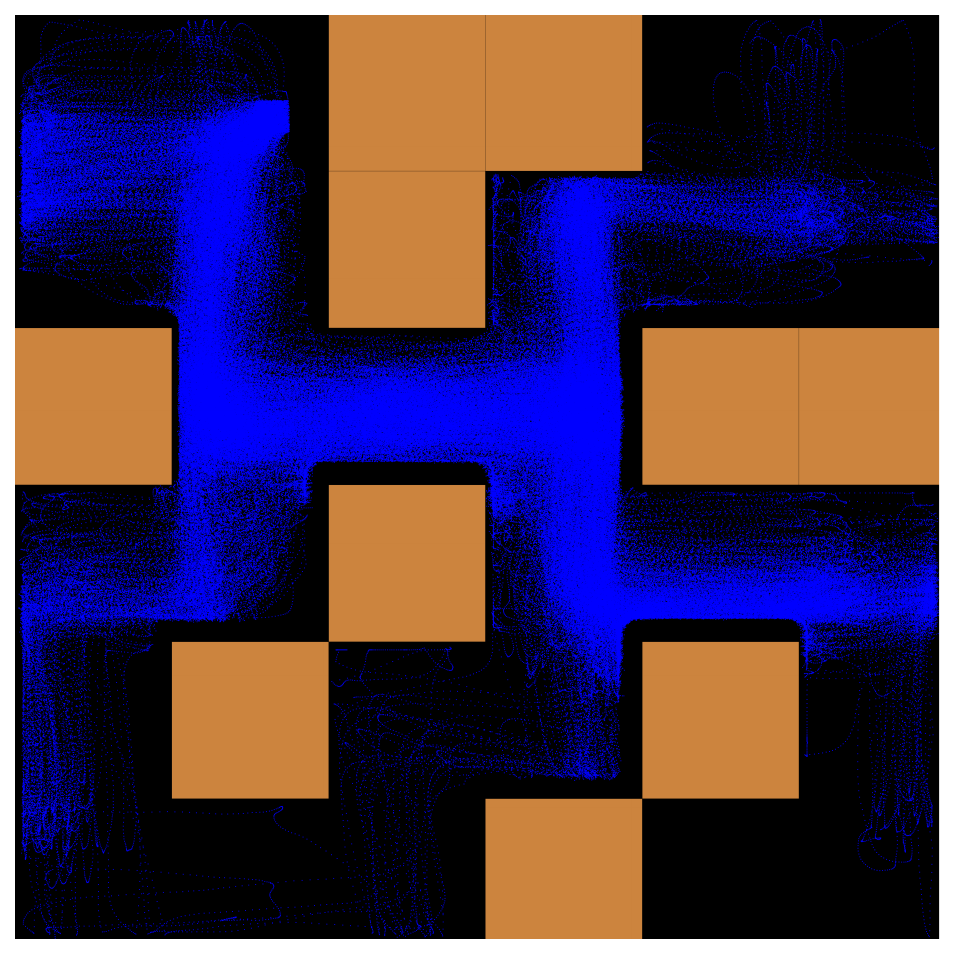}
        \caption{\algname (first-state history)}
        \label{fig:ablation1_b}
    \end{subfigure}
    \hfill
    \begin{subfigure}{0.24\textwidth}
        \centering
        \includegraphics[width=\textwidth]{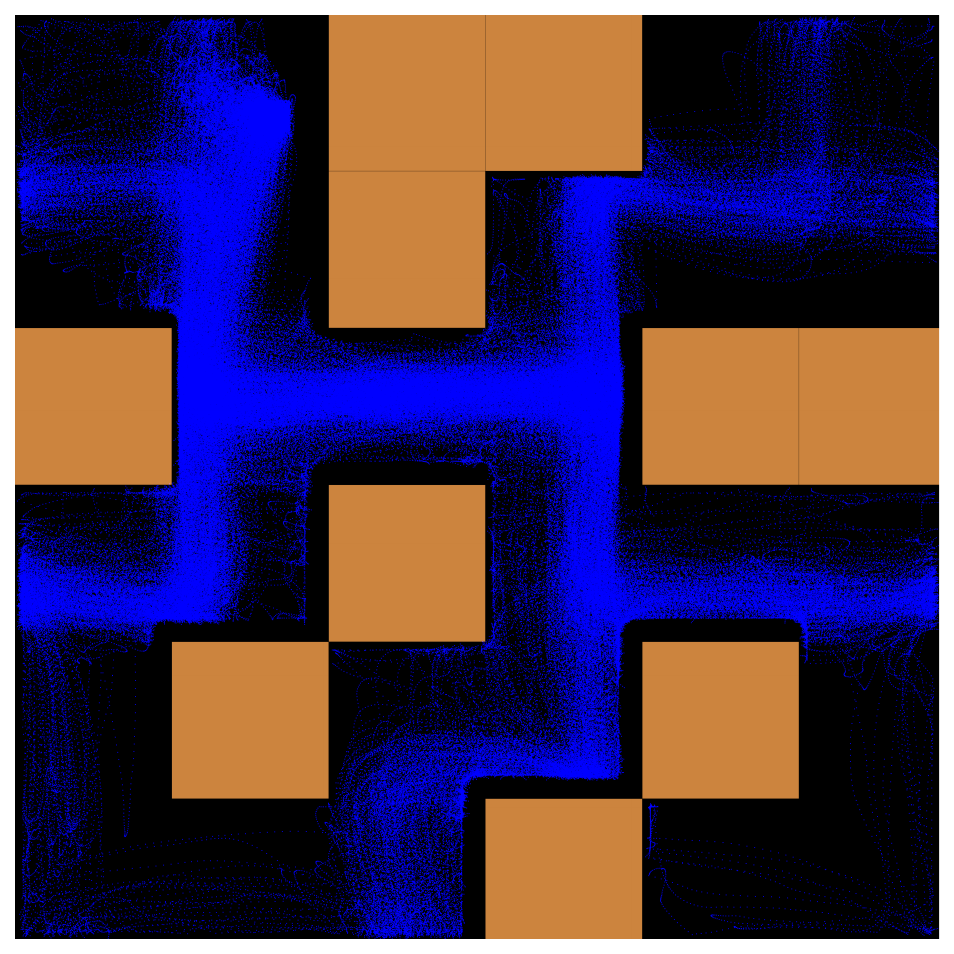}
        \caption{\bc}
        \label{fig:ablation1_c}
    \end{subfigure}
    \hfill
    \begin{subfigure}{0.24\textwidth}
        \centering
        \includegraphics[width=\textwidth]{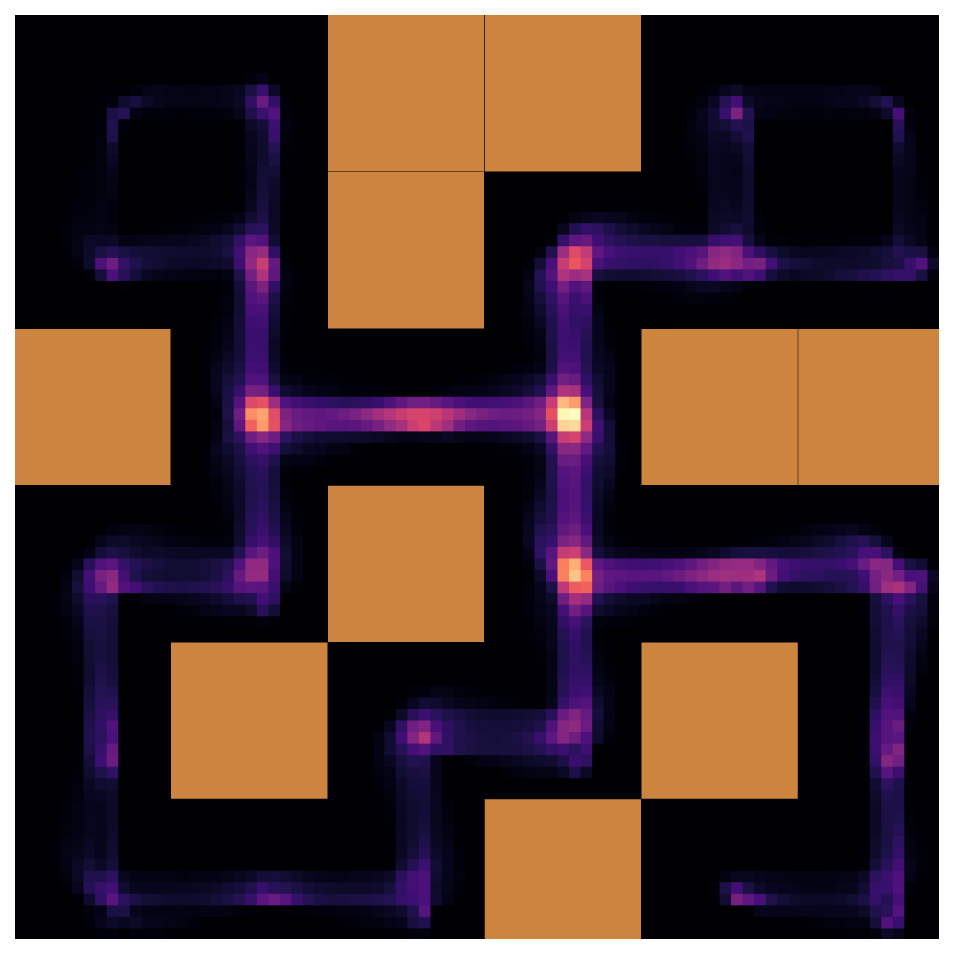}
        \caption{Demonstration data density}
        \label{fig:ablation1_d}
    \end{subfigure}
    \vspace{-0.5em}
    \caption{Comparison of \algname conditioned with history of past online observations, \algname conditioned on only the first state, and \bc. We see that conditioning on the history of past states visited is critical to achieving high coverage, and that this induces coverage over low-density regions in the demonstration data that \bc fails to reach.}
    \label{fig:ablation1}
\end{figure*}

Finally, we seek to provide some insight into the behavior of \algname, in particular to what extent \algname adapts to the history conditioning, and how critical that adaptation is for effective exploration. We consider here the Maze2D \texttt{medium} environment from D4RL.

We first consider \algname's ability to adapt to its history conditioning. In \Cref{fig:ablation2}, we plot the trajectories induced by \algname when it is conditioned on a history of a single trajectory, shown in green, and a value of $\mathsf{exp}$ corresponding to high future coverage. With this conditioning we would expect \algname to visit \emph{different} states than those in the history. We see that this is indeed the case---in the first two examples, where the conditioning trajectory is on the right side of the maze, the trajectories induced by \algname concentrate on the left side, while in the final two examples where the conditioning trajectory is on the top or bottom, the trajectories induced by \algname concentrate in the central part of the maze. This illustrates the adaptivity of \algname to its history, and \algname's ability to take actions leading to the collection of observations that are novel relative to its history.

We next seek to understand how critical this history adaptation is to overall exploration behavior. 
\Cref{fig:ablation1_a} shows rollouts from \algname in its standard operation, where we condition it on a history sampled from the past states it has visited in online rollouts.  \Cref{fig:ablation1_b} shows rollouts from \algname but where, instead of conditioning it on its history of past state, we simply condition it on a history that only contains the first state visited at the start of rollouts. We see that the coverage is significantly greater in \Cref{fig:ablation1_a} than \Cref{fig:ablation1_b}---while trajectories in \Cref{fig:ablation1_b} primarily visit points in the central part of the maze, in \Cref{fig:ablation1_a} \algname achieves coverage throughout the maze. This highlights the role of adaptivity in inducing effective exploration---by adapting to its history, \algname is able to achieve much more diverse coverage than if it only considers a fixed history, and much more diverse coverage than \bc (\Cref{fig:ablation1_c}). Furthermore, if we compare the coverage achieved by \algname vs \bc relative to the demonstration data coverage density (\Cref{fig:ablation1_d}), we see that \bc primarily visits the high-density regions in the training data---particularly through the central part of the maze---while \algname achieves significantly higher coverage in low-density regions in the training data---particularly the lower left corner. This illustrates \algname's ability to increase coverage in the low-density regions in the demonstration data.\loose

\section{Discussion}
In this work we have proposed behavioral exploration, a novel approach to training policies that explore over the space of behaviors contained in a demonstration dataset. \algname requires only a small modification to the standard BC objective to train, and our experimental results demonstrate that \algname leads to substantial gains over both standard \bc as well as RL-based exploration approaches.
We believe our work opens several directions for future work:
\vspace{-0.5em}
\begin{itemize}[leftmargin=*]
\item While \algname explores over the behaviors present in the demonstration data, there may exist scenarios where the exploration necessary to learn to solve a task requires going \emph{beyond} the space of demonstration data. How can we best utilize the prior information encoded in the demonstrations to speed up exploration in such settings?
\item Much attention has been devoted to investigating the ability of LLMs to explore, with mixed results \cite{krishnamurthy2024can,nie2024evolve,monea2024llms}. Could \algname be combined with LLMs---for example, by finetuning an LLM on the \algname objective---to enable effective exploration in language-based domains?
\item Our WidowX results demonstrate that \algname effectively scales to real-world settings. Scaling this further and combining \algname with state-of-the-art generalist robot policies \cite{kim2024openvla,black2024pi_0,bjorck2025gr00t} is of great interest.
\end{itemize}

\newpage
\section*{Acknowledgements}
The authors would like to thank the members of RAIL Lab and Max Simchowitz for helpful conversations. This research was partly supported by the RAI Institute, ONR N00014-22-1-2773 and N00014-25-1-2060, and NSF IIS-2150826.
This research used the Savio computational cluster resource provided by the Berkeley Research Computing program at UC Berkeley.

\bibliography{references}
\bibliographystyle{icml2025}

\newpage
\appendix
\onecolumn

\section{Proof of Proposition \ref{prop:conditional_opt}}\label{sec:proof}

\newcommand{\Cbeta}{C_{\beta}}
\newcommand{\Cbarbeta}{\bar{C}_{\beta}}
\newcommand{\cE}{\mathcal{E}}

For the purposes of the following result, define $\wpib(s) := \Pr^{\pib}[s_K = s]$ and $\Omega := \{s :  \wpib(s) > 0 \}$. Let $\Cbeta := \{ i : \exists s \text{ with } \wpib(s) > 0 \text{ and } \phi(s) = \be_i \}$, and $\nbeta := | \Cbeta |$ denote the number of directions in feature space $\pib$ spans at step $K$. We then have the following.

\begin{proposition}[Full version of Proposition \ref{prop:conditional_opt}]
Assume that we are in a deterministic environment and that our feature is a one-hot encoding and only non-zero at the final state. Furthermore, assume that $|\cA| < \infty$, that each episode starts from state $s_0$, that we only consider $\nbeta$ rounds of interaction, and take $\epsilon < \min_{s \in \Omega} \wpib(s)$. Let $\hist_t$ denote the set of states visited in episodes $1,\ldots, t-1$, and let $\pitil$ denote the policy which at episode $k$ plays actions according to
\begin{align*}
\Pbeta[ \ \cdot \mid s, \hist_t, \cov(\hist_t\cup \traj) = \max_{\traj'} \cov(\hist_t \cup \traj)].
\end{align*}
Then $\pitil$ is an optimal solution to Objective \ref{goal:cov_objective}.
\end{proposition}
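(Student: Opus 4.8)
The plan is to reduce Objective \ref{goal:cov_objective} to a purely combinatorial coverage-allocation problem, solve that problem exactly, and then verify that $\pitil$ realizes the optimal allocation. First I would exploit the one-hot feature structure: since every non-terminal state maps to $\be_d$ and every terminal state to some $\be_i$ with $i \le d-1$, the matrix $\bLambda(\hist)$ is diagonal for any set of states $\hist$, with its $i$-th entry ($i \le d-1$) equal to the number of terminal states in $\hist$ whose feature is $\be_i$, and its $d$-th entry equal to the number of non-terminal states. Consequently $\bLambda_\beta$ is diagonal as well, its $i$-th entry ($i\le d-1$) being $\sum_{s : \bphi(s) = \be_i} \wpib(s)$, which is positive exactly when $i \in \Cbeta$. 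The assumption $\epsilon < \min_{s \in \Omega}\wpib(s)$ guarantees that every such entry exceeds $\epsilon$, so $\Ub$ is precisely the coordinate projection onto $\Cbeta$ (the $\be_d$ direction is saturated identically by every policy and can be absorbed into a constant, which I would handle separately and argue does not affect the $\argmax$).

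Under this reduction, the coverage $\covb(\hist_{\nbeta+1})$ of the full dataset collected across the $\nbeta$ episodes depends, up to the constant $\be_d$ term, only on the vector of terminal-direction counts $(n_i)_{i \in \Cbeta}$, via $\covb = 1/\big(\sum_{i \in \Cbeta} (n_i + \lambda)^{-1}\big)$. The next step is to solve $\max \covb$, equivalently $\min \sum_{i} (n_i+\lambda)^{-1}$, subject to $\sum_{i \in \Cbeta} n_i \le \nbeta$ and $n_i \ge 0$. Since $t \mapsto (t+\lambda)^{-1}$ is strictly convex and strictly decreasing, the sum is minimized by using the full budget and balancing the counts; as $|\Cbeta| = \nbeta$, this forces $n_i = 1$ for every $i \in \Cbeta$. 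Thus any policy satisfies $\tr(\Ub^\top \bLampi \Ub) \le \nbeta$, and applying the same convexity bound to the expected counts $\Exp[n_i]$ shows $\covb(\bLampi)$ is maximized precisely when each behavior-covered terminal direction is visited exactly once in expectation.

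It then remains to show $\pitil$ induces exactly this allocation. I would argue by induction on the episode index $t$ that, almost surely, $\hist_t$ covers exactly $t-1$ distinct directions of $\Cbeta$, each once. The key computation is the marginal gain: extending a history that already covers a set $S$ by a trajectory ending in direction $i$ decreases $\tr((\bLambda(\hist)+\lambda I)^{-1})$ by $\tfrac{1}{\lambda(1+\lambda)}$ if $i \notin S$, versus only $\tfrac{1}{(c+\lambda)(c+1+\lambda)}$ if $i$ already has count $c \ge 1$; since $c \ge 1$ makes the latter strictly smaller, $\cov(\hist_t \cup \traj)$ is maximized precisely by trajectories terminating in a not-yet-covered direction. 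Because $\pitil$ conditions $\Pbeta$ on the event $\cov(\hist_t\cup\traj) = \max_{\traj'}\cov(\hist_t\cup\traj')$, and because every direction in $\Cbeta$ is reachable with positive probability under $\pib$ (by definition of $\Cbeta$, and since $\epsilon$ lies below the minimal terminal-state weight so no direction is pruned), the conditioning event is nonempty and supported on trajectories hitting a fresh direction whenever one remains; determinism ensures this conditional action distribution is well defined and steers the episode to such a terminal state. Completing the induction, after $\nbeta$ episodes $\pitil$ has covered all $\nbeta$ directions exactly once, attaining the bound from the previous paragraph and hence solving Objective \ref{goal:cov_objective}.

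I expect the main obstacle to be the third step, specifically establishing that the \emph{myopic} rule defining $\pitil$ (maximize single-episode coverage gain) coincides with the \emph{global} optimum over all $\nbeta$ episodes. This is not automatic for greedy coverage rules, and the argument I would give leans on the special structure here---that covering any new direction is always the best available single-step move, and that there are exactly $\nbeta$ directions to cover in $\nbeta$ episodes---so the greedy trajectory and the globally balanced allocation coincide. Secondary care is needed to confirm the conditioning event in $\pitil$ is non-degenerate at every episode (handled by the $\epsilon$ threshold and the reachability of each $\Cbeta$ direction) and to formally discharge the constant $\be_d$ coordinate so that it plays no role in the optimization.
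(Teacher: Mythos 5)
Your proposal is correct and shares the paper's core skeleton: an outer induction over episodes showing that $\pitil$ reaches a previously uncovered terminal direction of $\Cbeta$ in every one of the $\nbeta$ episodes, combined with the observation that the balanced allocation (each direction of $\Cbeta$ covered exactly once) is the optimum of Objective \ref{goal:cov_objective}. The differences are in where the rigor is spent. You make explicit two things the paper asserts without computation: the reduction to a diagonal allocation problem and the convexity argument showing $n_i = 1$ for all $i \in \Cbeta$ is optimal (the paper simply states the optimum has $\Pr^\pi[\bphi(s_K)=i]=1/\nbeta$), and the marginal trace gains $\tfrac{1}{\lambda(1+\lambda)}$ versus $\tfrac{1}{(c+\lambda)(c+1+\lambda)}$ certifying that the conditioning event $\cov(\hist_t\cup\traj)=\max_{\traj'}\cov(\hist_t\cup\traj')$ selects exactly the trajectories ending in a fresh direction. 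Conversely, the step you compress into ``determinism ensures this conditional action distribution \ldots steers the episode to such a terminal state'' is precisely where the paper spends its effort: an inner induction over steps within an episode, showing via the conditional-probability identity that $\Pbeta[a \in \cAbar(s_{k-1}) \mid s_{k-1}, \hist_t, \cE] = 0$, i.e., that the step-wise conditional marginal never assigns mass to an action from which no $\pib$-supported path to a fresh direction exists (using that $\wpib(s_{k-1})>0$ along the way, so the terminal feature must lie in $\Cbeta$, and that the denominator $\Pbeta[\cE \mid s_{k-1},\hist_t]$ is positive). Your trajectory-level support argument is the right seed for this, but note that the policy samples actions marginally at each step, so the sequential consistency does require this explicit zero-probability computation rather than determinism alone; meanwhile the greedy-versus-global obstacle you flag as the main difficulty dissolves exactly as you say---with $\nbeta$ directions and $\nbeta$ episodes, fresh-direction greed and the balanced optimum coincide---which is also implicit in the paper's outer induction.
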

\begin{proof}
Note that in our setting
we have that the optimal solution has $\Pr^\pi[\bphi(s_K) = i] = 1/\nbeta$ for each $i \in \Cbeta$ (and does not depend on the visitation to non-terminal states since they all affect coverage equally). 
Thus, it suffices to show that the policy $\pitil$ in the result statement satisfies $\Pr^{\pitil}[\bphi(s_K) = i] = 1/\nbeta$ after playing for $\nbeta$ episodes.

We will complete this proof by induction, where we show that at each episode $t$, we reach a previously unreached terminal state in $\Cbeta$. Define $\hist_t$ denote the set of states visited at episodes $1,\ldots, t-1$, and let $\Cbarbeta^t := \Cbeta \backslash \{ i : \exists s \in \hist \text{ with } \wpib(s) > 0 \text{ and } \phi(s) = \be_i \}$. It suffices then to show that $|\Cbarbeta^t| = \nbeta - t +1$ for each $t$. The base case, at $t=1$, is trivial, since $\Cbarbeta^1 = \Cbeta$. 

Now assume that $|\Cbarbeta^t| = \nbeta - t + 1$ for some $k < \nbeta$, and we wish to show that $|\Cbarbeta^{t+1}| = \nbeta - t$. By construction, there exists some sequence of actions that will take us from $s_0$ to a state $s_K$ with $\bphi(s_K) \in \Cbarbeta^{t}$. If we can show that we will reach such a state at step $H$ of episode $k$, the inductive step follows.

To show this, we introduce a second inductive step. In particular, we seek to show that for each  step $h$ of episode $k$, we will be in a state $s_k$ for which a path exists to some $s_K$ playing only actions in the support of $\pib$, and with $\bphi(s_K) \in \Cbarbeta^t$. 
The base case is trivial and is true by assumption, as stated above.
For the inductive case, assume that we are in such a state at $s_{k-1}$. Then we want to show that we will play an action which will cause us to transition to such a state at $s_k$. As we condition on the event that $\{  \cov(\hist \cup \{ s_K \}) = \max_{s} \cov(\hist \cup \{ s \} \}$, then $\pitil$ by definition will play an action that will allow us to reach some state for which $\bphi(s_K) \neq \bphi(s)$ for any $s \in \hist_t$ (we note that such an action in the support of $\pib$ exists at step $h-1$ by the inductive assumption). Note also that $\pitil$ will only play actions in the support of $\pib$ at $s_{k-1}$ and, furthermore, that since we have played $\pitil$ up to step $h-1$, we have always played actions in the support of $\pib$, and therefore $\wpib(s_{k-1}) > 0$. 

Let $\cAbar(s_{k-1})$ denote the set of actions that would lead to a state $s_k$ from which it is not possible to get to a terminal state in the set $\{ s : \bphi(s) \in \Cbarbeta^t \}$ by playing actions in the support of $\pib$. For notational convenience, let $\cE$ denote the event $\cov(\hist_t \cup \{ s_K \}) = \max_{s} \cov(\hist_t \cup \{ s \}$. Then if we can show that 
\begin{align}\label{eq:pf_dont_play_bad_ac}
 \Pbeta[a \in \cAbar(s_{k-1}) \mid s_{k-1}, \hist_t, \cE] = 0
\end{align}
we are done. By the definition of conditional probability, we have
\begin{align*}
& \Pbeta[a \in \cAbar(s_{k-1}) \mid s_{k-1}, \hist_t, \cE]  = \frac{\Pbeta[\{a \in  \cAbar(s_{k-1}) \} \cap \cE \mid s_{k-1}, \hist_t ]   }{\Pbeta[\cE \mid s_{k-1}, \hist_t ]} .
\end{align*}
Note that
\begin{align*}
& \Pbeta[\{a \in  \cAbar(s_{k-1}) \} \cap \cE \mid s_{k-1}, \hist_t ]   = \Pbeta[\{a \in  \cAbar(s_{k-1}) \} \cap \cE \cap \{ a_{h},\ldots,a_{H-1} \in \support(\pib) \} \mid s_{k-1}, \hist_t ] 
\end{align*}
by definition of $\Pbeta$. Furthermore, note that the event
\begin{align*}
\{a \in  \cAbar(s_{k-1}) \} \cap \cE \cap \{ a_{h},\ldots,a_{H-1} \in \support(\pib) \}
\end{align*}
implies the event
\begin{align*}
\{ \bphi(s_K) \not\in \Cbeta, s_K \not \in \hist_t \}
\end{align*}
since $a \in \cAbar(s_{k-1})$ cannot lead to any $s_K$ with $\bphi(s_K) \in \Cbarbeta$, by definition, if we only take actions in the support of $\pib$, but since we condition on the event $\cE$ we must reach a terminal state not in $\hist_t$. 
This implies that the above can be bounded as
\begin{align*}
\le \frac{\Pbeta[\{ \bphi(s_K) \not\in \Cbeta, s_K \not \in \hist_t \} \mid s_{k-1}, \hist_t ]   }{\Pbeta[\cE \mid s_{k-1}, \hist_t ]} .
\end{align*}
However, since $\wpib(s_{k-1}) > 0$, the final state reached by playing $\pib$ from this state must satisfy $\bphi(s_K) \in \Cbeta$. Thus, we have $\Pbeta[\{ \bphi(s_K) \not\in \Cbeta, s_K \not \in \hist_t \} \mid s_{k-1}, \hist_t ]  = 0$. 
Furthermore, by the inductive assumption we have $\Pbeta[\cE \mid s_{k-1}, \hist_t ] > 0$. Together this proves that \eqref{eq:pf_dont_play_bad_ac} holds, which proves the inner inductive step. 

The inner induction then immediately implies the outer induction step, that we reach a terminal state at episode $k$ in $\Cbarbeta^t$, so that $|\Cbarbeta^{t+1}| = \nbeta - t$. 
Repeating this for each $k \le \nbeta$, we have that $\pitil$ will visit terminal states $\{ s_K^t \}$ such that $\{ i : \exists k, \bphi(s_K^t) = \be_i \} = \Cbeta$, which implies that $\Pr^{\pitil}[\bphi(s_K) = i] = 1/\nbeta$ after playing for $\nbeta$ episodes, thus proving the result.
\end{proof}

\newpage
\section{Experiment Details}\label{sec:exp_details}

We make several notes on hyperparameters for all experiments. For all experiments, for both \algname and \bc, we use the diffusion policy architecture proposed by \citet{dasari2024ingredients} and utilize their code base as the starting point for our method. 
We experiment with several different parameter settings for architecture size, and for all \bc evaluations run the same parameter sweep as we consider for \algname, reporting results for the best-performing architectures. For all experiments, we use action chunking, where the model predicts several future actions (in evaluation, we roll out all actions before recomputing a new set of actions)---we specify action chunk length used for each setting below.
For \algname, in addition to standard hyperparameters common with \bc, it also requires specifying the context length (history length), and future trajectory length (in settings where $\traj^t$ is very long, how far in the future we should consider when calculating coverage). In deployment, in cases where the context length is shorter than the history we have collected, we randomly downsample the history. 
For all experiments we use the definition of $\cov(\cdot)$ stated in the main text---we specify what we choose for $\bphi(\cdot)$ in each case below. All \bc and \algname models were trained with the following hyperparameters in common---we specify other hyperparameters in the following.

\begin{table}[H]
\caption{
\footnotesize
Common hyperparameters for all \algname and \bc experiments.
}
\begin{center}
\scalebox{0.9}
{
\begin{tabular}{llll}
    \toprule
    \textbf{Hyperparameter} & \textbf{Value} \\
    \midrule
Learning rate & 3e-4 \\
LR scheduler & cosine \\
Warmup steps & 2000 \\
    \bottomrule
\end{tabular}
}
\end{center}
\end{table}

\subsection{D4RL Antmaze and Kitchen Experiments}\label{sec:app_antmaze}

\begin{figure}[H]
 \centering
 	\begin{minipage}[b]{0.5\textwidth}
        \includegraphics[width=\linewidth]{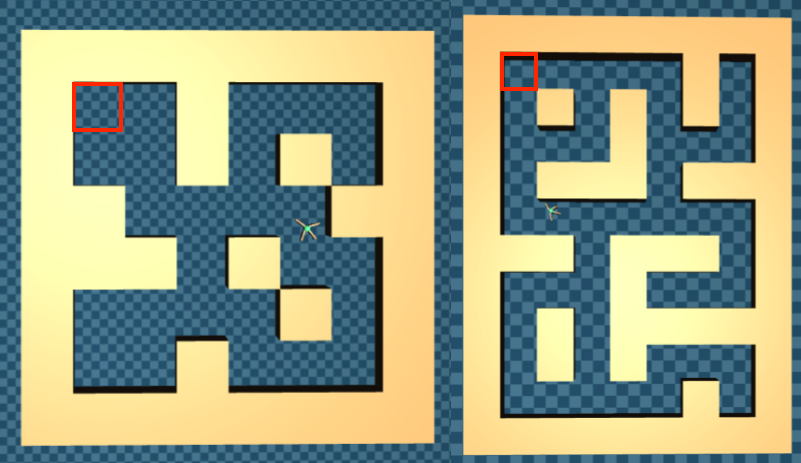}
        \caption{Visualization of D4RL Antmaze \texttt{medium} and \texttt{large} environments. Red squares denote a single ``region'', used to calculate the region count in \Cref{fig:antmaze_region_all}. Please see Figure 2(a) of \citet{wilcoxson2024leveraging} for goal locations.}
       	\label{fig:antmaze_env}
	\end{minipage}
	\hspace{1em}
	\begin{minipage}[b]{0.35\textwidth}
        \centering
        \includegraphics[width=\linewidth]{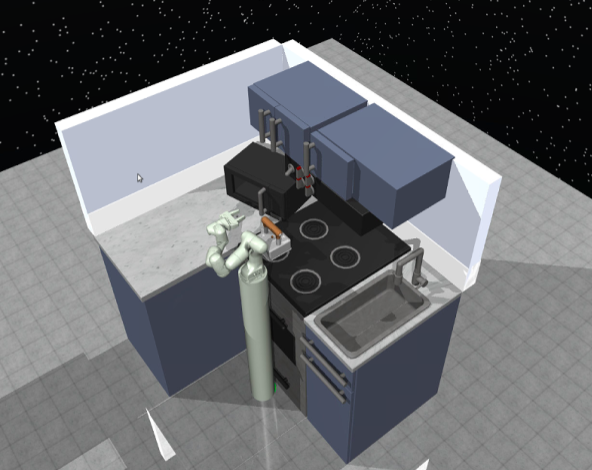}
        \caption{Visualization of D4RL Kitchen environment}
    \end{minipage}
\end{figure}

\arxiv{maybe remove the tables for final version}

We utilize the default D4RL environments and datasets in all respects except for the episode length of Antmaze: for Antmaze \texttt{medium} we shorten the episode length from 1000 to 500, and for Antmaze \texttt{Large} we shorten from 1000 to 750. The shorter horizon better tests the ability of each approach to realize directed exploration; with a longer horizon randomly exploring over the maze is itself an affective strategy. As stated in the text, we evaluate based on the number of goals reached (for Antmaze) or tasks completed (for Kitchen). For Antmaze, we utilize the goal locations given in \citet{wilcoxson2024leveraging}, and count the goal as reached if the agent reaches with a distance of 0.5 of a square of side length 1.5 centered at the goal. For Kitchen, we utilize the environments default success detector to determine whether a goal is reached. 
We do not require that these are completed in the same episode, simply over the entire sequence of episodes (so over 20k steps for Antmaze and 3k steps for Kitchen).

For all RL baselines, we use the implementations and pretraining checkpoints provided by \citet{wilcoxson2024leveraging}. We utilize the default hyperparameters for each method as given in \citet{wilcoxson2024leveraging} with the exception of the length of the online warmup phase before RL training start, given that our time horizon is shorter than is used in \citet{wilcoxson2024leveraging}. For the warmup length, for each method in each environment we sweep over several different values, and choose the value that leads to the best final performance. 

For \algname, we utilize the definition for coverage given in the main text, and as a feature vector $\bphi(s)$, feed the state into a randomly initialized neural network, taking the last layer as the feature vector. In all cases we use a 3-layer fully connected network with hidden dimension 128 and output dimension 32 (and set $\lambda = 0.01$ for our regularization). In deployment, we set the history to be a randomly selected subset of past states visited (which is resampled at the start of each episode). 

\Cref{fig:antmaze_goal_all} and \Cref{fig:antmaze_region_all} gives the performance averaged across all goals and both Antmaze \texttt{medium} and \texttt{large}, and \Cref{tab:d4rl_results} provides the final values achieved in each plot. Please see \Cref{fig:antmaze_medium_goals}-\ref{fig:antmaze_large_individual} for individual results.

\begin{table}[H]
\caption{
\footnotesize
Final success rate from \Cref{fig:libero_no_task} and \ref{fig:libero_task}.
}
\begin{center}
\scalebox{0.9}
{
\begin{tabular}{llllllll}
    \toprule
     & \algname & \bc & \supe & \hilp & \explore & \rnd \\
    \midrule
Antmaze (goals) & \textbf{0.556} {\footnotesize $\pm$ 0.013} & 0.285 {\footnotesize $\pm$ 0.015} & 0.195 {\footnotesize $\pm$ 0.013} & 0.128 {\footnotesize $\pm$ 0.012} & 0 {\footnotesize $\pm$ 0}  & 0 {\footnotesize $\pm$ 0} \\
Antmaze (regions) & \textbf{24.118} {\footnotesize $\pm$ 0.253} & 20.056 {\footnotesize $\pm$ 0.150} & 16.963 {\footnotesize $\pm$ 0.258} & 18.934 {\footnotesize $\pm$ 0.218} & 3.606 {\footnotesize $\pm$ 0.040}  & 3.475 {\footnotesize $\pm$ 0.044} \\
Kitchen & 0.922 {\footnotesize $\pm$ 0.014} & 0.306 {\footnotesize $\pm$ 0.017} & \textbf{0.956} {\footnotesize $\pm$ 0.011} & 0.913 {\footnotesize $\pm$ 0.015} & 0.075 {\footnotesize $\pm$ 0.013}  & 0.081 {\footnotesize $\pm$ 0.013} \\
    \bottomrule
\end{tabular}
}
\end{center}
\label{tab:d4rl_results}
\end{table}

\begin{table}[H]
\caption{
\footnotesize
Hyperparameters for D4RL \algname experiments.
}
\begin{center}
\scalebox{0.9}
{
\begin{tabular}{llll}
    \toprule
    \textbf{Hyperparameter} & Antmaze Medium & Antmaze Large & Kitchen \\
    \midrule
Batch size & 256 & 256 & 256\\
Action chunk size & 10 & 10 & 5 \\
Hidden size & 256 & 128 & 64\\
Number of Heads & 8 & 4 & 1\\
Number of Layers & 4 & 4 & 3 \\
Feedforward dimension & 256 & 512 & 512\\
Coverage future length & 200 & 100 & 50\\
Context history length & 100 & 50 & 50 \\
    \bottomrule
\end{tabular}
}
\end{center}
\end{table}

\begin{table}[H]
\caption{
\footnotesize
Hyperparameters for D4RL \bc experiments.
}
\begin{center}
\scalebox{0.9}
{
\begin{tabular}{llll}
    \toprule
    \textbf{Hyperparameter} & Antmaze Medium \& Large & Kitchen \\
    \midrule
Batch size & 256 & 256 \\
Action chunk size & 10 & 5 \\
Hidden size & 256 & 64 \\
Number of Heads & 8 & 1 \\
Number of Layers & 4 & 3\\
Feedforward dimension & 512 & 128 \\
    \bottomrule
\end{tabular}
}
\end{center}
\end{table}

\begin{figure}[H]
    \centering
    \begin{minipage}[b]{0.35\textwidth}
        \centering
        \includegraphics[width=\linewidth]{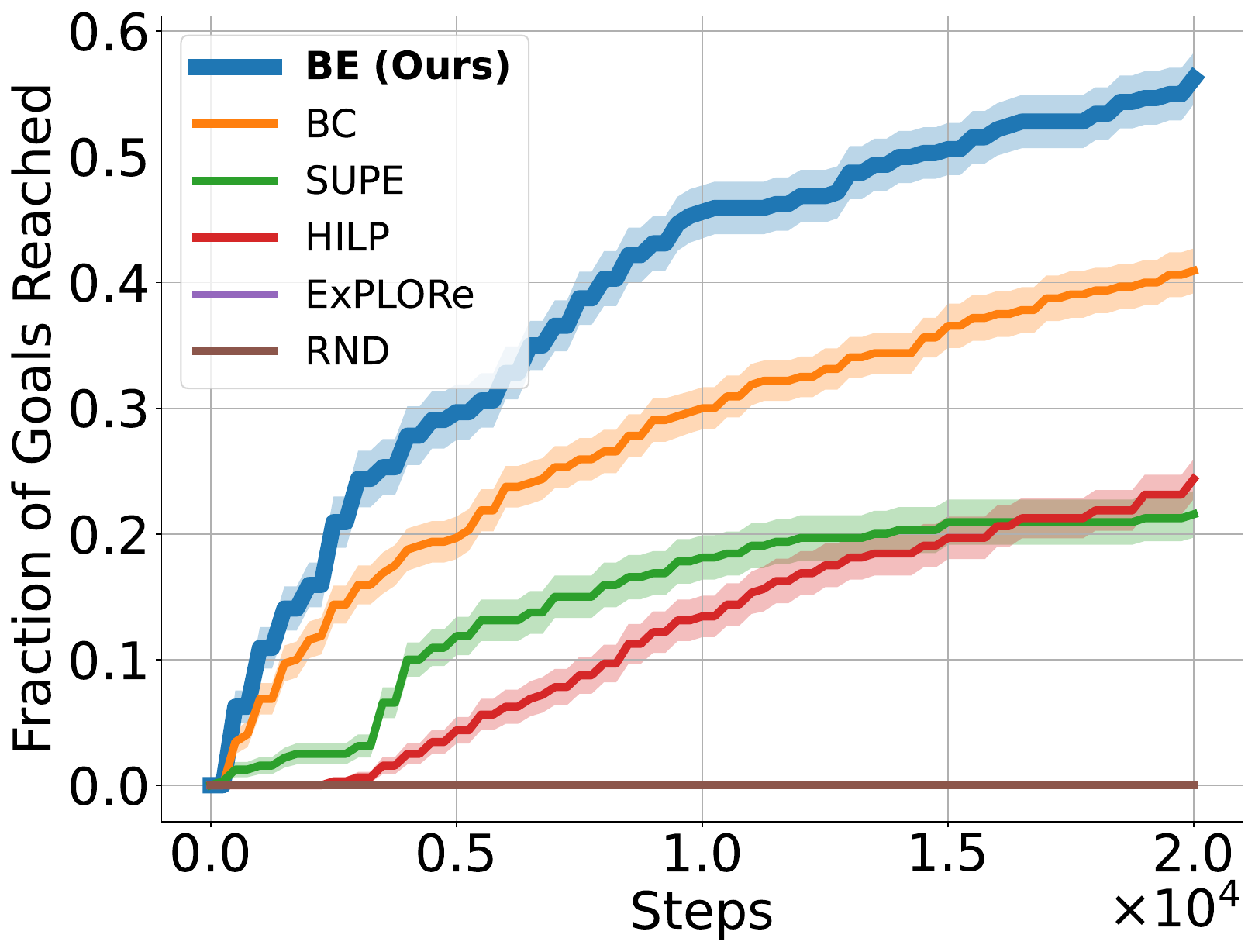}
        \vspace{-2em}
        \caption{Performance on Antmaze \texttt{medium}, measuring number of goals reached.}
        \label{fig:antmaze_medium_goals}
    \end{minipage}
    \hspace{1em}
    \begin{minipage}[b]{0.35\textwidth}
        \centering
        \includegraphics[width=\linewidth]{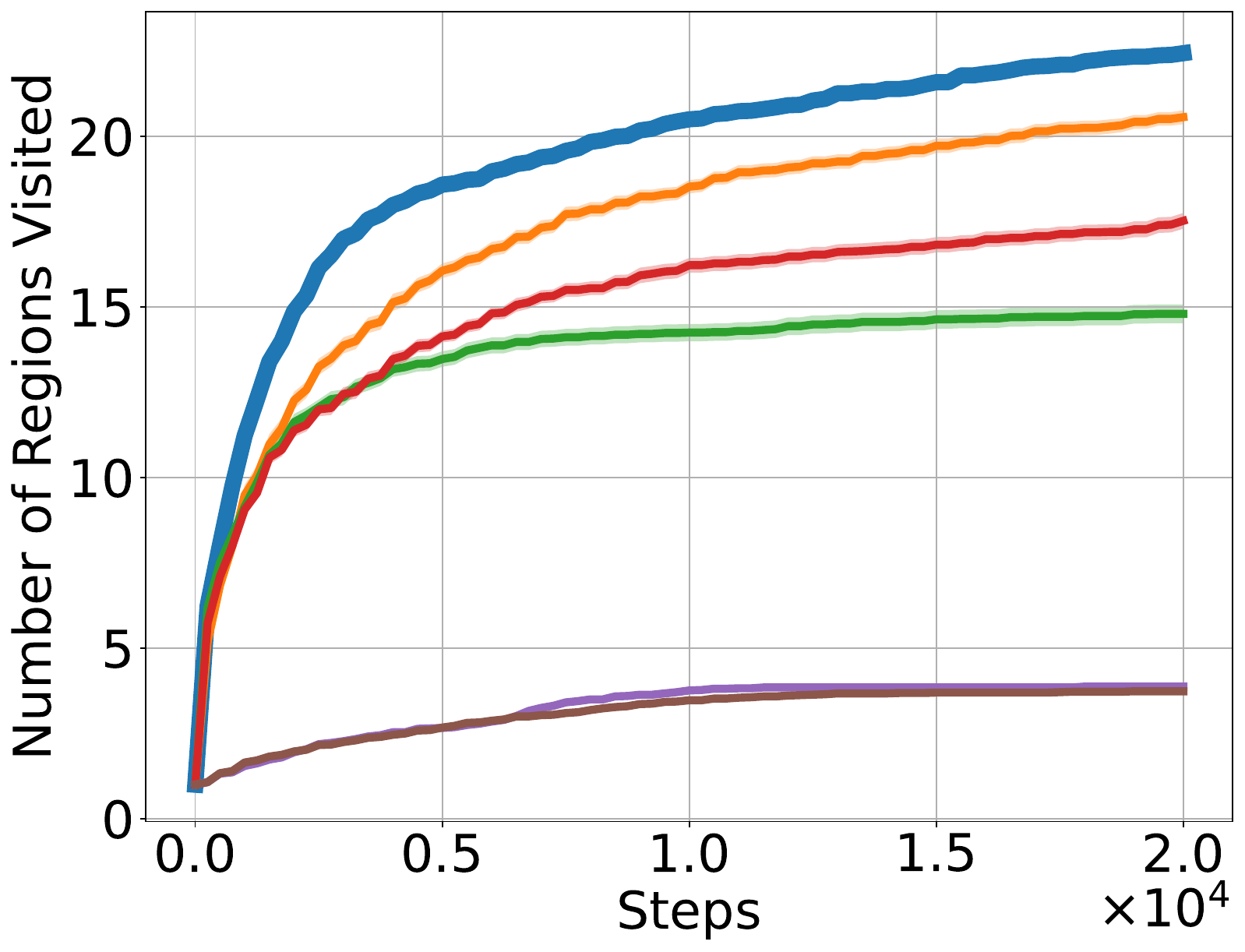}
        \vspace{-2em}
        \label{fig:antmaze_medium_regions}
        \caption{Performance on Antmaze \texttt{medium}, measuring number of regions reached.}
    \end{minipage}
\end{figure}

\begin{figure}[H]
    \centering
    \begin{subfigure}[b]{0.22\textwidth}
        \centering
        \includegraphics[width=\textwidth]{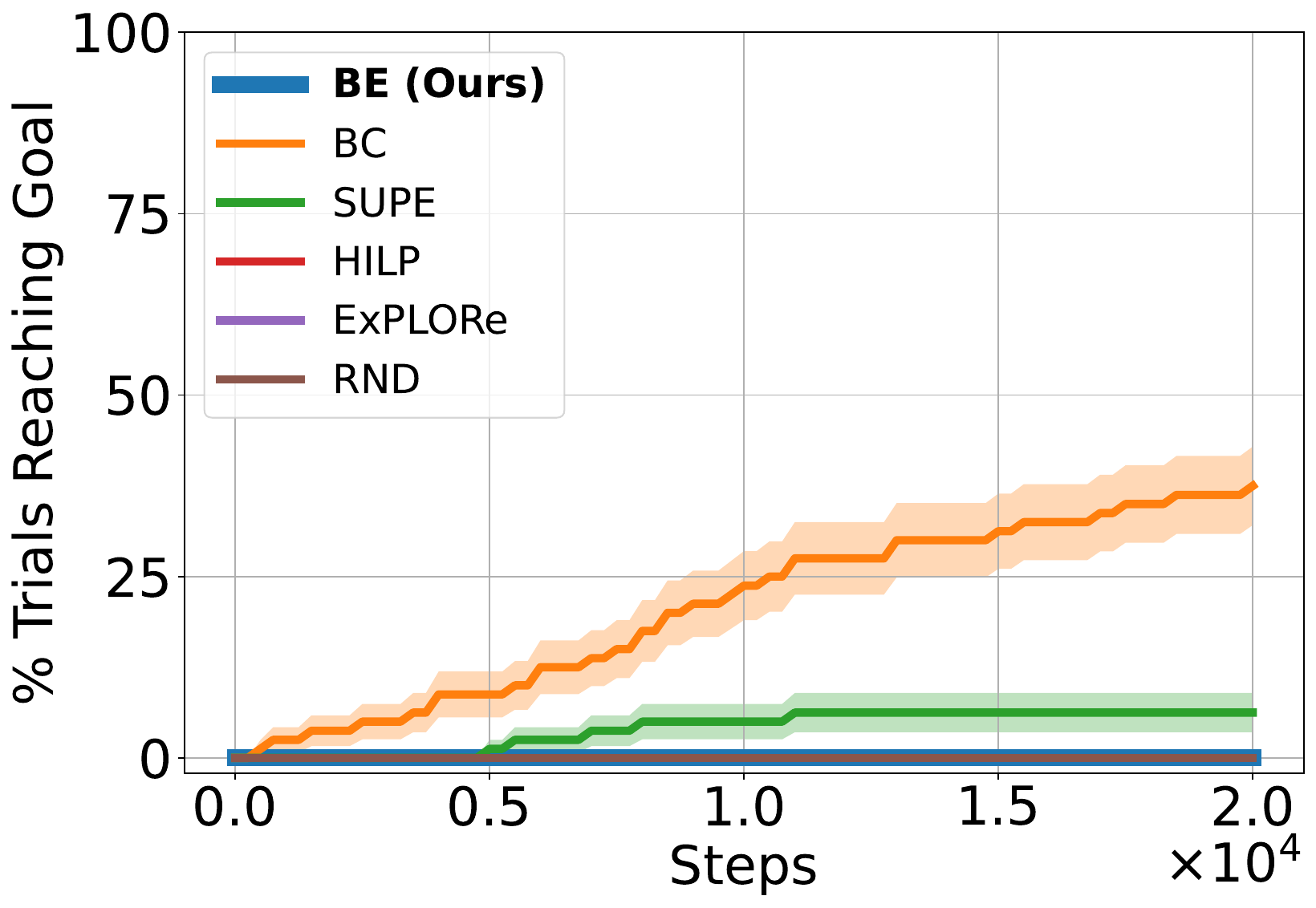}
        \caption{Goal 1}
    \end{subfigure}
    \hfill
    \begin{subfigure}[b]{0.22\textwidth}
        \centering
        \includegraphics[width=\textwidth]{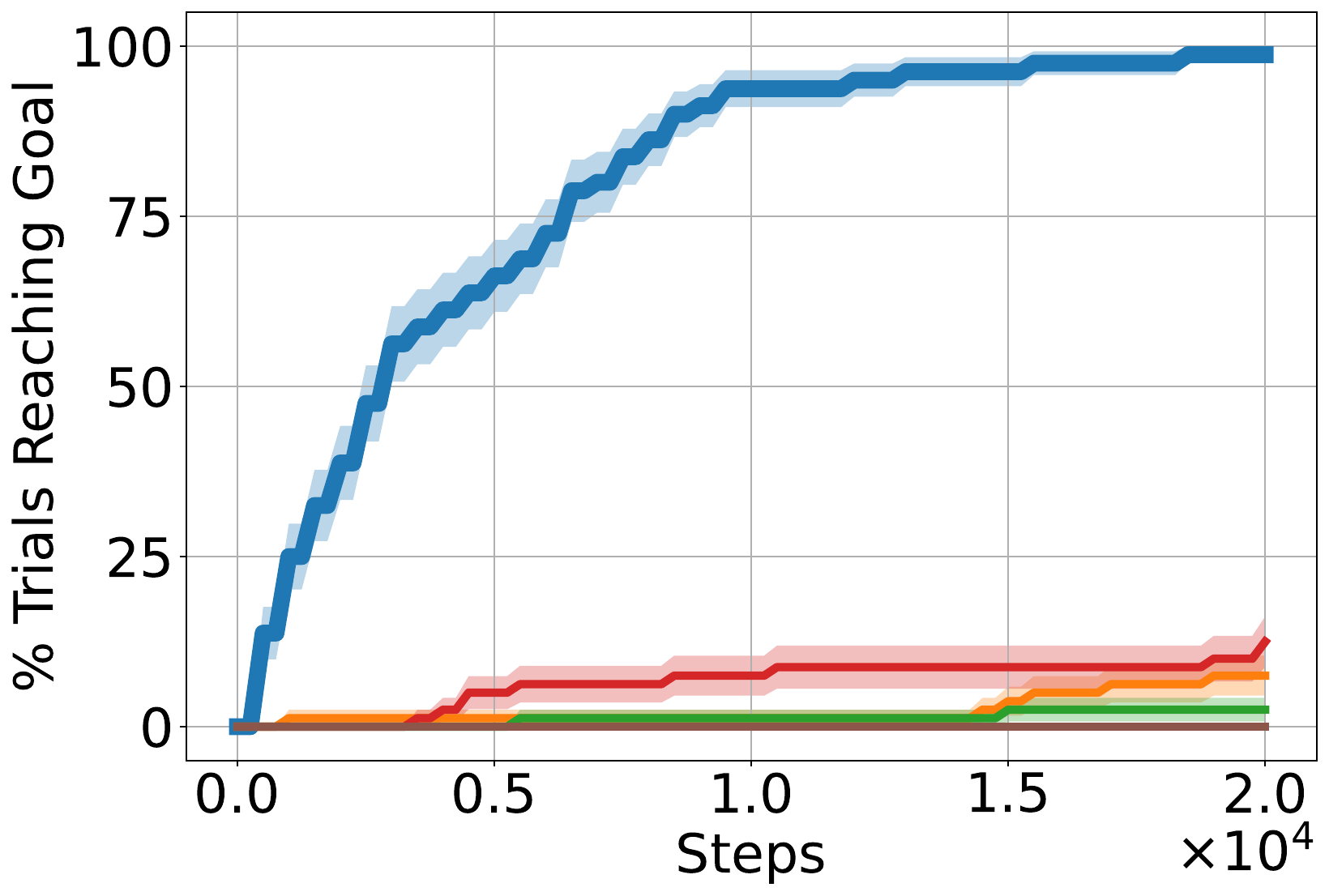}
        \caption{Goal 2}
    \end{subfigure}
    \hfill
    \begin{subfigure}[b]{0.22\textwidth}
        \centering
        \includegraphics[width=\textwidth]{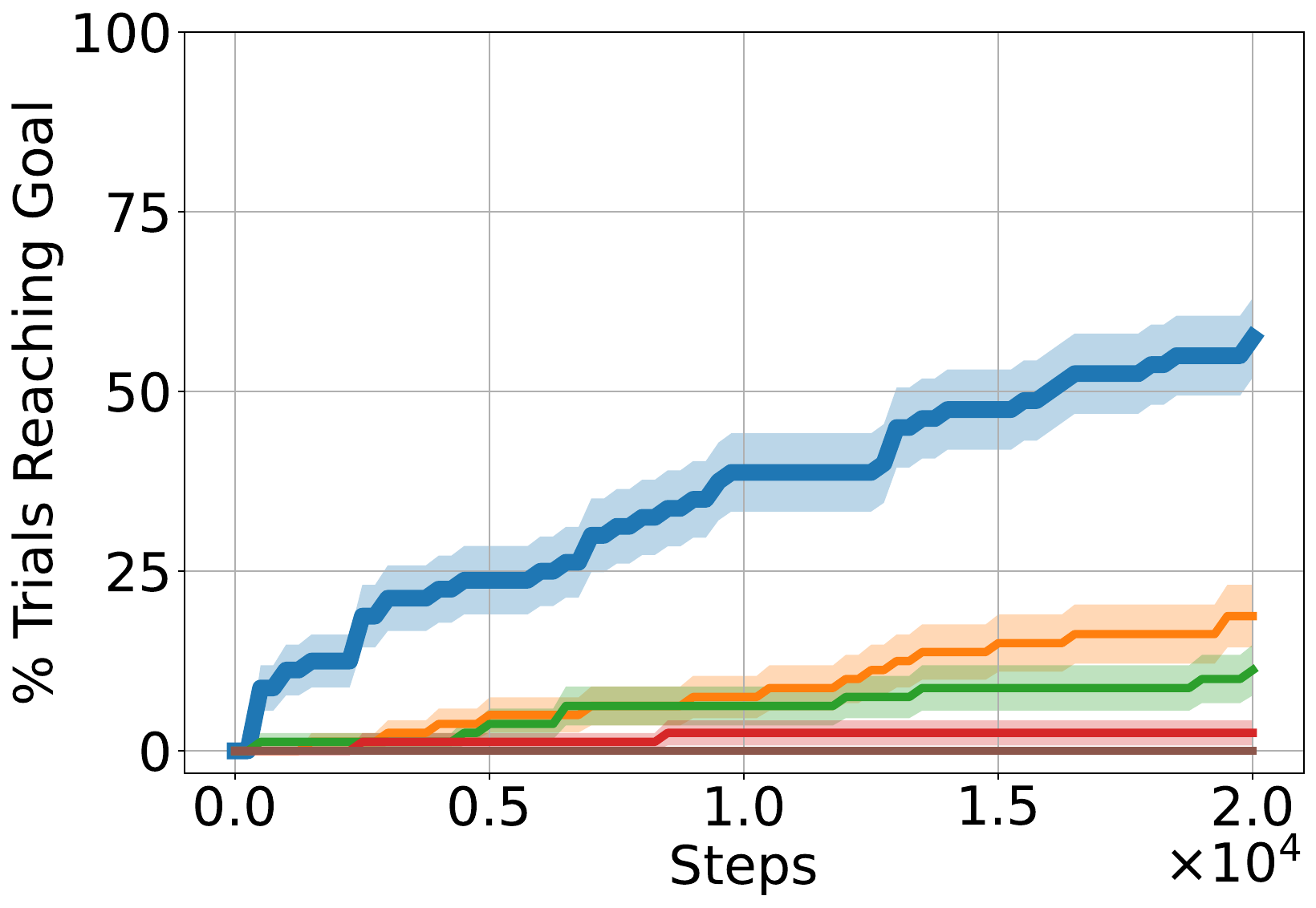}
        \caption{Goal 3}
    \end{subfigure}
    \hfill
    \begin{subfigure}[b]{0.22\textwidth}
        \centering
        \includegraphics[width=\textwidth]{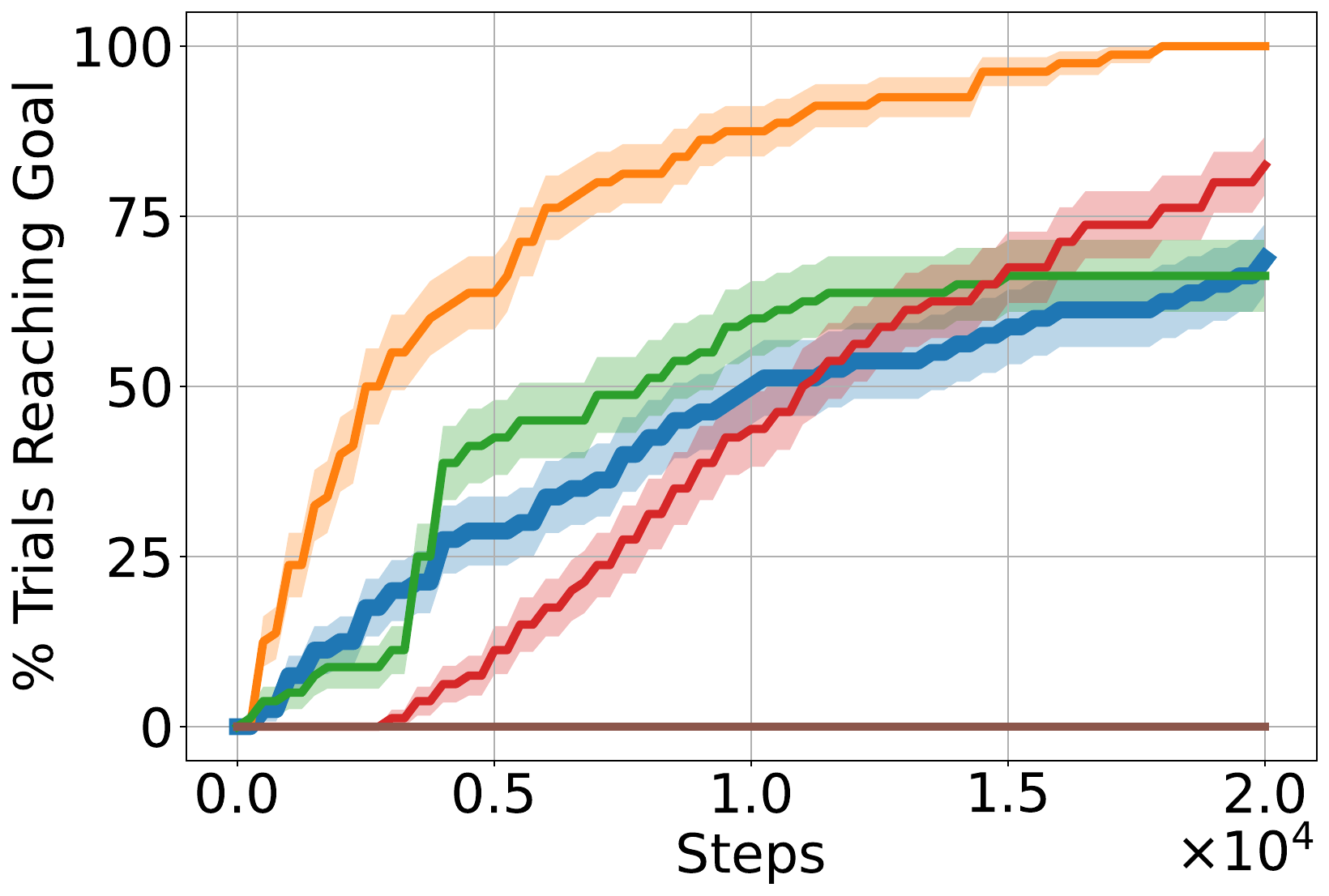}
        \caption{Goal 4}
    \end{subfigure}
    \label{fig:antmaze_medium_individual}
    \caption{Performance on individual goals on Antmaze \texttt{medium}.}
\end{figure}

\begin{figure}[H]
    \centering
    \begin{minipage}[b]{0.35\textwidth}
        \centering
        \includegraphics[width=\linewidth]{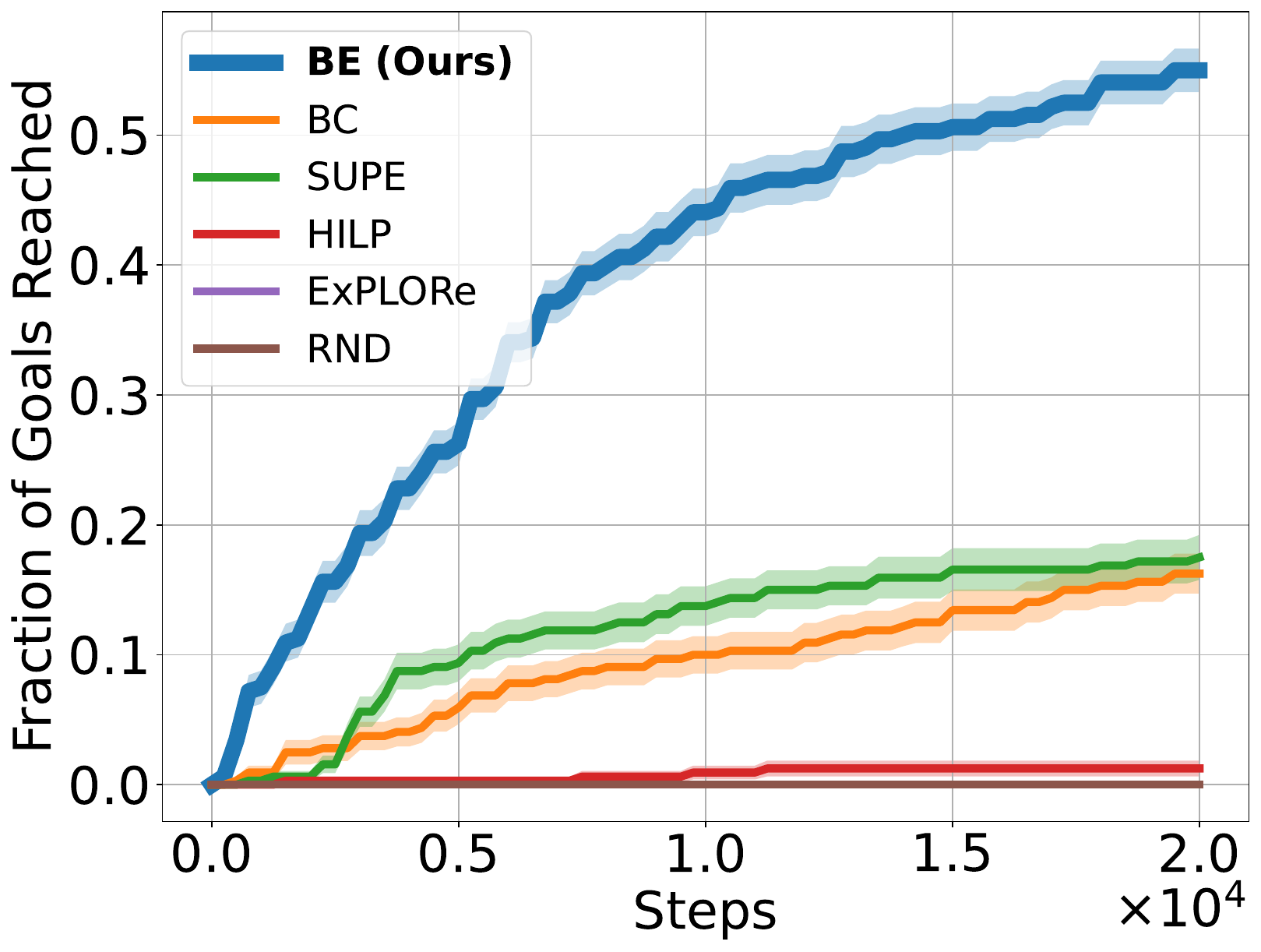}
        \vspace{-2em}
        \label{fig:antmaze_large_goals}
        \caption{Performance on Antmaze \texttt{large}, measuring number of goals reached.}
    \end{minipage}
    \hspace{1em}
    \begin{minipage}[b]{0.35\textwidth}
        \centering
        \includegraphics[width=\linewidth]{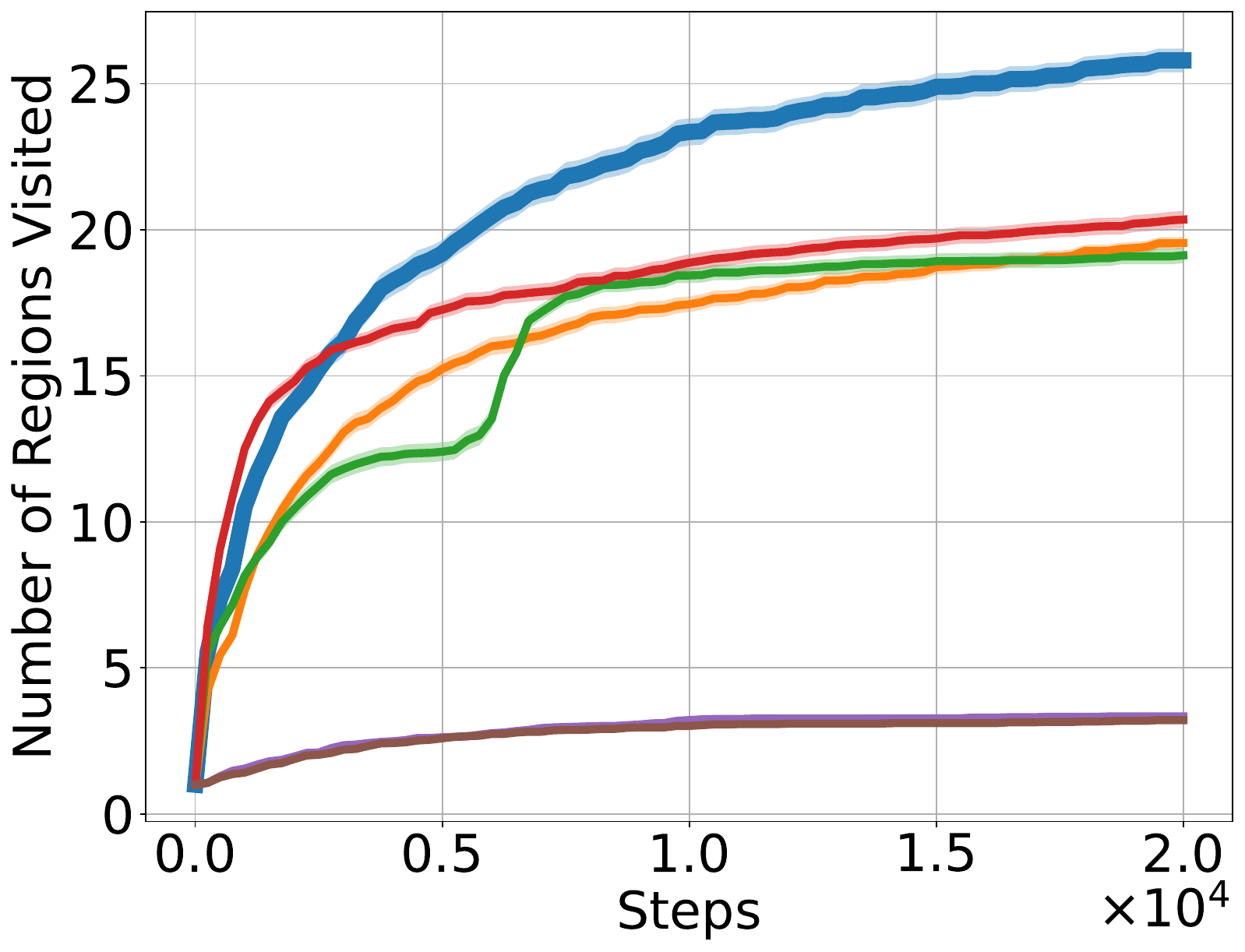}
        \vspace{-2em}
        \label{fig:antmaze_large_regions}
        \caption{Performance on Antmaze \texttt{large}, measuring number of regions reached.}
    \end{minipage}
\end{figure}

\begin{figure}[H]
    \centering
    \begin{subfigure}[b]{0.22\textwidth}
        \centering
        \includegraphics[width=\textwidth]{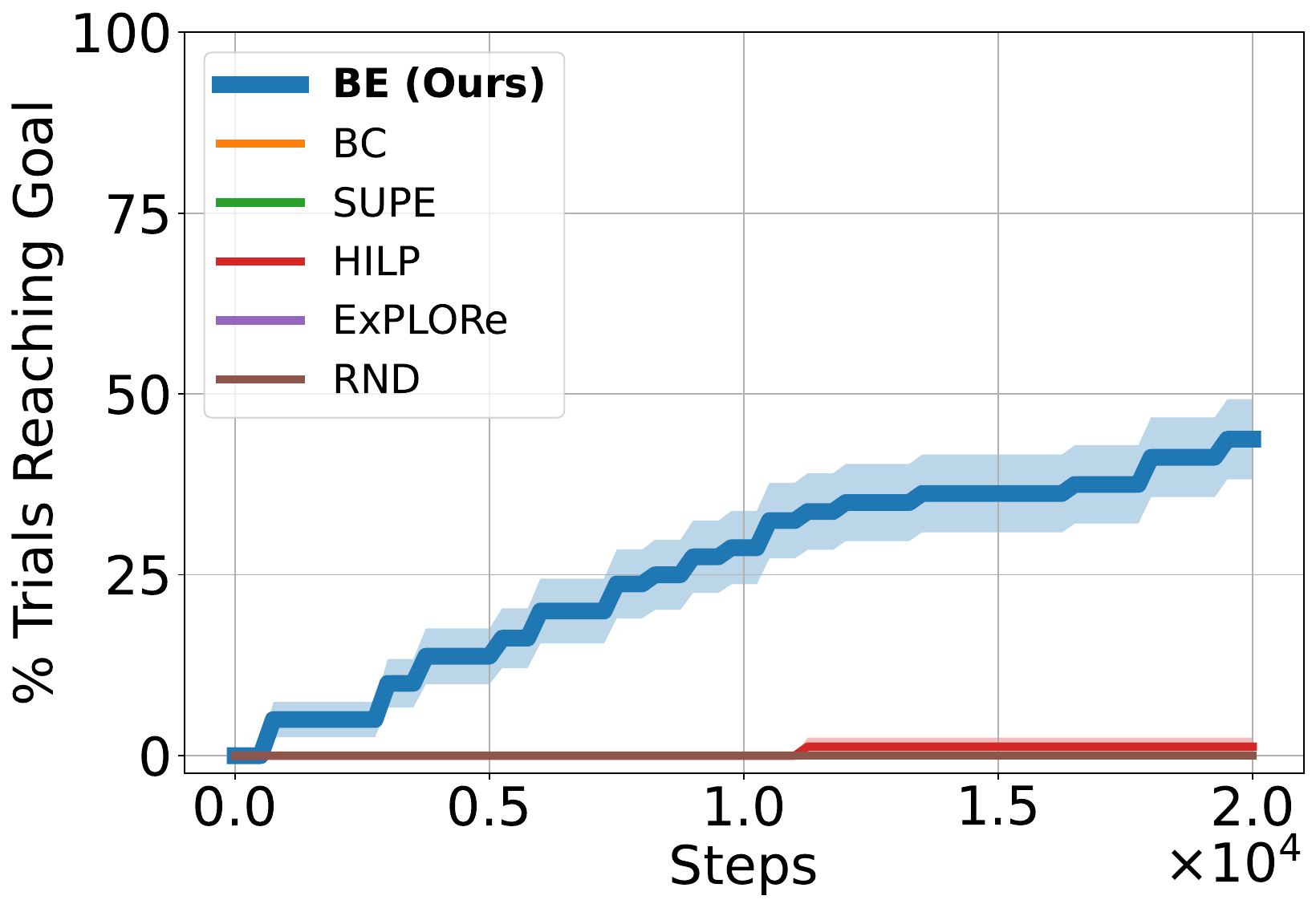}
        \caption{Goal 1}
    \end{subfigure}
    \hfill
    \begin{subfigure}[b]{0.22\textwidth}
        \centering
        \includegraphics[width=\textwidth]{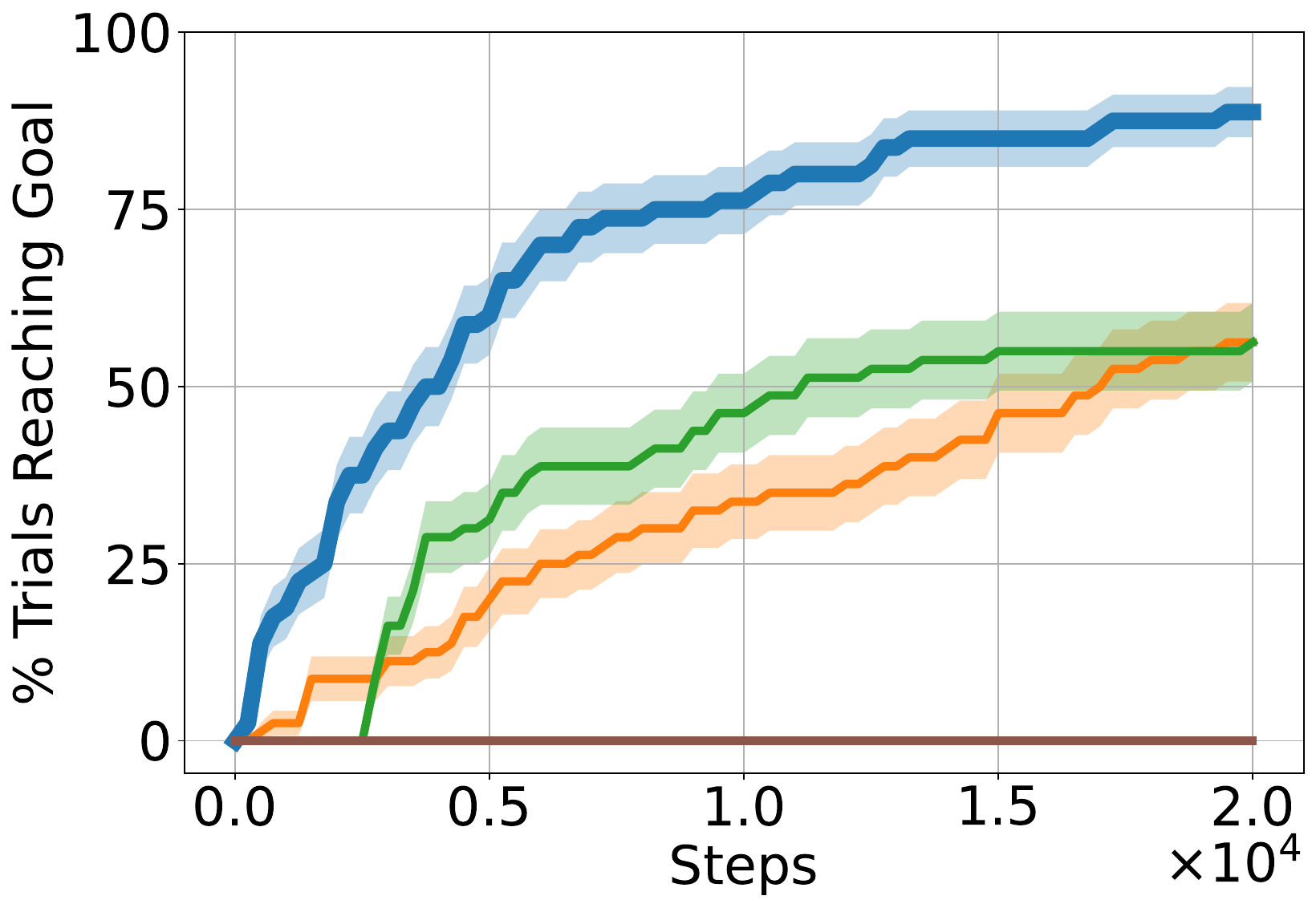}
        \caption{Goal 2}
    \end{subfigure}
    \hfill
    \begin{subfigure}[b]{0.22\textwidth}
        \centering
        \includegraphics[width=\textwidth]{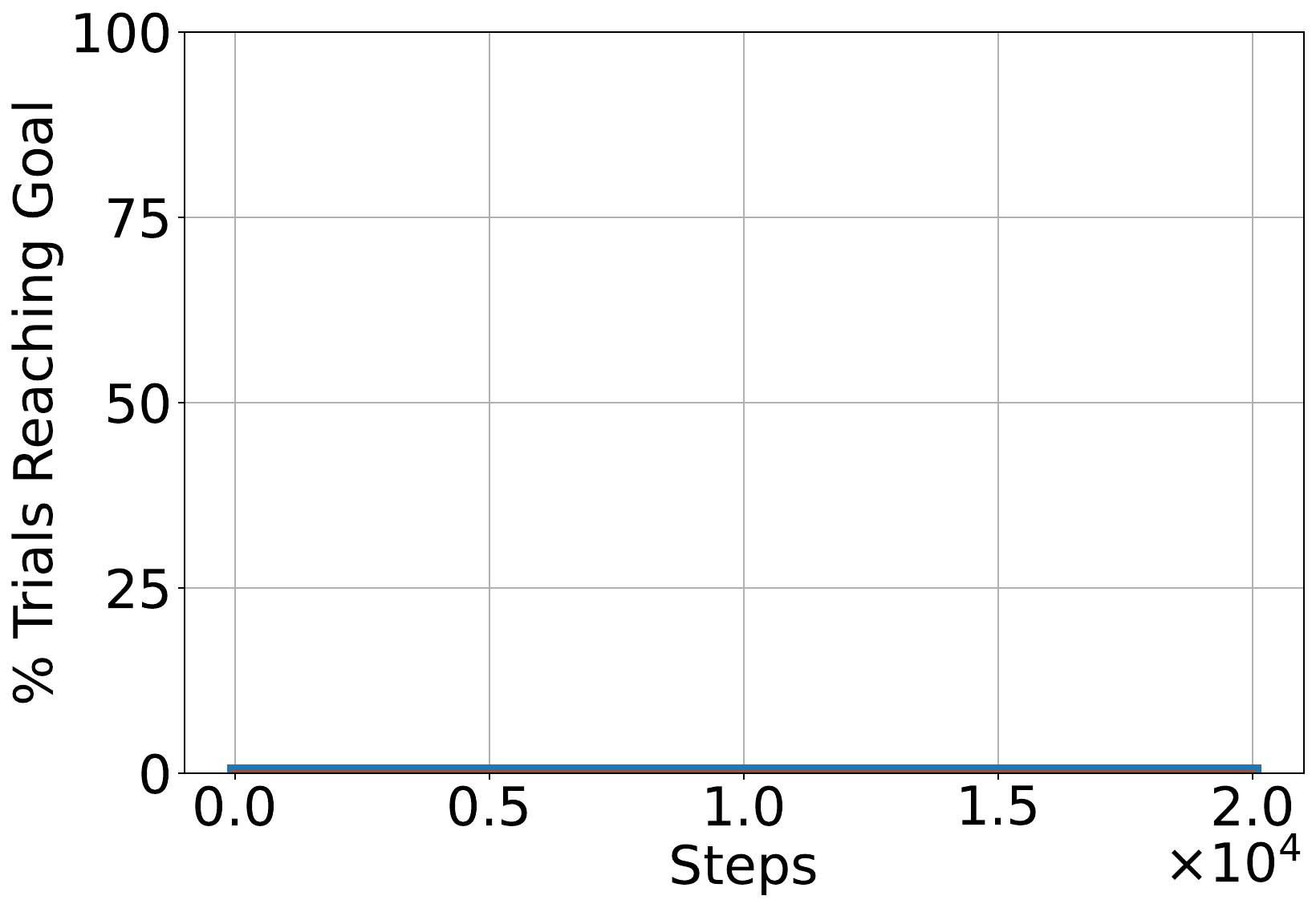}
        \caption{Goal 3}
    \end{subfigure}
    \hfill
    \begin{subfigure}[b]{0.22\textwidth}
        \centering
        \includegraphics[width=\textwidth]{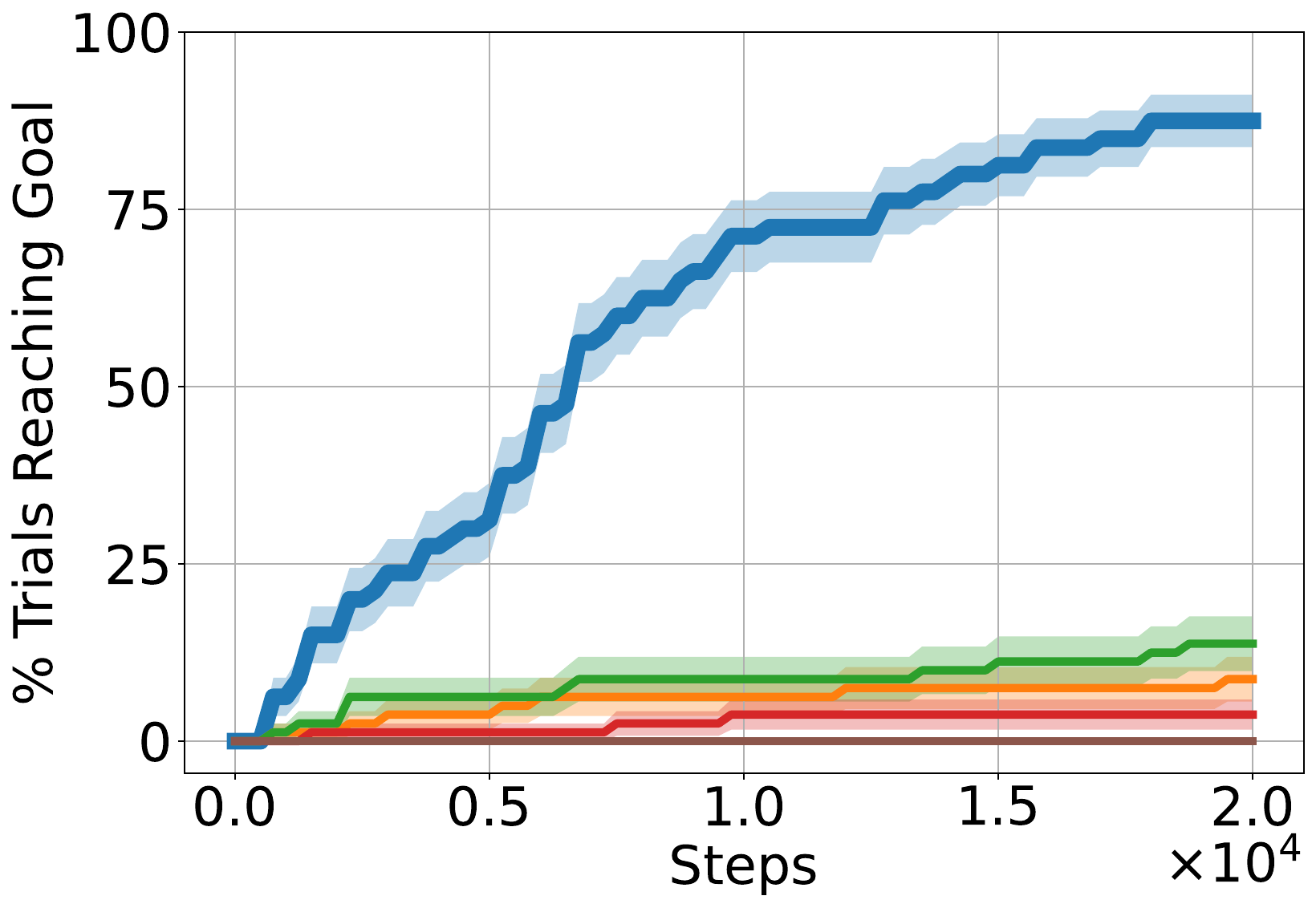}
        \caption{Goal 4}
    \end{subfigure}
    \caption{Performance on individual goals on Antmaze \texttt{large}.}
    \label{fig:antmaze_large_individual}
\end{figure}

\subsection{Libero Experiments}\label{sec:app_libero}

\begin{table}[H]
\caption{
\footnotesize
Libero tasks for scenes in \Cref{fig:libero_example}.
}
\begin{center}
\scalebox{0.9}
{
\begin{tabular}{llll}
    \toprule
    \textbf{Scene} & \textbf{Task Commands} \\
    \midrule
Upper Left & pick up the alphabet soup and put it in the tray \\
& pick up the butter and put it in the tray \\
& pick up the cream cheese and put it in the tray \\
& pick up the ketchup and put it in the tray \\
& pick up the tomato sauce and put it in the tray \\
    \midrule
Upper Right & close the top drawer of the cabinet \\
& put the black bowl in the top drawer of the cabinet \\
& put the black bowl on the plate \\
& put the black bowl on top of the cabinet\\
& put the ketchup in the top drawer of the cabinet \\
    \midrule
Lower Left & put the frying pan on the stove \\
& put the moka pot on the stove \\
& turn on the stove \\
& turn on the stove and put the frying pan on it \\
    \midrule
Lower Right & pick up the book and place it in the front compartment of the caddy \\
& pick up the book and place it in the left compartment of the caddy \\
& pick up the book and place it in the right compartment of the caddy \\
& pick up the red mug and place it to the right of the caddy \\
& pick up the white mug and place it to the right of the caddy \\
    \bottomrule
\end{tabular}
}
\end{center}
\end{table}

For the hidden-task evaluation, for both our approach and \bc we train the model without task conditioning, but we do provide a scene conditioning vector for each (a one-hot 21-dimensional vector indicating which of the 21 scenes the agent is operating in). For each task, we run with a horizon of 300 steps, and utilize the environment's built-in success detector.  
We condition all methods on the timestep, proprioceptive state, and both camera views (using a ResNet-18 encoder to encode images). For the task-conditioned evaluation, we use a one-hot encoding for each task. We set $\bphi(s) = \cos(A s_{\mathrm{proprio}} + b)$, where $s_{\mathrm{proprio}} \in \bbR^9$ denotes the proprioceptive state, and $A \in \bbR^{16 \times 9}$ and $b \in \bbR^{16}$ are randomly initialized parameters. At deployment, for \algname, we sample a single state from each of the last three episodes and repeat them each for 1/3 of the total history length.

For \algname and \bc, we swept over the stated hyperparameters, and include the results for the best-performing ones. For \supe we utilize the default hyperparameters for the vision-based experiments from \citet{wilcoxson2024leveraging}. For \bc with action noise, at every step we sample noise $w \sim \cN(0, 0.15 \cdot I)$ which we add to the action computed by the \bc policy. For \bc with history conditioning, we utilize a history length of 100 (the same as that utilized by \algname) and otherwise use the same hyperparameters as standard \bc. 

\Cref{fig:libero_no_task} and \ref{fig:libero_task} provide results averaged across all 90 tasks, and we state the final success rates for these evaluations in \Cref{tab:libero_results}. Please see
\Cref{fig:libero_notask_individual1}-\ref{fig:libero_task_individual2} for individualized results.

\begin{table}[H]
\caption{
\footnotesize
Final success rate from \Cref{fig:libero_no_task} and \ref{fig:libero_task}.
}
\begin{center}
\scalebox{0.9}
{
\begin{tabular}{llllllll}
    \toprule
     & \algname & \bc & \bc (Action Noise) & \bc (History) & \bc (Random Task) & \supe \\
    \midrule
Task hidden & \textbf{0.824} {\footnotesize $\pm$ 0.010} & 0.590 {\footnotesize $\pm$ 0.013} & 0.684 {\footnotesize $\pm$ 0.013} & 0.643 {\footnotesize $\pm$ 0.013} & 0.359 {\footnotesize $\pm$ 0.013} & 0.267 {\footnotesize $\pm$ 0.047}\\
Task provided & \textbf{0.993} {\footnotesize $\pm$ 0.002} & 0.977 {\footnotesize $\pm$ 0.004} & 0.950 {\footnotesize $\pm$ 0.006} & 0.804 {\footnotesize $\pm$ 0.011} & - & -\\
    \bottomrule
\end{tabular}
}
\end{center}
\label{tab:libero_results}
\end{table}

\begin{table}[H]
\caption{
\footnotesize
Hyperparameters for Libero \algname and \bc models.
}
\begin{center}
\scalebox{0.9}
{
\begin{tabular}{llll}
    \toprule
    \textbf{Hyperparameter} & \algname & \bc \\
    \midrule
Batch size & 150 & 150 \\
Action chunk size & 20 & 10 \\
Image encoder & ResNet-34 & ResNet-34 \\
Hidden size & 256 & 256 \\
Number of Heads & 8 & 8 \\
Number of Layers & 4 & 4 \\
Feedforward dimension & 512 & 512 \\
Coverage future length & 200 & - \\
Context history length & 100 & -\\
    \bottomrule
\end{tabular}
}
\end{center}
\end{table}

\begin{figure}[H]
    \centering
    \begin{subfigure}[b]{0.19\textwidth}
        \centering
        \includegraphics[width=\textwidth]{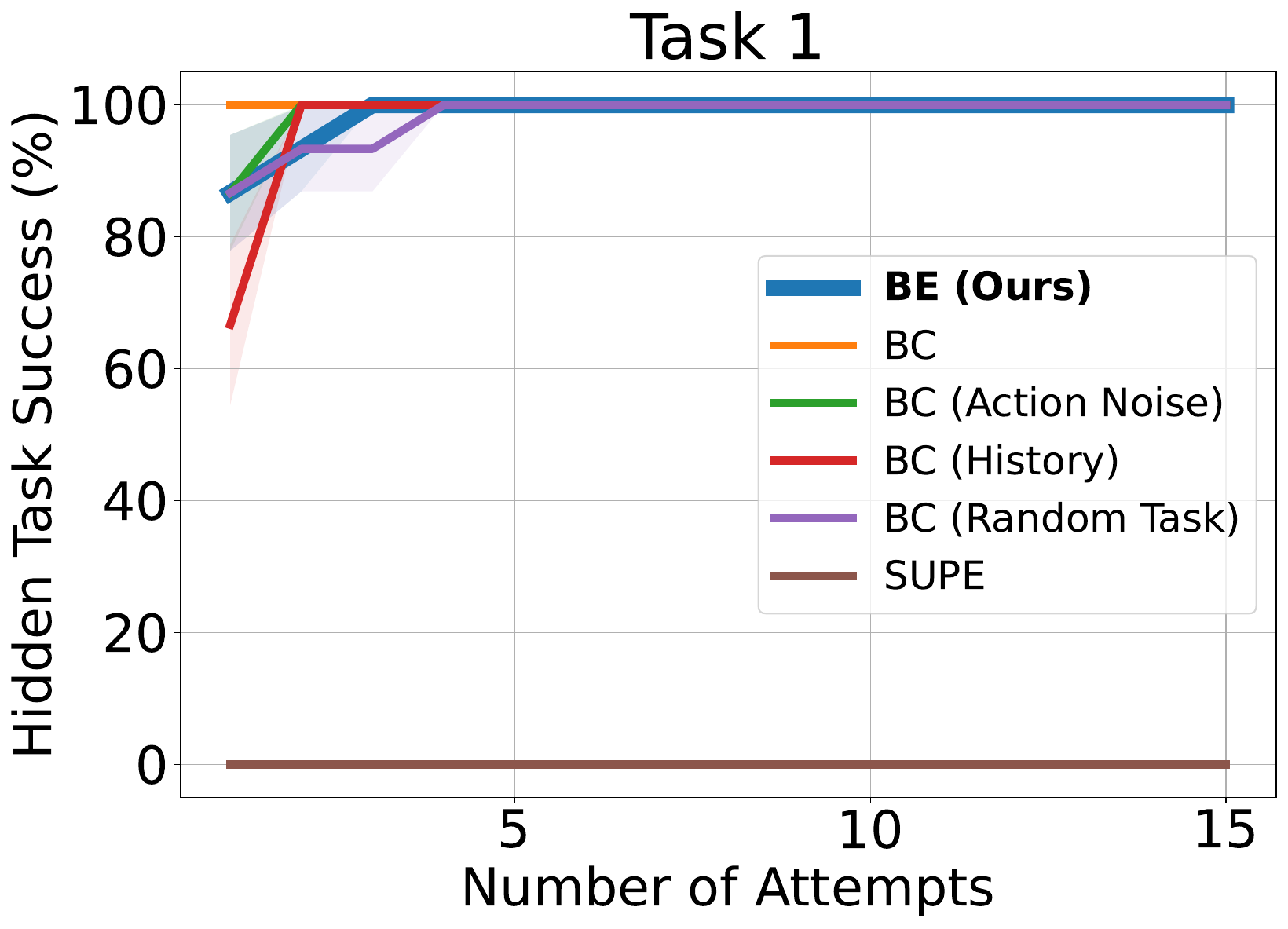}
    \end{subfigure}
    \hfill
    \begin{subfigure}[b]{0.19\textwidth}
        \centering
        \includegraphics[width=\textwidth]{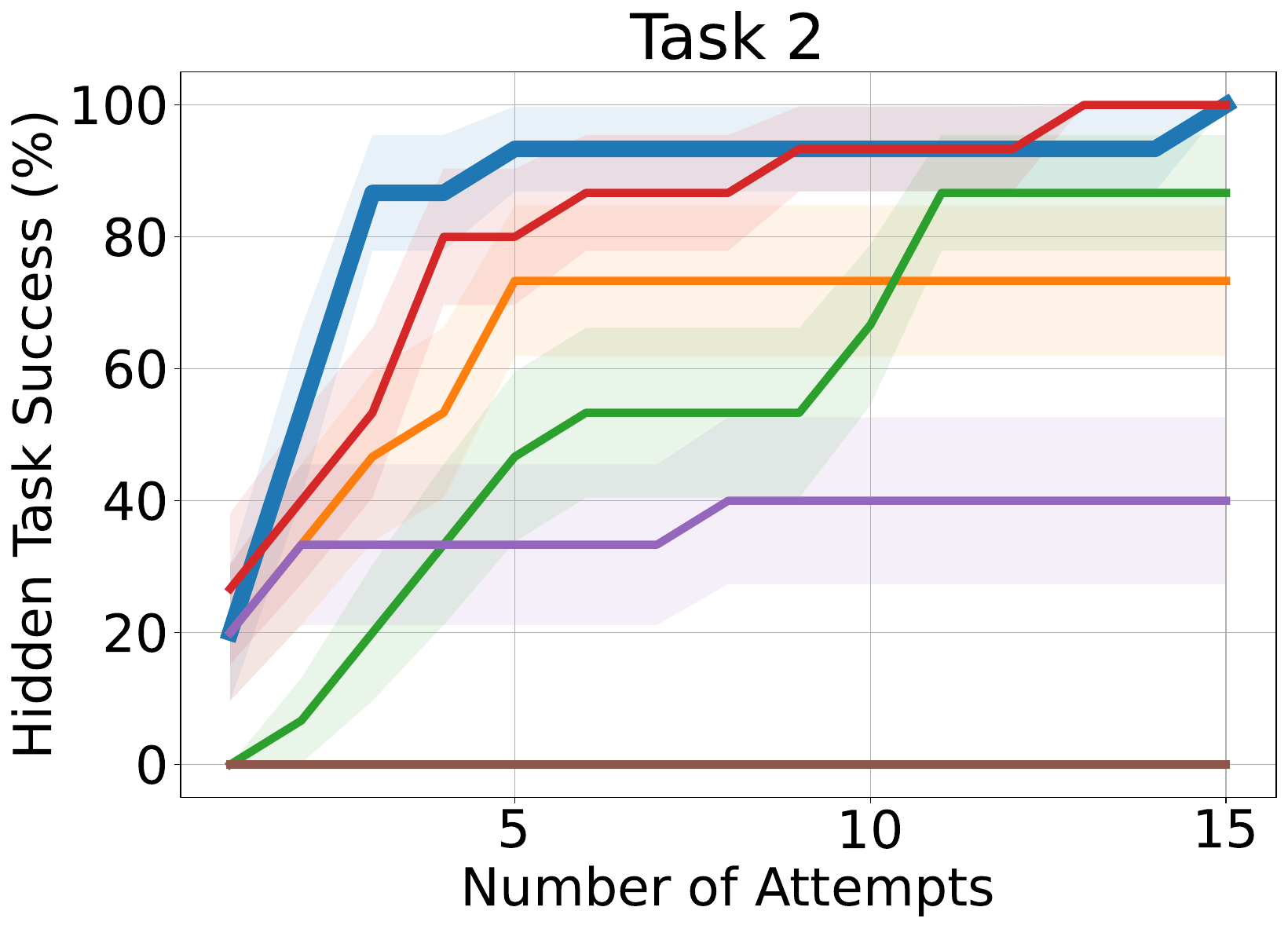}
    \end{subfigure}
    \hfill
    \begin{subfigure}[b]{0.19\textwidth}
        \centering
        \includegraphics[width=\textwidth]{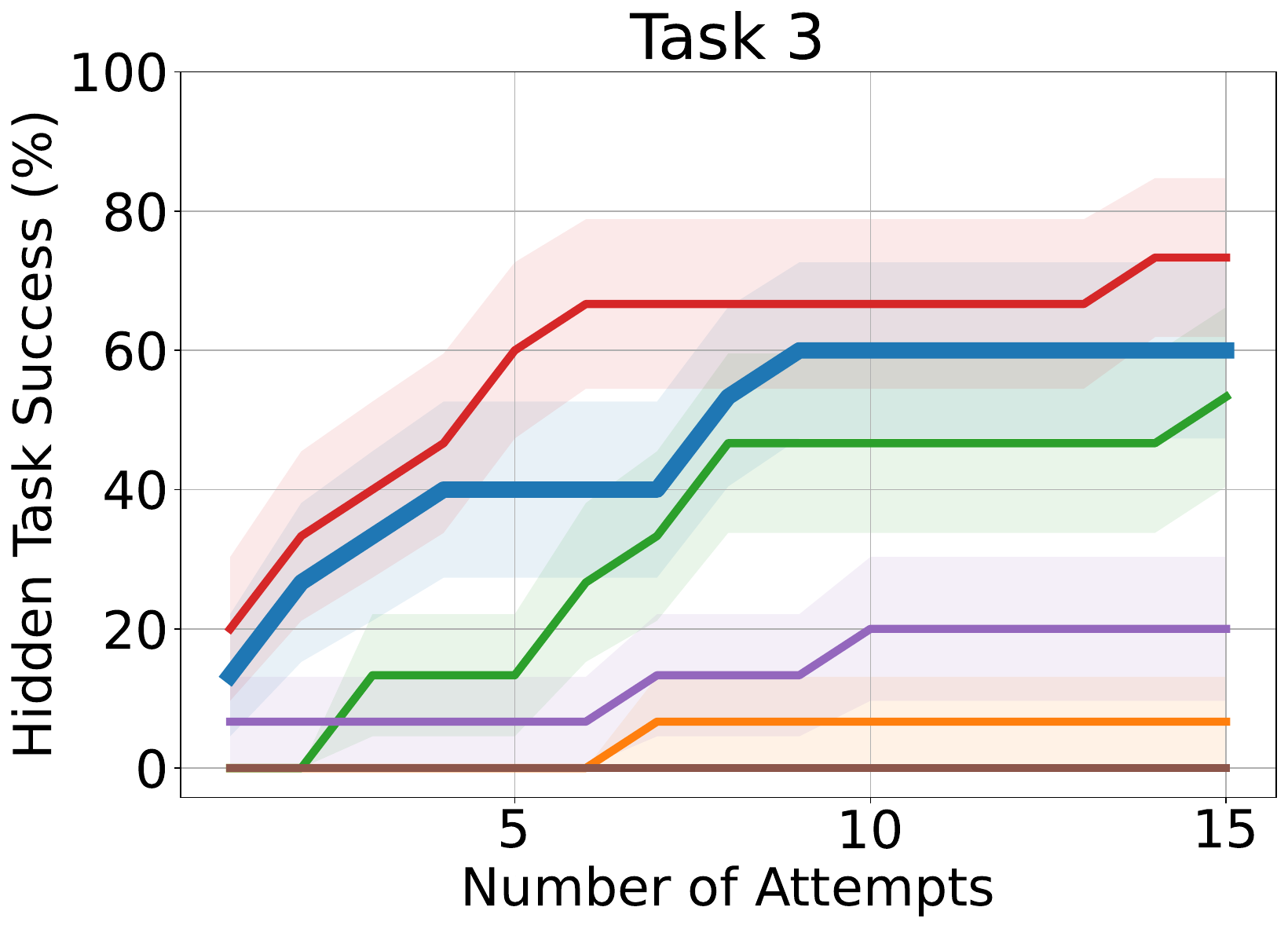}
    \end{subfigure}
    \hfill
    \begin{subfigure}[b]{0.19\textwidth}
        \centering
        \includegraphics[width=\textwidth]{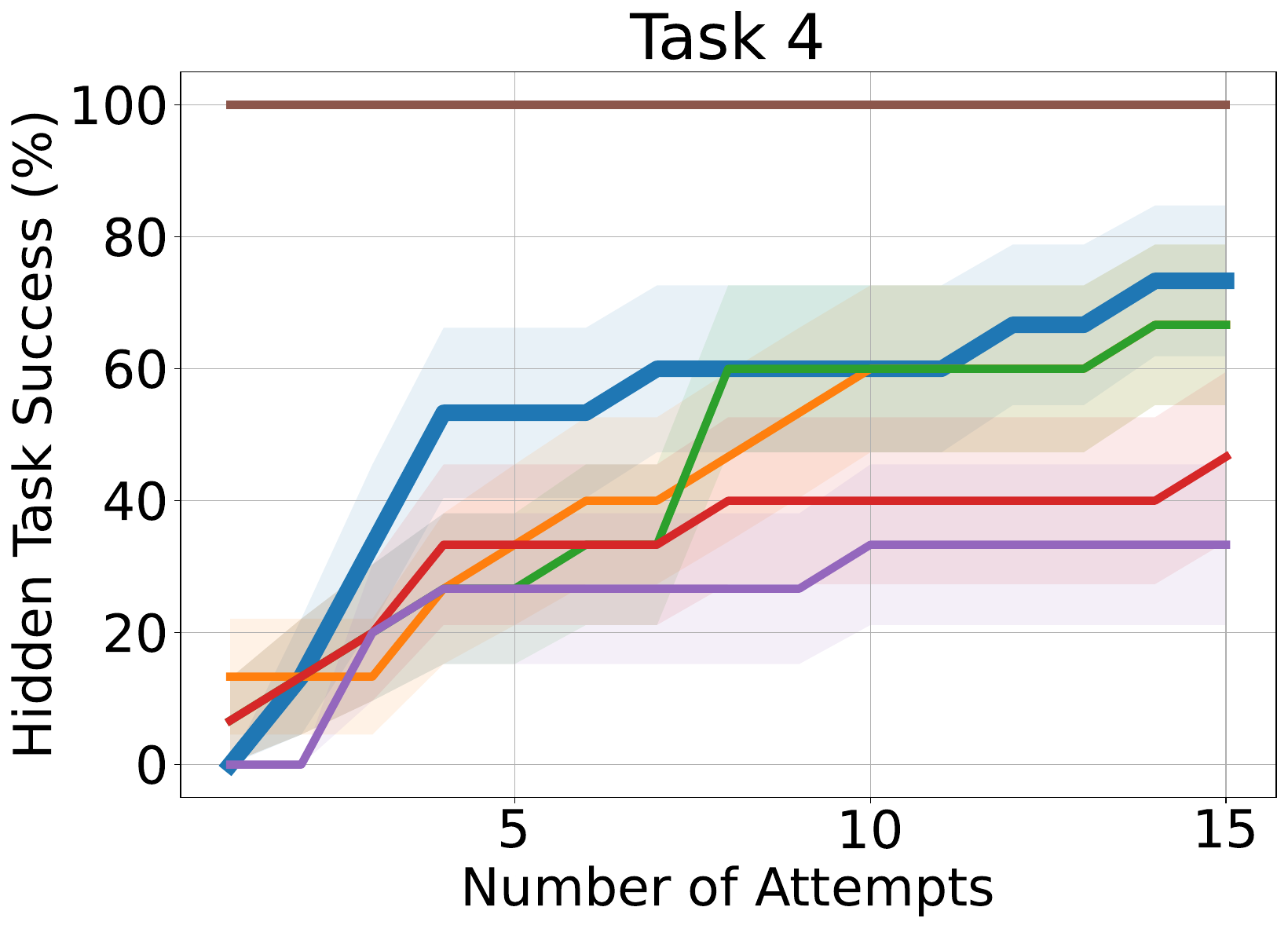}
    \end{subfigure}
    \begin{subfigure}[b]{0.19\textwidth}
        \centering
        \includegraphics[width=\textwidth]{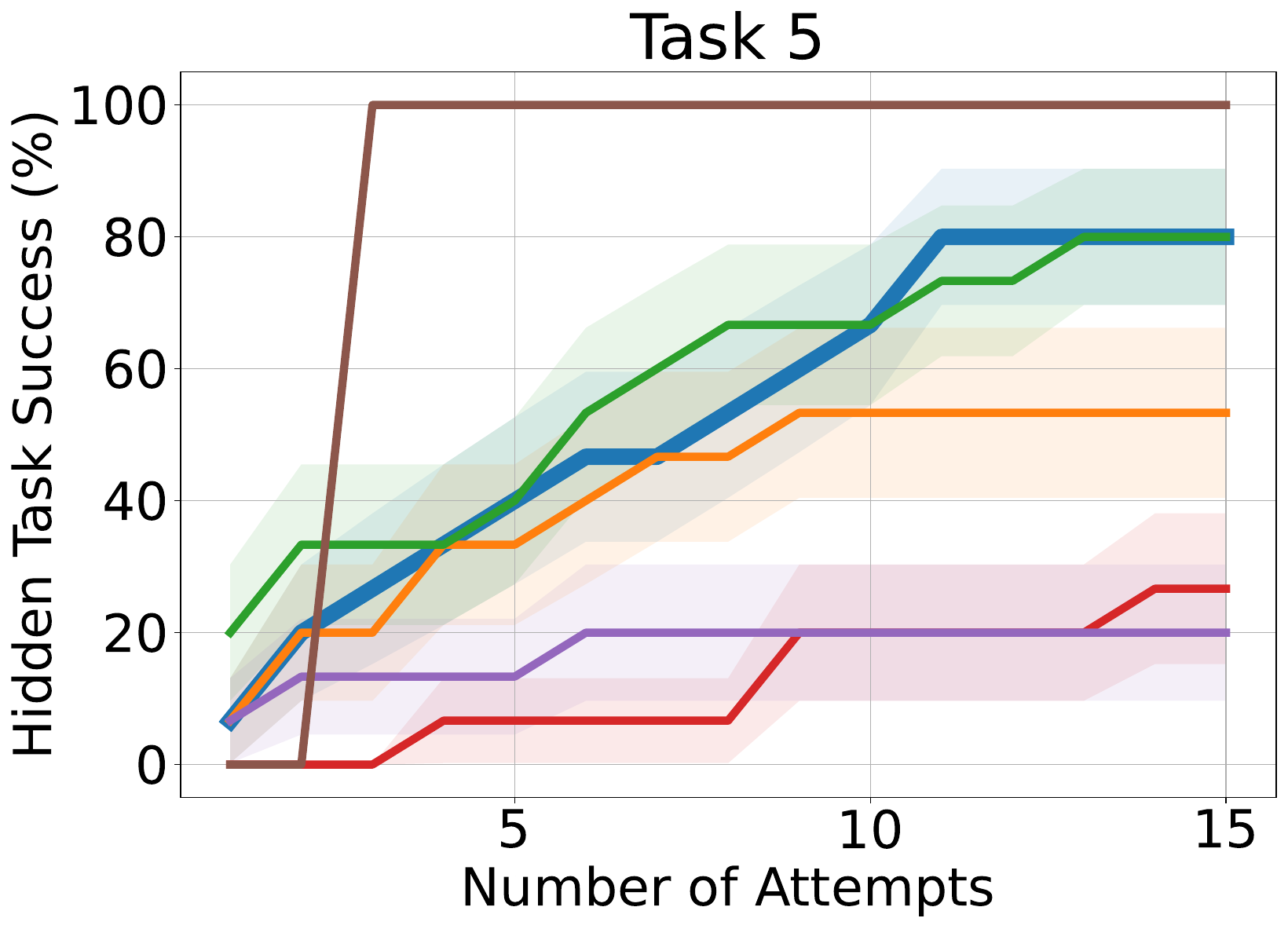}
    \end{subfigure}
    
     \begin{subfigure}[b]{0.19\textwidth}
        \centering
        \includegraphics[width=\textwidth]{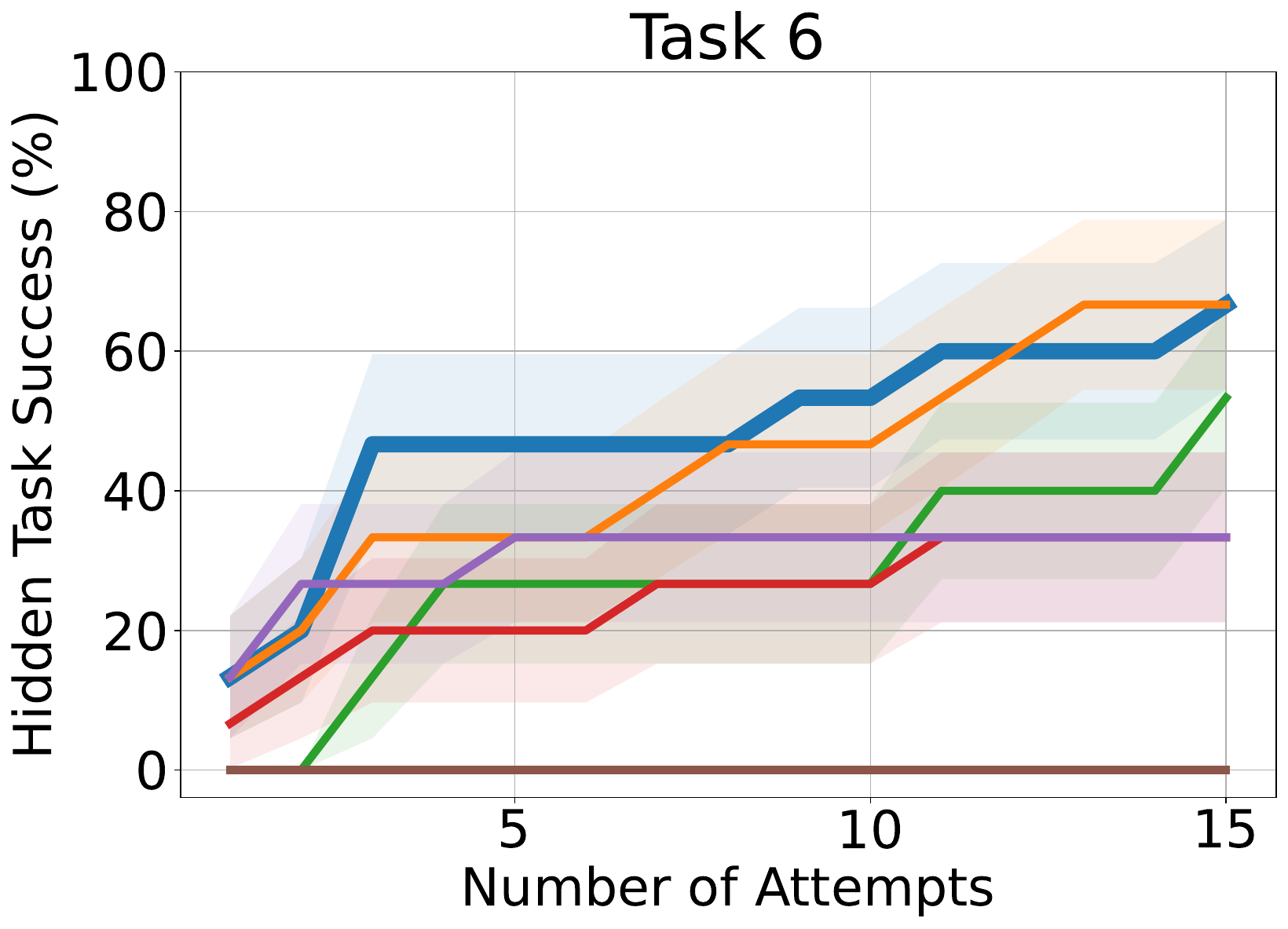}
    \end{subfigure}
    \hfill
    \begin{subfigure}[b]{0.19\textwidth}
        \centering
        \includegraphics[width=\textwidth]{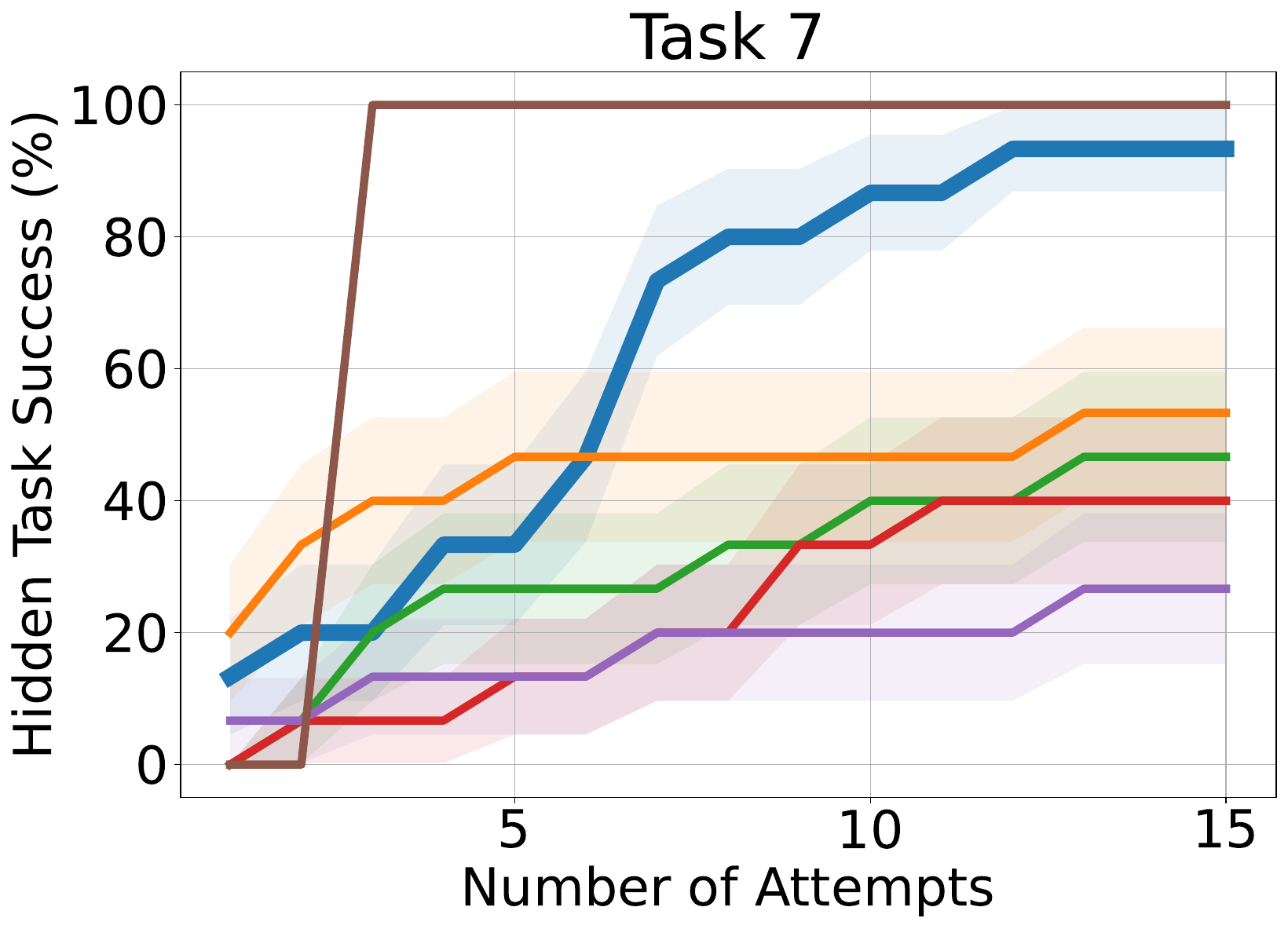}
    \end{subfigure}
    \hfill
    \begin{subfigure}[b]{0.19\textwidth}
        \centering
        \includegraphics[width=\textwidth]{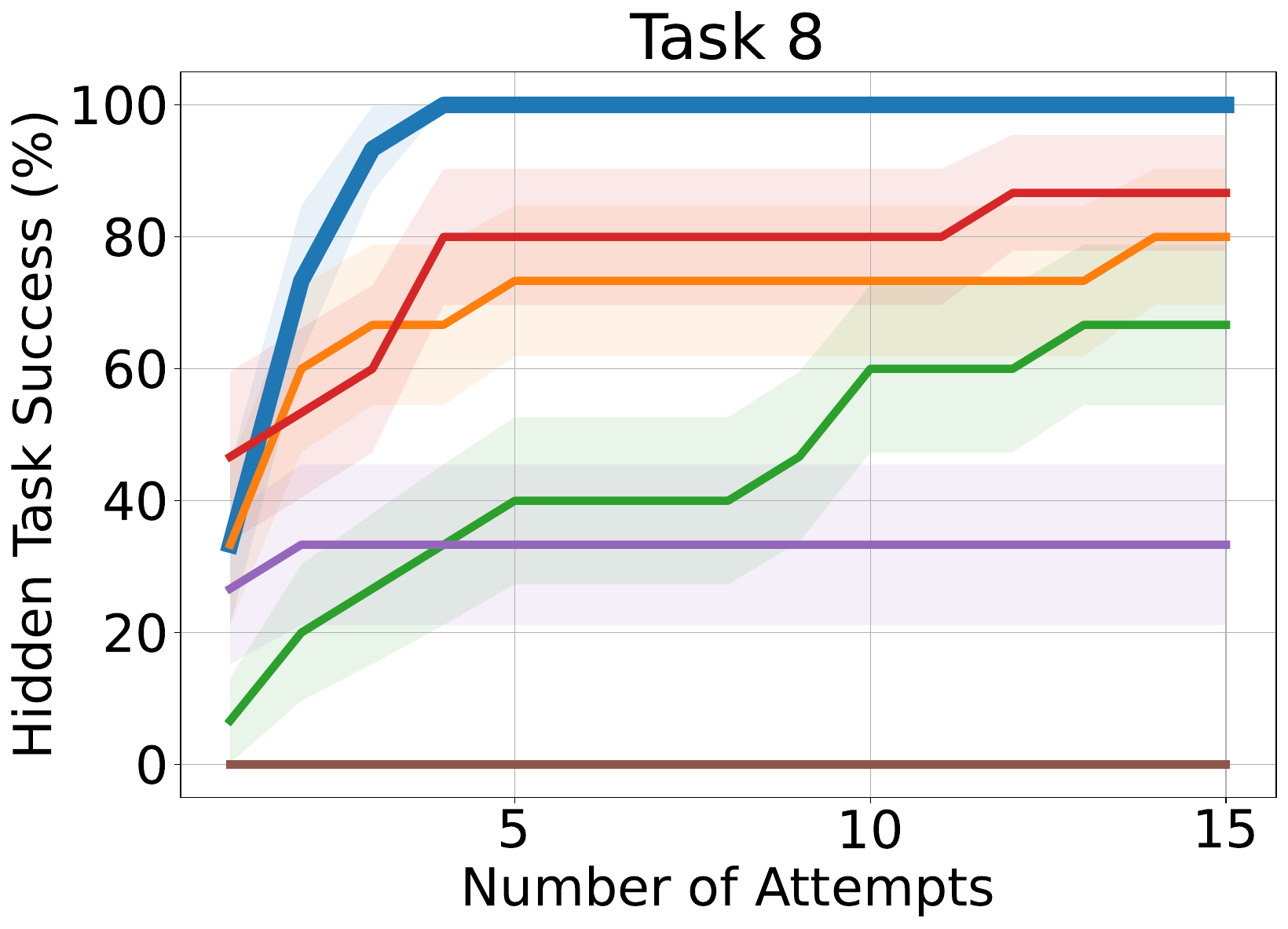}
    \end{subfigure}
    \hfill
    \begin{subfigure}[b]{0.19\textwidth}
        \centering
        \includegraphics[width=\textwidth]{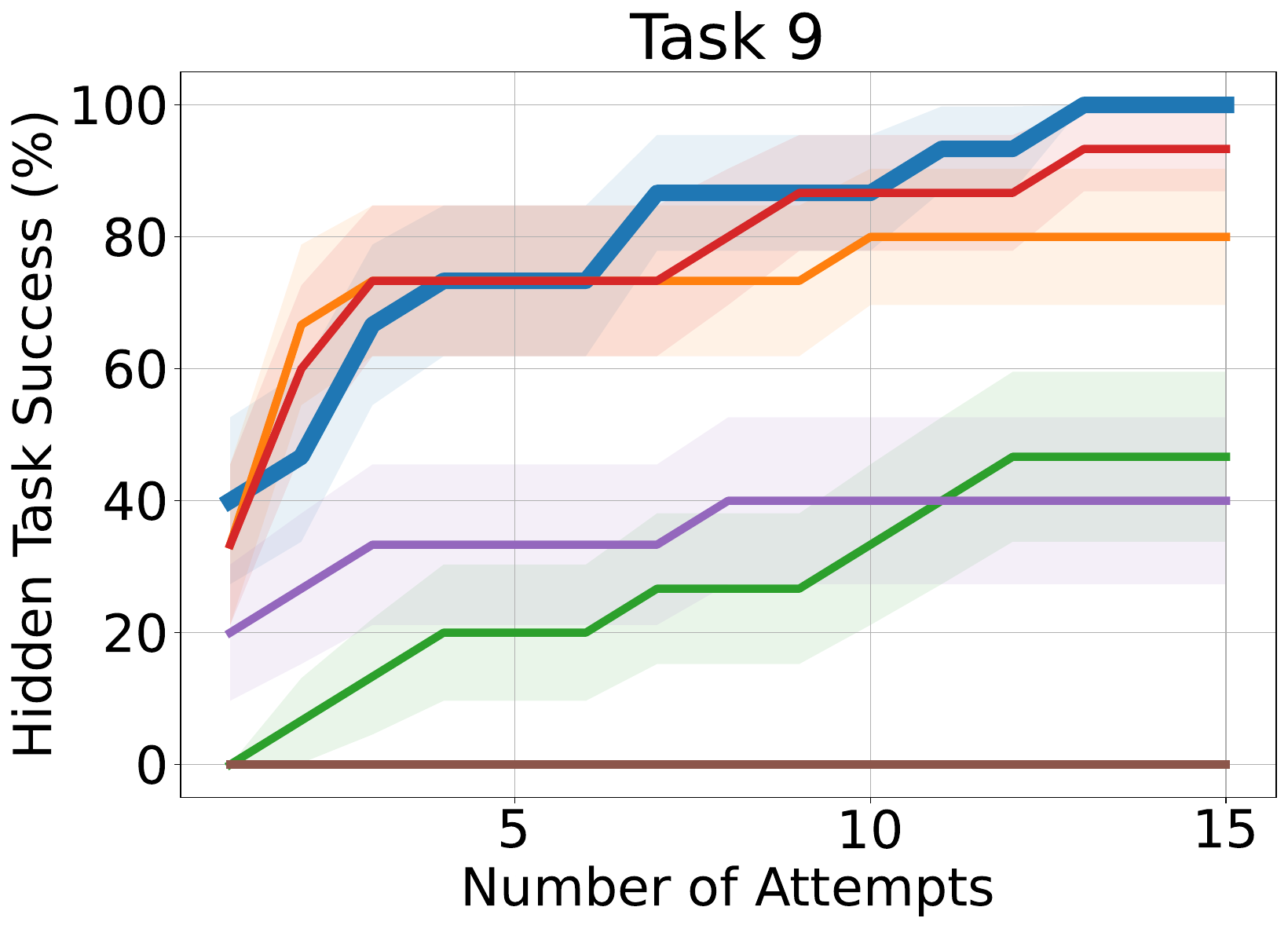}
    \end{subfigure}
    \begin{subfigure}[b]{0.19\textwidth}
        \centering
        \includegraphics[width=\textwidth]{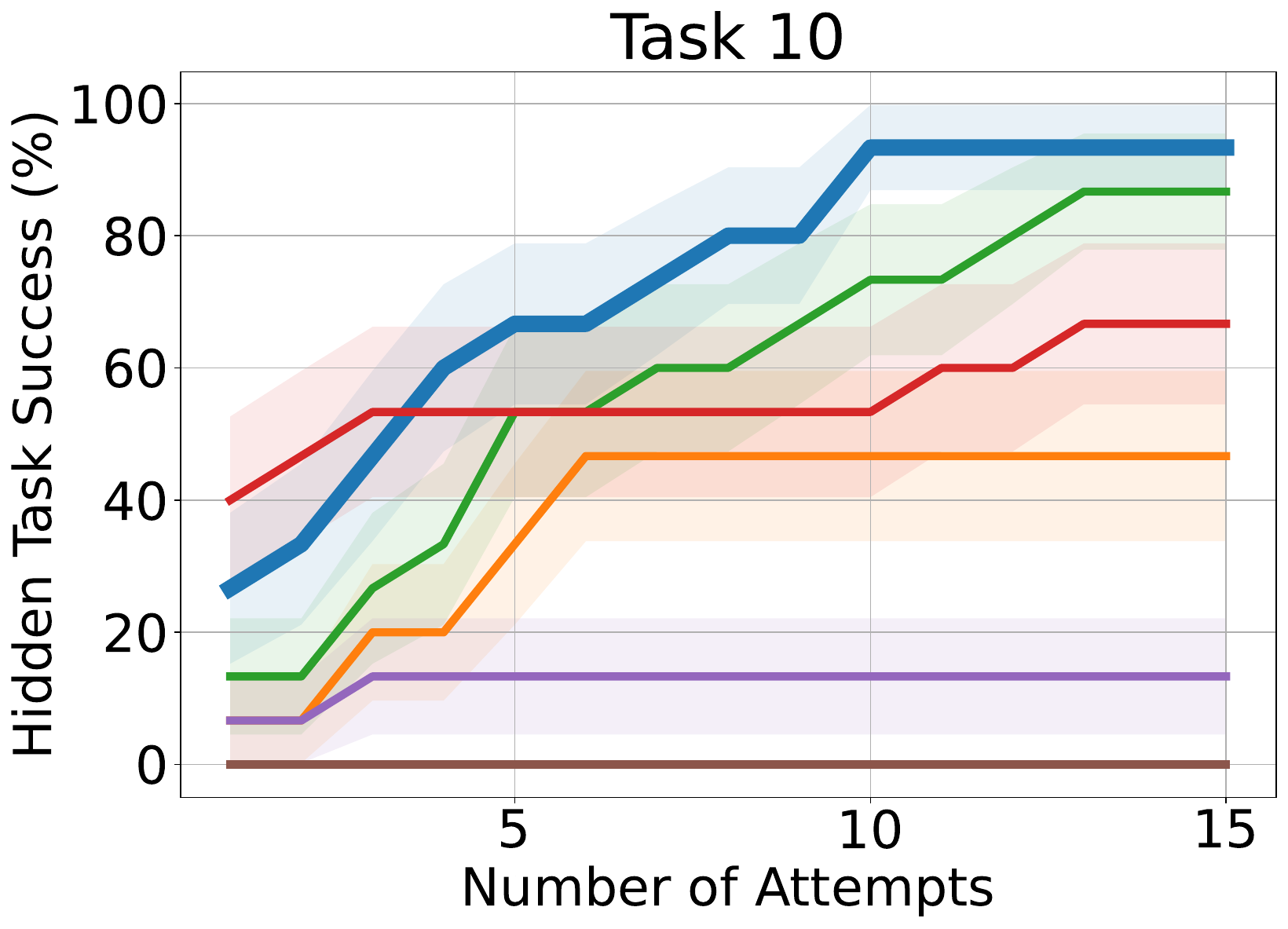}
    \end{subfigure}
    
     \begin{subfigure}[b]{0.19\textwidth}
        \centering
        \includegraphics[width=\textwidth]{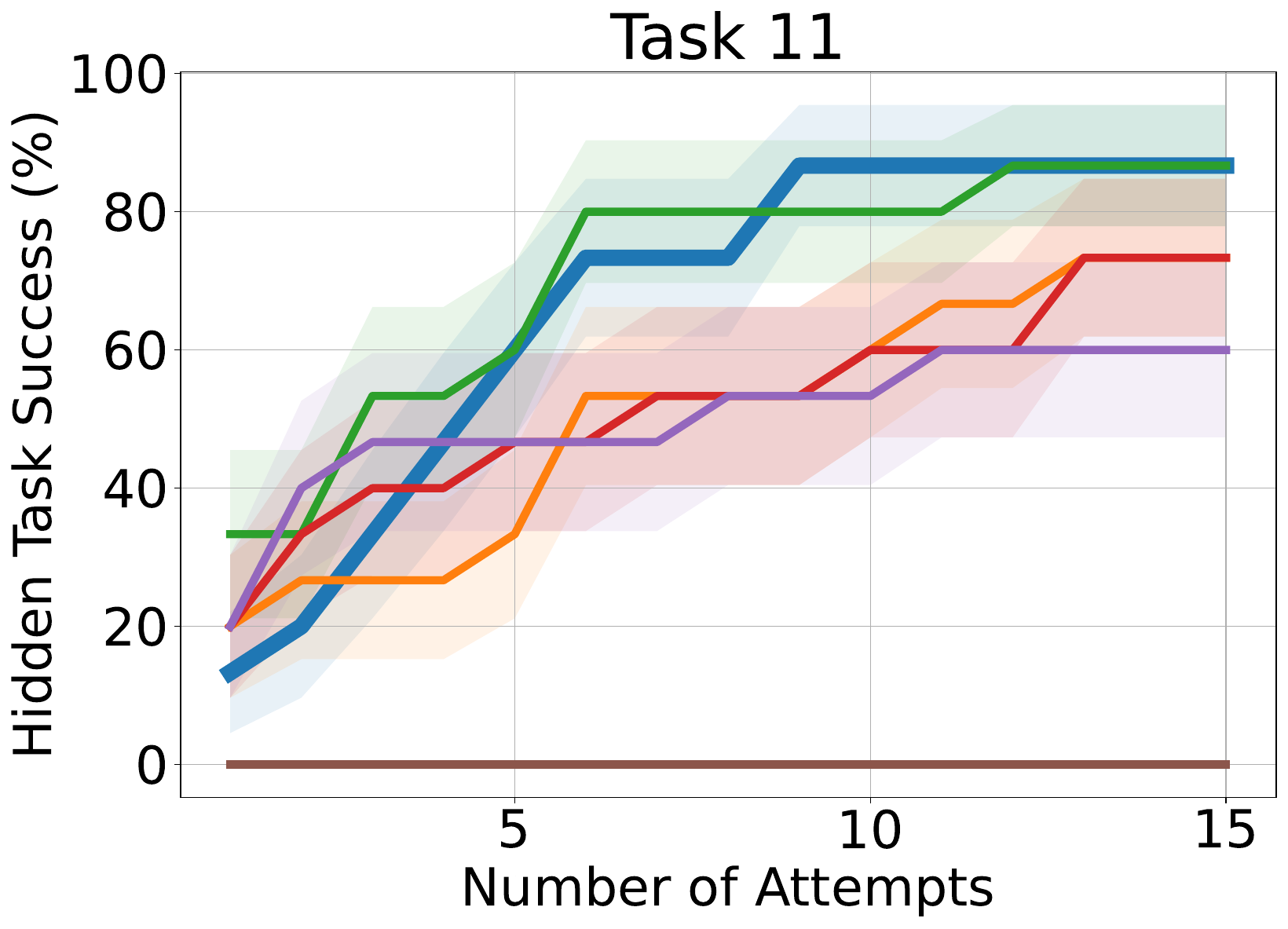}
    \end{subfigure}
    \hfill
    \begin{subfigure}[b]{0.19\textwidth}
        \centering
        \includegraphics[width=\textwidth]{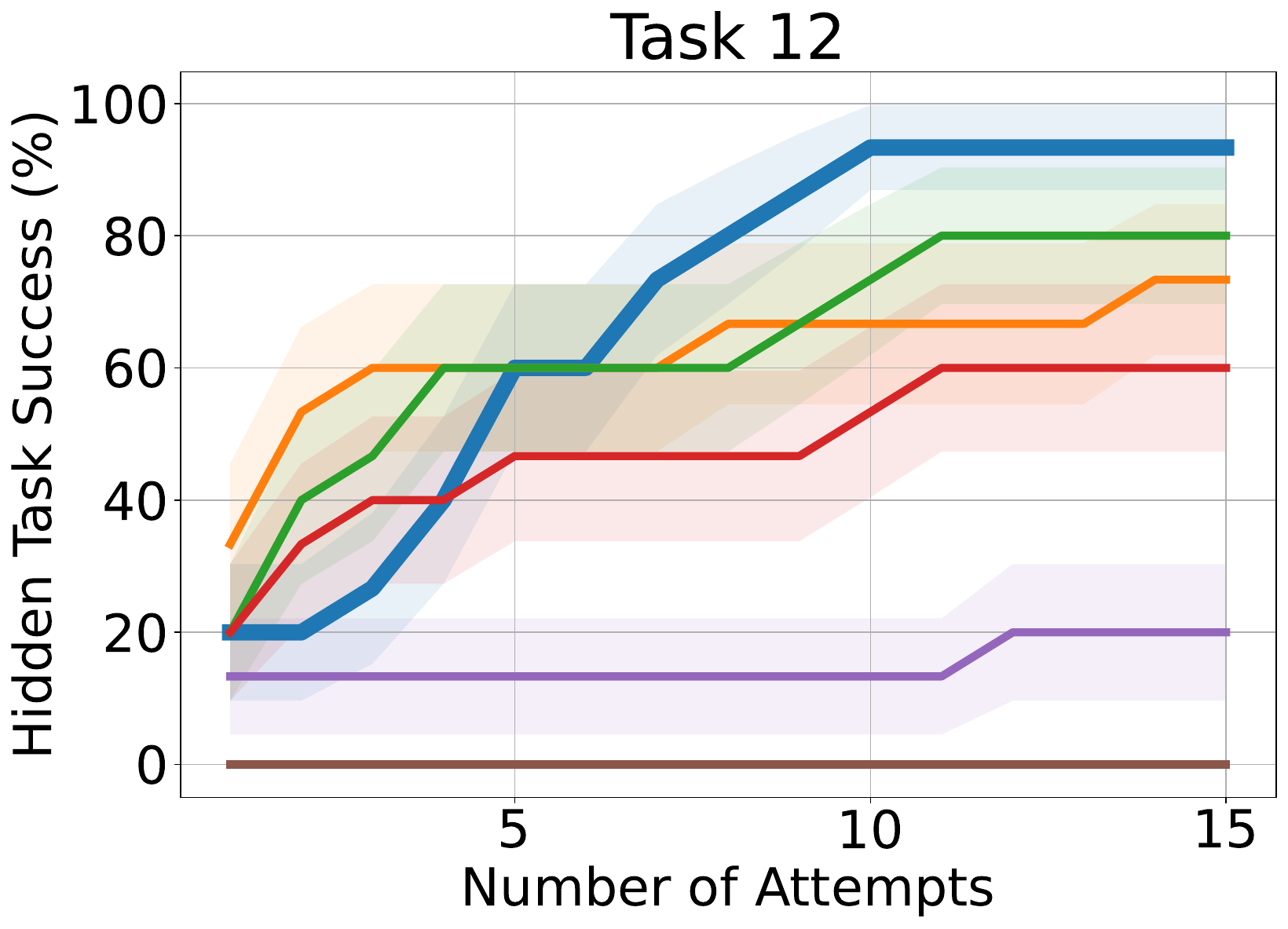}
    \end{subfigure}
    \hfill
    \begin{subfigure}[b]{0.19\textwidth}
        \centering
        \includegraphics[width=\textwidth]{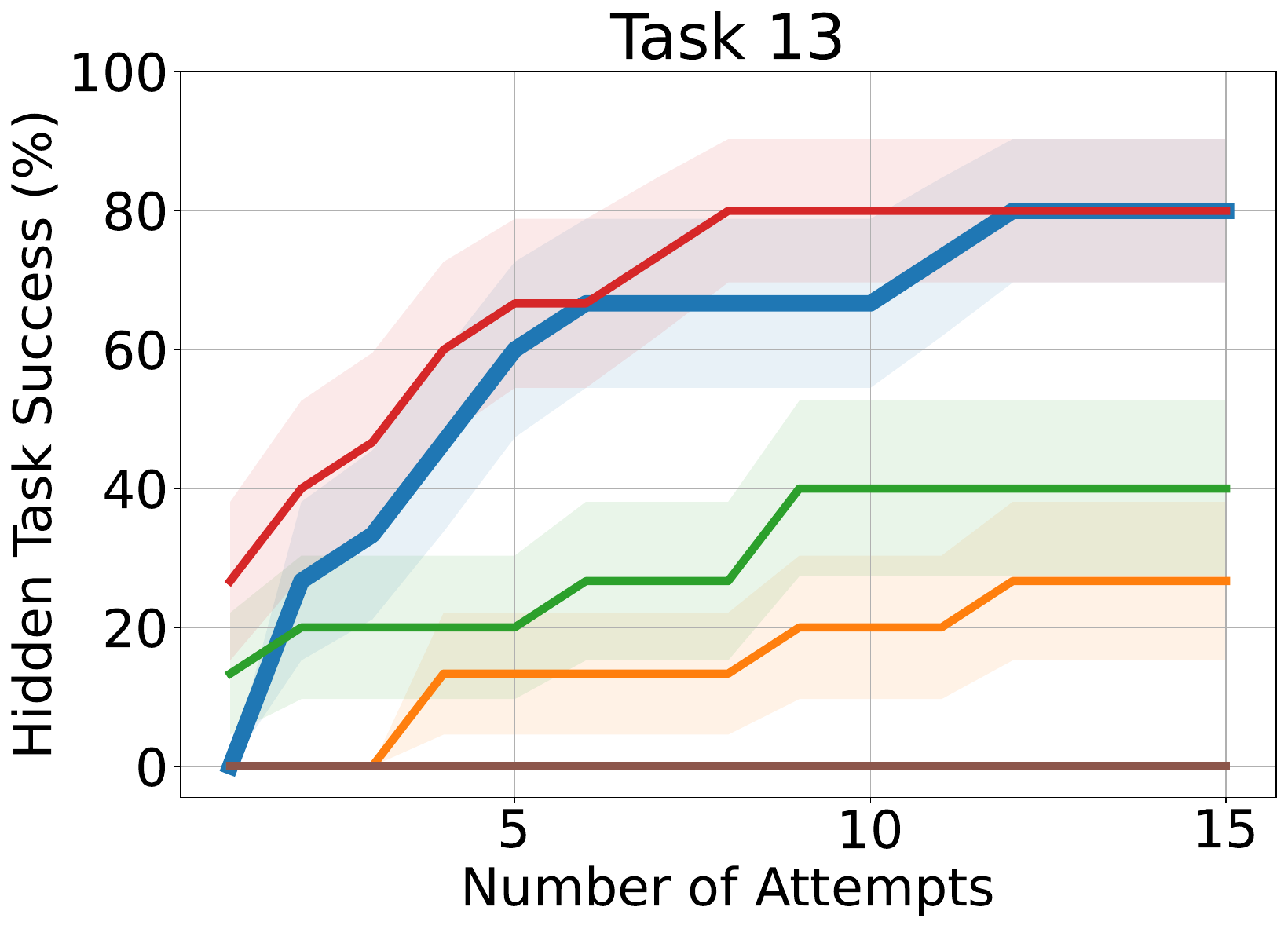}
    \end{subfigure}
    \hfill
    \begin{subfigure}[b]{0.19\textwidth}
        \centering
        \includegraphics[width=\textwidth]{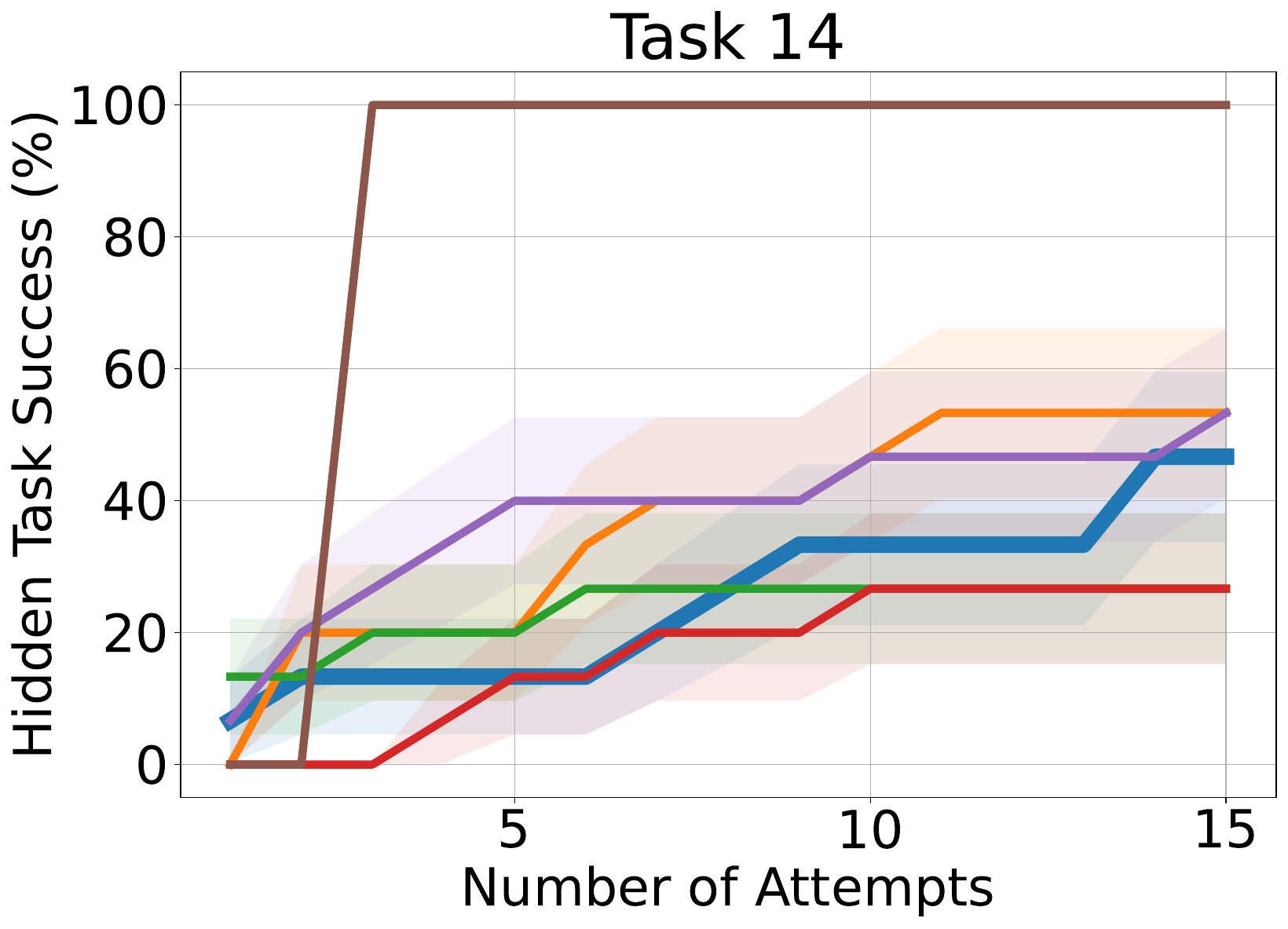}
    \end{subfigure}
    \begin{subfigure}[b]{0.19\textwidth}
        \centering
        \includegraphics[width=\textwidth]{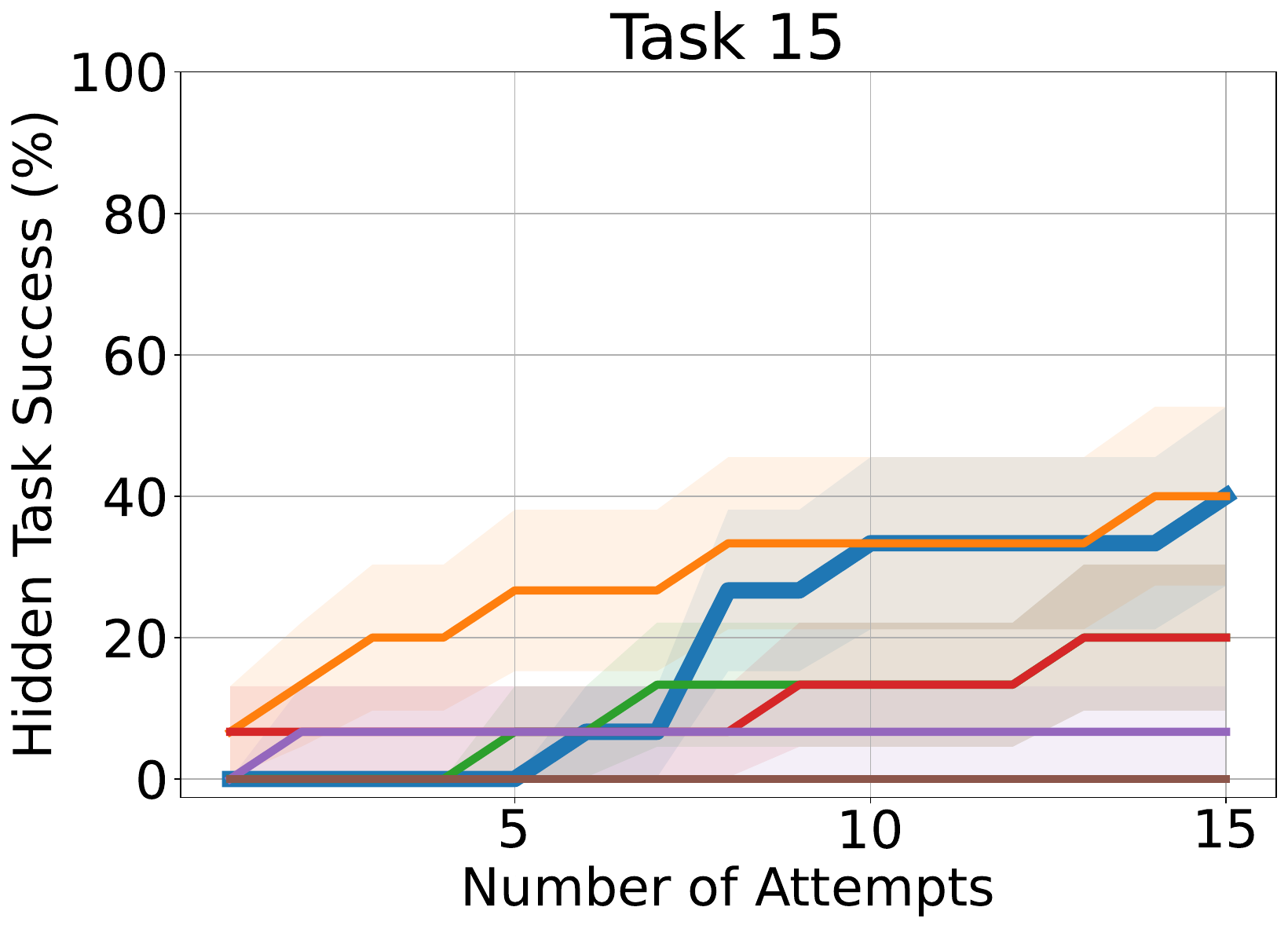}
    \end{subfigure}
    
     \begin{subfigure}[b]{0.19\textwidth}
        \centering
        \includegraphics[width=\textwidth]{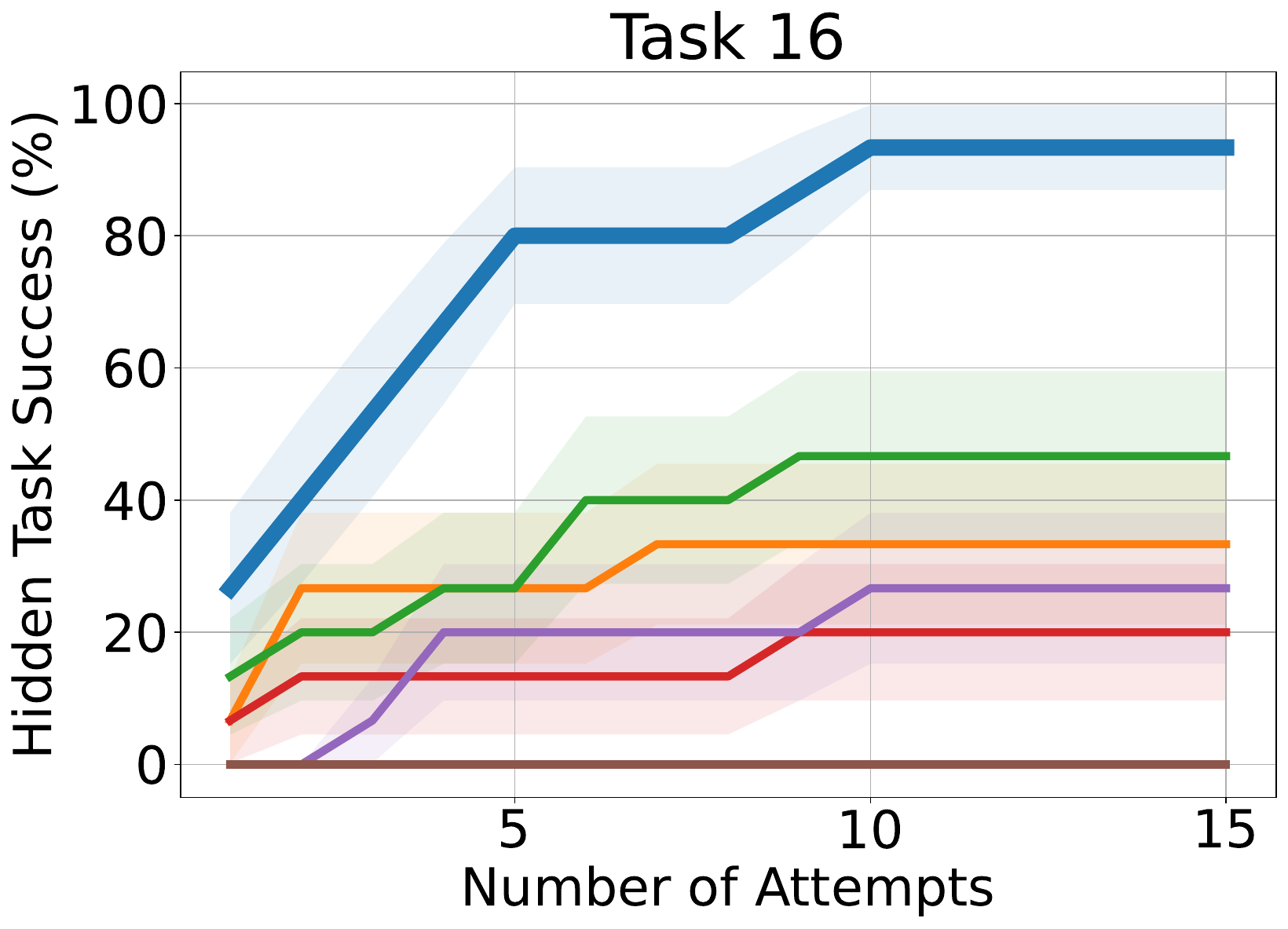}
    \end{subfigure}
    \hfill
    \begin{subfigure}[b]{0.19\textwidth}
        \centering
        \includegraphics[width=\textwidth]{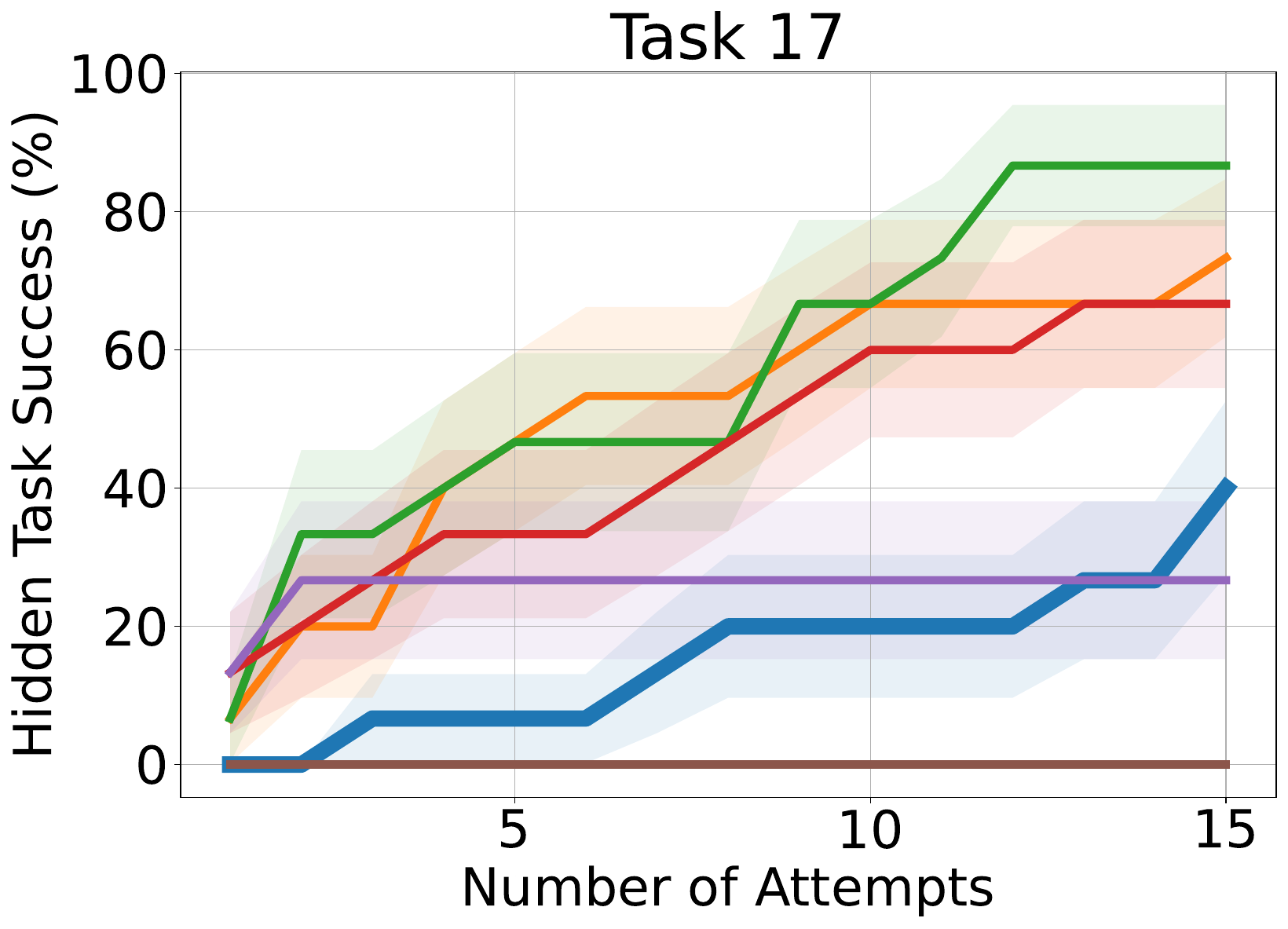}
    \end{subfigure}
    \hfill
    \begin{subfigure}[b]{0.19\textwidth}
        \centering
        \includegraphics[width=\textwidth]{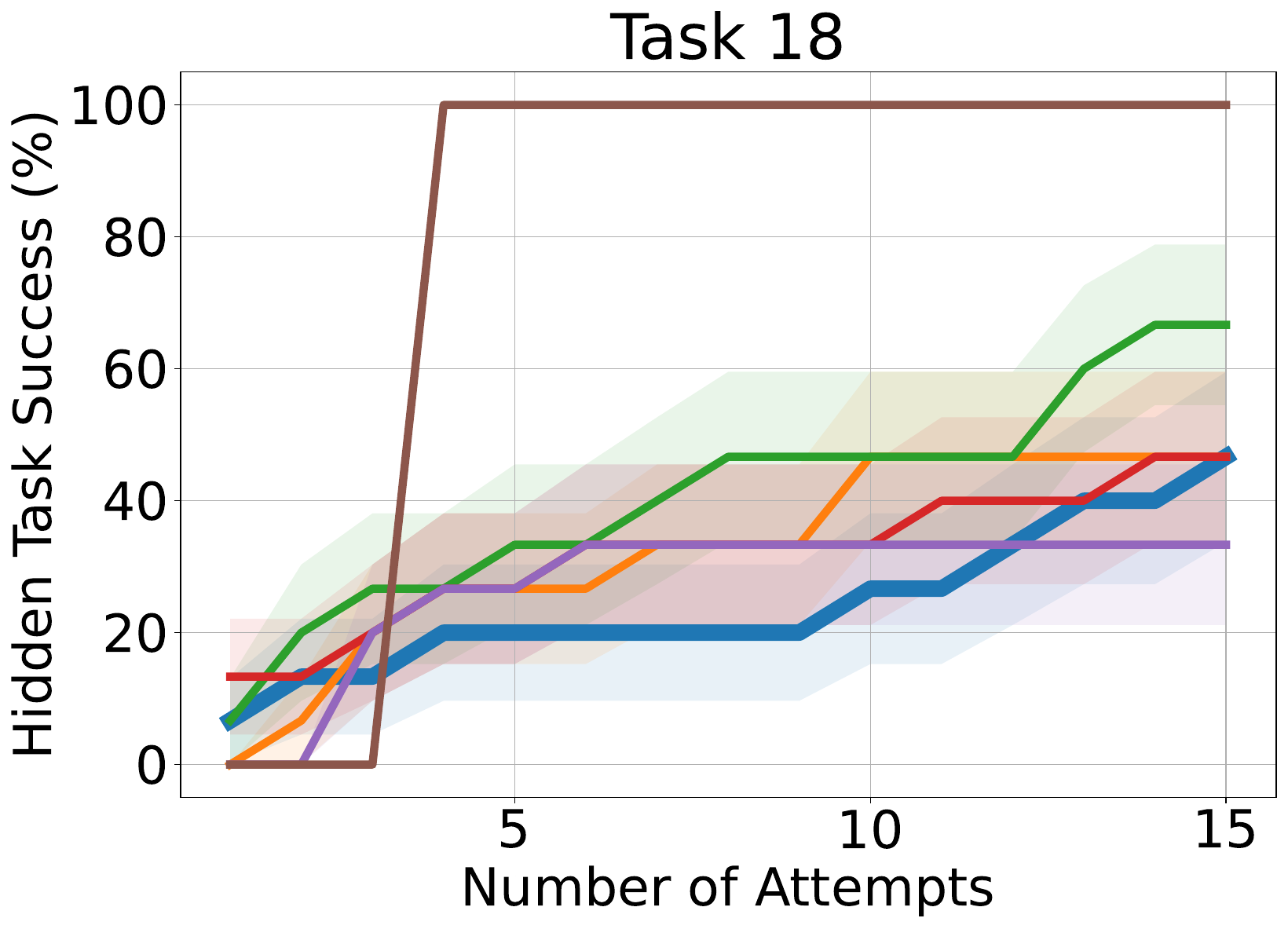}
    \end{subfigure}
    \hfill
    \begin{subfigure}[b]{0.19\textwidth}
        \centering
        \includegraphics[width=\textwidth]{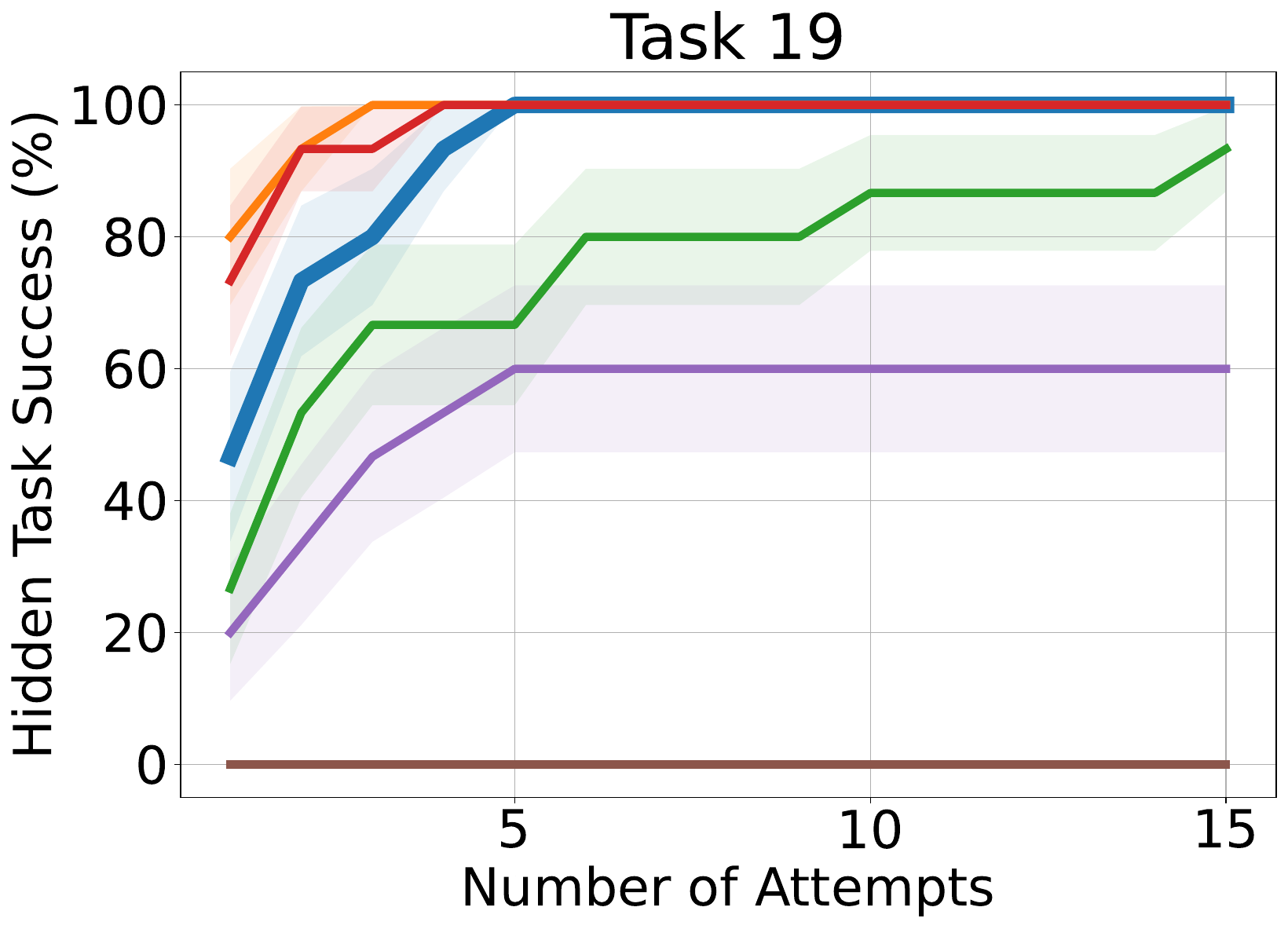}
    \end{subfigure}
    \begin{subfigure}[b]{0.19\textwidth}
        \centering
        \includegraphics[width=\textwidth]{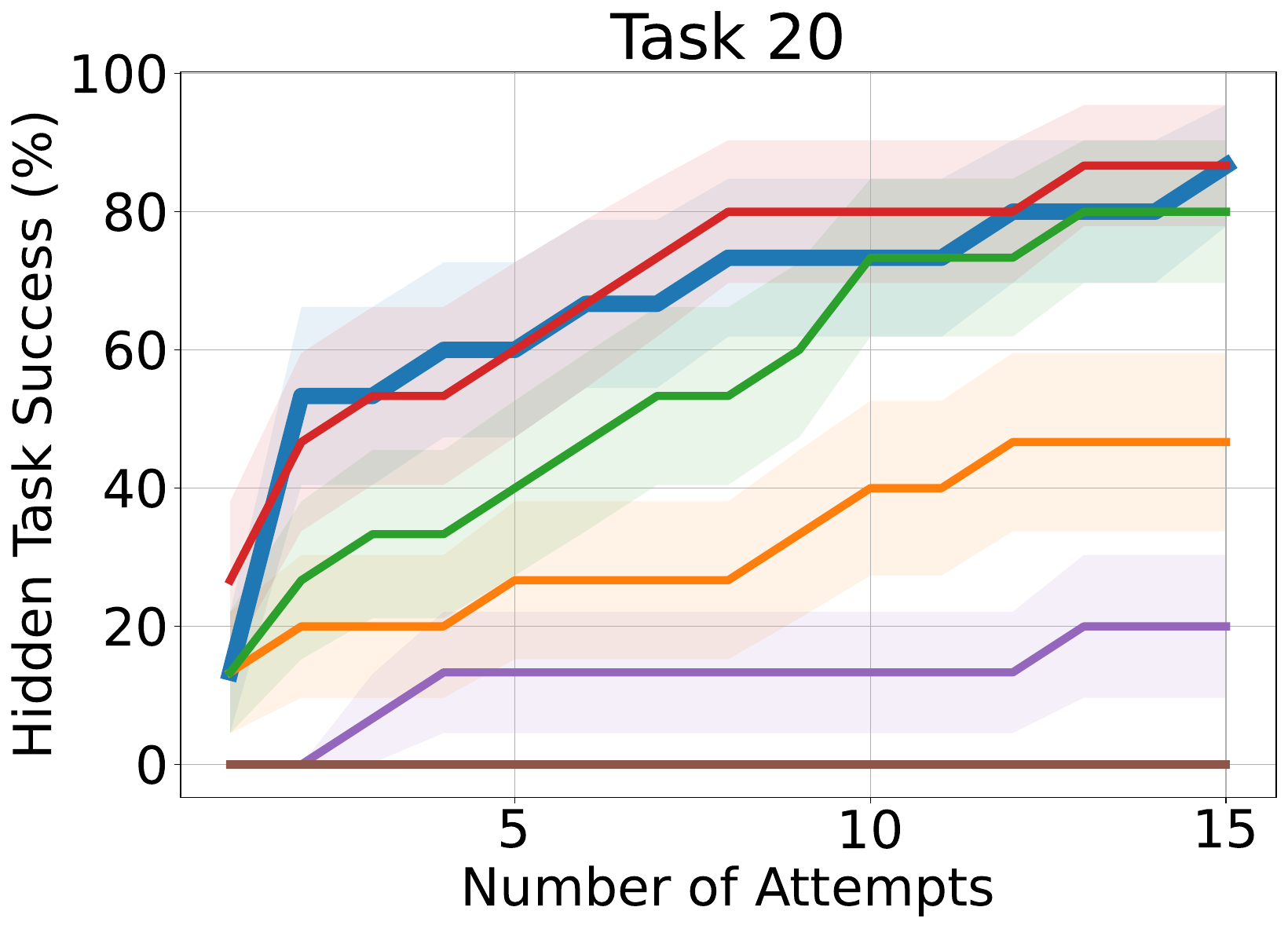}
    \end{subfigure}
    
     \begin{subfigure}[b]{0.19\textwidth}
        \centering
        \includegraphics[width=\textwidth]{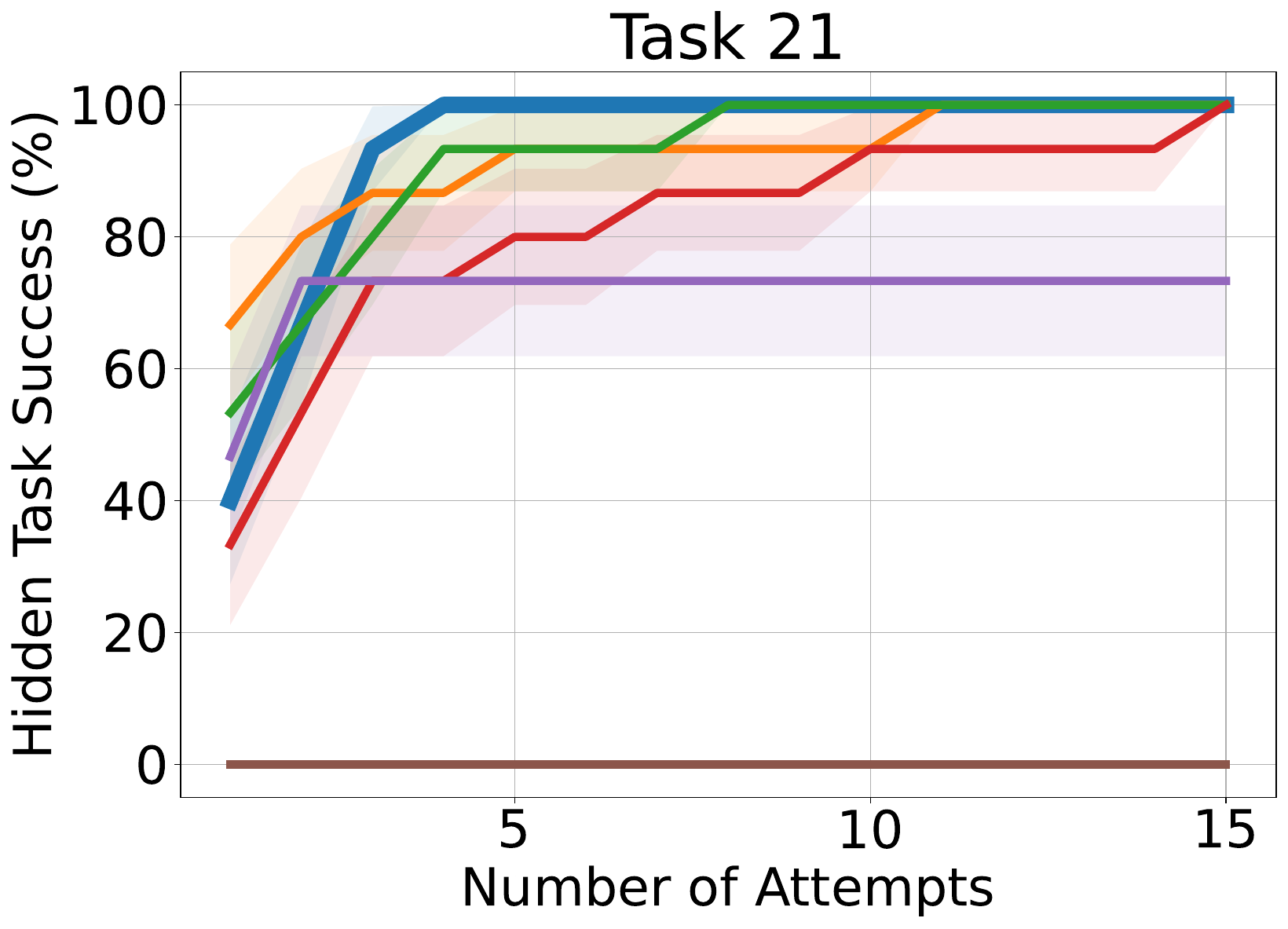}
    \end{subfigure}
    \hfill
    \begin{subfigure}[b]{0.19\textwidth}
        \centering
        \includegraphics[width=\textwidth]{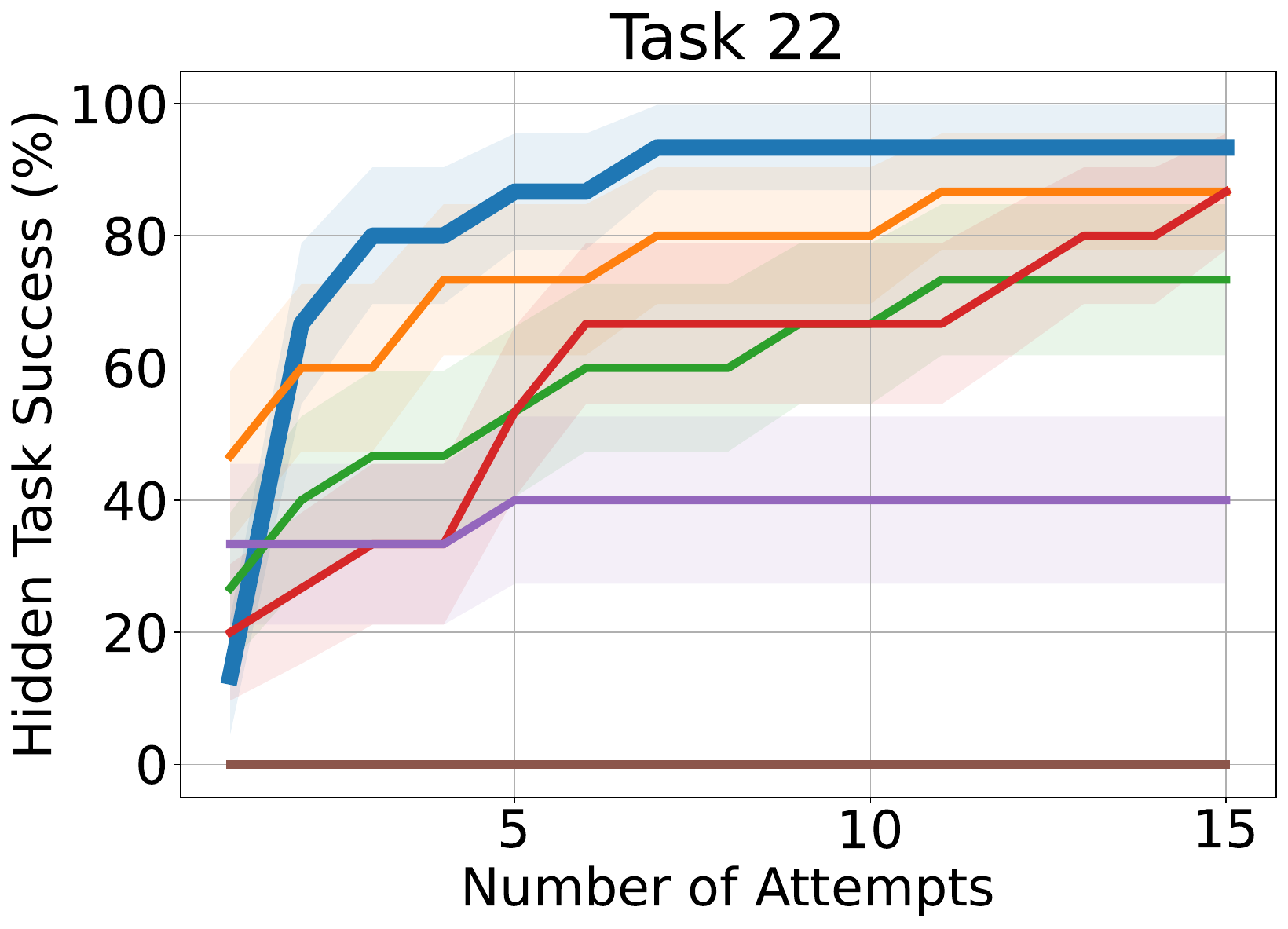}
    \end{subfigure}
    \hfill
    \begin{subfigure}[b]{0.19\textwidth}
        \centering
        \includegraphics[width=\textwidth]{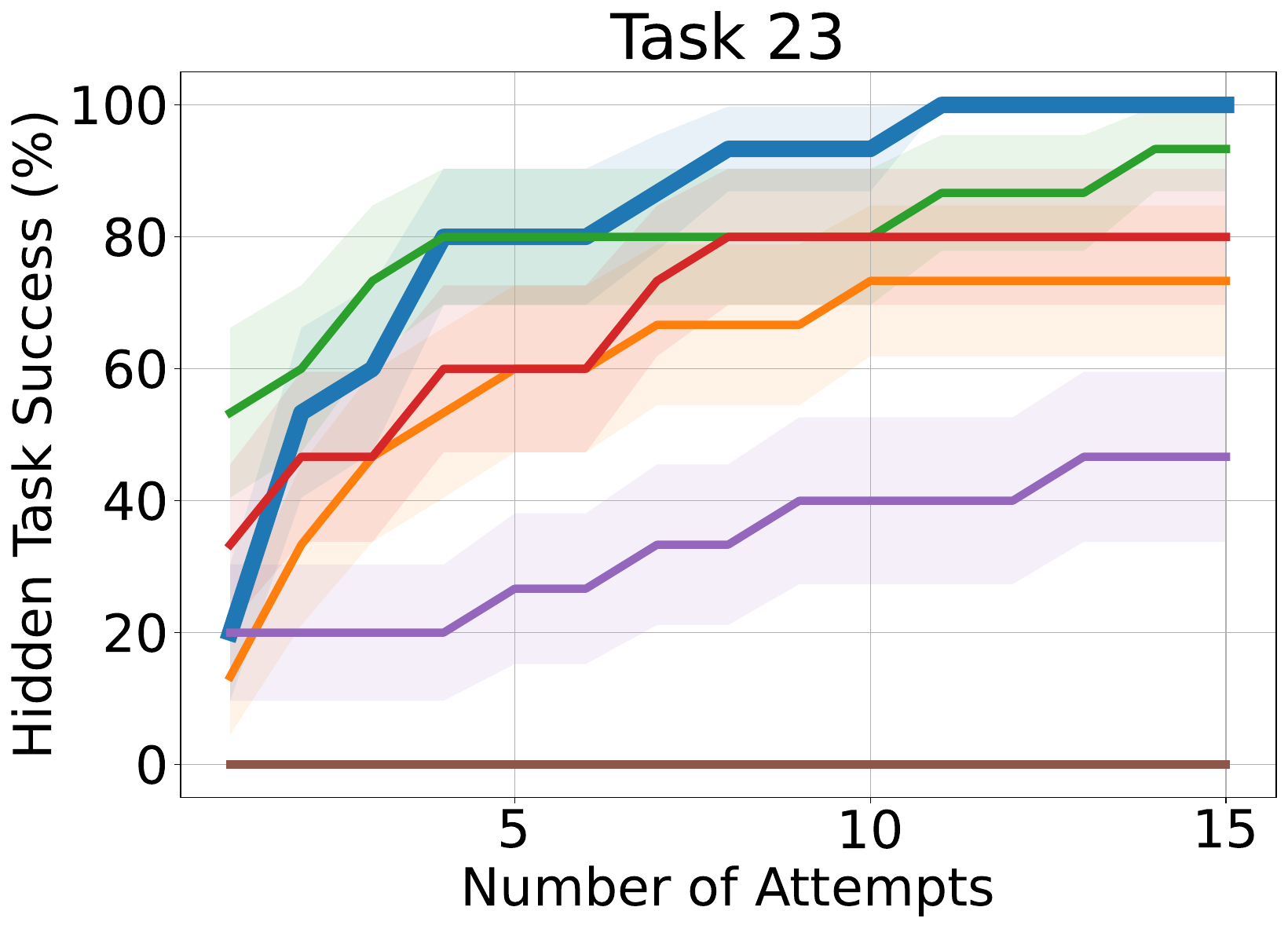}
    \end{subfigure}
    \hfill
    \begin{subfigure}[b]{0.19\textwidth}
        \centering
        \includegraphics[width=\textwidth]{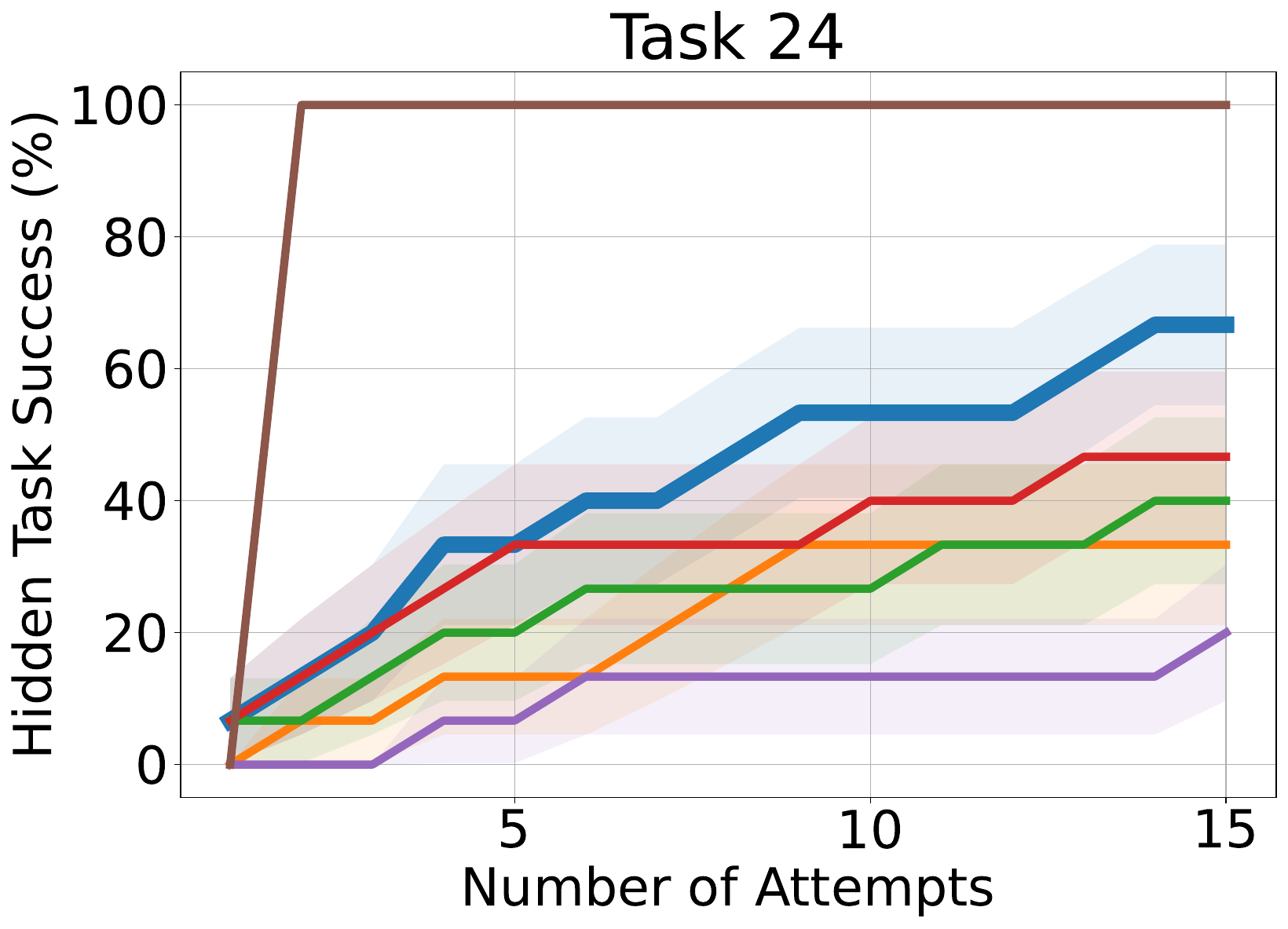}
    \end{subfigure}
    \begin{subfigure}[b]{0.19\textwidth}
        \centering
        \includegraphics[width=\textwidth]{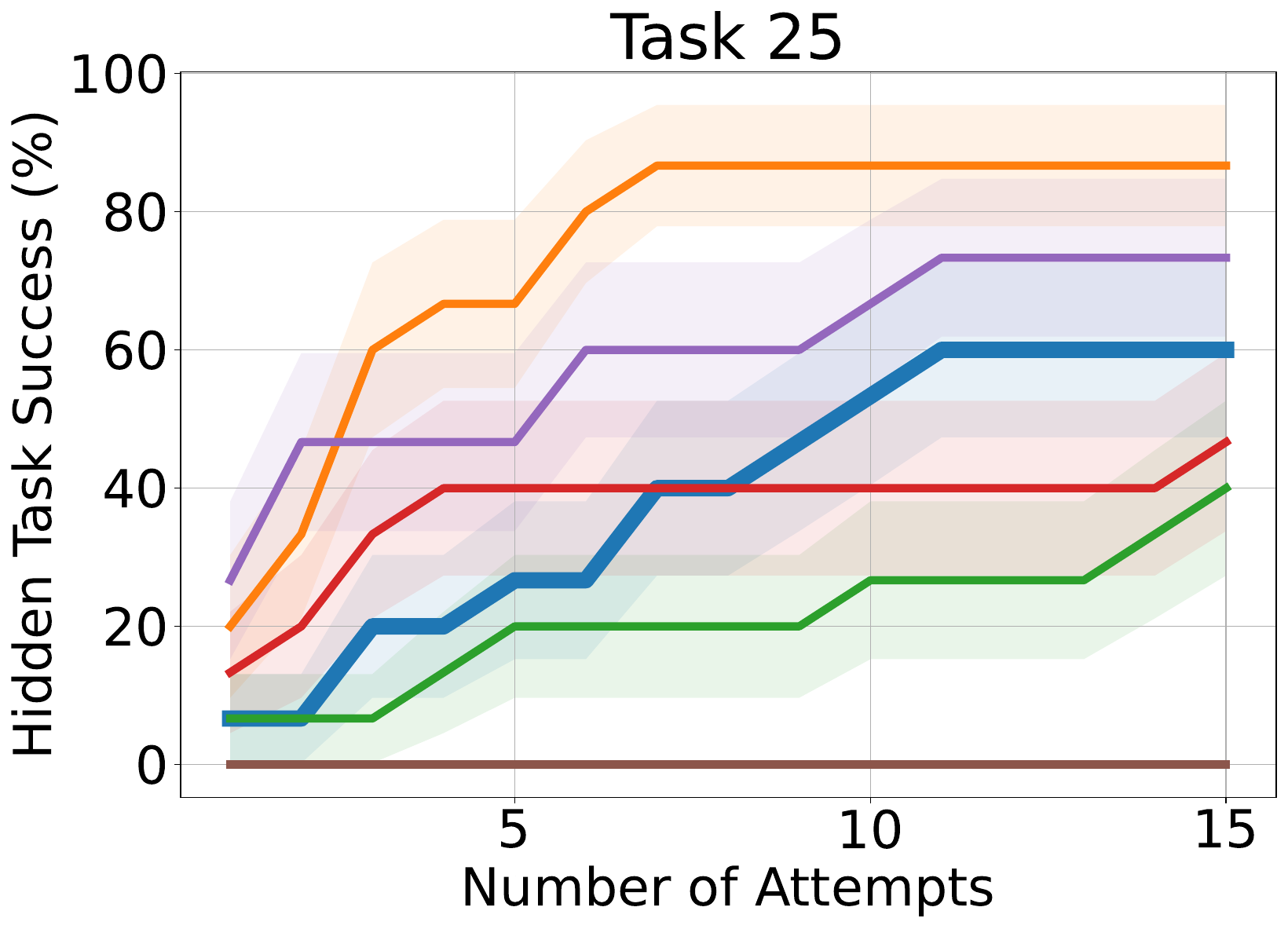}
    \end{subfigure}
    
     \begin{subfigure}[b]{0.19\textwidth}
        \centering
        \includegraphics[width=\textwidth]{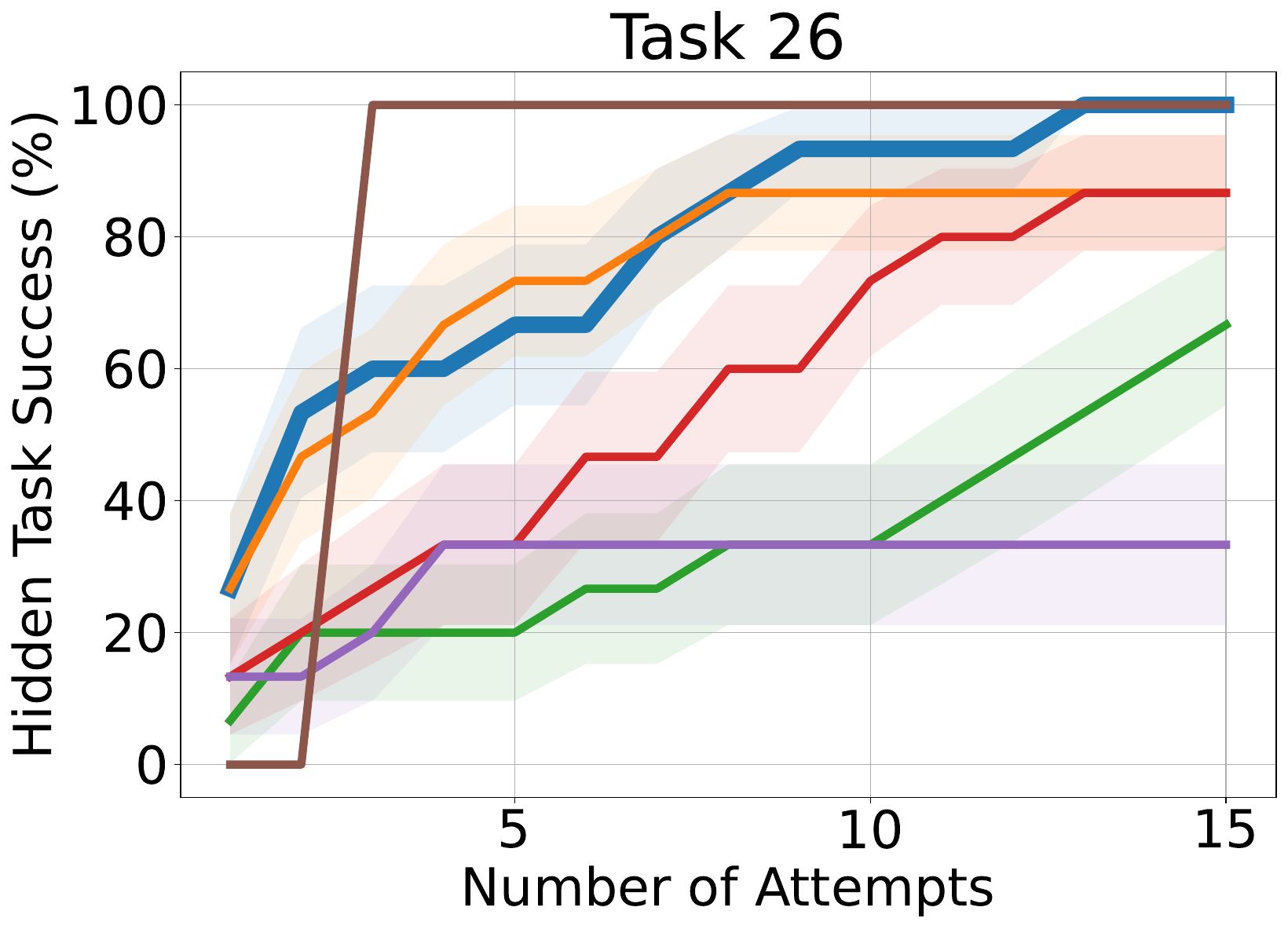}
    \end{subfigure}
    \hfill
    \begin{subfigure}[b]{0.19\textwidth}
        \centering
        \includegraphics[width=\textwidth]{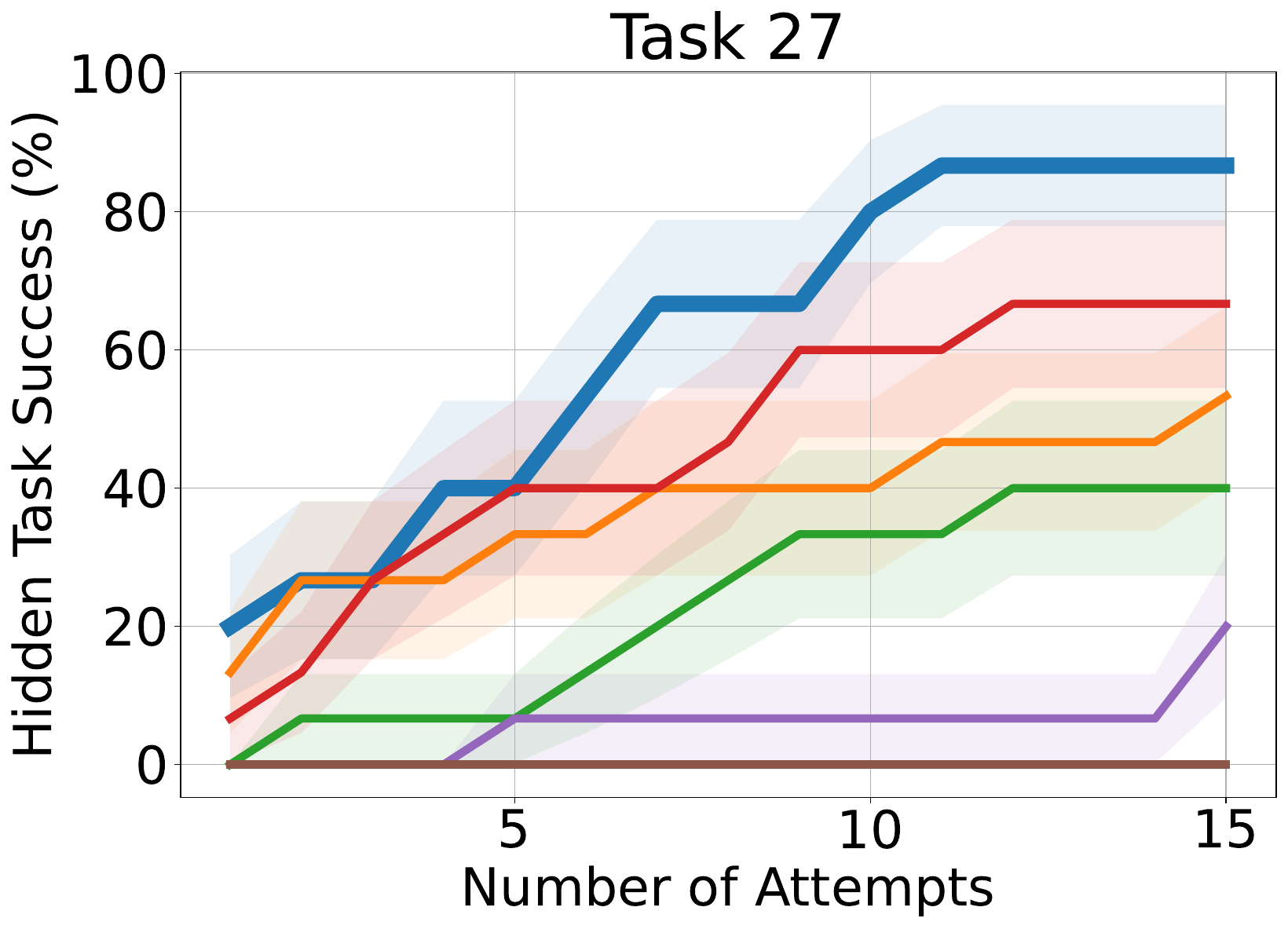}
    \end{subfigure}
    \hfill
    \begin{subfigure}[b]{0.19\textwidth}
        \centering
        \includegraphics[width=\textwidth]{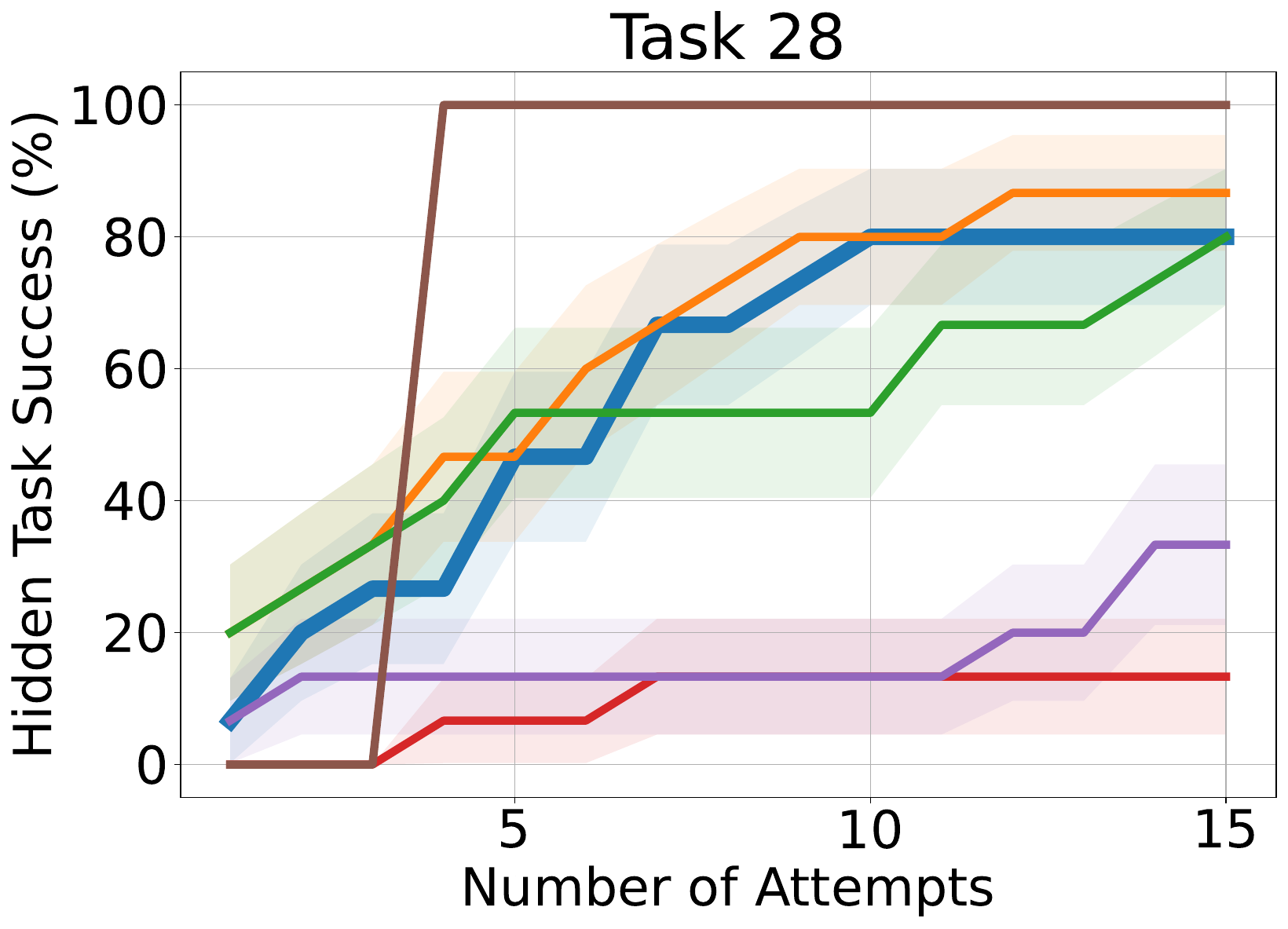}
    \end{subfigure}
    \hfill
    \begin{subfigure}[b]{0.19\textwidth}
        \centering
        \includegraphics[width=\textwidth]{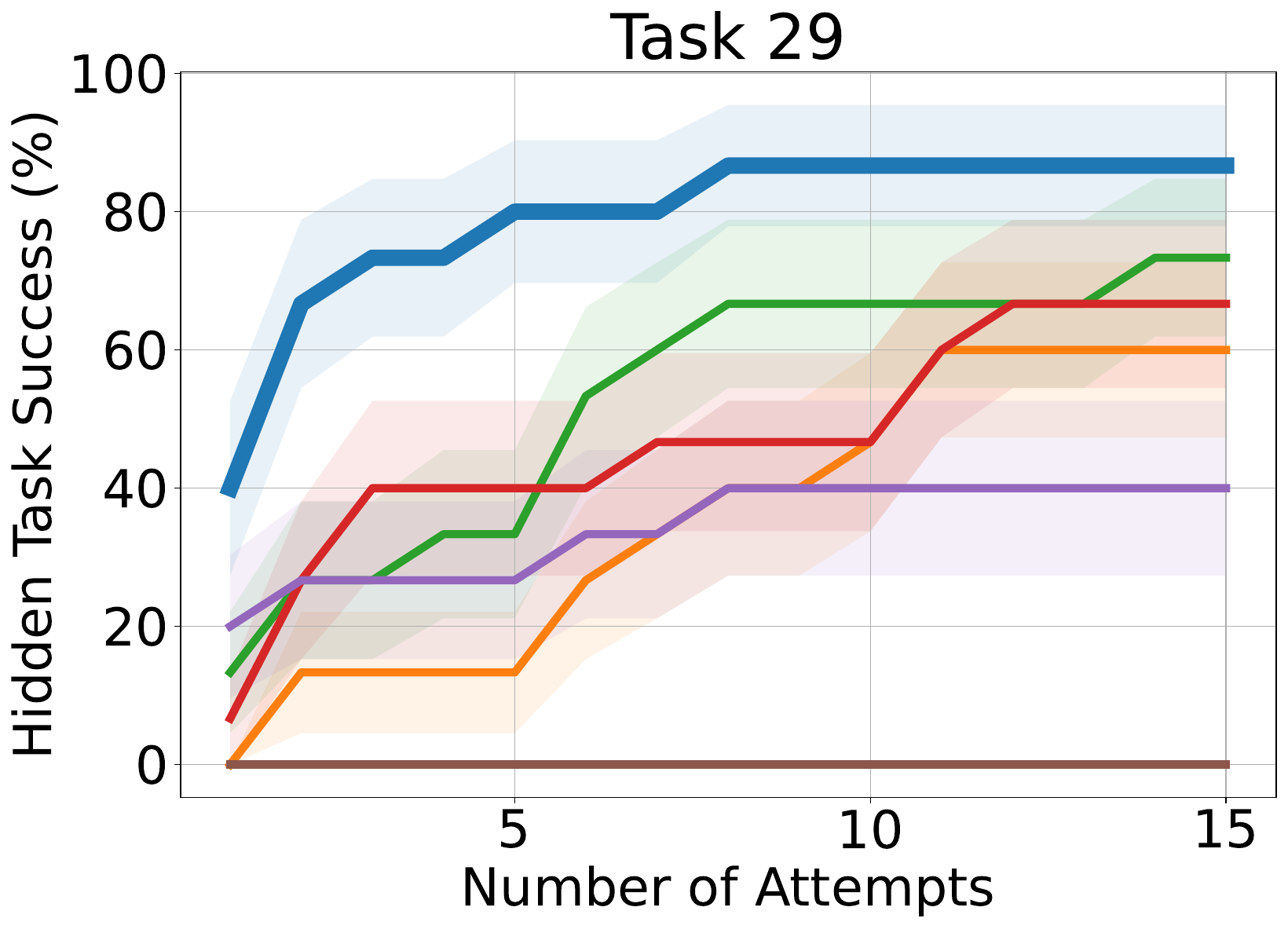}
    \end{subfigure}
    \begin{subfigure}[b]{0.19\textwidth}
        \centering
        \includegraphics[width=\textwidth]{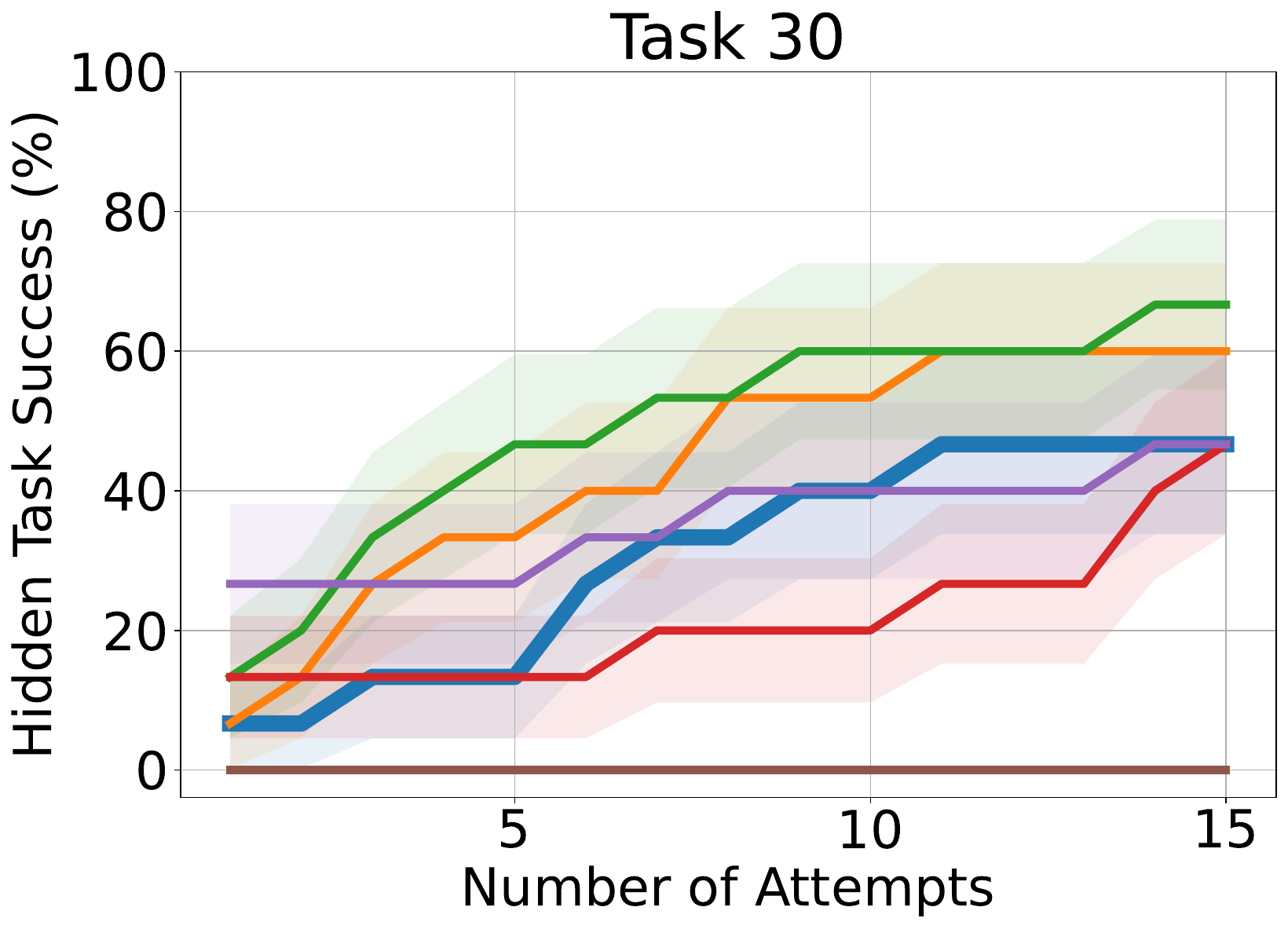}
    \end{subfigure}
    
     \begin{subfigure}[b]{0.19\textwidth}
        \centering
        \includegraphics[width=\textwidth]{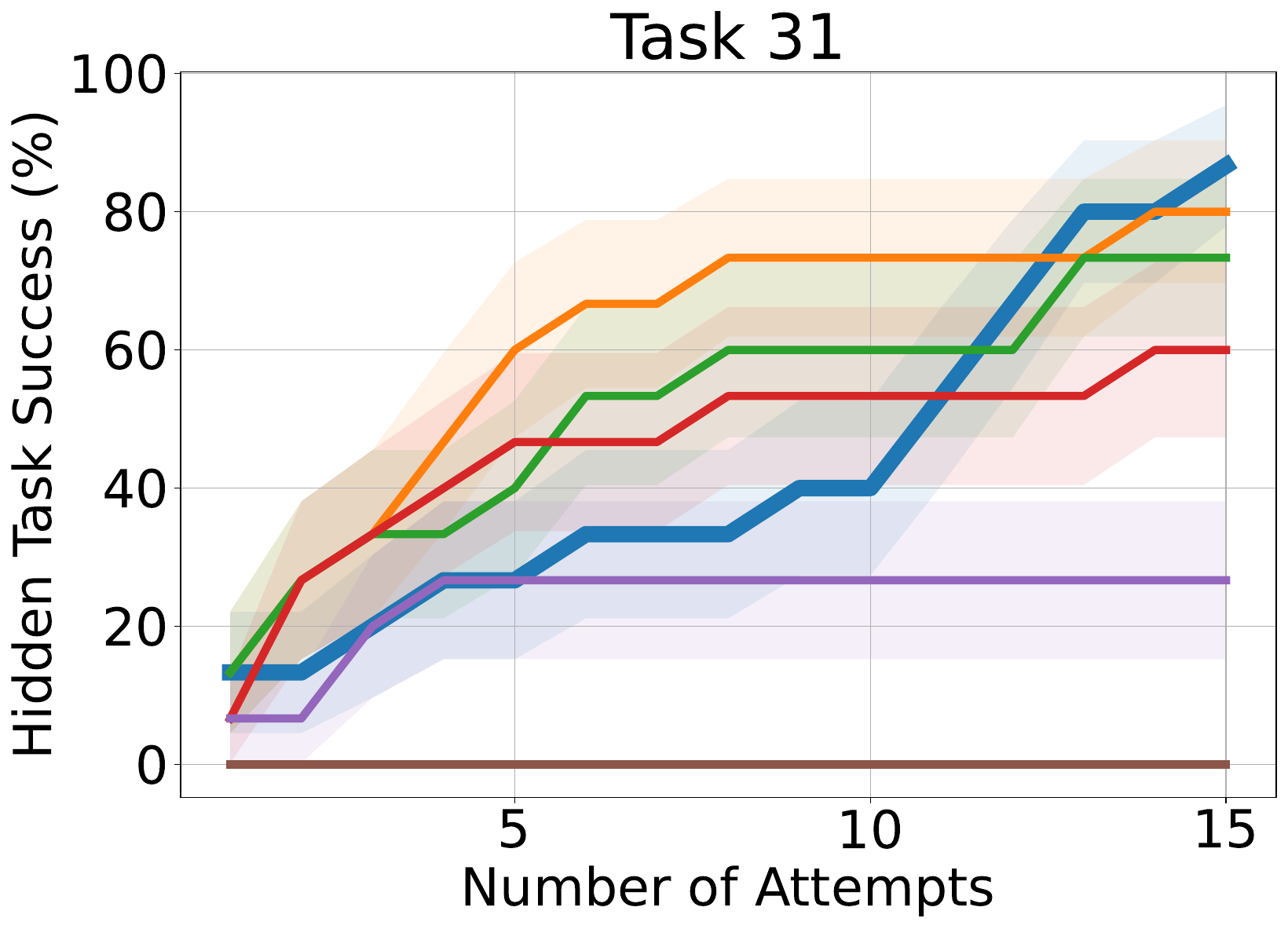}
    \end{subfigure}
    \hfill
    \begin{subfigure}[b]{0.19\textwidth}
        \centering
        \includegraphics[width=\textwidth]{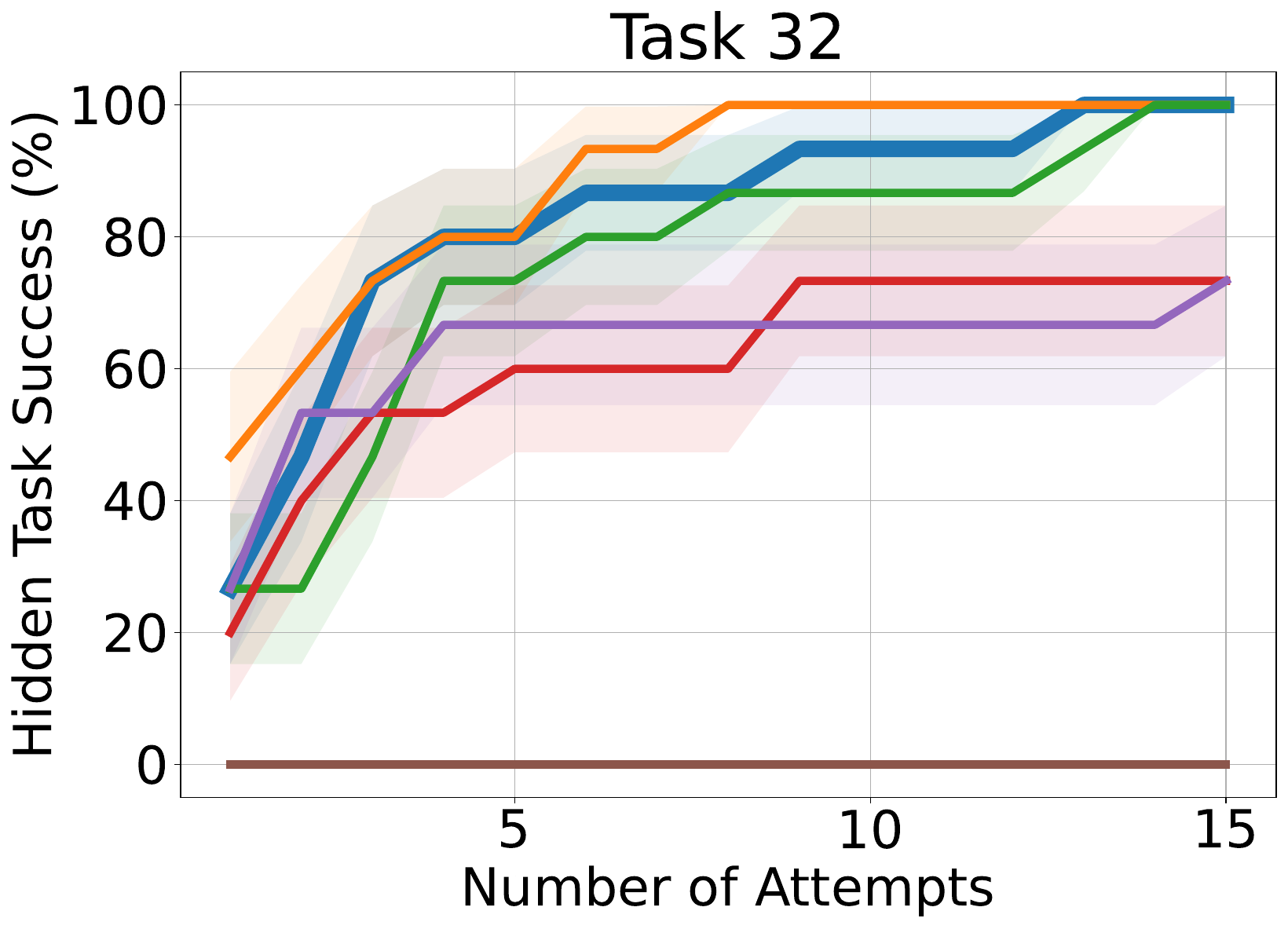}
    \end{subfigure}
    \hfill
    \begin{subfigure}[b]{0.19\textwidth}
        \centering
        \includegraphics[width=\textwidth]{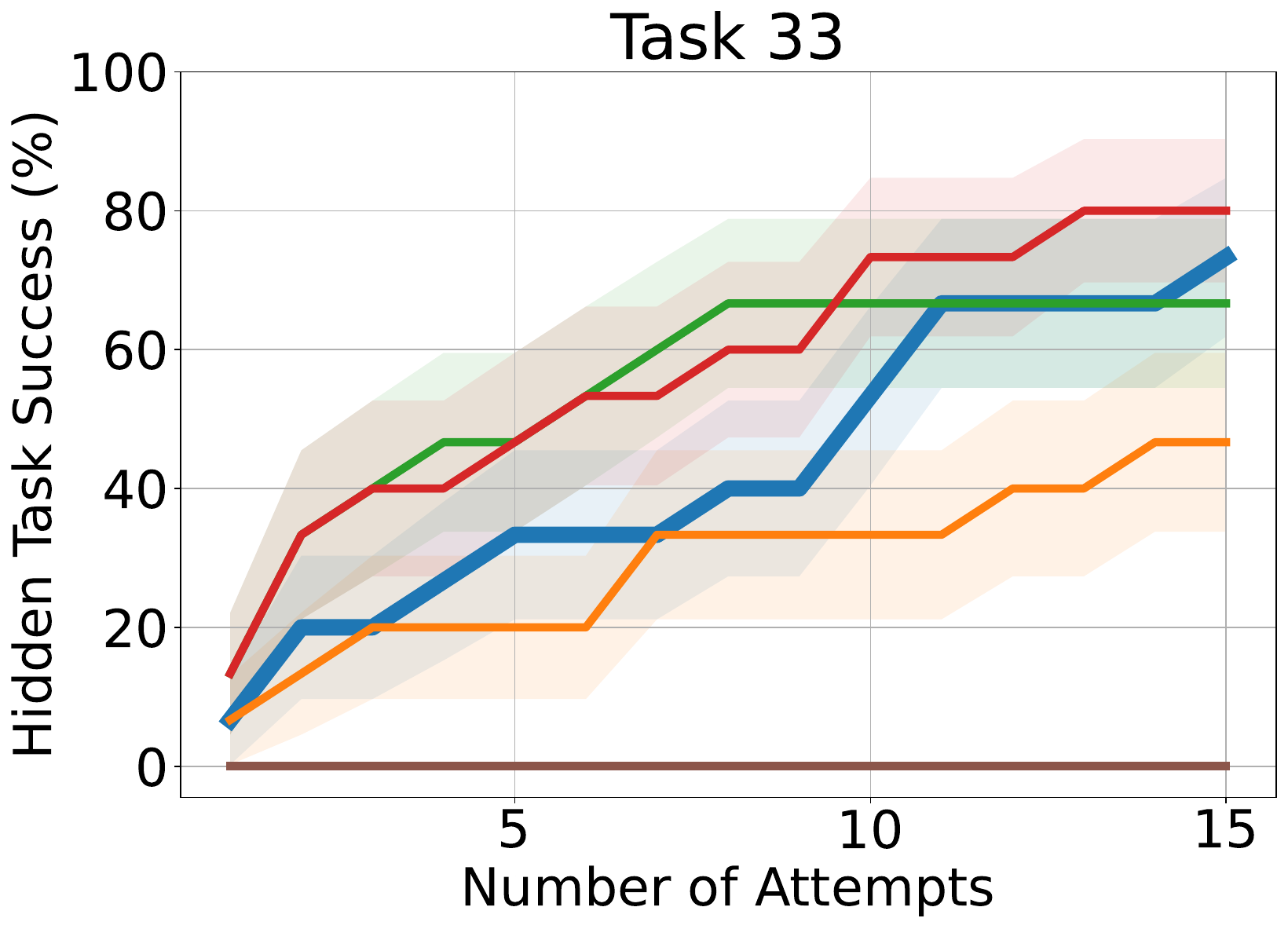}
    \end{subfigure}
    \hfill
    \begin{subfigure}[b]{0.19\textwidth}
        \centering
        \includegraphics[width=\textwidth]{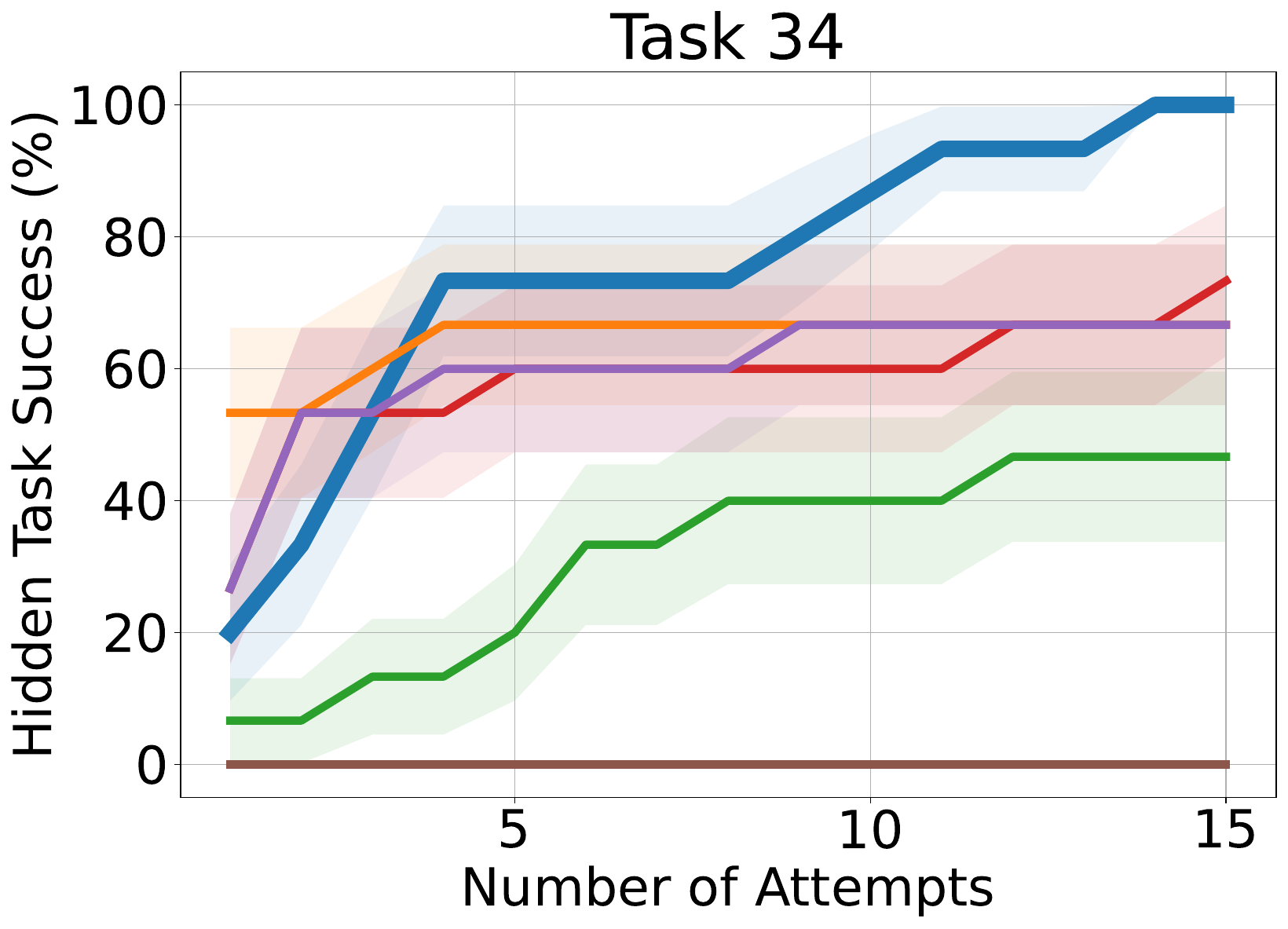}
    \end{subfigure}
    \begin{subfigure}[b]{0.19\textwidth}
        \centering
        \includegraphics[width=\textwidth]{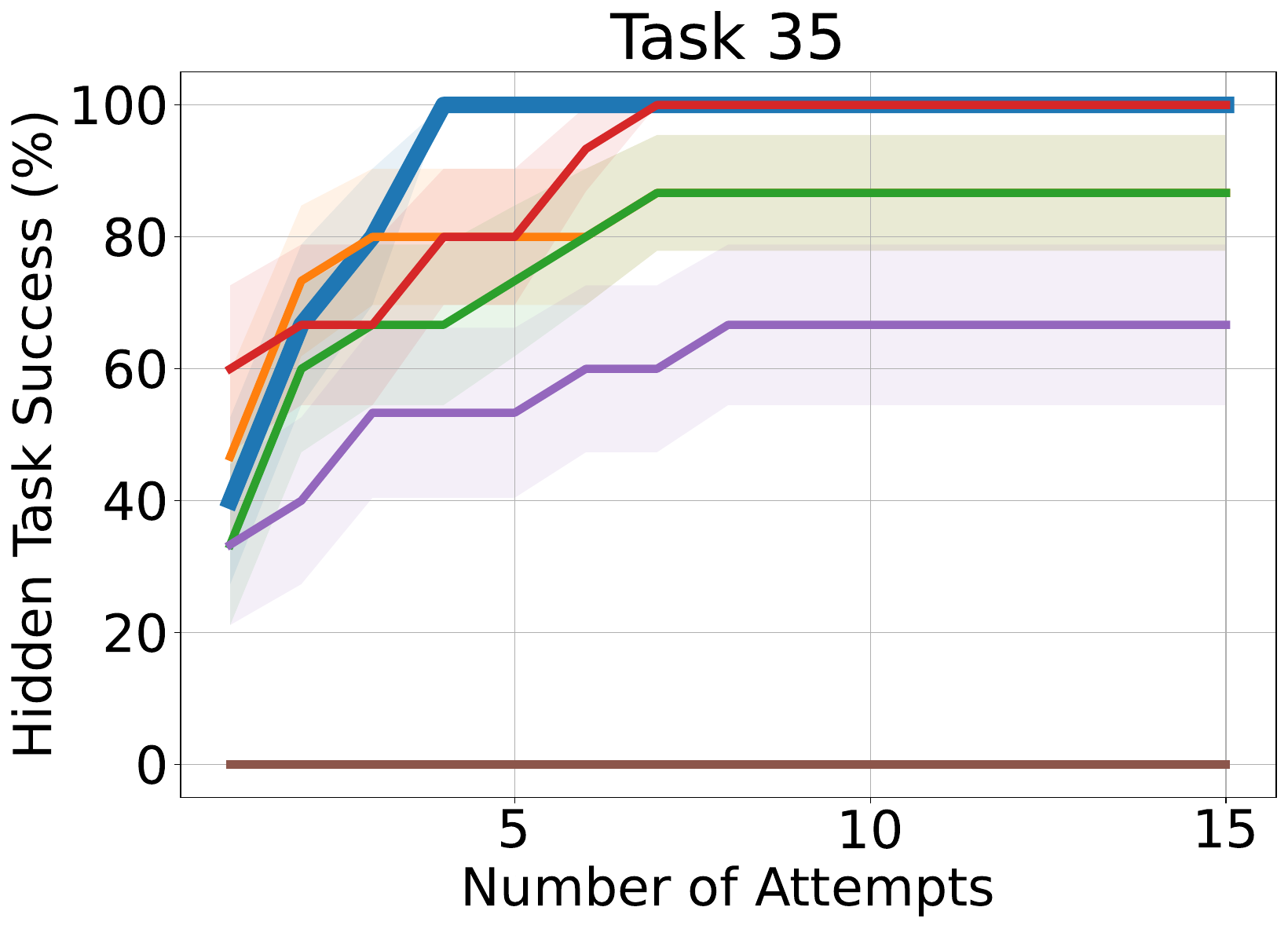}
    \end{subfigure}
    
     \begin{subfigure}[b]{0.19\textwidth}
        \centering
        \includegraphics[width=\textwidth]{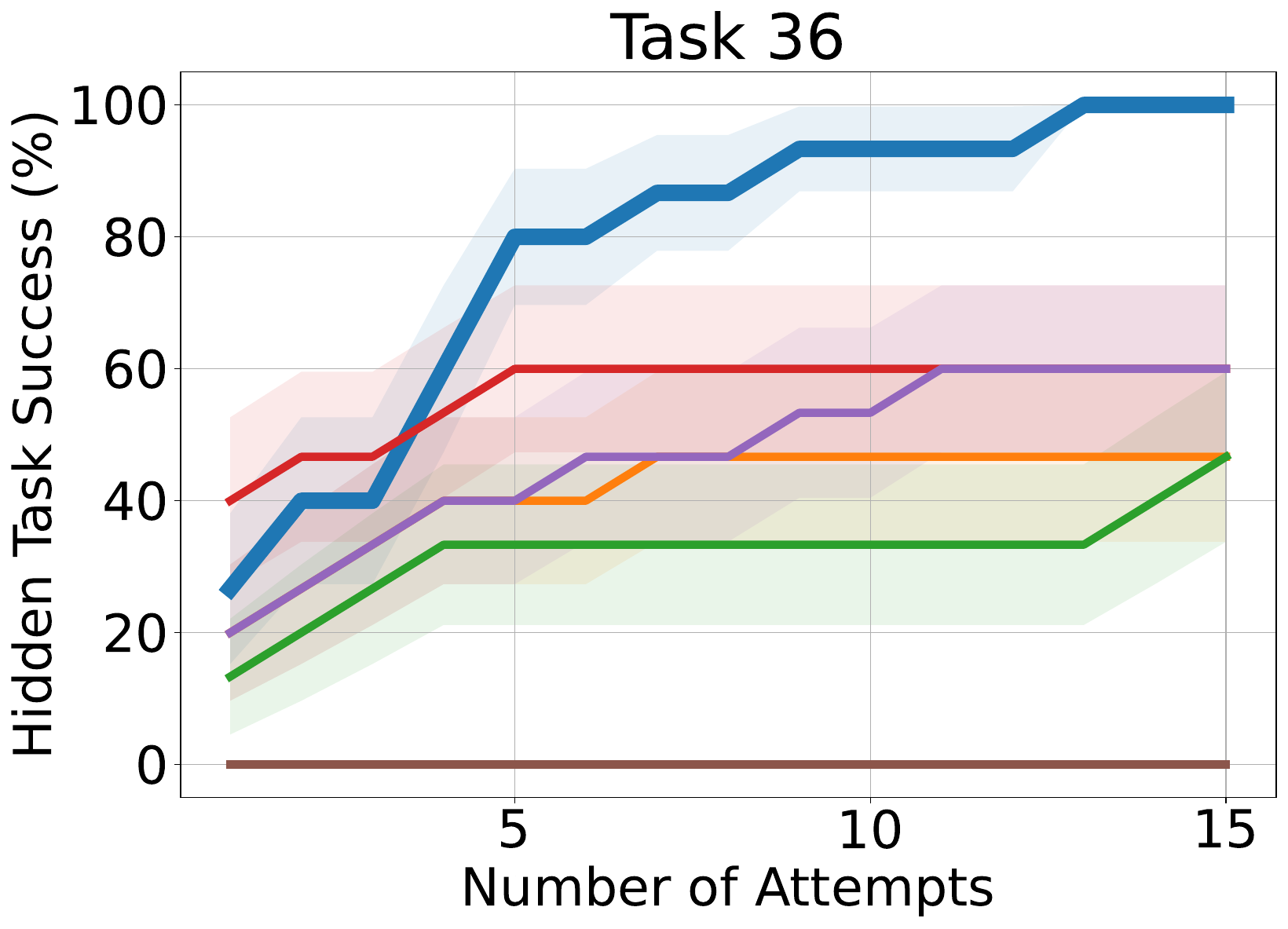}
    \end{subfigure}
    \hfill
    \begin{subfigure}[b]{0.19\textwidth}
        \centering
        \includegraphics[width=\textwidth]{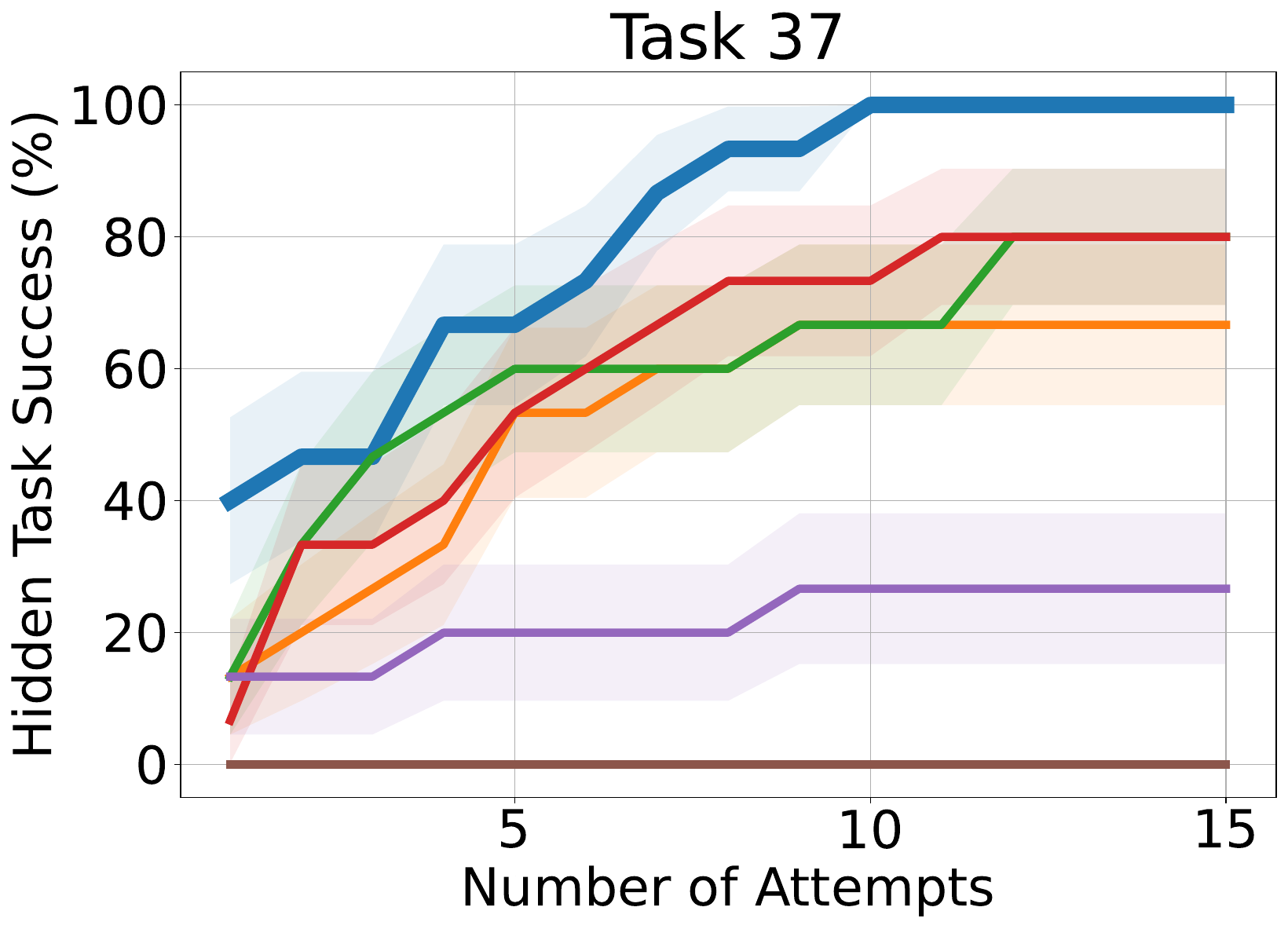}
    \end{subfigure}
    \hfill
    \begin{subfigure}[b]{0.19\textwidth}
        \centering
        \includegraphics[width=\textwidth]{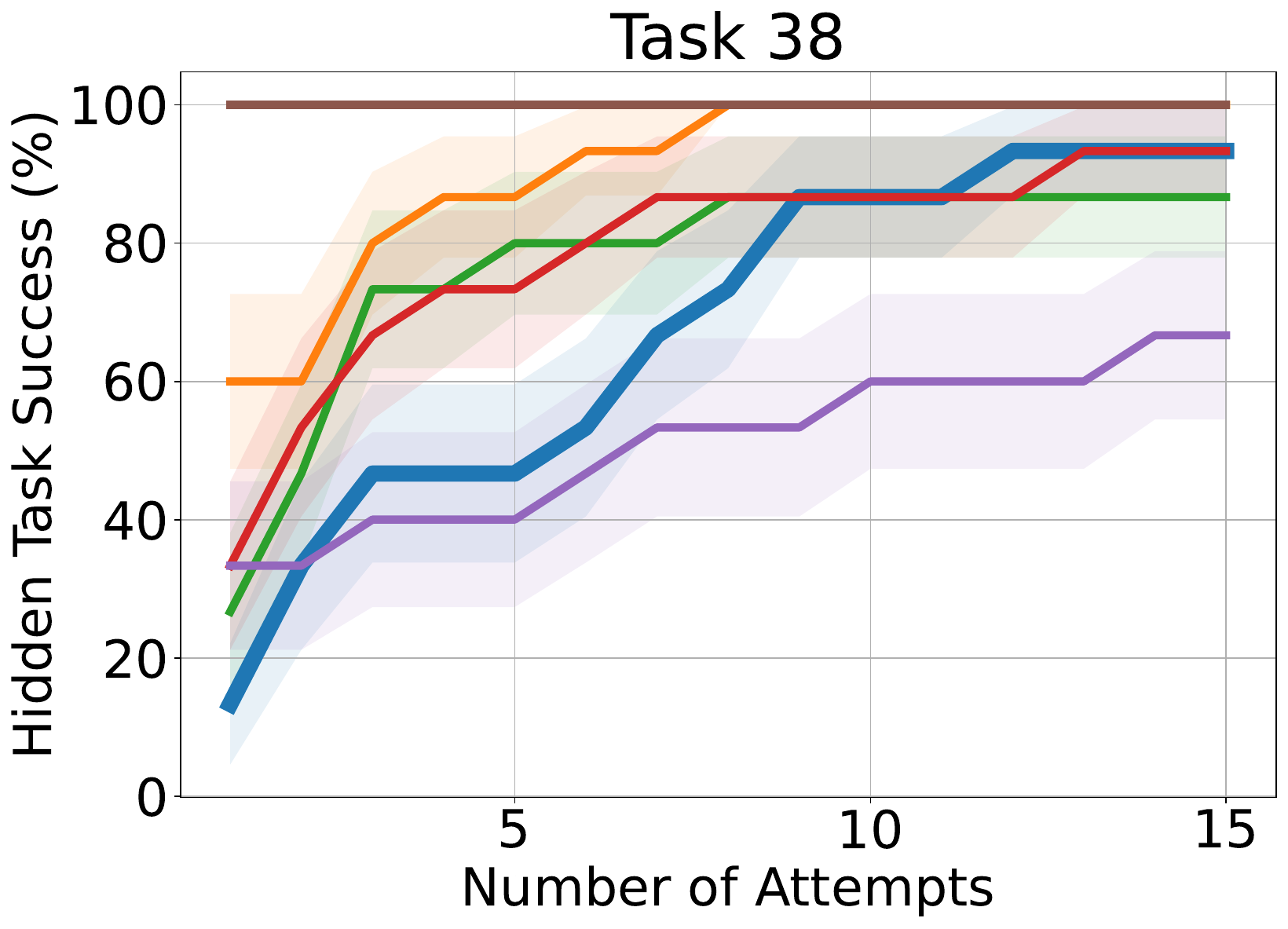}
    \end{subfigure}
    \hfill
    \begin{subfigure}[b]{0.19\textwidth}
        \centering
        \includegraphics[width=\textwidth]{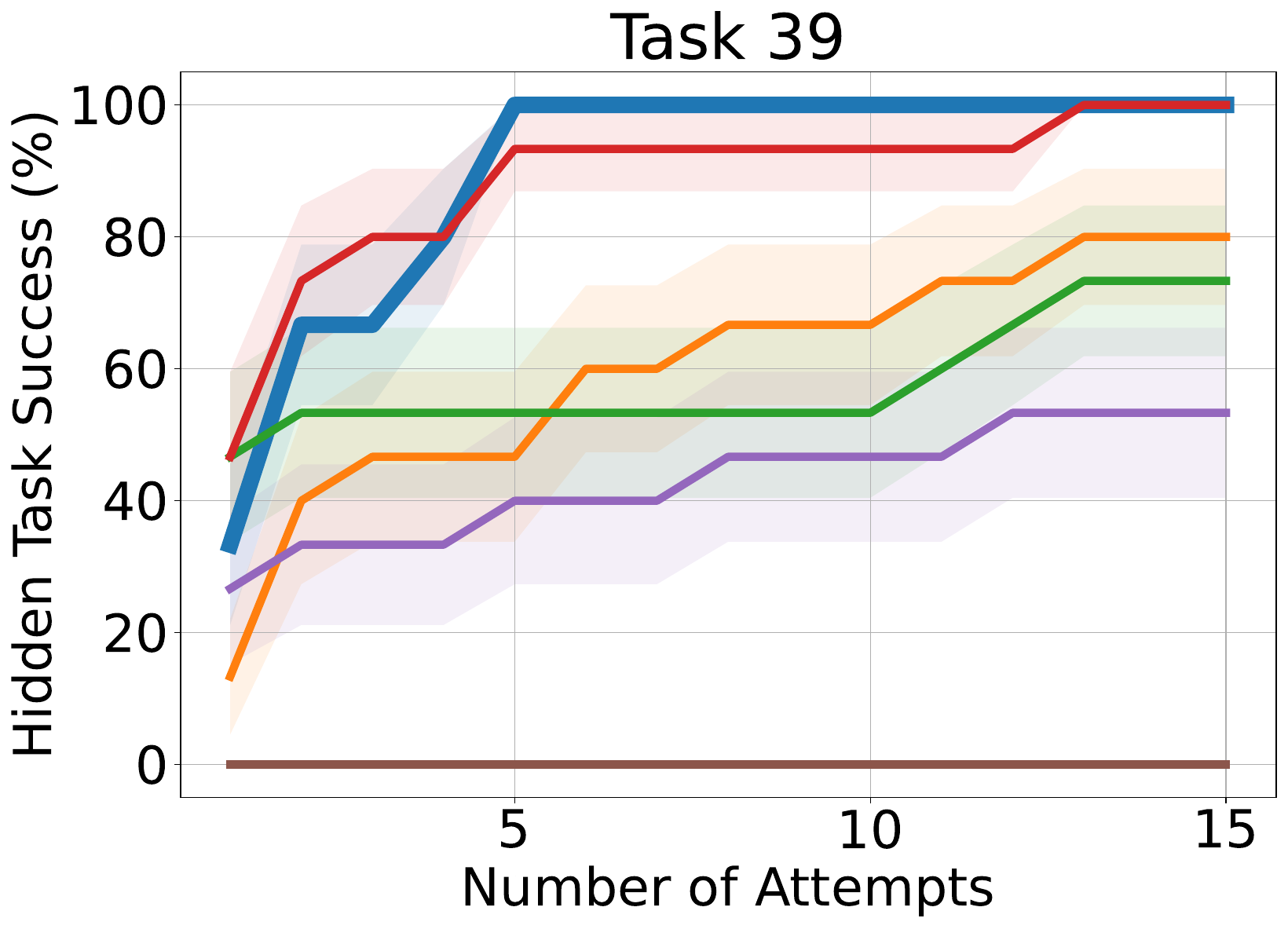}
    \end{subfigure}
    \begin{subfigure}[b]{0.19\textwidth}
        \centering
        \includegraphics[width=\textwidth]{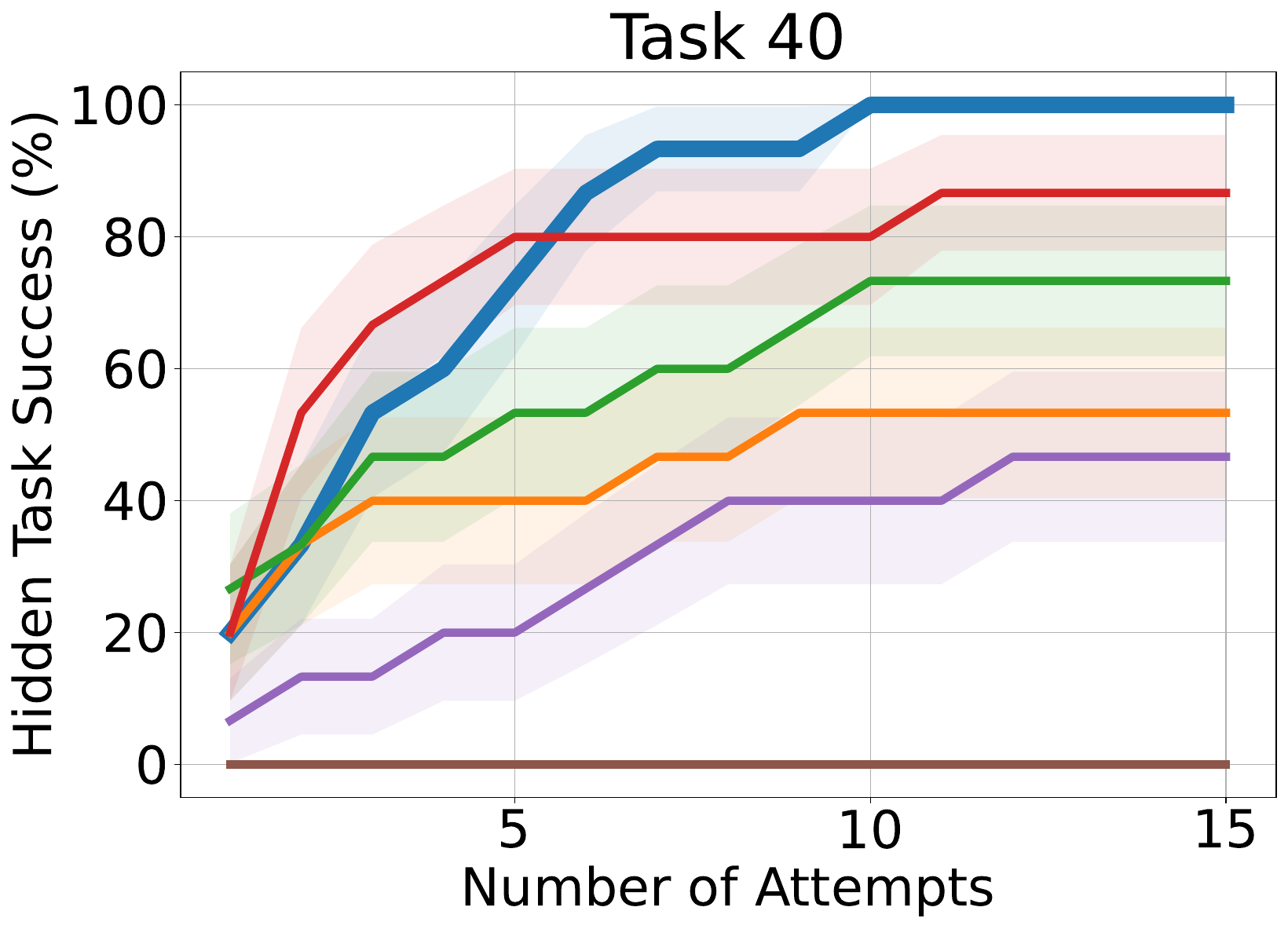}
    \end{subfigure}
    
     \begin{subfigure}[b]{0.19\textwidth}
        \centering
        \includegraphics[width=\textwidth]{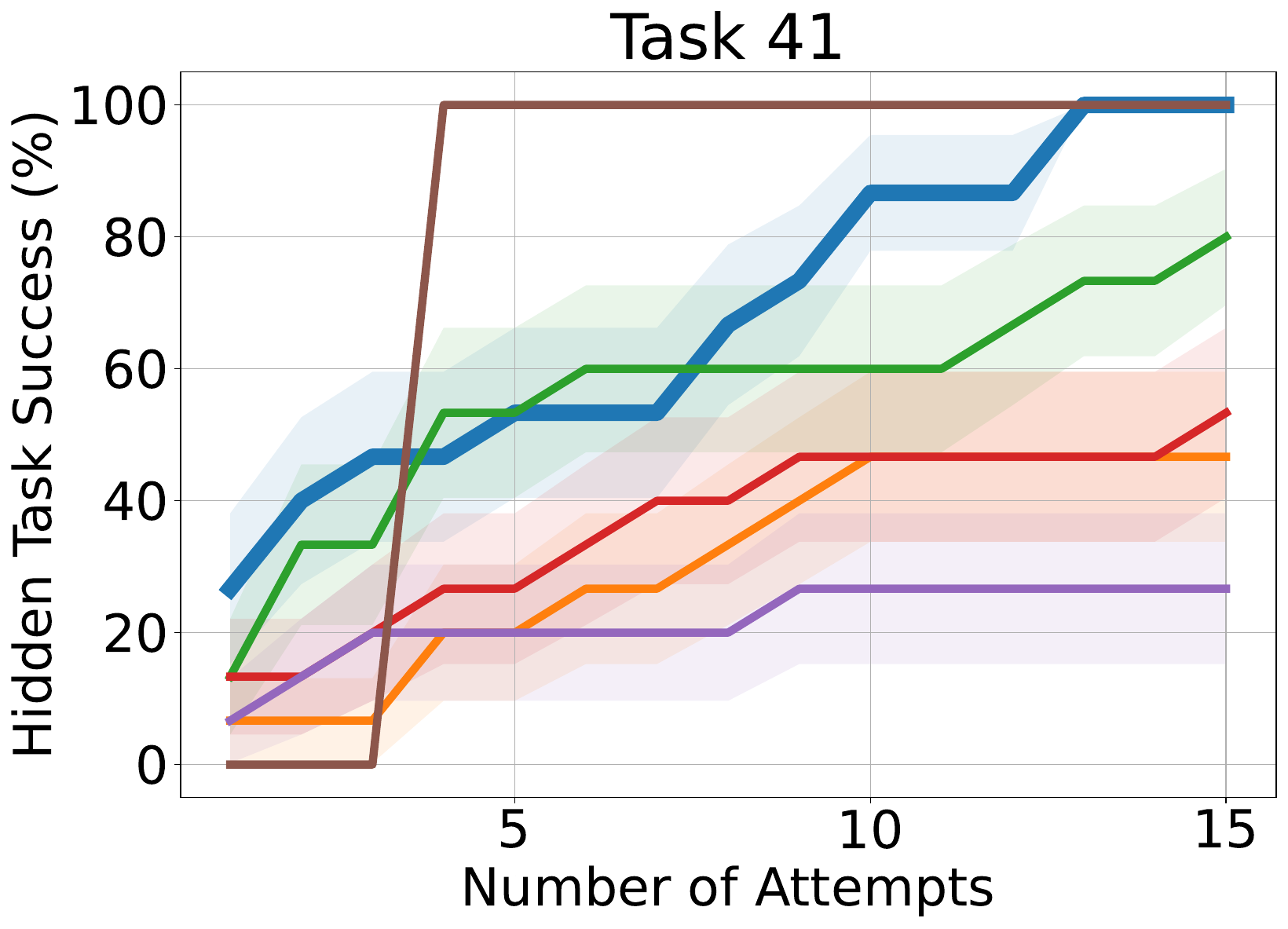}
    \end{subfigure}
    \hfill
    \begin{subfigure}[b]{0.19\textwidth}
        \centering
        \includegraphics[width=\textwidth]{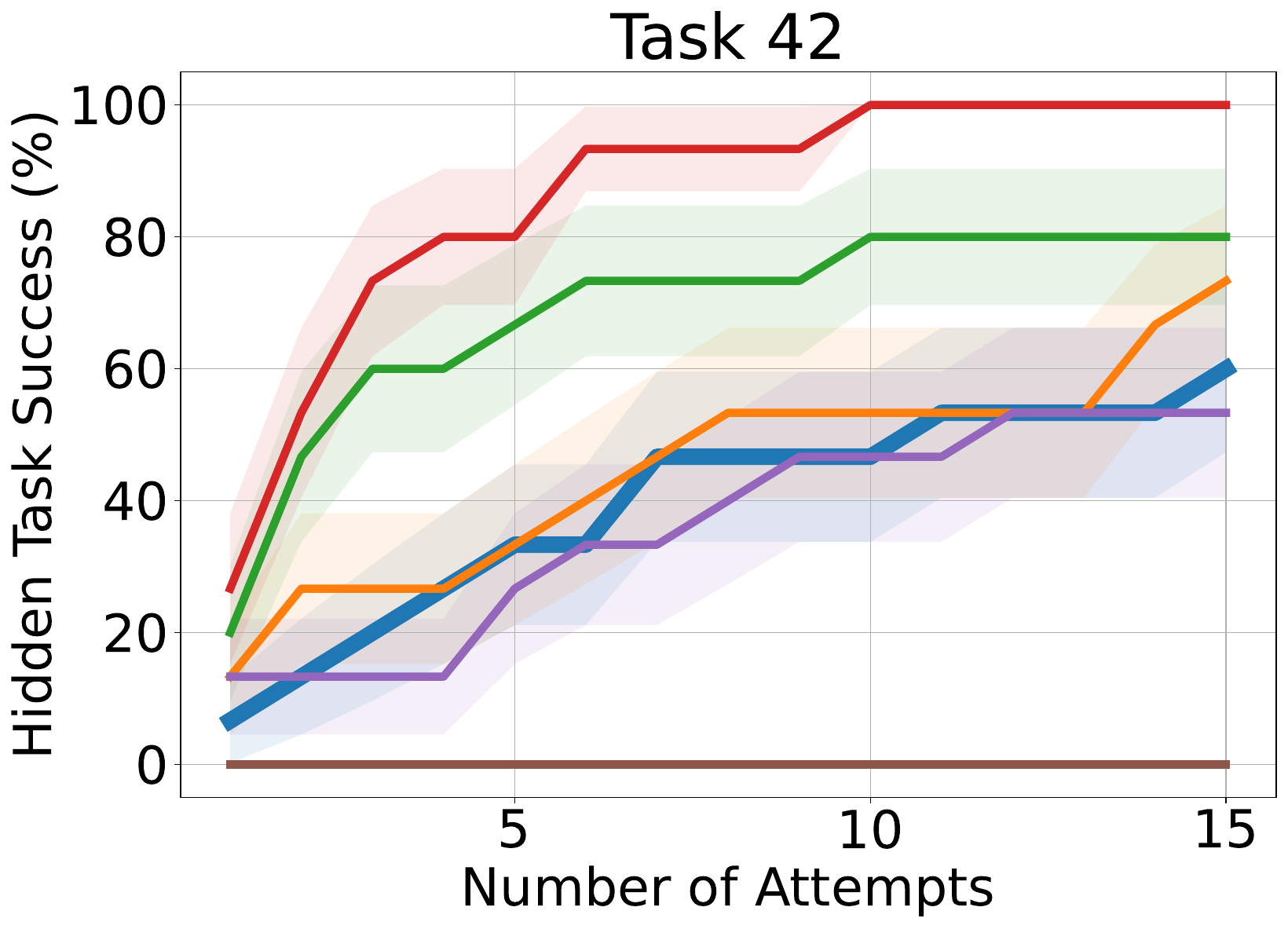}
    \end{subfigure}
    \hfill
    \begin{subfigure}[b]{0.19\textwidth}
        \centering
        \includegraphics[width=\textwidth]{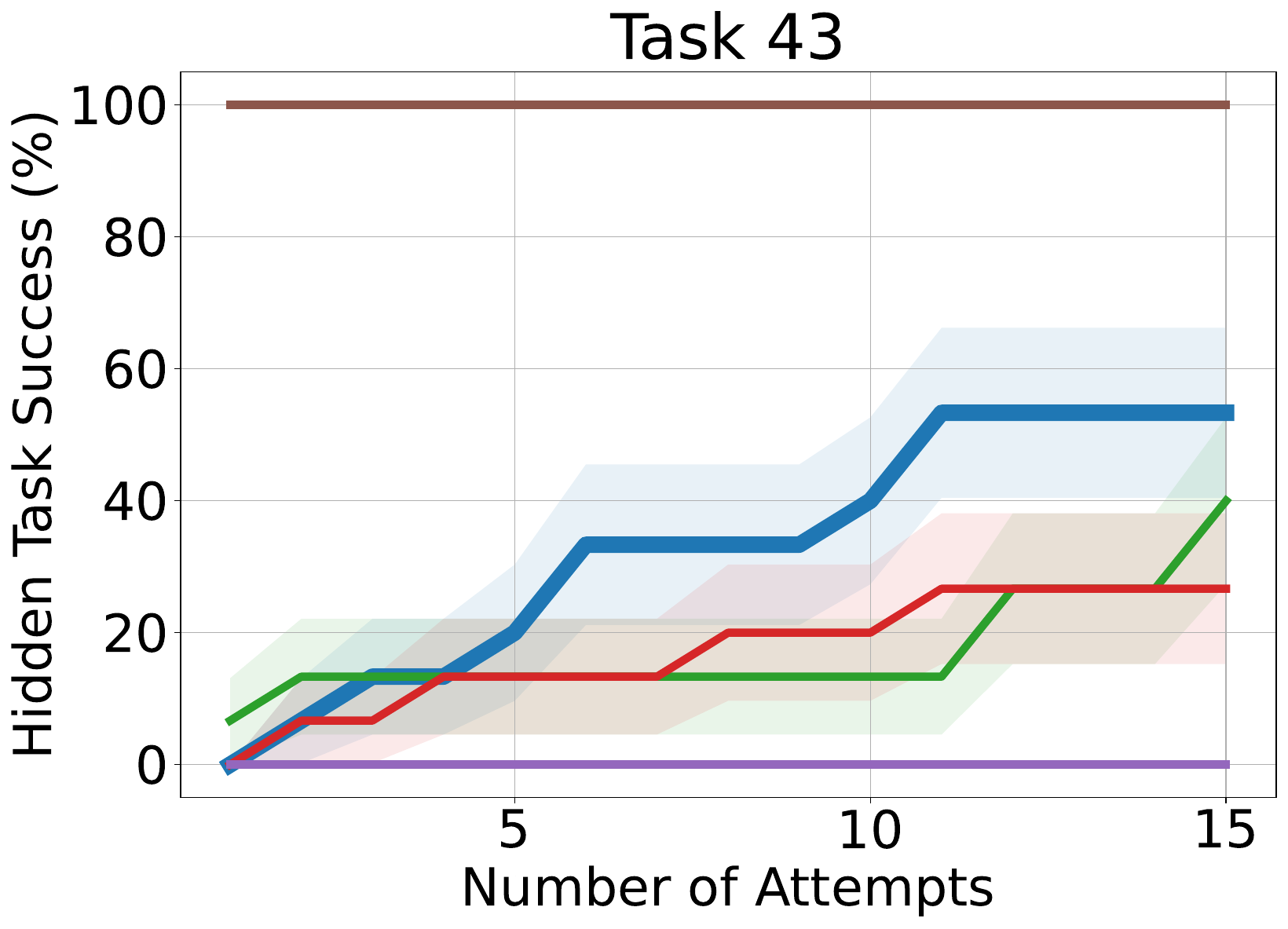}
    \end{subfigure}
    \hfill
    \begin{subfigure}[b]{0.19\textwidth}
        \centering
        \includegraphics[width=\textwidth]{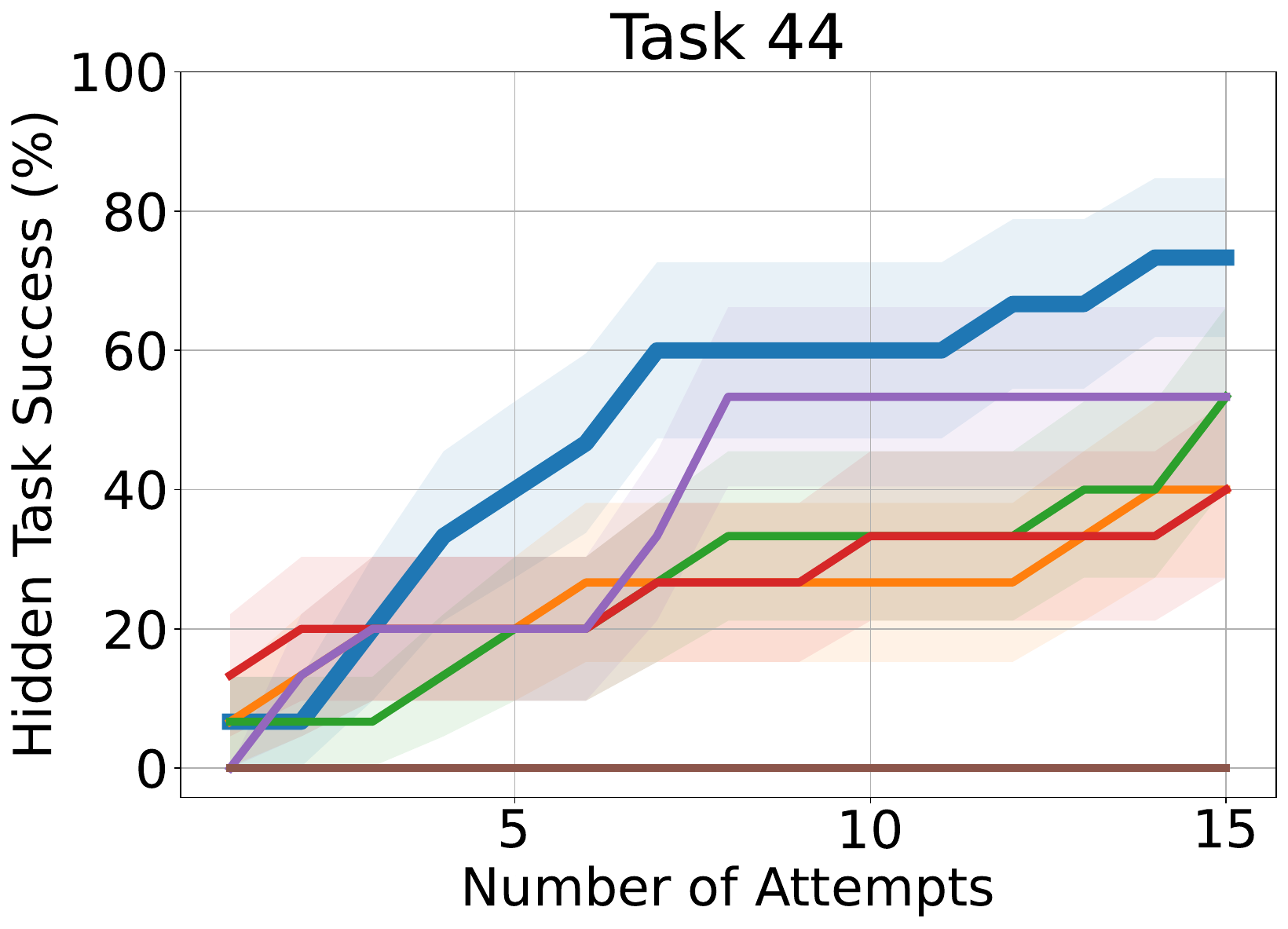}
    \end{subfigure}
    \begin{subfigure}[b]{0.19\textwidth}
        \centering
        \includegraphics[width=\textwidth]{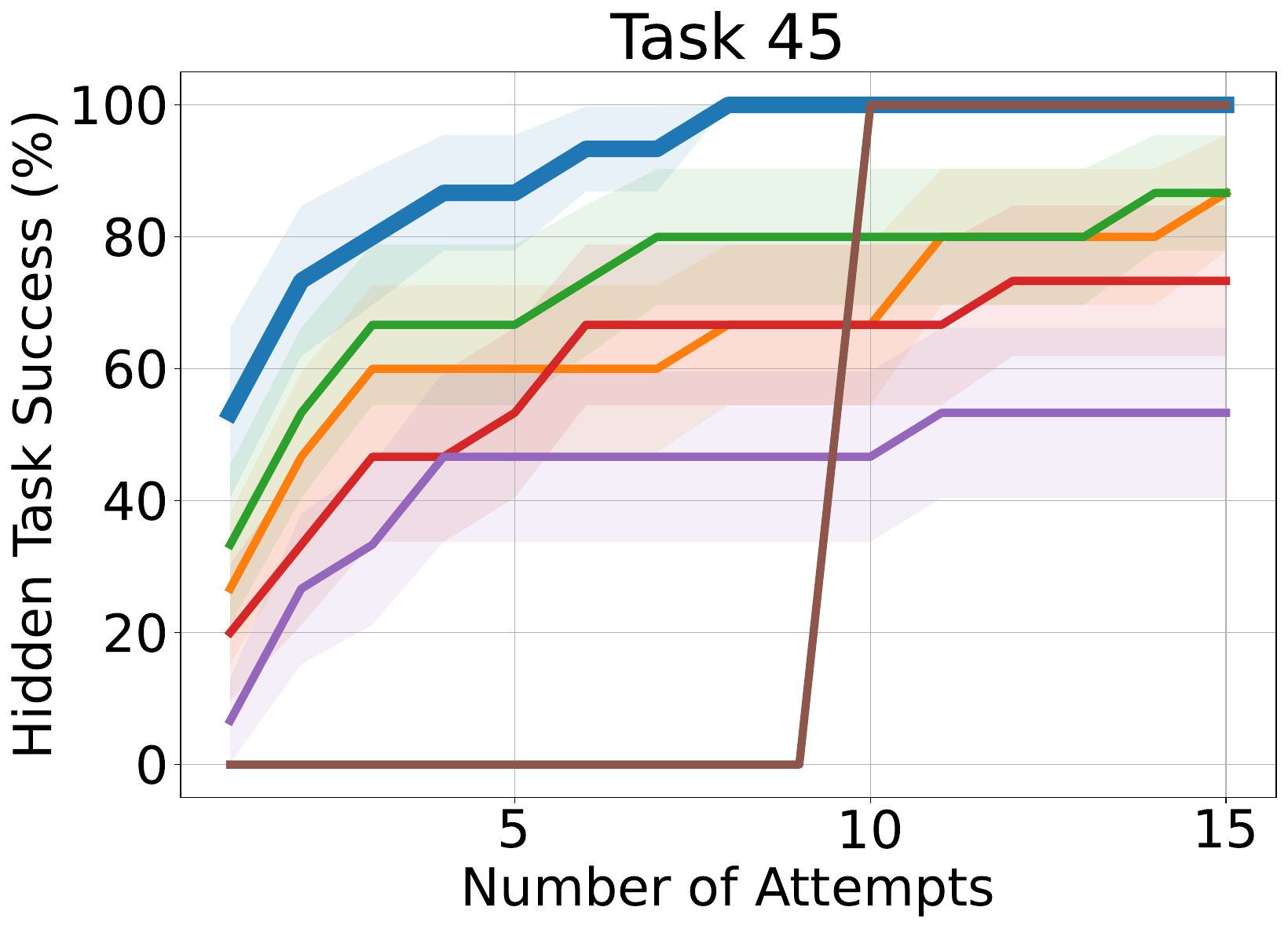}
    \end{subfigure}
    \caption{Performance on individual Libero tasks (1-45) in evaluation with task hidden.}
    \label{fig:libero_notask_individual1}
\end{figure}

\begin{figure}[H]
    \centering
    \begin{subfigure}[b]{0.19\textwidth}
        \centering
        \includegraphics[width=\textwidth]{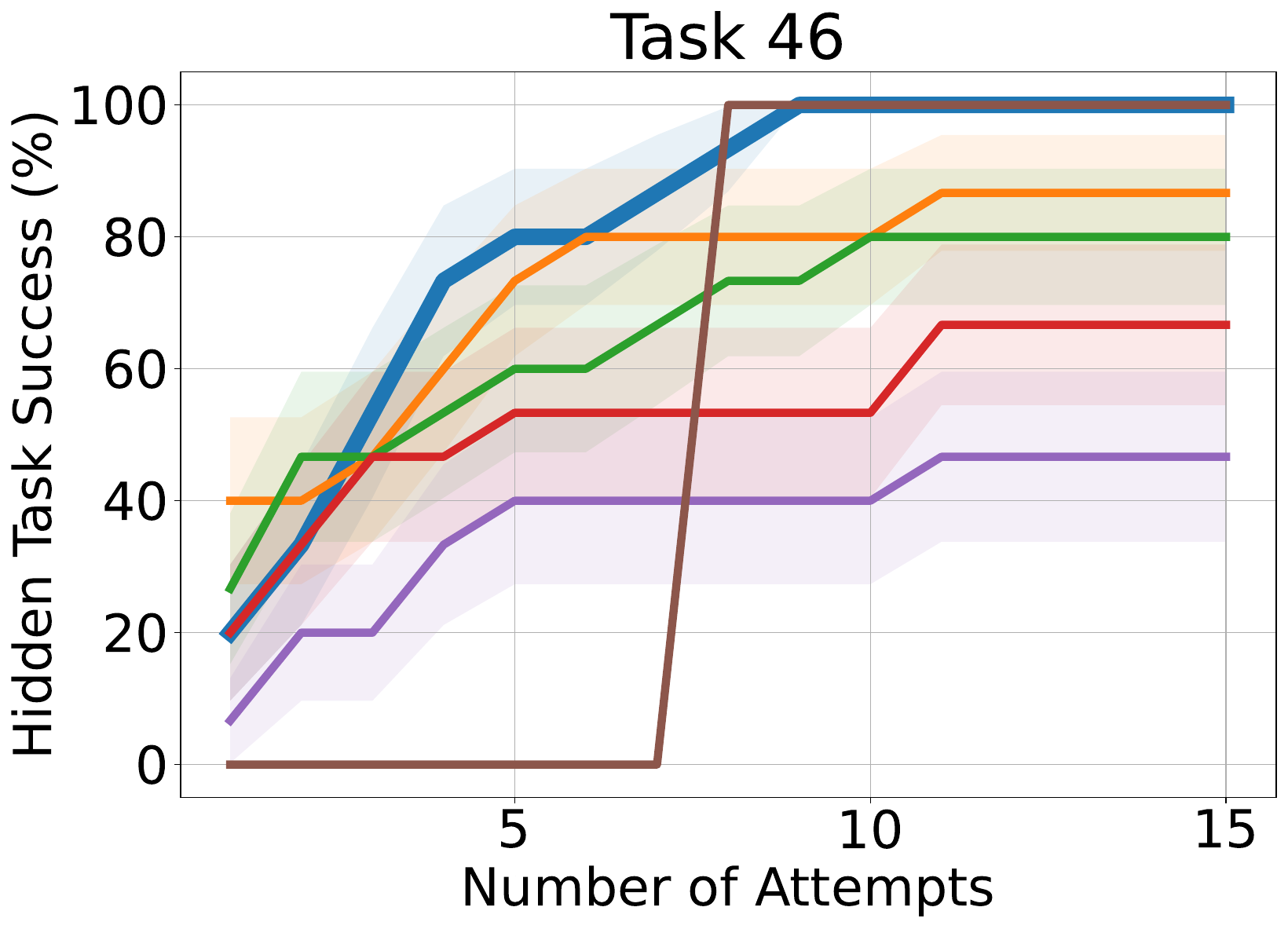}
    \end{subfigure}
    \hfill
    \begin{subfigure}[b]{0.19\textwidth}
        \centering
        \includegraphics[width=\textwidth]{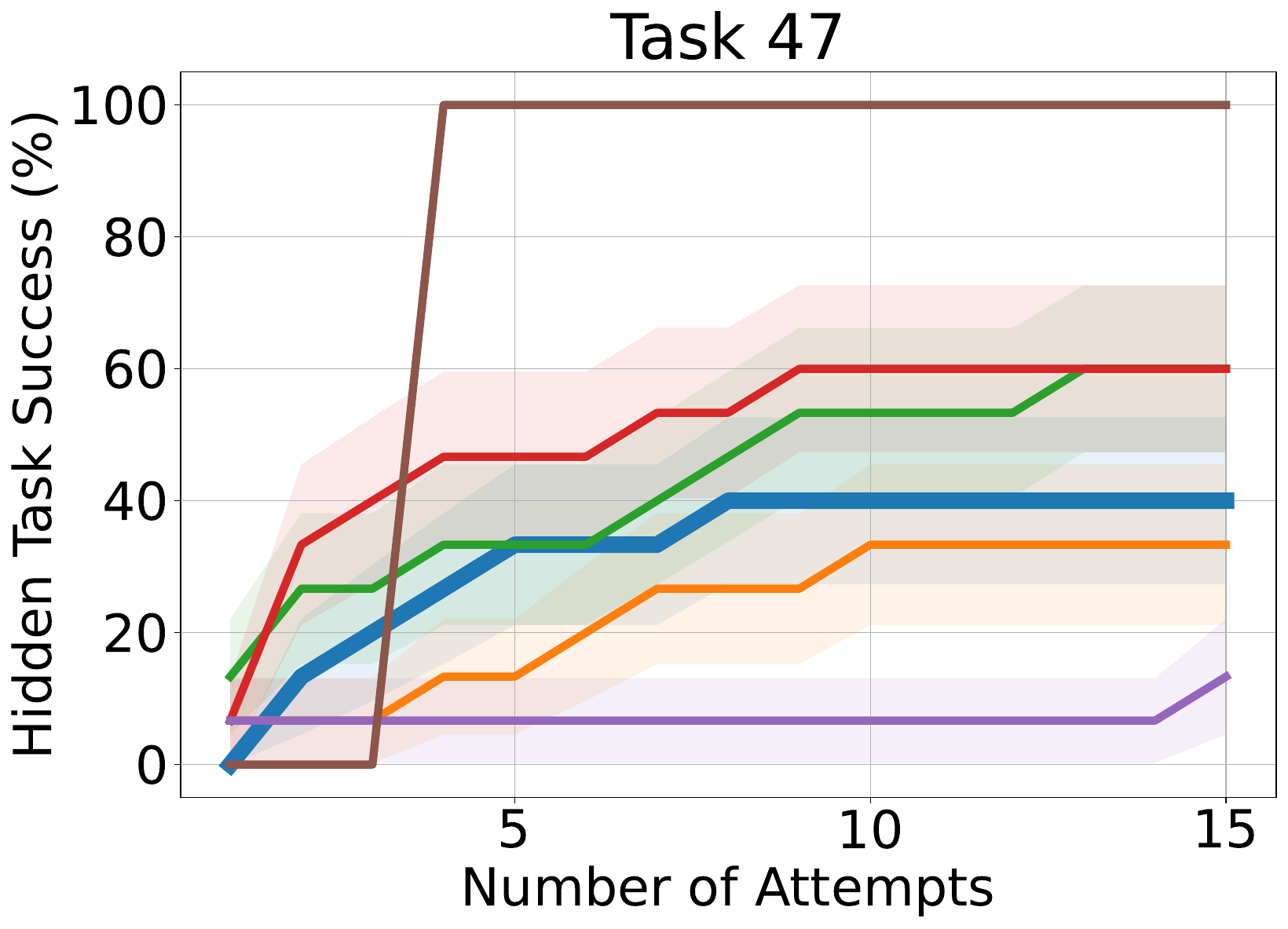}
    \end{subfigure}
    \hfill
    \begin{subfigure}[b]{0.19\textwidth}
        \centering
        \includegraphics[width=\textwidth]{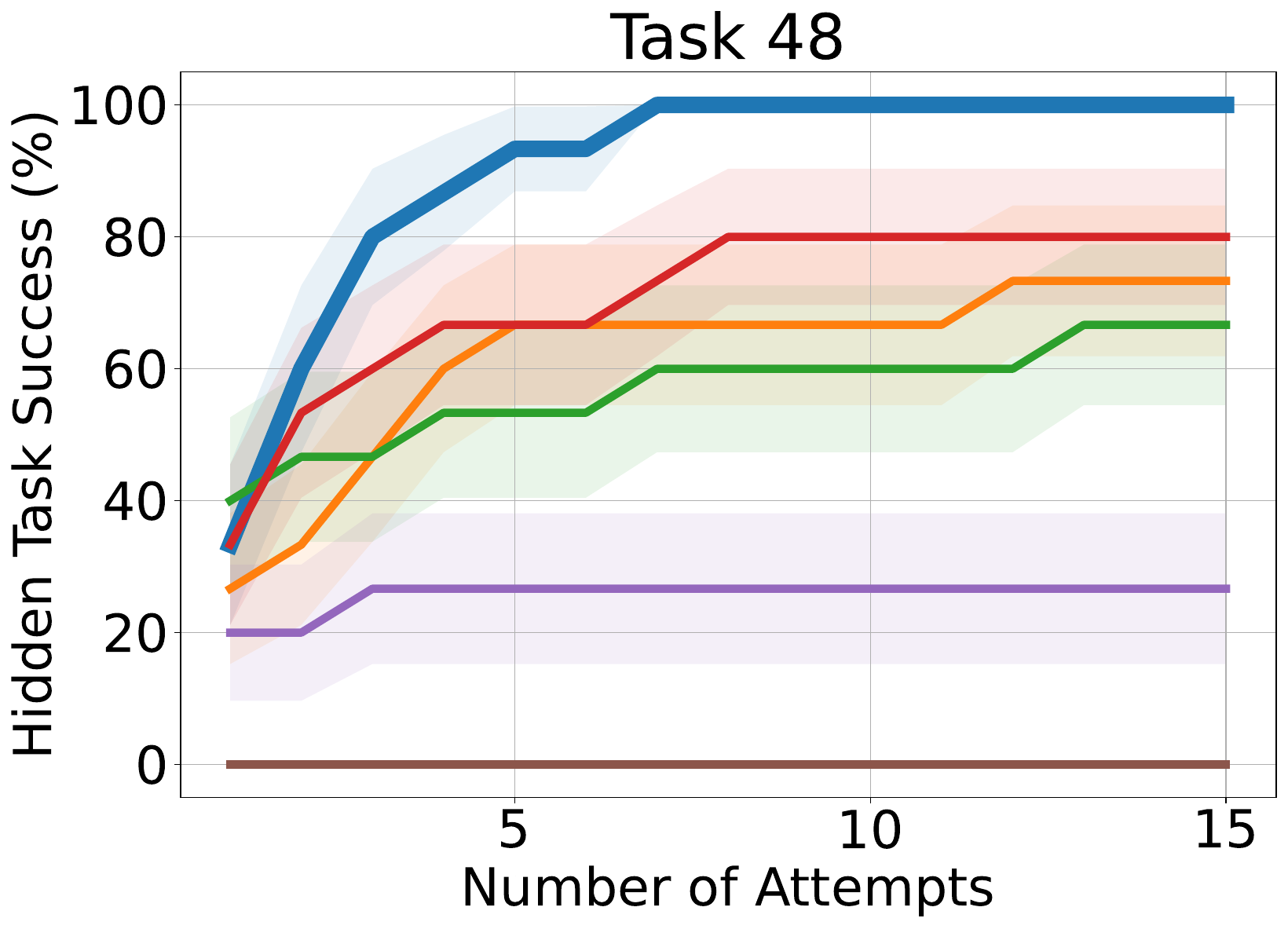}
    \end{subfigure}
    \hfill
    \begin{subfigure}[b]{0.19\textwidth}
        \centering
        \includegraphics[width=\textwidth]{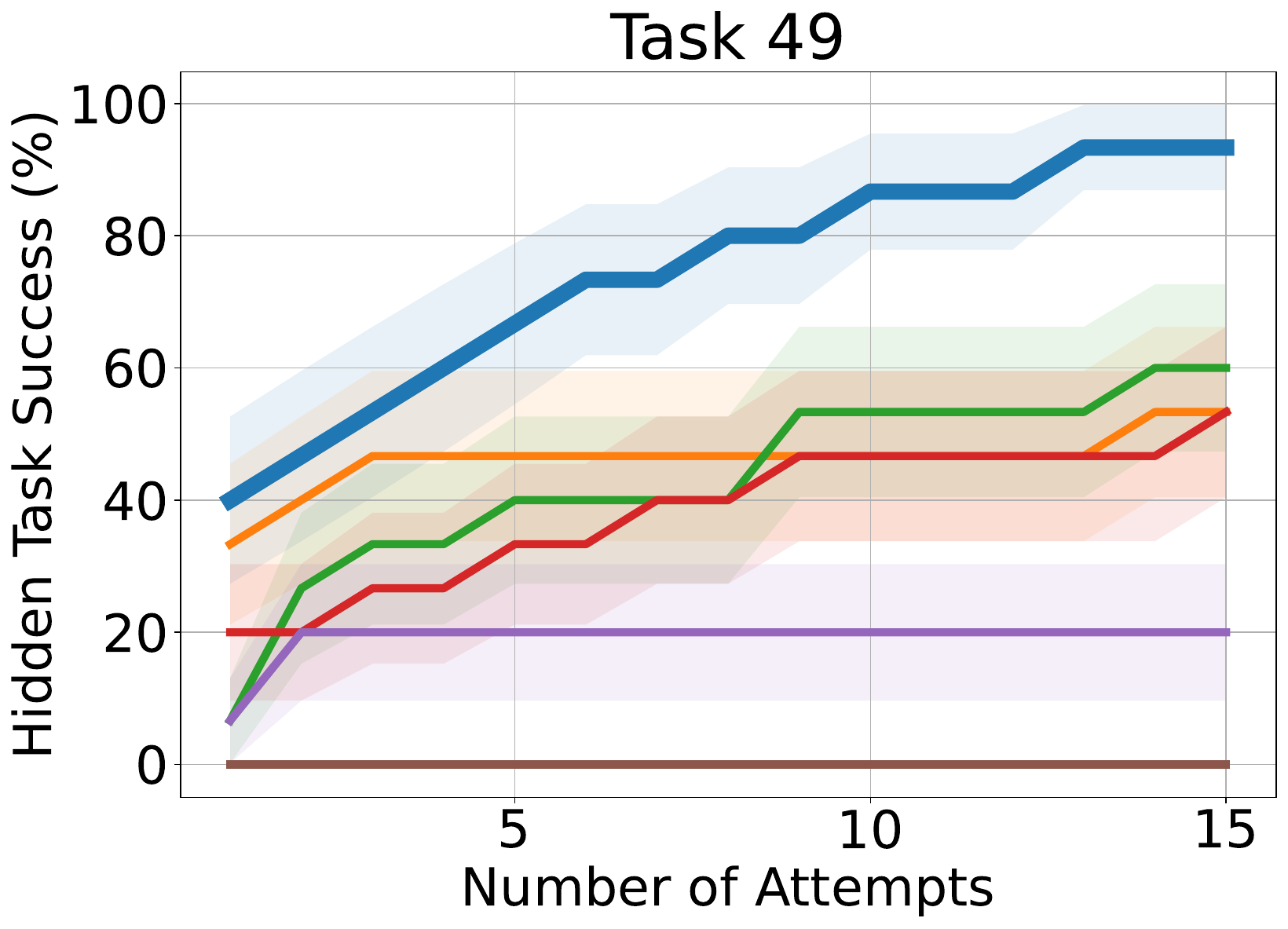}
    \end{subfigure}
    \begin{subfigure}[b]{0.19\textwidth}
        \centering
        \includegraphics[width=\textwidth]{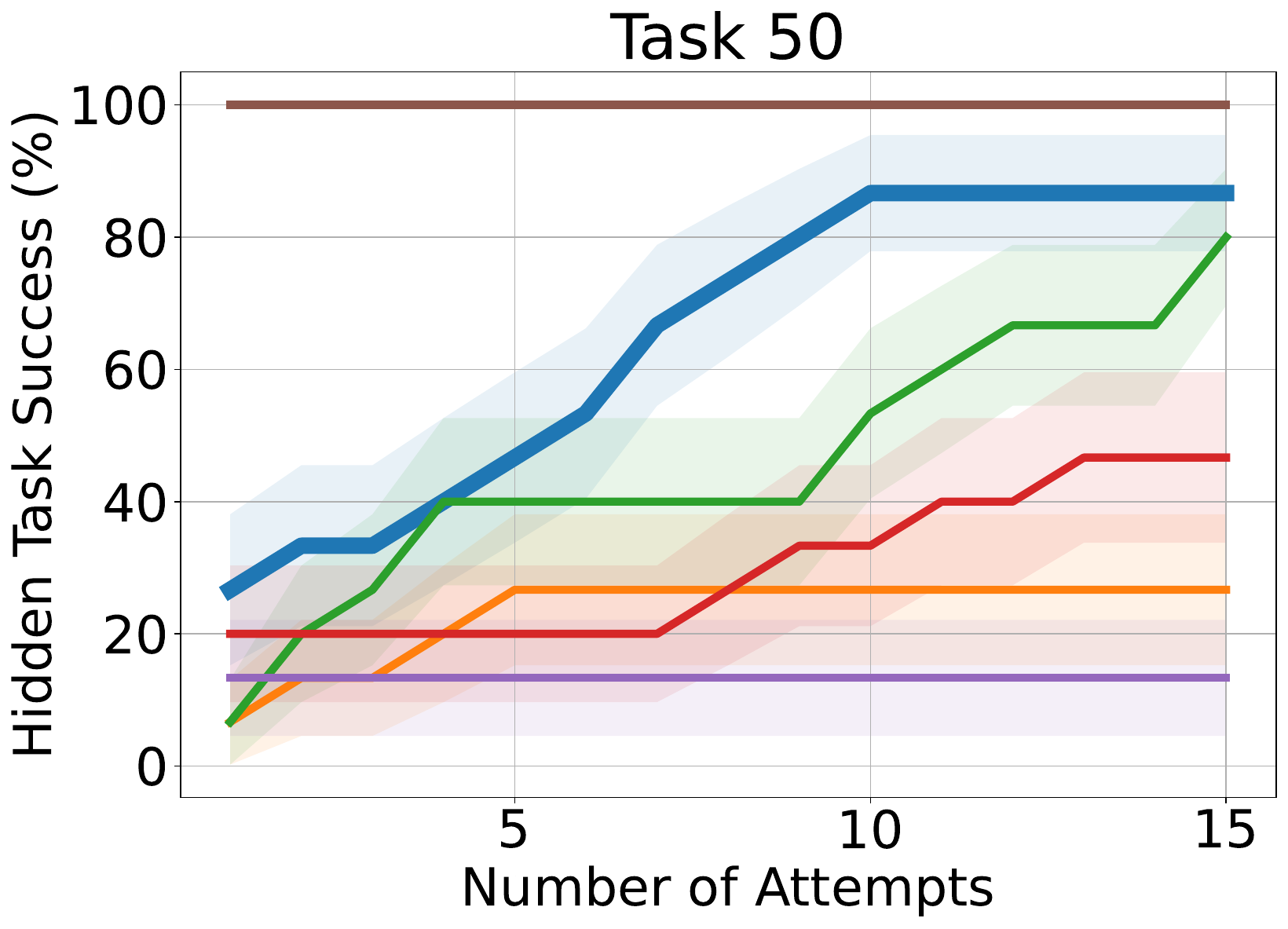}
    \end{subfigure}
    
     \begin{subfigure}[b]{0.19\textwidth}
        \centering
        \includegraphics[width=\textwidth]{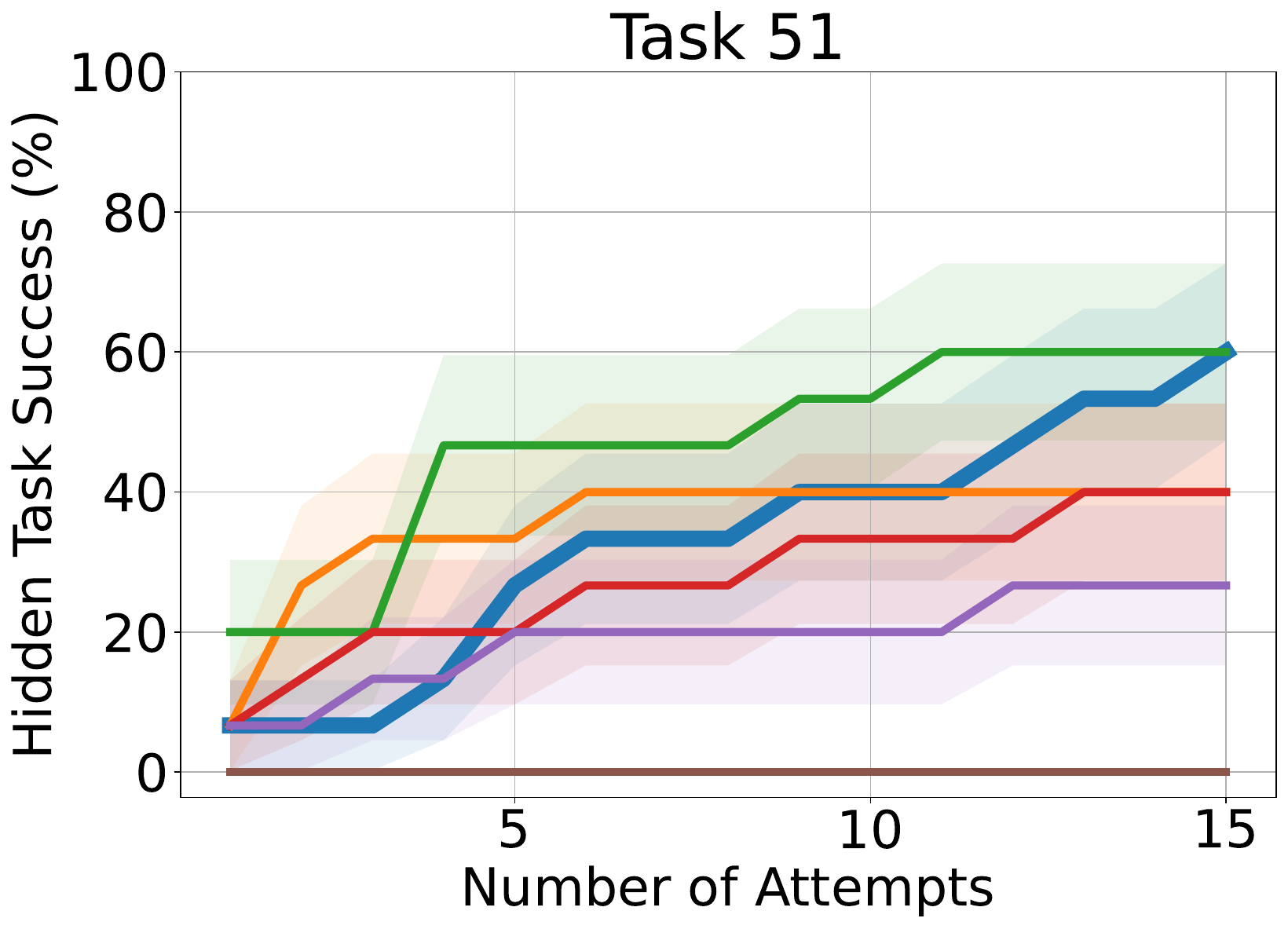}
    \end{subfigure}
    \hfill
    \begin{subfigure}[b]{0.19\textwidth}
        \centering
        \includegraphics[width=\textwidth]{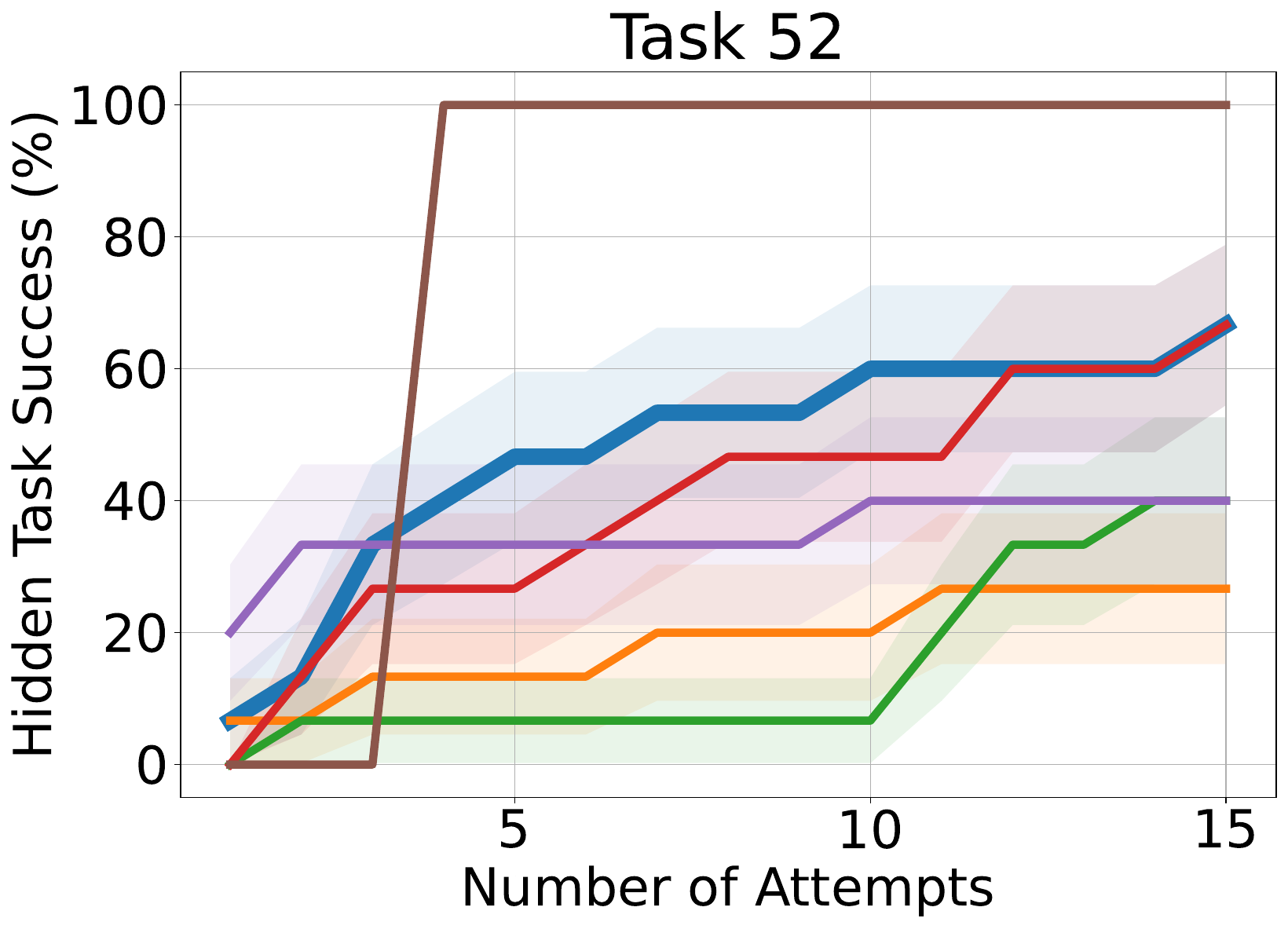}
    \end{subfigure}
    \hfill
    \begin{subfigure}[b]{0.19\textwidth}
        \centering
        \includegraphics[width=\textwidth]{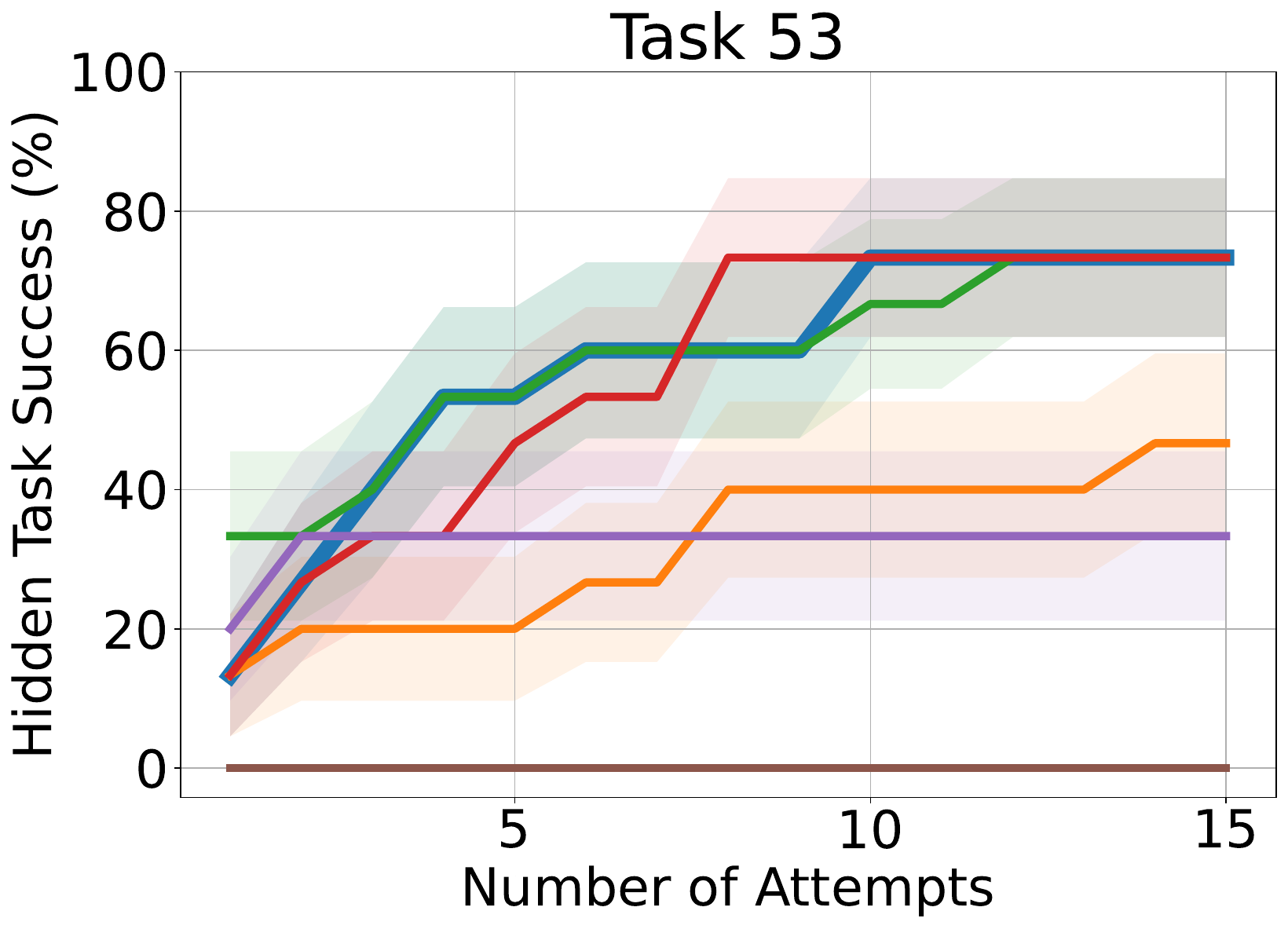}
    \end{subfigure}
    \hfill
    \begin{subfigure}[b]{0.19\textwidth}
        \centering
        \includegraphics[width=\textwidth]{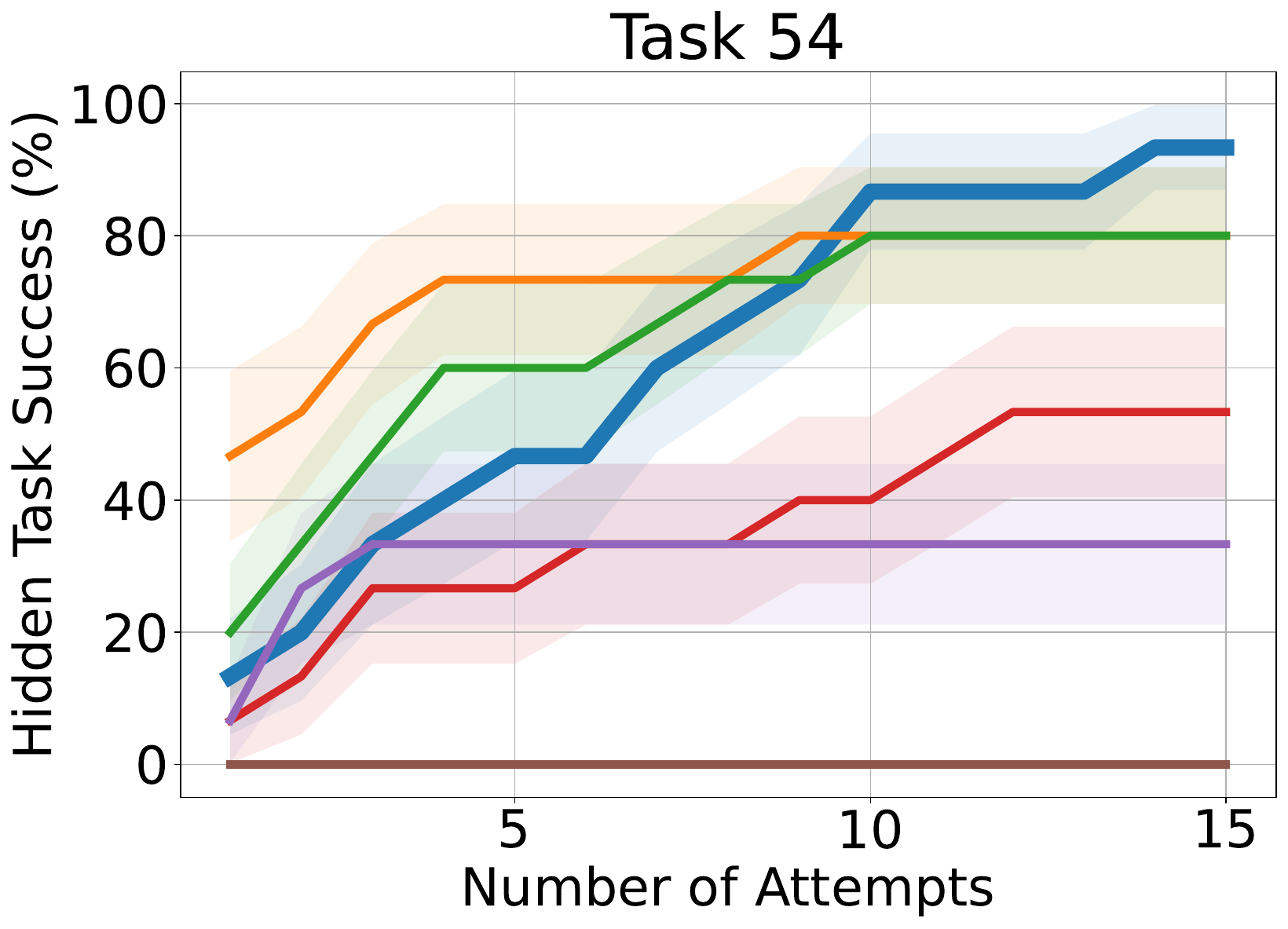}
    \end{subfigure}
    \begin{subfigure}[b]{0.19\textwidth}
        \centering
        \includegraphics[width=\textwidth]{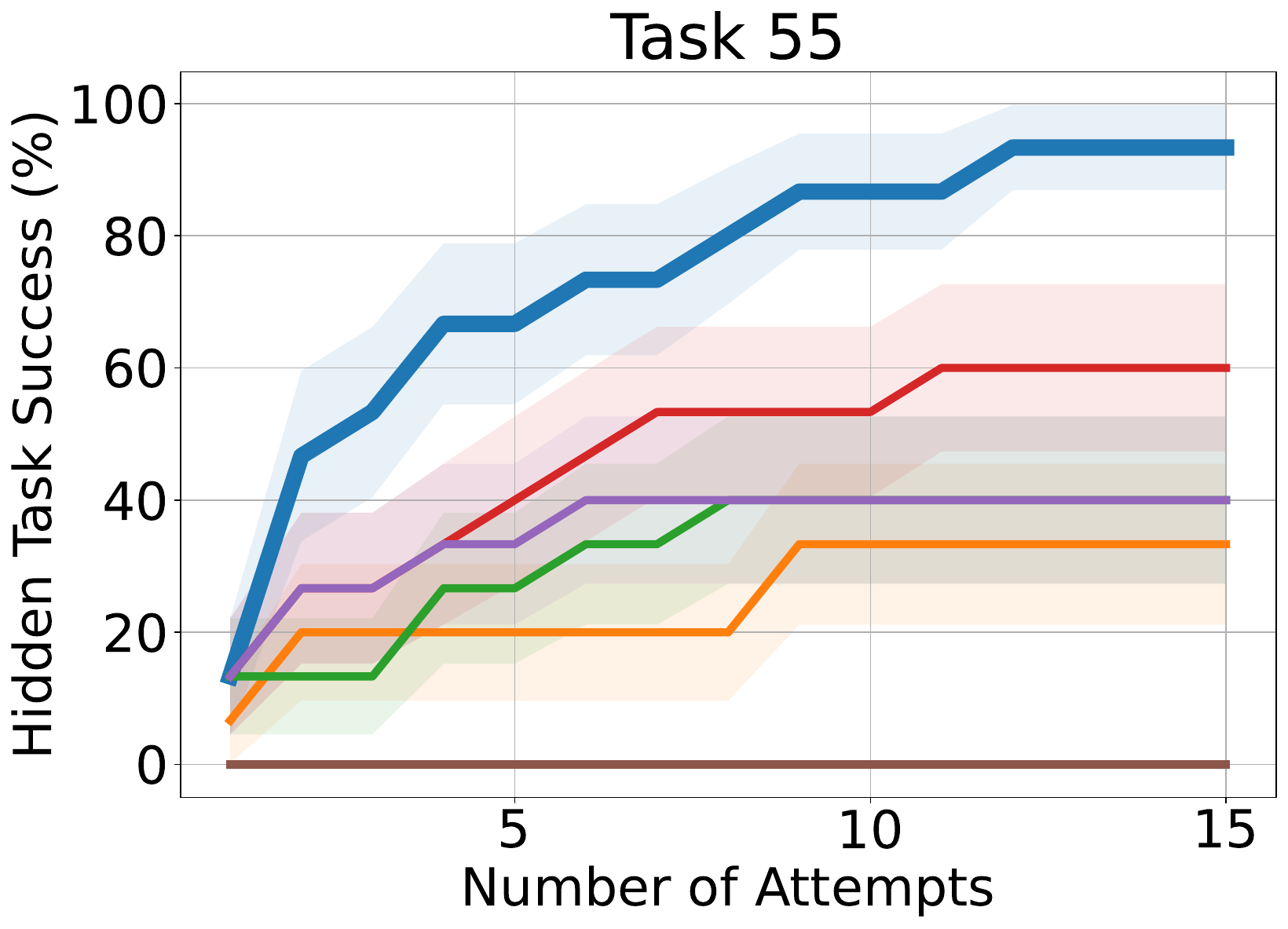}
    \end{subfigure}
    
     \begin{subfigure}[b]{0.19\textwidth}
        \centering
        \includegraphics[width=\textwidth]{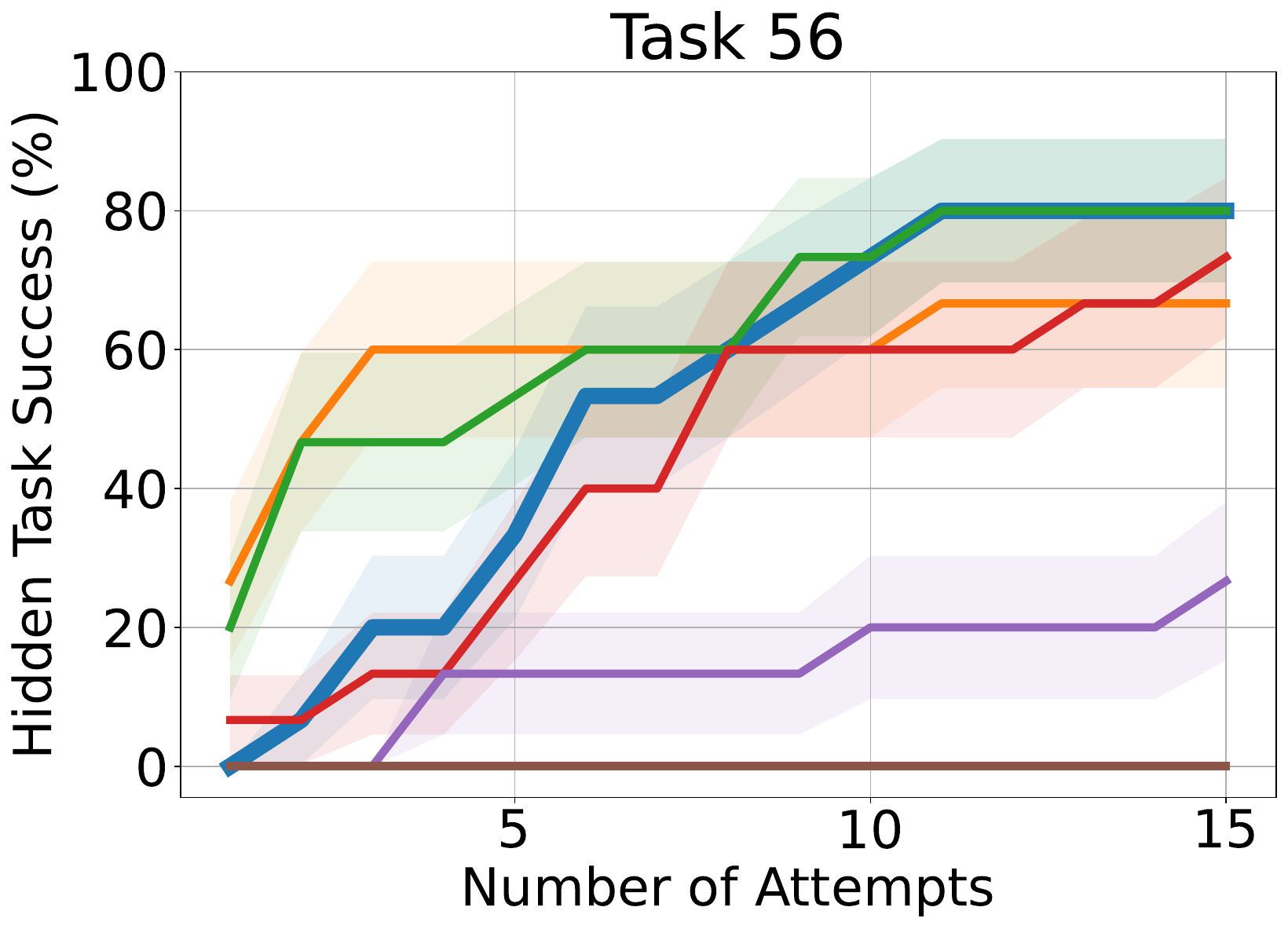}
    \end{subfigure}
    \hfill
    \begin{subfigure}[b]{0.19\textwidth}
        \centering
        \includegraphics[width=\textwidth]{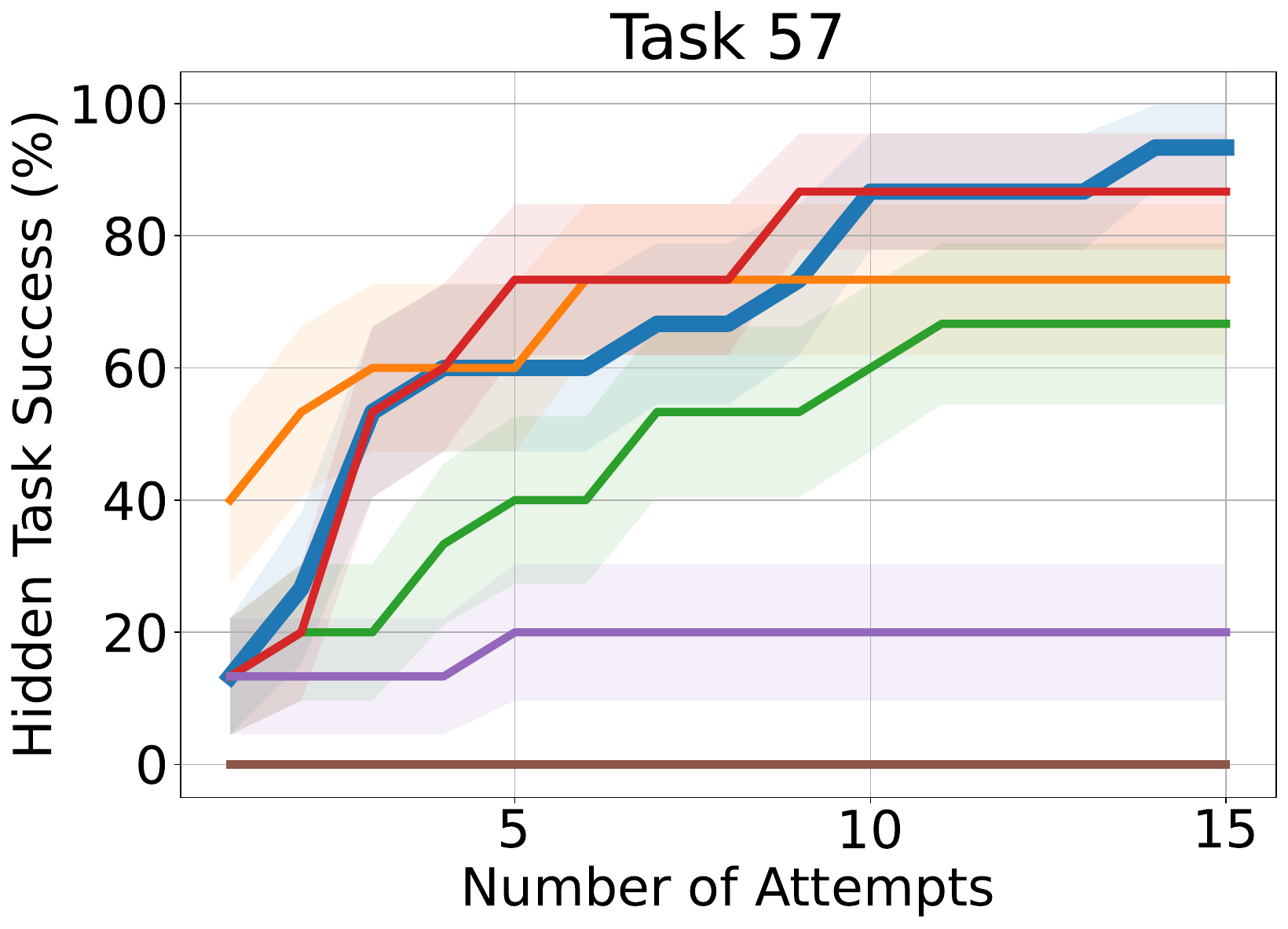}
    \end{subfigure}
    \hfill
    \begin{subfigure}[b]{0.19\textwidth}
        \centering
        \includegraphics[width=\textwidth]{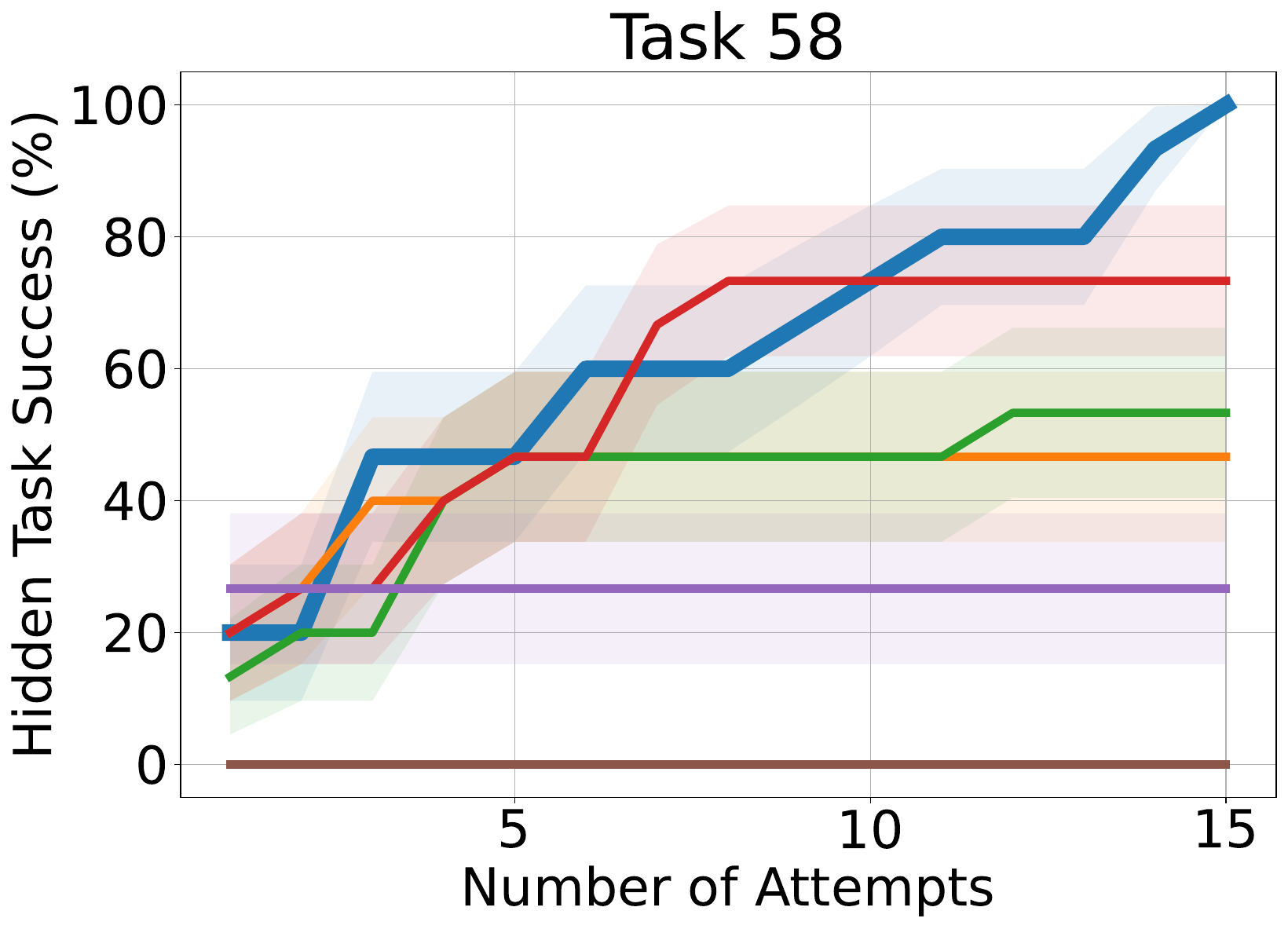}
    \end{subfigure}
    \hfill
    \begin{subfigure}[b]{0.19\textwidth}
        \centering
        \includegraphics[width=\textwidth]{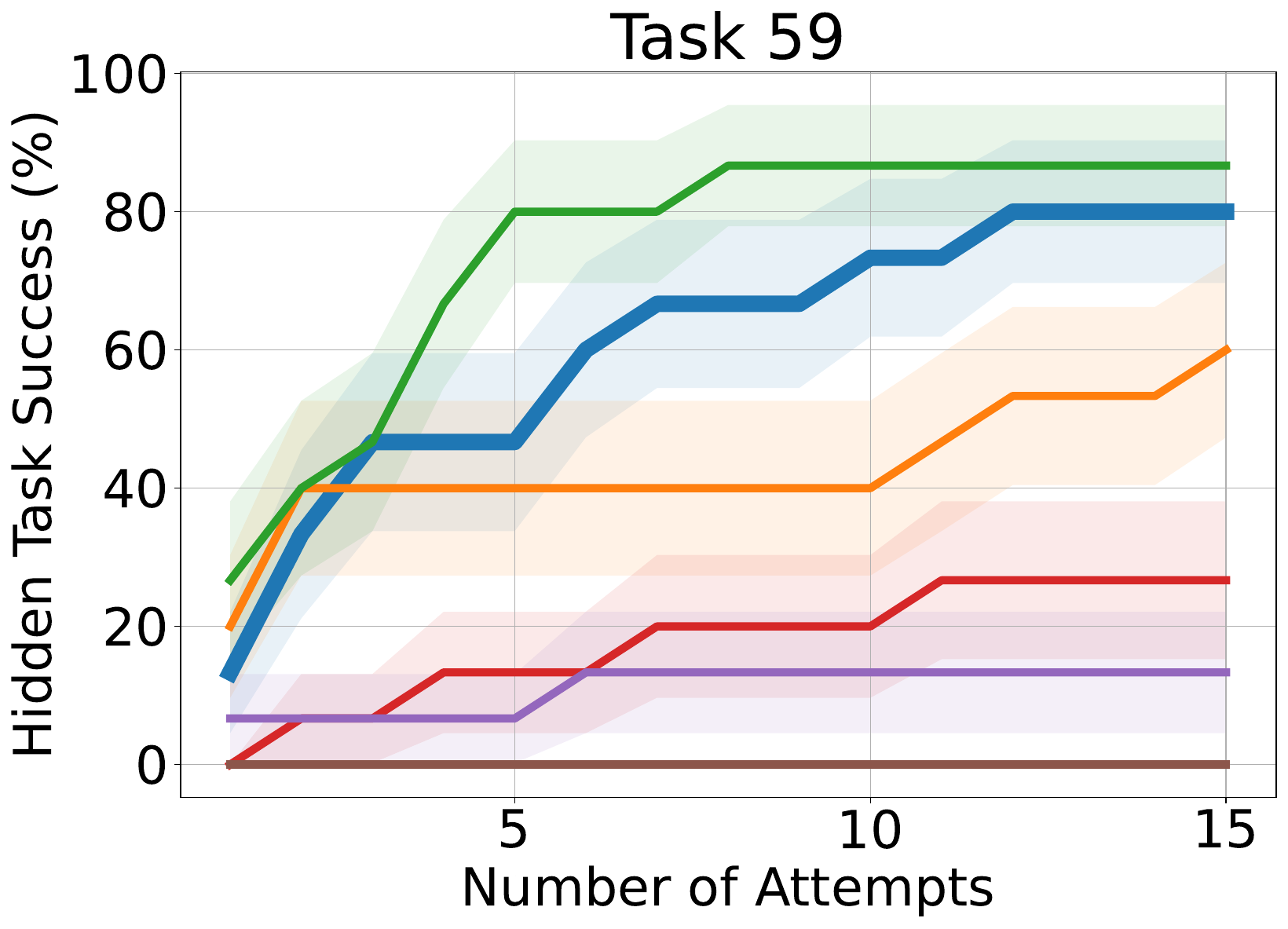}
    \end{subfigure}
    \begin{subfigure}[b]{0.19\textwidth}
        \centering
        \includegraphics[width=\textwidth]{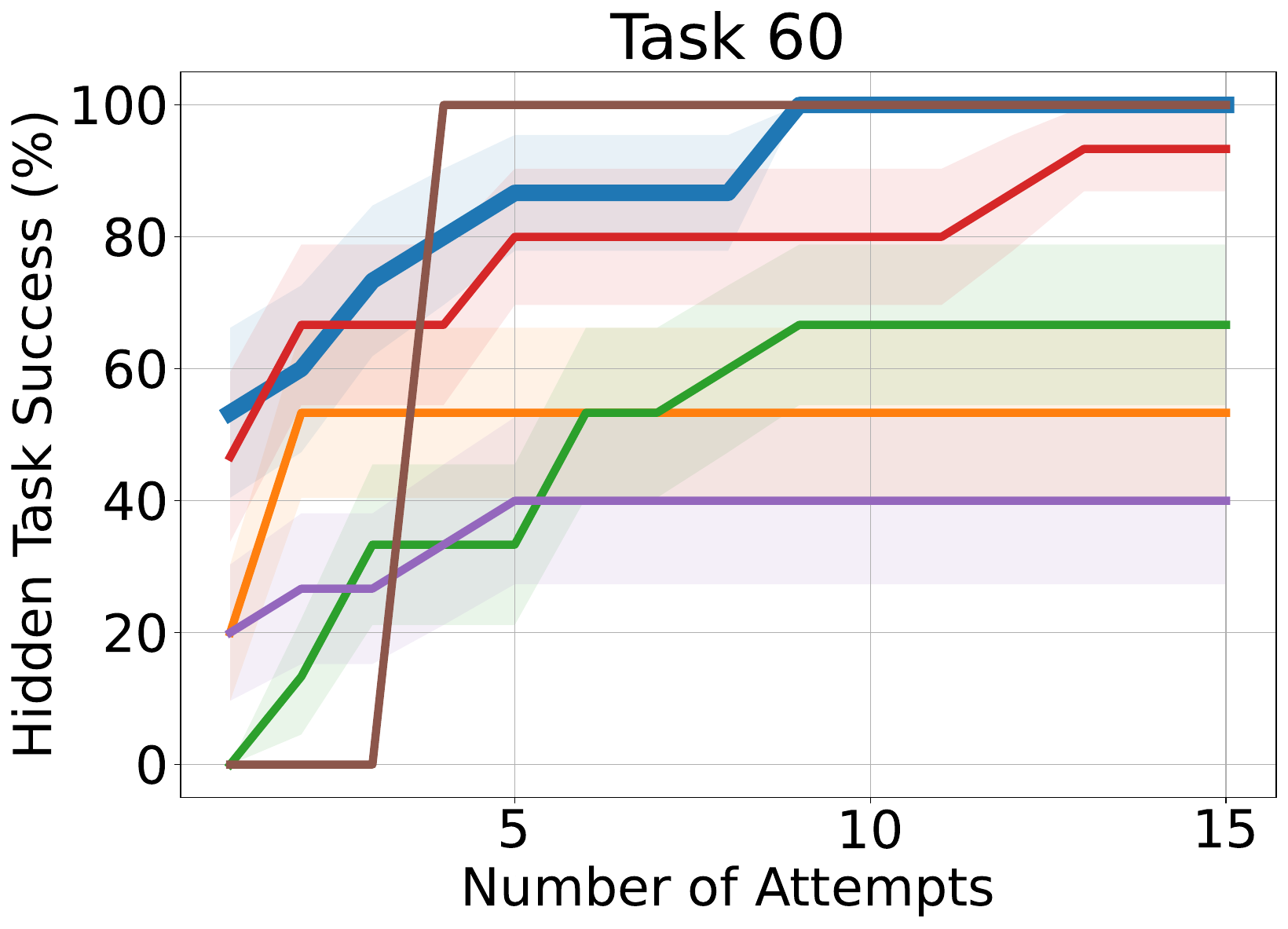}
    \end{subfigure}
    
     \begin{subfigure}[b]{0.19\textwidth}
        \centering
        \includegraphics[width=\textwidth]{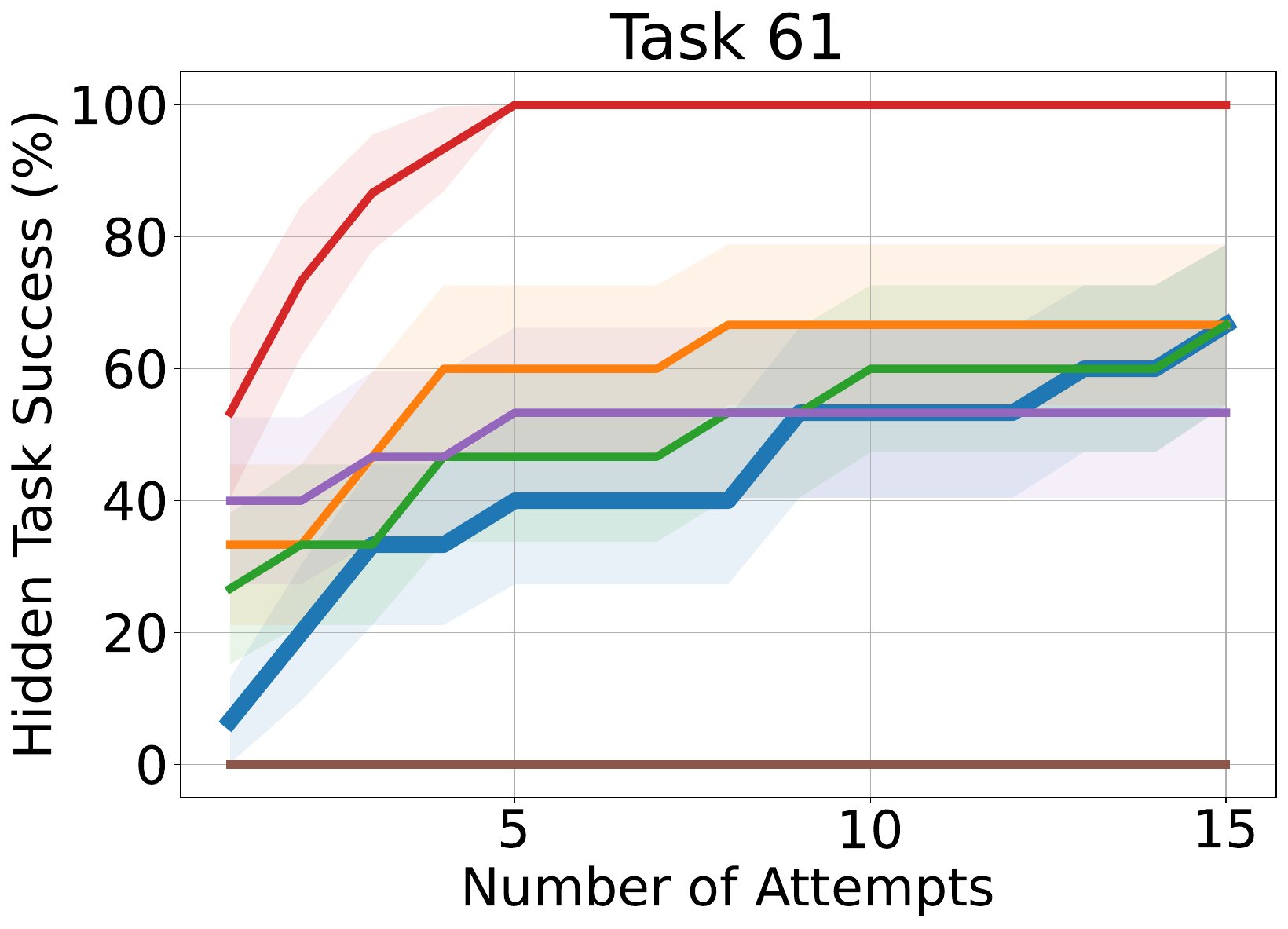}
    \end{subfigure}
    \hfill
    \begin{subfigure}[b]{0.19\textwidth}
        \centering
        \includegraphics[width=\textwidth]{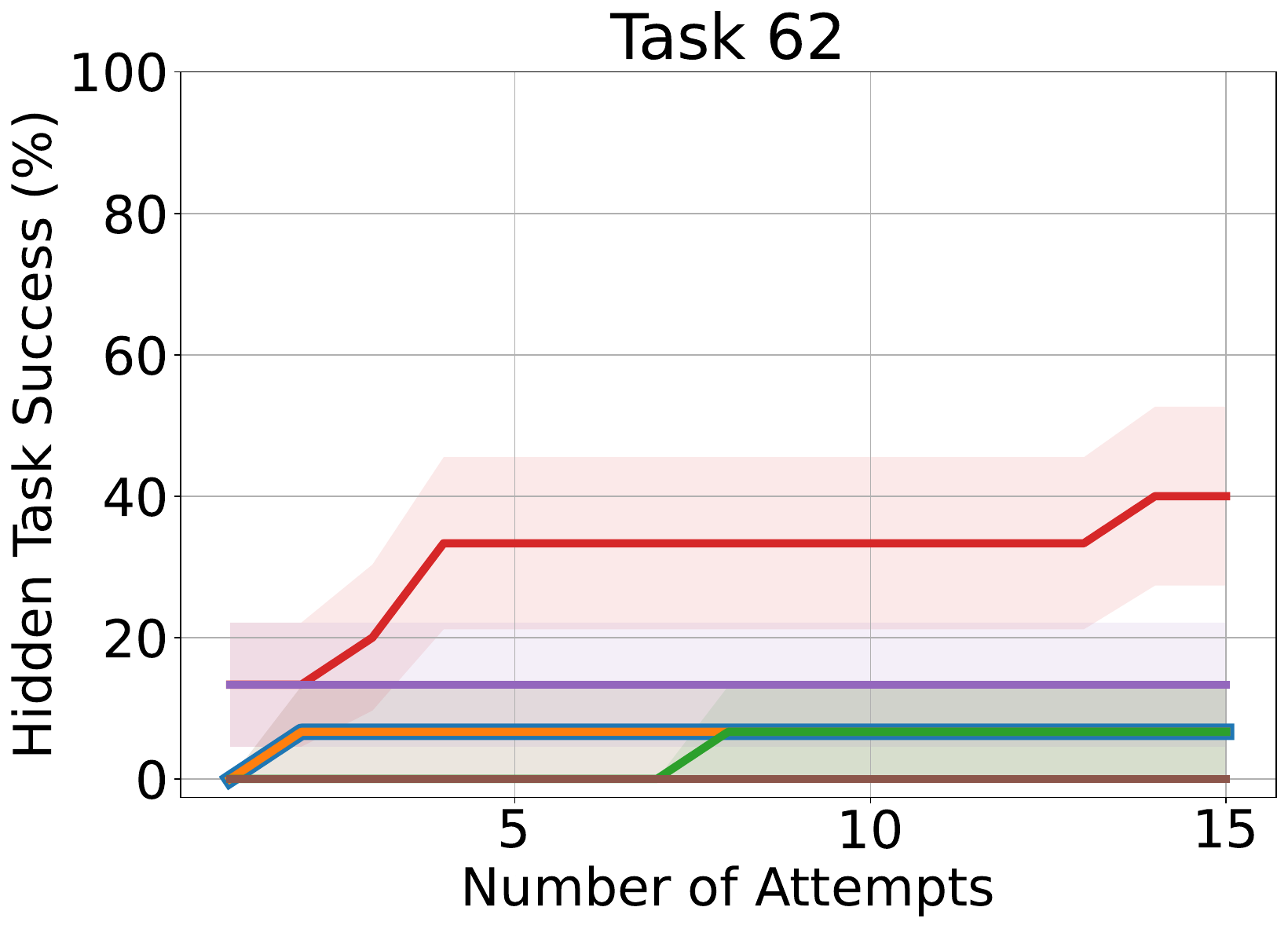}
    \end{subfigure}
    \hfill
    \begin{subfigure}[b]{0.19\textwidth}
        \centering
        \includegraphics[width=\textwidth]{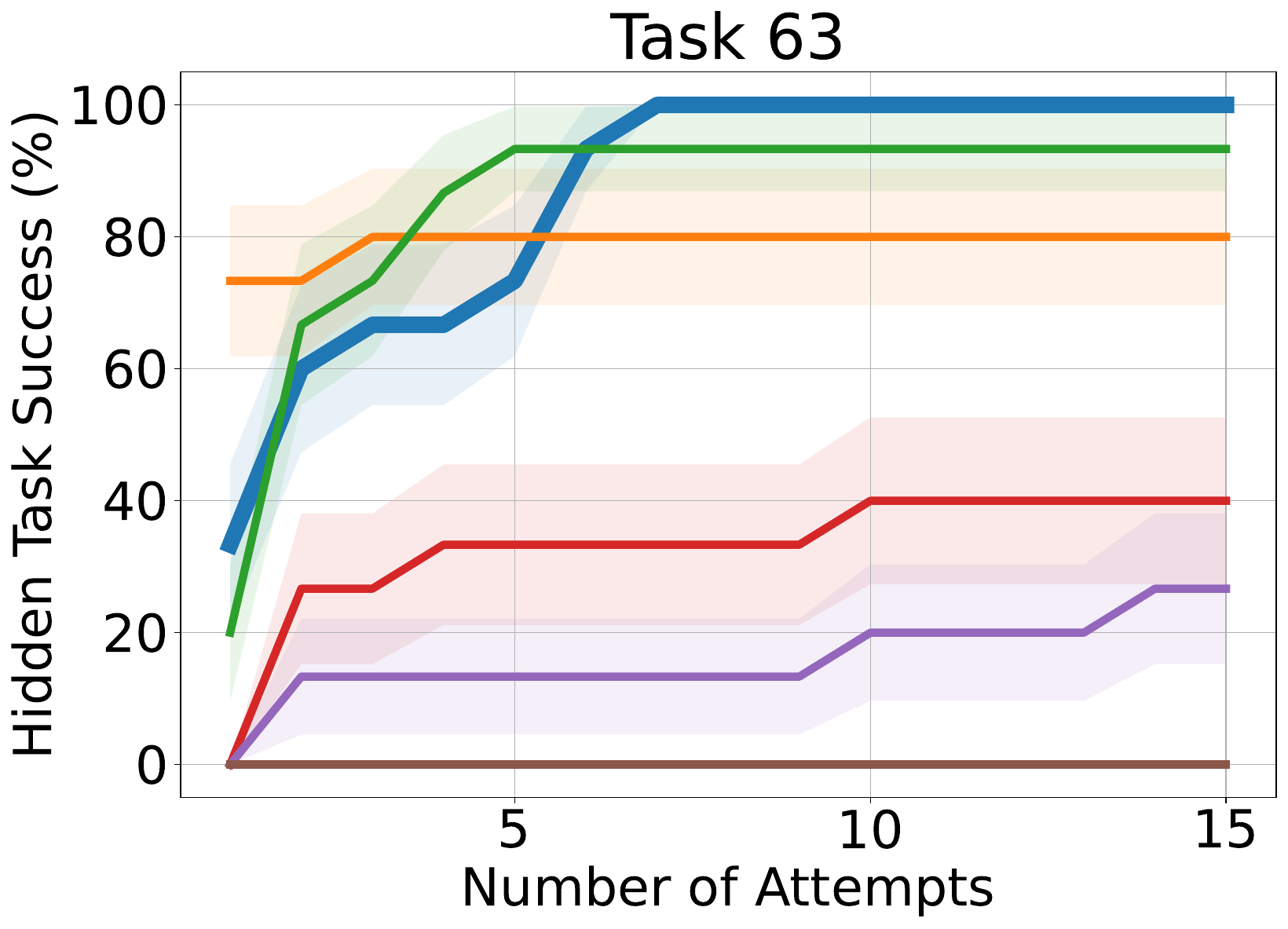}
    \end{subfigure}
    \hfill
    \begin{subfigure}[b]{0.19\textwidth}
        \centering
        \includegraphics[width=\textwidth]{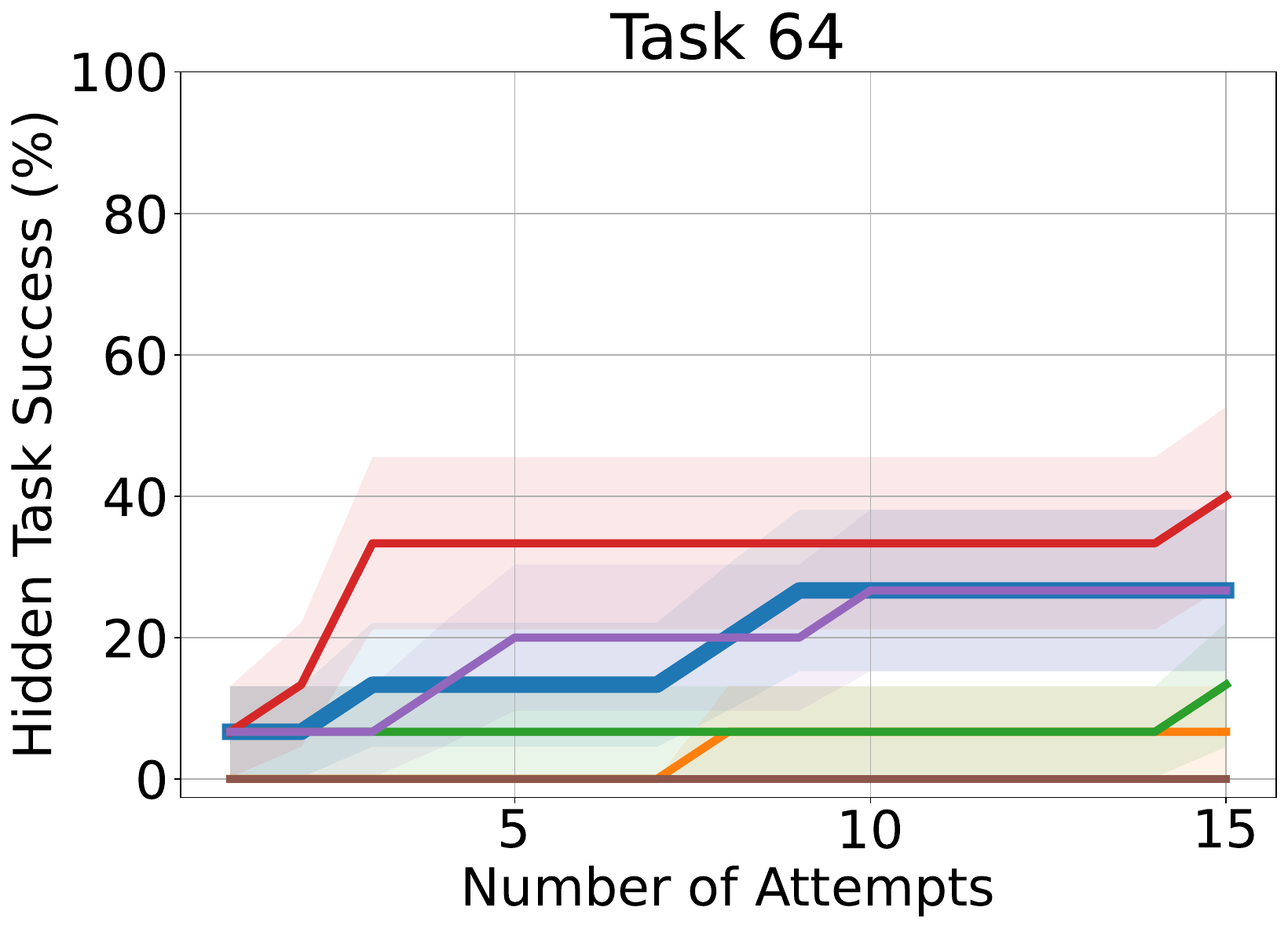}
    \end{subfigure}
    \begin{subfigure}[b]{0.19\textwidth}
        \centering
        \includegraphics[width=\textwidth]{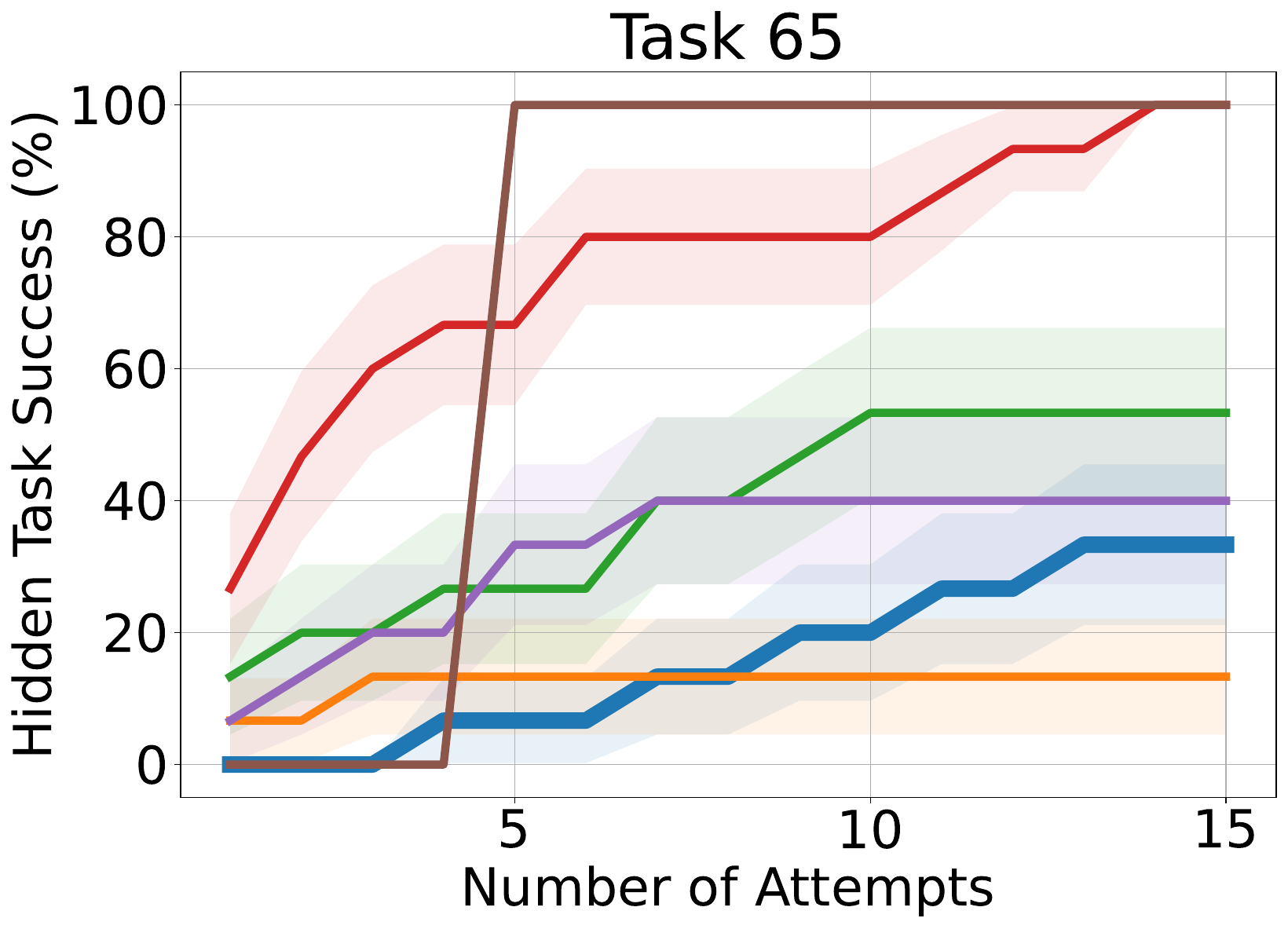}
    \end{subfigure}
    
     \begin{subfigure}[b]{0.19\textwidth}
        \centering
        \includegraphics[width=\textwidth]{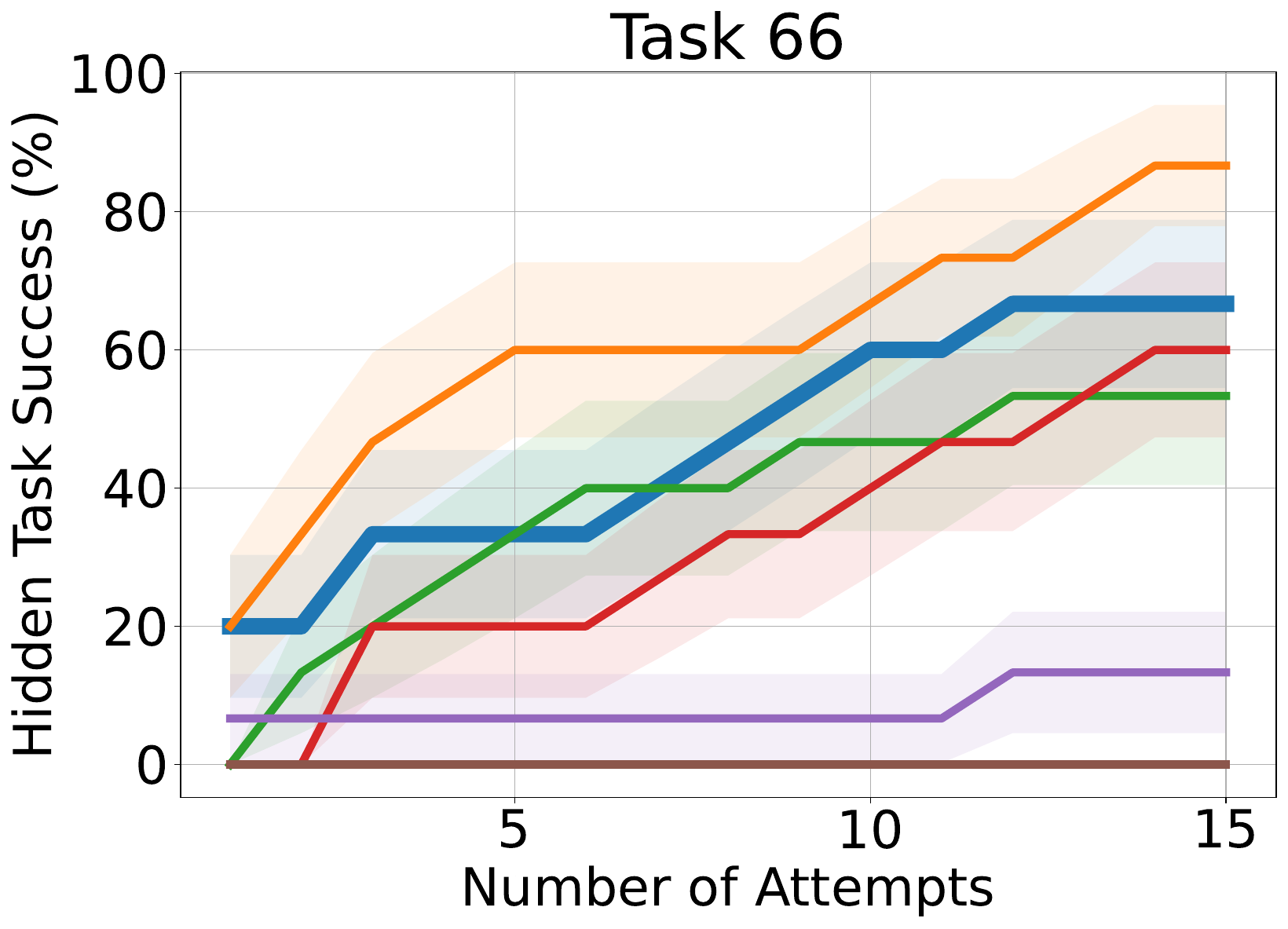}
    \end{subfigure}
    \hfill
    \begin{subfigure}[b]{0.19\textwidth}
        \centering
        \includegraphics[width=\textwidth]{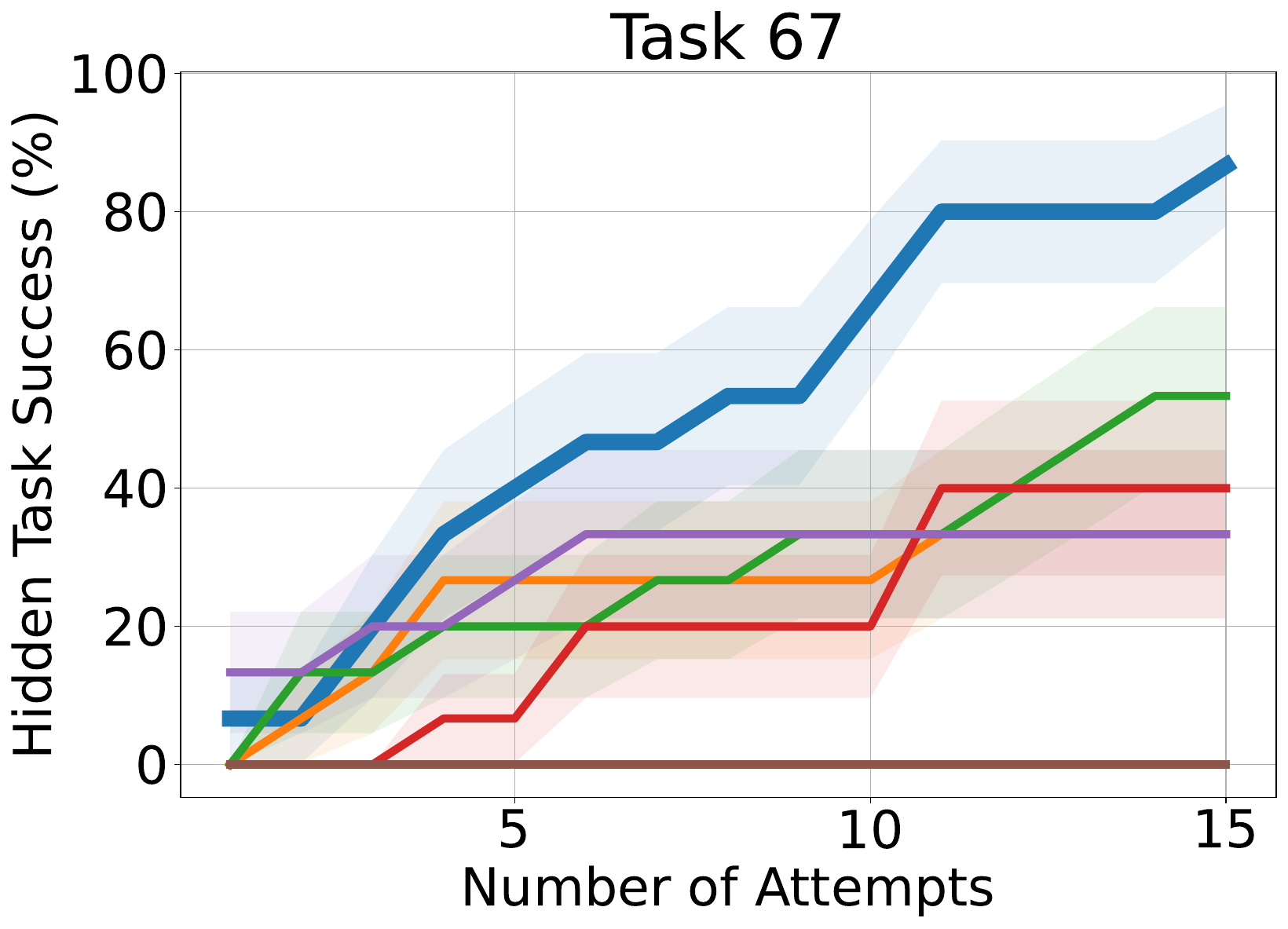}
    \end{subfigure}
    \hfill
    \begin{subfigure}[b]{0.19\textwidth}
        \centering
        \includegraphics[width=\textwidth]{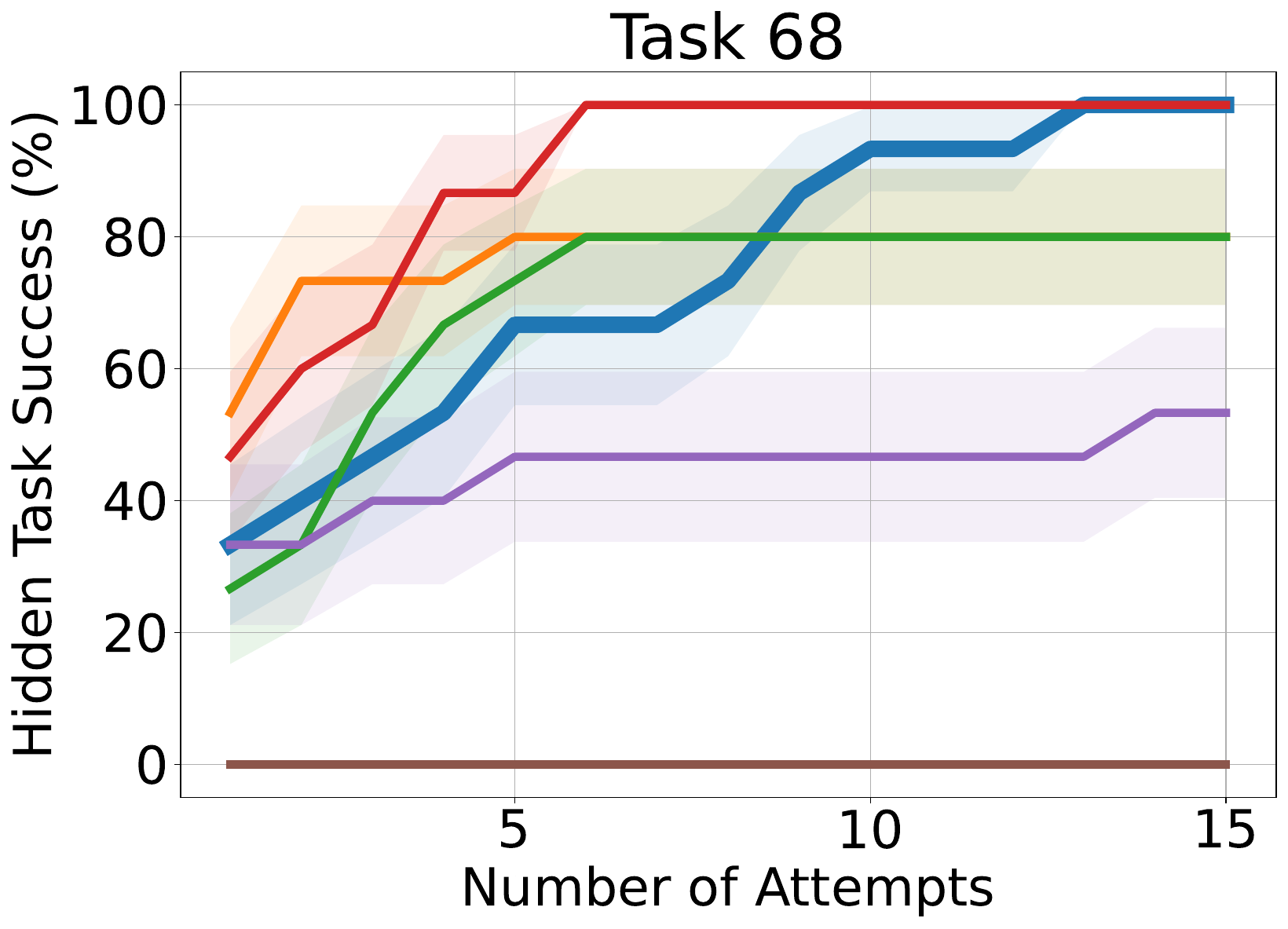}
    \end{subfigure}
    \hfill
    \begin{subfigure}[b]{0.19\textwidth}
        \centering
        \includegraphics[width=\textwidth]{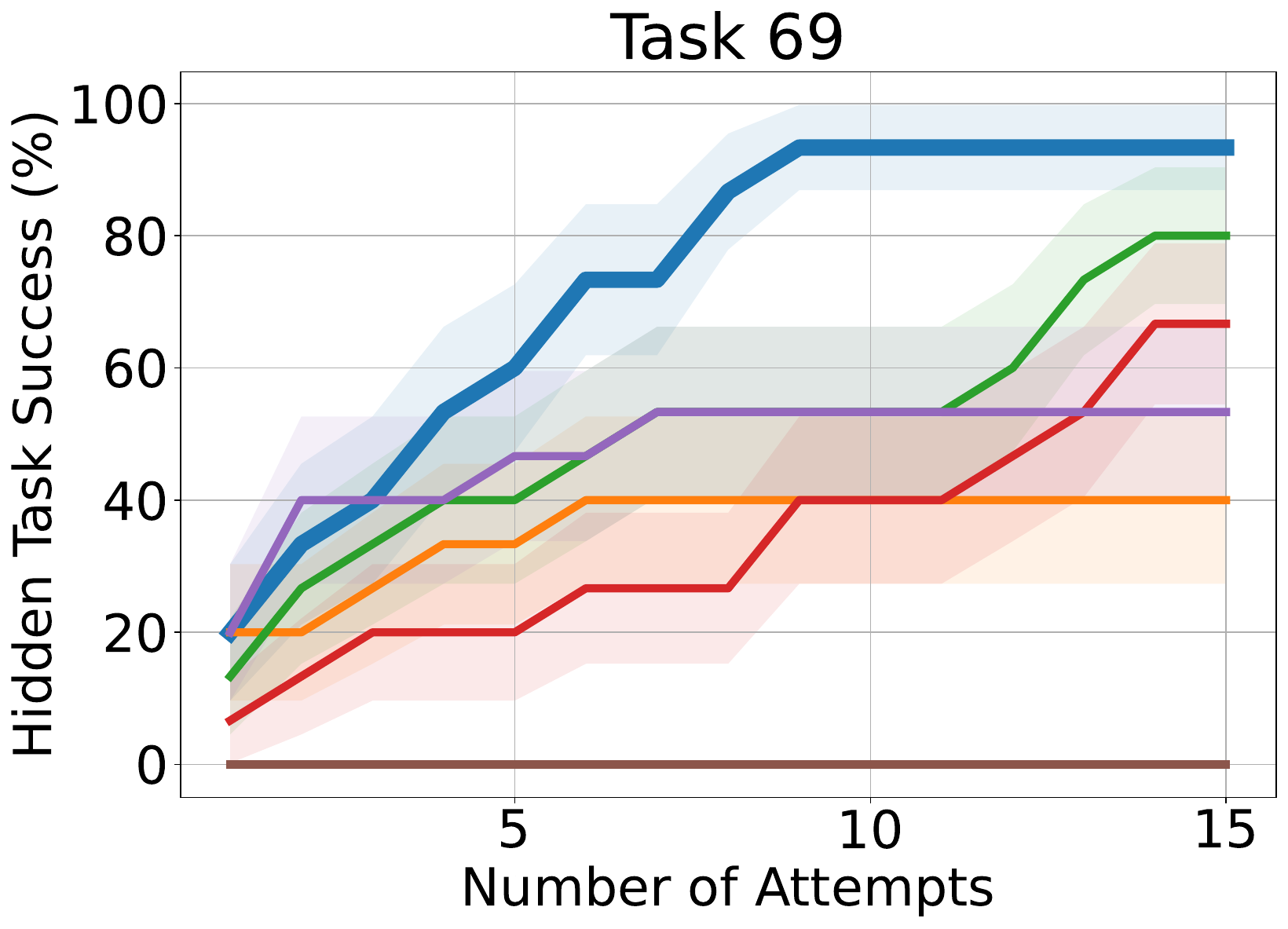}
    \end{subfigure}
    \begin{subfigure}[b]{0.19\textwidth}
        \centering
        \includegraphics[width=\textwidth]{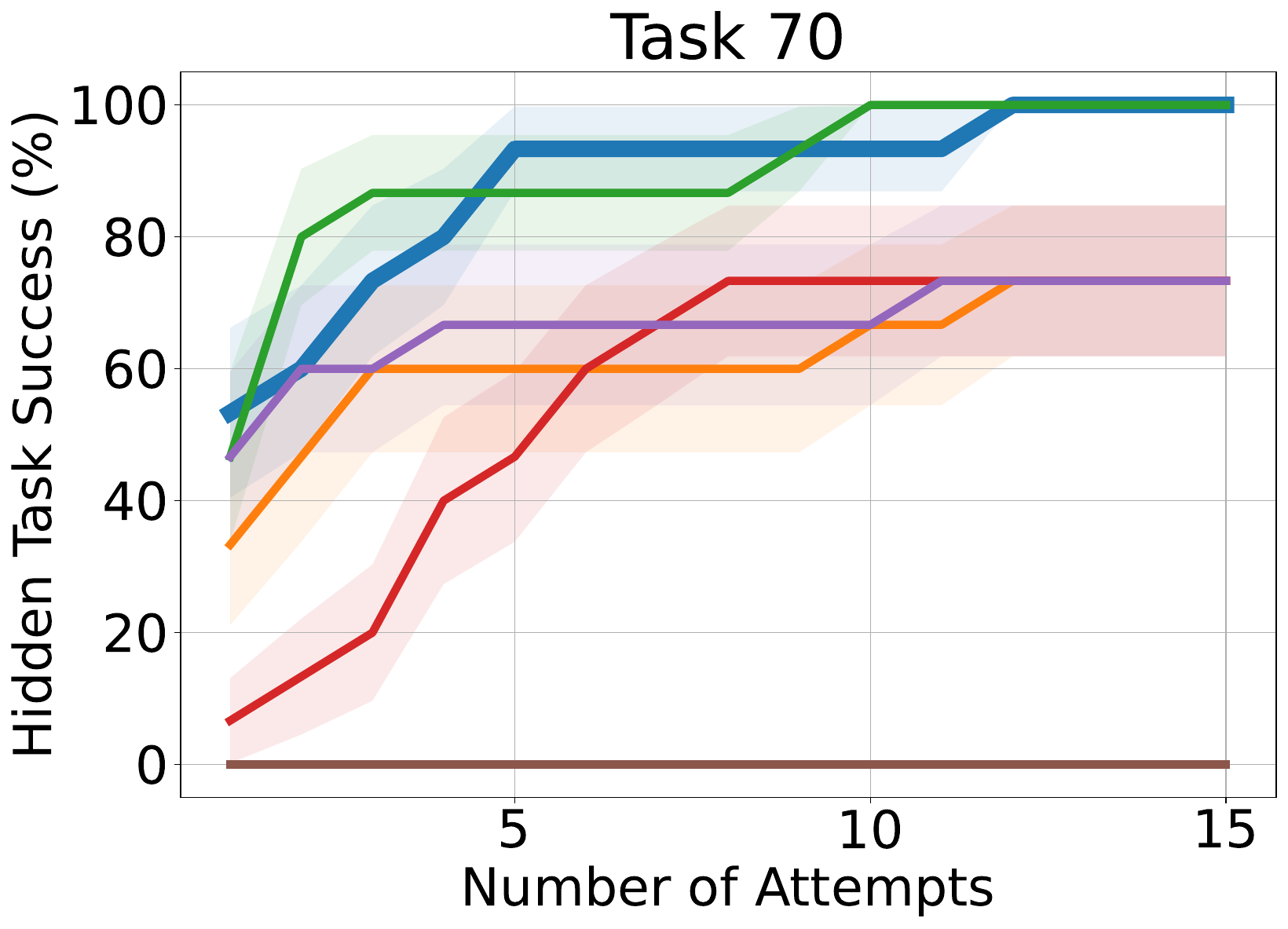}
    \end{subfigure}
    
     \begin{subfigure}[b]{0.19\textwidth}
        \centering
        \includegraphics[width=\textwidth]{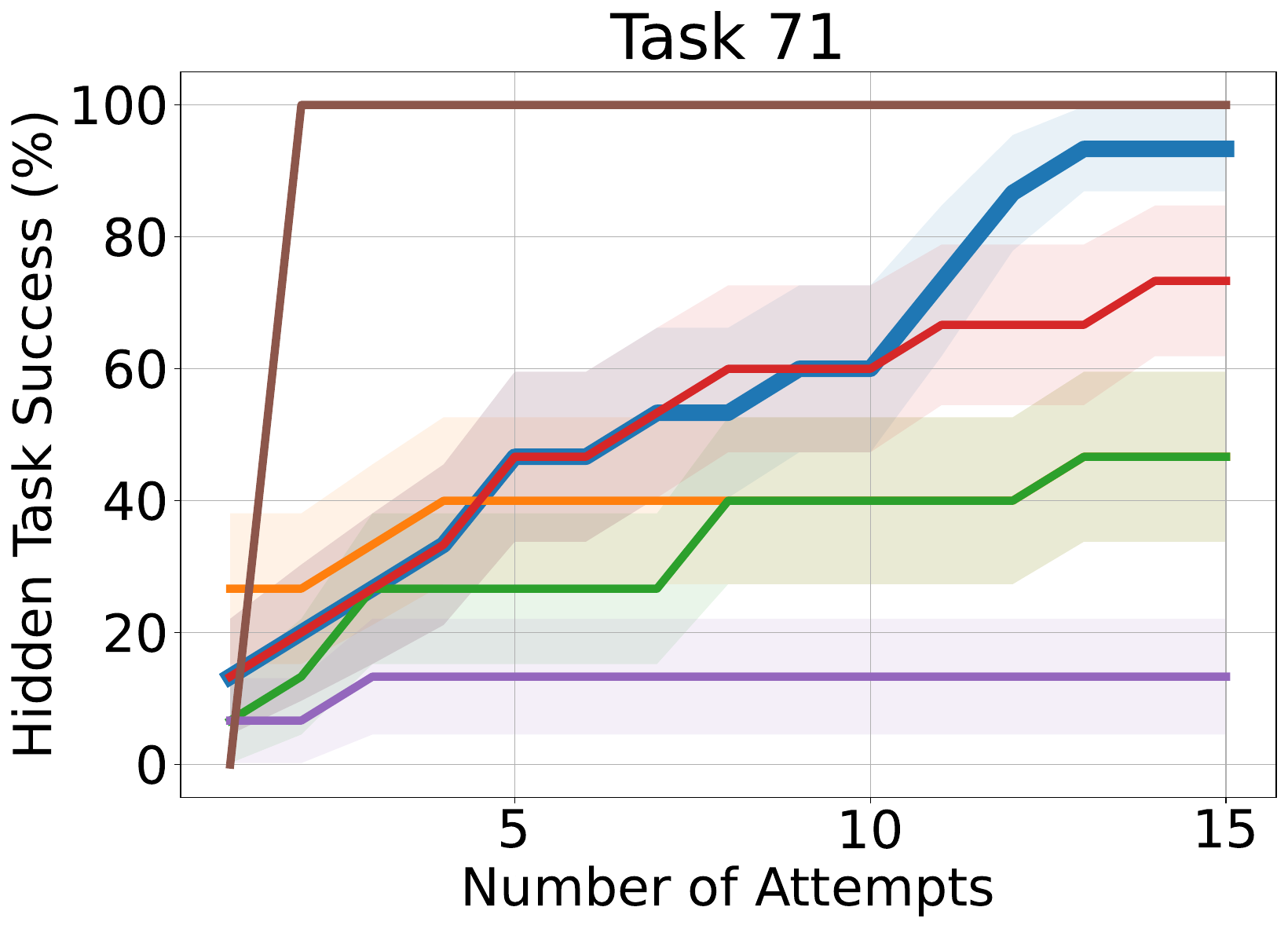}
    \end{subfigure}
    \hfill
    \begin{subfigure}[b]{0.19\textwidth}
        \centering
        \includegraphics[width=\textwidth]{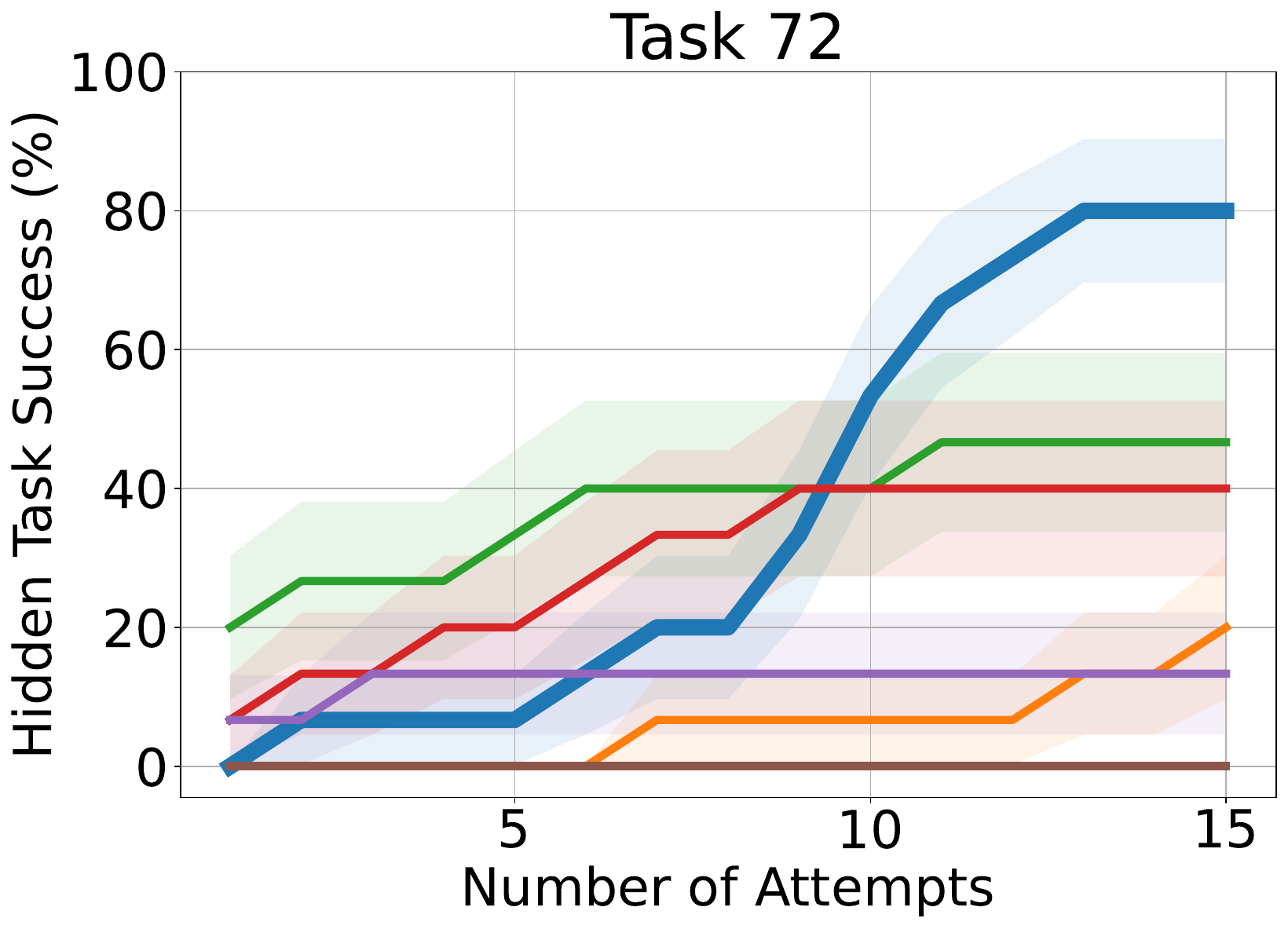}
    \end{subfigure}
    \hfill
    \begin{subfigure}[b]{0.19\textwidth}
        \centering
        \includegraphics[width=\textwidth]{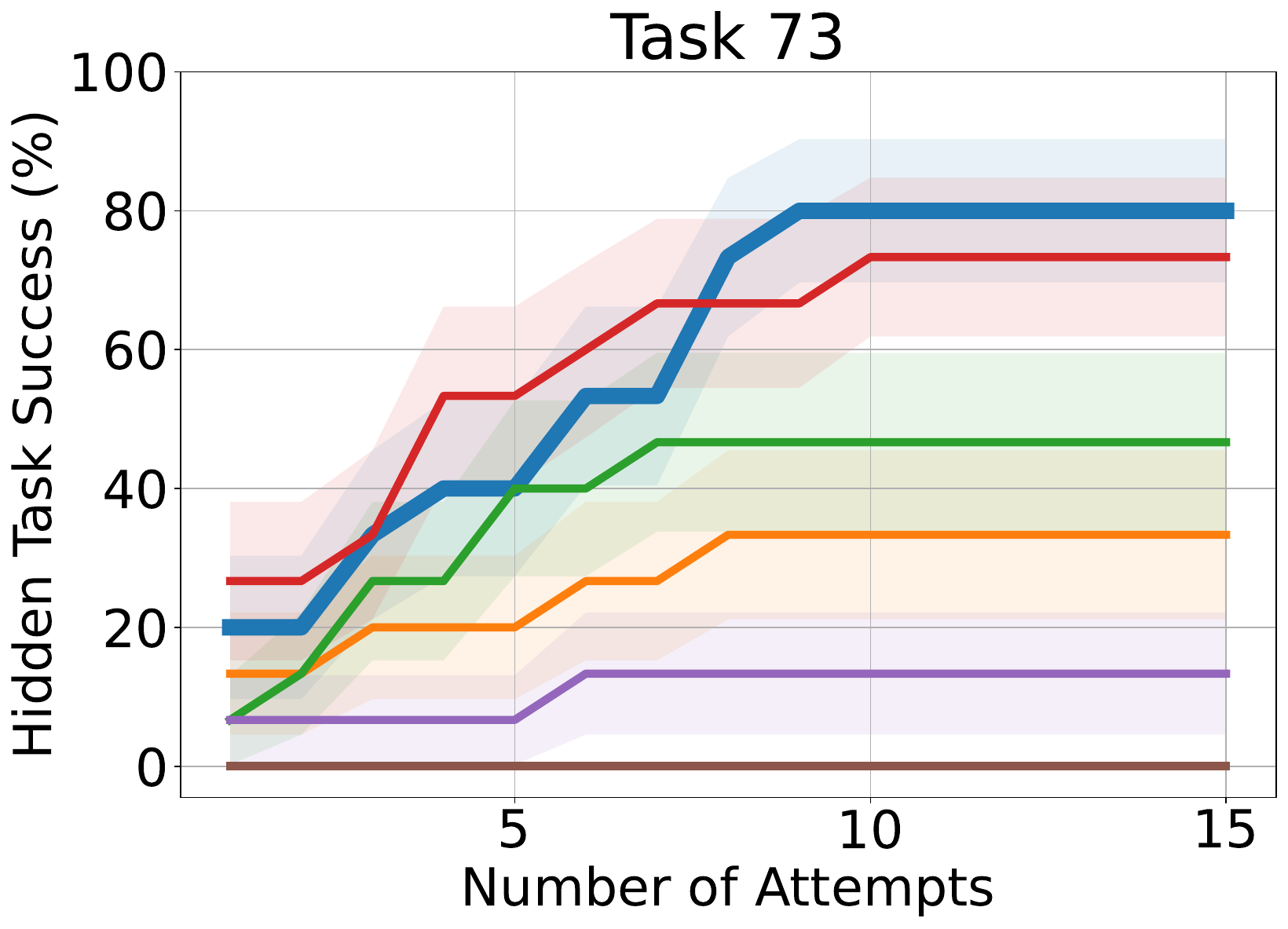}
    \end{subfigure}
    \hfill
    \begin{subfigure}[b]{0.19\textwidth}
        \centering
        \includegraphics[width=\textwidth]{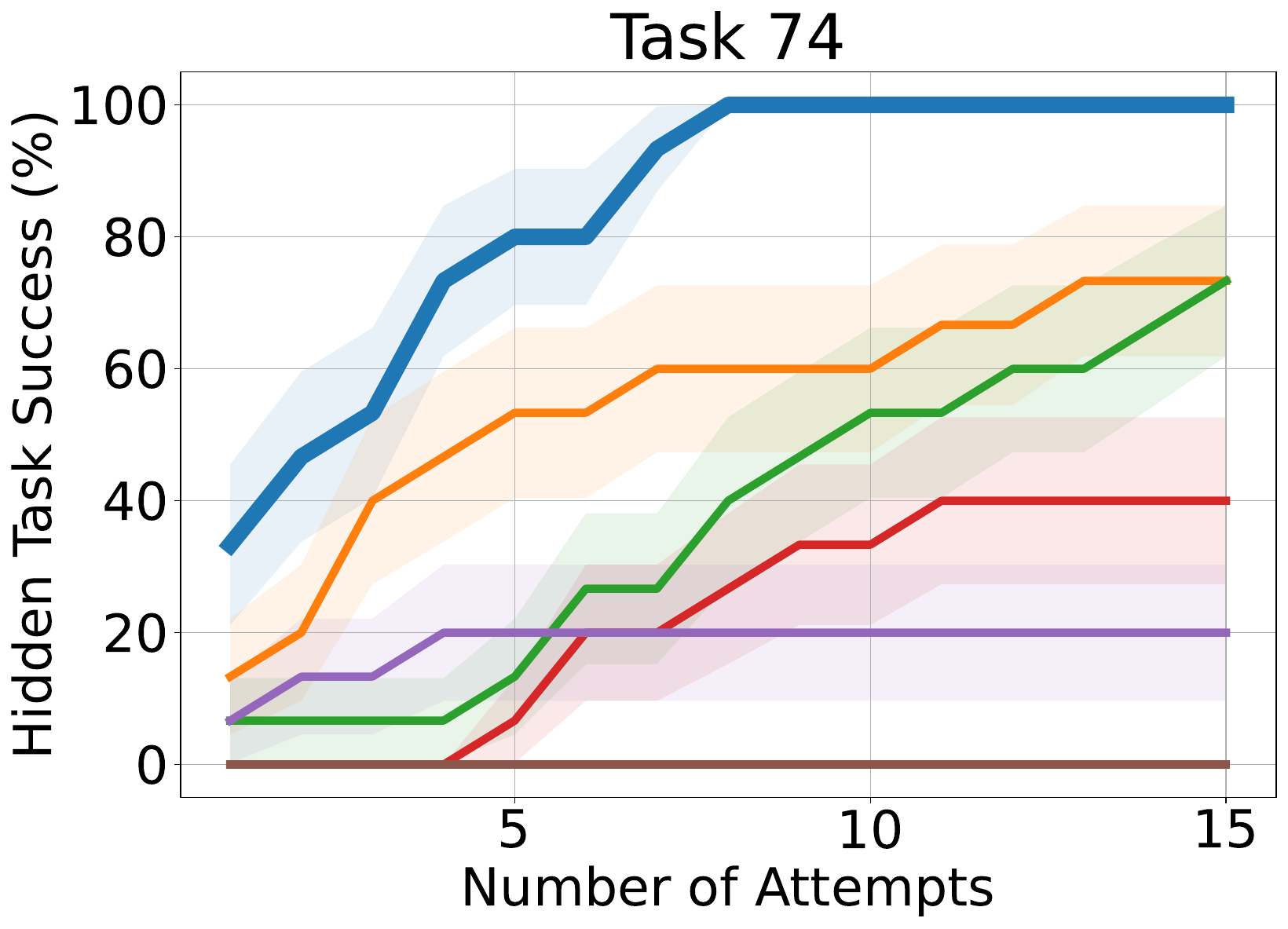}
    \end{subfigure}
    \begin{subfigure}[b]{0.19\textwidth}
        \centering
        \includegraphics[width=\textwidth]{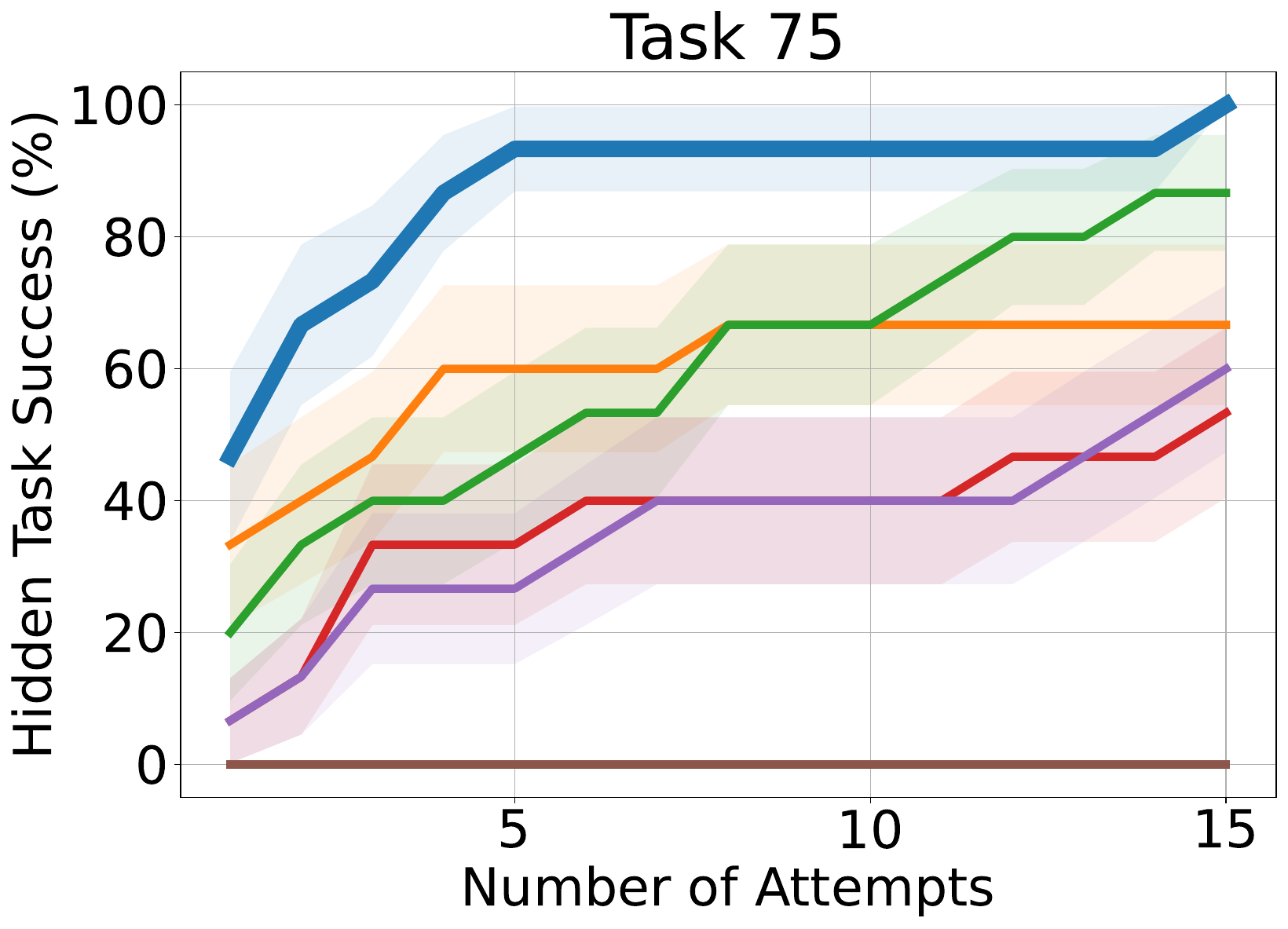}
    \end{subfigure}
    
     \begin{subfigure}[b]{0.19\textwidth}
        \centering
        \includegraphics[width=\textwidth]{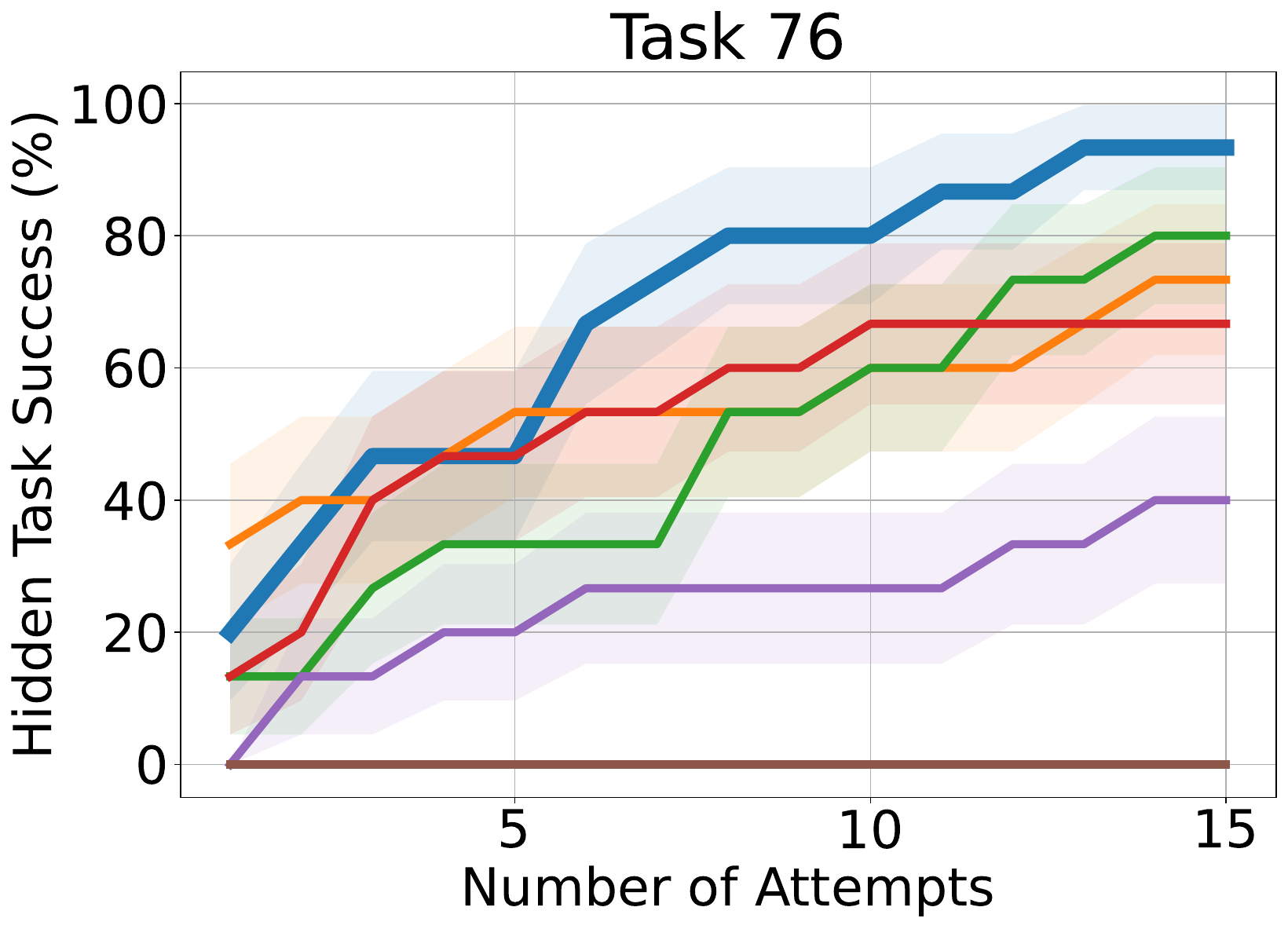}
    \end{subfigure}
    \hfill
    \begin{subfigure}[b]{0.19\textwidth}
        \centering
        \includegraphics[width=\textwidth]{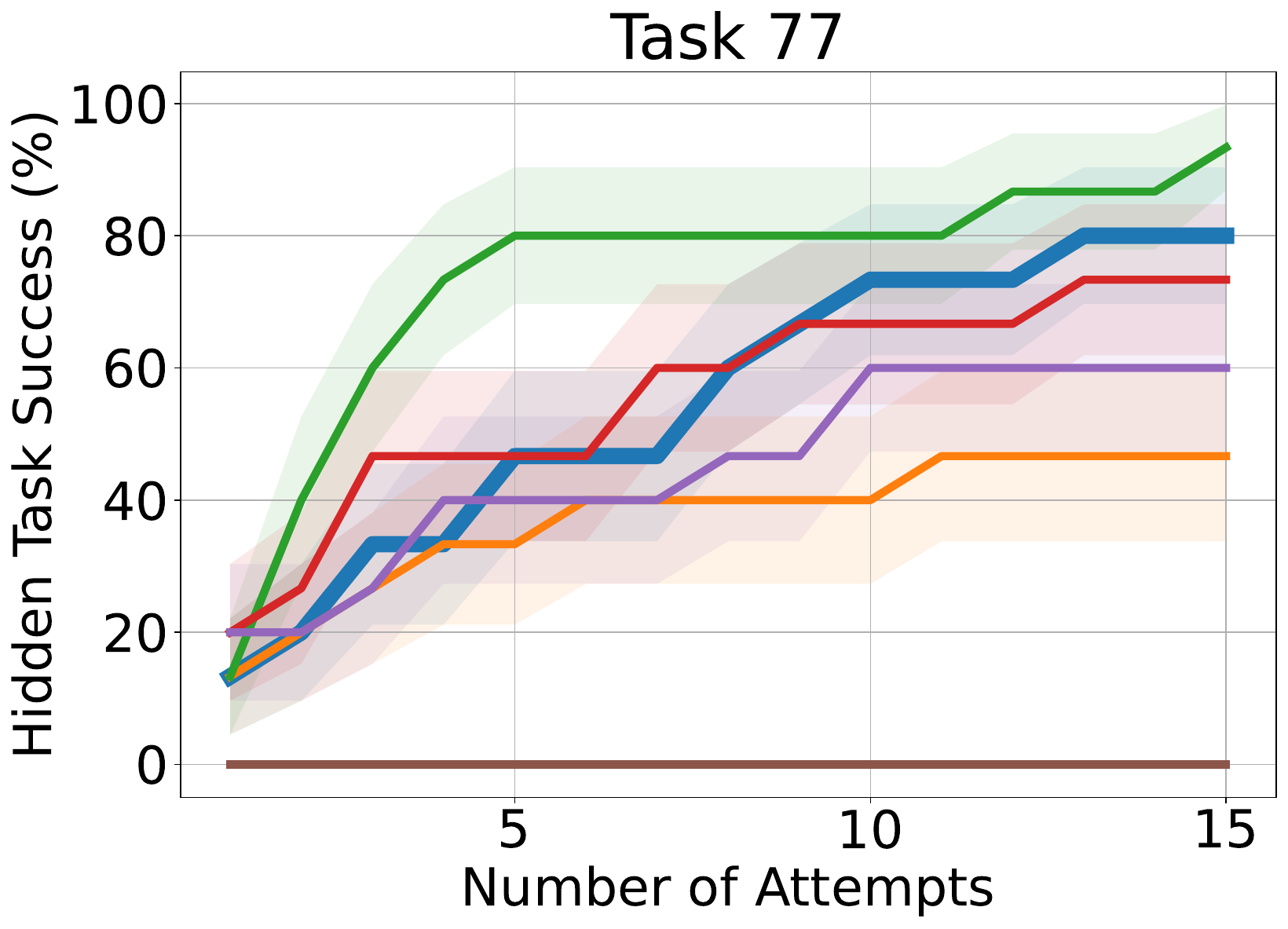}
    \end{subfigure}
    \hfill
    \begin{subfigure}[b]{0.19\textwidth}
        \centering
        \includegraphics[width=\textwidth]{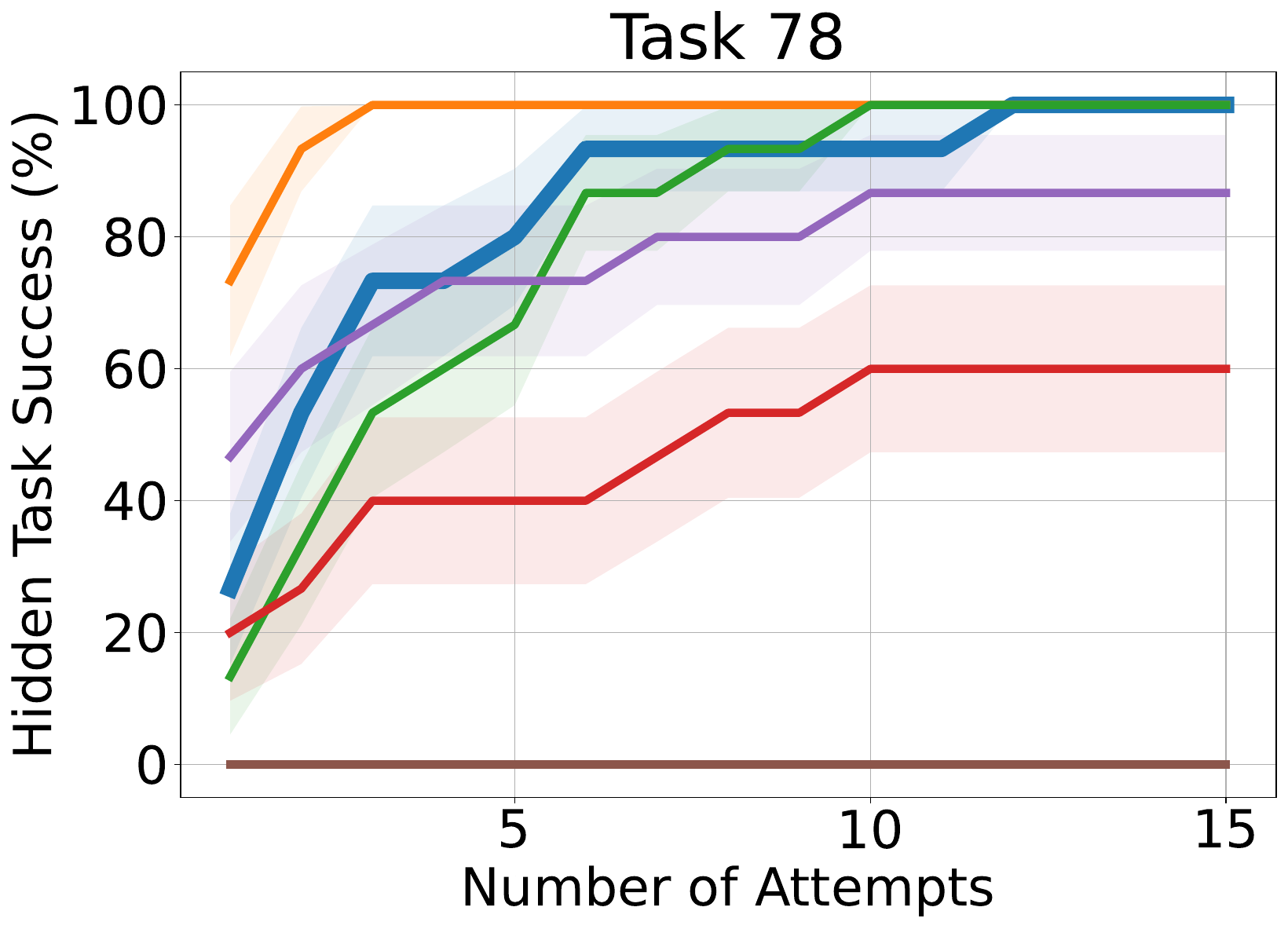}
    \end{subfigure}
    \hfill
    \begin{subfigure}[b]{0.19\textwidth}
        \centering
        \includegraphics[width=\textwidth]{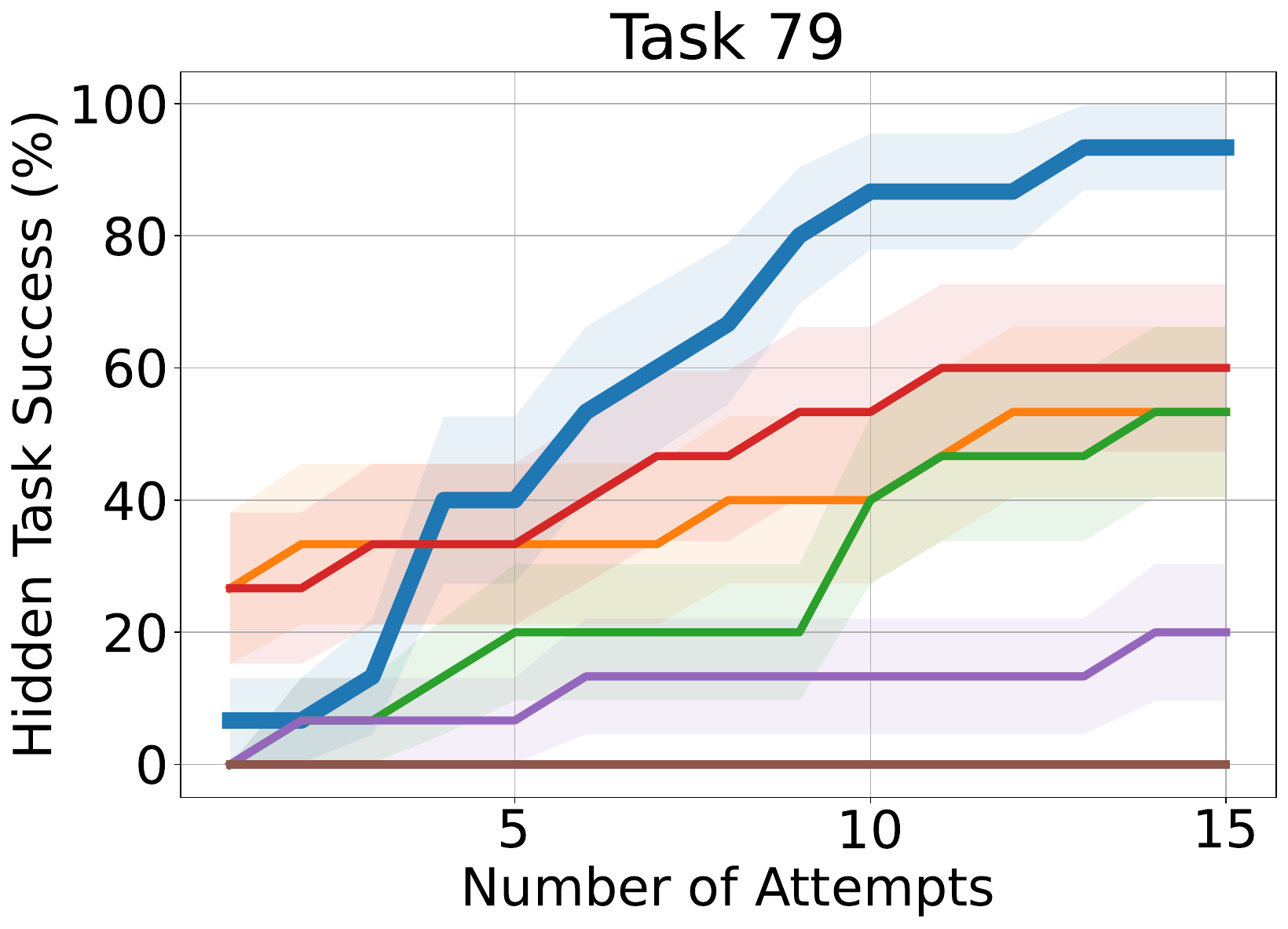}
    \end{subfigure}
    \begin{subfigure}[b]{0.19\textwidth}
        \centering
        \includegraphics[width=\textwidth]{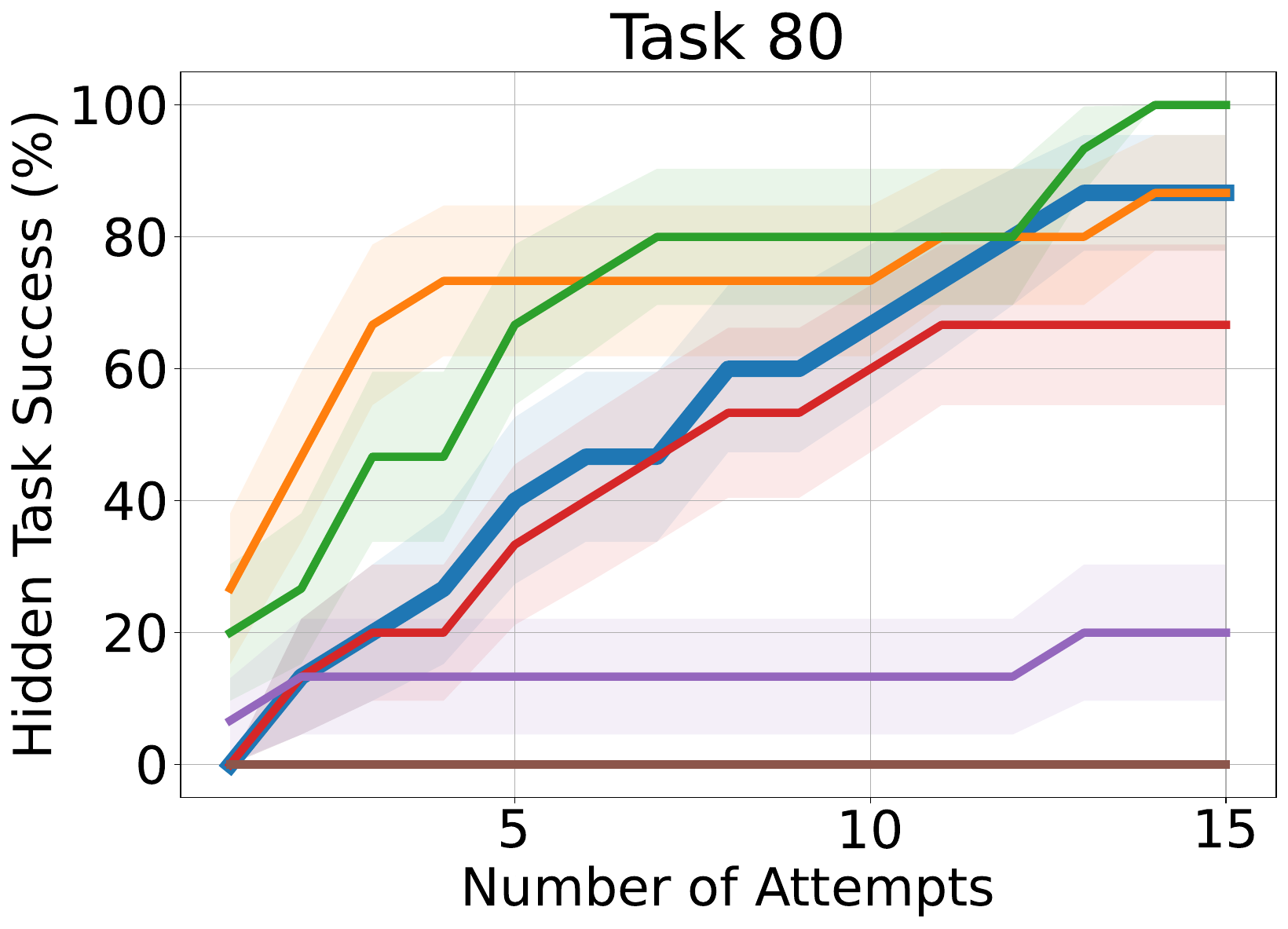}
    \end{subfigure}
    
     \begin{subfigure}[b]{0.19\textwidth}
        \centering
        \includegraphics[width=\textwidth]{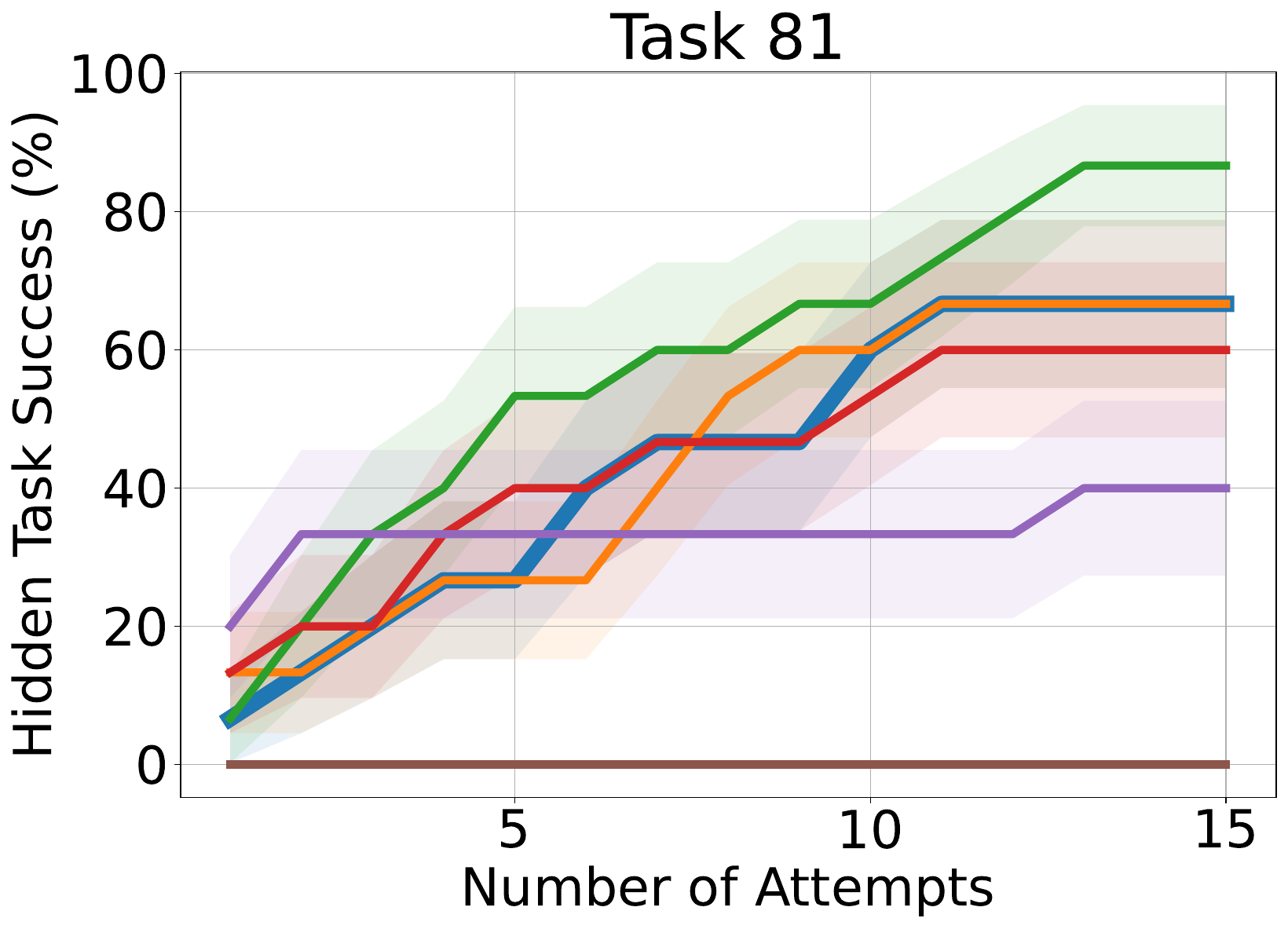}
    \end{subfigure}
    \hfill
    \begin{subfigure}[b]{0.19\textwidth}
        \centering
        \includegraphics[width=\textwidth]{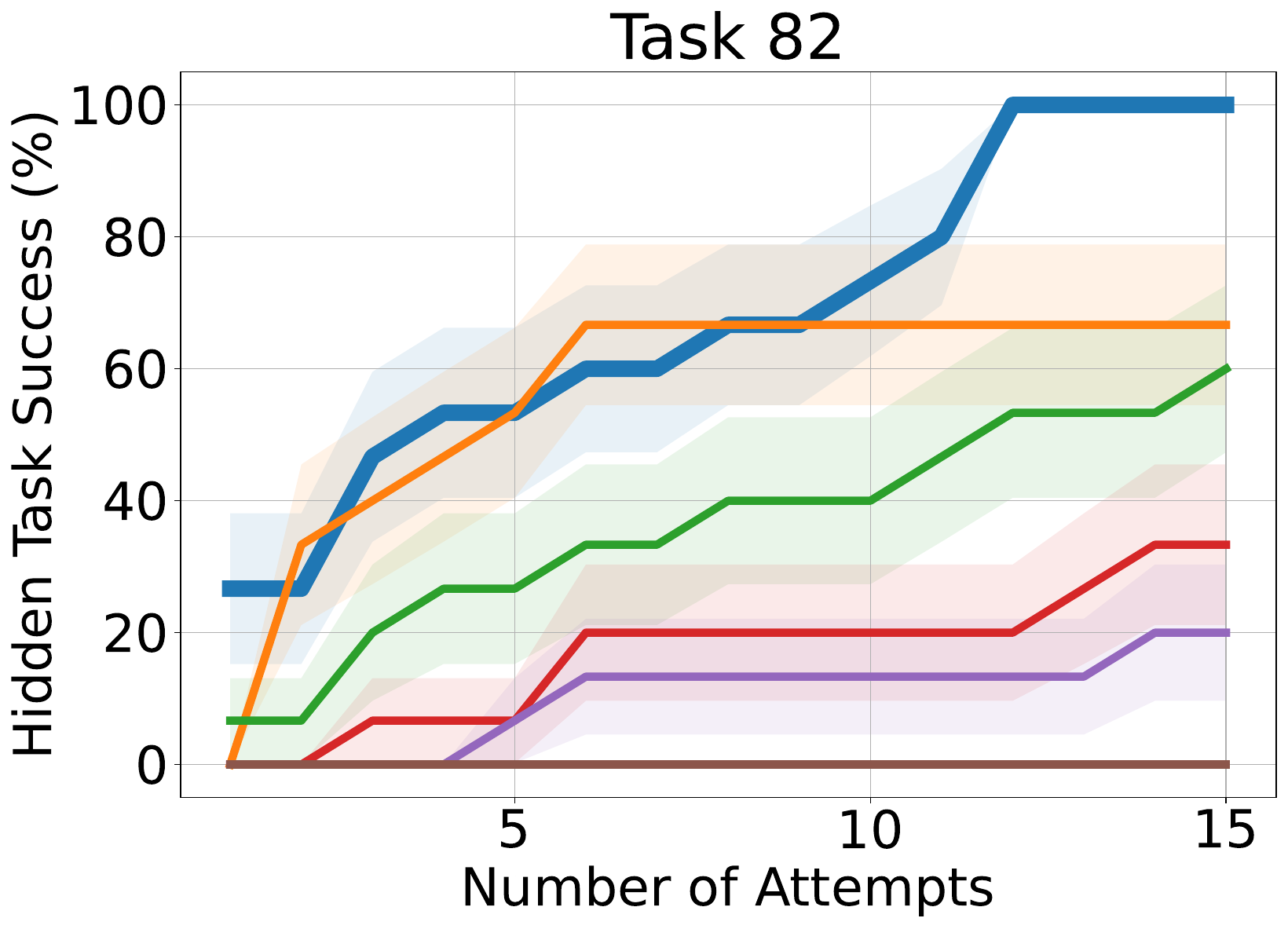}
    \end{subfigure}
    \hfill
    \begin{subfigure}[b]{0.19\textwidth}
        \centering
        \includegraphics[width=\textwidth]{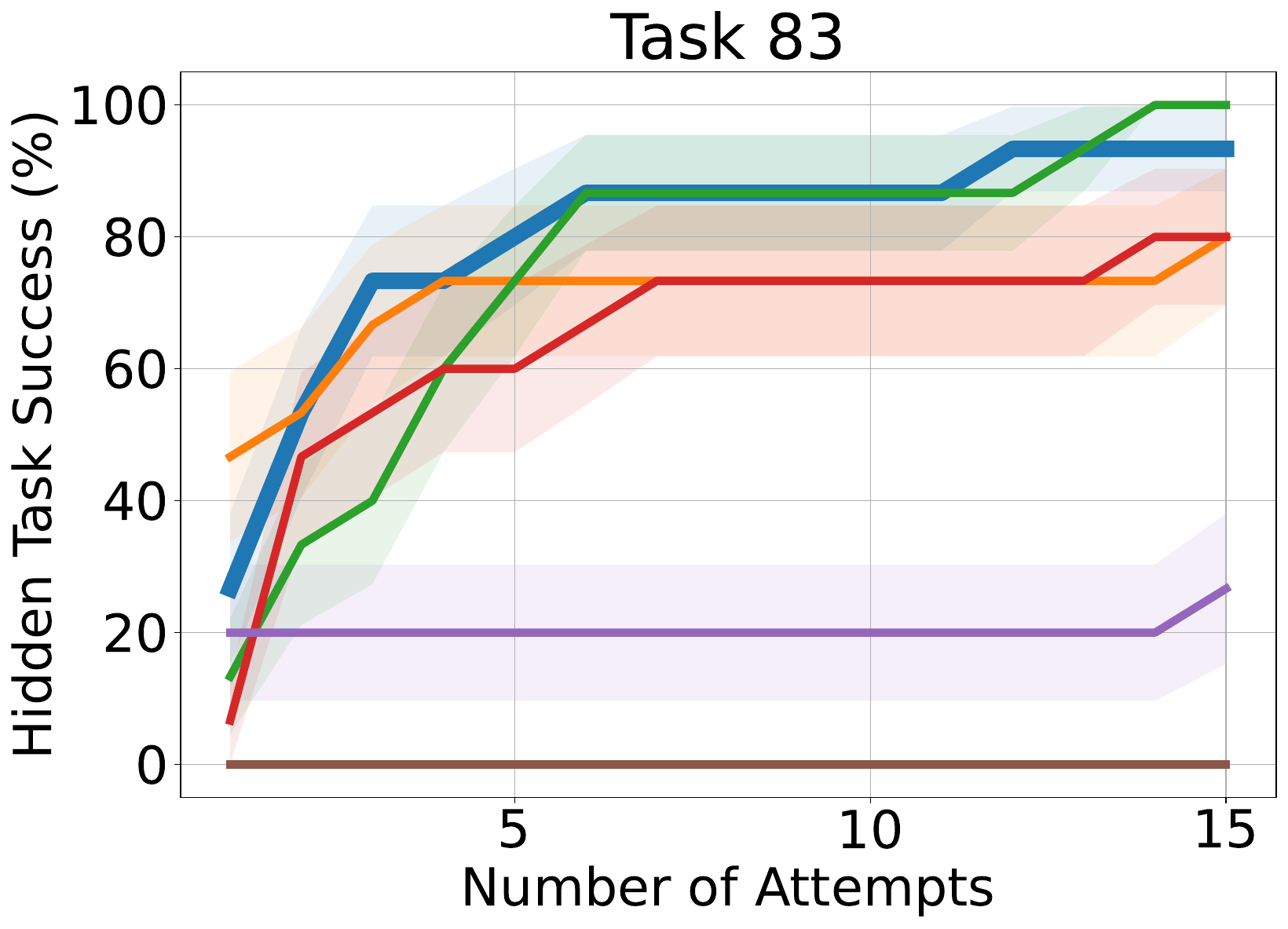}
    \end{subfigure}
    \hfill
    \begin{subfigure}[b]{0.19\textwidth}
        \centering
        \includegraphics[width=\textwidth]{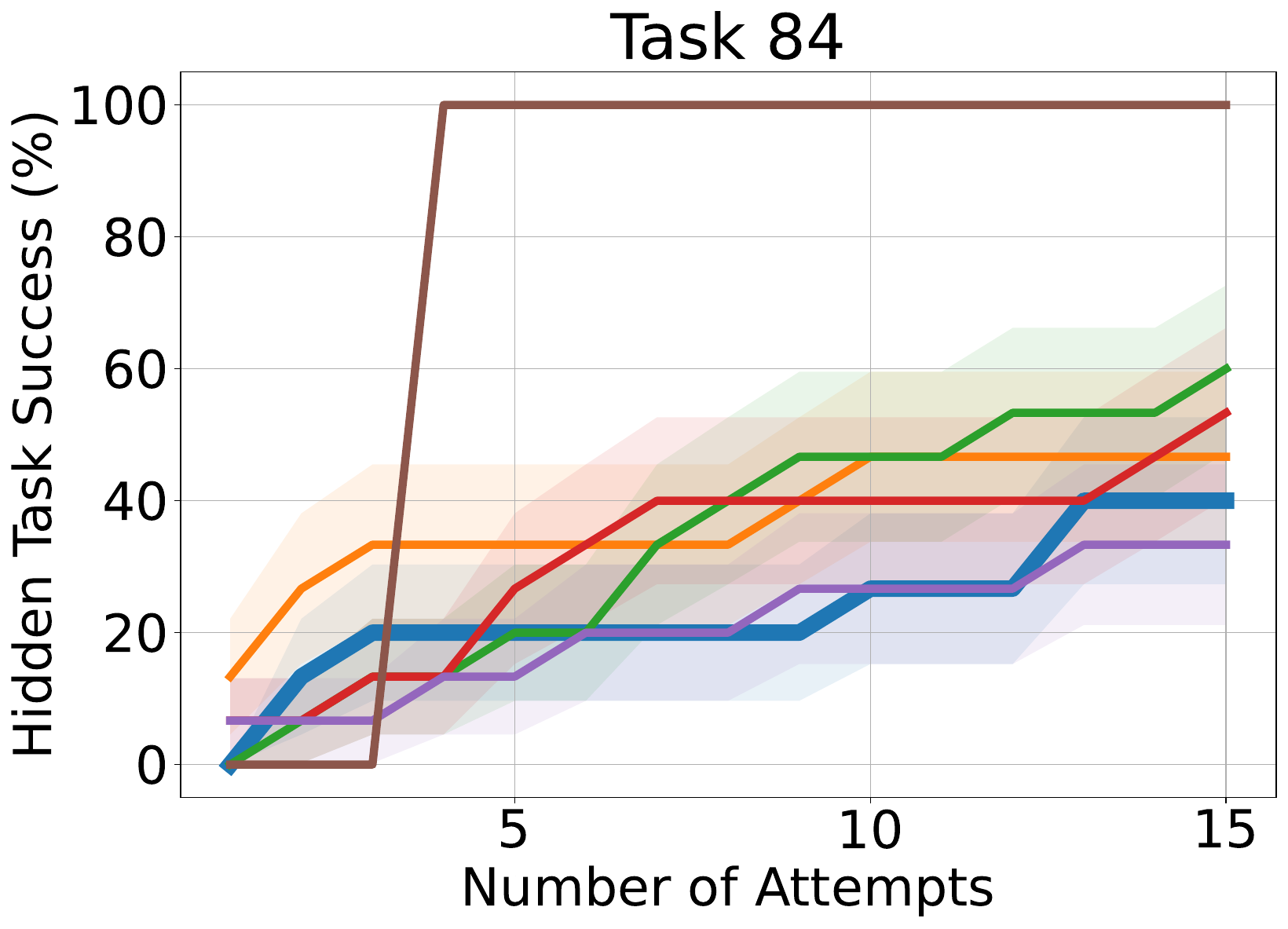}
    \end{subfigure}
    \begin{subfigure}[b]{0.19\textwidth}
        \centering
        \includegraphics[width=\textwidth]{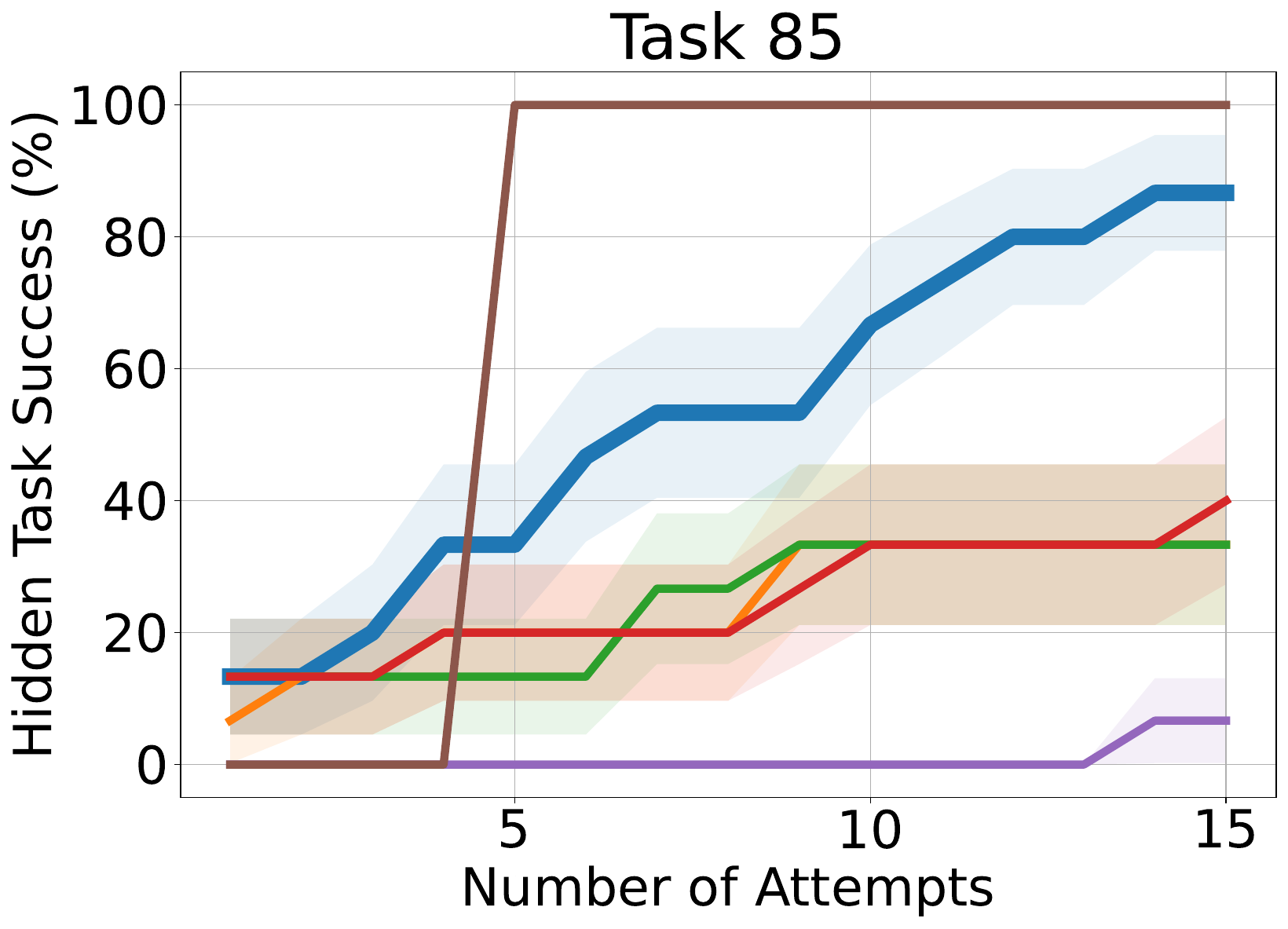}
    \end{subfigure}
    
     \begin{subfigure}[b]{0.19\textwidth}
        \centering
        \includegraphics[width=\textwidth]{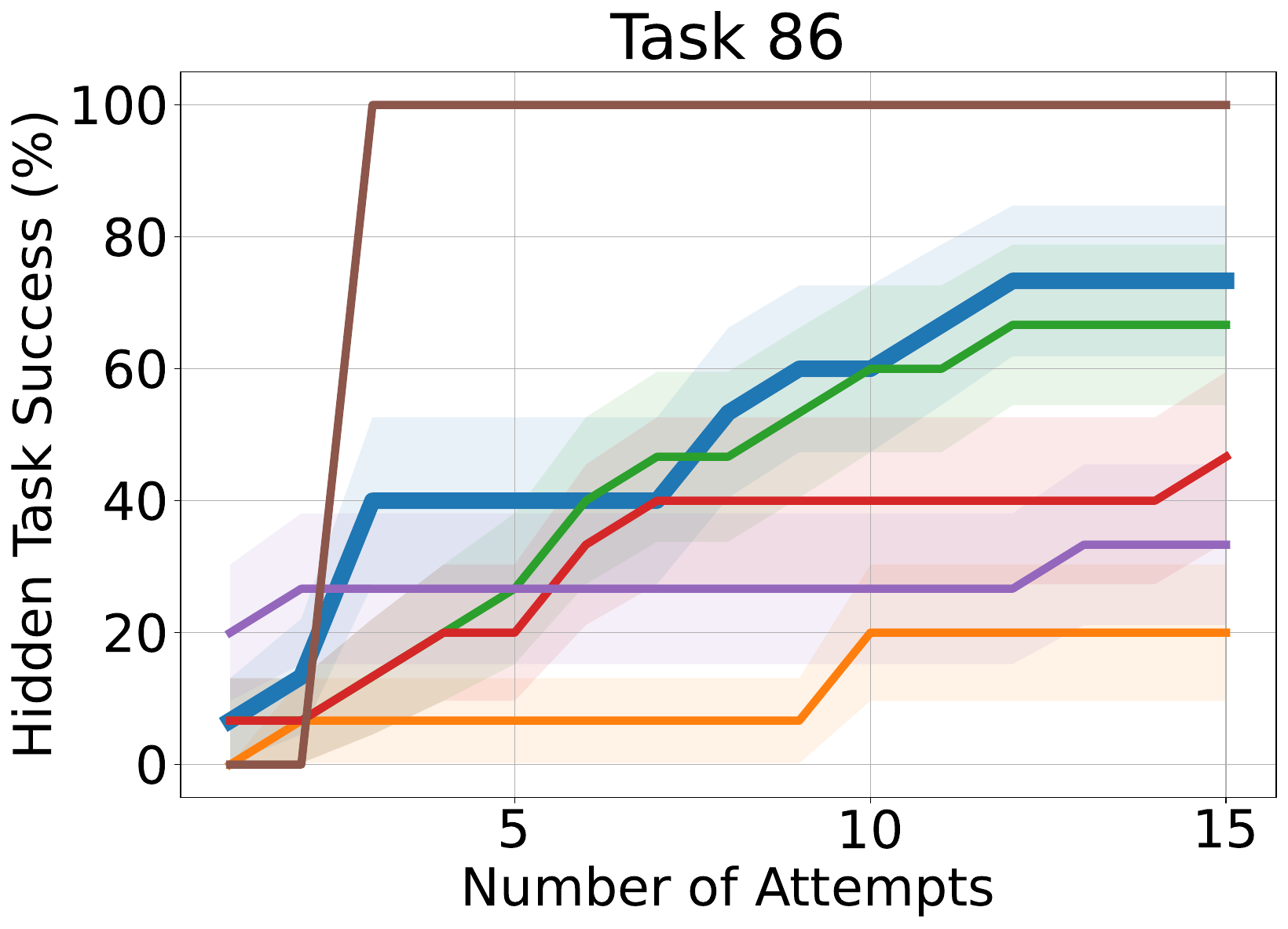}
    \end{subfigure}
    \hfill
    \begin{subfigure}[b]{0.19\textwidth}
        \centering
        \includegraphics[width=\textwidth]{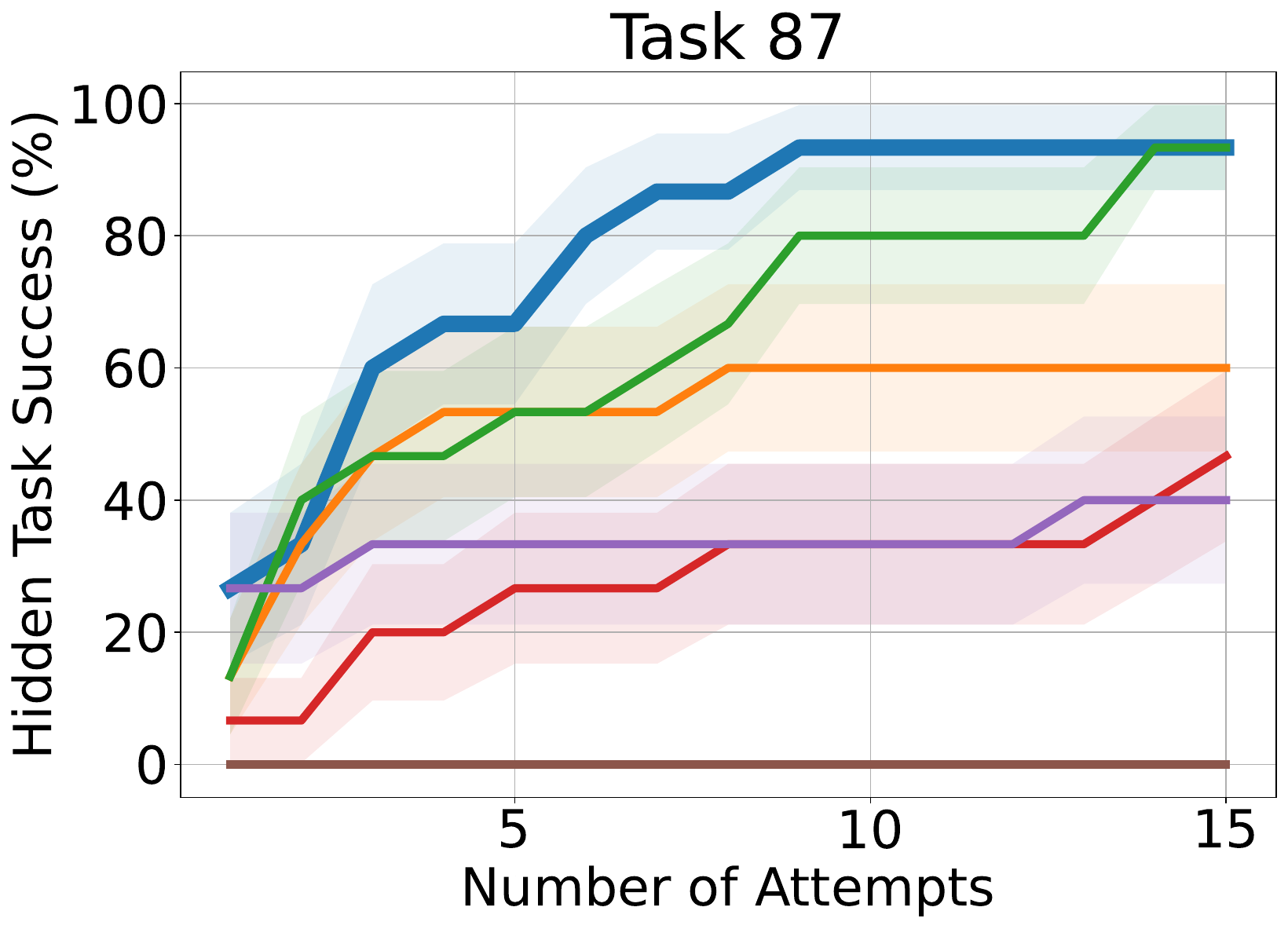}
    \end{subfigure}
    \hfill
    \begin{subfigure}[b]{0.19\textwidth}
        \centering
        \includegraphics[width=\textwidth]{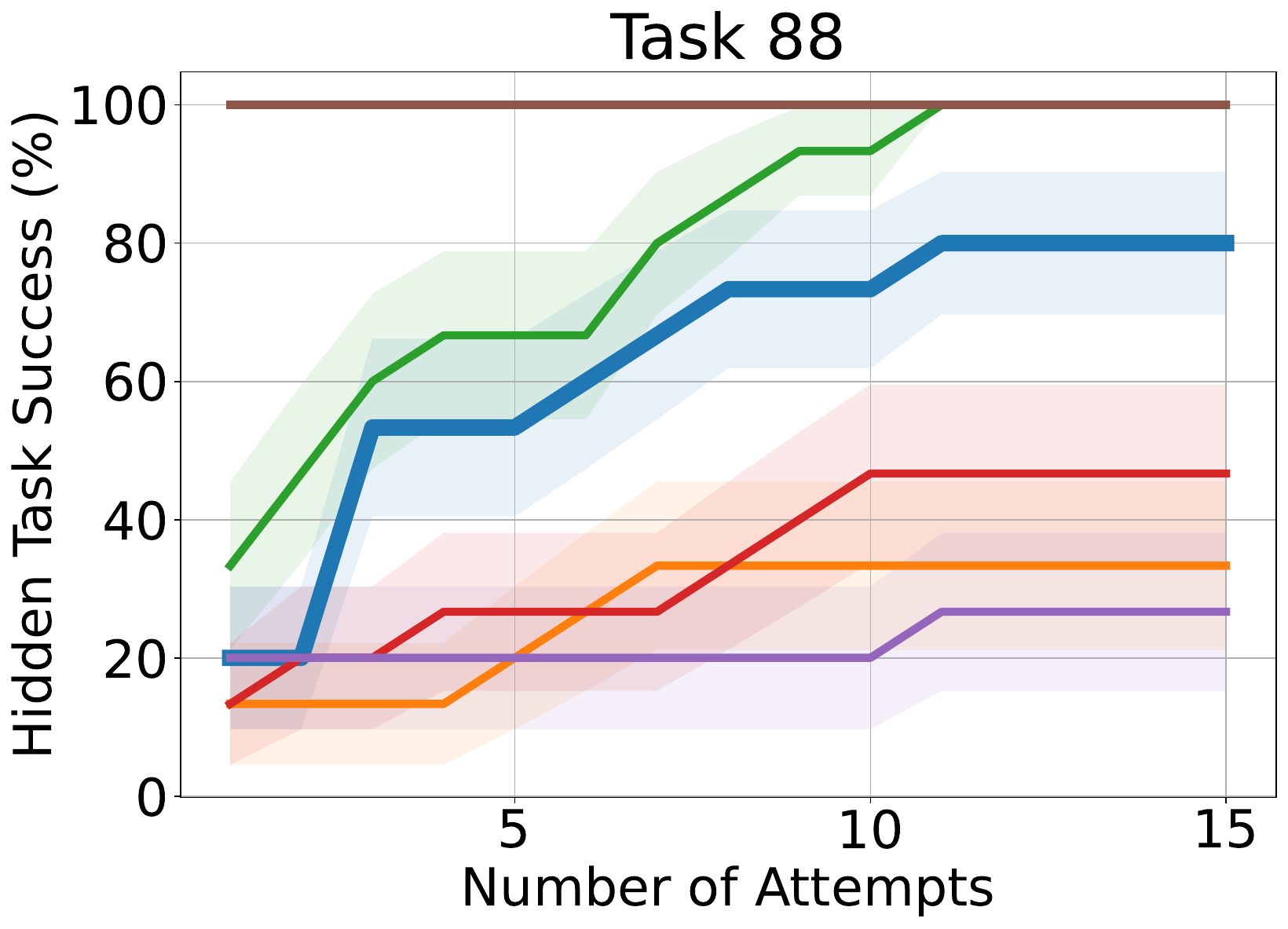}
    \end{subfigure}
    \hfill
    \begin{subfigure}[b]{0.19\textwidth}
        \centering
        \includegraphics[width=\textwidth]{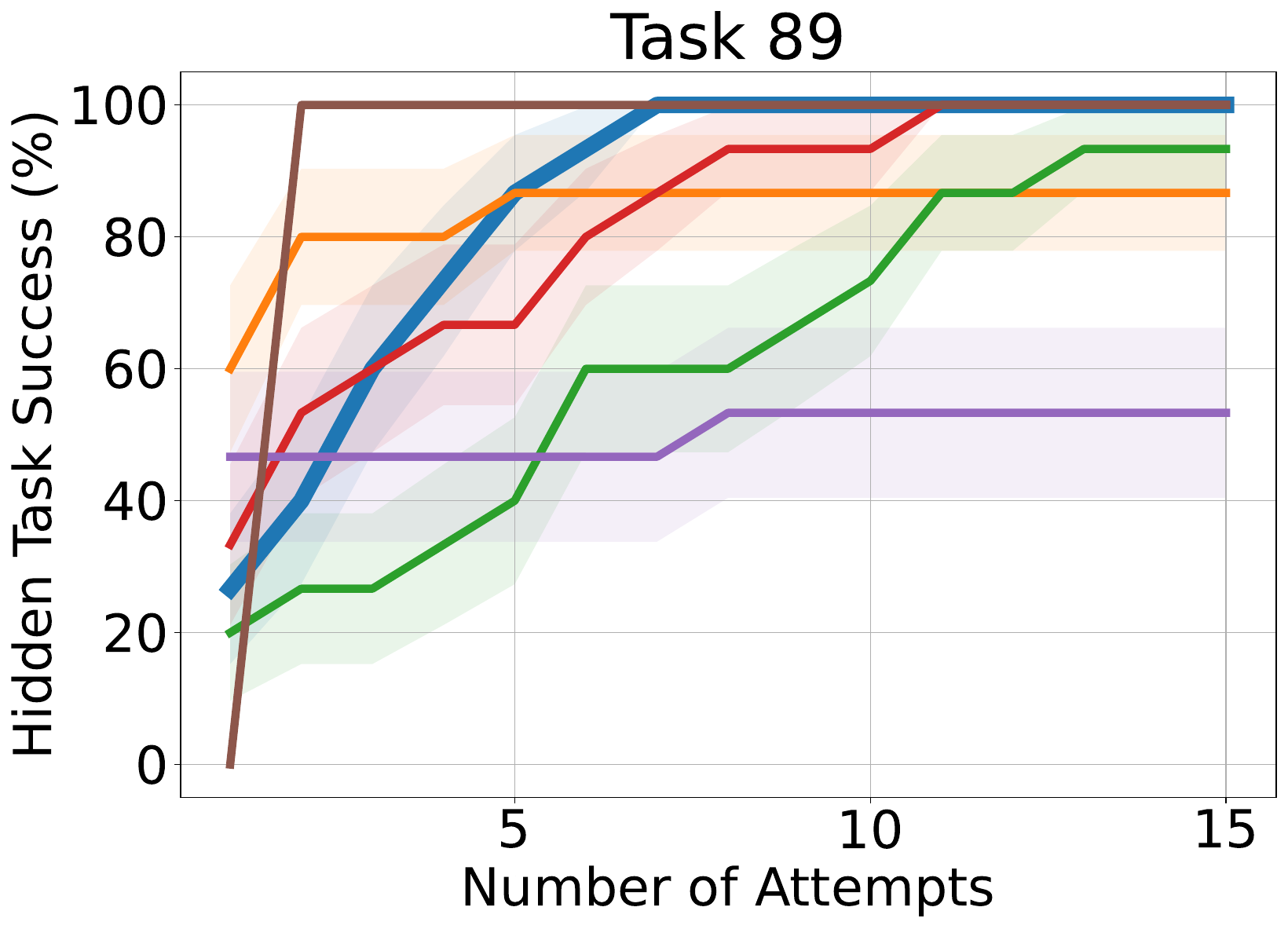}
    \end{subfigure}
    \begin{subfigure}[b]{0.19\textwidth}
        \centering
        \includegraphics[width=\textwidth]{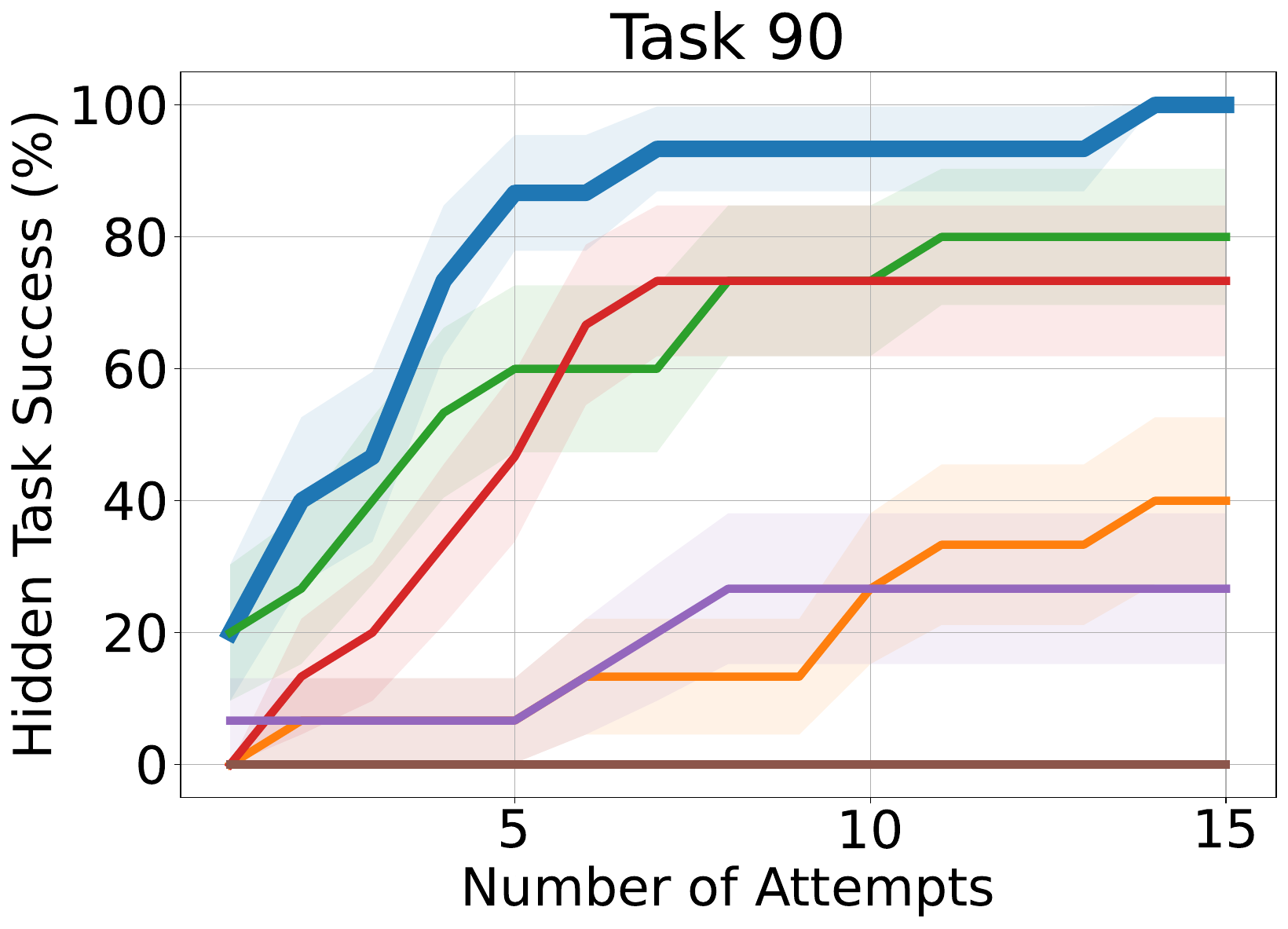}
    \end{subfigure}
    \caption{Performance on individual Libero tasks (46-90) in evaluation with task hidden.}
\end{figure}

\begin{figure}[H]
    \centering
    \begin{subfigure}[b]{0.19\textwidth}
        \centering
        \includegraphics[width=\textwidth]{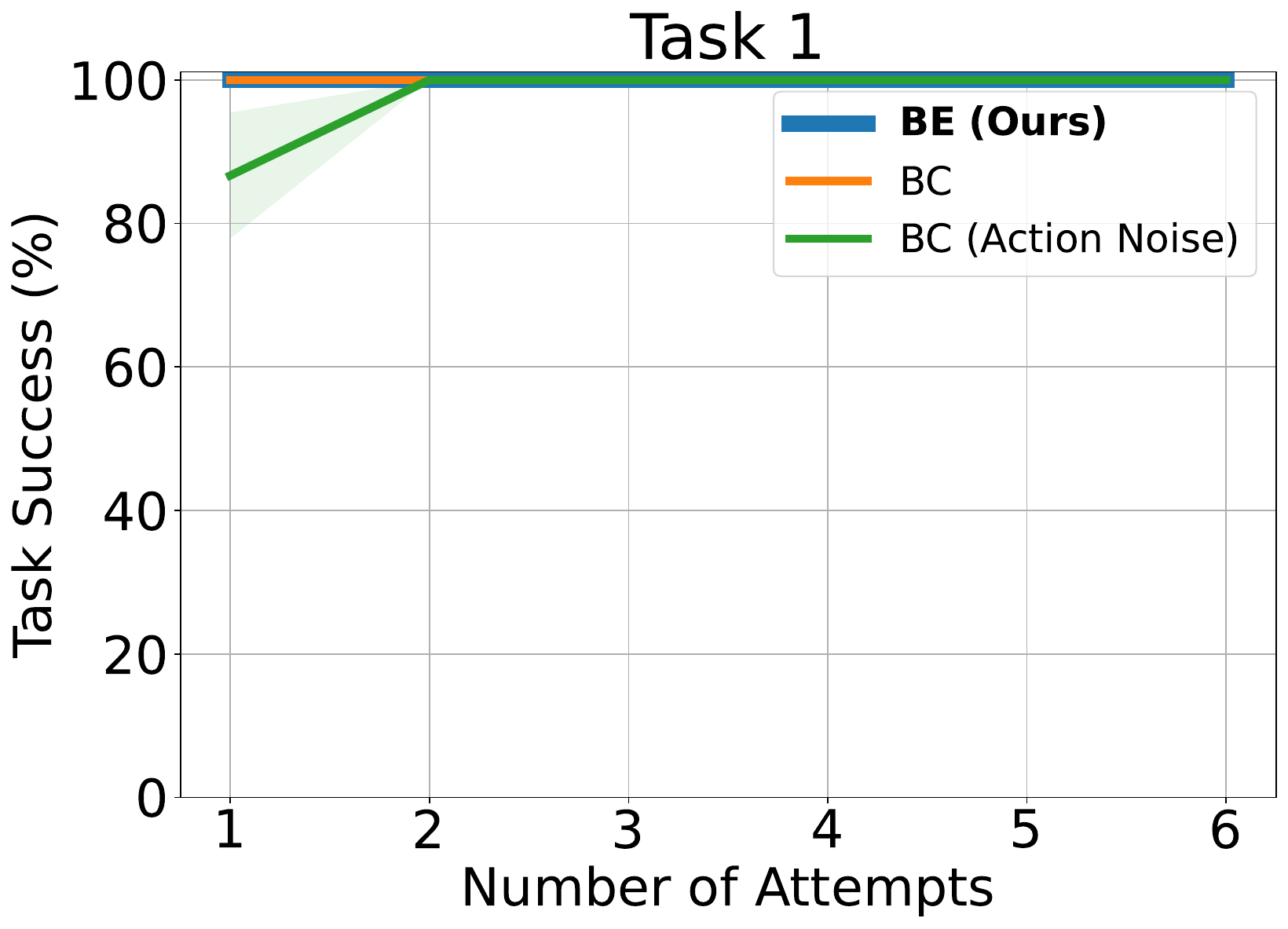}
    \end{subfigure}
    \hfill
    \begin{subfigure}[b]{0.19\textwidth}
        \centering
        \includegraphics[width=\textwidth]{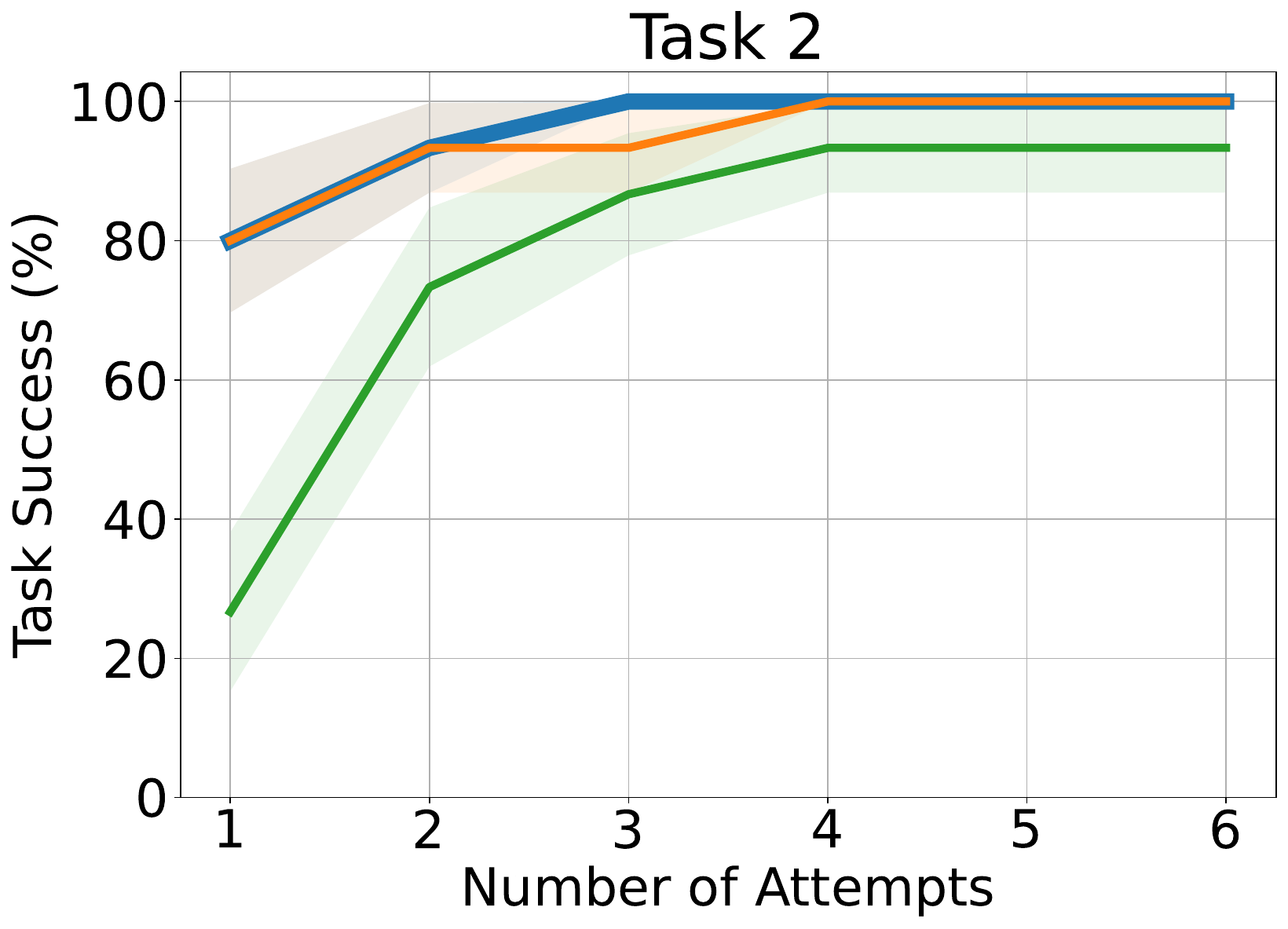}
    \end{subfigure}
    \hfill
    \begin{subfigure}[b]{0.19\textwidth}
        \centering
        \includegraphics[width=\textwidth]{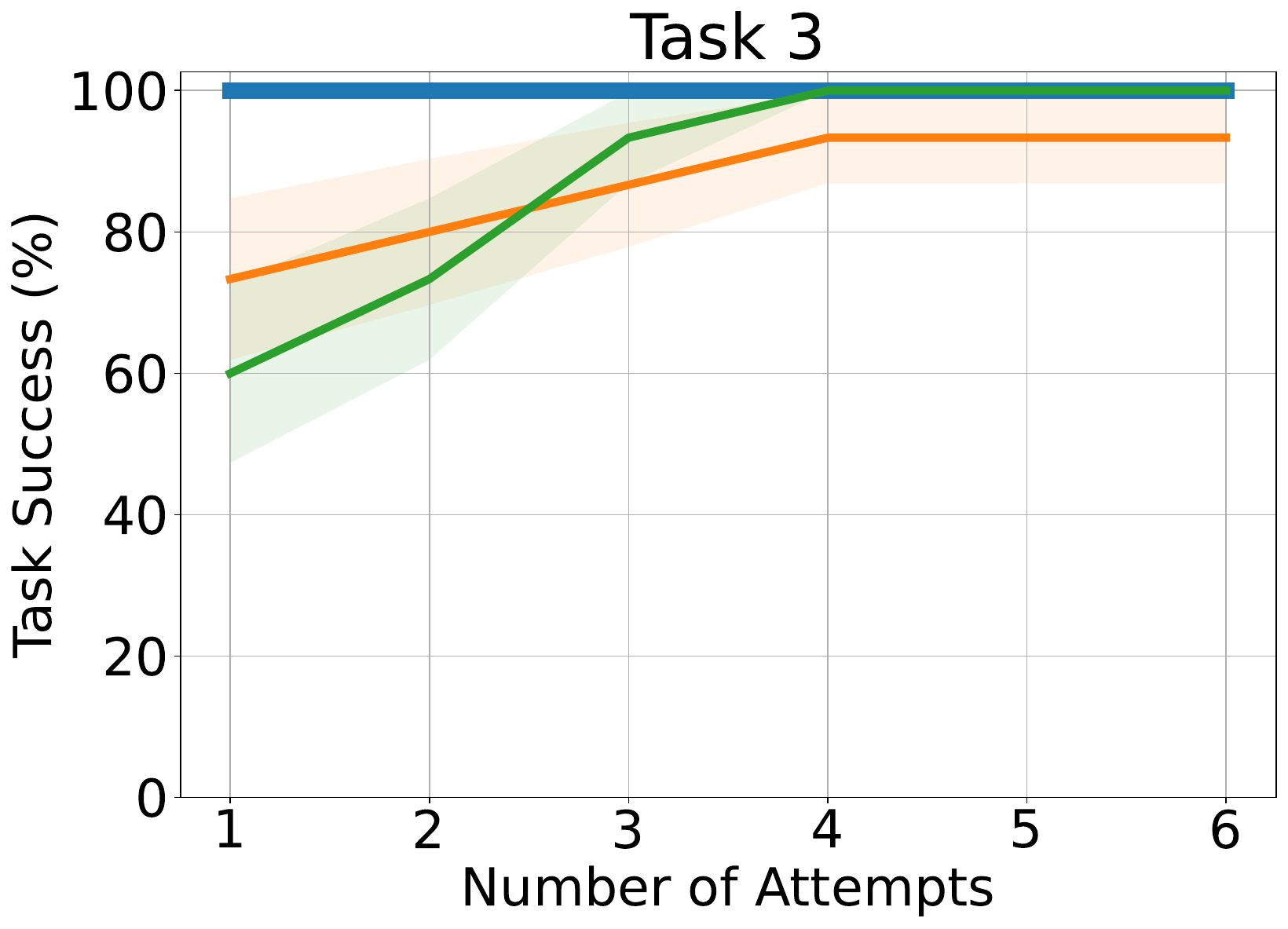}
    \end{subfigure}
    \hfill
    \begin{subfigure}[b]{0.19\textwidth}
        \centering
        \includegraphics[width=\textwidth]{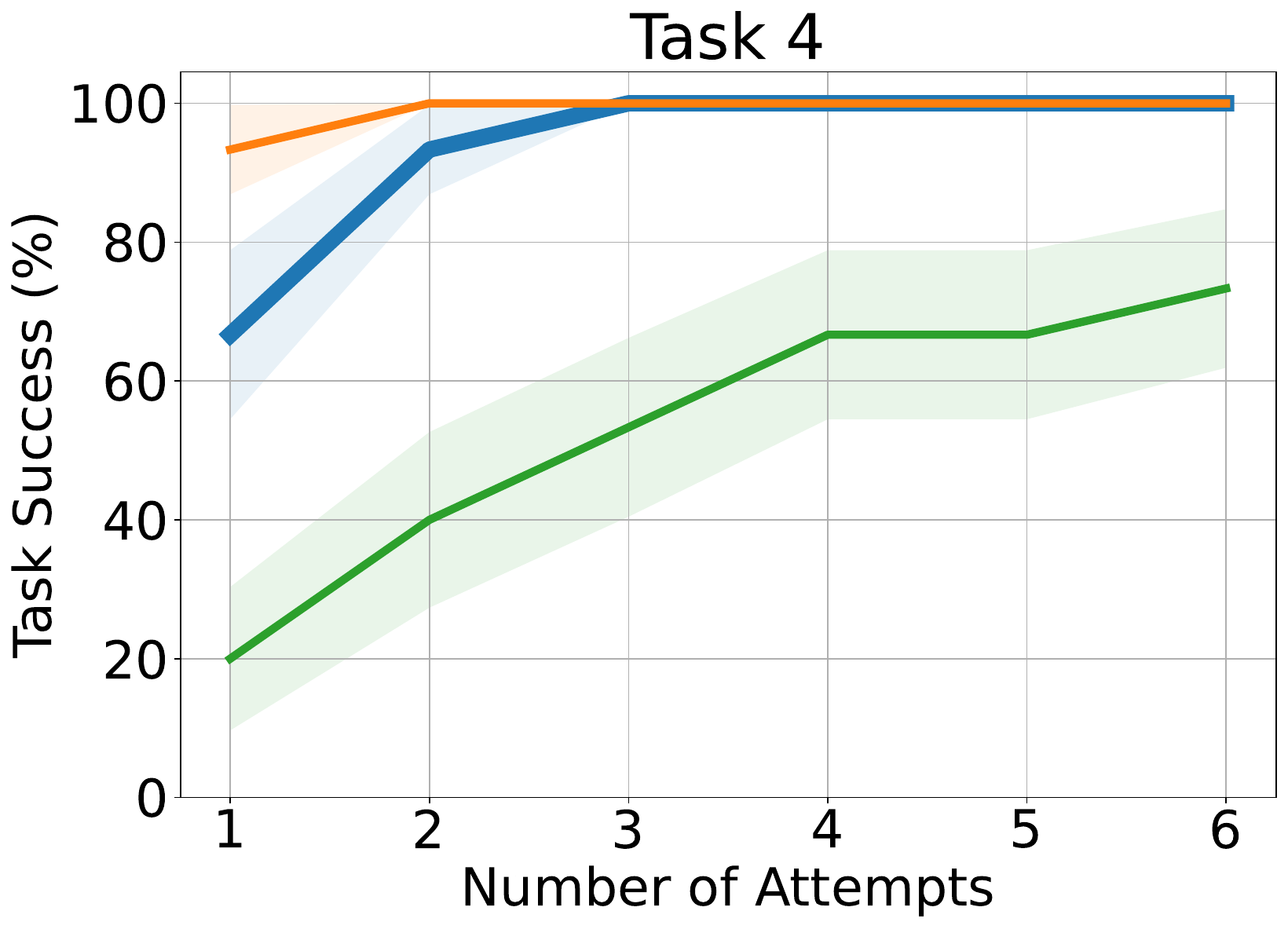}
    \end{subfigure}
    \begin{subfigure}[b]{0.19\textwidth}
        \centering
        \includegraphics[width=\textwidth]{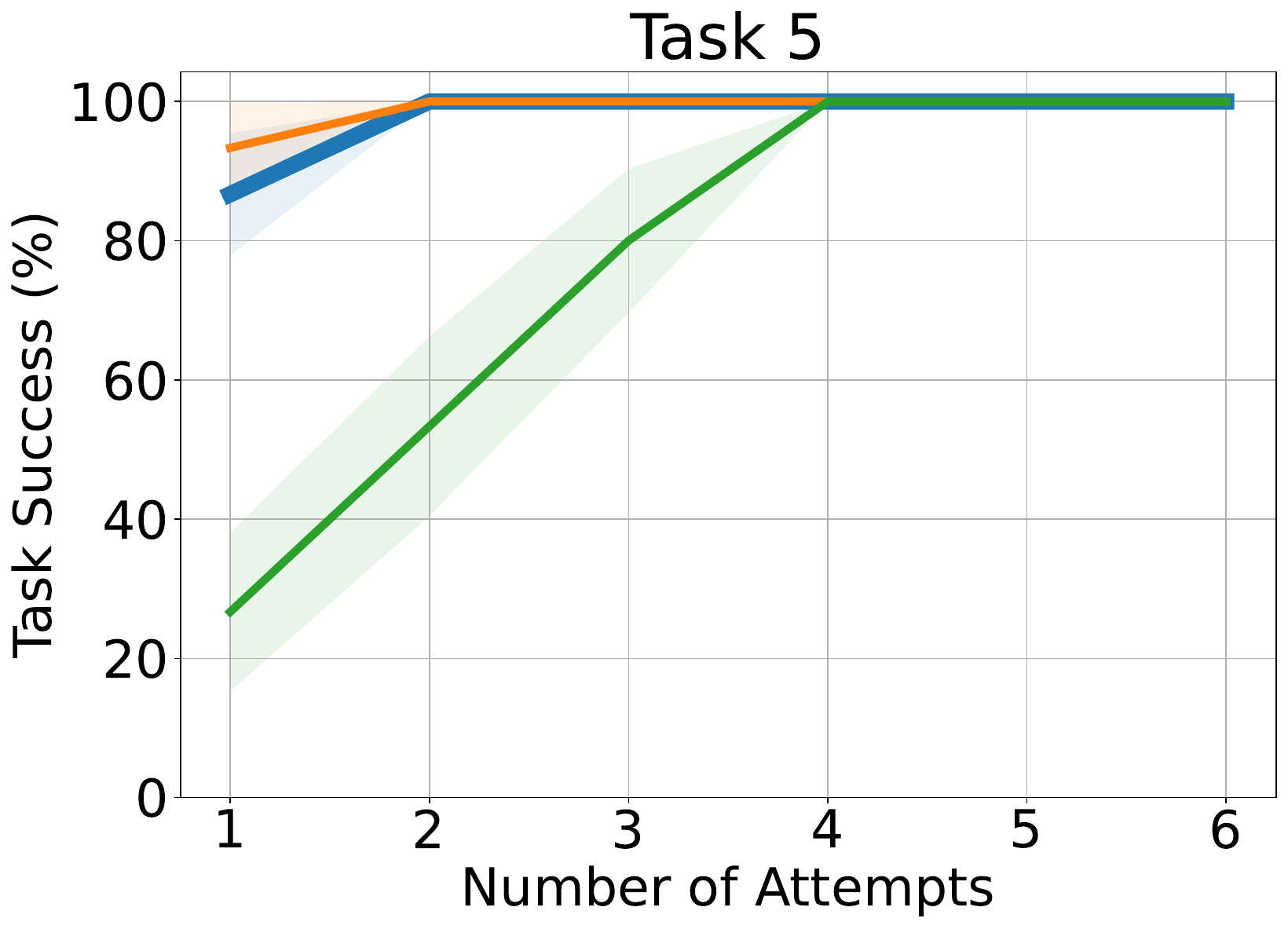}
    \end{subfigure}
    
     \begin{subfigure}[b]{0.19\textwidth}
        \centering
        \includegraphics[width=\textwidth]{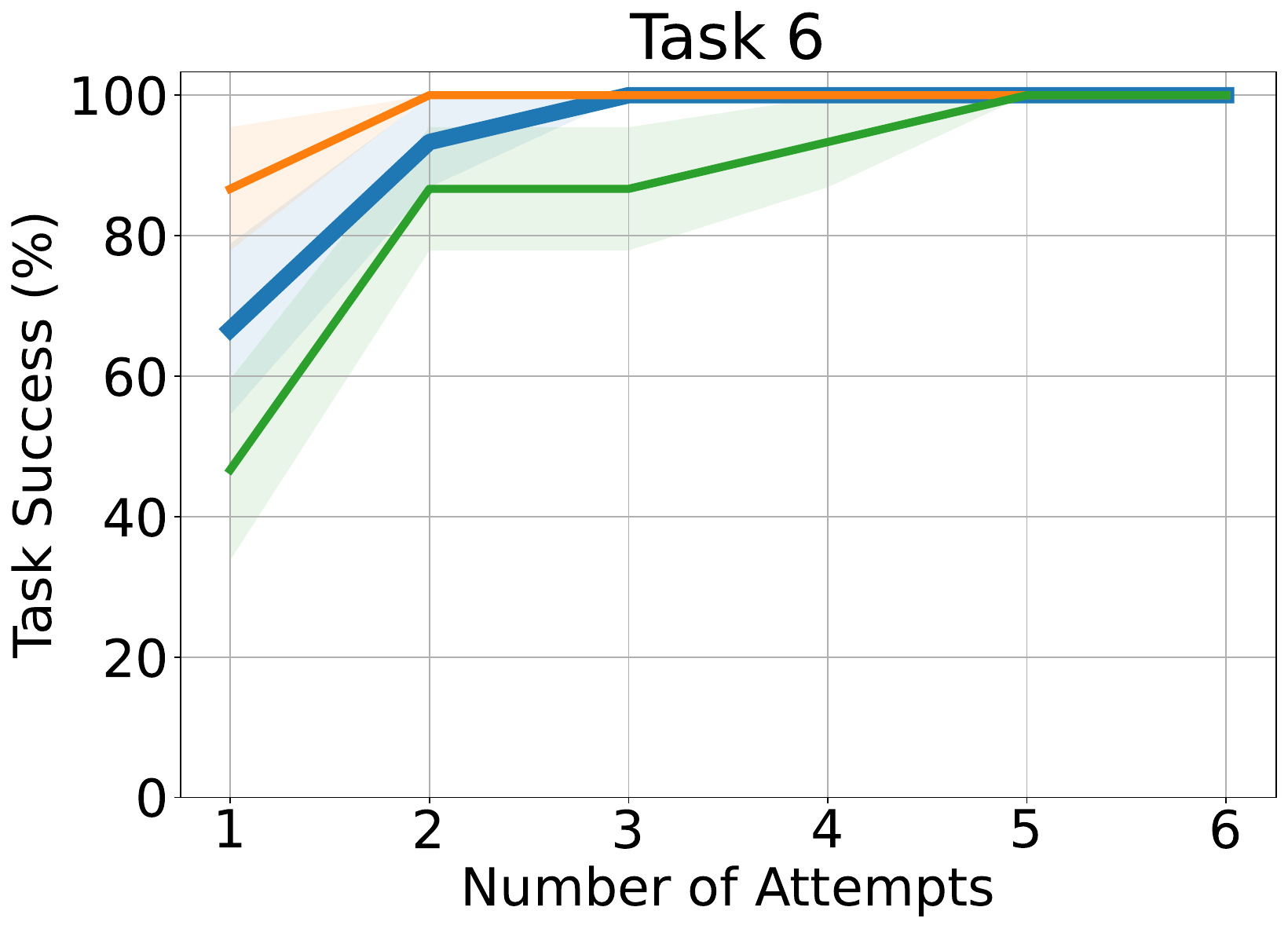}
    \end{subfigure}
    \hfill
    \begin{subfigure}[b]{0.19\textwidth}
        \centering
        \includegraphics[width=\textwidth]{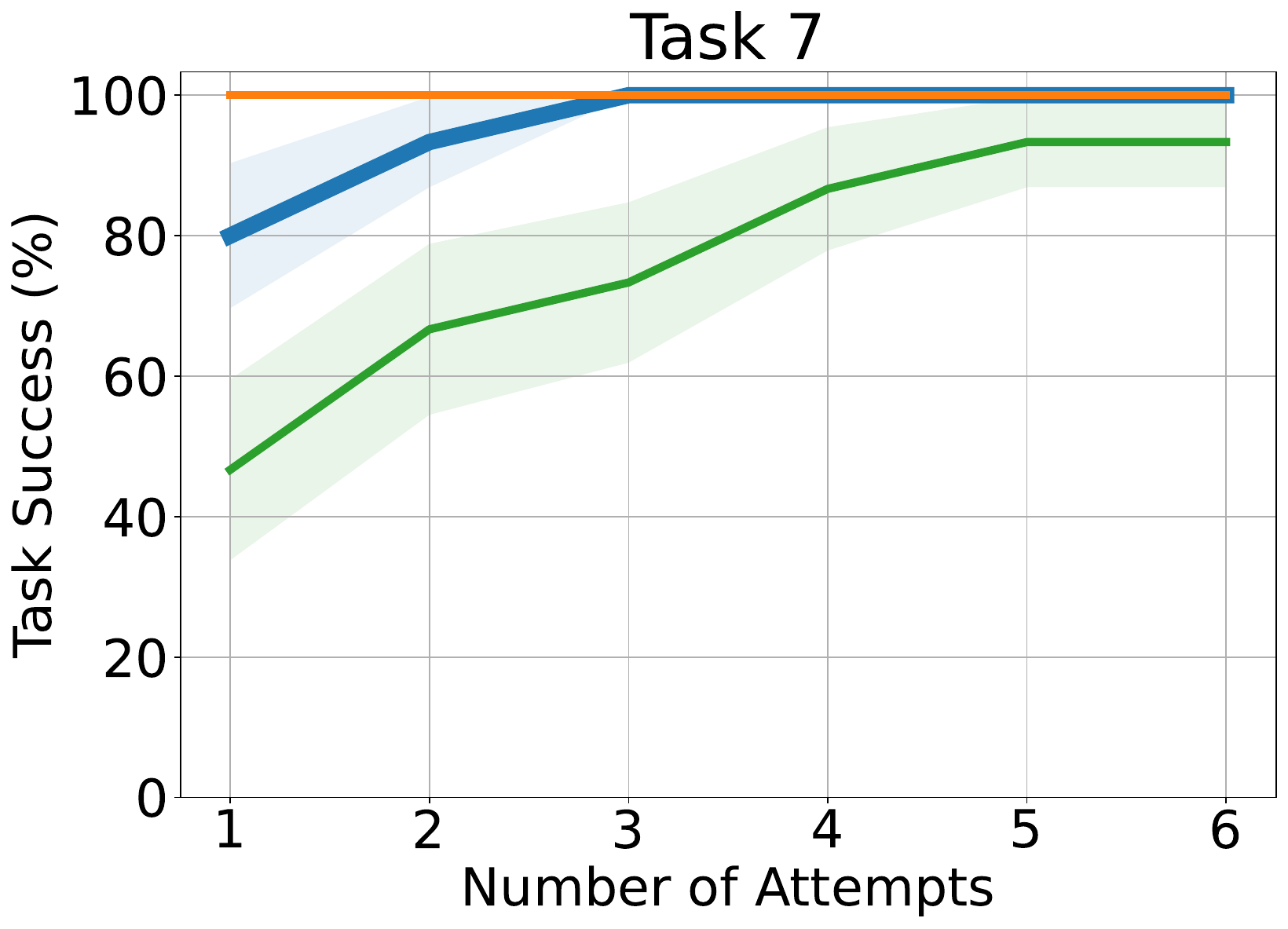}
    \end{subfigure}
    \hfill
    \begin{subfigure}[b]{0.19\textwidth}
        \centering
        \includegraphics[width=\textwidth]{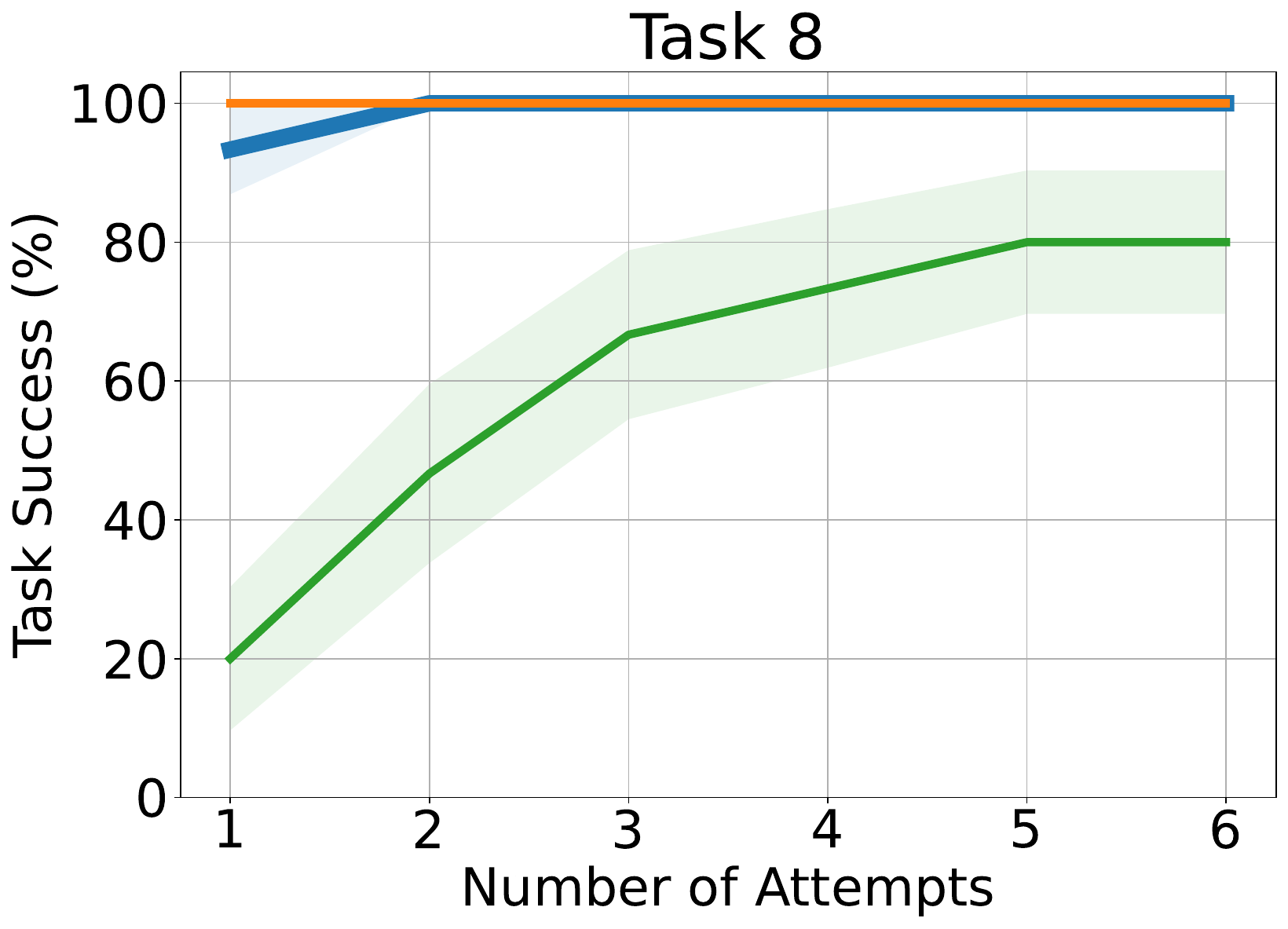}
    \end{subfigure}
    \hfill
    \begin{subfigure}[b]{0.19\textwidth}
        \centering
        \includegraphics[width=\textwidth]{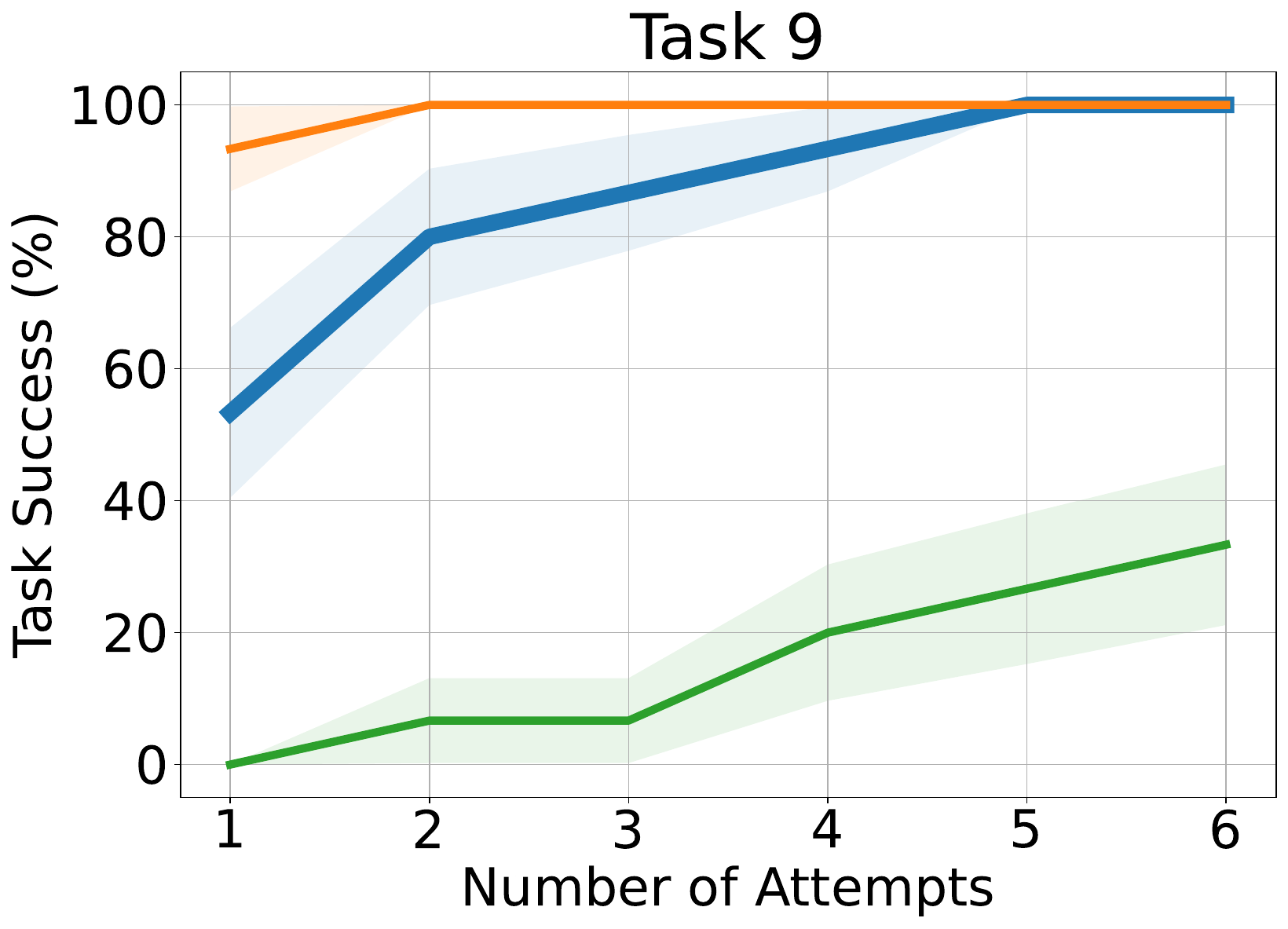}
    \end{subfigure}
    \begin{subfigure}[b]{0.19\textwidth}
        \centering
        \includegraphics[width=\textwidth]{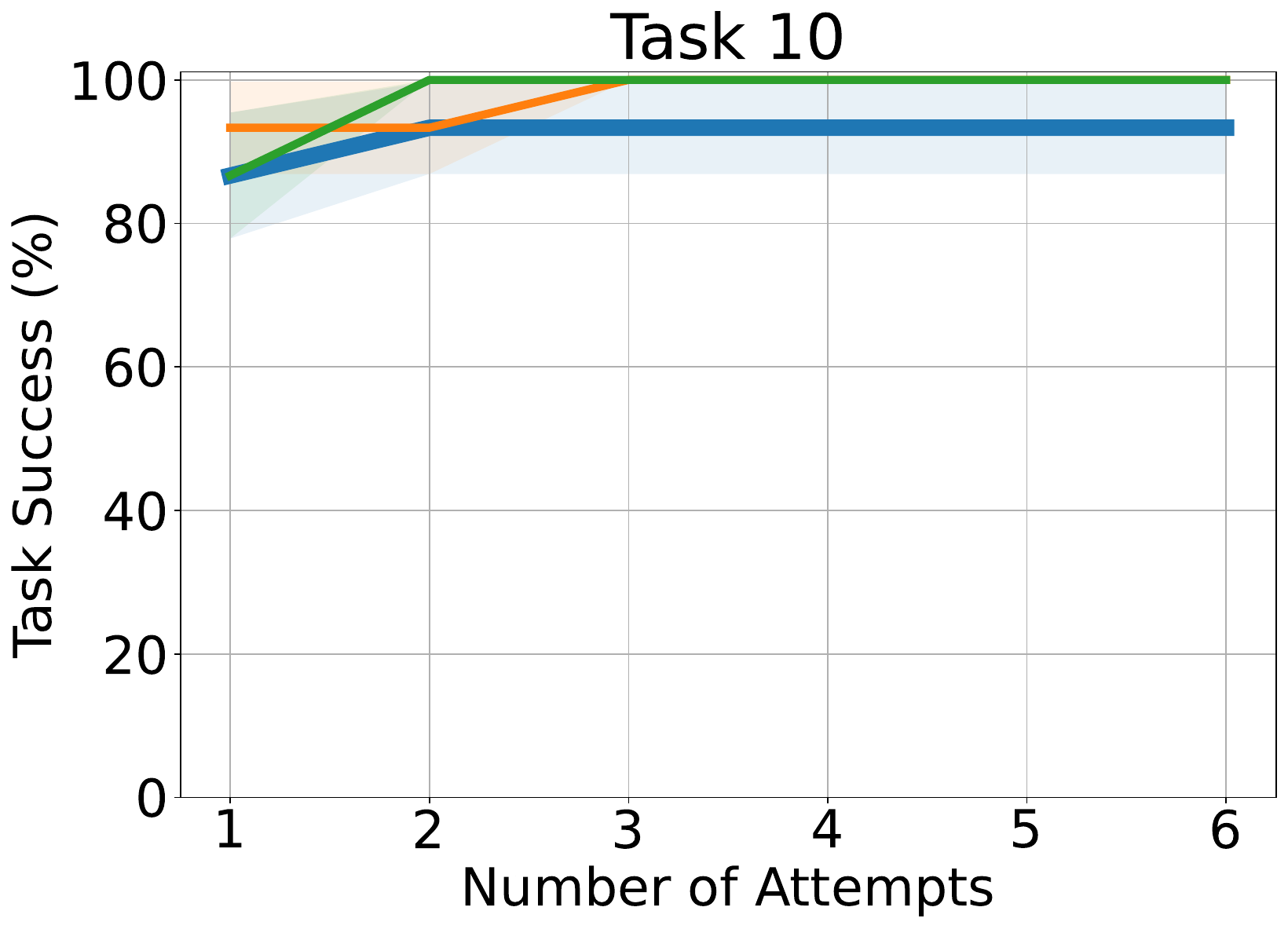}
    \end{subfigure}
    
     \begin{subfigure}[b]{0.19\textwidth}
        \centering
        \includegraphics[width=\textwidth]{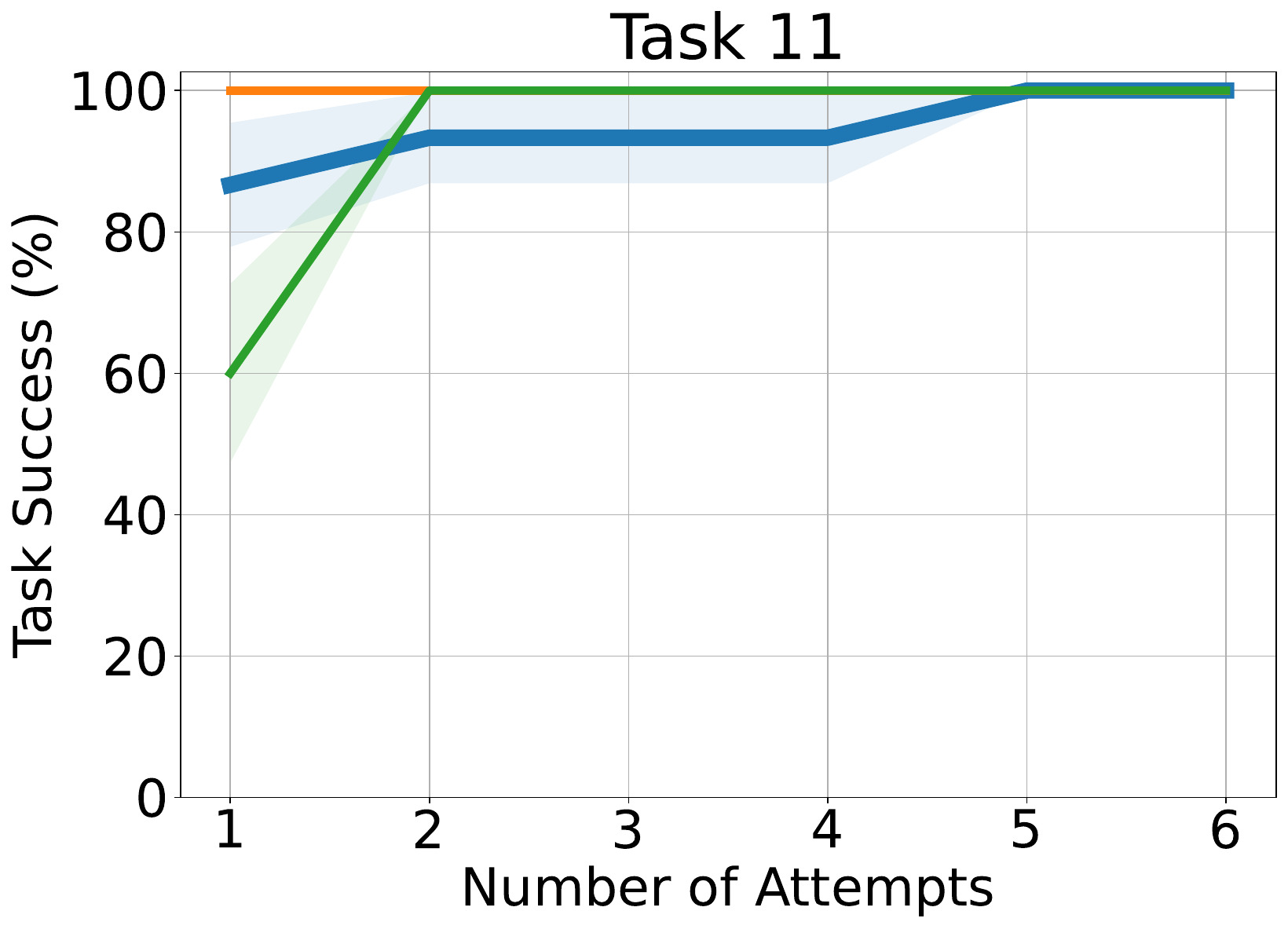}
    \end{subfigure}
    \hfill
    \begin{subfigure}[b]{0.19\textwidth}
        \centering
        \includegraphics[width=\textwidth]{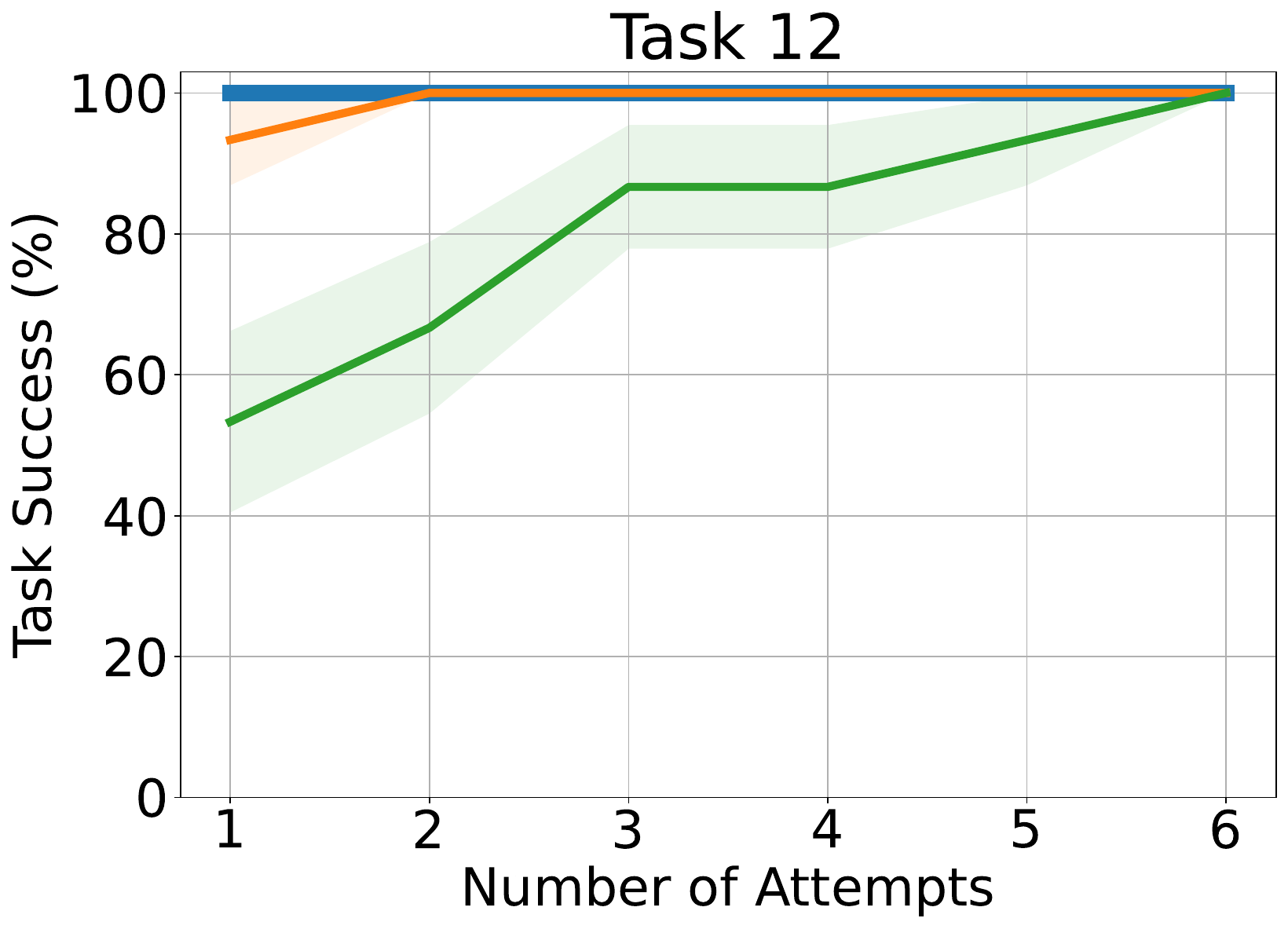}
    \end{subfigure}
    \hfill
    \begin{subfigure}[b]{0.19\textwidth}
        \centering
        \includegraphics[width=\textwidth]{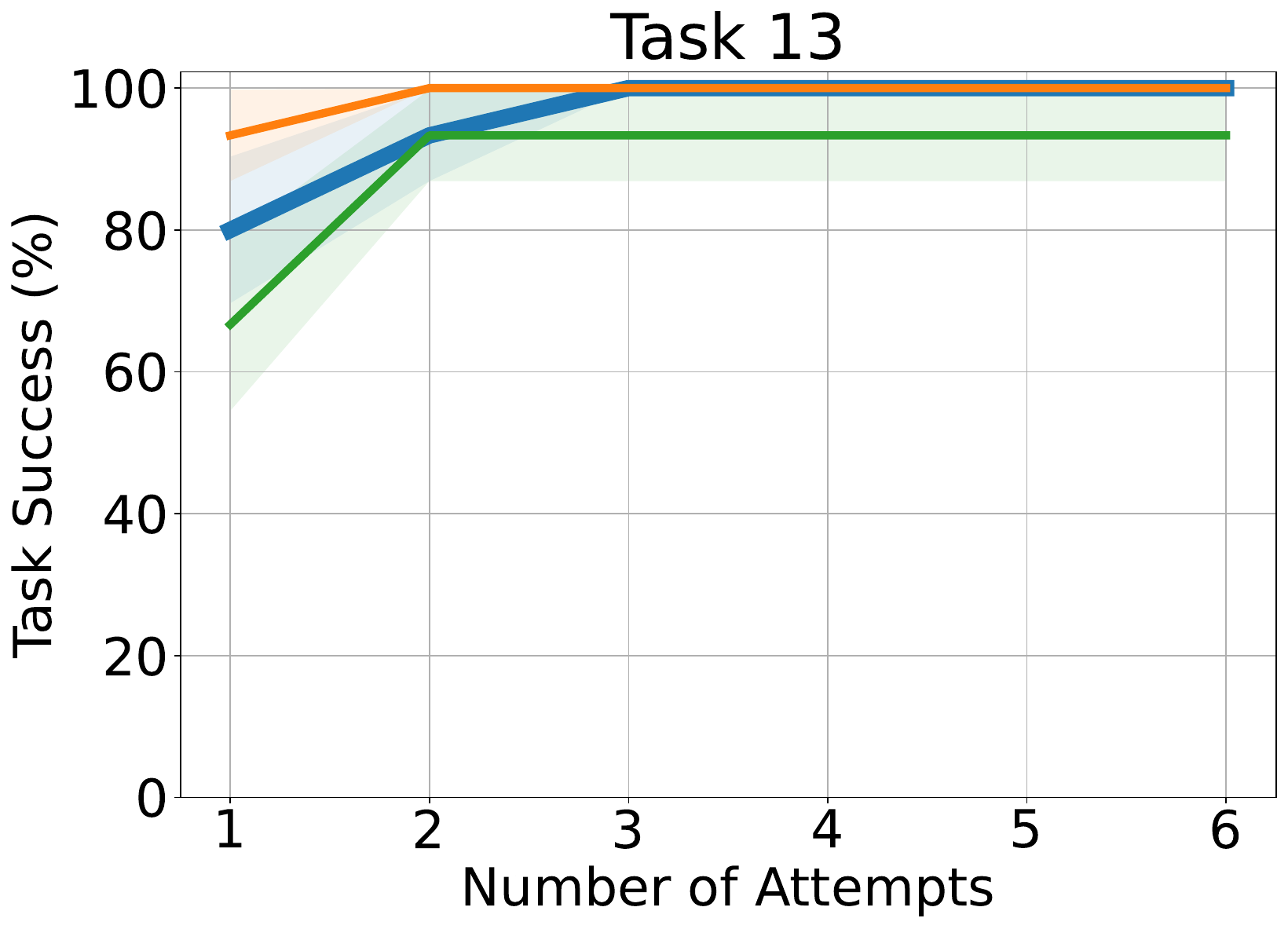}
    \end{subfigure}
    \hfill
    \begin{subfigure}[b]{0.19\textwidth}
        \centering
        \includegraphics[width=\textwidth]{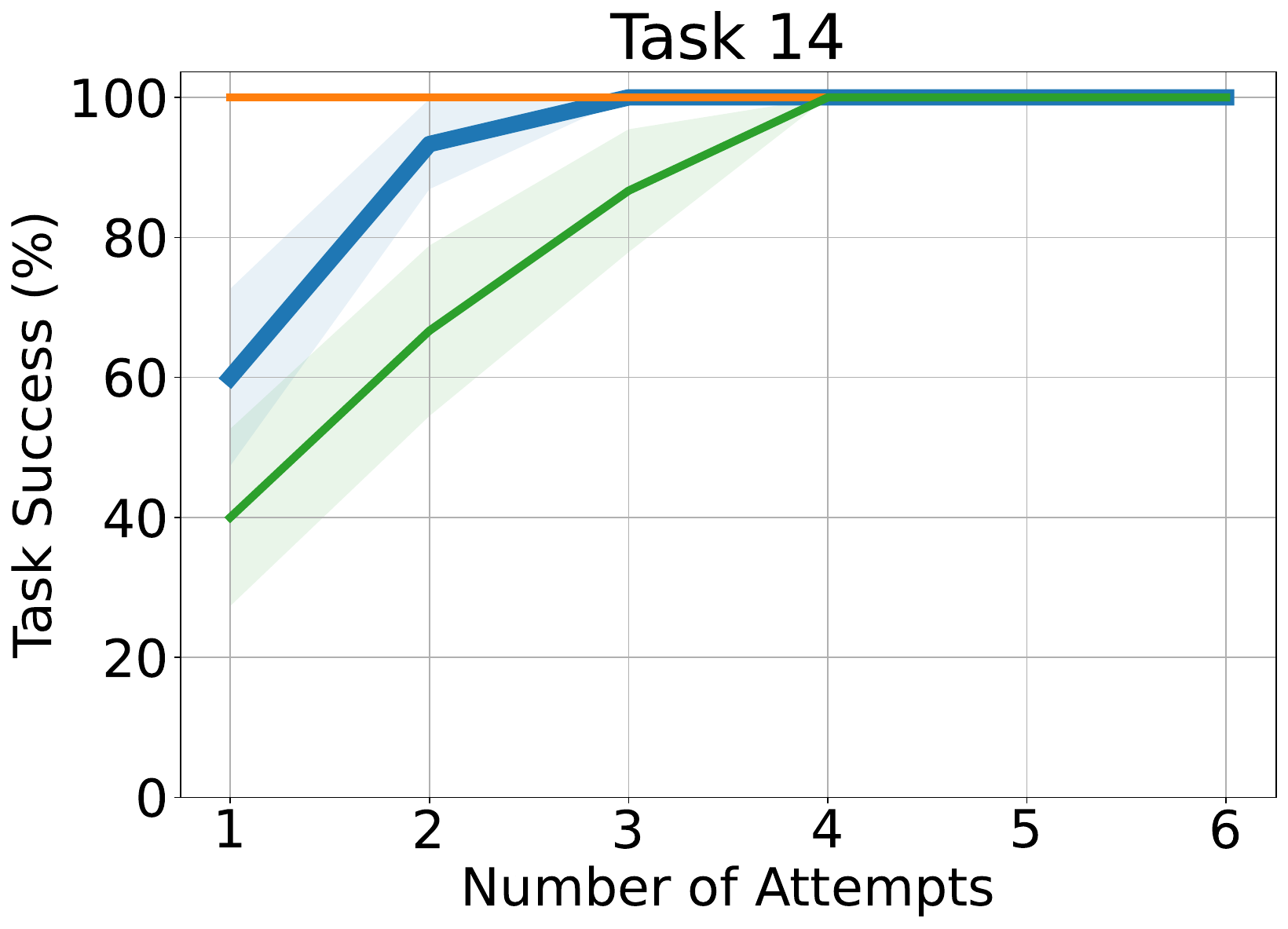}
    \end{subfigure}
    \begin{subfigure}[b]{0.19\textwidth}
        \centering
        \includegraphics[width=\textwidth]{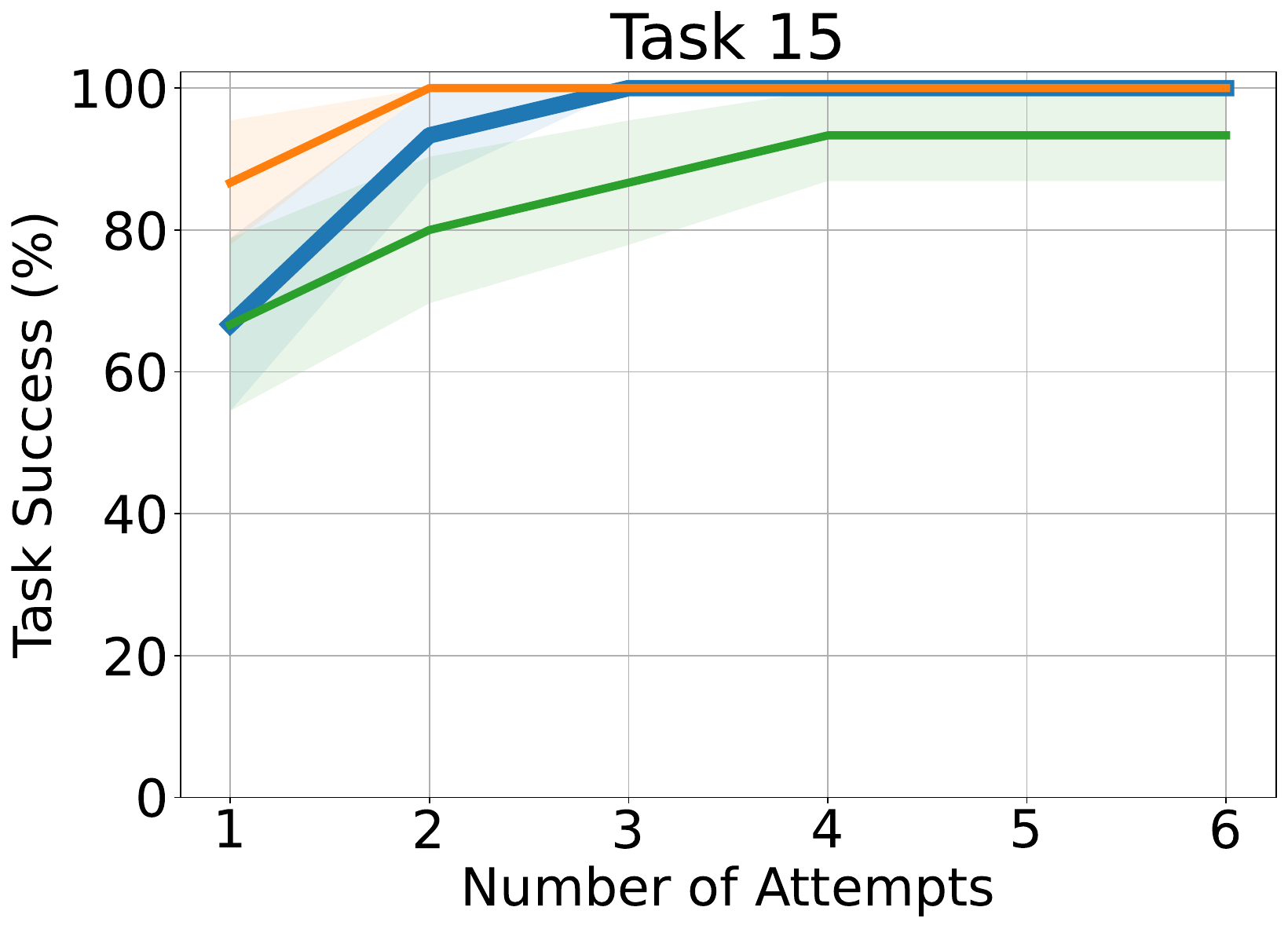}
    \end{subfigure}
    
     \begin{subfigure}[b]{0.19\textwidth}
        \centering
        \includegraphics[width=\textwidth]{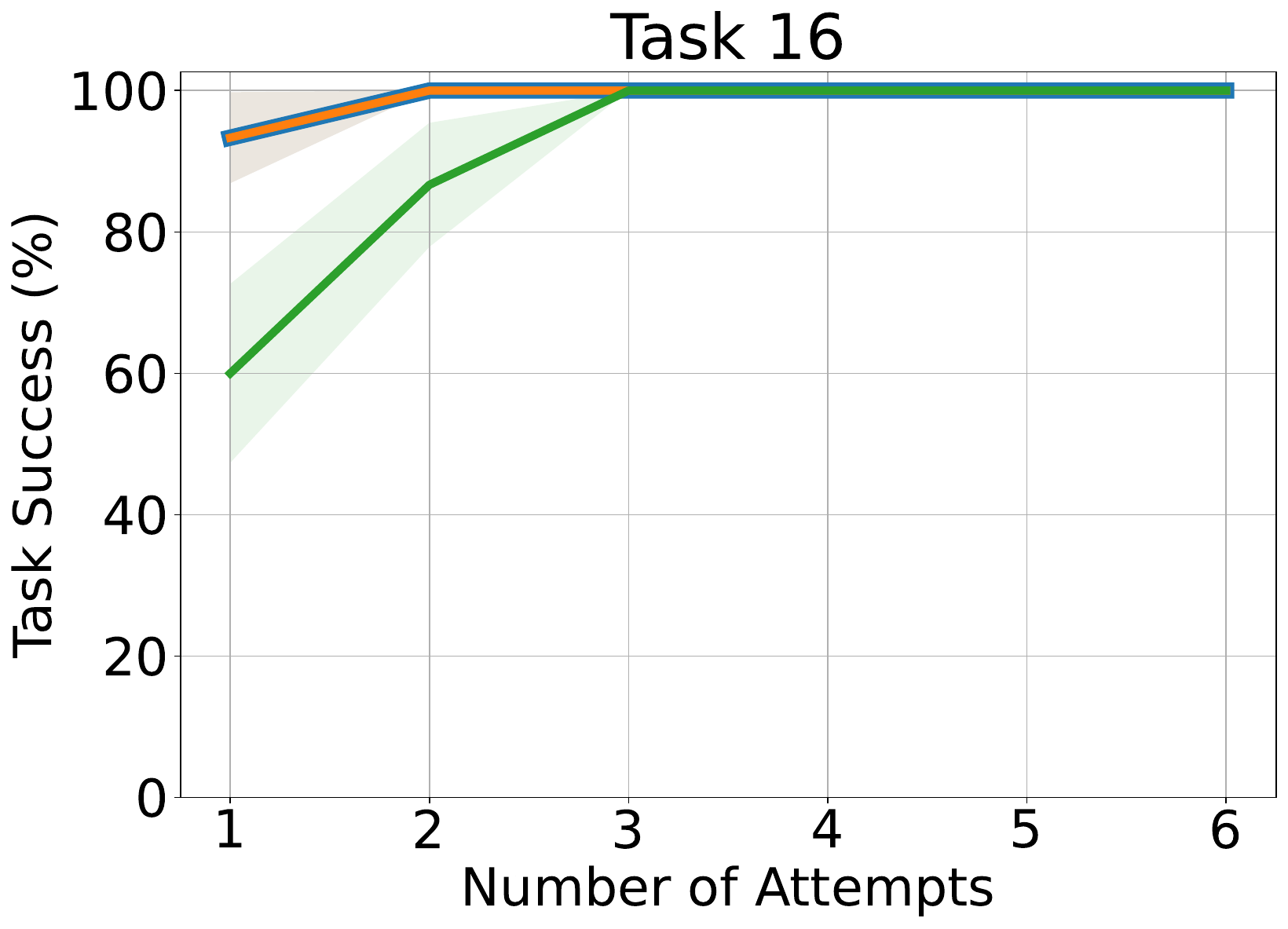}
    \end{subfigure}
    \hfill
    \begin{subfigure}[b]{0.19\textwidth}
        \centering
        \includegraphics[width=\textwidth]{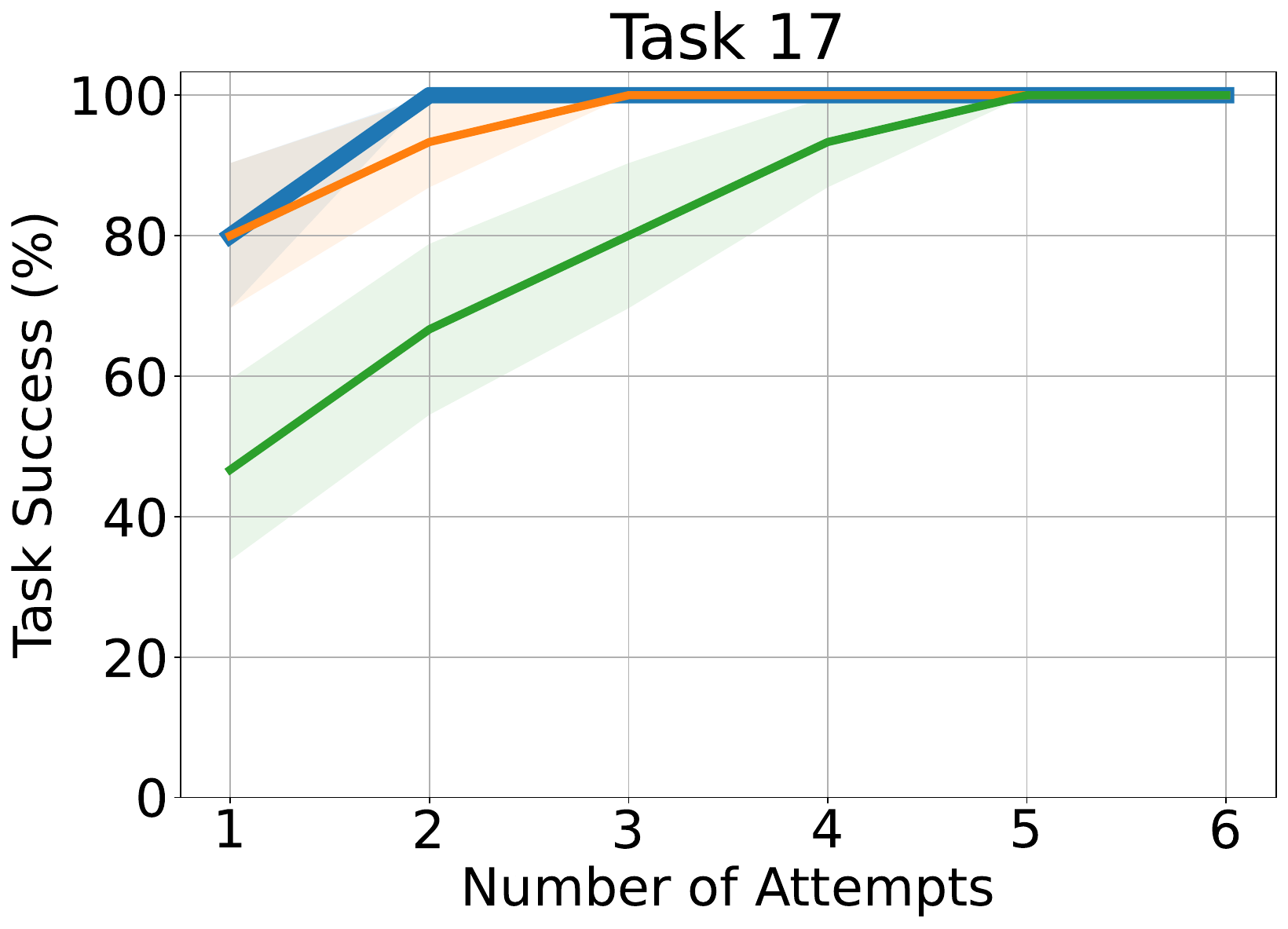}
    \end{subfigure}
    \hfill
    \begin{subfigure}[b]{0.19\textwidth}
        \centering
        \includegraphics[width=\textwidth]{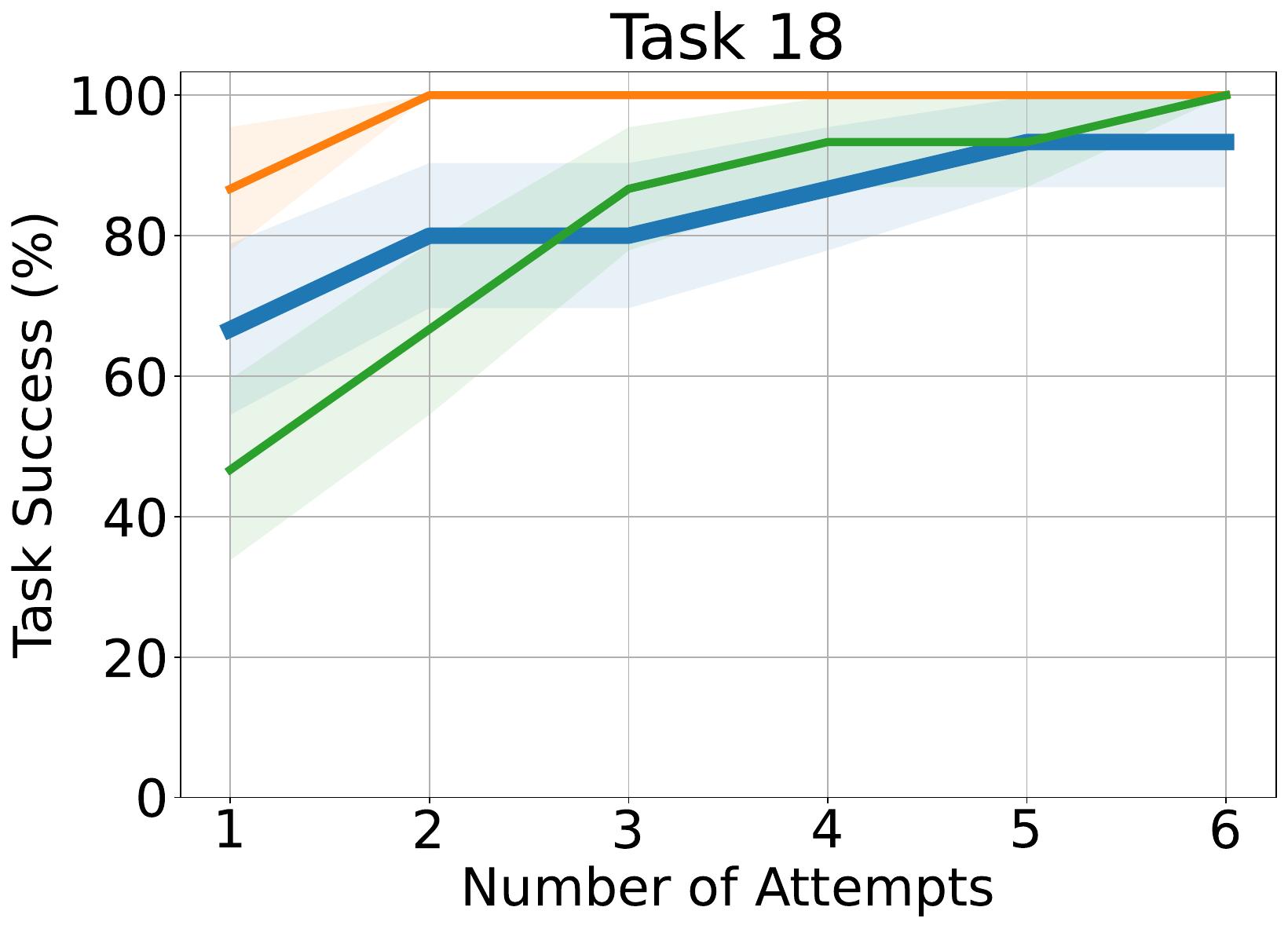}
    \end{subfigure}
    \hfill
    \begin{subfigure}[b]{0.19\textwidth}
        \centering
        \includegraphics[width=\textwidth]{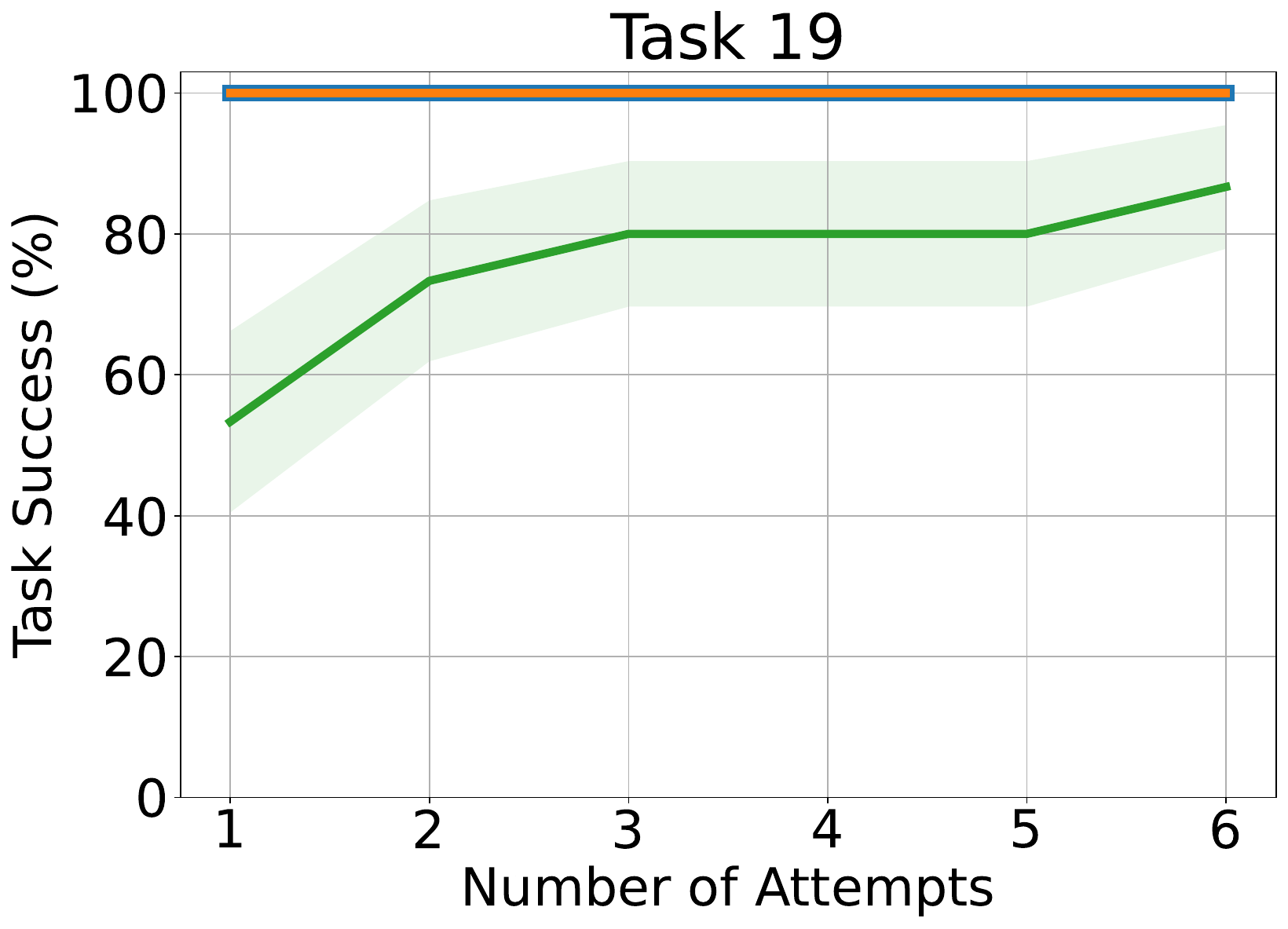}
    \end{subfigure}
    \begin{subfigure}[b]{0.19\textwidth}
        \centering
        \includegraphics[width=\textwidth]{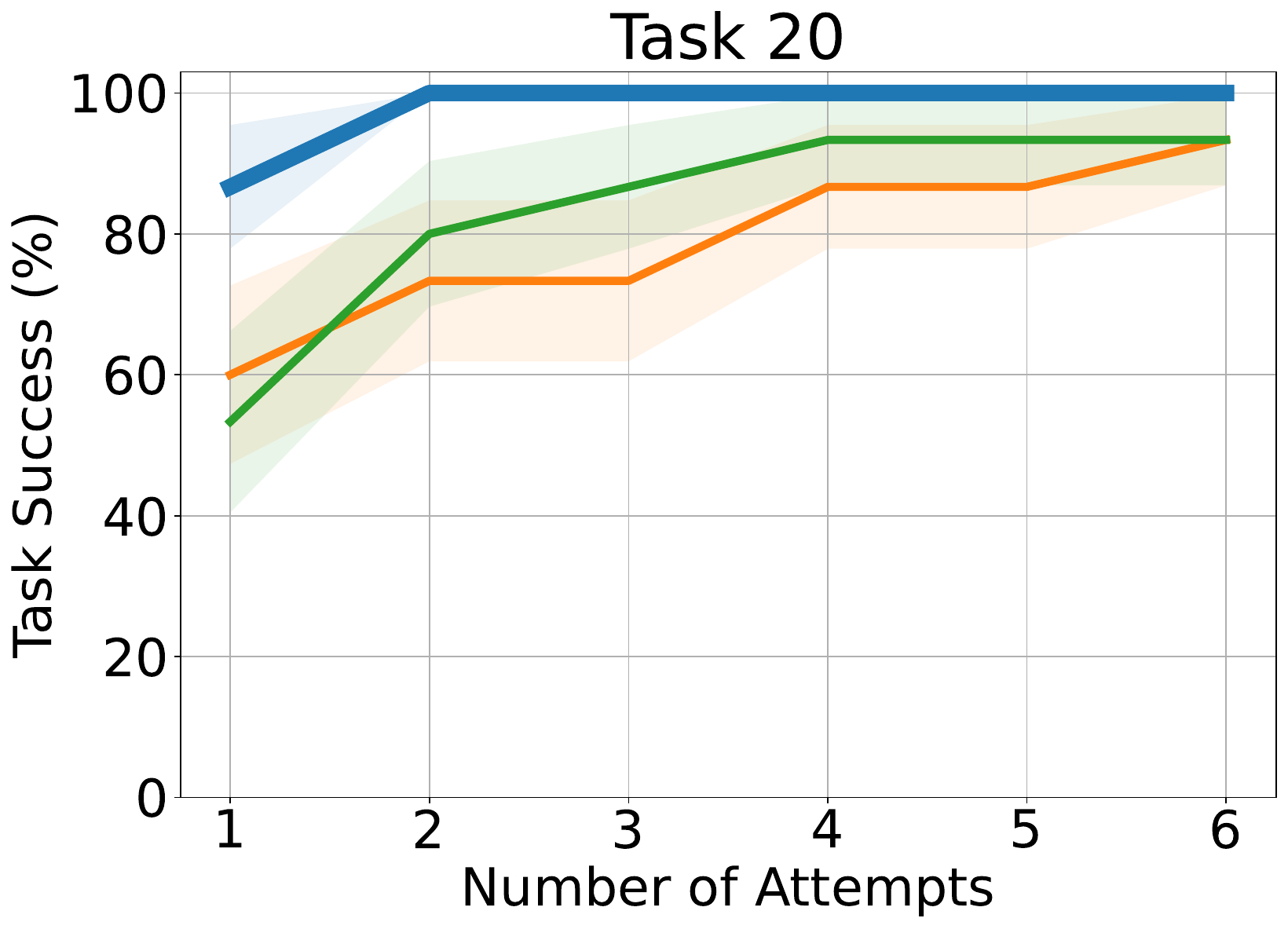}
    \end{subfigure}
    
     \begin{subfigure}[b]{0.19\textwidth}
        \centering
        \includegraphics[width=\textwidth]{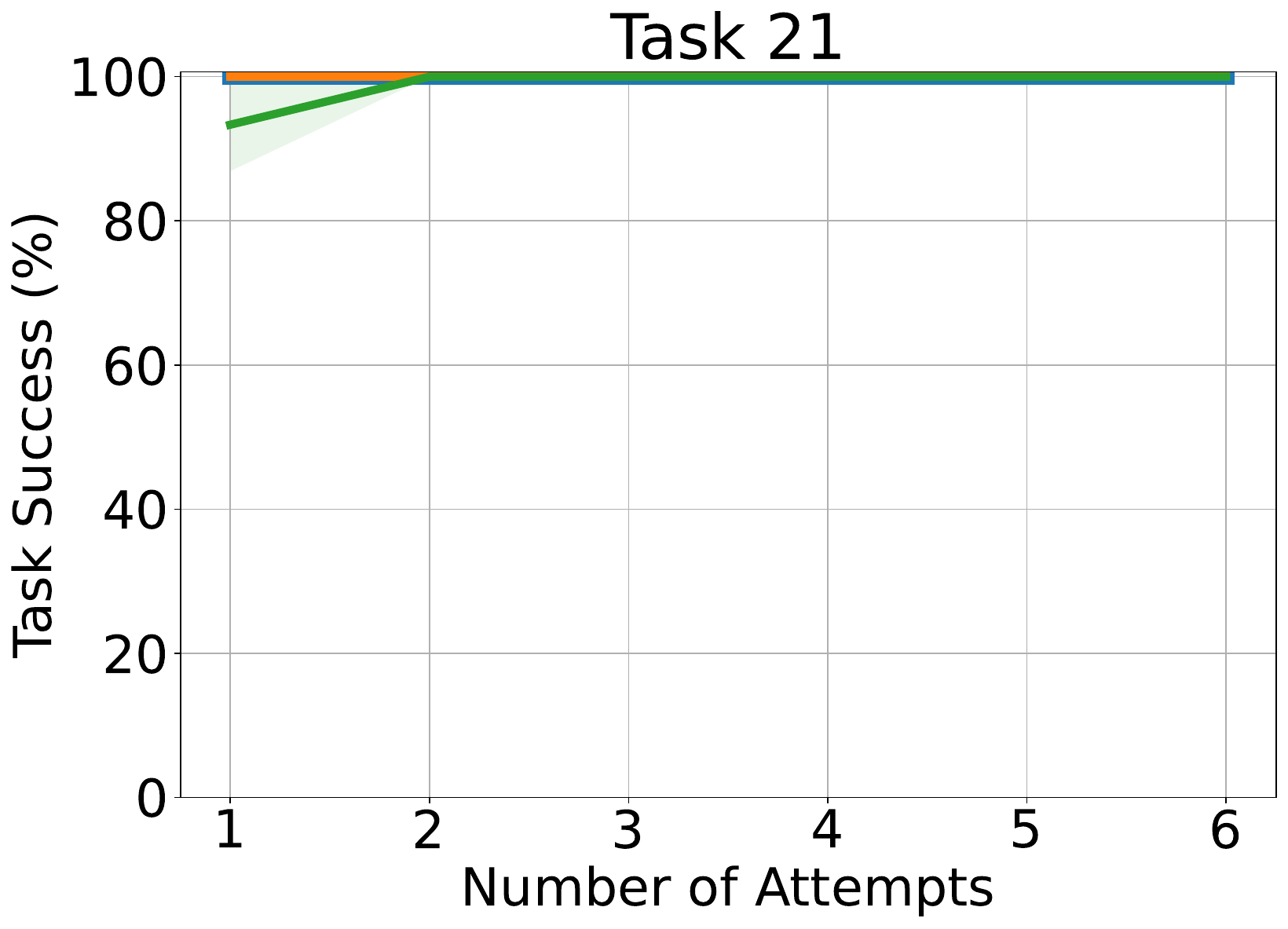}
    \end{subfigure}
    \hfill
    \begin{subfigure}[b]{0.19\textwidth}
        \centering
        \includegraphics[width=\textwidth]{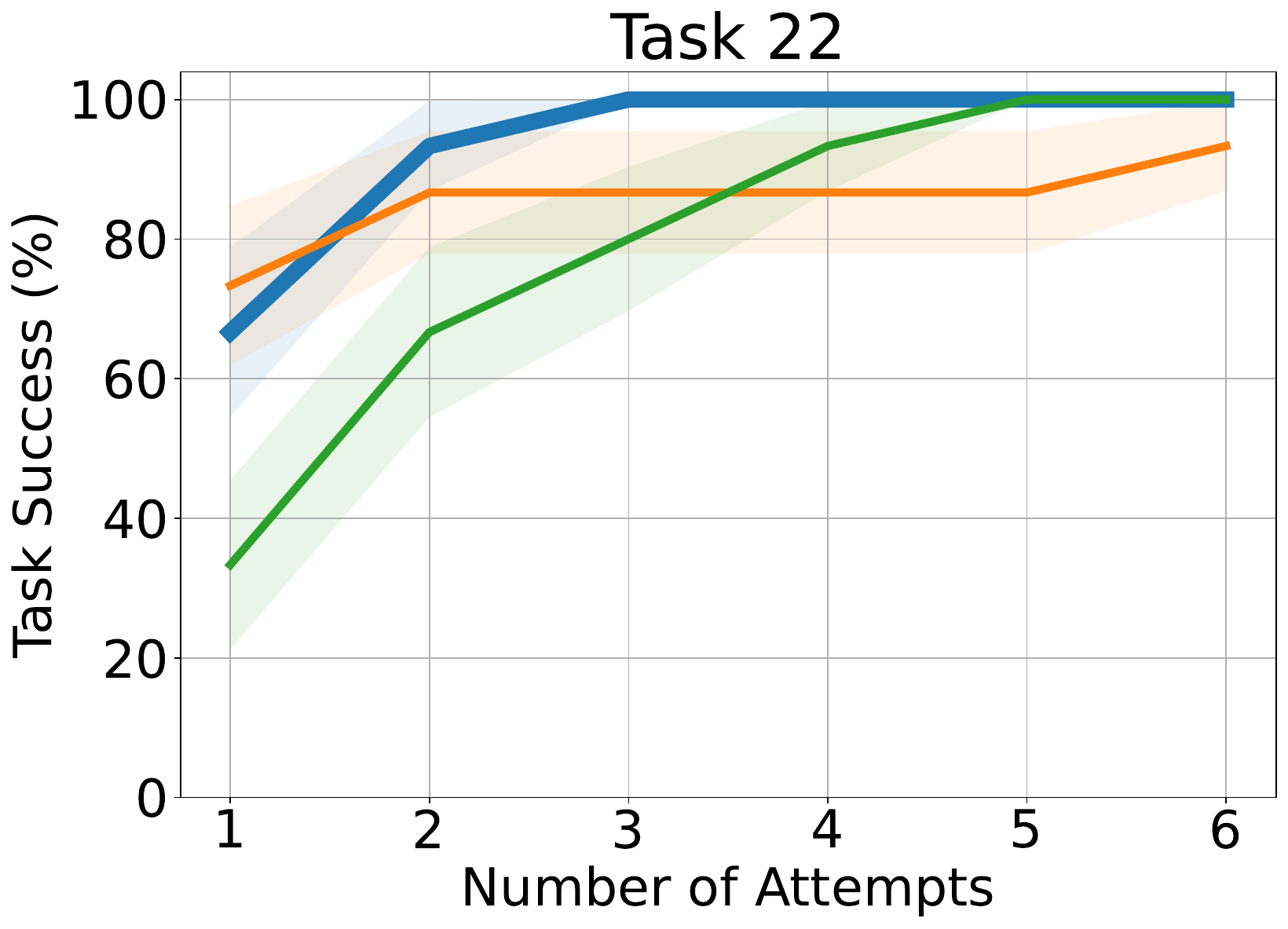}
    \end{subfigure}
    \hfill
    \begin{subfigure}[b]{0.19\textwidth}
        \centering
        \includegraphics[width=\textwidth]{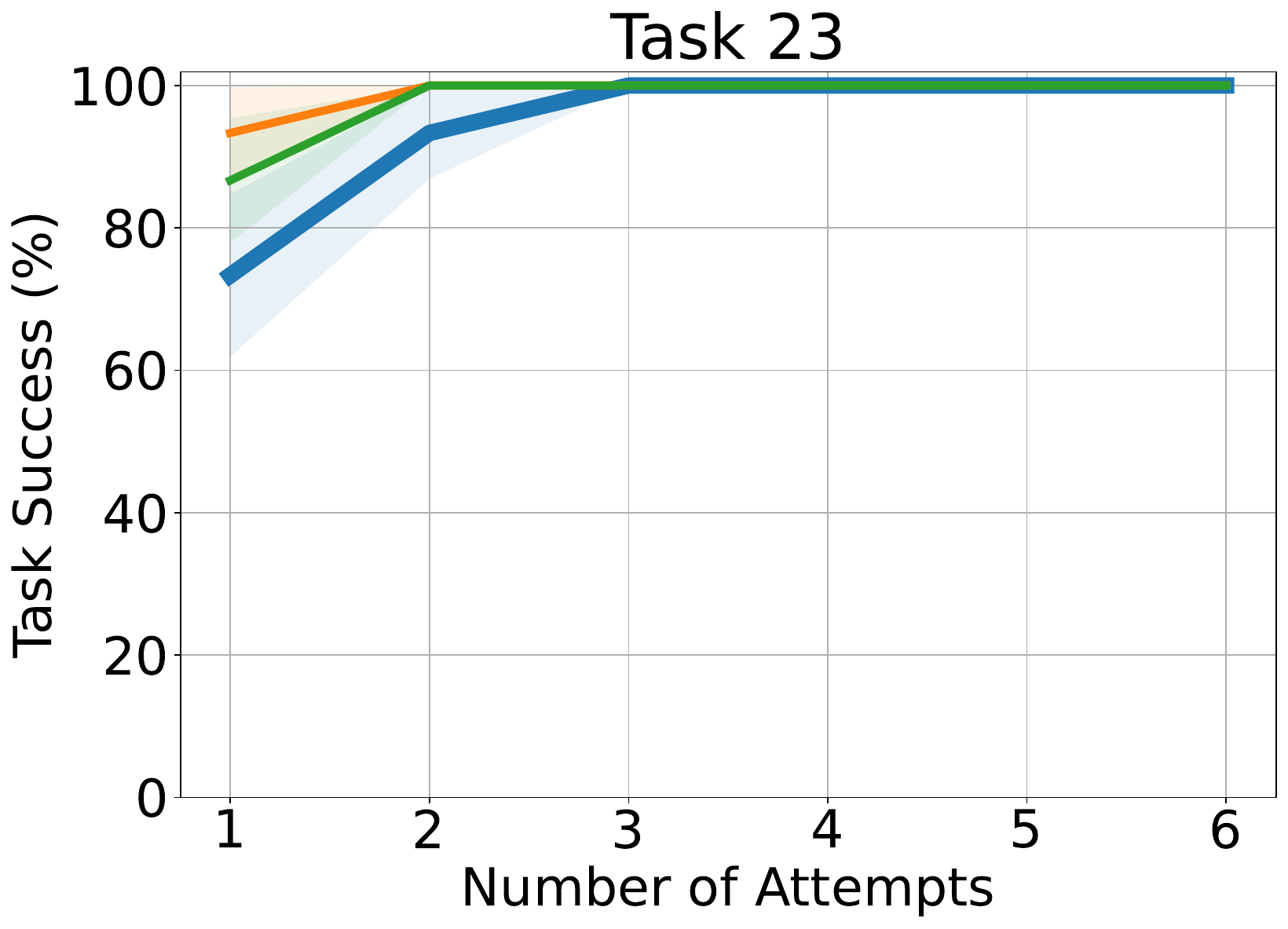}
    \end{subfigure}
    \hfill
    \begin{subfigure}[b]{0.19\textwidth}
        \centering
        \includegraphics[width=\textwidth]{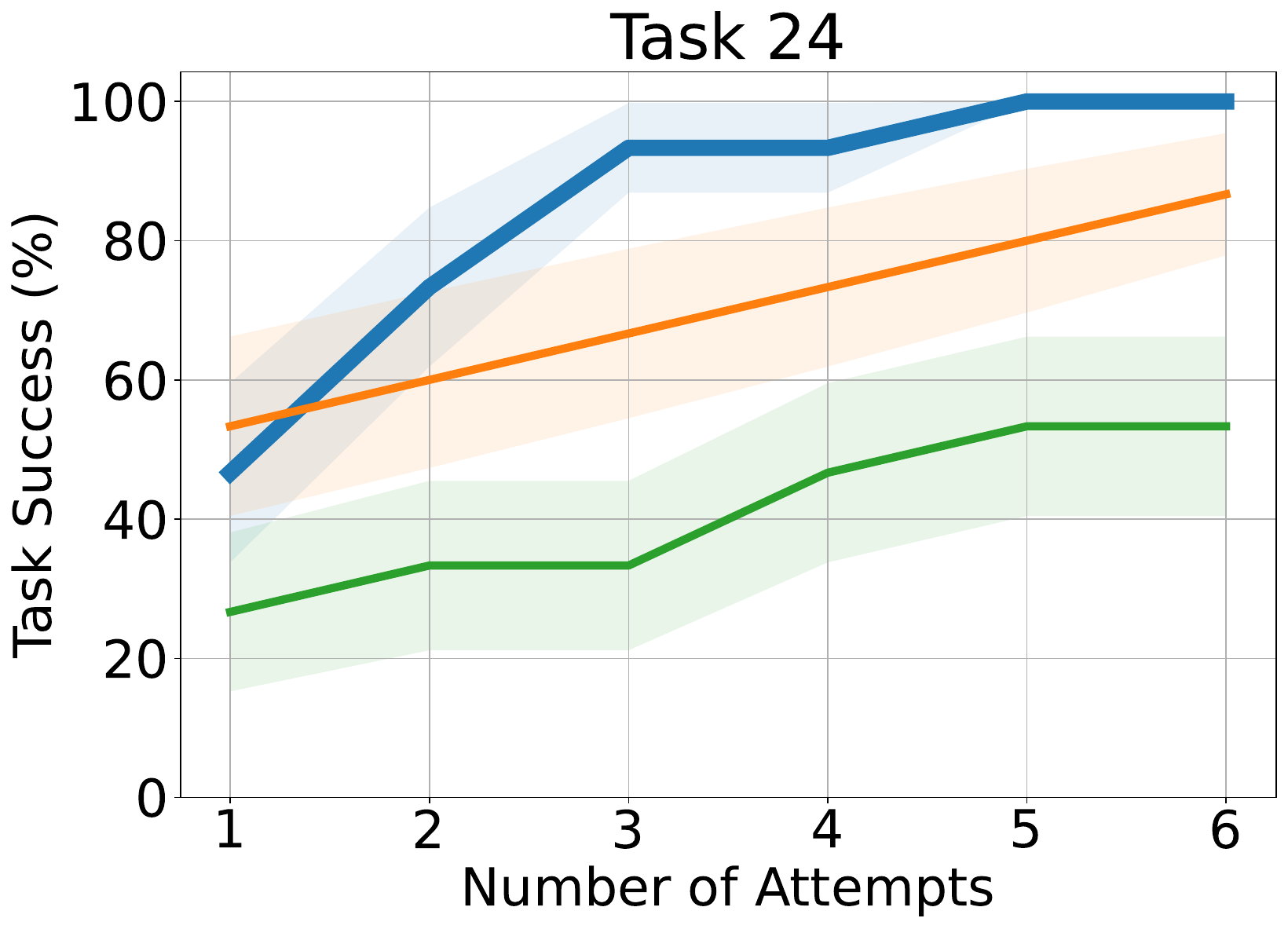}
    \end{subfigure}
    \begin{subfigure}[b]{0.19\textwidth}
        \centering
        \includegraphics[width=\textwidth]{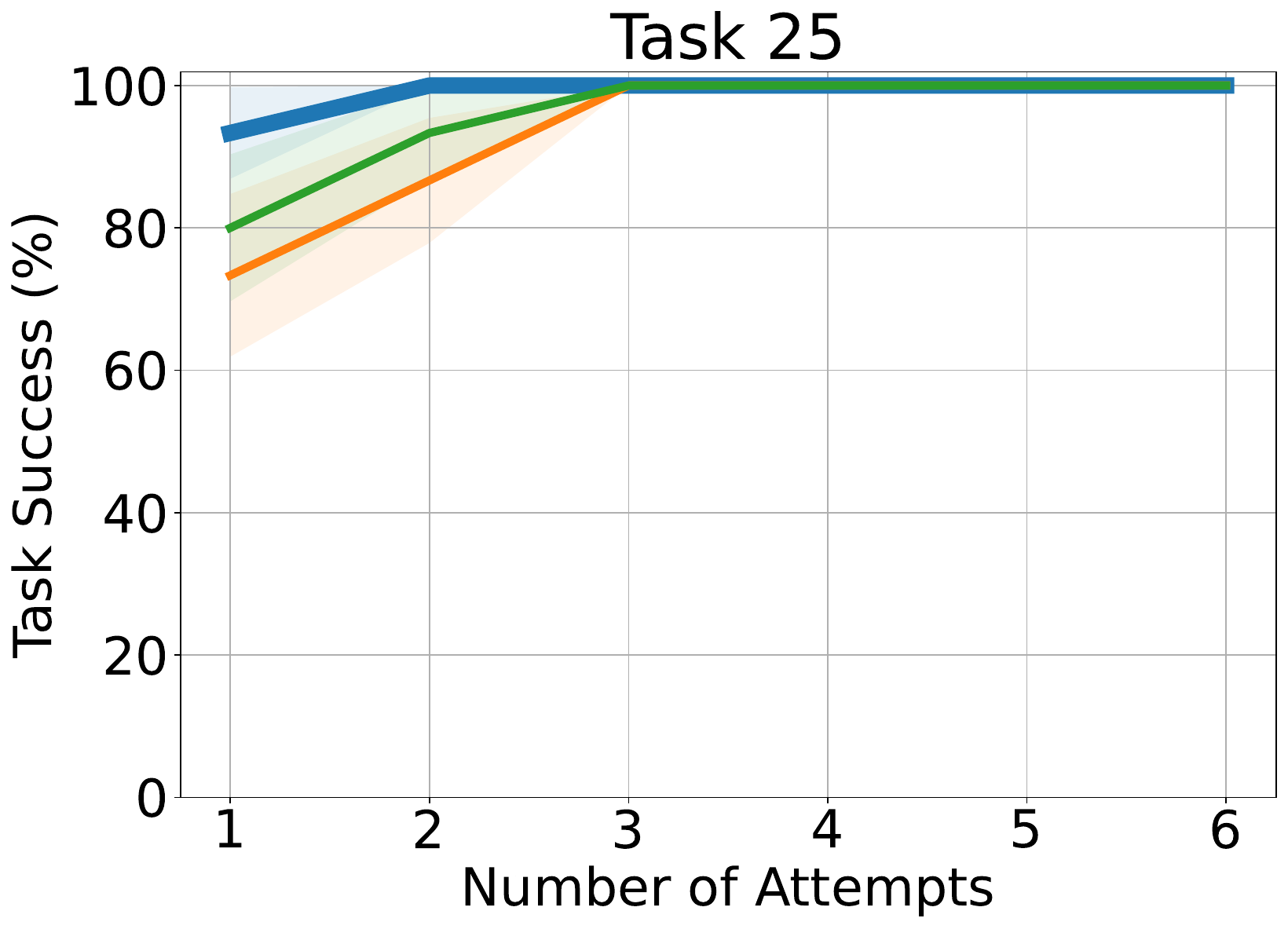}
    \end{subfigure}
    
     \begin{subfigure}[b]{0.19\textwidth}
        \centering
        \includegraphics[width=\textwidth]{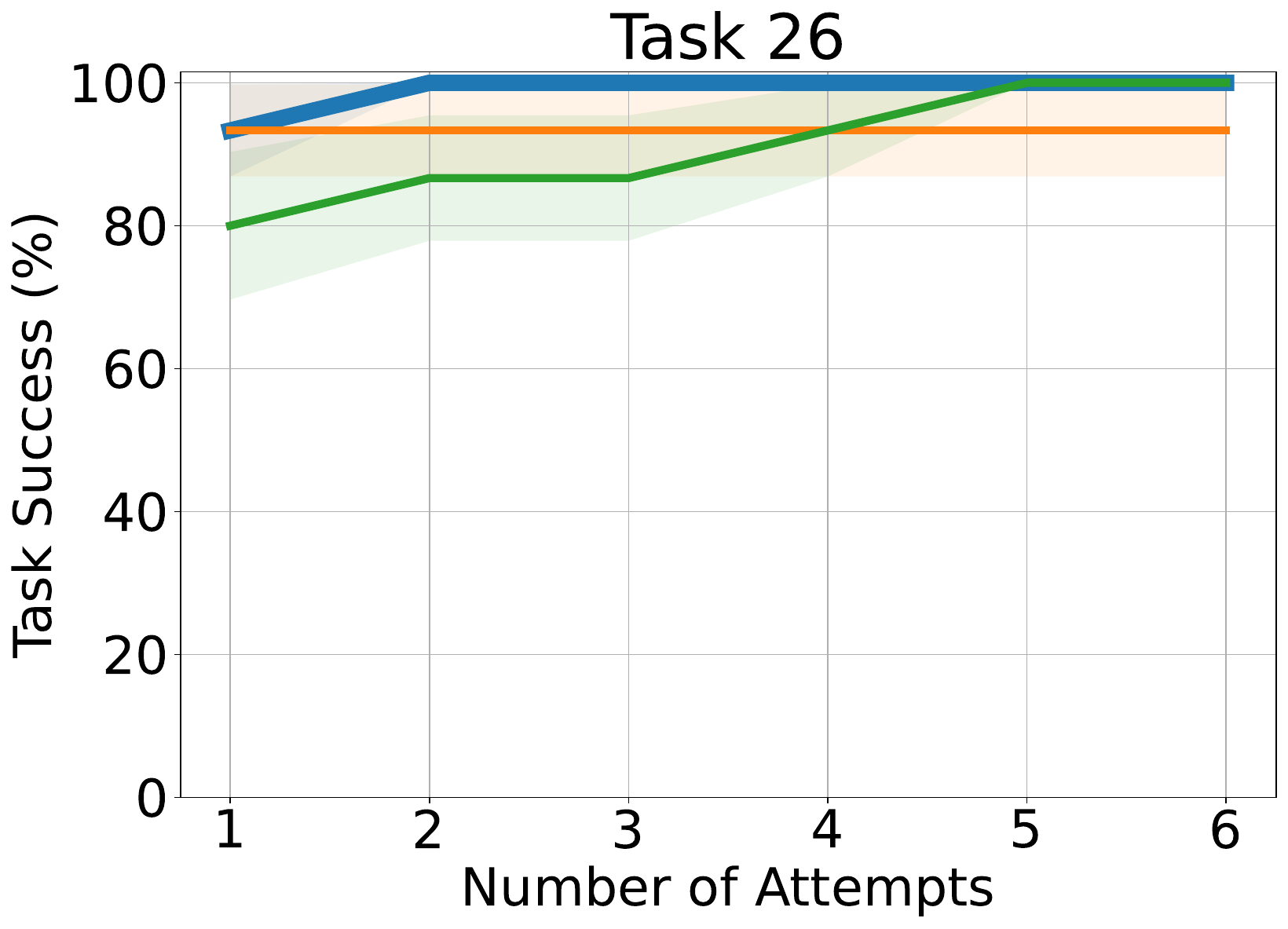}
    \end{subfigure}
    \hfill
    \begin{subfigure}[b]{0.19\textwidth}
        \centering
        \includegraphics[width=\textwidth]{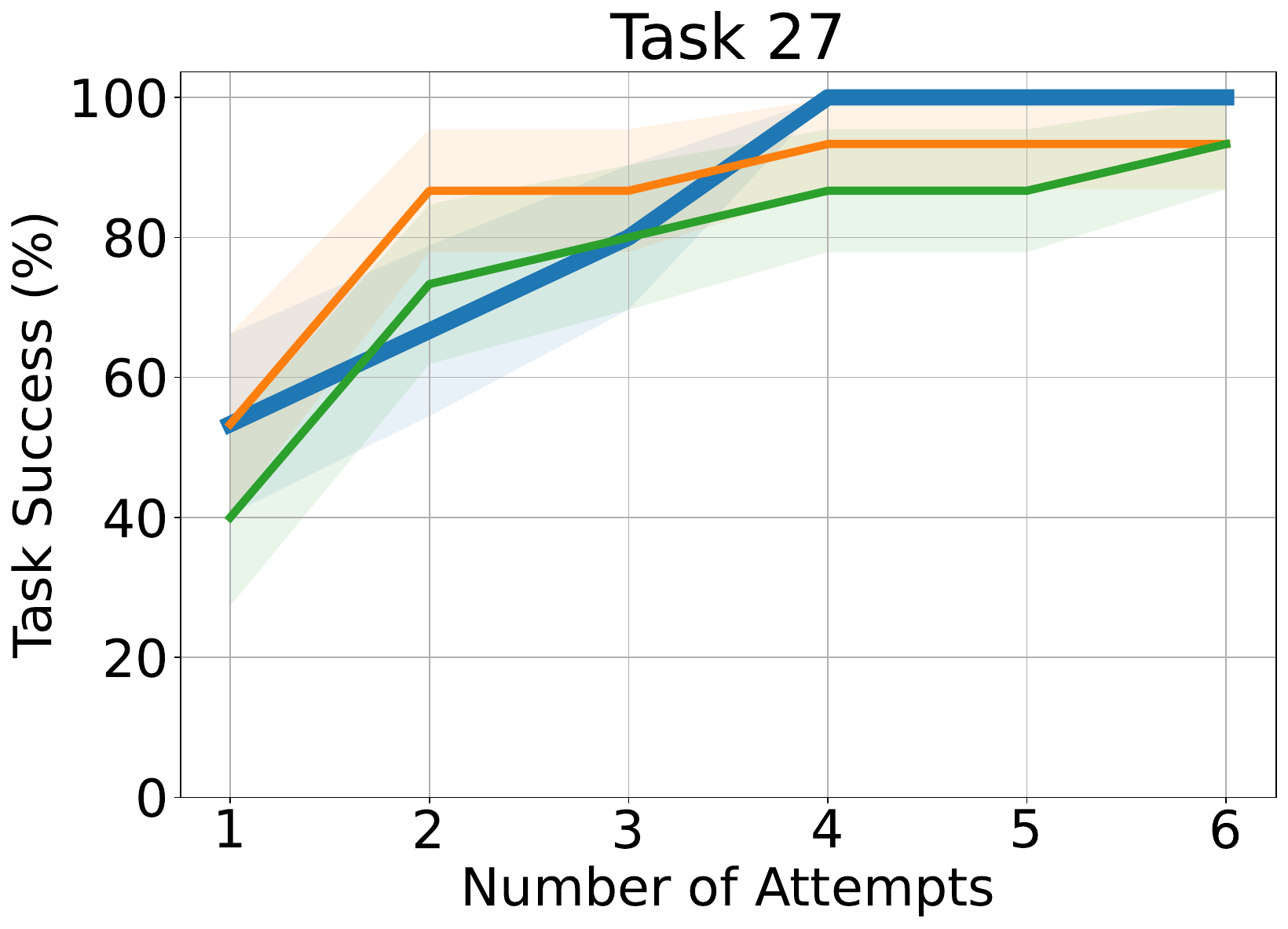}
    \end{subfigure}
    \hfill
    \begin{subfigure}[b]{0.19\textwidth}
        \centering
        \includegraphics[width=\textwidth]{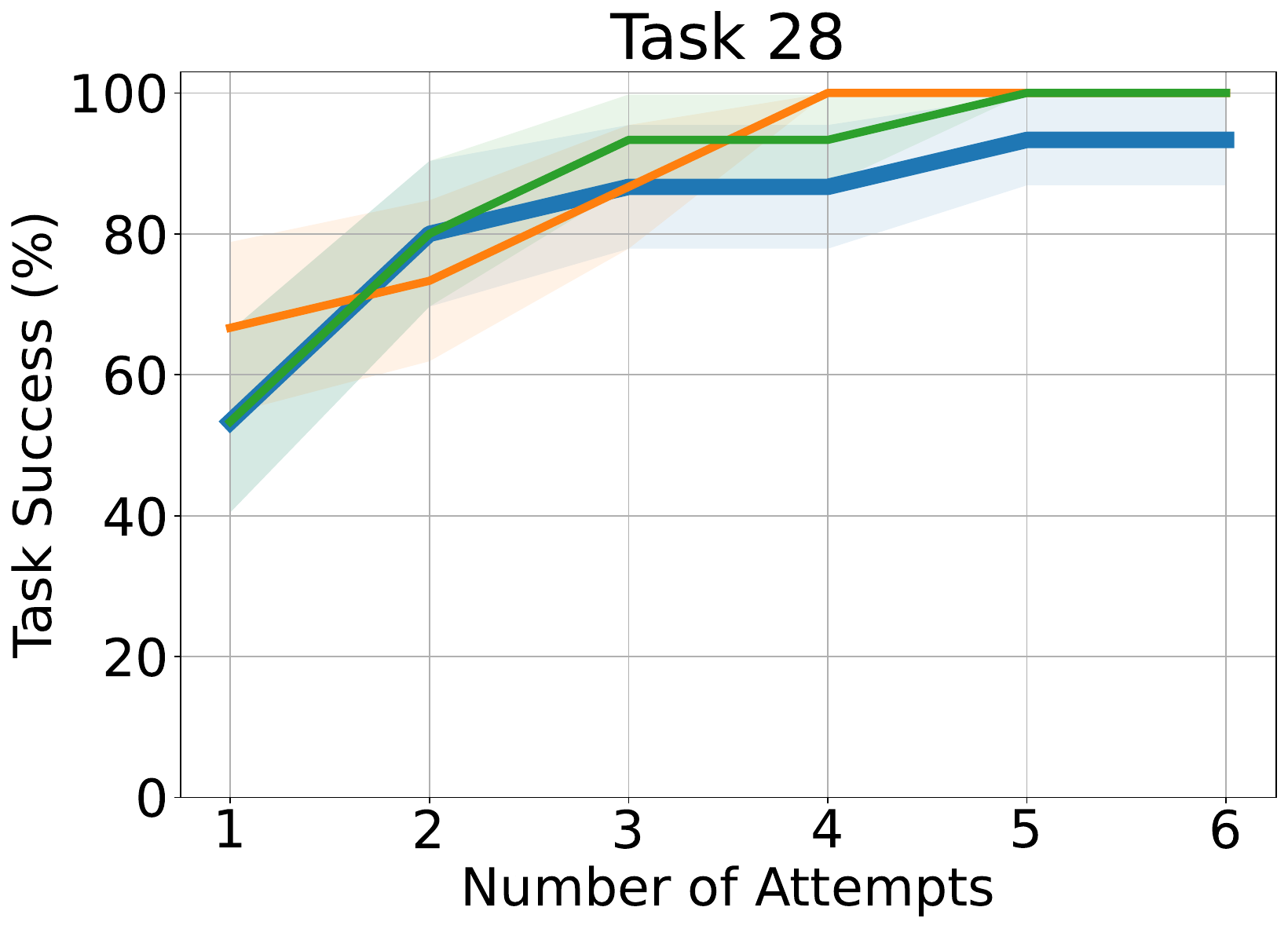}
    \end{subfigure}
    \hfill
    \begin{subfigure}[b]{0.19\textwidth}
        \centering
        \includegraphics[width=\textwidth]{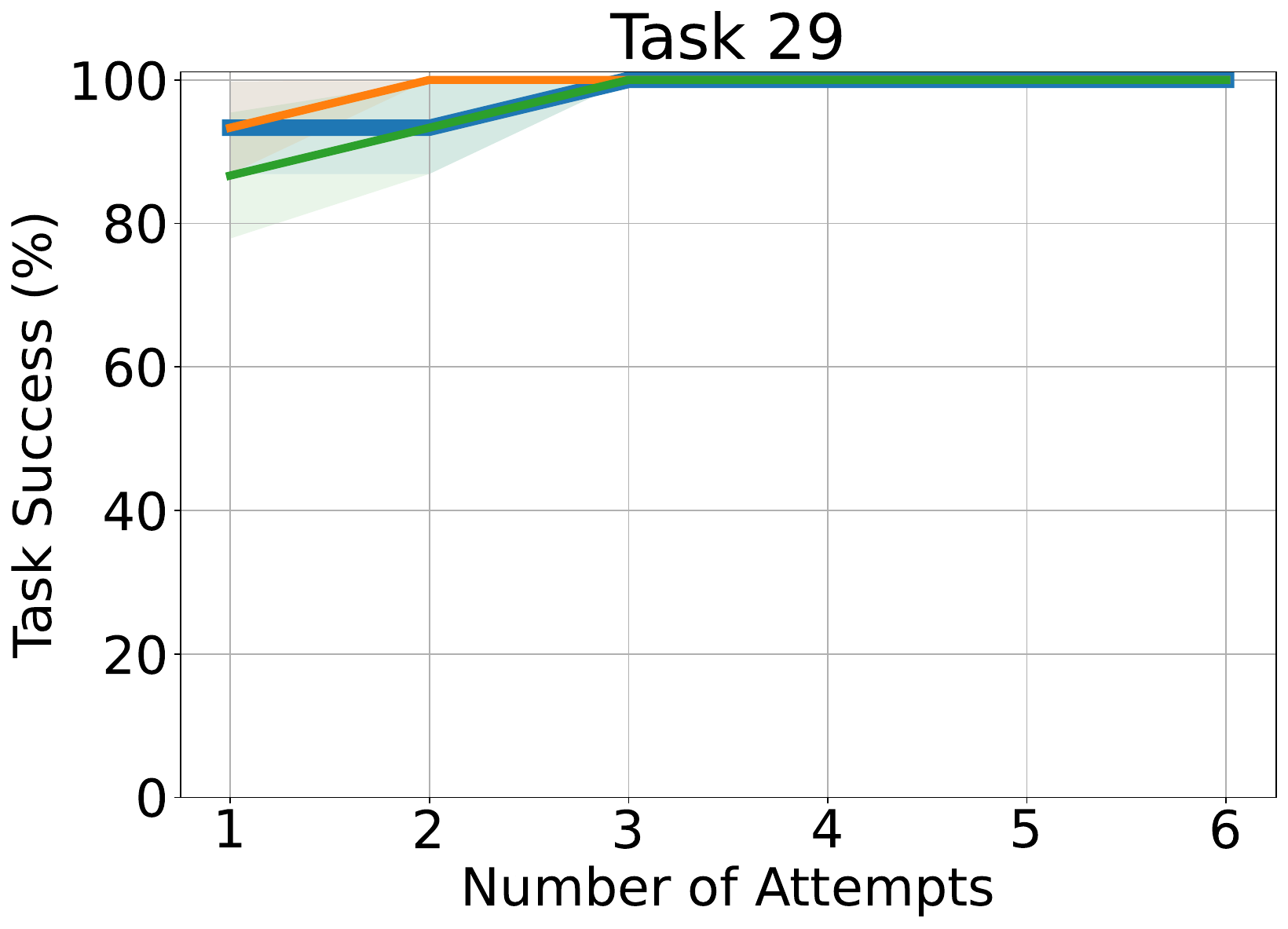}
    \end{subfigure}
    \begin{subfigure}[b]{0.19\textwidth}
        \centering
        \includegraphics[width=\textwidth]{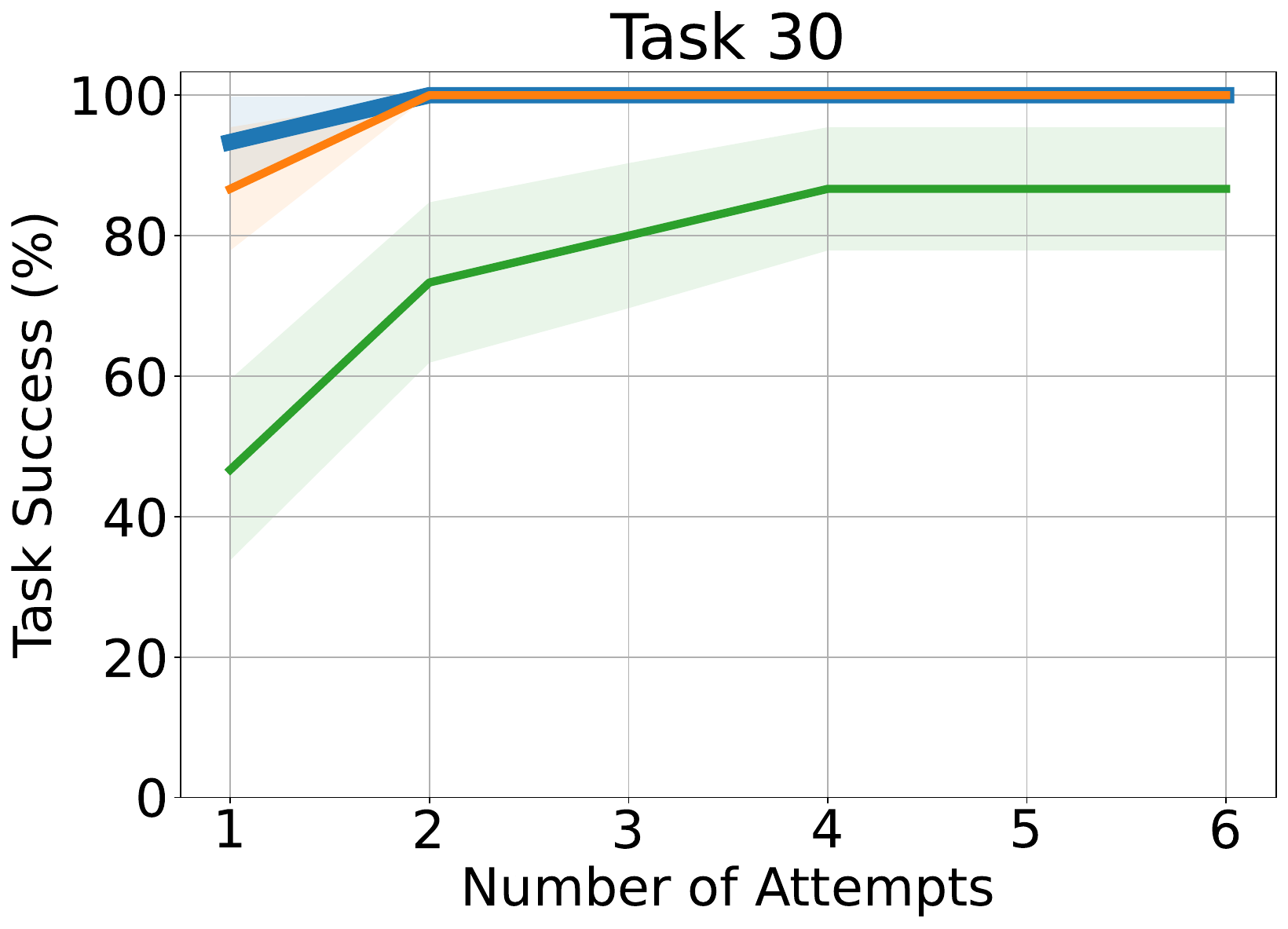}
    \end{subfigure}
    
     \begin{subfigure}[b]{0.19\textwidth}
        \centering
        \includegraphics[width=\textwidth]{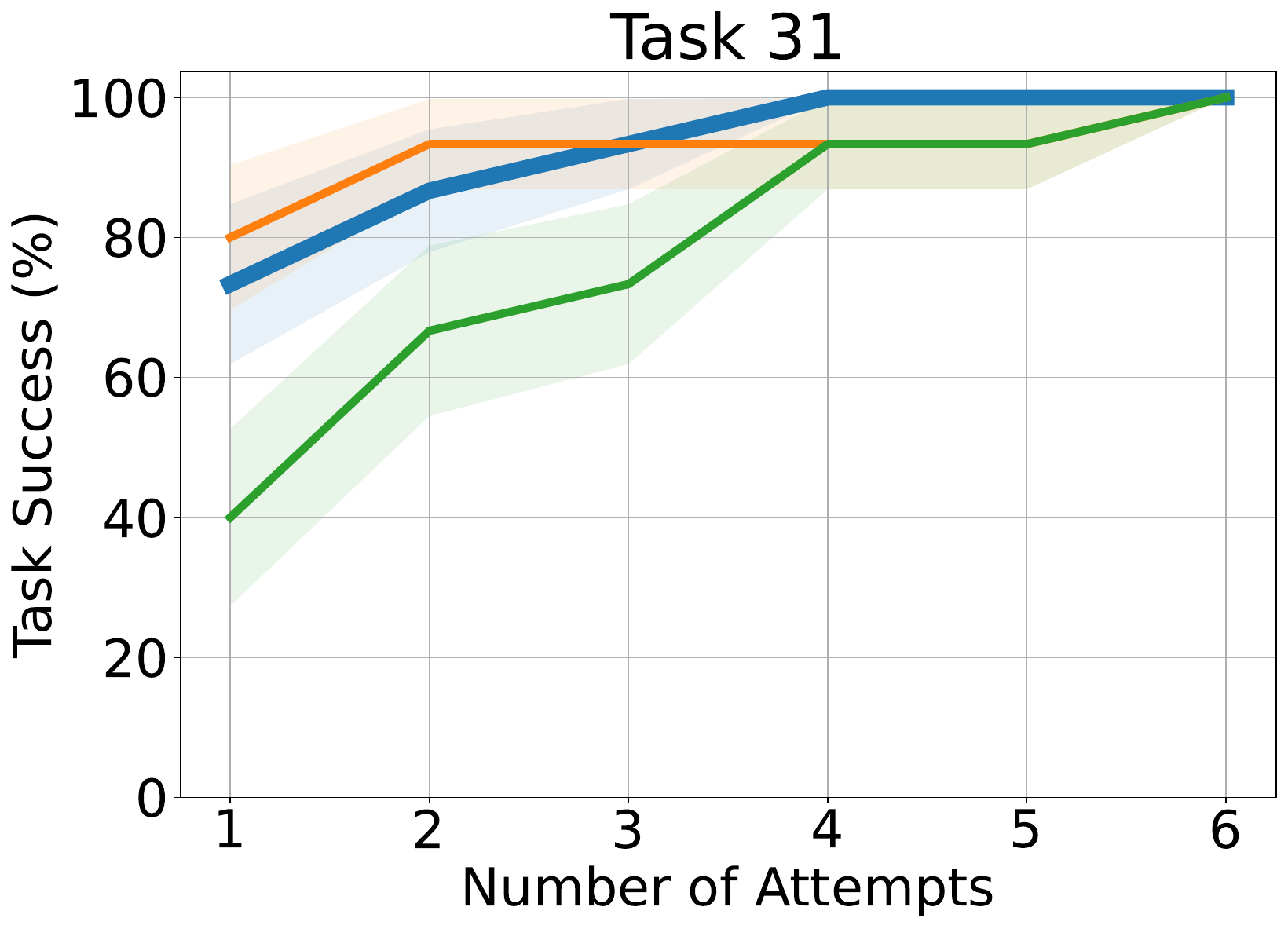}
    \end{subfigure}
    \hfill
    \begin{subfigure}[b]{0.19\textwidth}
        \centering
        \includegraphics[width=\textwidth]{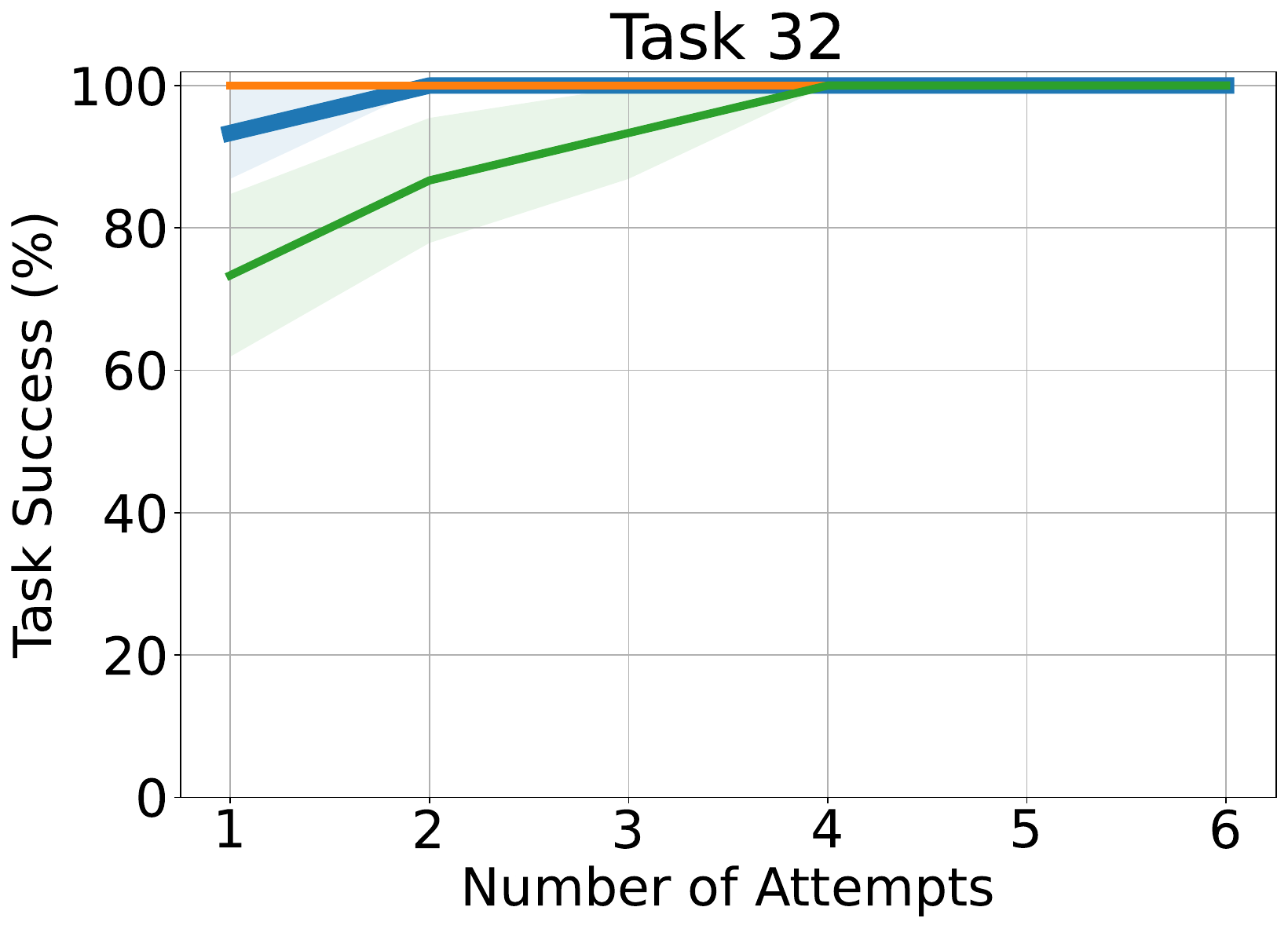}
    \end{subfigure}
    \hfill
    \begin{subfigure}[b]{0.19\textwidth}
        \centering
        \includegraphics[width=\textwidth]{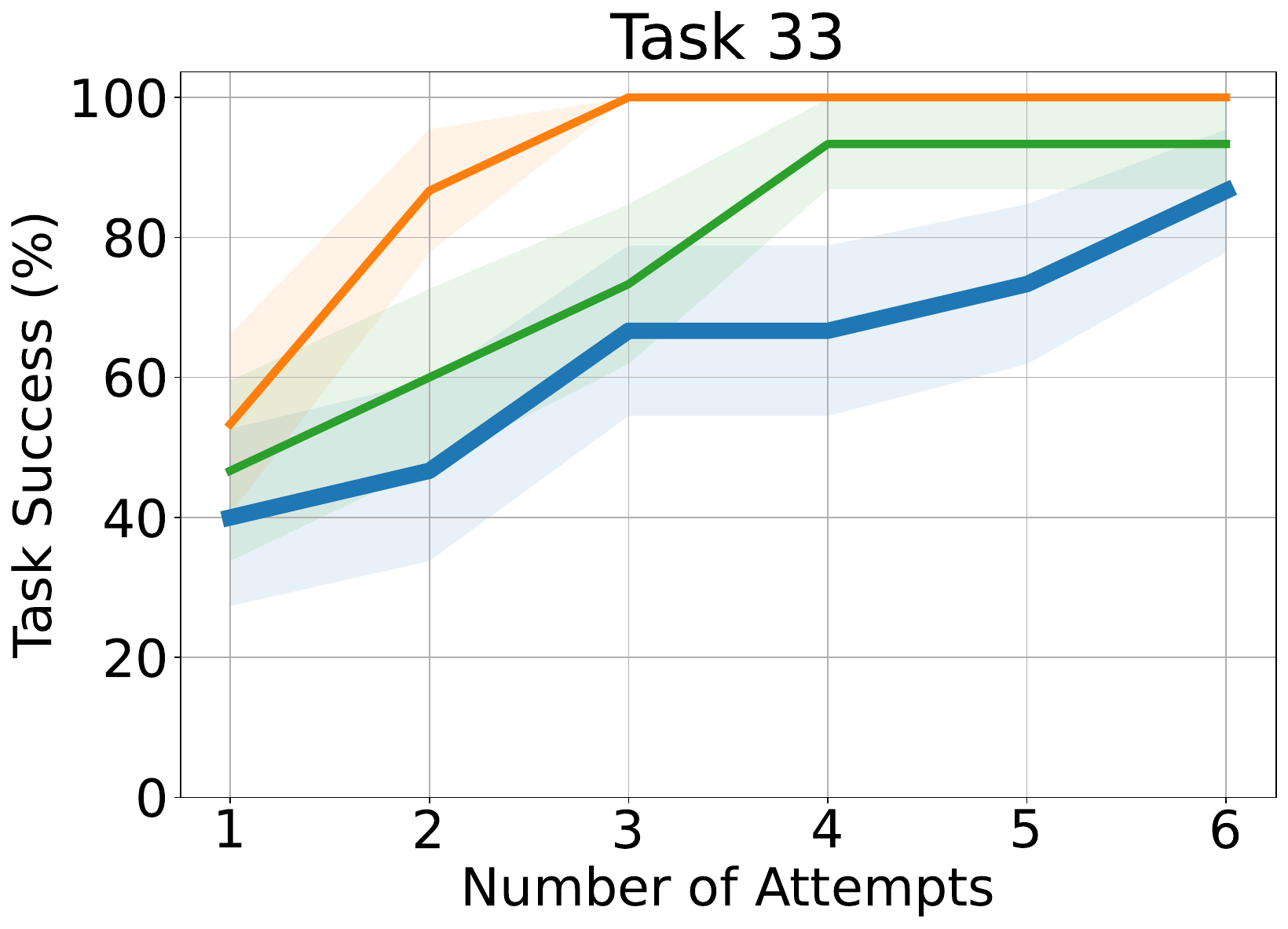}
    \end{subfigure}
    \hfill
    \begin{subfigure}[b]{0.19\textwidth}
        \centering
        \includegraphics[width=\textwidth]{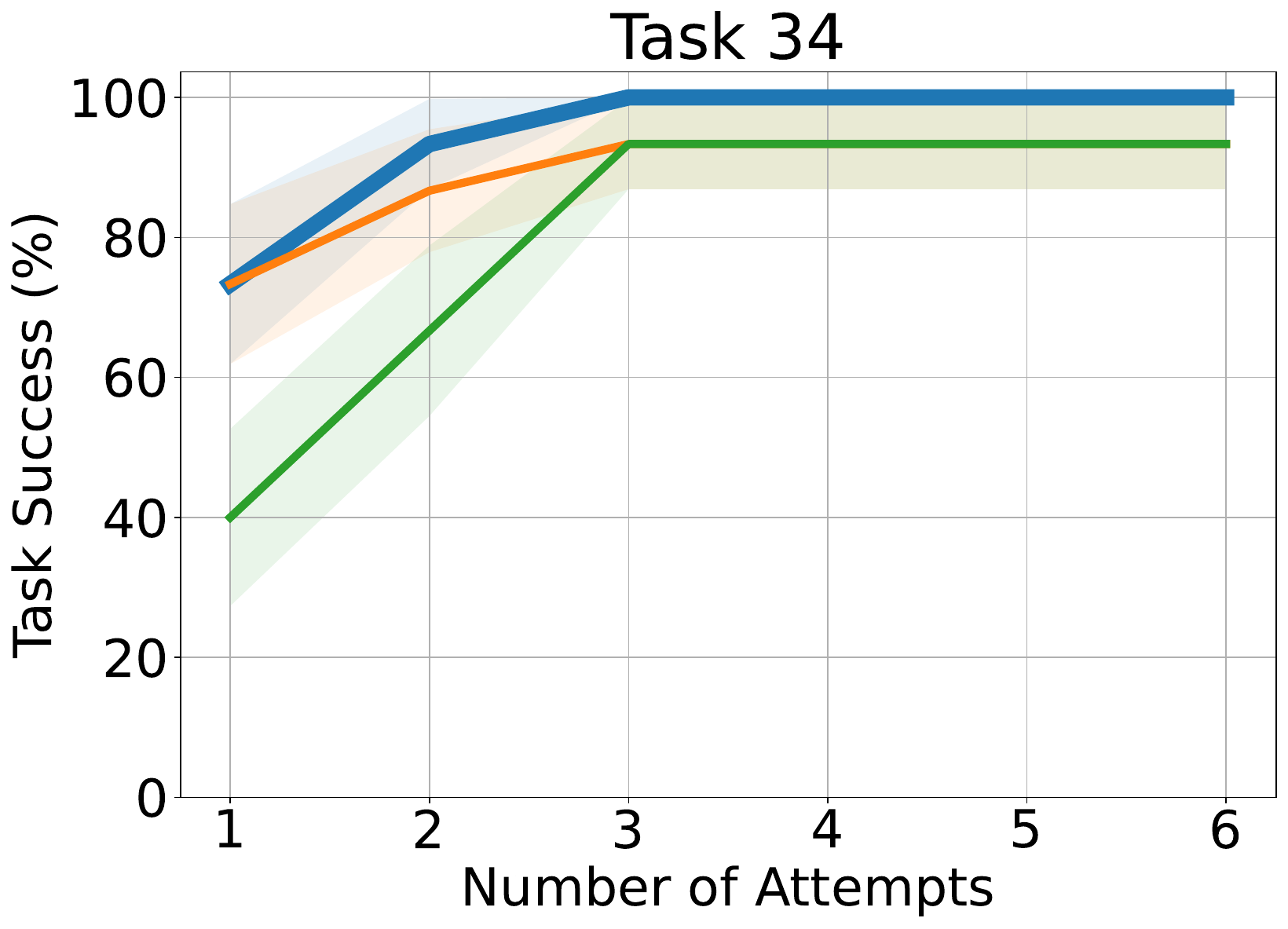}
    \end{subfigure}
    \begin{subfigure}[b]{0.19\textwidth}
        \centering
        \includegraphics[width=\textwidth]{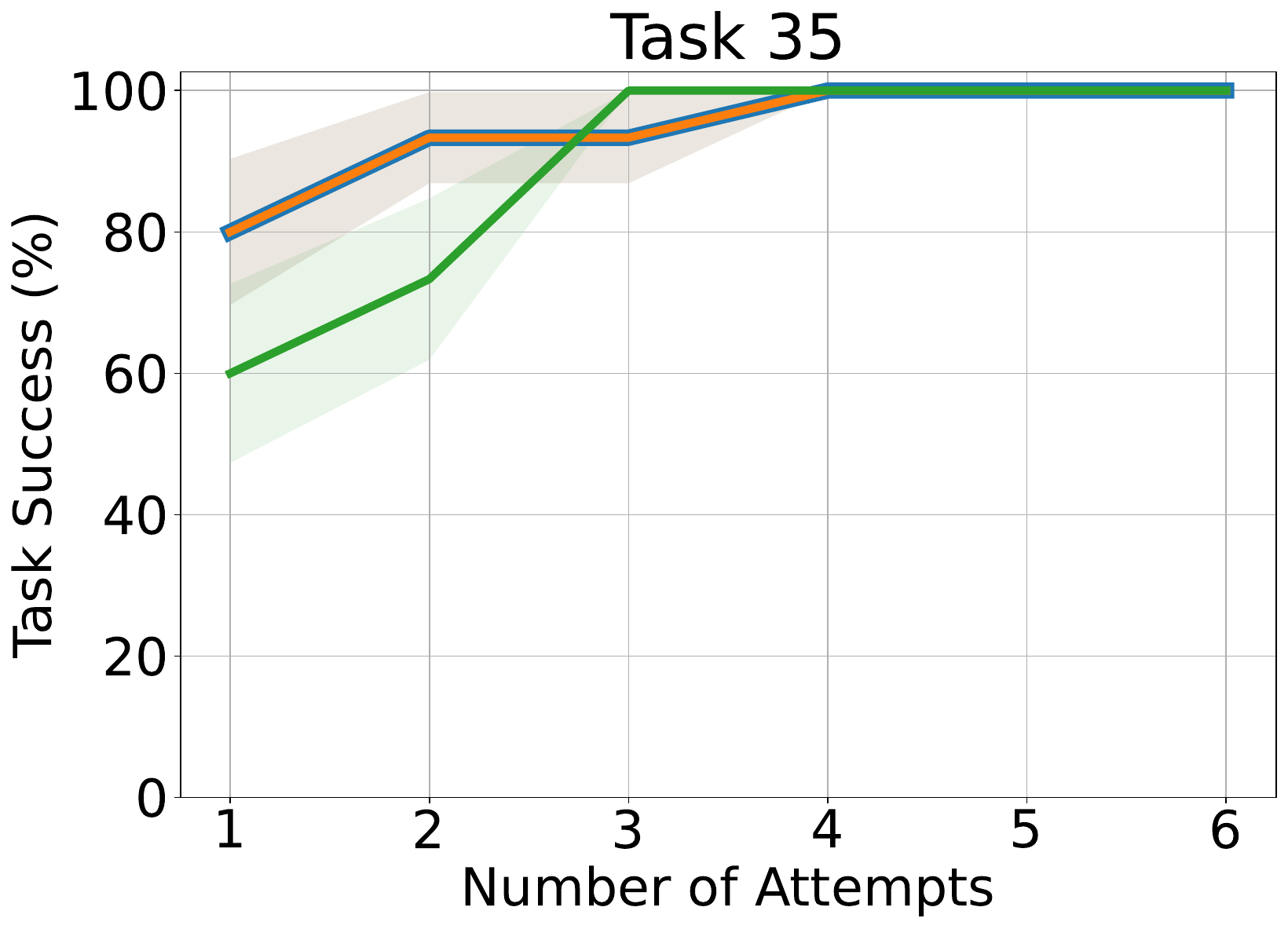}
    \end{subfigure}
    
     \begin{subfigure}[b]{0.19\textwidth}
        \centering
        \includegraphics[width=\textwidth]{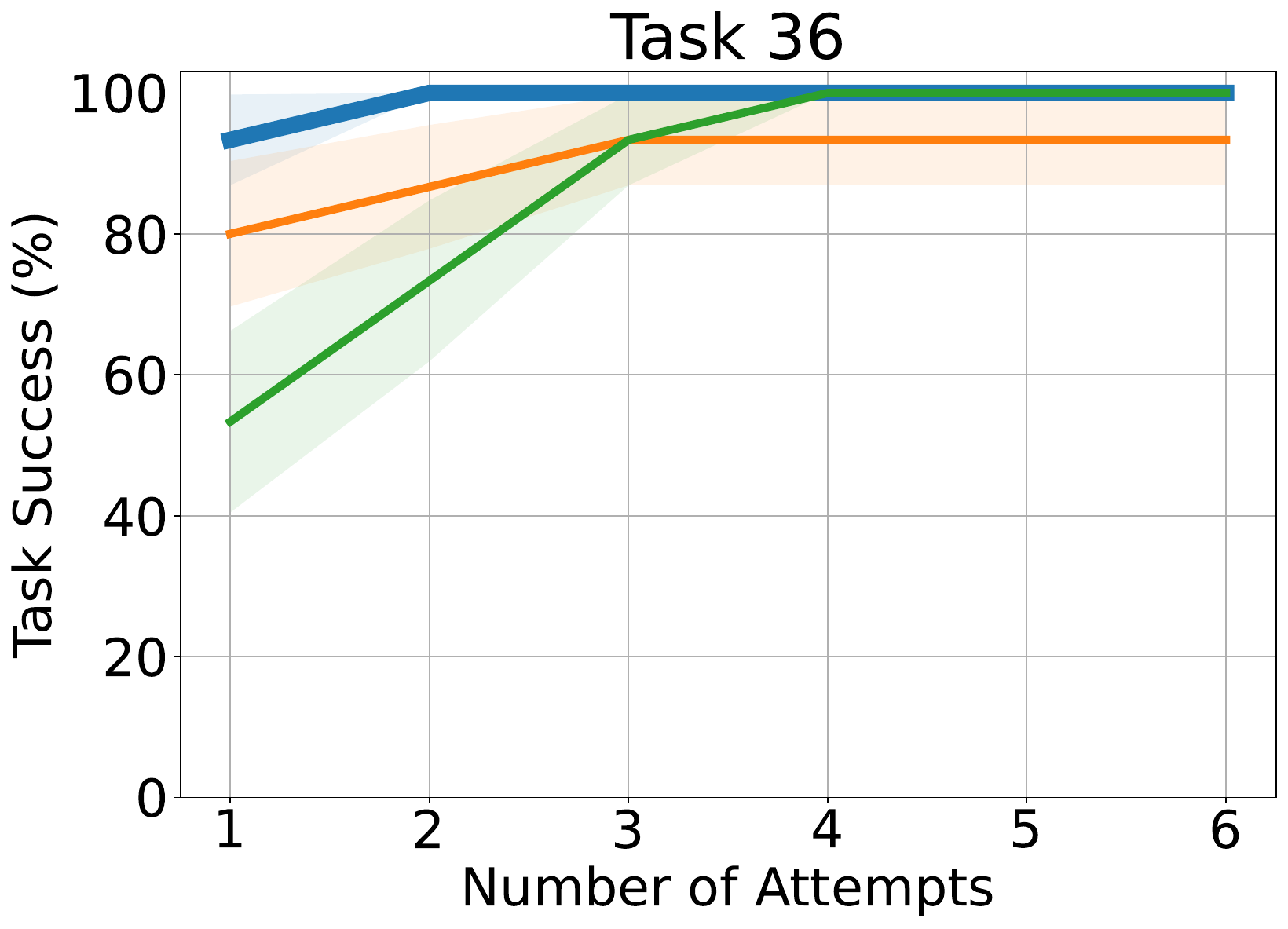}
    \end{subfigure}
    \hfill
    \begin{subfigure}[b]{0.19\textwidth}
        \centering
        \includegraphics[width=\textwidth]{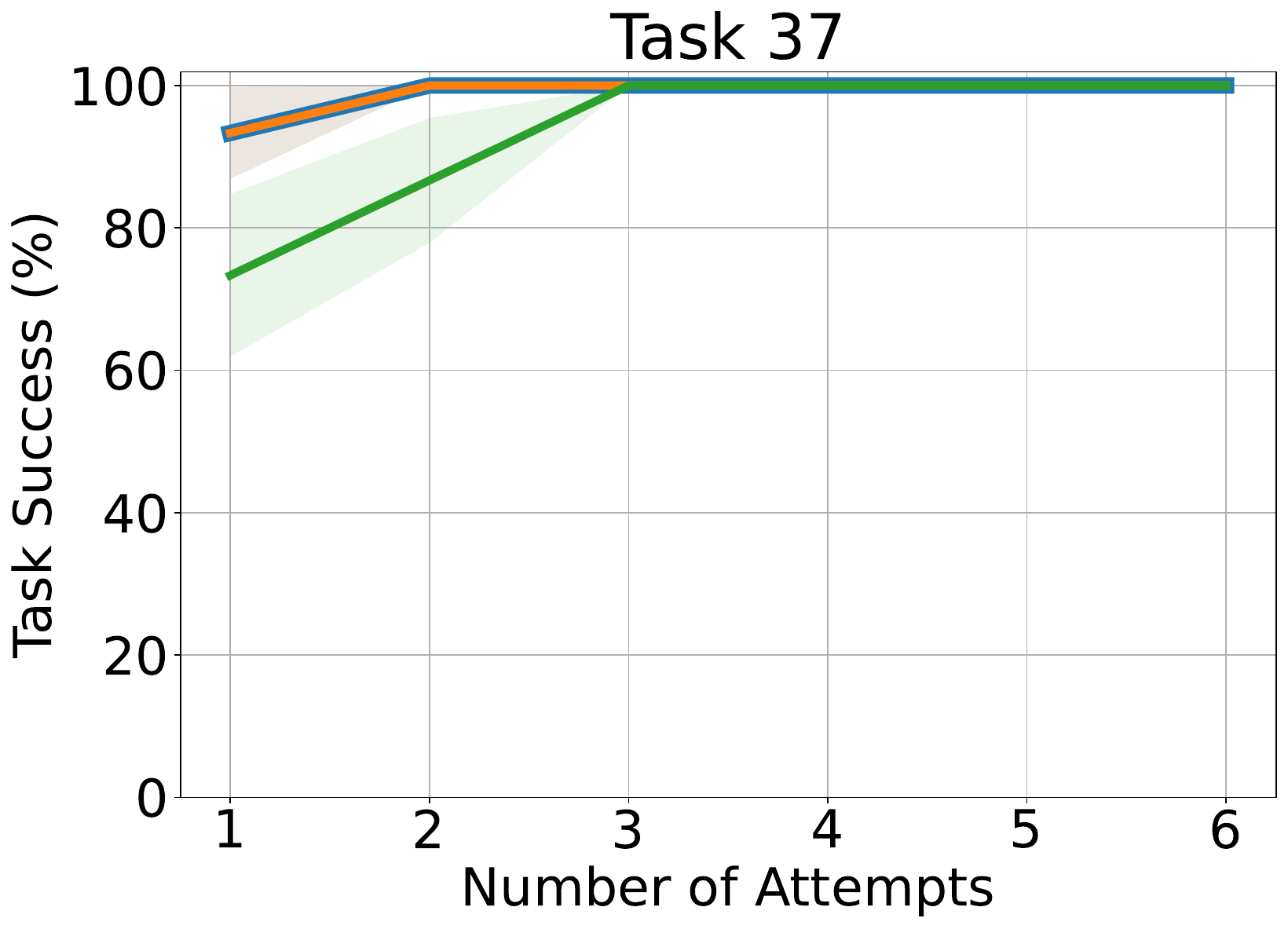}
    \end{subfigure}
    \hfill
    \begin{subfigure}[b]{0.19\textwidth}
        \centering
        \includegraphics[width=\textwidth]{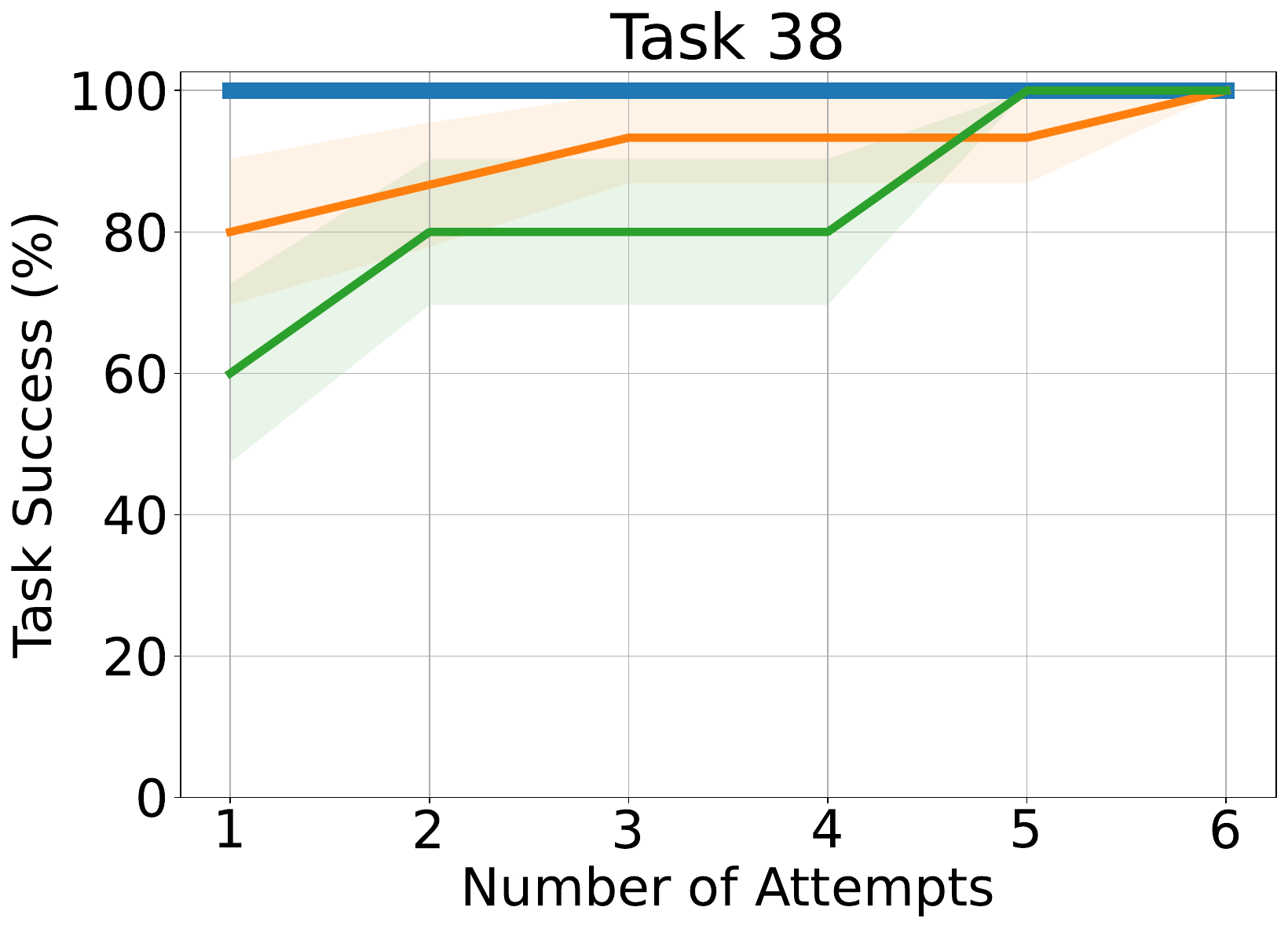}
    \end{subfigure}
    \hfill
    \begin{subfigure}[b]{0.19\textwidth}
        \centering
        \includegraphics[width=\textwidth]{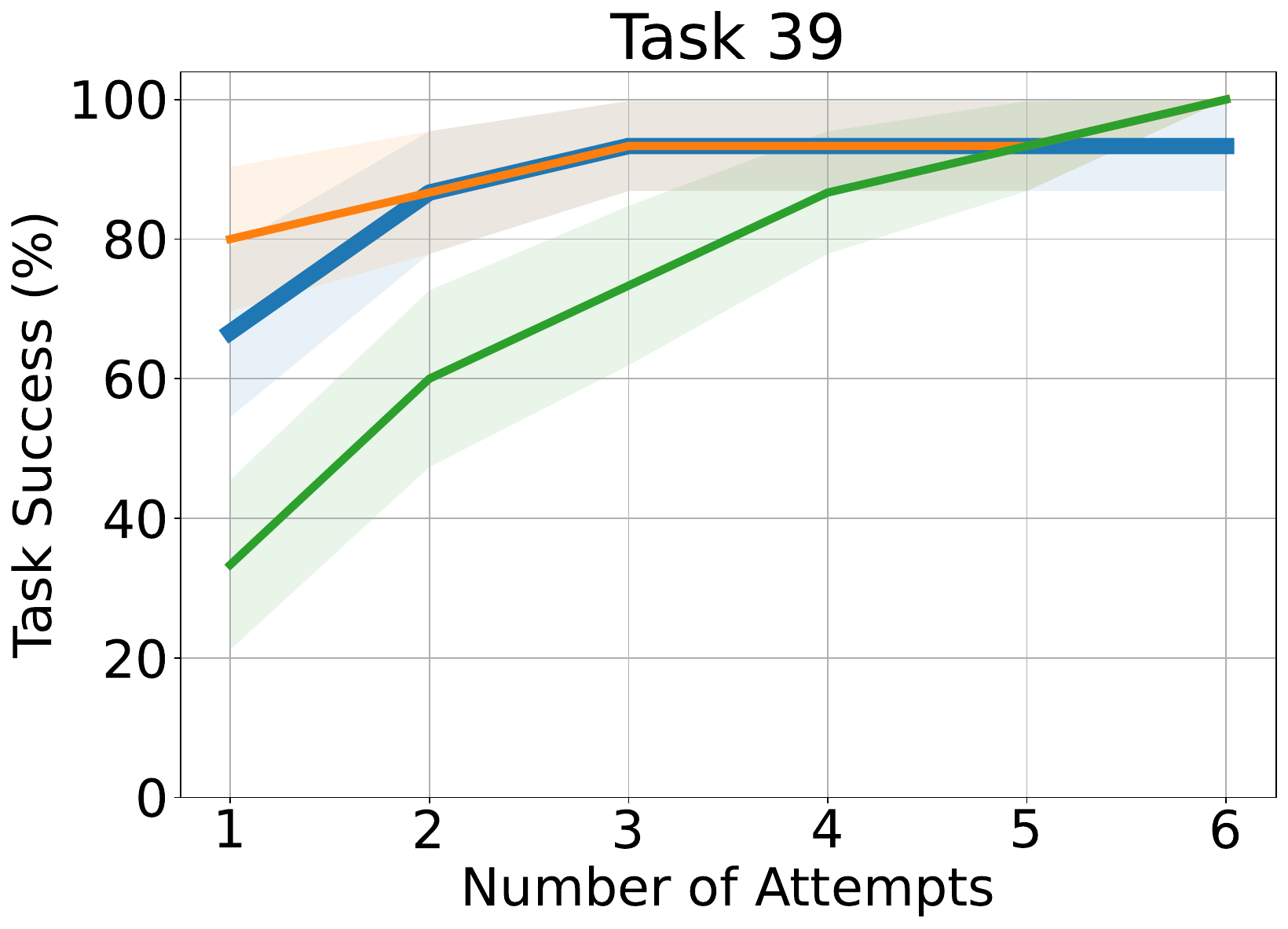}
    \end{subfigure}
    \begin{subfigure}[b]{0.19\textwidth}
        \centering
        \includegraphics[width=\textwidth]{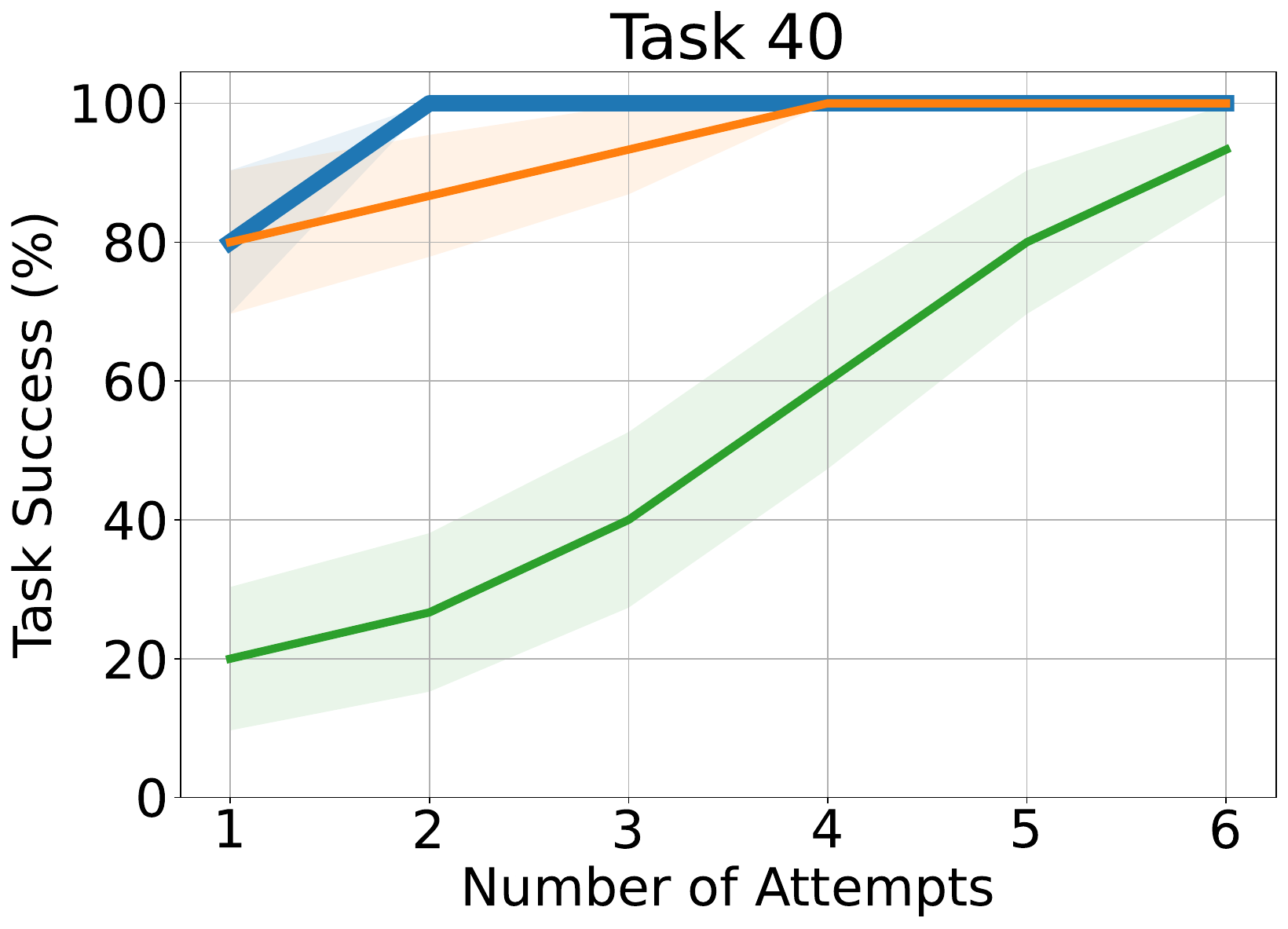}
    \end{subfigure}
    
     \begin{subfigure}[b]{0.19\textwidth}
        \centering
        \includegraphics[width=\textwidth]{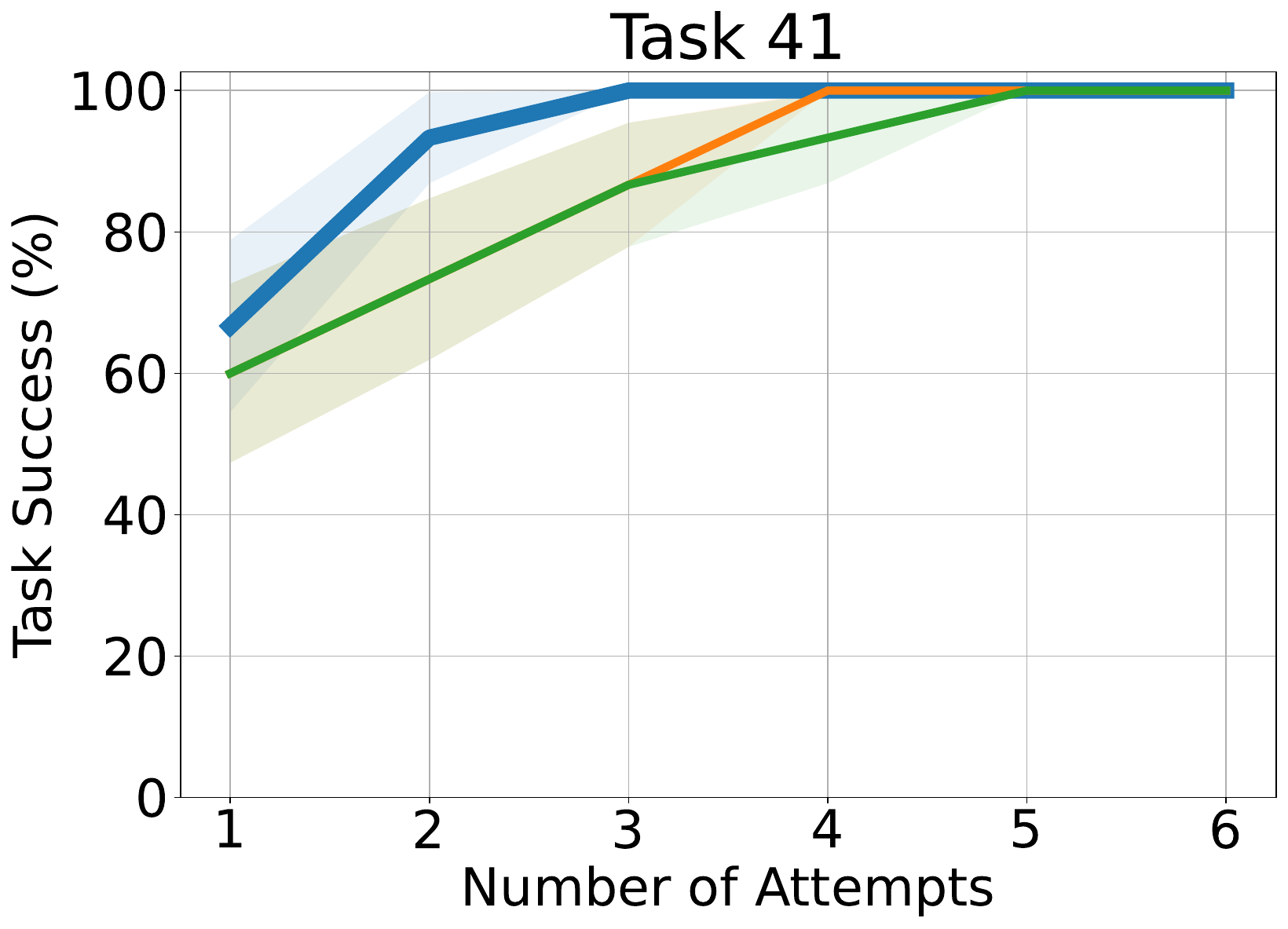}
    \end{subfigure}
    \hfill
    \begin{subfigure}[b]{0.19\textwidth}
        \centering
        \includegraphics[width=\textwidth]{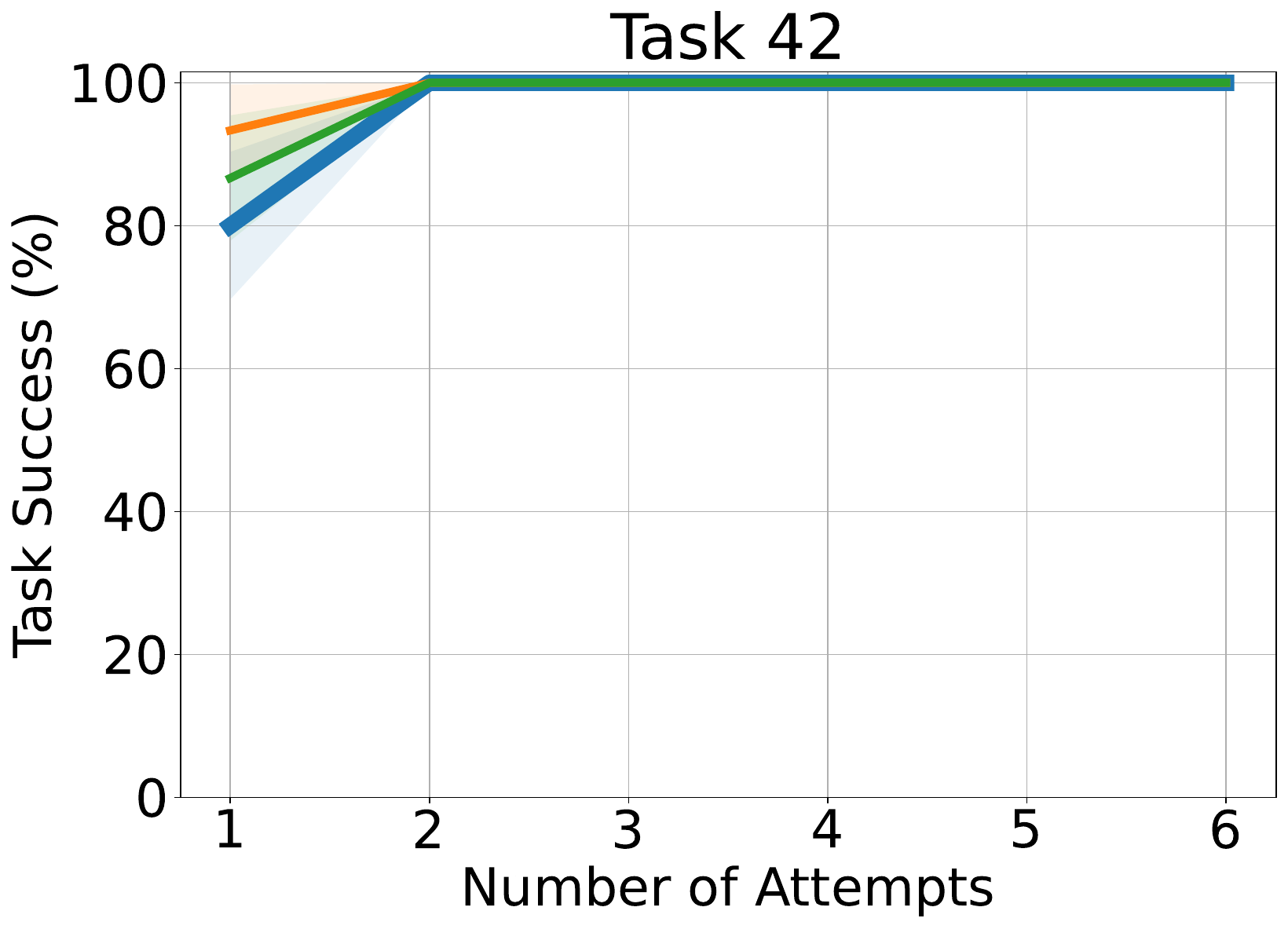}
    \end{subfigure}
    \hfill
    \begin{subfigure}[b]{0.19\textwidth}
        \centering
        \includegraphics[width=\textwidth]{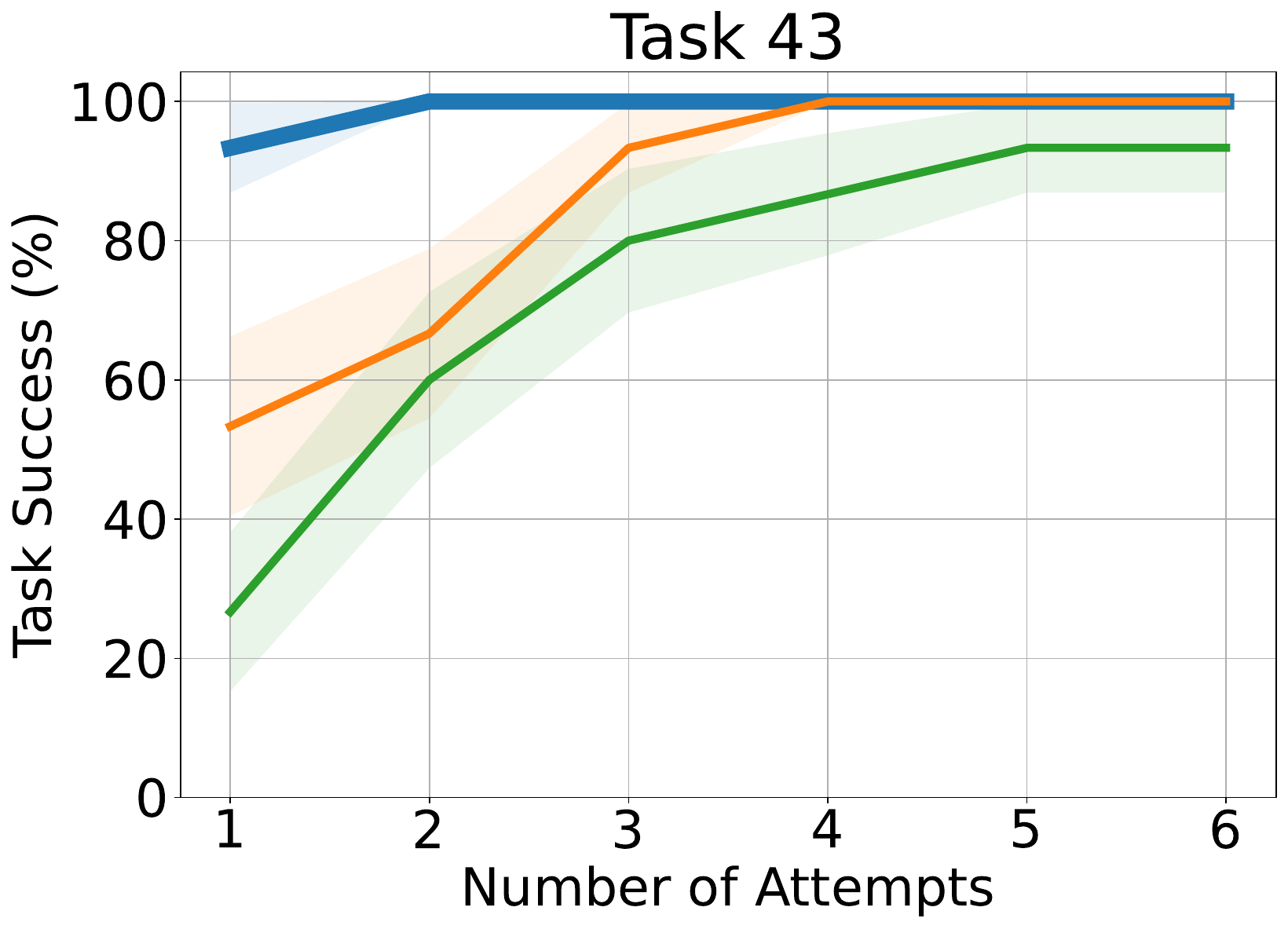}
    \end{subfigure}
    \hfill
    \begin{subfigure}[b]{0.19\textwidth}
        \centering
        \includegraphics[width=\textwidth]{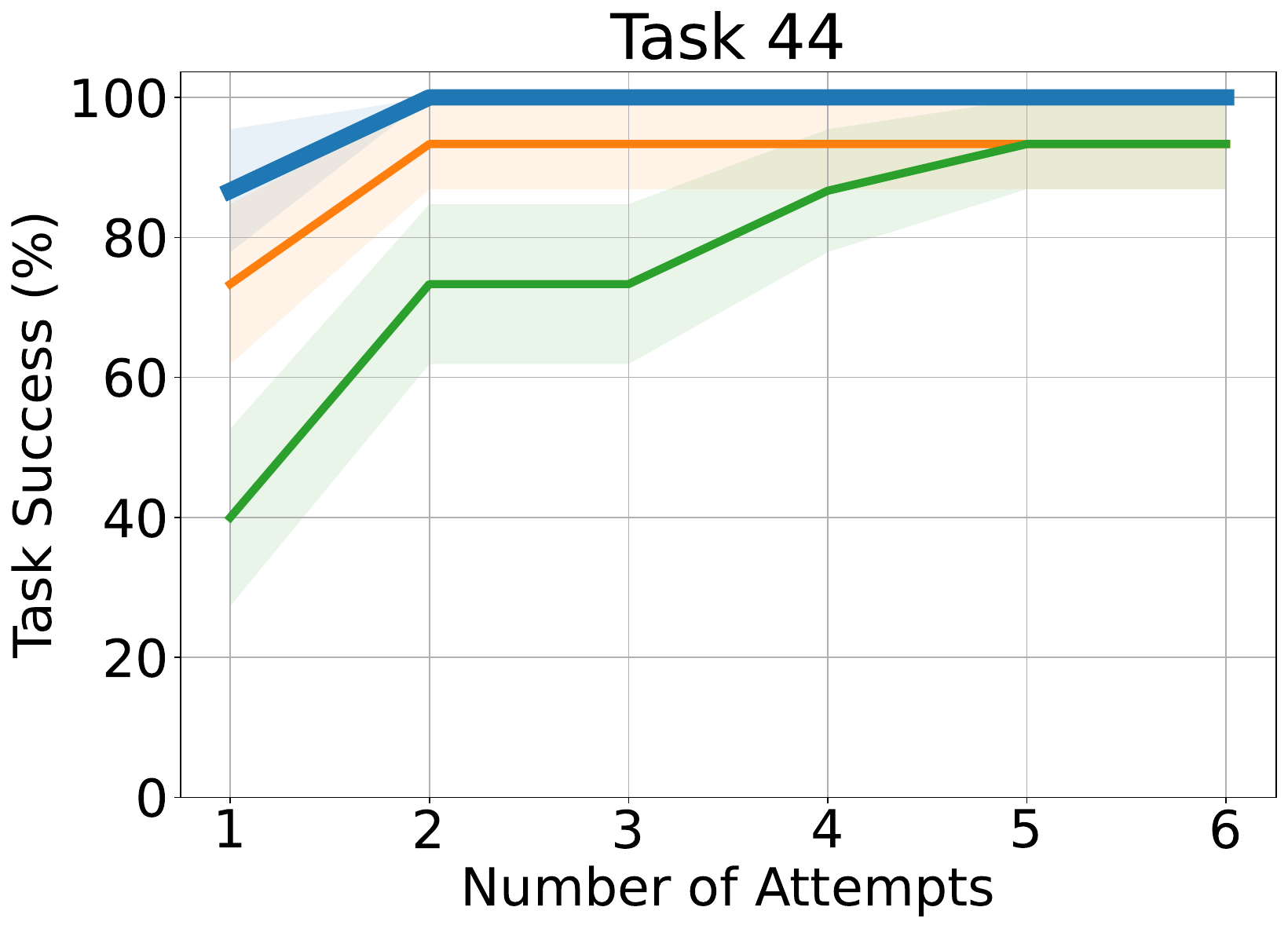}
    \end{subfigure}
    \begin{subfigure}[b]{0.19\textwidth}
        \centering
        \includegraphics[width=\textwidth]{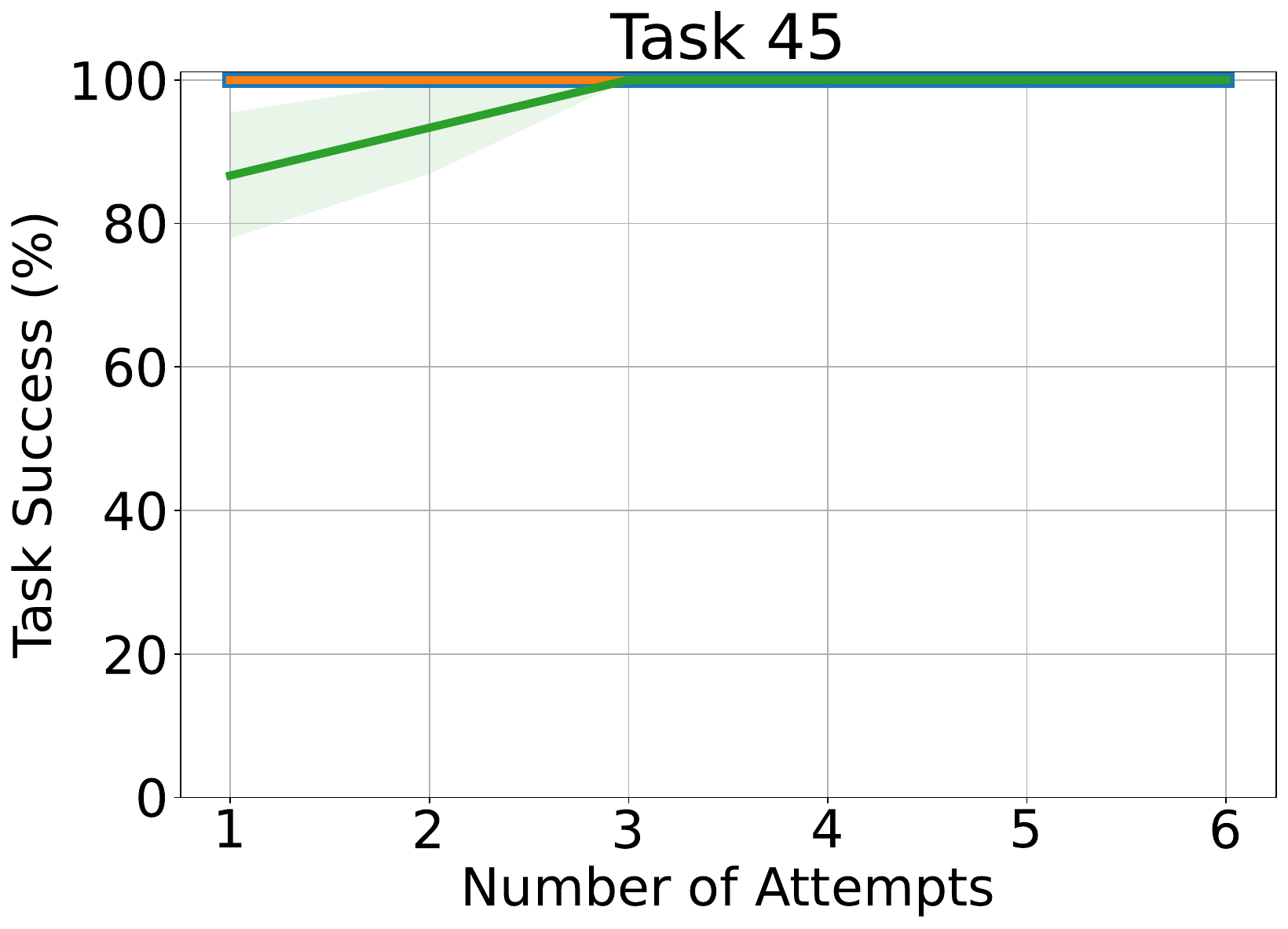}
    \end{subfigure}
    \caption{Performance on individual Libero tasks (1-45) in evaluation with task provided.}
\end{figure}

\begin{figure}[H]
    \centering
    \begin{subfigure}[b]{0.19\textwidth}
        \centering
        \includegraphics[width=\textwidth]{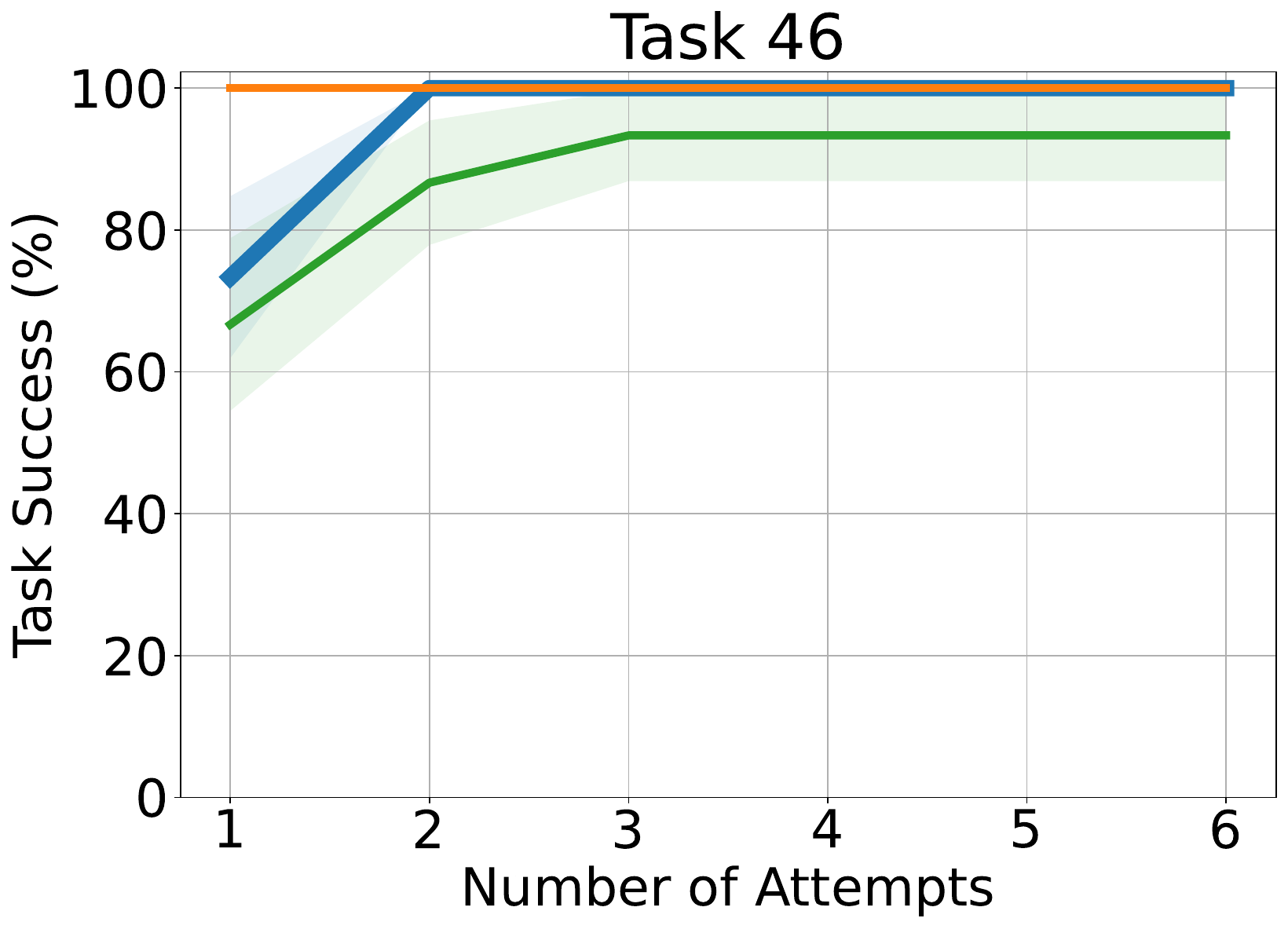}
    \end{subfigure}
    \hfill
    \begin{subfigure}[b]{0.19\textwidth}
        \centering
        \includegraphics[width=\textwidth]{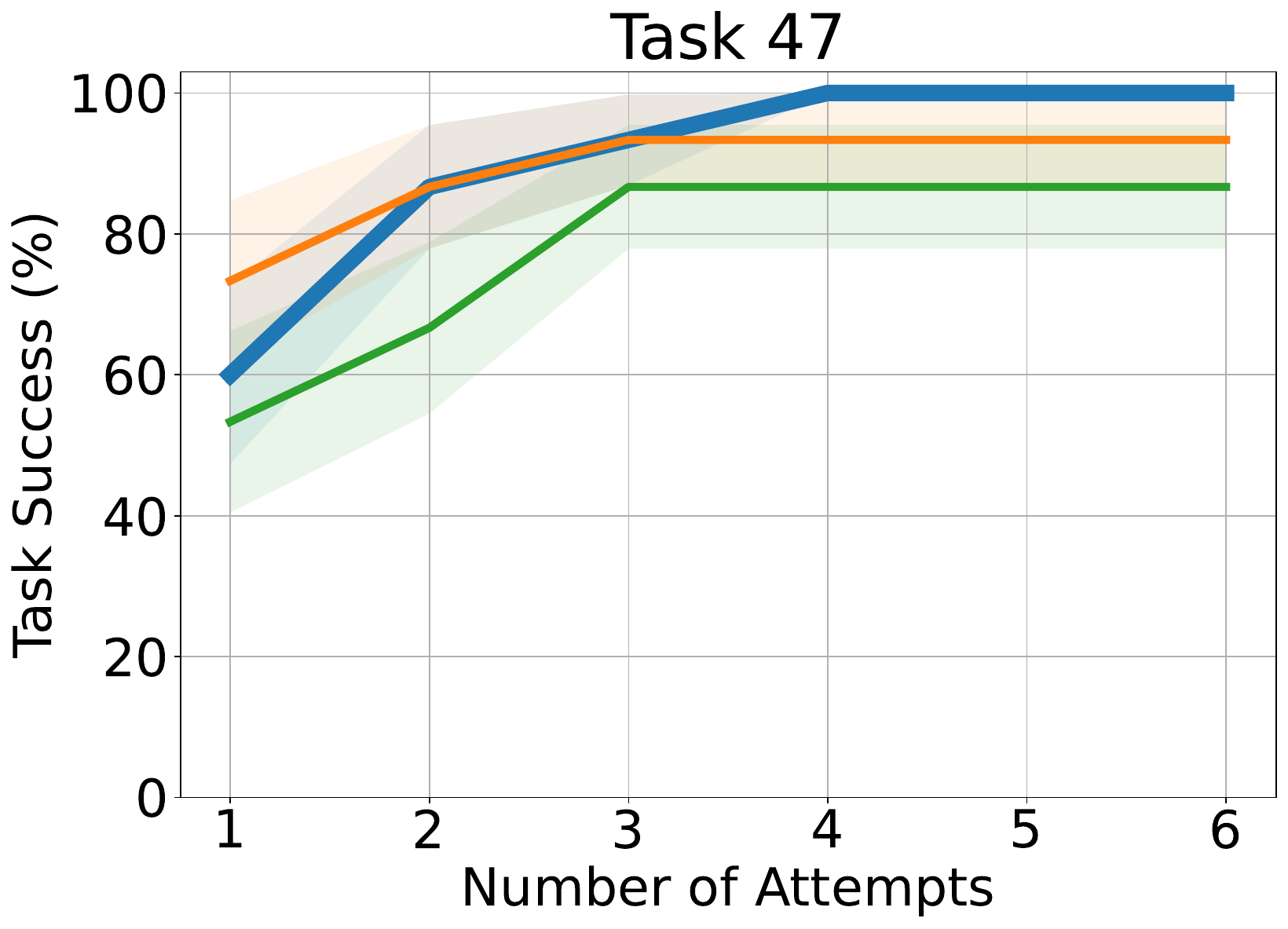}
    \end{subfigure}
    \hfill
    \begin{subfigure}[b]{0.19\textwidth}
        \centering
        \includegraphics[width=\textwidth]{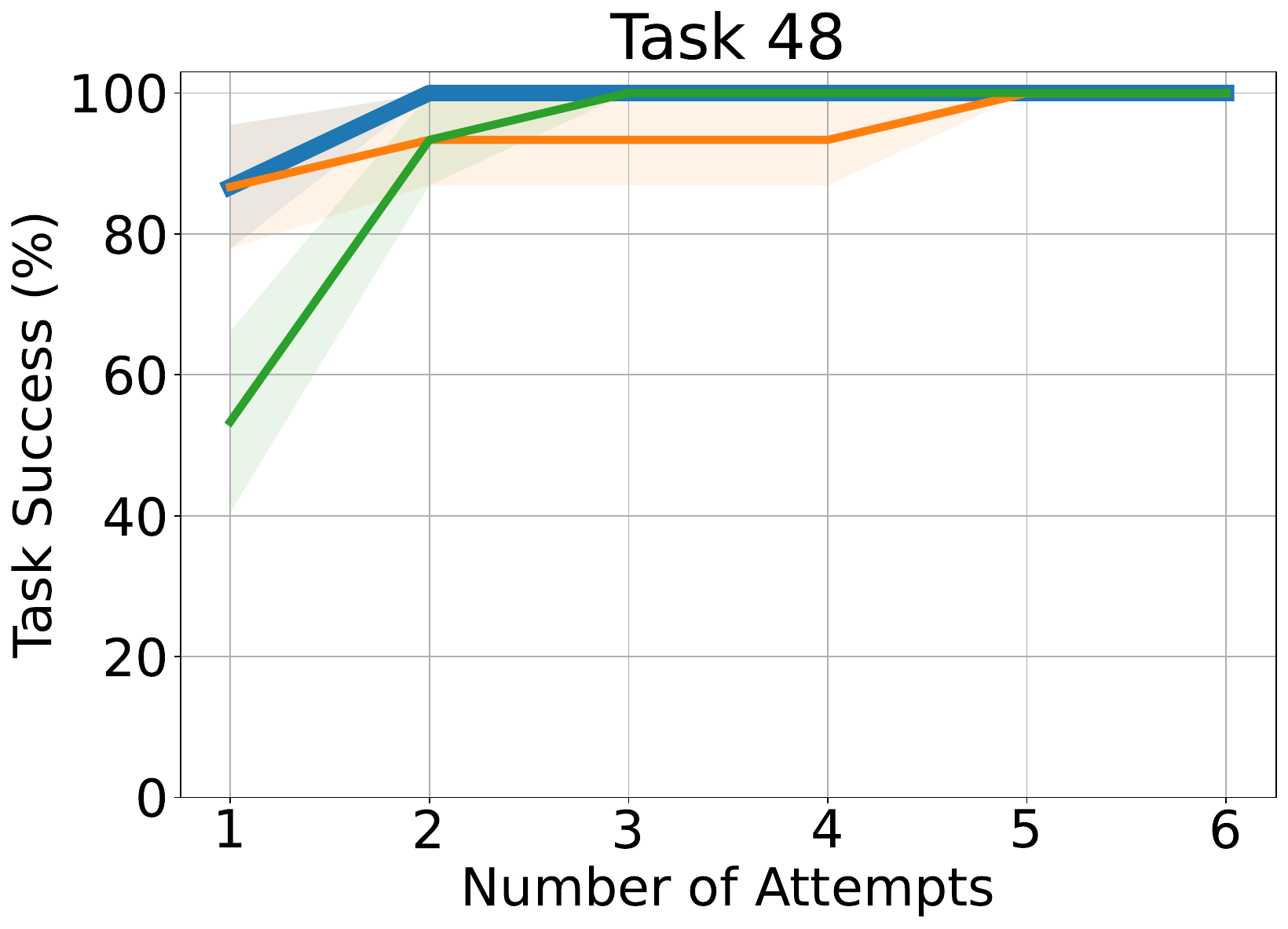}
    \end{subfigure}
    \hfill
    \begin{subfigure}[b]{0.19\textwidth}
        \centering
        \includegraphics[width=\textwidth]{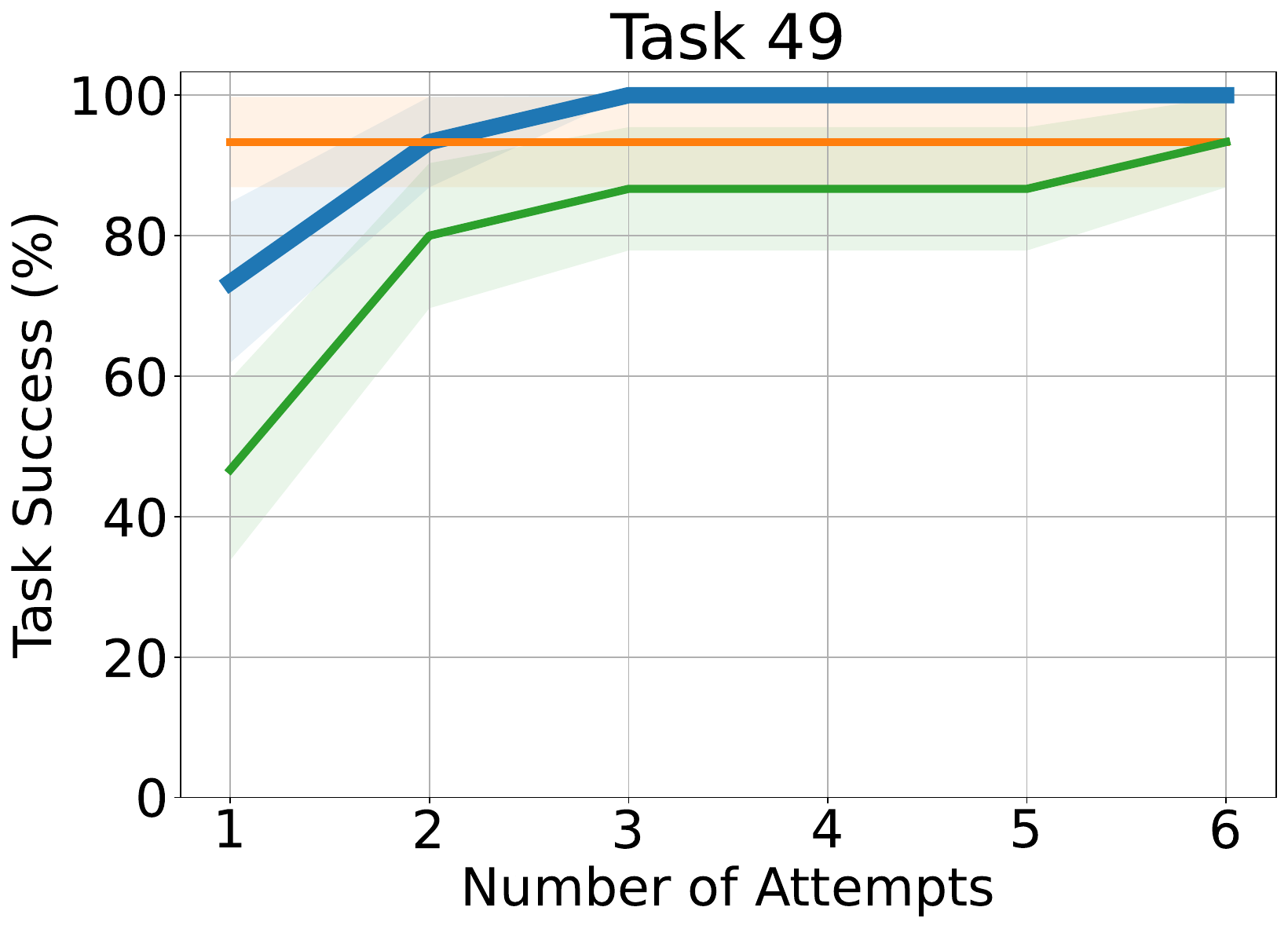}
    \end{subfigure}
    \begin{subfigure}[b]{0.19\textwidth}
        \centering
        \includegraphics[width=\textwidth]{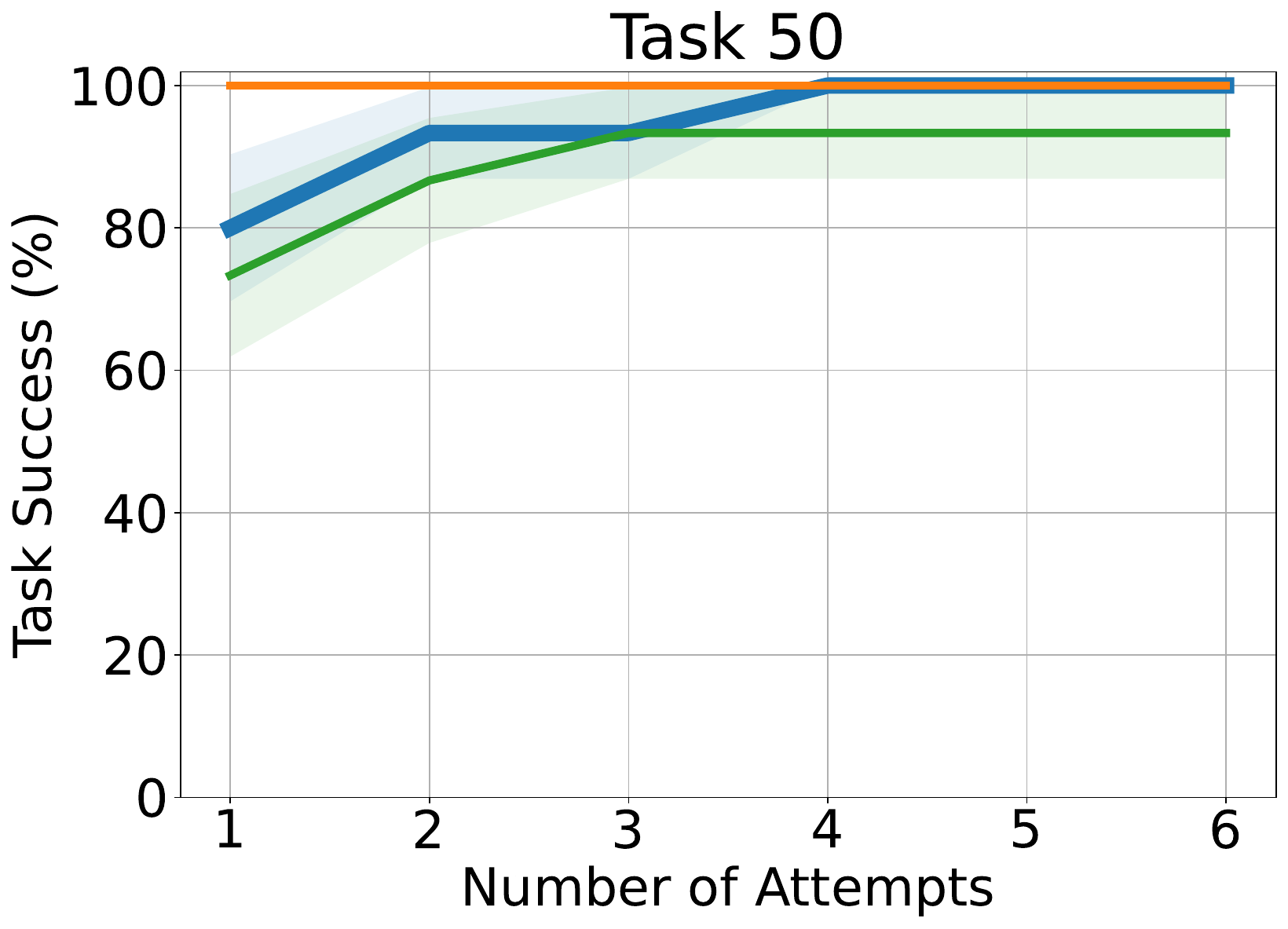}
    \end{subfigure}
    
     \begin{subfigure}[b]{0.19\textwidth}
        \centering
        \includegraphics[width=\textwidth]{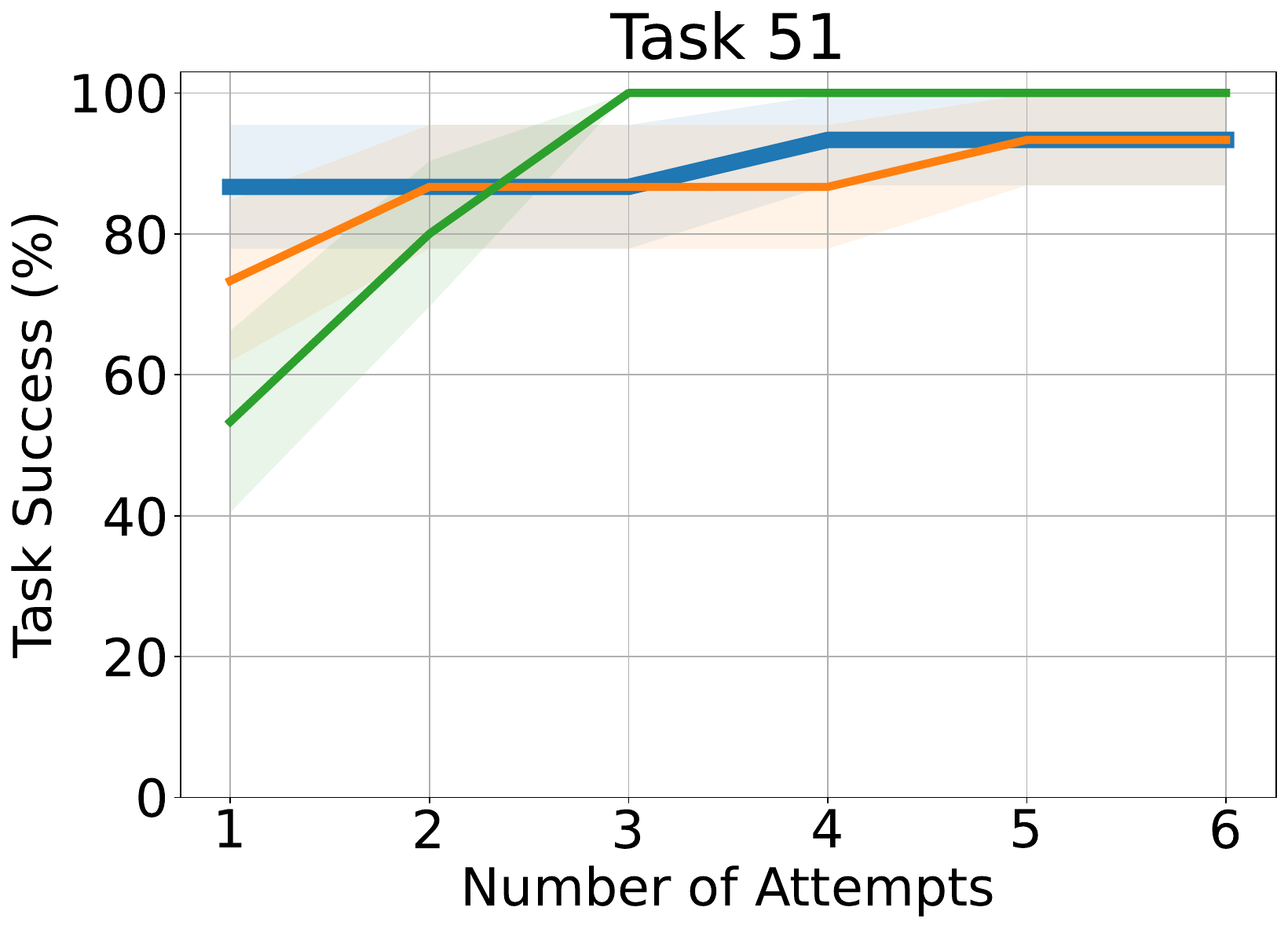}
    \end{subfigure}
    \hfill
    \begin{subfigure}[b]{0.19\textwidth}
        \centering
        \includegraphics[width=\textwidth]{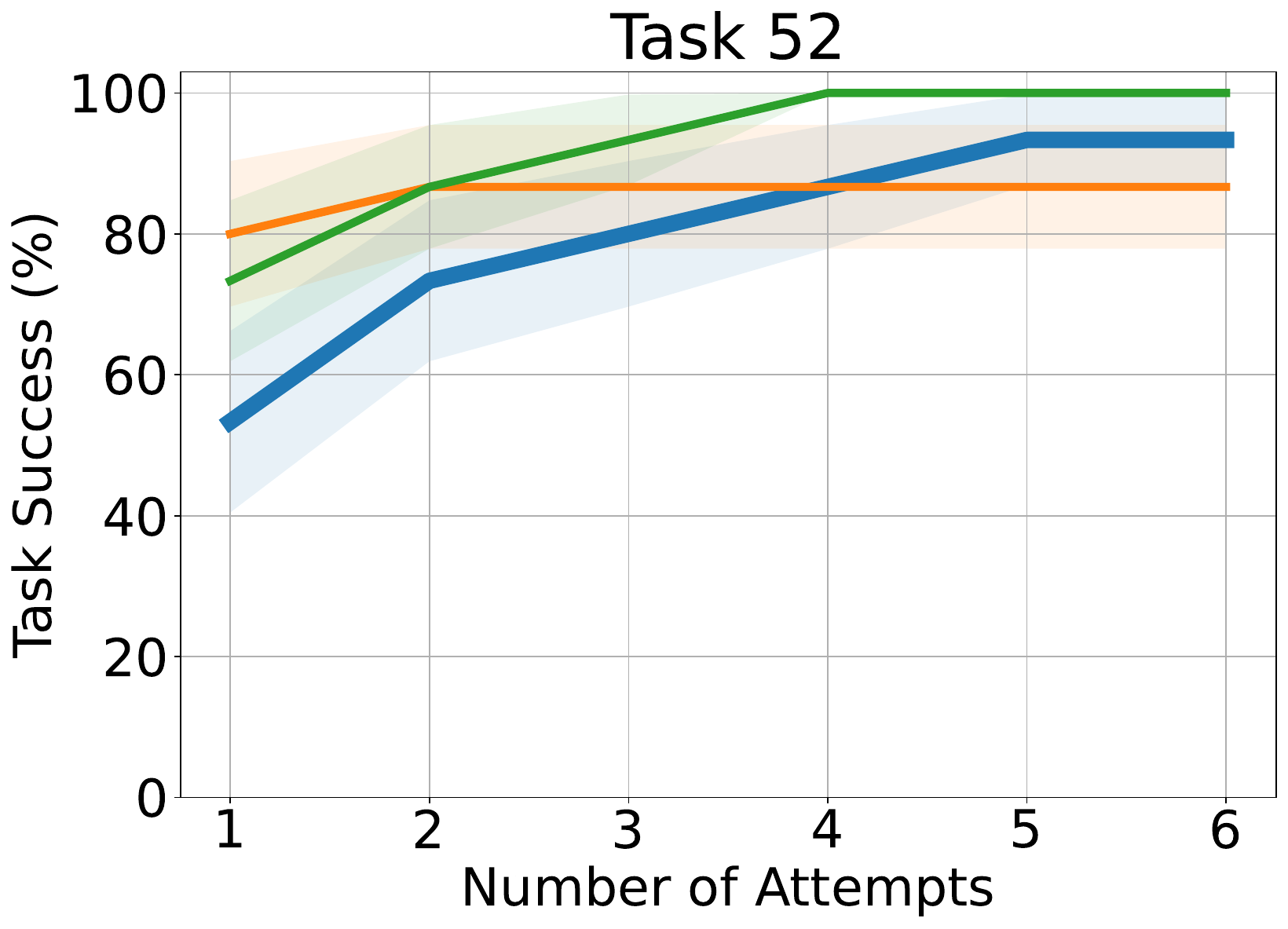}
    \end{subfigure}
    \hfill
    \begin{subfigure}[b]{0.19\textwidth}
        \centering
        \includegraphics[width=\textwidth]{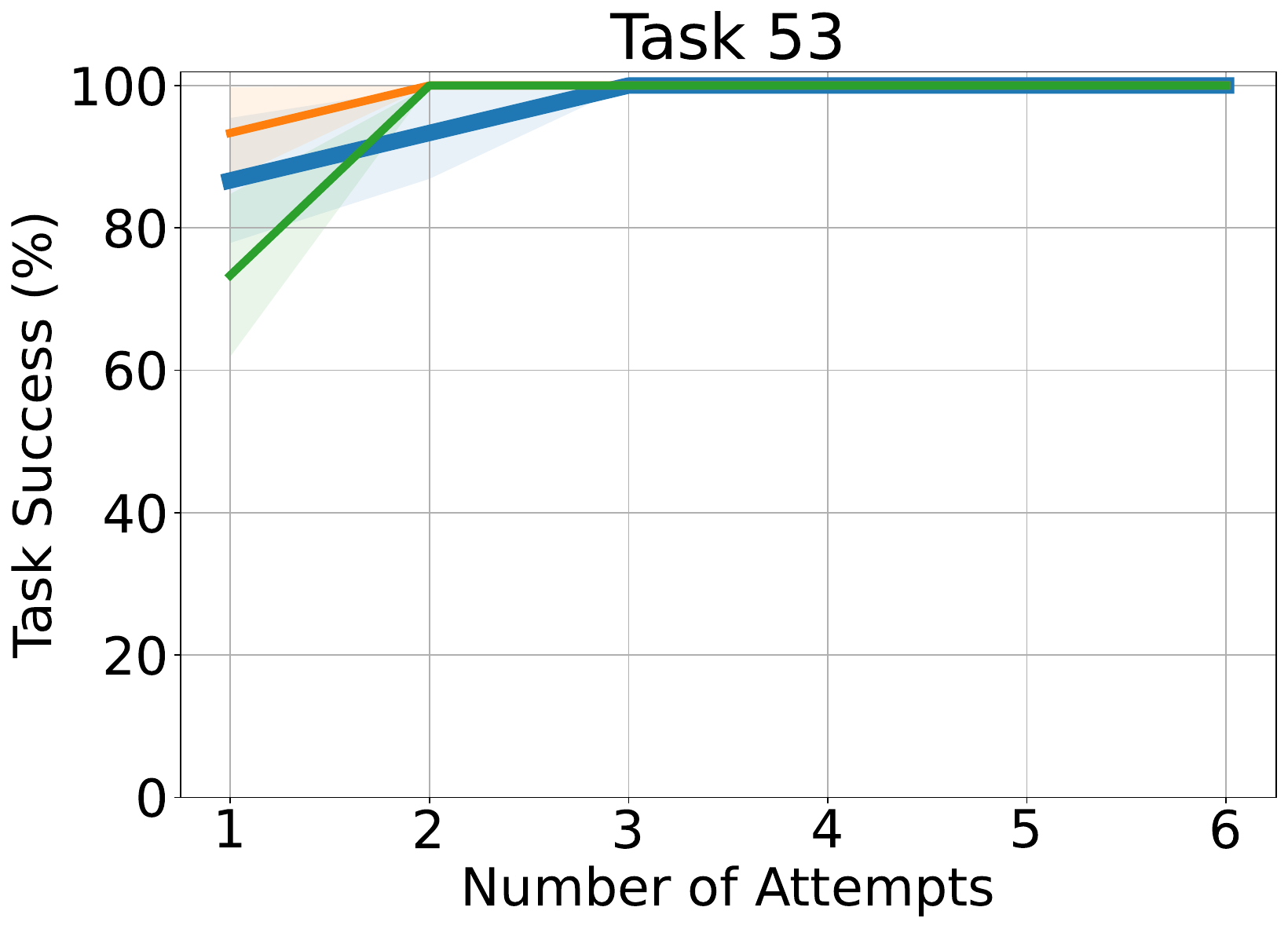}
    \end{subfigure}
    \hfill
    \begin{subfigure}[b]{0.19\textwidth}
        \centering
        \includegraphics[width=\textwidth]{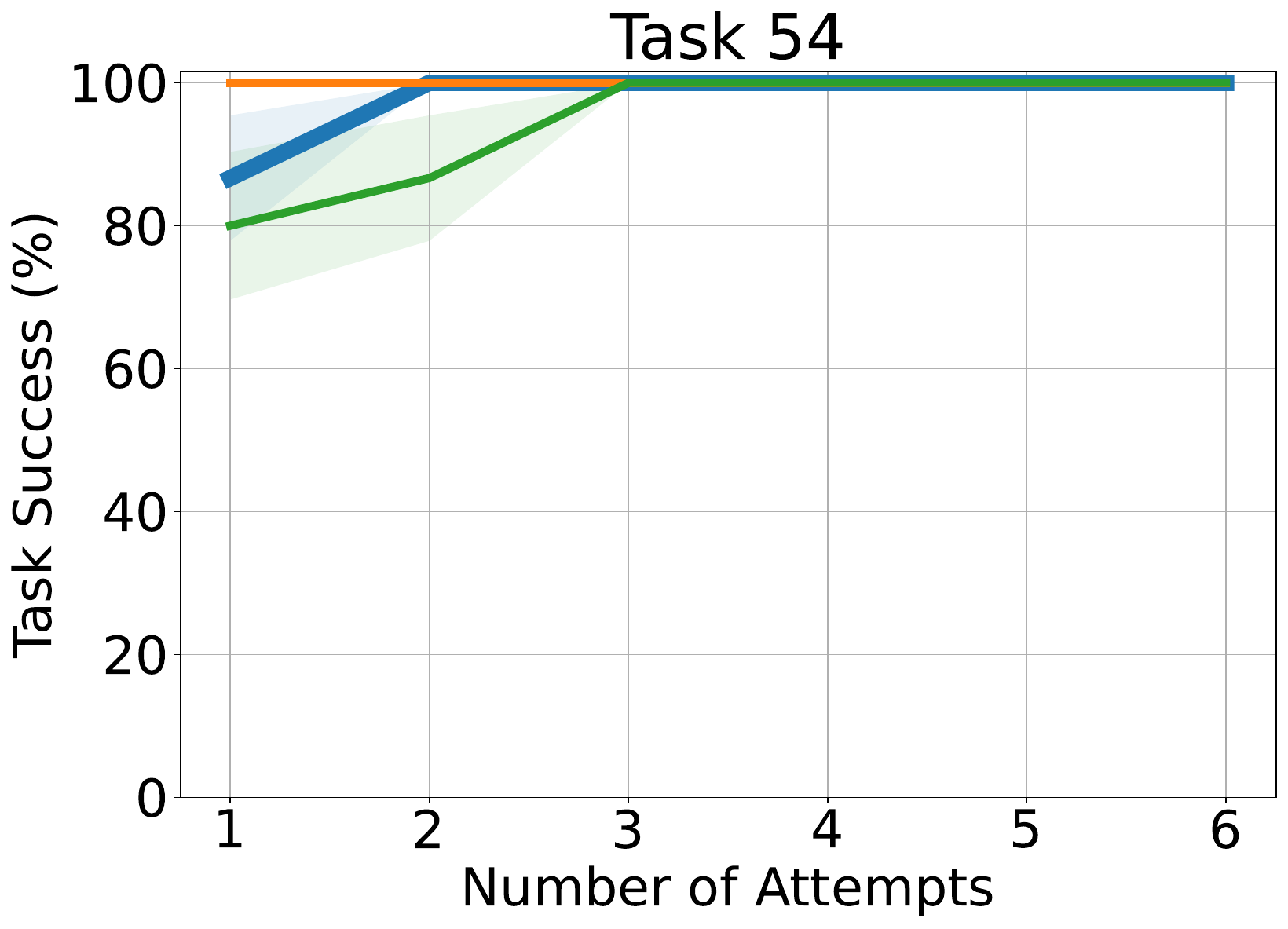}
    \end{subfigure}
    \begin{subfigure}[b]{0.19\textwidth}
        \centering
        \includegraphics[width=\textwidth]{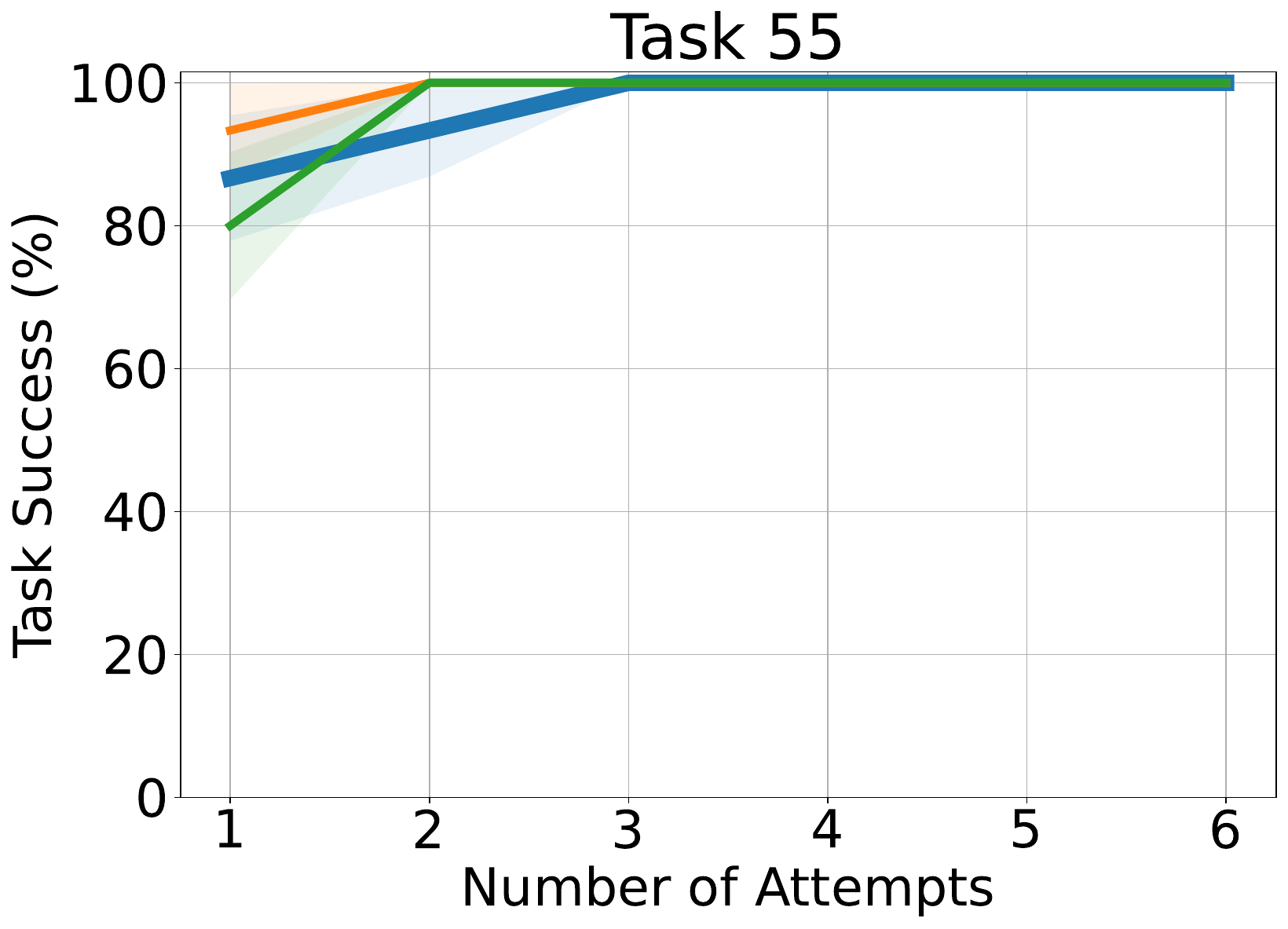}
    \end{subfigure}
    
     \begin{subfigure}[b]{0.19\textwidth}
        \centering
        \includegraphics[width=\textwidth]{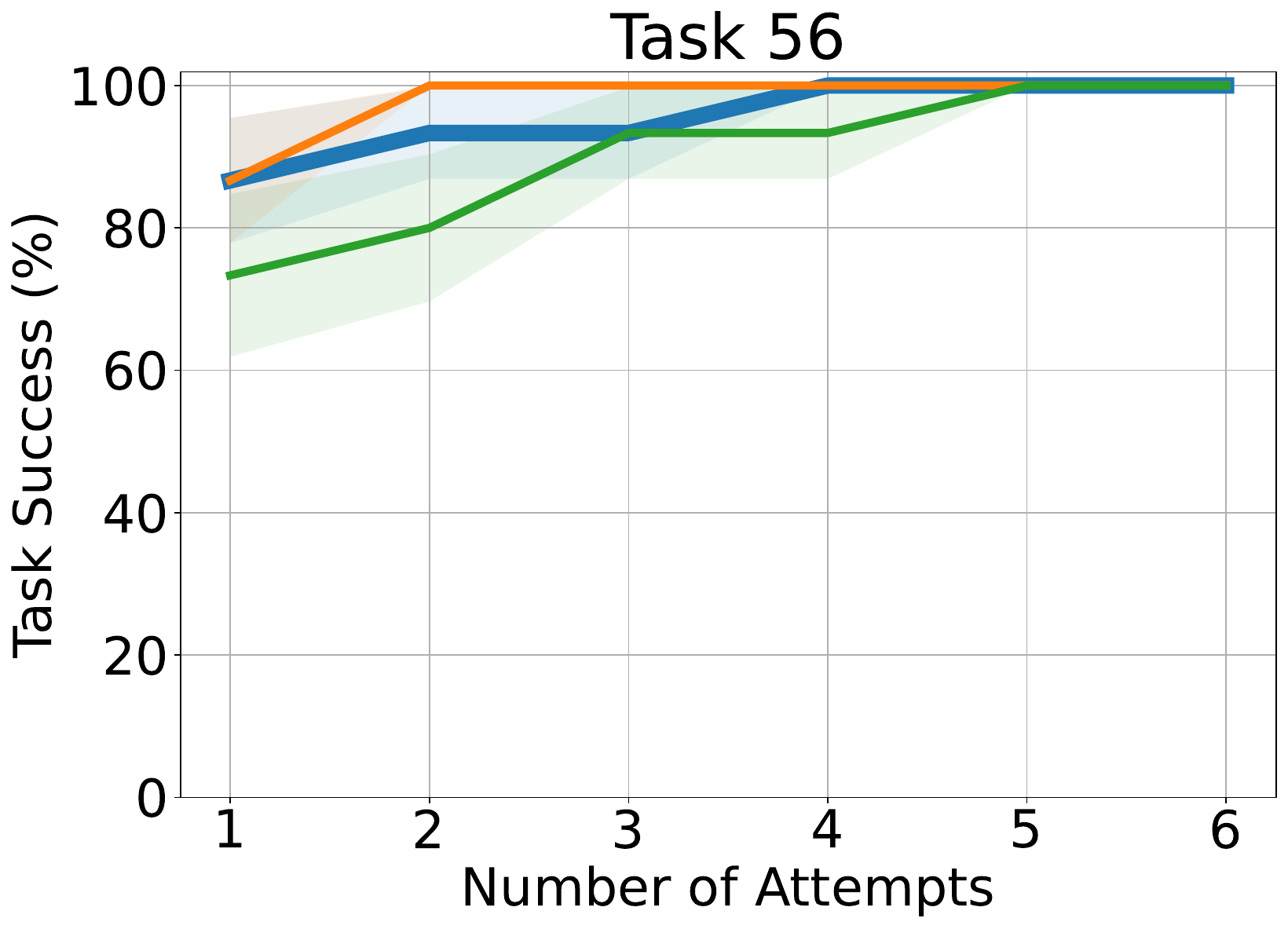}
    \end{subfigure}
    \hfill
    \begin{subfigure}[b]{0.19\textwidth}
        \centering
        \includegraphics[width=\textwidth]{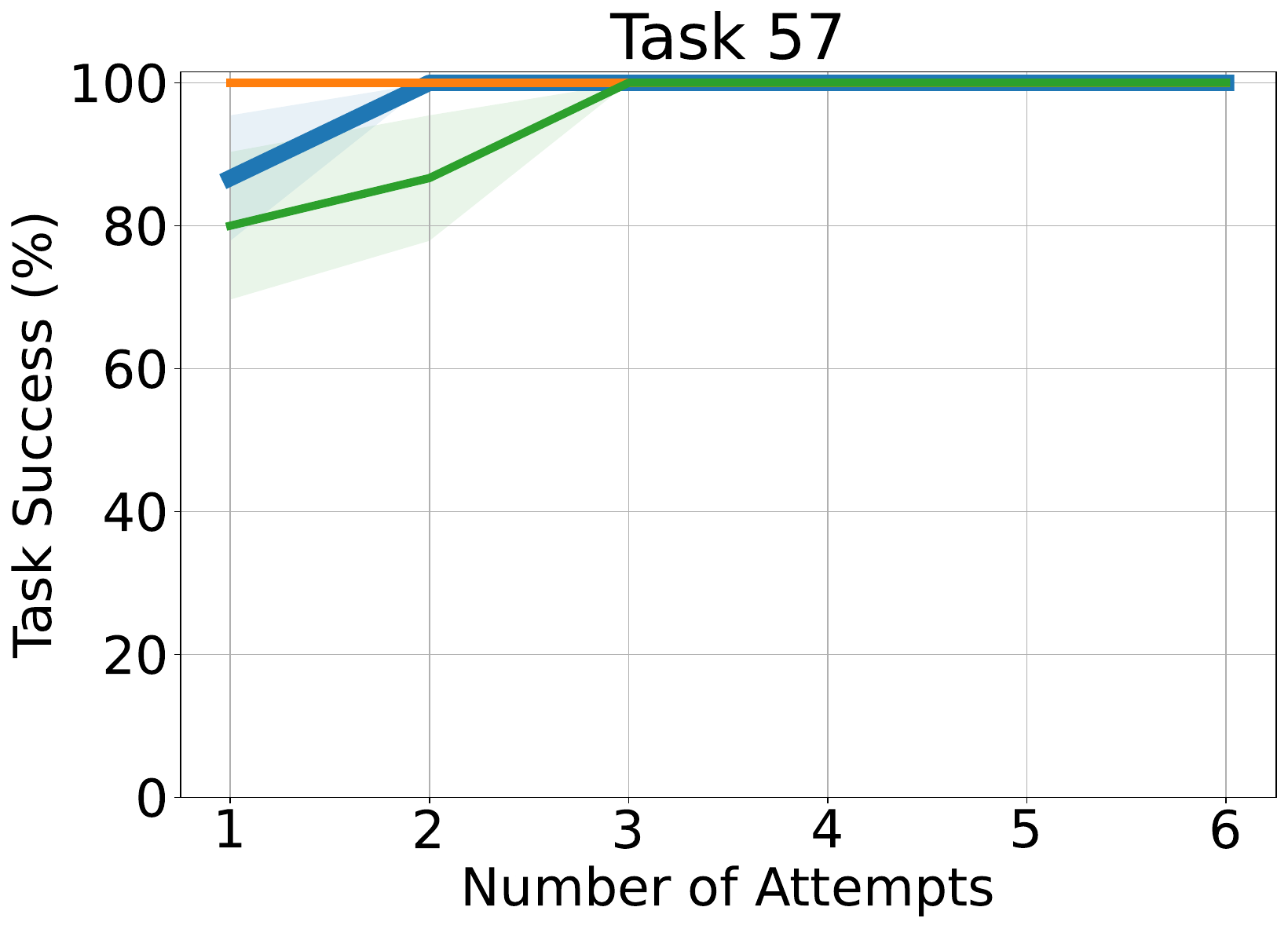}
    \end{subfigure}
    \hfill
    \begin{subfigure}[b]{0.19\textwidth}
        \centering
        \includegraphics[width=\textwidth]{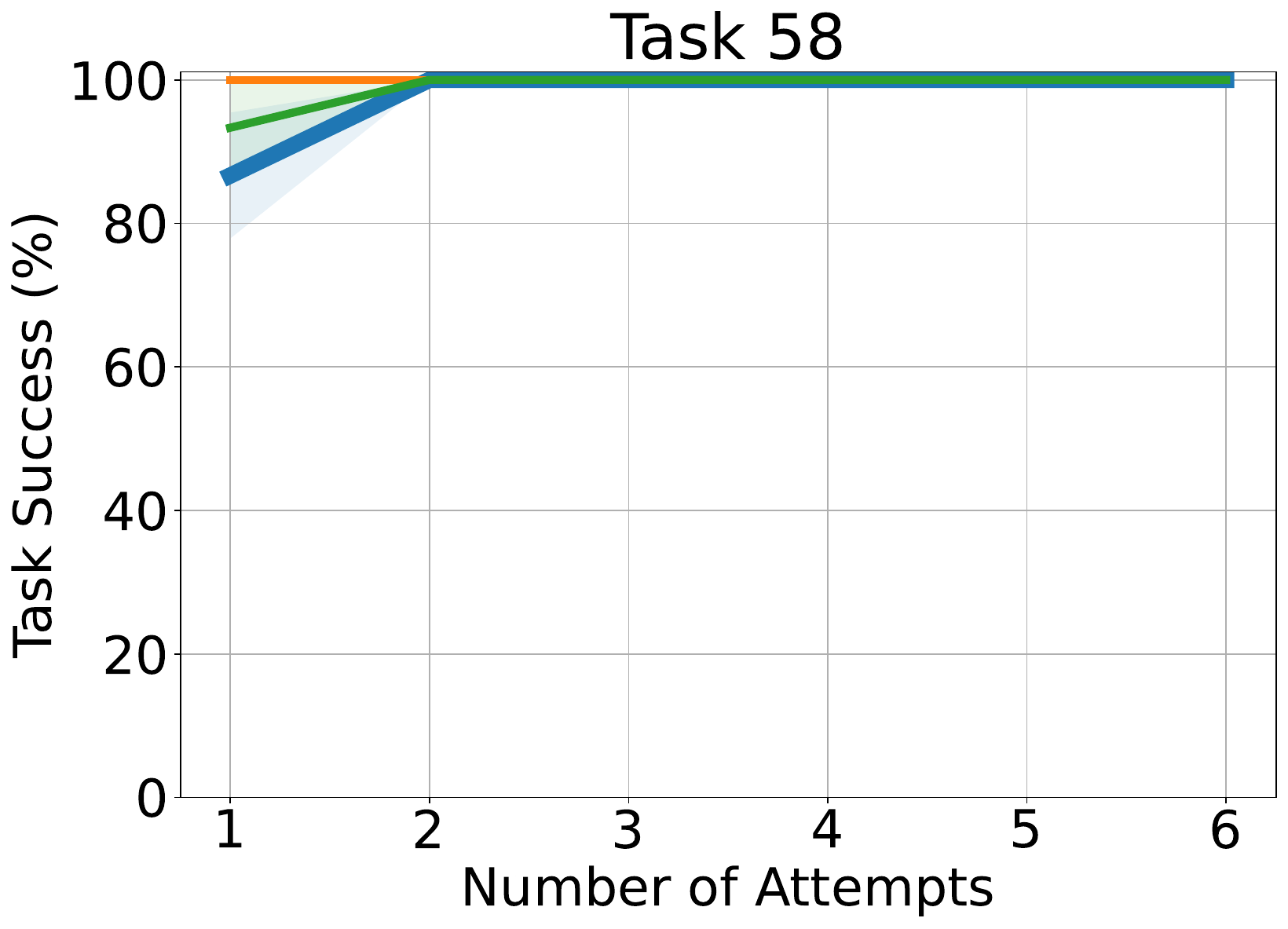}
    \end{subfigure}
    \hfill
    \begin{subfigure}[b]{0.19\textwidth}
        \centering
        \includegraphics[width=\textwidth]{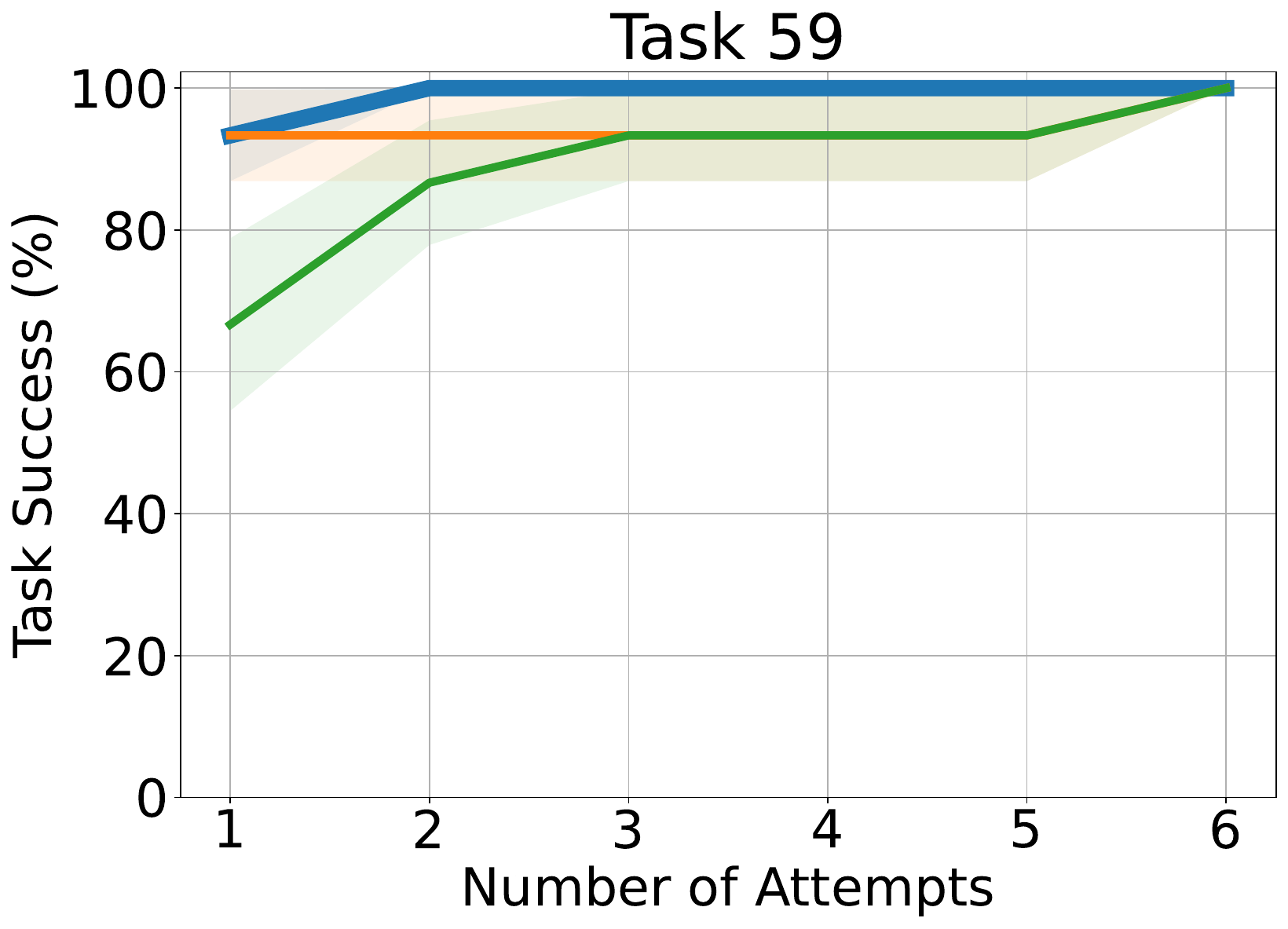}
    \end{subfigure}
    \begin{subfigure}[b]{0.19\textwidth}
        \centering
        \includegraphics[width=\textwidth]{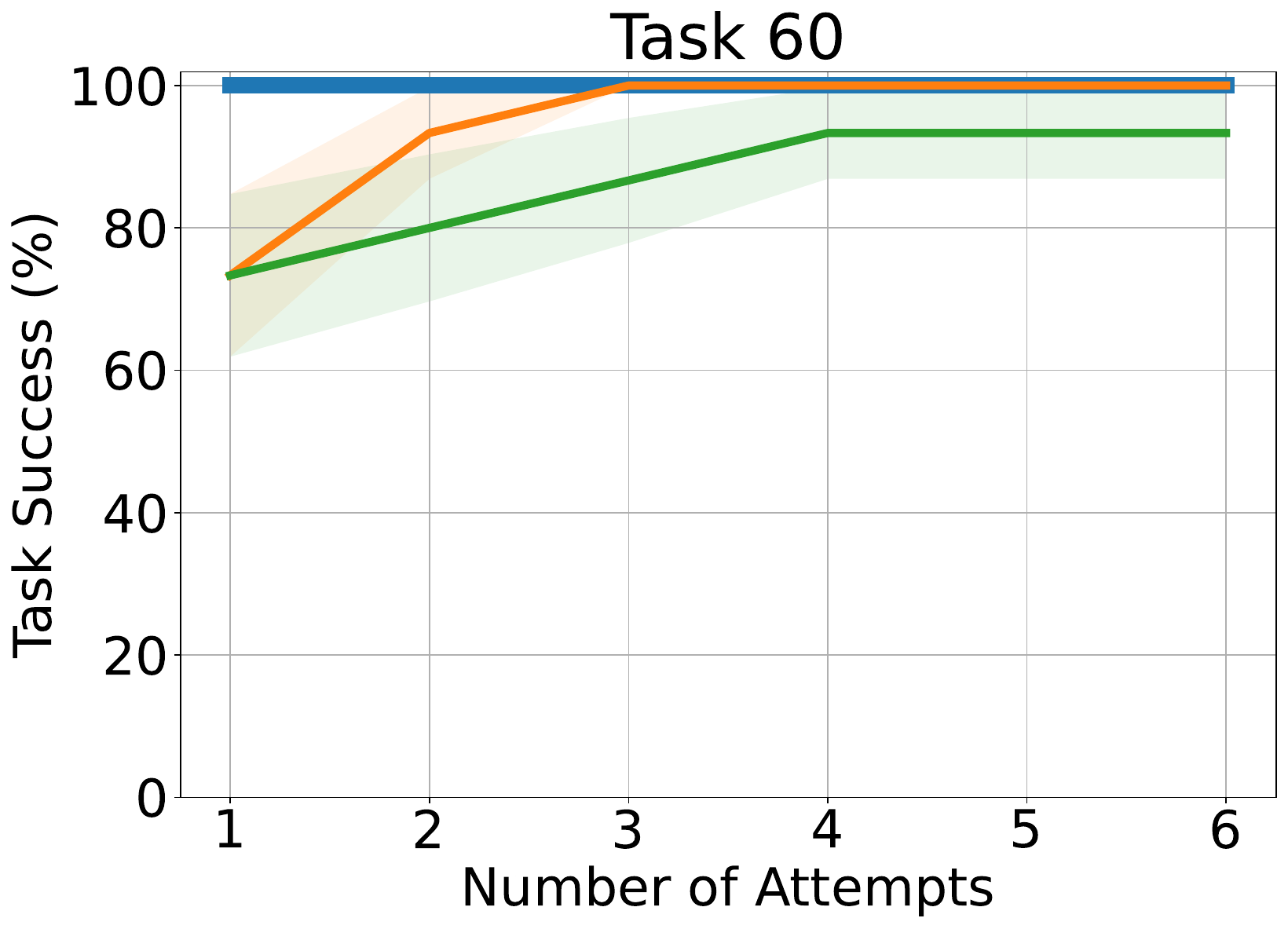}
    \end{subfigure}
    
     \begin{subfigure}[b]{0.19\textwidth}
        \centering
        \includegraphics[width=\textwidth]{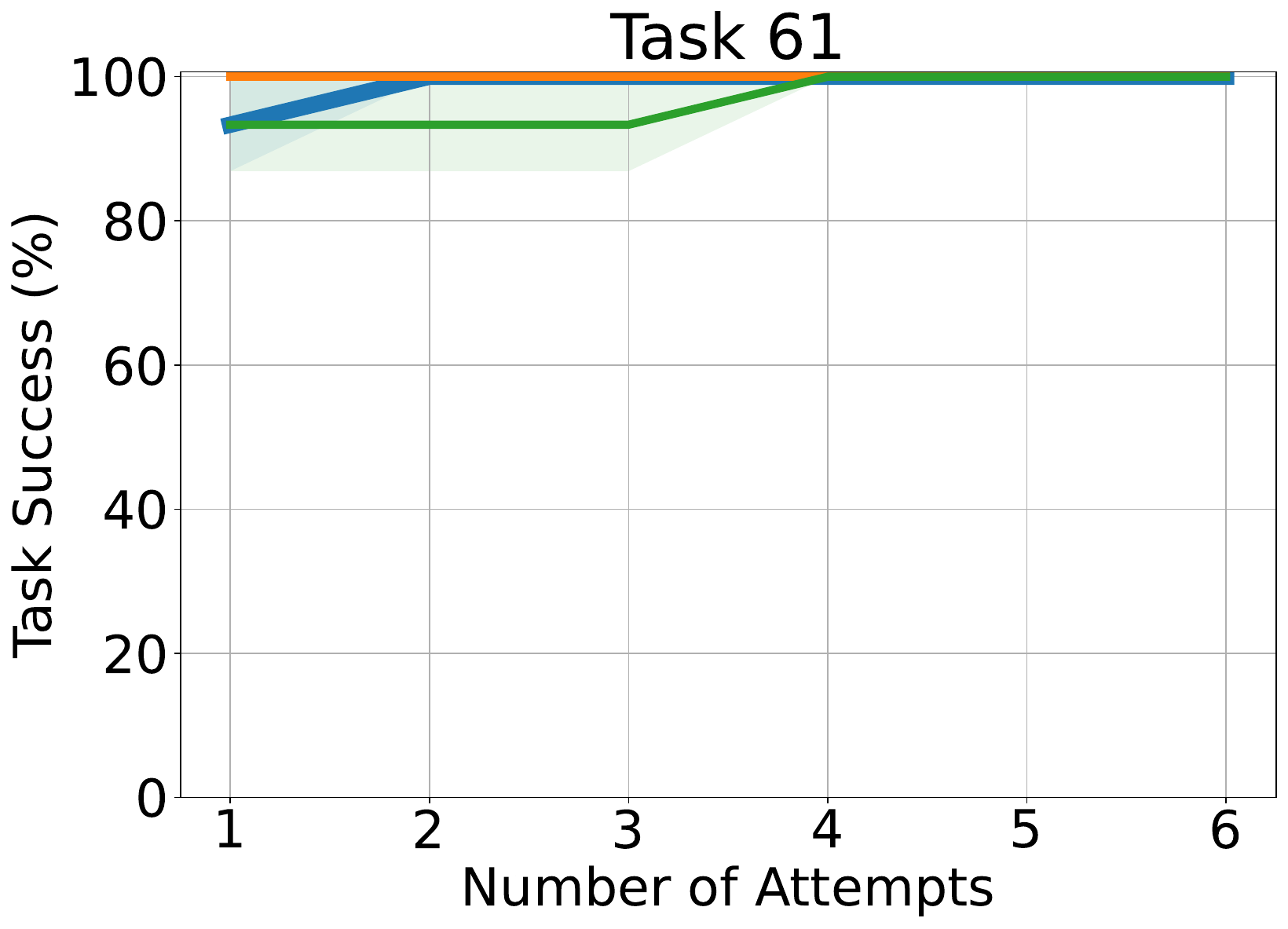}
    \end{subfigure}
    \hfill
    \begin{subfigure}[b]{0.19\textwidth}
        \centering
        \includegraphics[width=\textwidth]{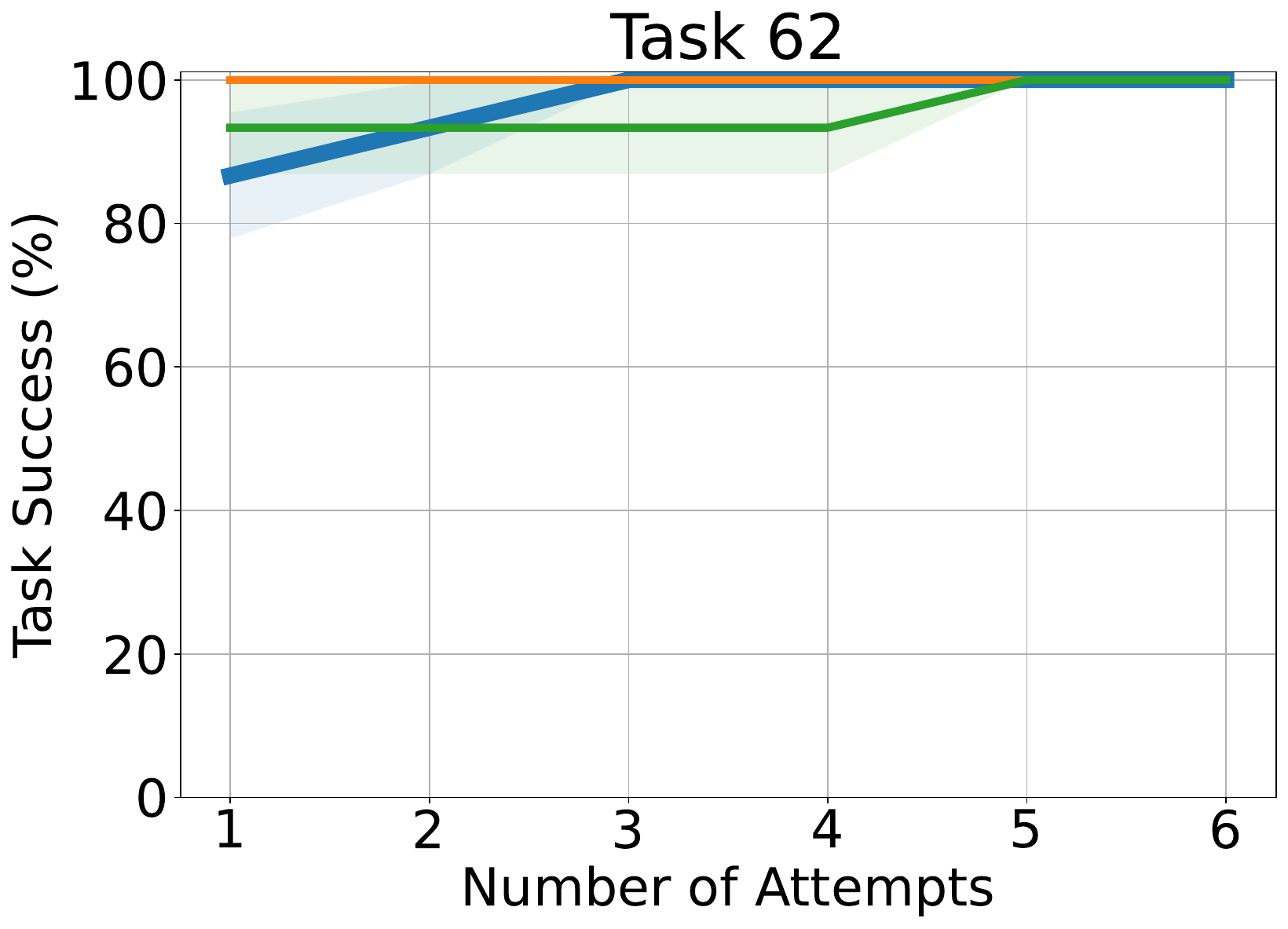}
    \end{subfigure}
    \hfill
    \begin{subfigure}[b]{0.19\textwidth}
        \centering
        \includegraphics[width=\textwidth]{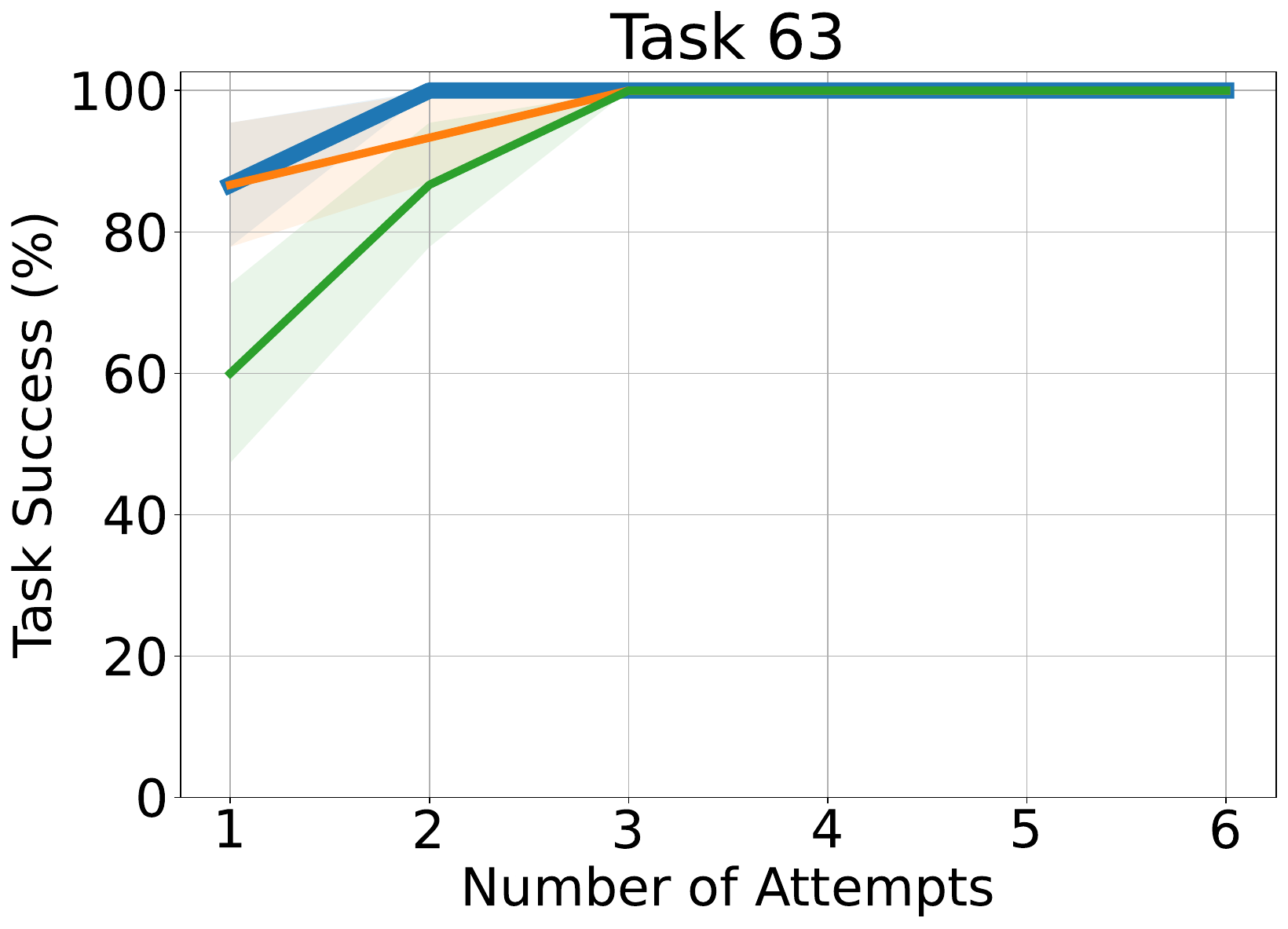}
    \end{subfigure}
    \hfill
    \begin{subfigure}[b]{0.19\textwidth}
        \centering
        \includegraphics[width=\textwidth]{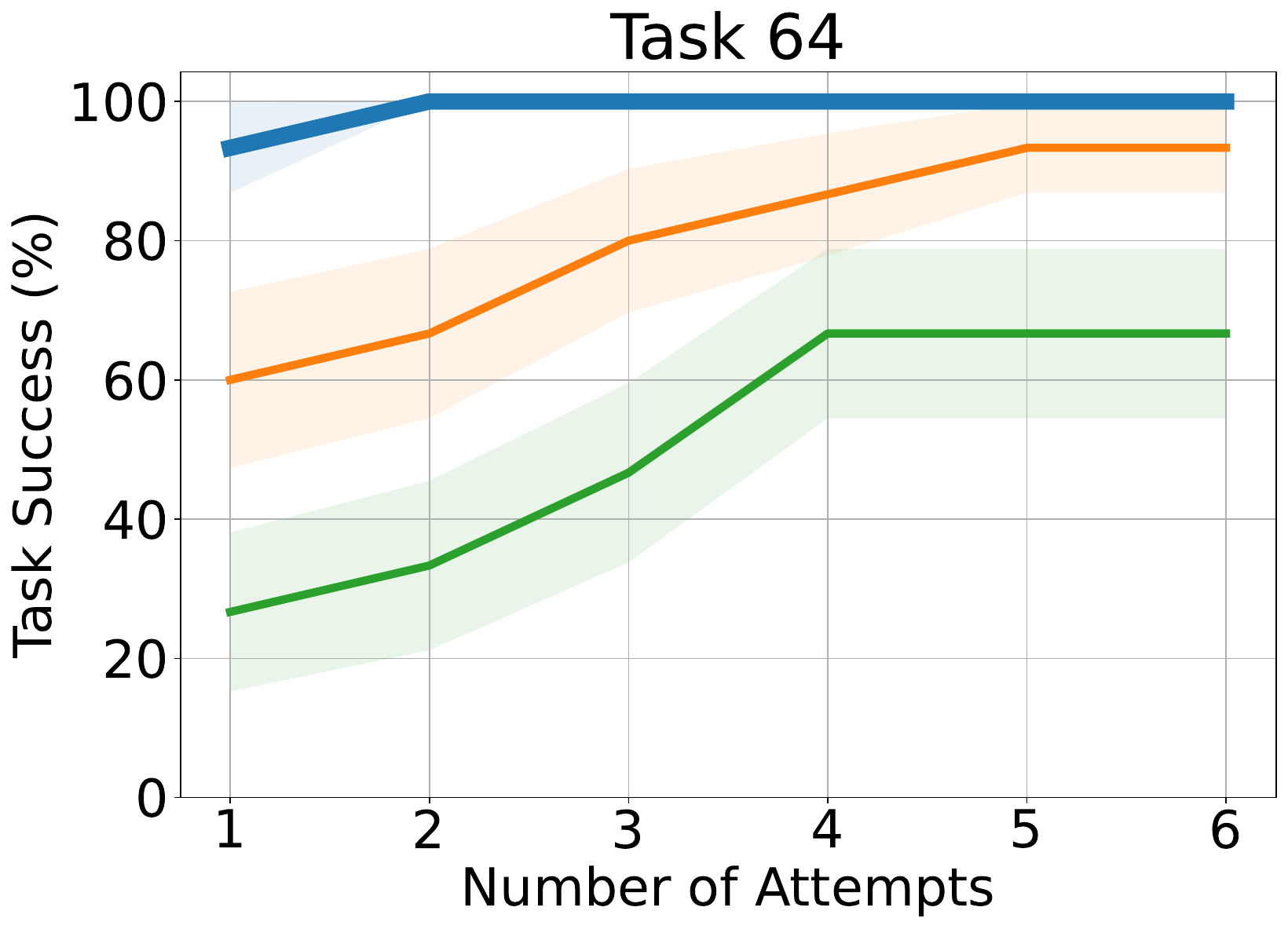}
    \end{subfigure}
    \begin{subfigure}[b]{0.19\textwidth}
        \centering
        \includegraphics[width=\textwidth]{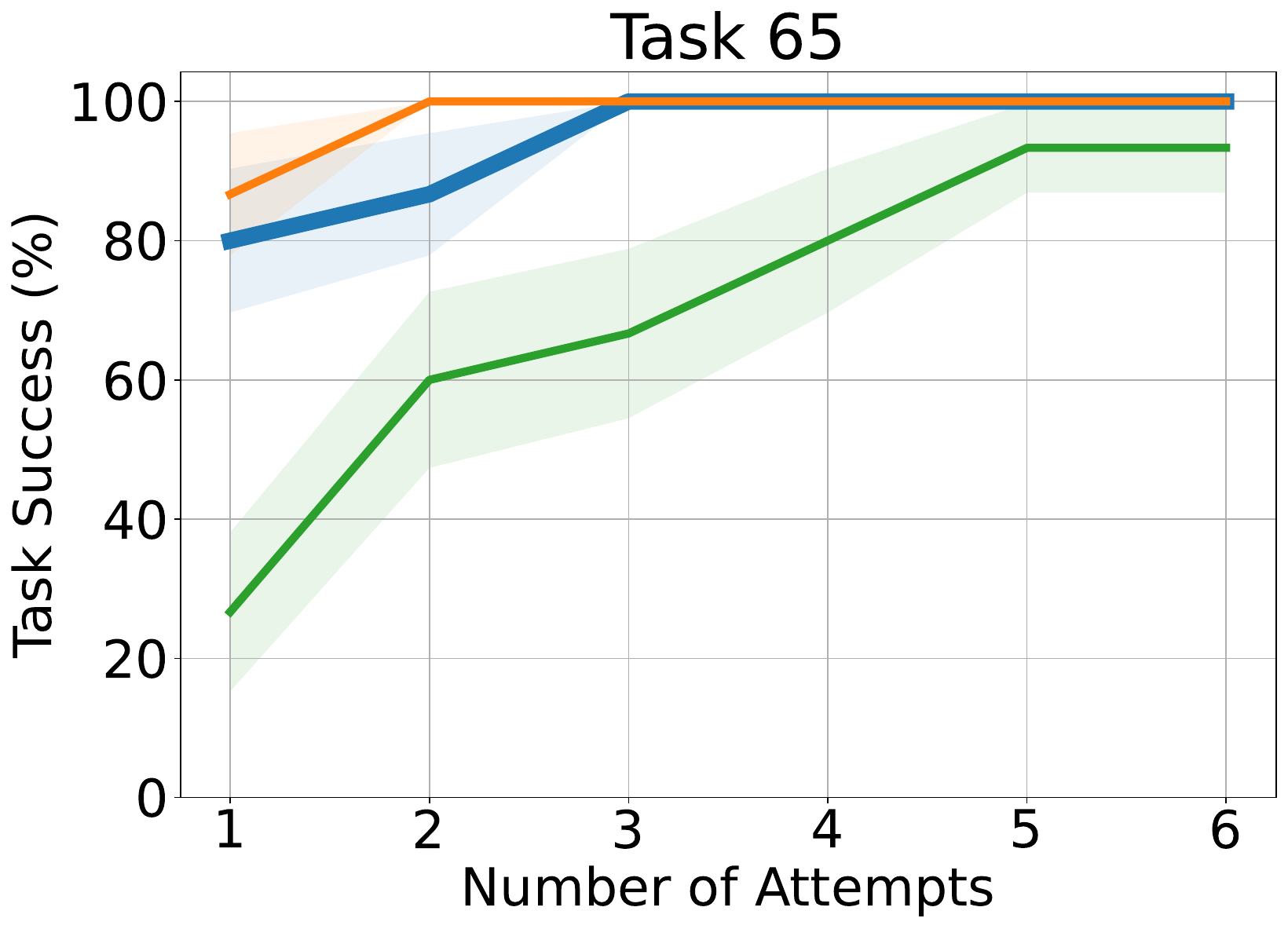}
    \end{subfigure}
    
     \begin{subfigure}[b]{0.19\textwidth}
        \centering
        \includegraphics[width=\textwidth]{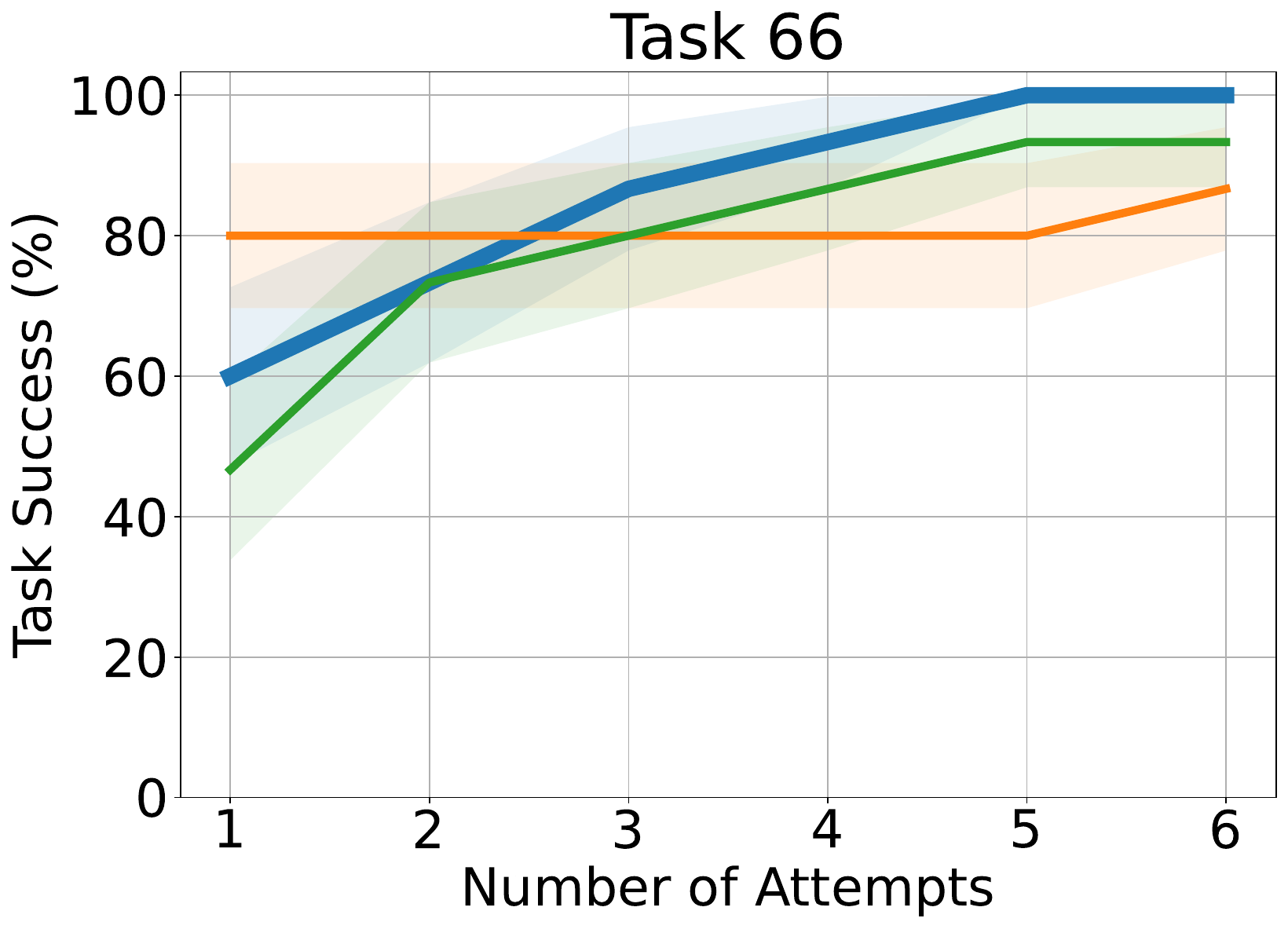}
    \end{subfigure}
    \hfill
    \begin{subfigure}[b]{0.19\textwidth}
        \centering
        \includegraphics[width=\textwidth]{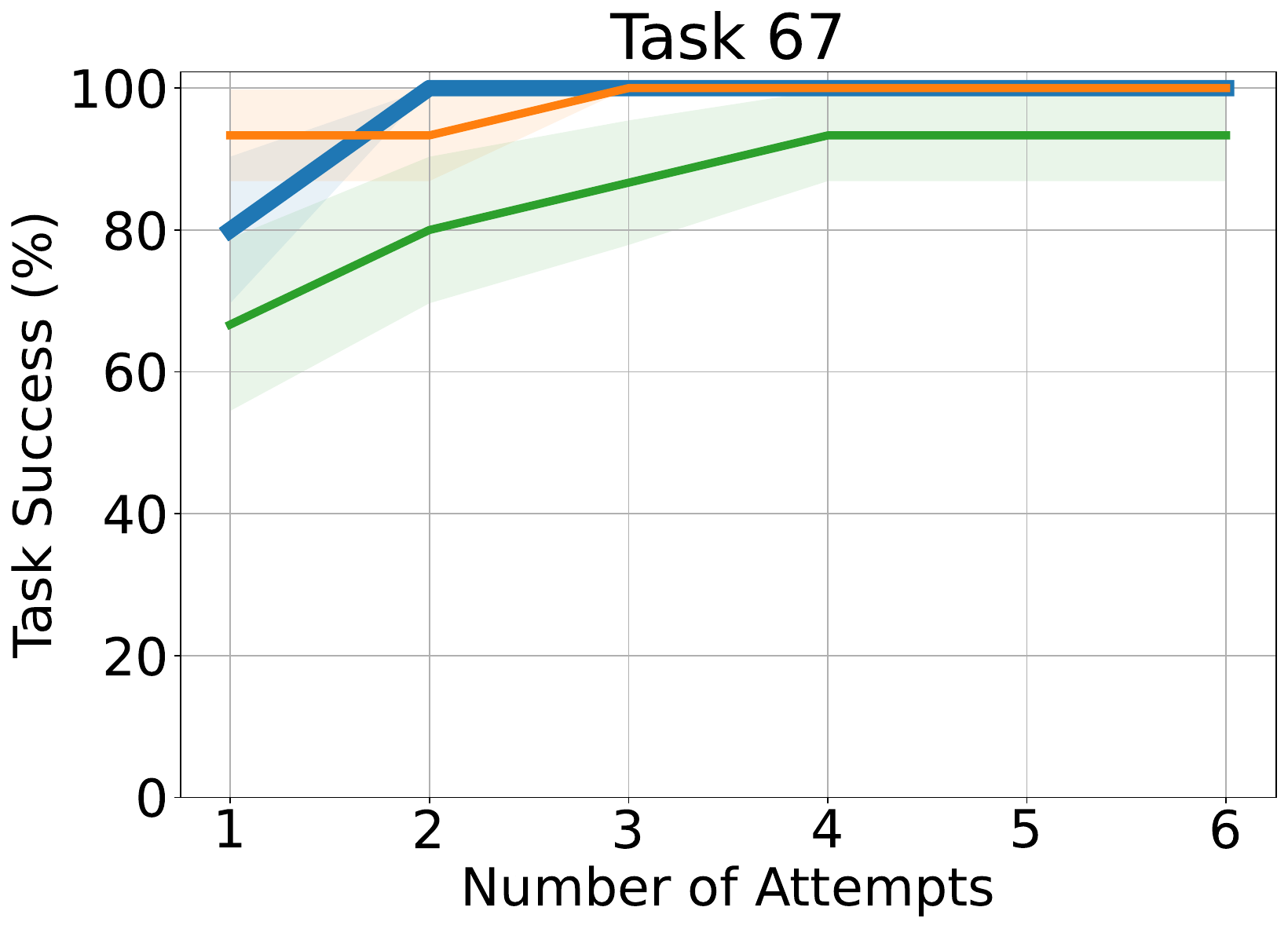}
    \end{subfigure}
    \hfill
    \begin{subfigure}[b]{0.19\textwidth}
        \centering
        \includegraphics[width=\textwidth]{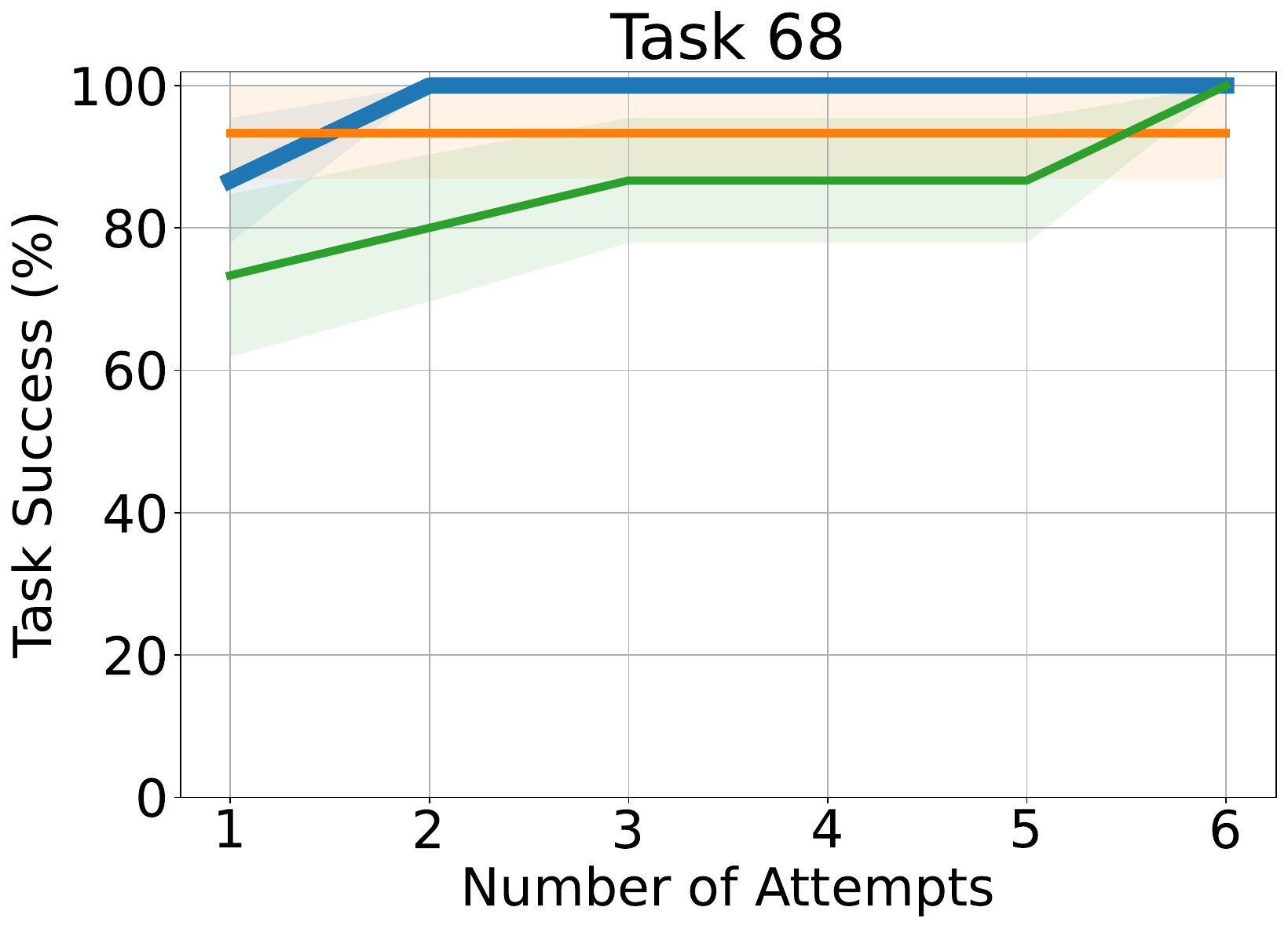}
    \end{subfigure}
    \hfill
    \begin{subfigure}[b]{0.19\textwidth}
        \centering
        \includegraphics[width=\textwidth]{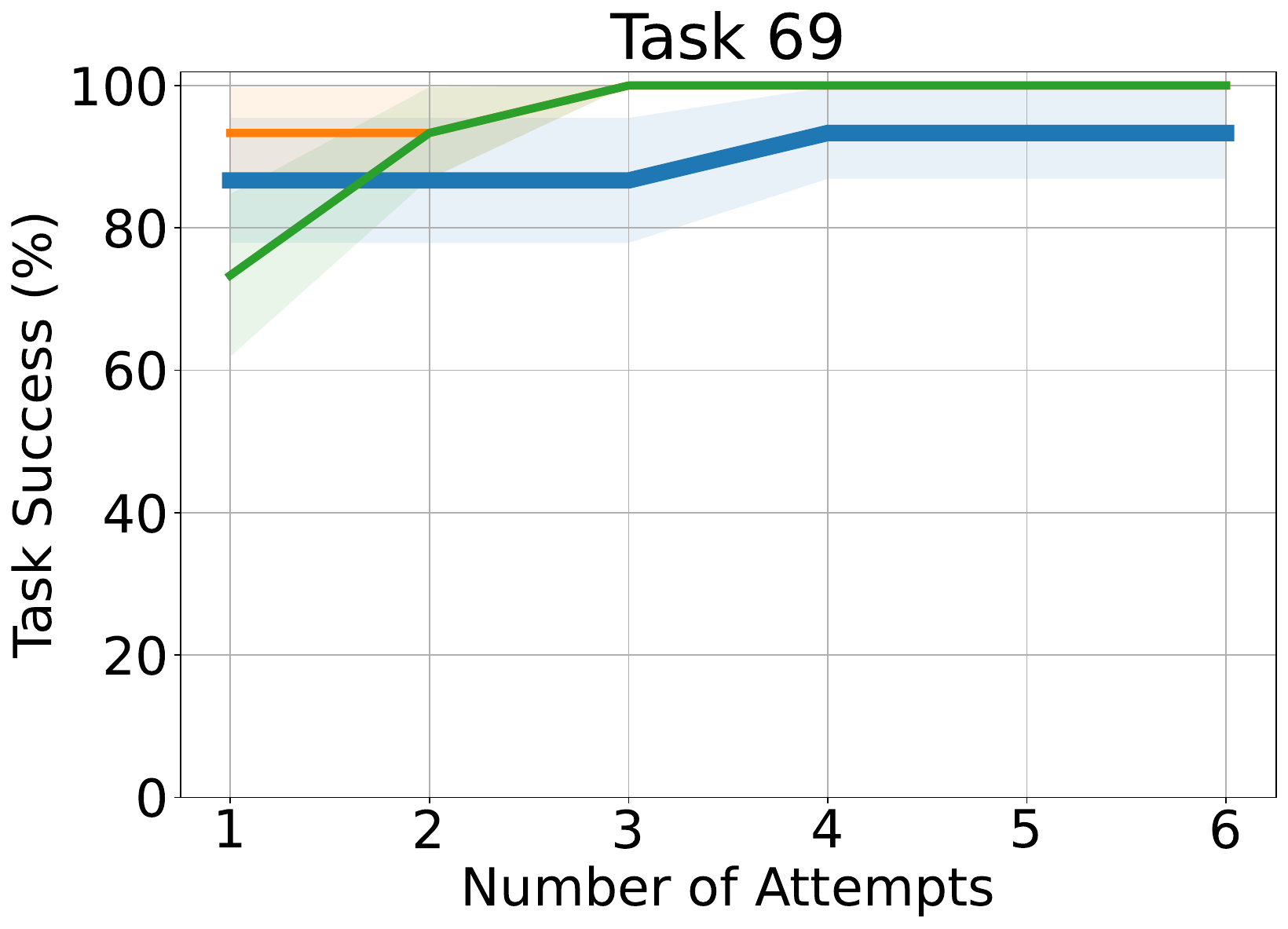}
    \end{subfigure}
    \begin{subfigure}[b]{0.19\textwidth}
        \centering
        \includegraphics[width=\textwidth]{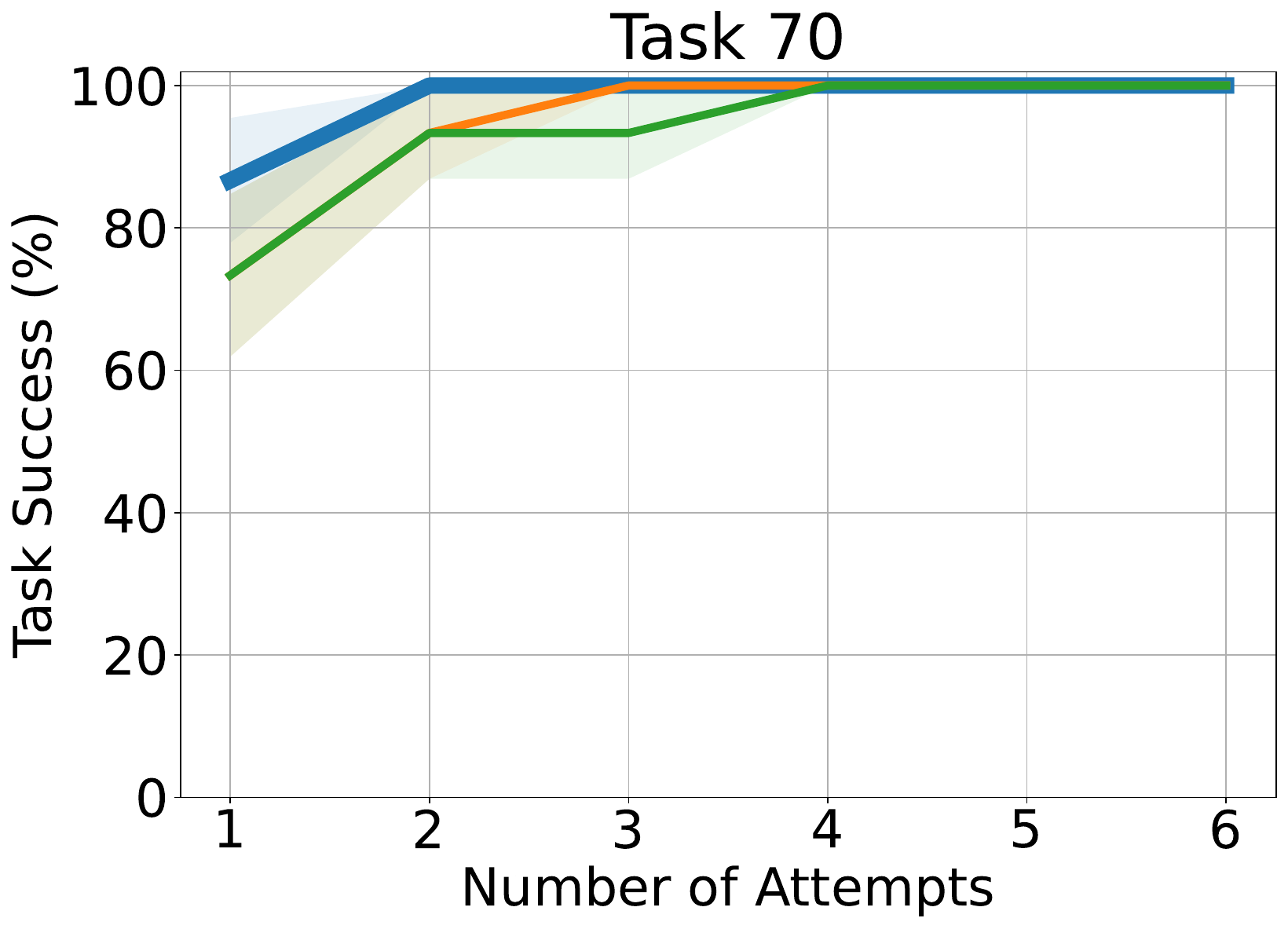}
    \end{subfigure}
    
     \begin{subfigure}[b]{0.19\textwidth}
        \centering
        \includegraphics[width=\textwidth]{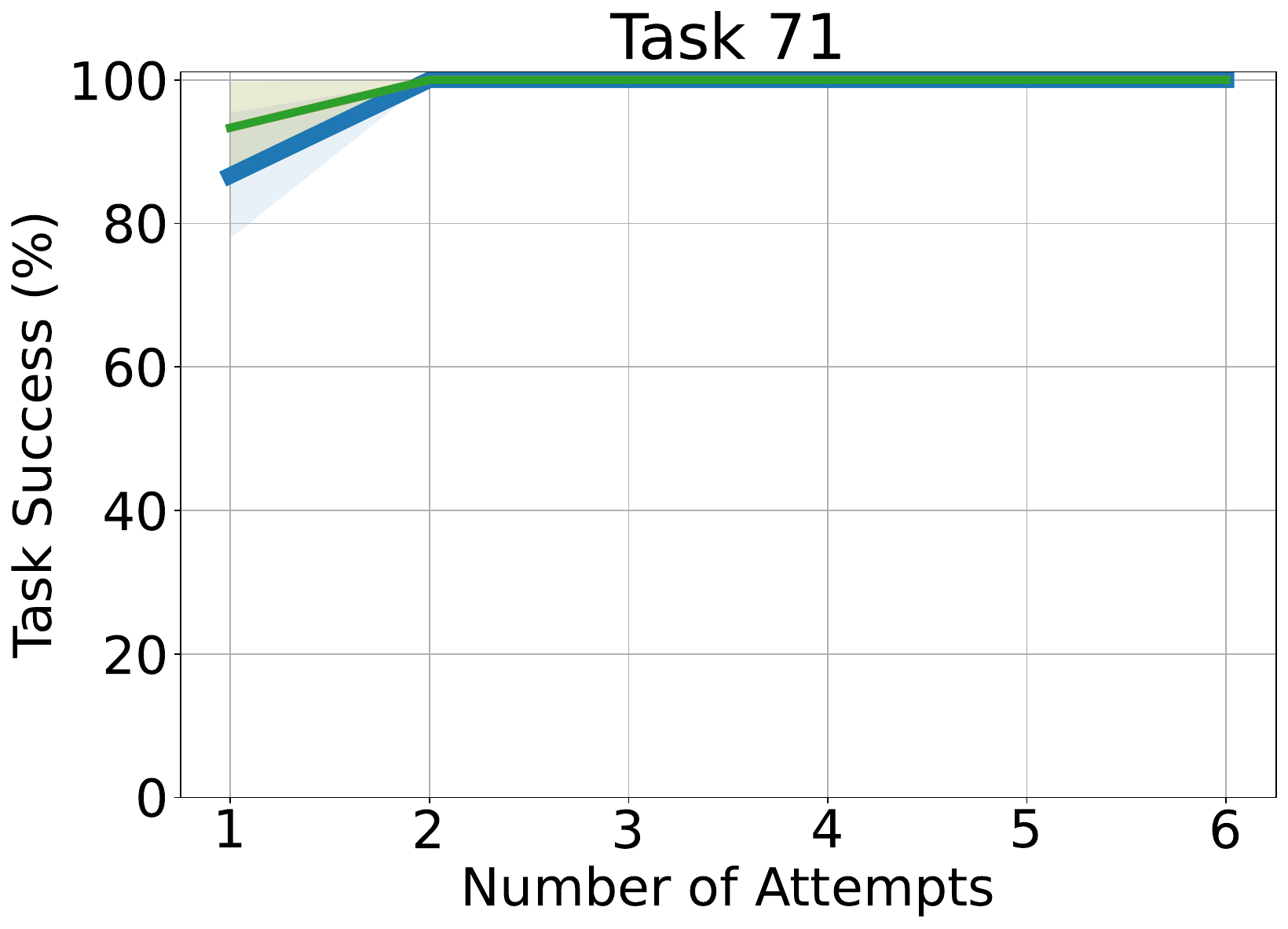}
    \end{subfigure}
    \hfill
    \begin{subfigure}[b]{0.19\textwidth}
        \centering
        \includegraphics[width=\textwidth]{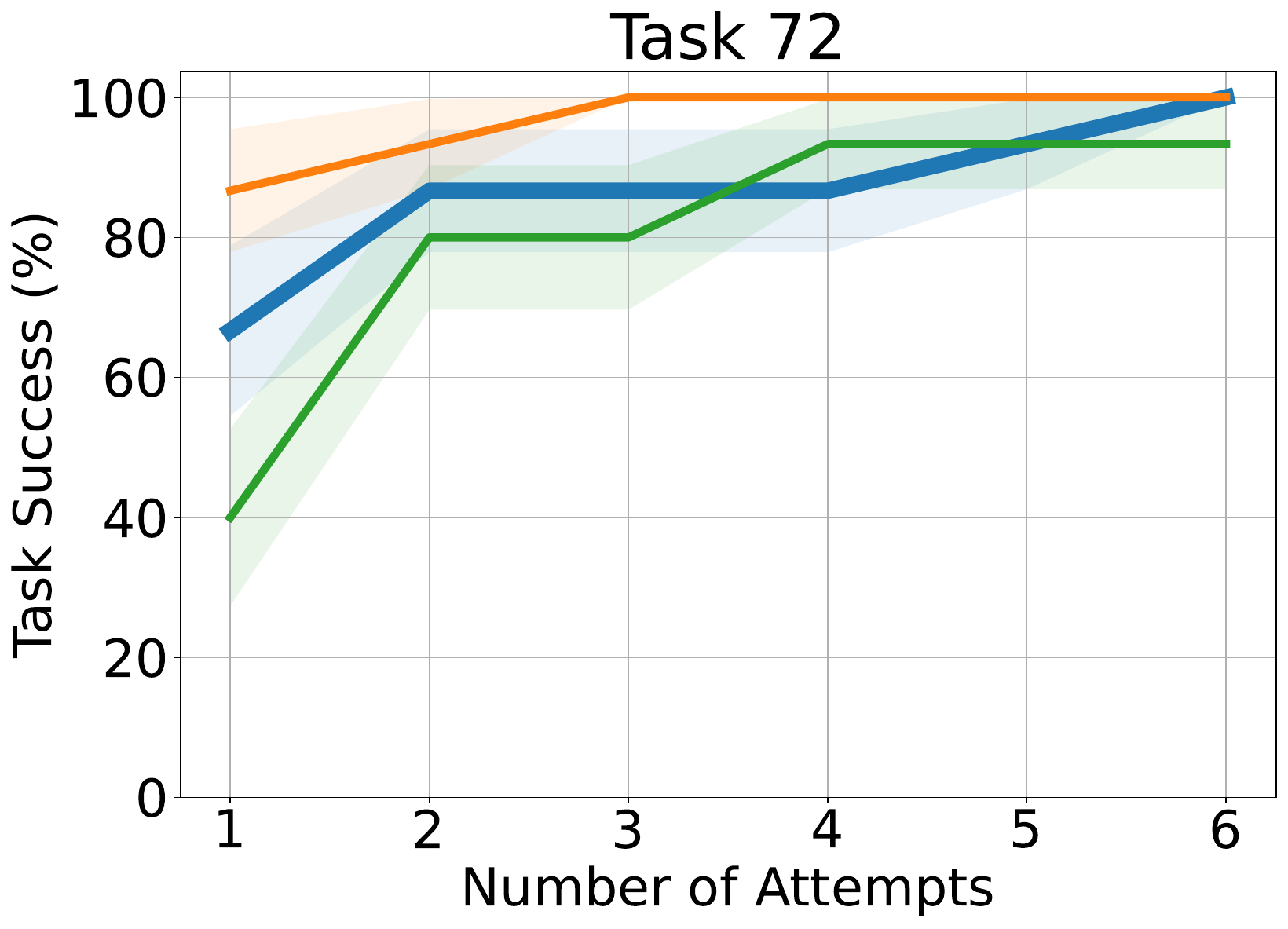}
    \end{subfigure}
    \hfill
    \begin{subfigure}[b]{0.19\textwidth}
        \centering
        \includegraphics[width=\textwidth]{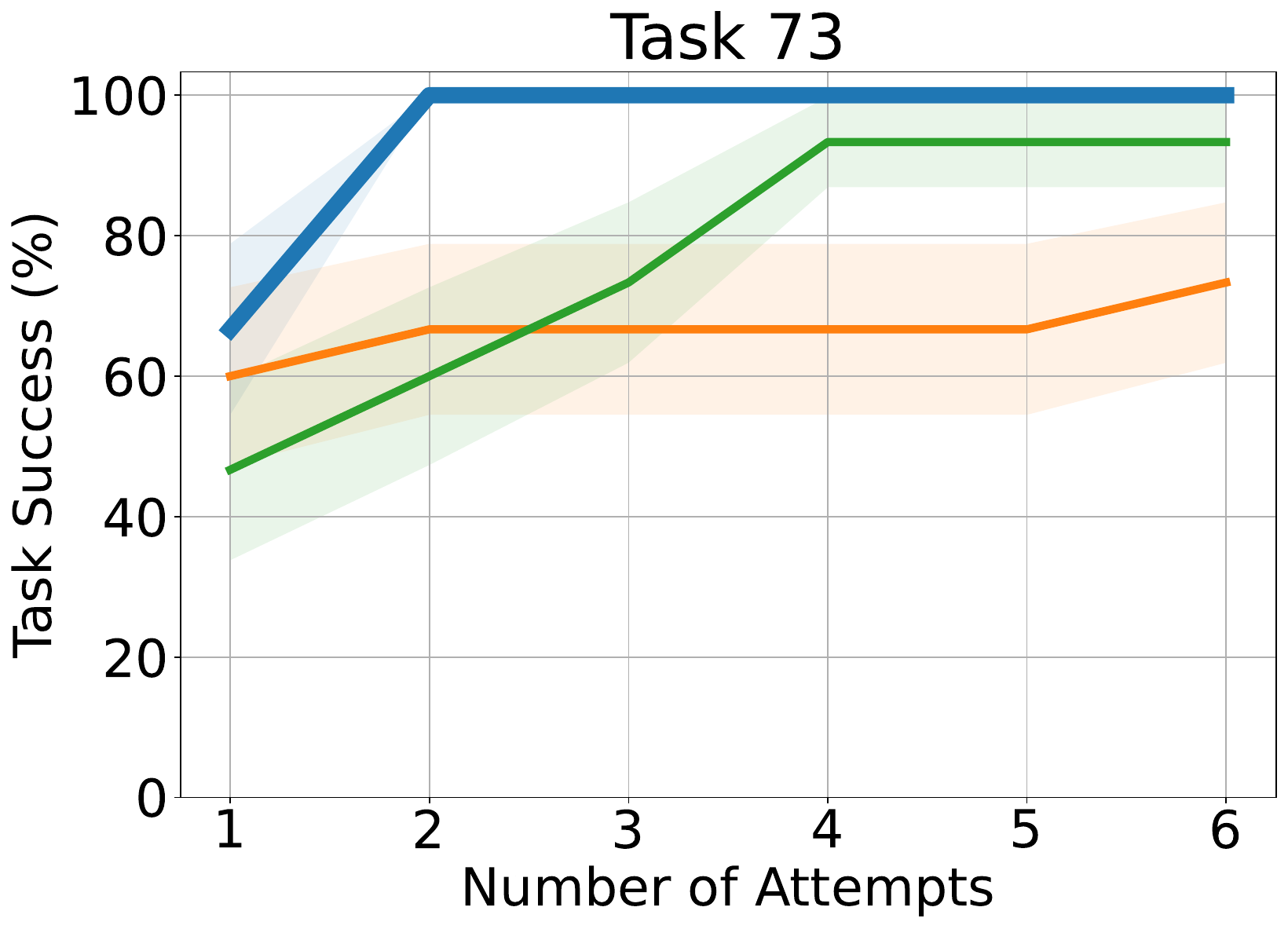}
    \end{subfigure}
    \hfill
    \begin{subfigure}[b]{0.19\textwidth}
        \centering
        \includegraphics[width=\textwidth]{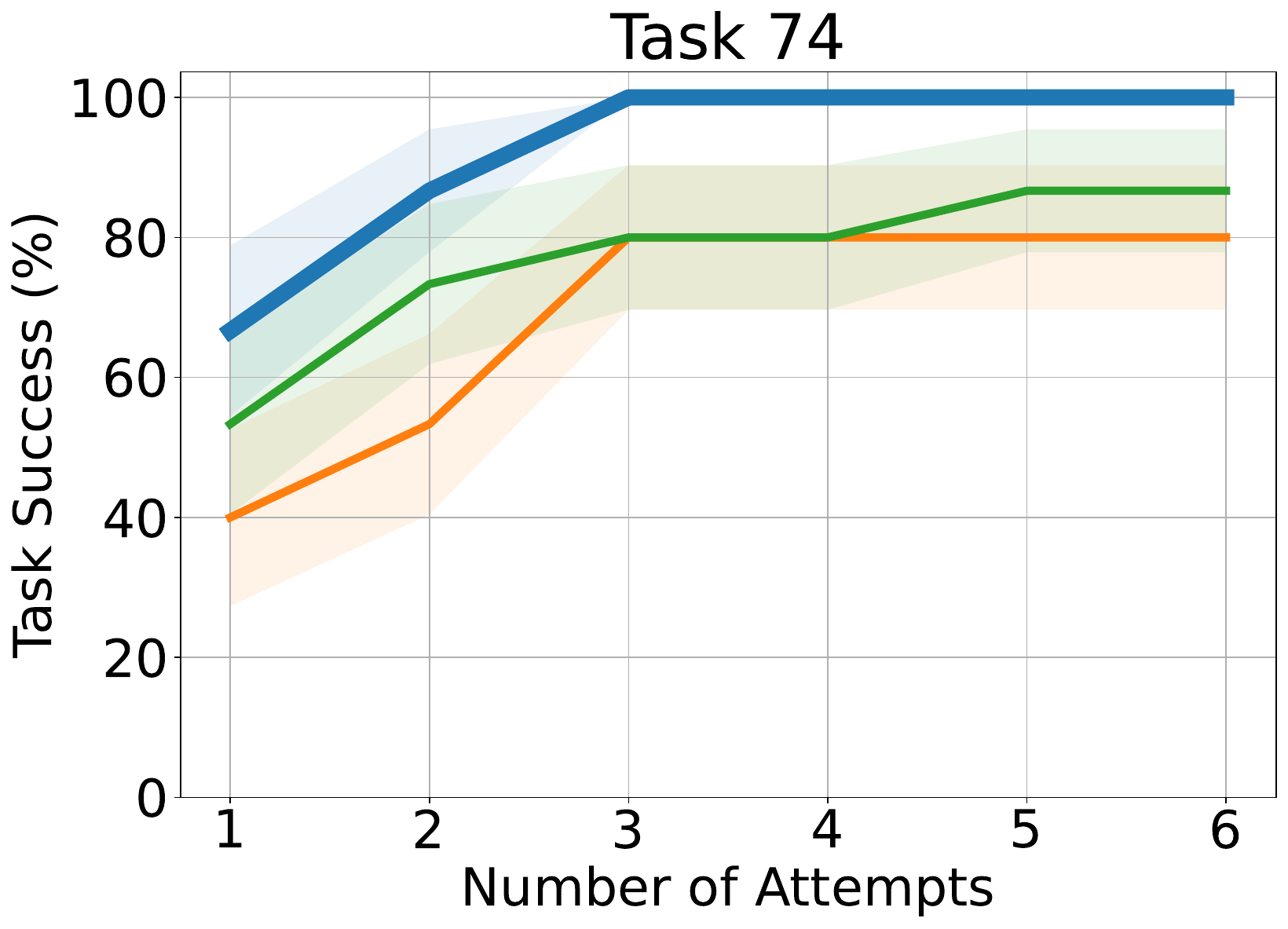}
    \end{subfigure}
    \begin{subfigure}[b]{0.19\textwidth}
        \centering
        \includegraphics[width=\textwidth]{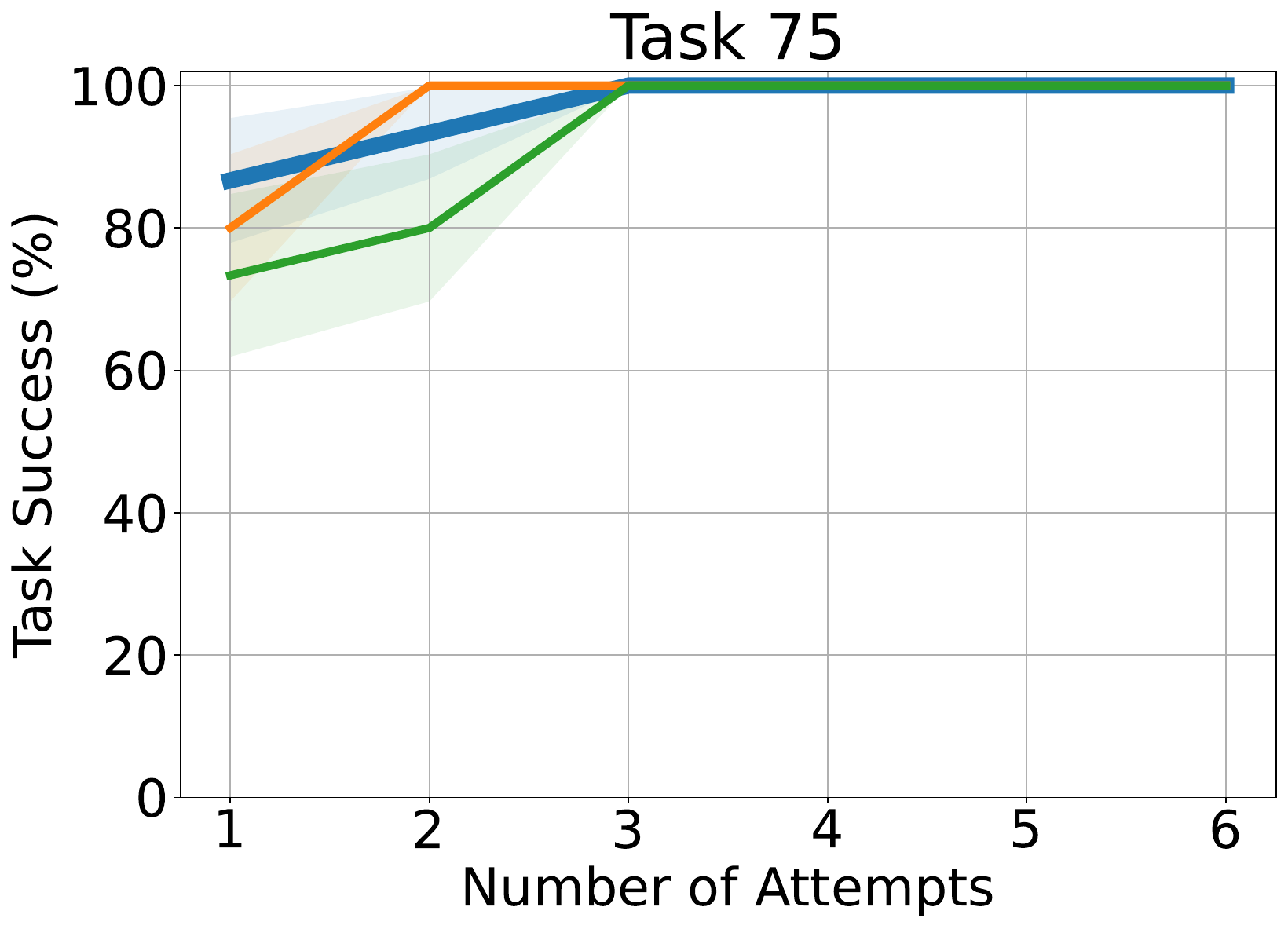}
    \end{subfigure}
    
     \begin{subfigure}[b]{0.19\textwidth}
        \centering
        \includegraphics[width=\textwidth]{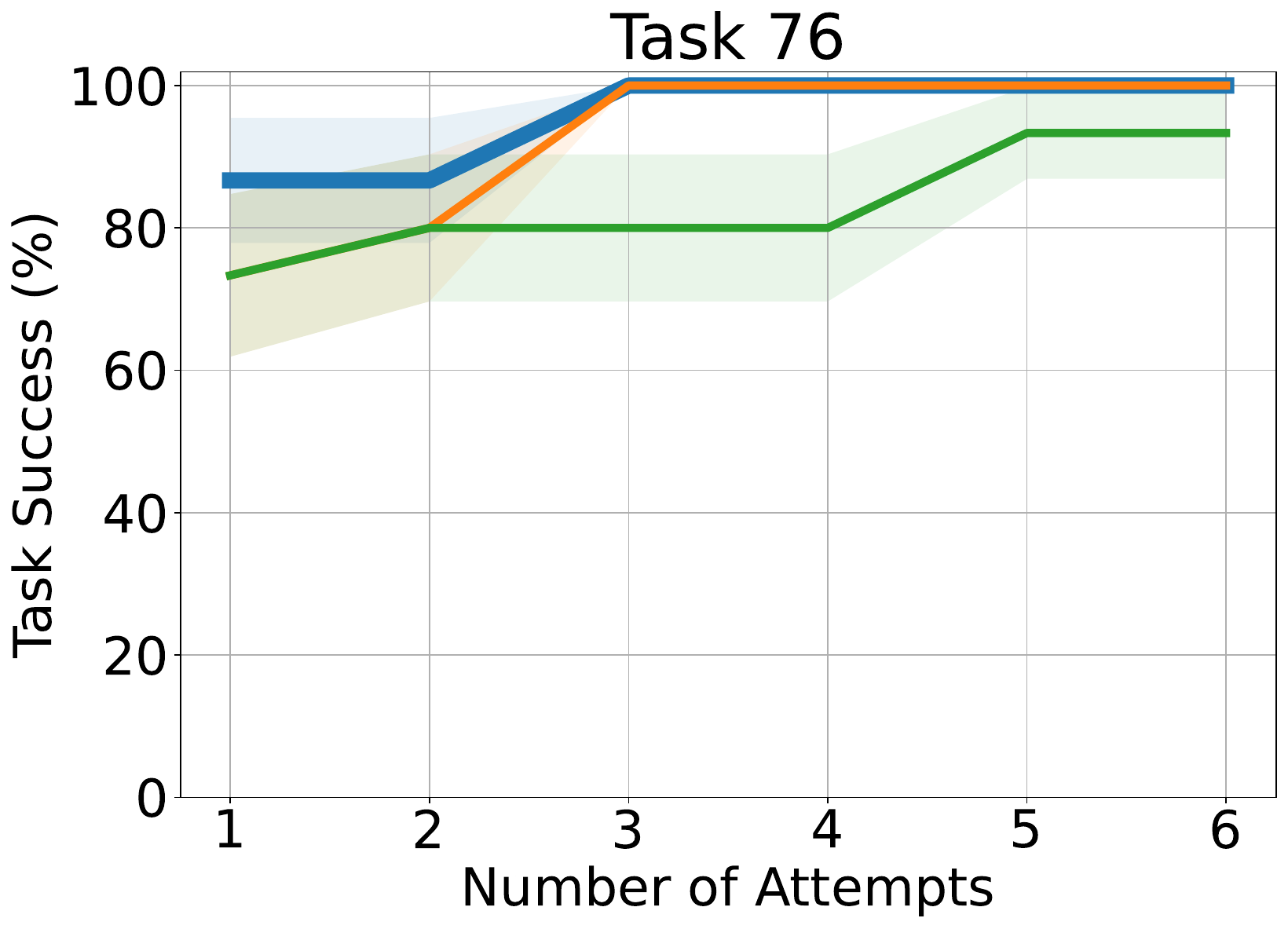}
    \end{subfigure}
    \hfill
    \begin{subfigure}[b]{0.19\textwidth}
        \centering
        \includegraphics[width=\textwidth]{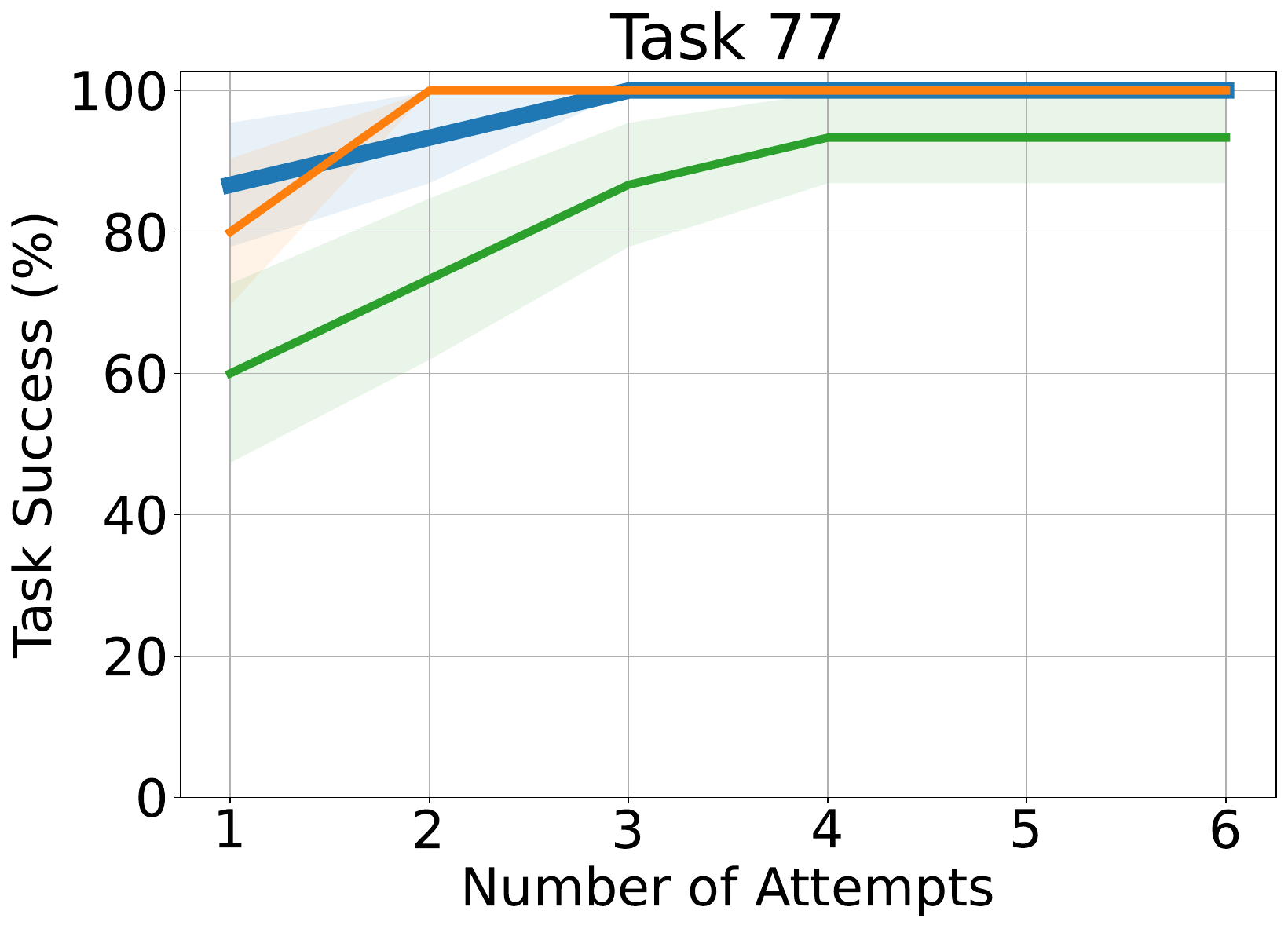}
    \end{subfigure}
    \hfill
    \begin{subfigure}[b]{0.19\textwidth}
        \centering
        \includegraphics[width=\textwidth]{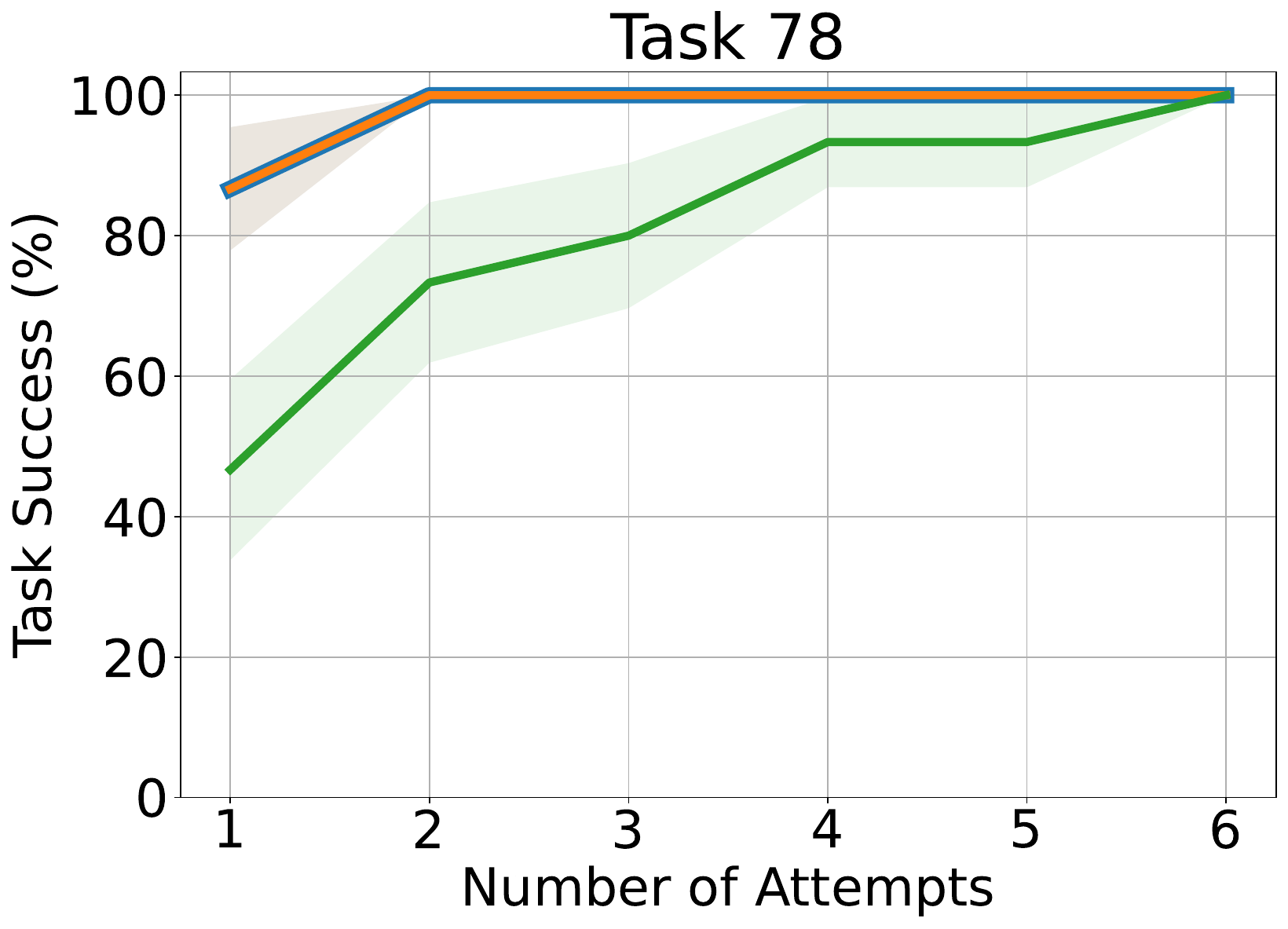}
    \end{subfigure}
    \hfill
    \begin{subfigure}[b]{0.19\textwidth}
        \centering
        \includegraphics[width=\textwidth]{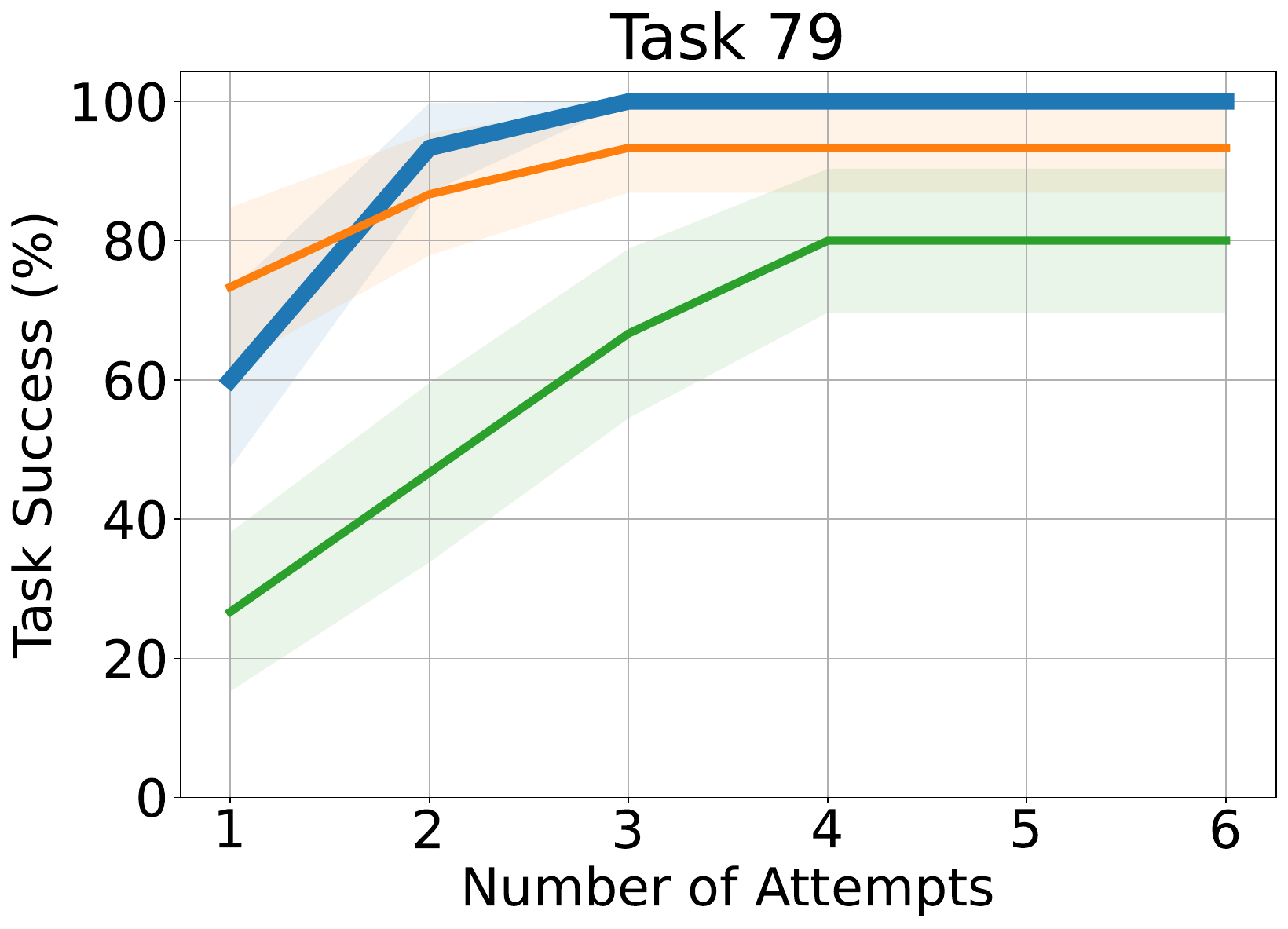}
    \end{subfigure}
    \begin{subfigure}[b]{0.19\textwidth}
        \centering
        \includegraphics[width=\textwidth]{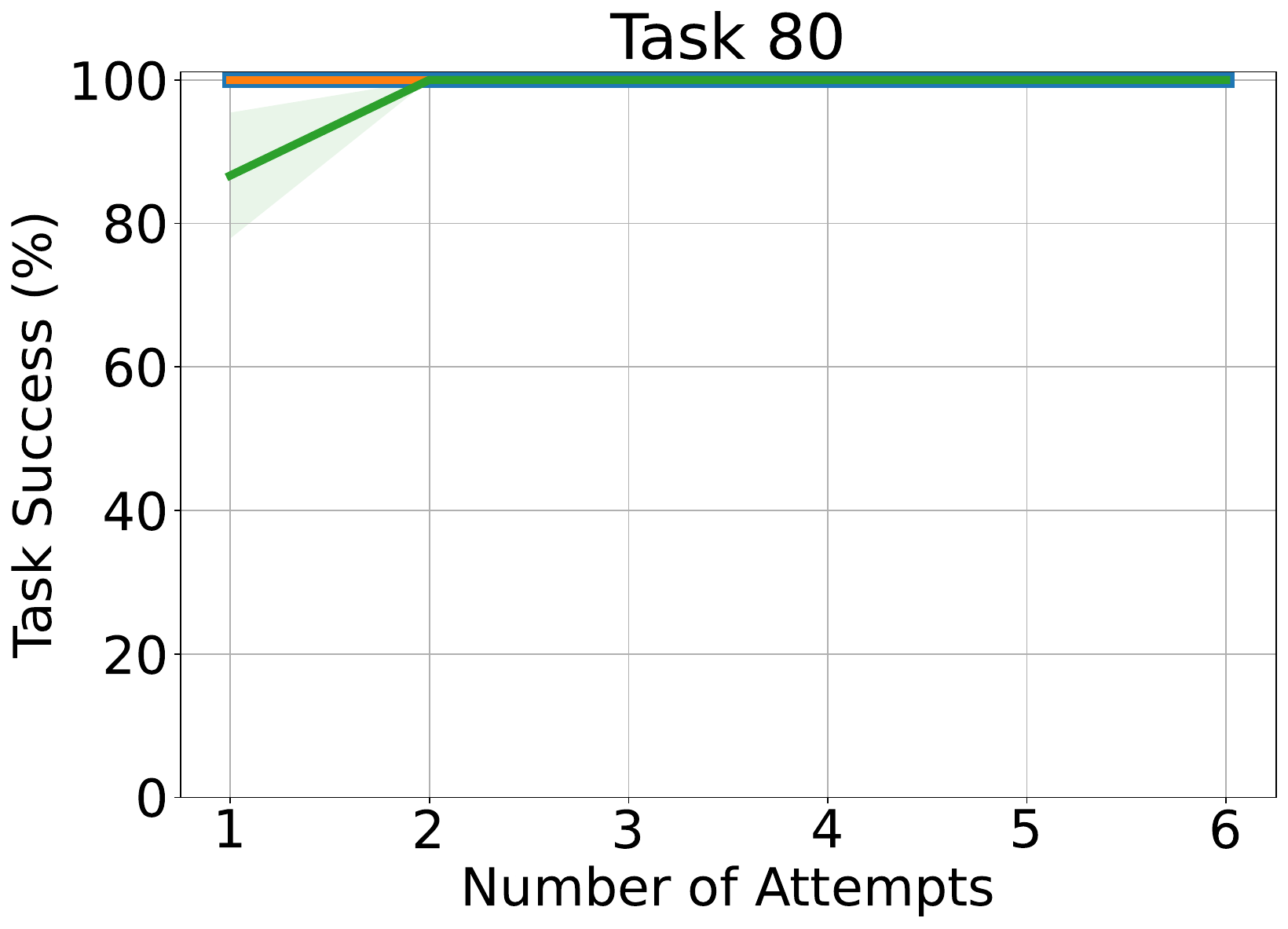}
    \end{subfigure}
    
     \begin{subfigure}[b]{0.19\textwidth}
        \centering
        \includegraphics[width=\textwidth]{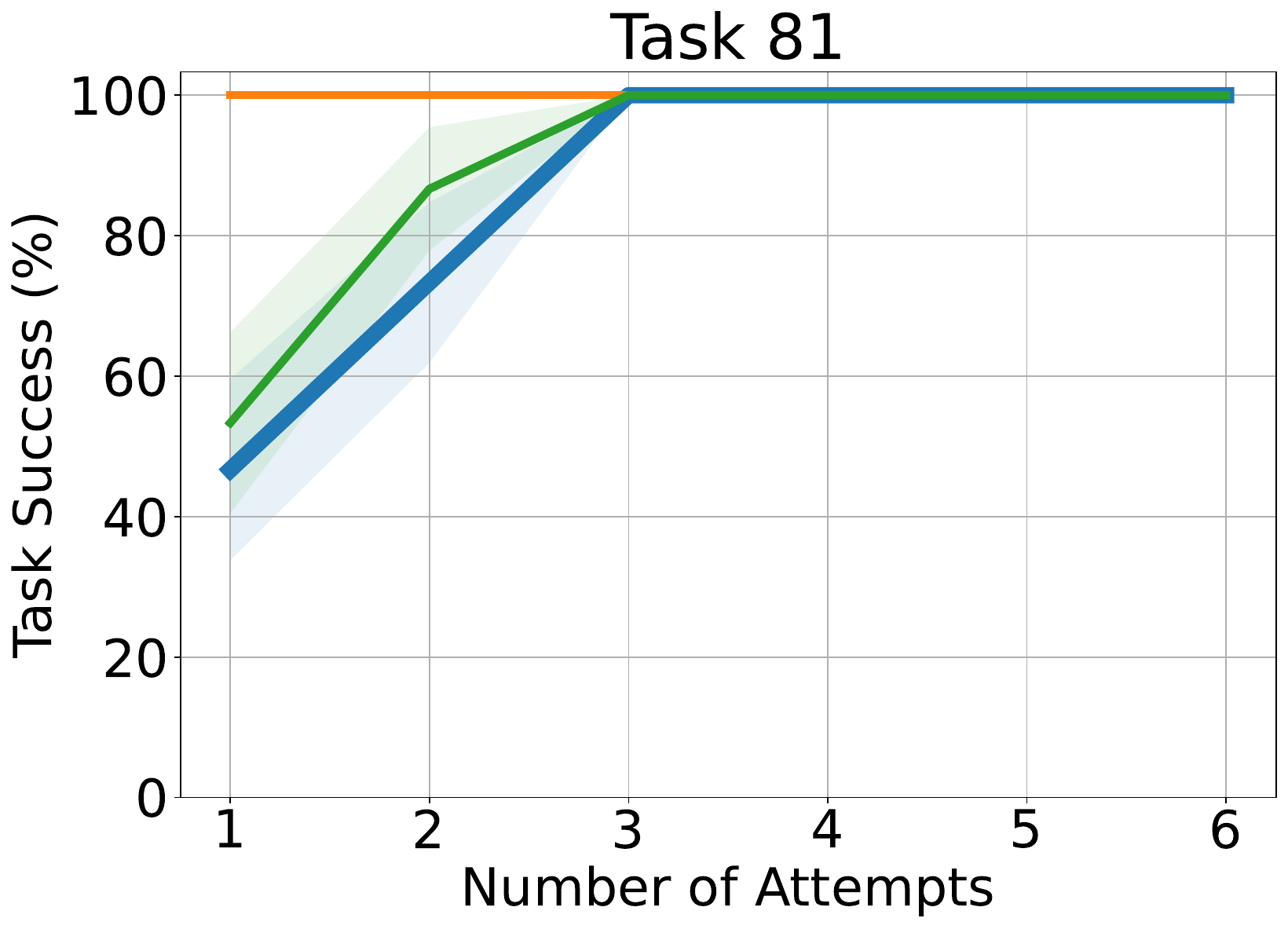}
    \end{subfigure}
    \hfill
    \begin{subfigure}[b]{0.19\textwidth}
        \centering
        \includegraphics[width=\textwidth]{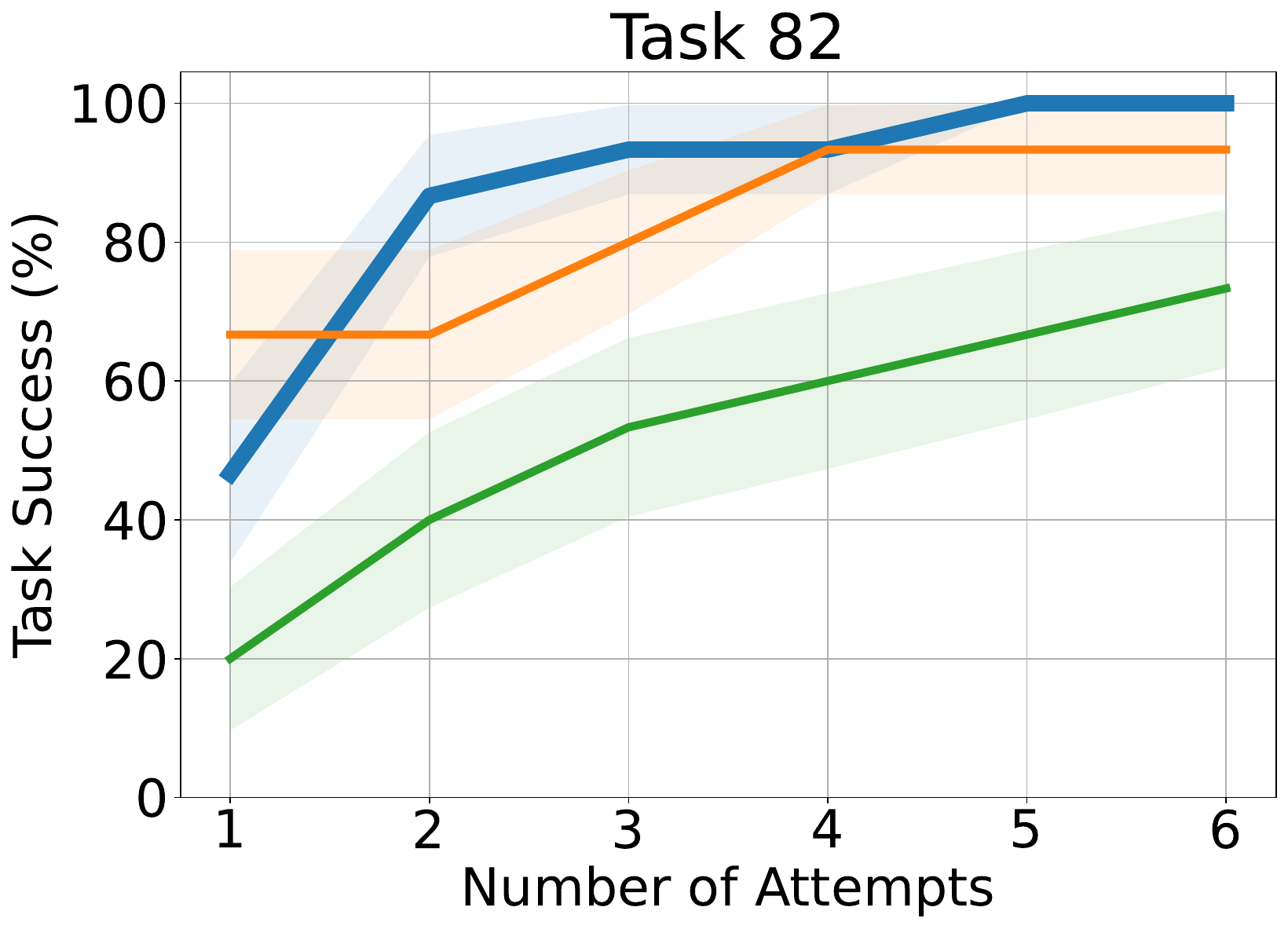}
    \end{subfigure}
    \hfill
    \begin{subfigure}[b]{0.19\textwidth}
        \centering
        \includegraphics[width=\textwidth]{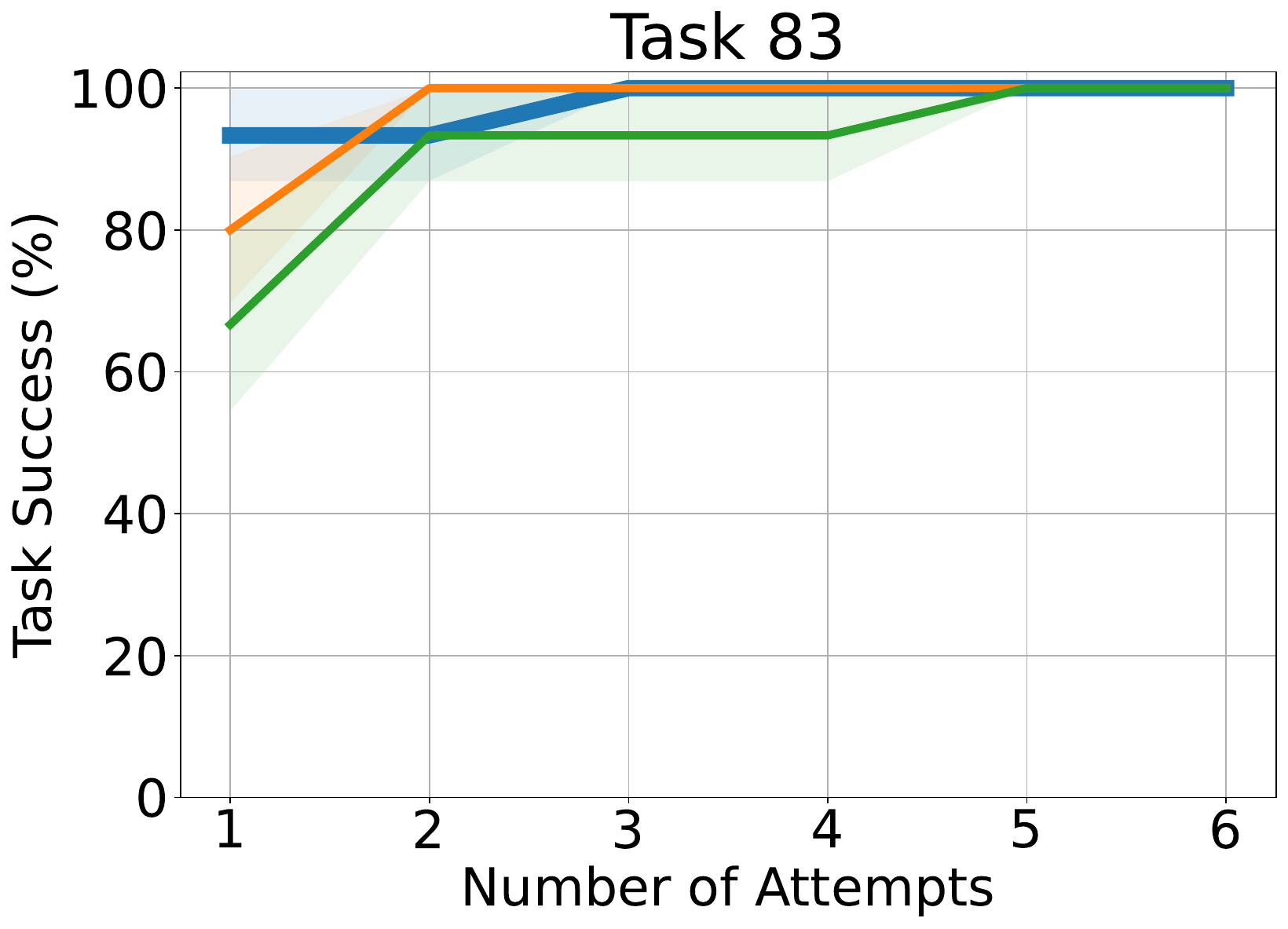}
    \end{subfigure}
    \hfill
    \begin{subfigure}[b]{0.19\textwidth}
        \centering
        \includegraphics[width=\textwidth]{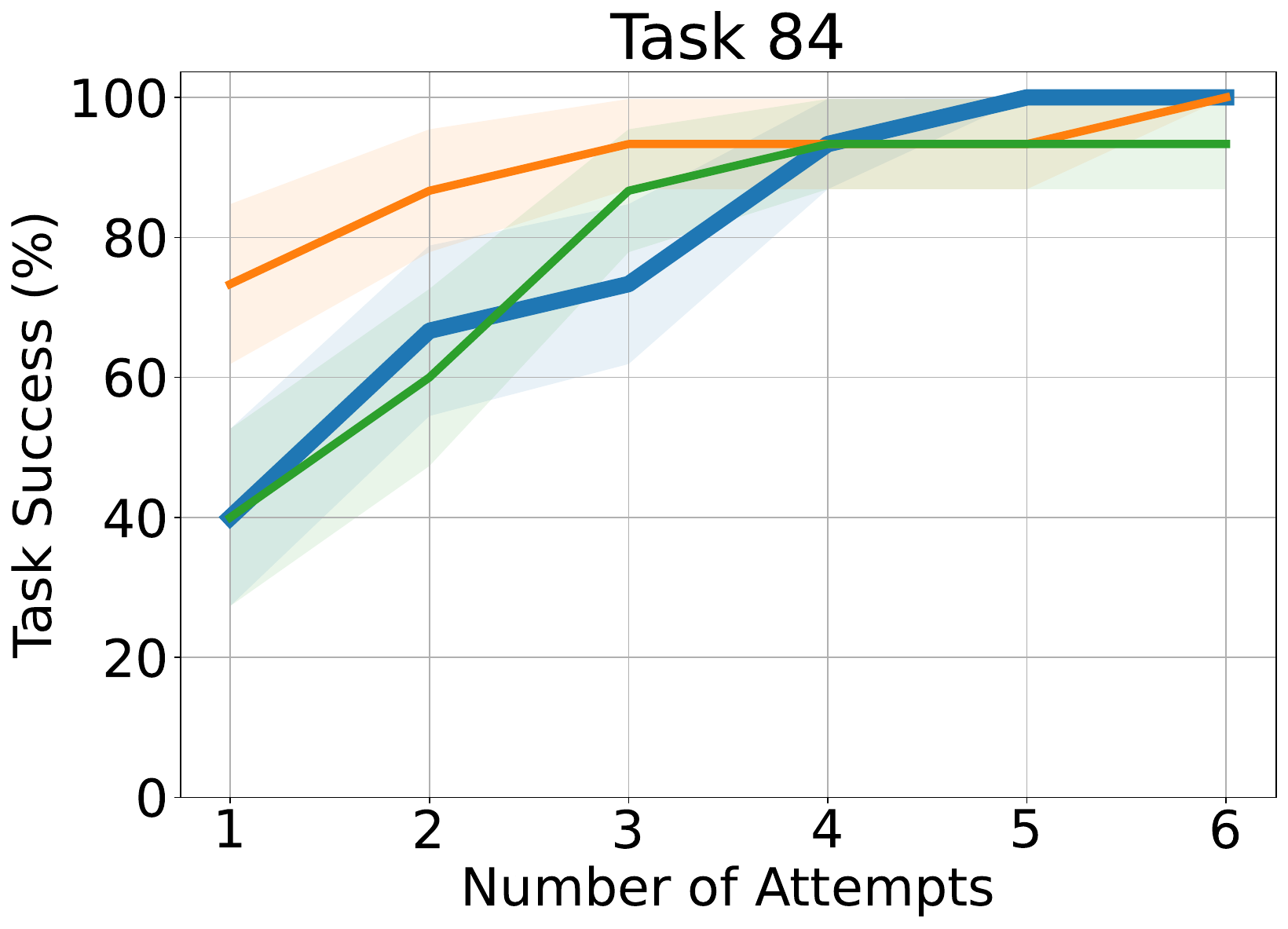}
    \end{subfigure}
    \begin{subfigure}[b]{0.19\textwidth}
        \centering
        \includegraphics[width=\textwidth]{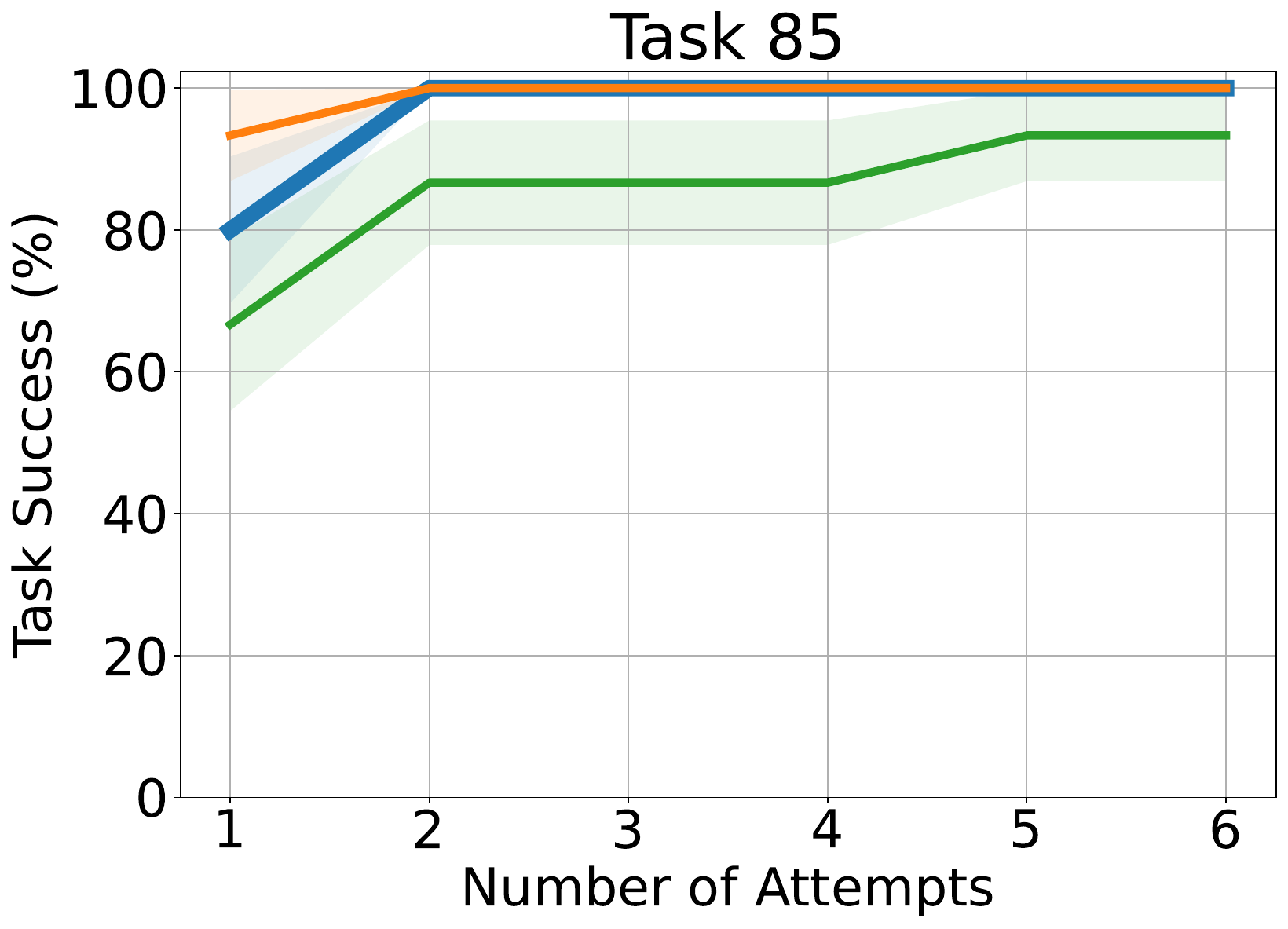}
    \end{subfigure}
    
     \begin{subfigure}[b]{0.19\textwidth}
        \centering
        \includegraphics[width=\textwidth]{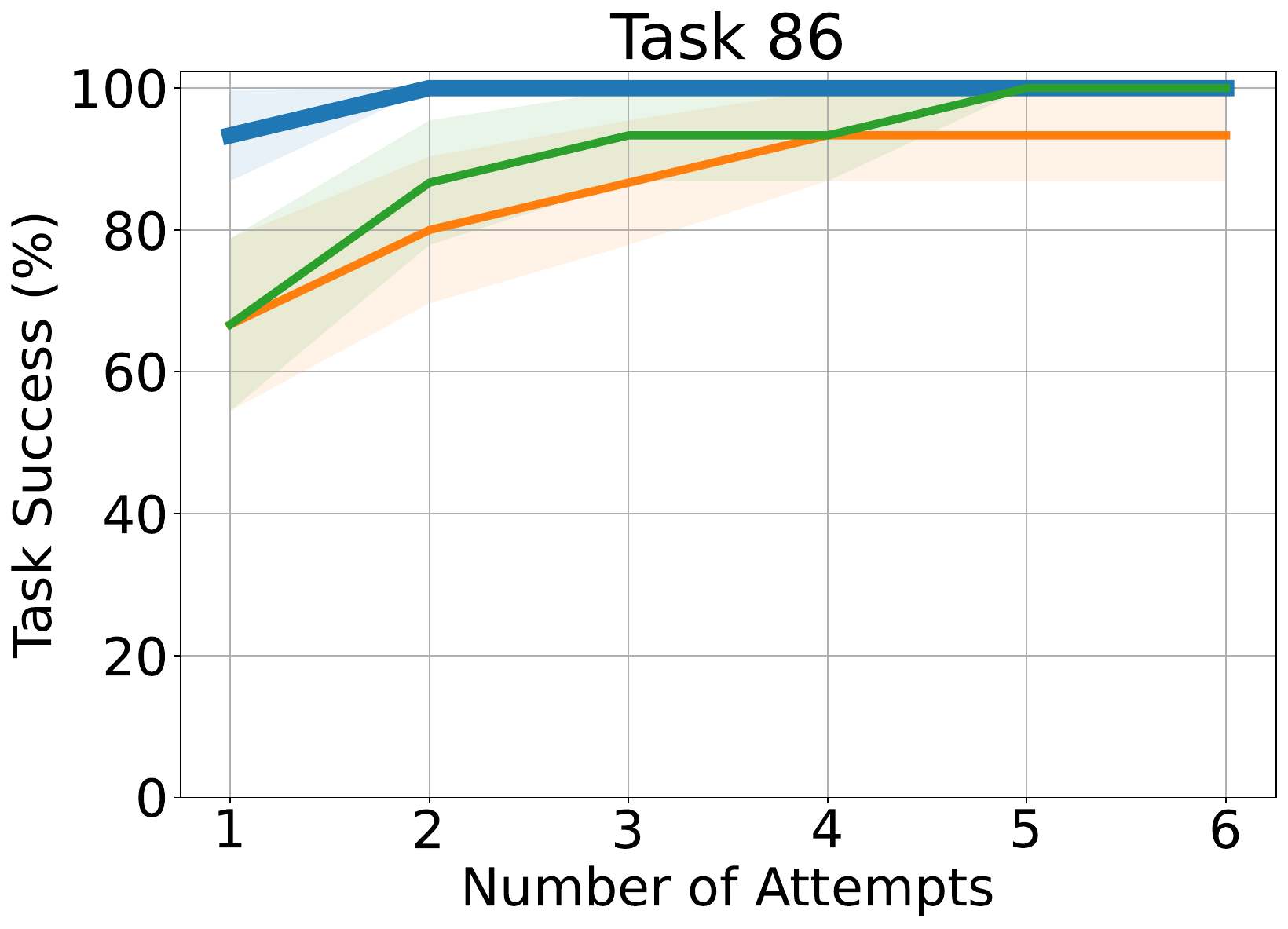}
    \end{subfigure}
    \hfill
    \begin{subfigure}[b]{0.19\textwidth}
        \centering
        \includegraphics[width=\textwidth]{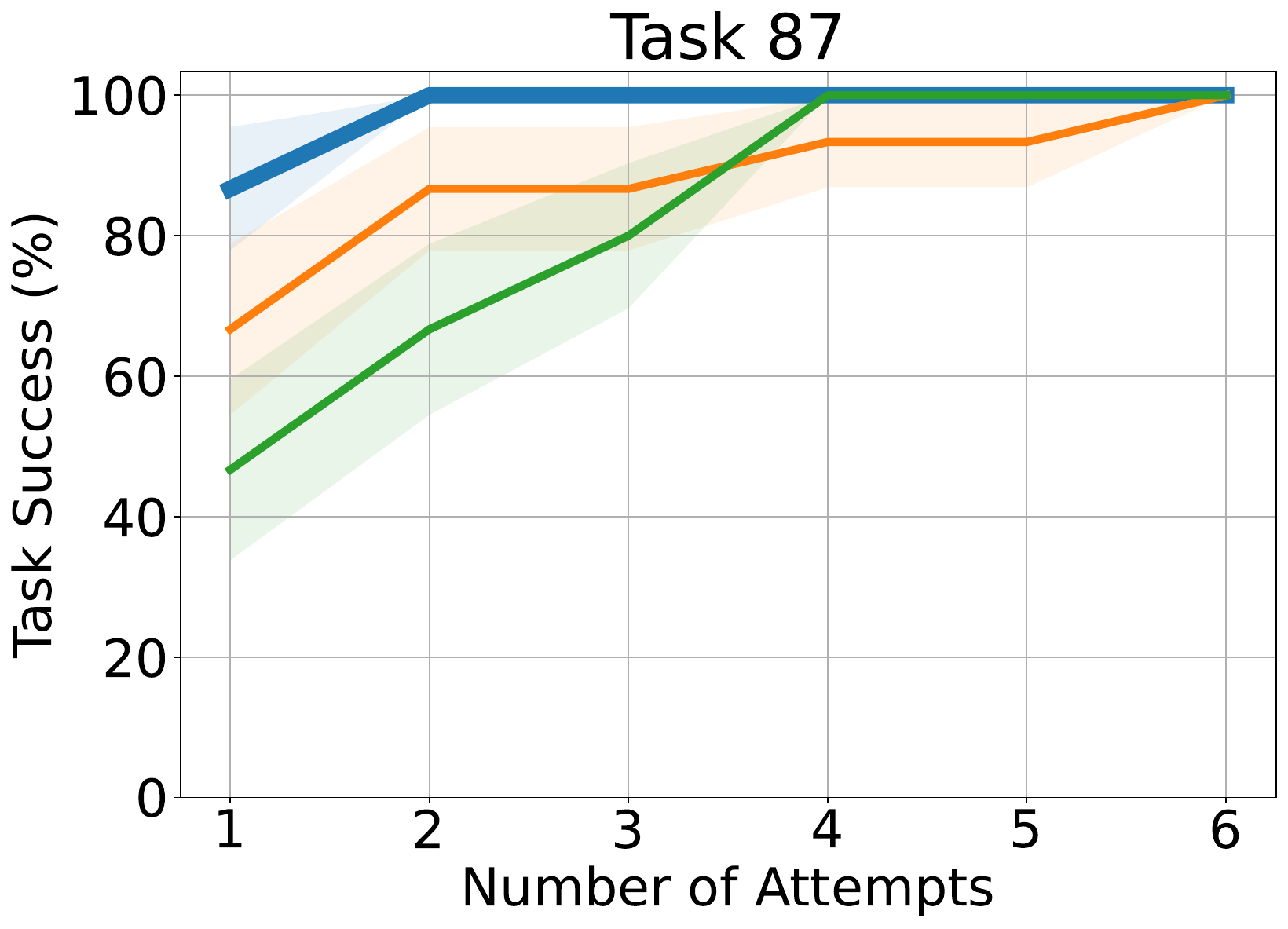}
    \end{subfigure}
    \hfill
    \begin{subfigure}[b]{0.19\textwidth}
        \centering
        \includegraphics[width=\textwidth]{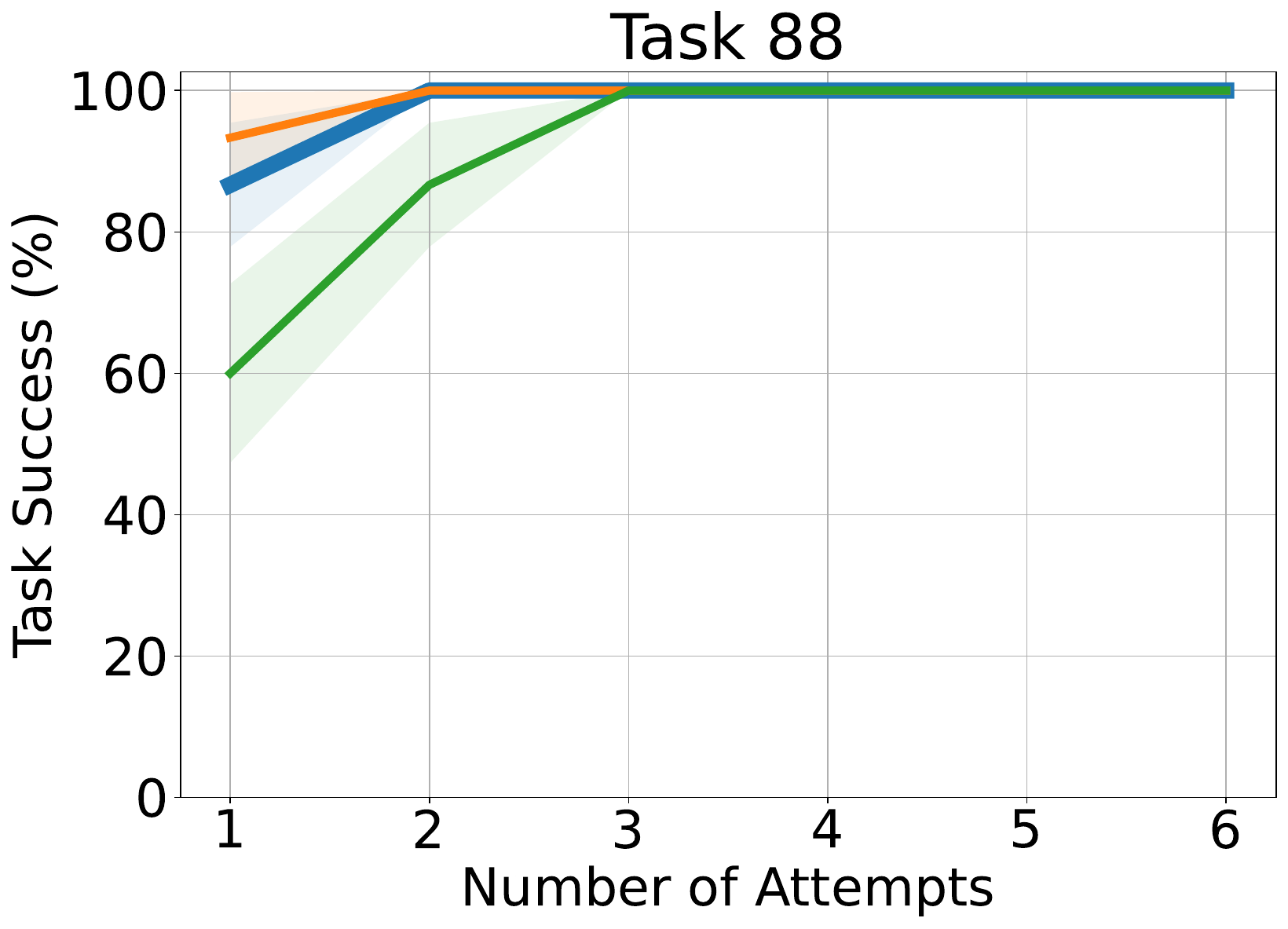}
    \end{subfigure}
    \hfill
    \begin{subfigure}[b]{0.19\textwidth}
        \centering
        \includegraphics[width=\textwidth]{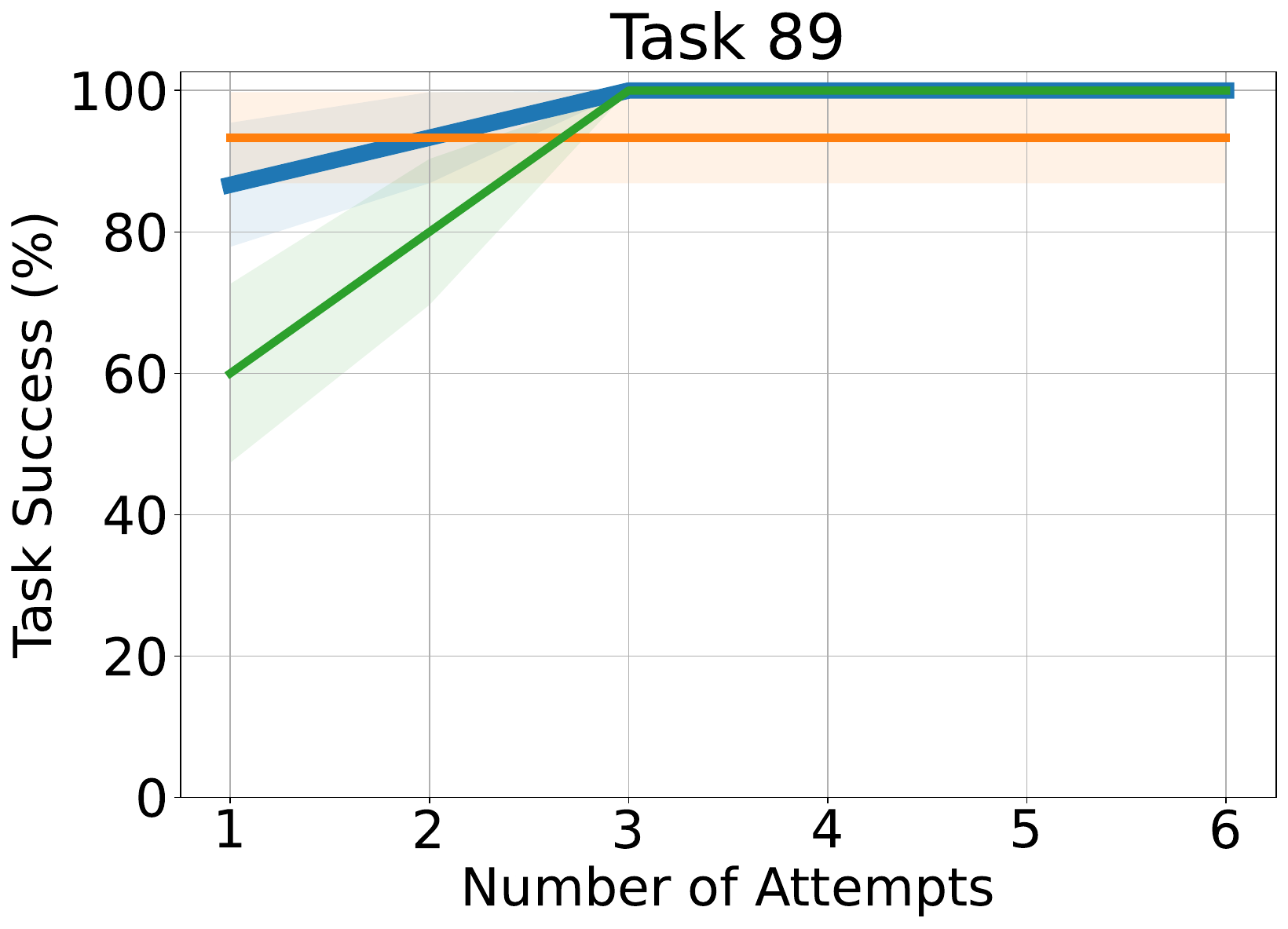}
    \end{subfigure}
    \begin{subfigure}[b]{0.19\textwidth}
        \centering
        \includegraphics[width=\textwidth]{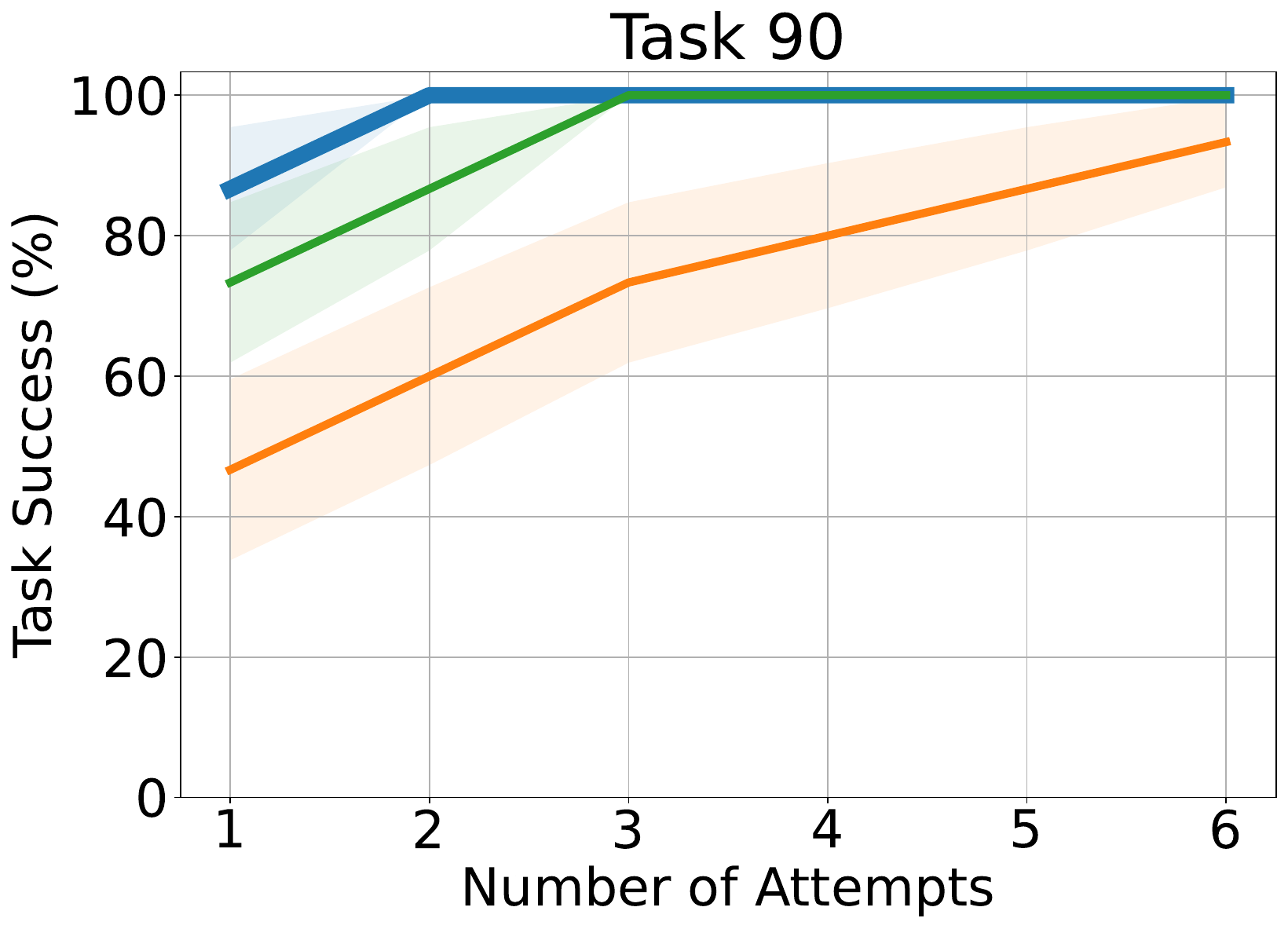}
    \end{subfigure}
    \caption{Performance on individual Libero tasks (46-90) in evaluation with task provided.}
    \label{fig:libero_task_individual2}
\end{figure}

\subsection{WidowX Experiments}\label{sec:app_widowx}
\begin{figure}[H]
    \centering
    \begin{minipage}[b]{0.31\textwidth}
        \centering
        \includegraphics[width=\linewidth]{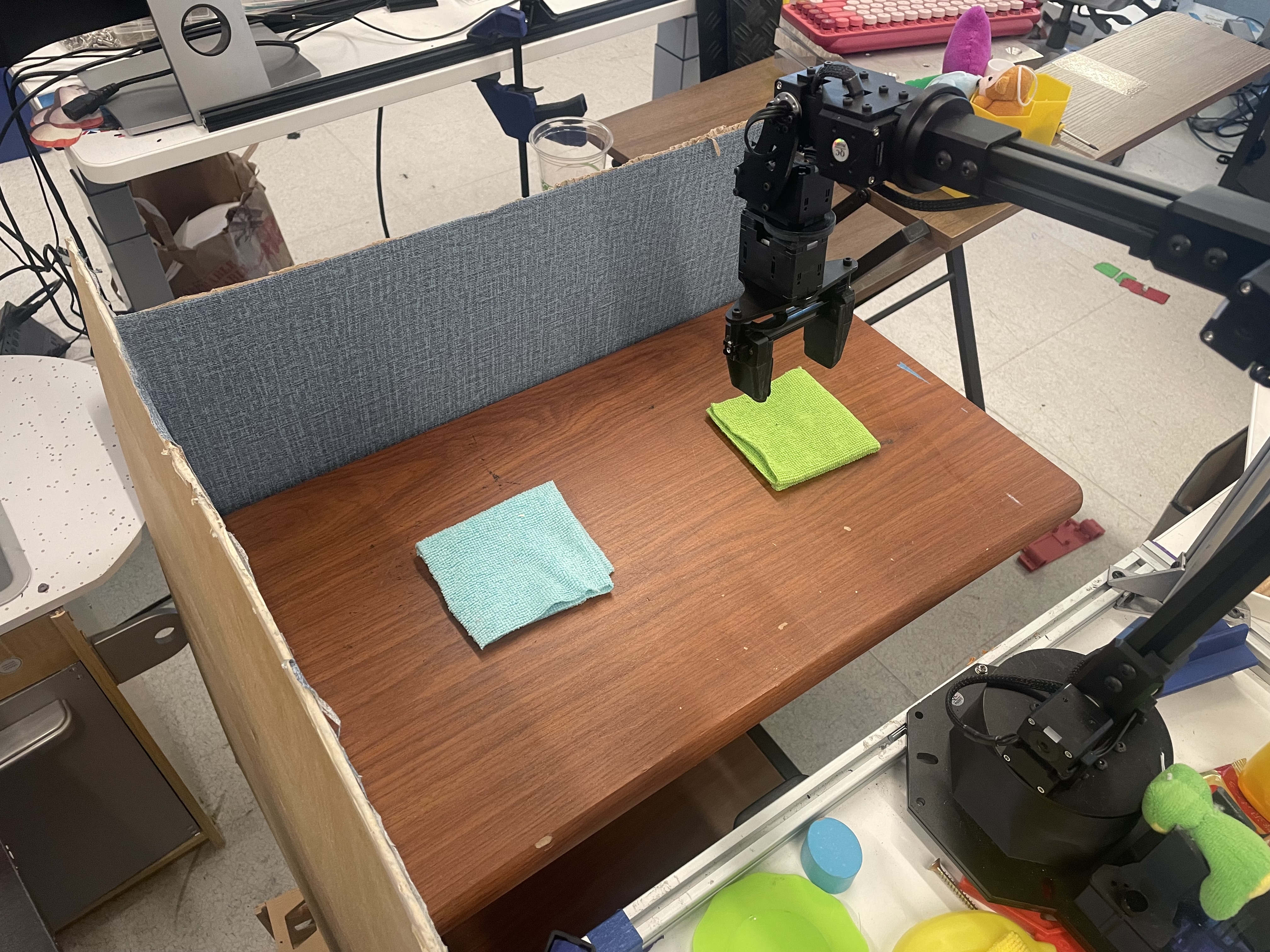}
        \vspace{-1em}
        \caption{WidowX Task 1.}
    \end{minipage}
    \hspace{0.2em}
    \begin{minipage}[b]{0.31\textwidth}
        \centering
        \includegraphics[width=\linewidth]{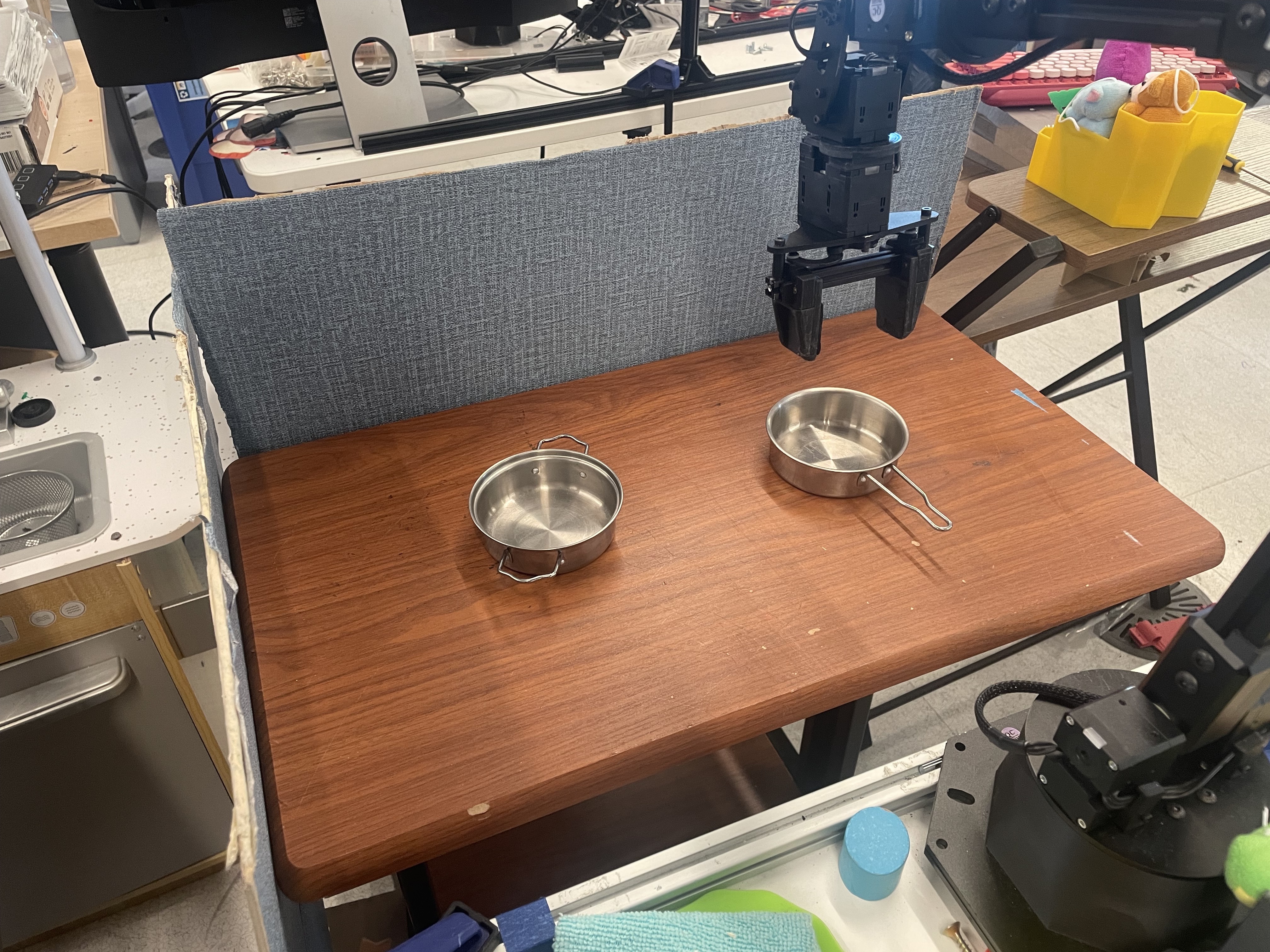}
        \vspace{-1em}
        \caption{WidowX Task 2.}
        \label{fig:widowx_task2}
    \end{minipage}
     \hspace{0.2em}
    \begin{minipage}[b]{0.31\textwidth}
        \centering
        \includegraphics[width=\linewidth]{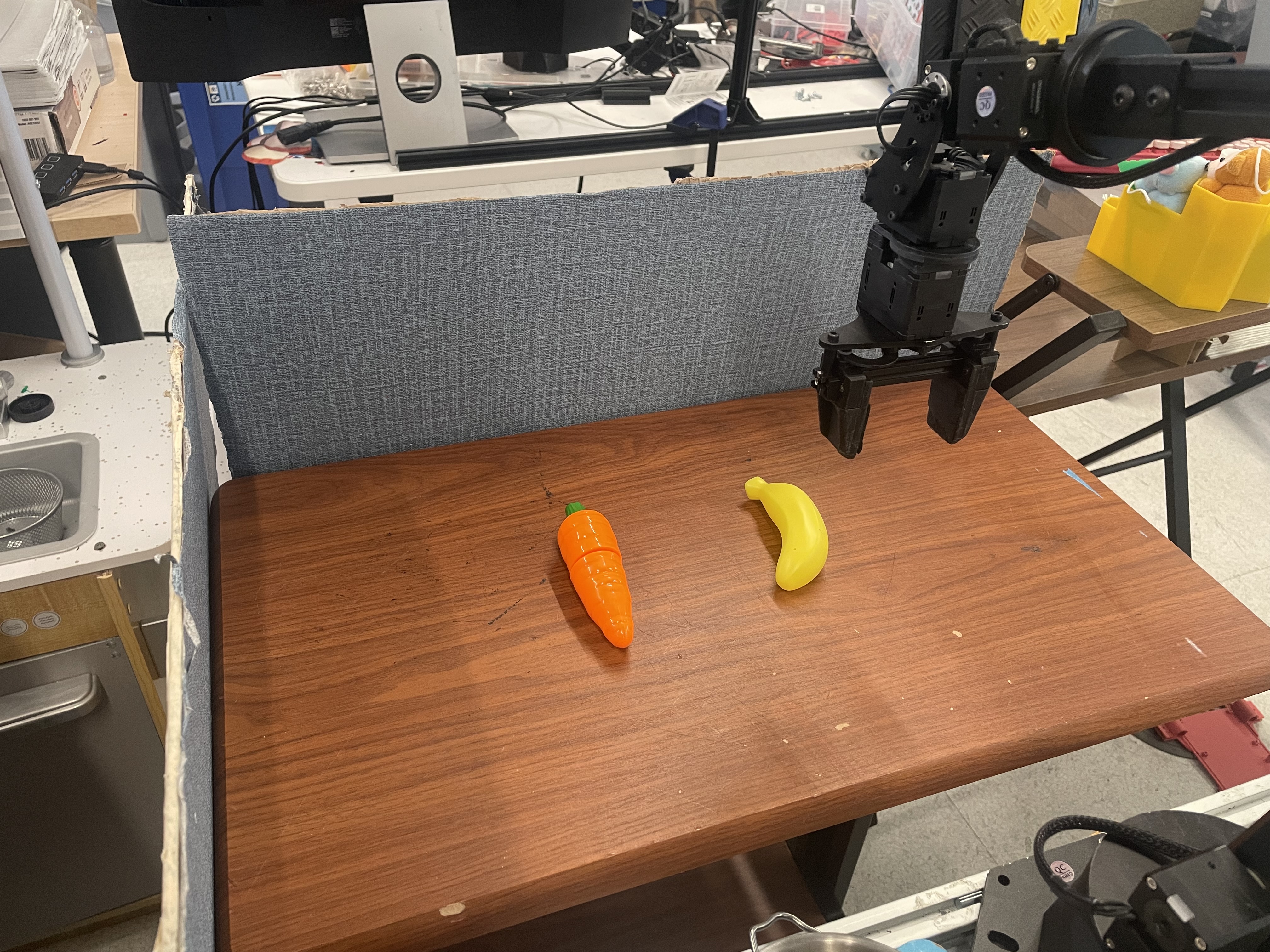}
        \vspace{-1em}
        \caption{WidowX Task 3.}
    \end{minipage}
\end{figure}

\begin{figure}[H]
    \centering
    \begin{minipage}[b]{0.31\textwidth}
        \centering
        \includegraphics[width=\linewidth]{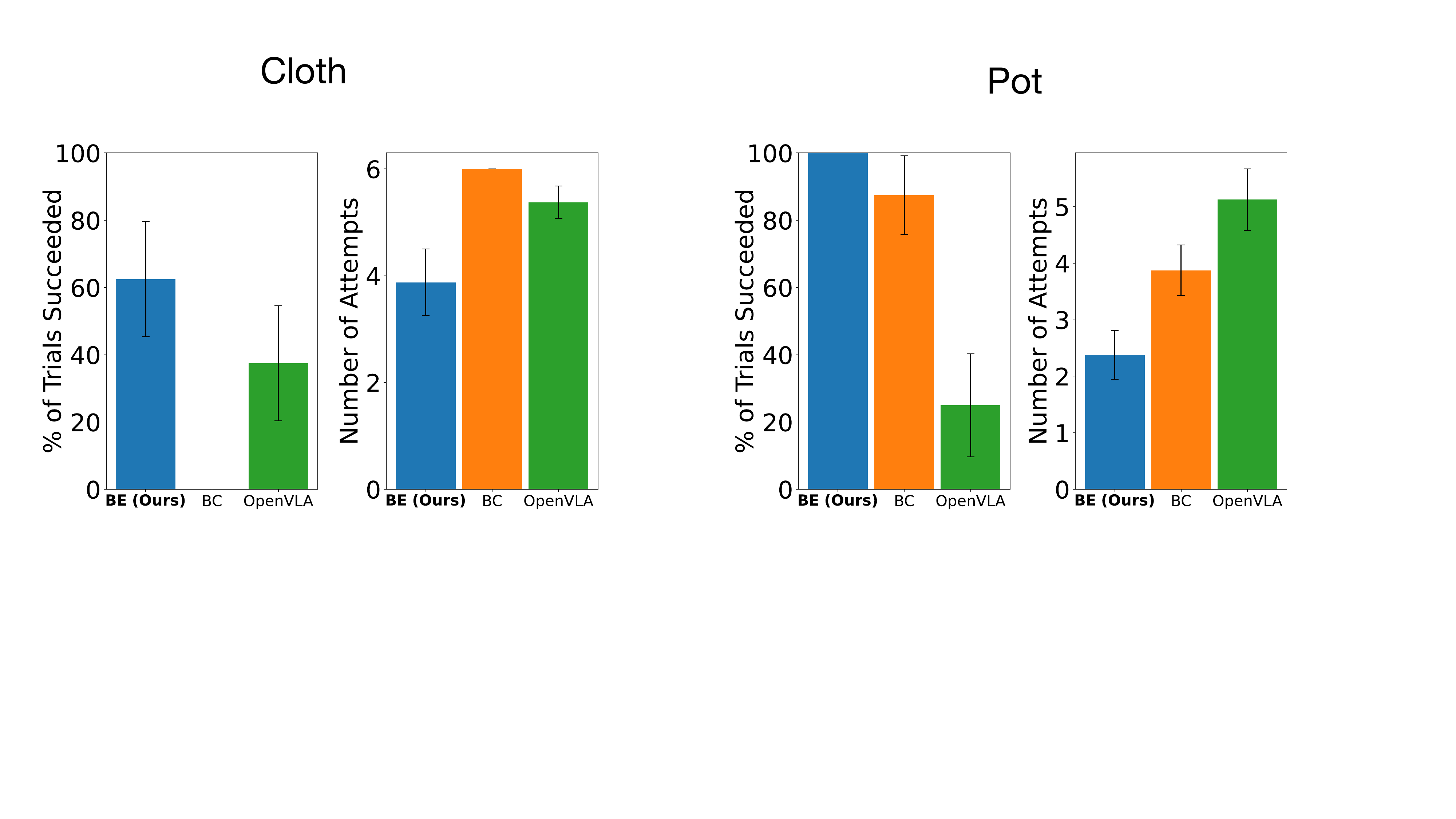}
        \vspace{-1em}
        \label{fig:widowx_cloth}
        \caption{Results on WidowX Task 1.}
    \end{minipage}
    \hspace{0.2em}
    \begin{minipage}[b]{0.31\textwidth}
        \centering
        \includegraphics[width=\linewidth]{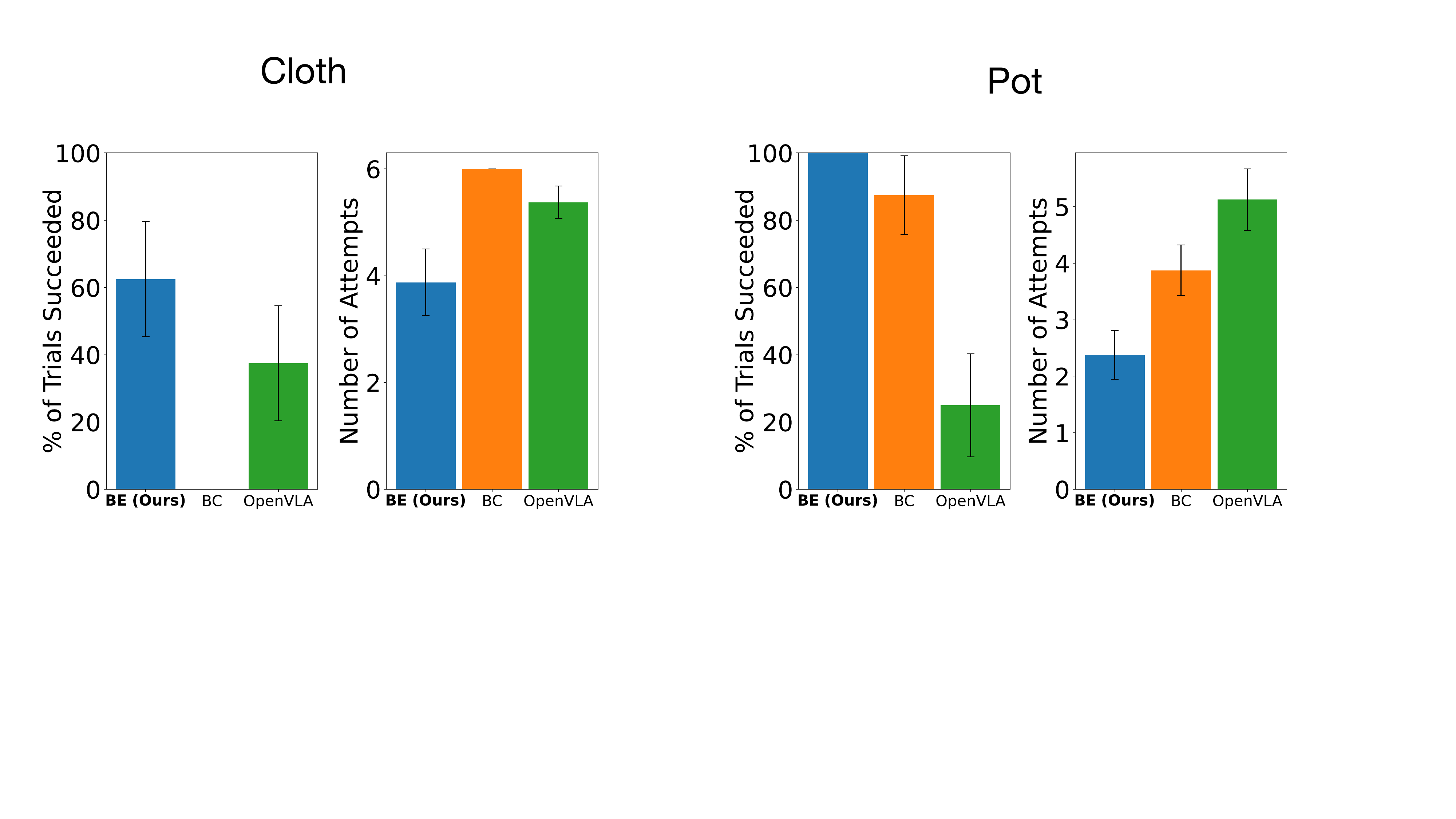}
        \vspace{-1em}
        \label{fig:widowx_pot}
        \caption{Results on WidowX Task 2.}
    \end{minipage}
     \hspace{0.2em}
    \begin{minipage}[b]{0.31\textwidth}
        \centering
        \includegraphics[width=\linewidth]{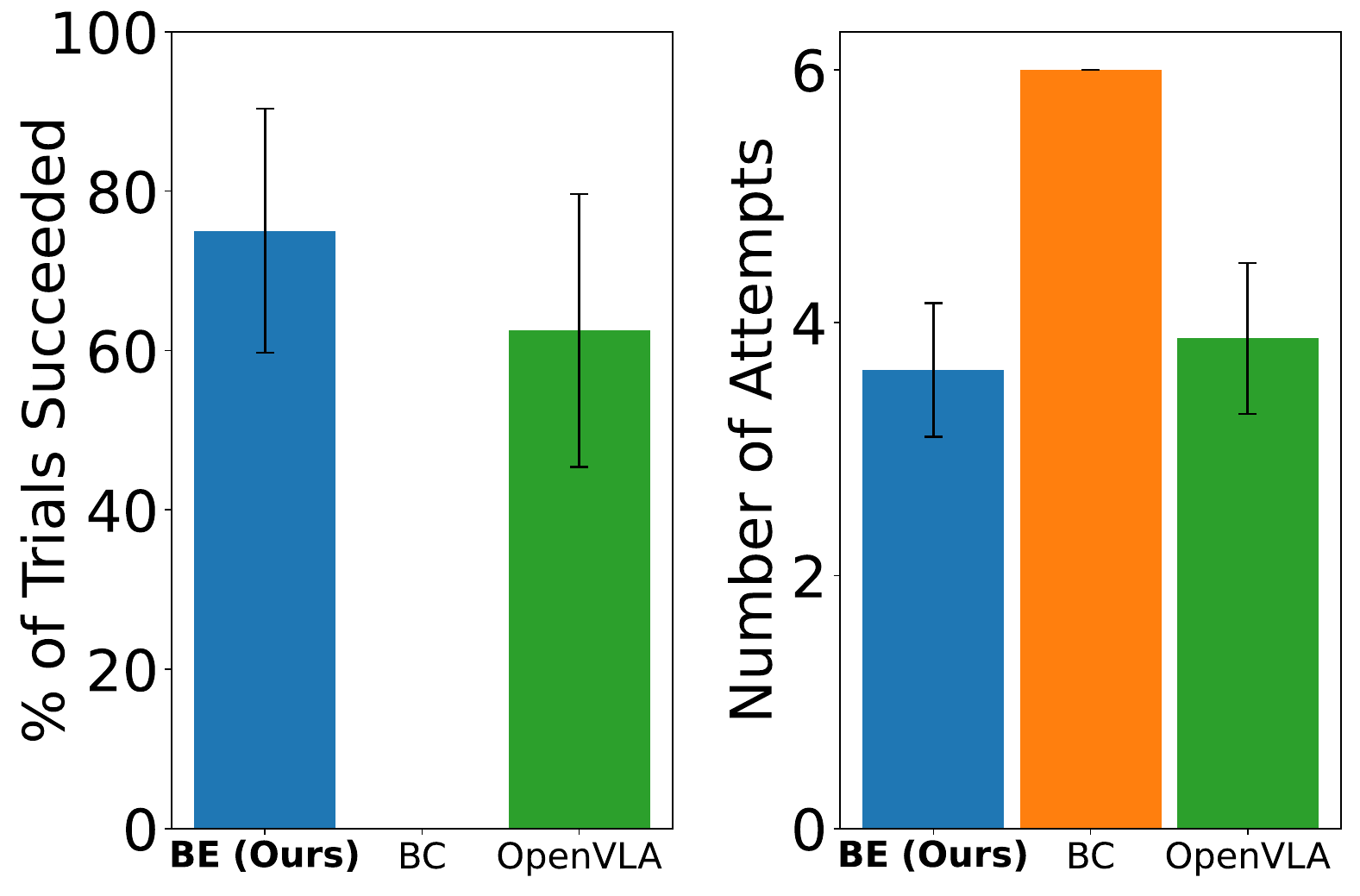}
        \vspace{-1em}
        \label{fig:widowx_carrot}
        \caption{Results on WidowX Task 3.}
    \end{minipage}
\end{figure}

Individual results on each WidowX task are given in \Cref{fig:widowx_cloth,fig:widowx_pot,fig:widowx_carrot}. We note the \bc fails to succeed a single time on 2/3 tasks, and that our approach achieves the highest success rate and lowest average number of attempts on all tasks.  

We use delta end-effector control with a frequency of 5 Hz. We use an RGB camera to capture the top-down third-person view of the robot workspace, and use 128×128 images as input to our control policy.

While the Bridge dataset contains text labels for the demonstrations, since we are evaluating the ability to select diverse behaviors, for both the \bc baseline and \algname we do not use text conditioning.
We use the same form of $\bphi(s)$ as with Libero, but instead of conditioning on the full proprioceptive state we condition only on the relative $(x,y,z)$ coordinates (that is, if we are at state $s'$, then we define $\bphi(s; s') = \cos(A (s_{xyz} - s'_{xyz}) + b)$. 

We evaluate on a reaching task, as stated in the main text. We consider a reach to be successful if the end-effector makes contact with the object. 

As $\mathsf{OpenVLA}$ requires language conditioning, we chose objects in the same scene that were relatively similar in order to allow them both to correspond to the same language command. For Task 1, we use the command ``pick up the cloth'', for Task 2 ``pick up the silver pot'', and for Task 3 ``pick up the object''. 
We note that $\mathsf{OpenVLA}$ is not able to condition on a history of past observations and, as such, at deployment, each episode is effectively independent from past episodes. Our aim in this experiment is therefore to test whether $\mathsf{OpenVLA}$ tends to focus its behavior on a single possible solution, or if it attempts diverse behaviors over multiple episodes. 

We also experiment with modifying the prompt for $\mathsf{OpenVLA}$ to induce more diverse behaviors. In particular, on \Cref{fig:widowx_task2}, we experiment with prompting $\mathsf{OpenVLA}$ with either ``pick up the silver pot'' vs ``pick up the other silver pot''. We hypothesize that by adding the word ``other'' to the prompt, this may encourage $\mathsf{OpenVLA}$ to attempt to pick up the pot that it is initially biased away from. We provide the results of our evaluation on these prompts in \Cref{tab:ovla_prompt}, where we see that $\mathsf{OpenVLA}$ exhibits a strong bias to the right pot and, while changing the prompt does cause $\mathsf{OpenVLA}$ to reach for the left more frequently, it still exhibits a strong rightward bias, and the prompt modification also causes it to fail to reach any pot more frequently. We hypothesize that this may be due to the modified command being more out-of-distribution for $\mathsf{OpenVLA}$, hurting its overall performance. These results illustrate that, while modifying the prompt may help somewhat in encouraging $\mathsf{OpenVLA}$ to explore, it is far from exhibiting robust exploratory behavior.

\begin{table}[H]
\caption{
\footnotesize
Performance of $\mathsf{OpenVLA}$ with different prompts. We consider running 8 rollouts on the task illustrated in \Cref{fig:widowx_task2}, and indicate in how many episodes the end effector made contact with the left, right, or neither pot.
}
\begin{center}
\scalebox{0.9}
{
\begin{tabular}{llll}
    \toprule
    \textbf{Command} & \textbf{Left} & \textbf{Right} & \textbf{Neither} \\
    \midrule
``pick up the silver pot'' & 1/8 & 7/8 & 0/8 \\
``pick up the other silver pot'' & 2/8 & 4/8 & 2/8 \\
    \bottomrule
\end{tabular}
}
\end{center}
\label{tab:ovla_prompt}
\end{table}

\begin{table}[H]
\caption{
\footnotesize
Hyperparameters for WidowX \algname and \bc models.
}
\begin{center}
\scalebox{0.9}
{
\begin{tabular}{llll}
    \toprule
    \textbf{Hyperparameter} & \algname & \bc \\
    \midrule
Batch size &  2560 & 1200 \\
Action chunk size & 4 & 4 \\
Image encoder & ResNet-34 & ResNet-34 \\
Hidden size & 512 & 512\\
Number of Heads & 8 & 8 \\
Number of Layers & 6 & 6\\
Feedforward dimension & 2048 & 2048 \\
Coverage future length & 20 & - \\
Context history length & 50 & -\\
    \bottomrule
\end{tabular}
}
\end{center}
\end{table}



\end{document}